\documentclass[12pt, oneside]{book}

% temporarily suppress hbox warnings.
\hfuzz=9999pt
\hbadness=10000

\usepackage[T1]{fontenc}
\usepackage{lmodern}

\usepackage[letterpaper, left=1in, right=1in, top=1in, bottom=1in]{geometry}

\usepackage{lipsum}
\usepackage{titlesec}

\setlength{\headheight}{15pt}
\titleformat{\chapter}[display]
  {\normalfont\Huge\sffamily} % Non-bold sans-serif for "Chapter X"
  {    \vspace{-4ex} % Moves the chapter heading up
  \raggedright \chaptername~\thechapter} % Left-aligned, non-bold "Chapter X"
  {0pt}
  {%
    \titlerule[0.5pt]\vspace{0.25ex}\titlerule[1pt]\vspace{2ex} % Two horizontal rules
    \raggedleft\Huge\bfseries\sffamily % Bold, right-aligned, sans-serif title
  }
\titleformat{\section}{\large\bfseries}{\thesection}{1em}{}
\titleformat{\subsection}{\normalsize\bfseries}{\thesubsection}{1em}{}
\titleformat{\subsubsection}{\normalsize\bfseries}{\thesubsubsection}{1em}{}

% Define a new counter
\newcounter{subsubsubsection}[subsubsection]
\renewcommand\thesubsubsubsection{\thesubsubsection.\arabic{subsubsubsection}}
\newcommand\subsubsubsection[1]{%
  \refstepcounter{subsubsubsection}%
  \vspace{1em}%
  \noindent\textbf{\thesubsubsubsection\quad #1}\par\nobreak\vspace{0.5em}%
}
\usepackage{fancyhdr}
\pagestyle{fancy}  % Enable fancy headers

% Set up headers on all pages
\fancyhead[L]{\bfseries \thepage}
\fancyhead[R]{\sffamily \footnotesize \leftmark}

% Add a horizontal line under the header
 % Thin horizontal line

% Set up footers (page number at bottom center)
\fancyfoot[C]{} % Page number in bold at bottom center

\setcounter{secnumdepth}{3}
\usepackage{parskip}

\usepackage{silence}
\WarningFilter{hyperref}{Token not allowed}

\usepackage[dvipsnames,table ]{xcolor}
\usepackage[colorlinks=true, linkcolor=blue!70!black, citecolor=blue!70!black, unicode=true]{hyperref}
\usepackage[final, nopatch=footnote]{microtype} 

\usepackage[
backend=biber,
backref=true,
natbib=true,
style=trad-alpha,
maxalphanames=6,
sorting=nyt
]{biblatex}
% \usepackage{natbib}
% \bibliographystyle{alpha}
% \bibpunct{(}{)}{;}{a}{,}{,}

\usepackage{amsthm}
\usepackage{mathtools}
\usepackage{amsmath}
\usepackage{bm}
\usepackage{bbm}
\usepackage{amsfonts}
\usepackage{amssymb}
% \let\vec\undefined
% \usepackage{MnSymbol} %Actually conflicts with amssymb and others
% yair: I don't like how MnSymbol looks like. If others love it we can put it 
% back for the arxiv version, but for COLT perhaps we should avoid it to be 
% consistent with the PMLR formatting?
\usepackage{xpatch}
\usepackage{pifont}
\usepackage{array}
\usepackage{booktabs}
\usepackage{floatrow}
% Table float box with bottom caption, box width adjusted to content
\newfloatcommand{capbtabbox}{table}[][\FBwidth]
\usepackage{blindtext}
\usepackage{caption}
\usepackage{subfigure}
\usepackage{algorithm}
\usepackage{algorithmicx}
\usepackage[noend]{algpseudocode}

% This files is evolving over time, but a large part of this file was inspired from a similar file by Dylan Foster. 

% \usepackage{amsmath} 
% \usepackage{amsthm}
\usepackage{bbm}

% Align Block
\newcommand\numberthis{\addtocounter{equation}{1}\tag{\theequation}} 
\allowdisplaybreaks

\usepackage{mathtools}

% Math delimiters
\DeclarePairedDelimiter{\abs}{\lvert}{\rvert} 
\DeclarePairedDelimiter{\brk}{[}{]}
\DeclarePairedDelimiter{\crl}{\{}{\}}
\DeclarePairedDelimiter{\prn}{(}{)}
\DeclarePairedDelimiter{\nrm}{\|}{\|}
\DeclarePairedDelimiter{\tri}{\langle}{\rangle}

\DeclarePairedDelimiter{\floor}{\lfloor}{\rfloor}

\let\Pr\undefined
 
\DeclareMathOperator{\En}{\mathbb{E}}

\DeclareMathOperator{\Pr}{\bbP}

% Arg<x>
\DeclareMathOperator*{\argmin}{argmin} % * Places subscript directly under operator
\DeclareMathOperator*{\argmax}{argmax}

% one-off macros

\newcommand{\ind}[1]{\mathbbm{1}\crl*{#1}}    %Indicator
\newcommand{\indd}[1]{\mathbbm{1}\crl{#1}}    %Indicator
\newcommand{\eps}{\varepsilon}

\newcommand{\ldef}{\vcentcolon=}

% styles

\newcommand{\wt}[1]{\widetilde{#1}}
\newcommand{\wh}[1]{\widehat{#1}}

% Special letters: blackboard, mathcal, widehat
% djhsu magic
\def\ddefloop#1{\ifx\ddefloop#1\else\ddef{#1}\expandafter\ddefloop\fi}
\def\ddef#1{\expandafter\def\csname bb#1\endcsname{\ensuremath{\mathbb{#1}}}}
\ddefloop ABCDEFGHIJKLMNOPQRSTUVWXYZ\ddefloop
\def\ddefloop#1{\ifx\ddefloop#1\else\ddef{#1}\expandafter\ddefloop\fi}
\def\ddef#1{\expandafter\def\csname b#1\endcsname{\ensuremath{\mathbf{#1}}}}
\ddefloop ABCDEFGHIJKLMNOPQRSTUVWXYZ\ddefloop
\def\ddef#1{\expandafter\def\csname c#1\endcsname{\ensuremath{\mathcal{#1}}}}
\ddefloop ABCDEFGHIJKLMNOPQRSTUVWXYZ\ddefloop
\def\ddef#1{\expandafter\def\csname h#1\endcsname{\ensuremath{\widehat{#1}}}}
\ddefloop ABCDEFGHIJKLMNOPQRSTUVWXYZabcdefghijklmnopqrsuvwxyz\ddefloop    % Not defined for t. 
\def\ddef#1{\expandafter\def\csname hc#1\endcsname{\ensuremath{\widehat{\mathcal{#1}}}}}
\ddefloop ABCDEFGHIJKLMNOPQRSTUVWXYZ\ddefloop
\def\ddef#1{\expandafter\def\csname t#1\endcsname{\ensuremath{\widetilde{#1}}}}
\ddefloop ABCDEFGHIJKLMNOPQRSTUVWXYZ\ddefloop
\def\ddef#1{\expandafter\def\csname tc#1\endcsname{\ensuremath{\widetilde{\mathcal{#1}}}}}
\ddefloop ABCDEFGHIJKLMNOPQRSTUVWXYZ\ddefloop

% Names

% COMMONLY USED MACROS
%Losses
\newcommand{\KL}[2]{\mathrm{KL}{\prn*{#1 \| #2}}}
\newcommand{\kl}[2]{\mathrm{kl}{\prn*{#1 \| #2}}}

\newcommand{\sign}{\mathrm{sign}}

% Matrices
%\newcommand{\diag}{\textrm{diag}}
%\newcommand{\rank}{\mathrm{rank}}

% Special Symbols 

%Circled Numbers
\usepackage{tikz}

% Calculus

\newcommand{\proman}[1]{\prn*{\romannumeral #1}}

\newcommand{\overleq}[1]{\overset{ #1}{\leq{}}}
\newcommand{\overgeq}[1]{\overset{#1}{\geq{}}}
\newcommand{\overeq}[1]{\overset{#1}{=}}

%%% theorems 
\makeatletter
\newtheorem*{rep@theorem}{\rep@title}
\newcommand{\newreptheorem}[2]{%
\newenvironment{rep#1}[1]{%
 \def\rep@title{#2 \ref*{##1}}%
 \begin{rep@theorem}}%
 {\end{rep@theorem}}}
\makeatother

\newtheorem{claim}{Claim}
\newtheorem{assumption}{Assumption}
\newtheorem{conjecture}{Conjecture}
\newtheorem{corollary}{Corollary}
\newtheorem{proposition}{Proposition}

{\newtheorem{lemma}{Lemma}}

{\newtheorem{theorem}{Theorem}}
\newreptheorem{theorem}{Theorem}
\newtheorem{theorem*}{Theorem}
\newtheorem{definition}{Definition}
\newtheorem{question}{Question}

\numberwithin{theorem}{chapter}
\numberwithin{definition}{chapter}
\numberwithin{proposition}{chapter}
\numberwithin{lemma}{chapter}
\numberwithin{question}{chapter}
\numberwithin{conjecture}{chapter}

% \newtheorem*{assumption*}{\assumptionnumber}
% \providecommand{\assumptionnumber}{}
% \makeatletter
% \newenvironment{spassumption}[1]
%  {%
%   \renewcommand{\assumptionnumber}{Assumption #1}%
%   \begin{assumption*}%
%   \protected@edef\@currentlabel{#1}%
%  } 
%  {%
%   \end{assumption*}
%  }
% \makeatother

% \usepackage{xpatch}
% \xpatchcmd{\proof}{\itshape}{\normalfont\proofnameformat}{}{}
% \newcommand{\proofnameformat}{\bfseries}

%%% prettyref

\usepackage{prettyref}
\newcommand{\pref}[1]{\prettyref{#1}}

\newcommand{\savehyperref}[2]{\texorpdfstring{\hyperref[#1]{#2}}{#2}}
\newrefformat{eq}{\savehyperref{#1}{\textup{(\ref*{#1})}}}
\newrefformat{eqn}{\savehyperref{#1}{Equation~\ref*{#1}}}
\newrefformat{fact}{\savehyperref{#1}{Fact~\ref*{#1}}}
\newrefformat{con}{\savehyperref{#1}{Conjecture~\ref*{#1}}}
\newrefformat{lem}{\savehyperref{#1}{Lemma~\ref*{#1}}}
\newrefformat{def}{\savehyperref{#1}{Definition~\ref*{#1}}}
\newrefformat{line}{\savehyperref{#1}{line~\ref*{#1}}}
\newrefformat{thm}{\savehyperref{#1}{Theorem~\ref*{#1}}}
\newrefformat{corr}{\savehyperref{#1}{Corollary~\ref*{#1}}}
\newrefformat{sec}{\savehyperref{#1}{Section~\ref*{#1}}}
\newrefformat{app}{\savehyperref{#1}{Appendix~\ref*{#1}}}
\newrefformat{ass}{\savehyperref{#1}{Assumption~\ref*{#1}}}
\newrefformat{ex}{\savehyperref{#1}{Example~\ref*{#1}}}
\newrefformat{fig}{\savehyperref{#1}{Figure~\ref*{#1}}}
\newrefformat{alg}{\savehyperref{#1}{Algorithm~\ref*{#1}}}
\newrefformat{rem}{\savehyperref{#1}{Remark~\ref*{#1}}}
\newrefformat{conj}{\savehyperref{#1}{Conjecture~\ref*{#1}}}
\newrefformat{prop}{\savehyperref{#1}{Proposition~\ref*{#1}}}
\newrefformat{proto}{\savehyperref{#1}{Protocol~\ref*{#1}}}
\newrefformat{prob}{\savehyperref{#1}{Problem~\ref*{#1}}}
\newrefformat{claim}{\savehyperref{#1}{Claim~\ref*{#1}}}
\newrefformat{tab}{\savehyperref{#1}{Table~\ref*{#1}}}
\newrefformat{chap}{\savehyperref{#1}{Chapter~\ref*{#1}}}
\newrefformat{q}{\savehyperref{#1}{Question~\ref*{#1}}}

\newcommand{\algcomment}[1]{\textcolor{blue!70!white}{\footnotesize{\texttt{\textbf{//
          #1}}}}}
\newcommand{\algcommentbig}[1]{\textcolor{blue!70!black}{\footnotesize{\texttt{\textbf{/*
          #1~*/}}}}}

% some extra slop i added
\usepackage[shortlabels]{enumitem} 
\usepackage{makecell}
\usepackage{comment}
\usepackage{pifont}
\usepackage{caption}
\usepackage[dvipsnames]{xcolor}
\usepackage{nicefrac} 

\usepackage{tikz} 
\usetikzlibrary{arrows.meta, positioning}
\usepackage{wrapfig}
\usepackage{multirow}
\usepackage[export]{adjustbox}

\newcommand{\ccov}{C_\mathsf{cov}}
\newcommand{\cconc}{C_\mathsf{conc}}

\newcommand{\spancap}{\mathfrak{C}}
\newcommand{\poly}{\mathrm{poly}}

\newcommand{\unif}{\mathrm{Unif}}
\newcommand{\ber}{\mathrm{Ber}}

\newcommand{\oracle}{\cO_\mathrm{exp}}

\newcommand{\cpush}{C_\mathsf{push}}
\newcommand{\cpushcov}{C_\mathsf{push\_cov}}

\newcommand{\estpi}{{\wh{\pi}}}

% MDP macros
\newcommand{\optpi}{\pi^\star}

\newcommand{\optlatp}{P_\mathsf{lat}}
\newcommand{\optlatr}{R_\mathsf{lat}}
\newcommand{\estlatp}{\wh{P}_\mathsf{lat}}
\newcommand{\estlatr}{\wh{R}_\mathsf{lat}}
\newcommand{\samplelatp}{\wt{P}_\mathsf{lat}}

% deterministic algorithm.
\newcommand{\detalg}{{\textsf{PLHR.D}}}
\newcommand{\detdecoder}{{\textsf{Decoder.D}}}
\newcommand{\detrefit}{{\textsf{Refit.D}}}

\newcommand{\estlatentmdp}{\wh{M}_\mathsf{lat}}

\newcommand{\Pitest}{\Pi^{\mathsf{test}}}

\newcommand{\vestarg}[2]{V_\mathsf{mc}(#1 \mid #2)}

\newcommand{\qestarg}[2]{Q_\mathsf{mc}(#1 \mid #2)}

\newcommand{\taudec}{\epsilon_\mathsf{dec}}
\newcommand{\tauref}{\epsilon_\mathsf{tol}}

% stochastic algorihtm.
\newcommand{\stochalg}{{\textsf{PLHR}}}
\newcommand{\stochdecoder}{{\textsf{Decoder}}}
\newcommand{\stochrefit}{{\textsf{Refit}}}

\newcommand{\estmdp}{\wh{M}}
\newcommand{\estp}{\wh{P}}
\newcommand{\empobs}{x} 
\newcommand{\push}{_\sharp}

\newcommand{\estmdpobsspace}[1]{\statesp_{#1}[\estmdp]}
\newcommand{\pitest}[2]{\pi_{#1, #2}}

\newcommand{\nreset}{n_\mathsf{reset}}
\newcommand{\ndec}{n_\mathsf{dec}}

\newcommand{\nmc}{n_\mathsf{mc}}

\newcommand{\eventemulator}{\cE^\mathsf{init}}
\newcommand{\eventdec}{\cE^\mathsf{D}}
\newcommand{\eventref}{\cE^\mathsf{R}}

\newcommand{\epsPRS}[1]{\cS_{#1}^{\eps\textsf{-push}}}

\newcommand{\cXL}{\cX_\mathsf{L}}
\newcommand{\cXR}{\cX_\mathsf{R}}
\newcommand{\gobs}{\cG_\mathsf{obs}}
\newcommand{\cc}{\bbC}
\newcommand{\ccL}{\bbC_\mathsf{L}}
\newcommand{\ccR}{\bbC_\mathsf{R}}
\newcommand{\optdec}{\phi}
\newcommand{\projm}[1]{\mathsf{Proj}_{#1}}
\newcommand{\projR}{\projm{\cXR}}
\newcommand{\nrch}{n_\mathsf{reach}}
\newcommand{\nurch}{n_\mathsf{unreach}}

\newcommand{\resetmodel}{Hybrid Resets}

\newcommand{\psdp}{{\textsf{PSDP}}}
\newcommand{\cpi}{{\textsf{CPI}}} 

\newcommand{\statesp}{\cX}
\newcommand{\actionsp}{\cA}
\newcommand{\latentsp}{\cS}
\newcommand{\emission}{\psi}

\newcommand{\supp}{\mathrm{supp}}

% concentrability

\newcommand{\mcest}{\mathsf{MC}}
\newcommand{\violations}{\mathsf{Violations}}

% lower bounds.
\newcommand{\alg}{\mathsf{Alg}}

\newcommand{\sgood}{\mathsf{g}}
\newcommand{\sbad}{\mathsf{b}}
\newcommand{\sdis}{\mathsf{d}}

\newcommand{\dtv}{D_{\mathsf{TV}}}

\newcommand{\rootlayer}{\mathsf{RootLayer}}
\newcommand{\eventfresh}{\cE_F}
\newcommand{\eventnew}{\cE_N}

\newcommand{\cfnor}{\underline{\cF_{t,H}}}

% psdp
\newcommand{\epspc}{\eps_{\mathsf{PC}}}
\newcommand{\epsstat}{\eps_\mathsf{stat}}

\newcommand{\pistar}{{\pi^\star}}
\newcommand{\pihat}{{\wh{\pi}}}
\newcommand{\Piell}{\Pi^{(\ell)}}

\newcommand{\dimRL}{\mathfrak{C}}

\newcommand{\non}{n_\mathsf{on}}
\newcommand{\ngen}{n_\mathsf{gen}}

\newcommand{\cMsto}{ \cM^{\mathrm{sto}} }
\newcommand{\cMdet}{ \cM^{\mathrm{det}} }

\newcommand{\bit}{\mathrm{bit}}

% lower bound stuff.
\newcommand{\idx}{\mathrm{idx}}
\newcommand{\jof}[1]{j[#1]}
\newcommand{\jpof}[1]{j'[#1]}
\newcommand{\jgof}[1]{j_g[#1]}
\newcommand{\jbof}[1]{j_b[#1]}

\newcommand{\Jrel}{\cJ_\mathrm{rel}}

\newcommand{\st}{\eta}
\newcommand{\Tmax}{{T_{\mathrm{max}}}}
\newcommand{\Compname}{spanning capacity}

\newcommand{\sunflower}{sunflower}
\newcommand{\coreset}{\sunflower}

% Older Version. Importing the fdsymbol sign was modifying the symbols in the paper.
%\usepackage{fdsymbol}
%\newcommand{\cons}{\rightlsquigarrow}

%% Newer Version
\usepackage{mathtools, amsmath}
\newcommand{\cons}{\rightsquigarrow}
\newcommand{\coloneqq}{:=}
\newcommand{\plus}{+}

% Algorithm names
\newcommand{\datacollector}{\mathsf{DataCollector}}

\newcommand{\evaluate}{\mathsf{Evaluate}}
\newcommand{\estreach}{\mathsf{EstReachability}}

\renewcommand{\epsilon}{\varepsilon}
\renewcommand{\eps}{\varepsilon}

% Policy Classes
\newcommand{\Picb}{\Pi_{\mathrm{cb}}}

\newcommand{\Pitab}{\Pi_{\mathrm{tab}}}
\newcommand{\Pismall}{\Pi_{\mathrm{small}}}
\newcommand{\Pisingleton}{\Pi_{\mathrm{sing}}}
\newcommand{\PiLton}{\Pi_{\ell\mathrm{-ton}}}
\newcommand{\Pioneactive}{\Pi_{\mathrm{1-act}}}
\newcommand{\PiJactive}{\Pi_{j\mathrm{-act}}}
\newcommand{\Piactive}{\Pi_{\mathrm{act}}}

% Zeyu's macros

\newcommand{\mA}{\mathcal{A}}
\newcommand{\mD}{\mathcal{D}}

\newcommand{\EE}{\mathbb{E}}

\newcommand{\SHPalg}[1]{\statesp_{#1}^{\mathrm{rch}}}
\newcommand{\SRemalg}[1]{\statesp_{#1}^{\mathrm{rem}}}
\newcommand{\SHP}[1]{\statesp_{#1}^{\mathrm{rch}}}
\newcommand{\SRem}[1]{\statesp_{#1}^{\mathrm{rem}}}

\newcommand{\event}[3]{\mathfrak{T}\prn{#1 \rightarrow #2 ; \neg #3}}

\newcommand{\bard}[3]{\bar{d}^{#1}(#2; \neg#3)}

\newcommand{\Picore}{\Pi_{\mathrm{core}}}

\newcommand{\Stab}{\statesp^{\mathrm{tab}}}

\newcommand{\setall}[3]{ \mathfrak{T}_{#1}\prn{#2 \rightarrow #3}}

\usepackage{mathrsfs}

\newcommand{\MRPsign}{\mathfrak{M}}

\newcommand{\natarajan}{\mathrm{Ndim}}
\newcommand{\pseudo}{\mathrm{Pdim}}

\newcommand{\reachablestates}{\statesp^\mathrm{rch}}

\newcommand{\creach}{C^\mathsf{reach}}

% eluder dimension stuff
\newcommand{\eEdim}{\underline{\mathsf{Edim}}}
\newcommand{\Edim}{\mathsf{Edim}}

\newcommand{\Sdim}{\mathsf{Sdim}}

\newcommand{\Tdim}{\mathsf{Tdim}}

\renewcommand{\dim}{\mathsf{dim}}
\renewcommand{\sign}{\mathsf{sign}}
\newcommand{\dc}{\mathsf{rk}}

\newcommand{\signrank}{\sign\text{-}\dc}
\newcommand{\Fparity}{\Pi^{\oplus}}

\newcommand{\Piopen}{\Pi_\mathsf{open}}

\usepackage[suppress]{color-edits} % turn edits off 
\addauthor{gl}{red}

\addbibresource{merged_refs.bib}

% Document
\begin{document}

% Title page
\begin{titlepage}
    \centering
    \vspace{1cm}
    {\sffamily \Huge\textbf{Agnostic Reinforcement Learning:\\Foundations and Algorithms}}\\[1cm]
    % Author
    \large by \\[1ex]
    {\large Gene X.~Li}
    \vfill
    A thesis submitted\\
    in partial fulfillment of the requirements for\\
    the degree of\\[0.5cm]
    Doctor of Philosophy in Computer Science\\[0.5cm]
    at the\\[0.5cm]
    TOYOTA TECHNOLOGICAL INSTITUTE AT CHICAGO\\
    Chicago, Illinois\\
    August 2025\\[2cm]
    Thesis Committee:\\
    Nathan Srebro (Advisor)\\
    Avrim Blum\\
    Akshay Krishnamurthy\\
    Cong Ma
\end{titlepage}

\frontmatter
\setcounter{page}{2}
\thispagestyle{plain}
\begin{center}
    % Top Rule
    \rule{\linewidth}{2pt}\\
    % Title (Bold & Large)
    {\sffamily \Huge \bfseries Agnostic Reinforcement Learning:\\Foundations and Algorithms} \\[2ex]
    
    % Author
    \large by \\[1ex]
    {\large Gene X.~Li}
    % Bottom Rule
    \rule{\linewidth}{2pt} \\[3ex]
\end{center}

% Abstract Section
{\sffamily\textbf{Abstract}} \\[1ex]

Reinforcement Learning (RL) has demonstrated tremendous empirical success across numerous challenging domains. However, we lack a strong theoretical understanding of the statistical complexity of RL in environments with large state spaces, where function approximation is required for sample-efficient learning. This thesis addresses this gap by rigorously examining the statistical complexity of RL with function approximation from a learning theoretic perspective. Departing from a long history of prior work, we consider the weakest form of function approximation, called agnostic policy learning, in which the learner seeks to find the best policy in a given class $\Pi$, with no guarantee that $\Pi$ contains an optimal policy for the underlying task.

We systematically explore agnostic policy learning along three key axes: environment access---how a learner collects data from the environment; coverage conditions---intrinsic properties of the underlying MDP measuring the expansiveness of state-occupancy measures for policies in the class $\Pi$, and representational conditions--- structural assumptions on the class $\Pi$ itself. Within this comprehensive framework, we (1) design new learning algorithms with theoretical guarantees and (2) characterize fundamental performance bounds of any algorithm. Our results reveal significant statistical separations that highlight the power and limitations of agnostic policy learning.

\clearpage

% Acknowledgements
\chapter*{Acknowledgements}

I'm grateful for the tremendous support of many people over the past six years.

First and foremost, I'd like to thank my advisor, Nati Srebro. I came into grad school with no idea of how to do theoretical research, and I'm grateful for the opportunity that Nati has given me. Nati's enthusiasm for research and commitment to research excellence are not to be taken for granted. As a first-year student, the tone was set when I attended the MLO reading group and discovered that it is possible to discuss a paper with Nati for 2 (sometimes even 3 or 4) hours. This level of attention to detail (the so-called rigor police) was not something that I had ever experienced before. In addition, Nati has given me intellectual freedom to work on what interests me, guidance in developing research taste, and the time and patience to develop technical skills so that I can tackle challenging, convoluted problems. In fact, the seeds of this thesis were planted in meetings with Nati during the very first quarter of my Ph.D. Thank you!

I also want to thank the rest of my committee: Avrim Blum, Cong Ma, and Akshay Krishnamurthy. I'm fortunate to have been able to work with each of you. Avrim has amazing intuition, even for research problems he's hearing about for the first time. One day I hope to acquire such wisdom. Cong has been an endless source of advice and mentorship ever since I was an undergrad at Princeton, and he was a grad student there. He was the first person who taught me how to actually read a research paper (not linearly as if it were a story, which I had been doing up until then). Akshay's expertise in RL and clear thinking have been indispensable. Furthermore, our meetings have always been on the Pareto frontier of fun \emph{and} productive.

I'd like to thank other mentors over the years. Pritish Kamath was the first example I had of what day-to-day theoretical research should be like, and his style of research has influenced me greatly. From Pritish I learned the art of carefully understanding simple things. Ayush Sekhari has been a great friend, collaborator, and mentor. Omar Montasser, as the senior student in our group, has been an exemplary role model. Thanks also to Jason Lee for hosting me in Summer 2022, Lin Chen and Adel Javanmard for my Student Researcher experience at Google, and Muye Wang and Sudhanshu Tungare for a fun summer at Two Sigma working with actual data. Going back to my Princeton days, I'd like to thank Yuxin Chen, Emmanuel Abbe, Ramon van Handel, Karthik Narasimhan, Peter Ramadge, and Rob Schapire for introducing me to research and inspiring me to do a Ph.D.

Thanks to my wonderful collaborators: Avrim Blum, Lin Chen, Dylan J.~Foster, Meghal Gupta, Adel Javanmard, Zeyu Jia, Anmol Kabra, Pritish Kamath, Akshay Krishnamurthy, Junbo Li, Cong Ma, Naren S.~Manoj, Vahab Mirrokni, Sasha Rakhlin, Aadirupa Saha, Ayush Sekhari, Nati Srebro, Zhaoran Wang, Chloe Yang, and Zhuoran Yang.

TTIC has been a one-of-a-kind place to do a Ph.D., and I am quite sad to leave. I especially want to thank the Chicago area students (in no particular order): Omar Montasser, Kavya Ravichandran, Lijia Zhou, Antares Chen, Max Ovsiankin, Nirmit Joshi, Marko Medvedev, Dimitar Chakarov, Han Shao, Luzhe Sun, Tushant Mittal, Kumar Kshitij Patel, Keziah Naggita, Pushkar Shukla, Vaidehi Srinivas, Kevin Stangl, David Yunis, Shuo Xie, Goutham Rajendran, Shashank Srivastava, Sudarshan Babu, and Shubham Toshniwal. Thank you for conspiring to spend student body funds together, serving up (and sometimes manufacturing) endless gossip, and being familiar faces at conferences/workshops. In addition, I have learned a great deal from the professors at TTIC: Nati Srebro, Madhur Tulsiani, Avrim Blum, Greg Shakhnarovich, Yury Makarychev, and David McAllester. I'd also like to thank the admin staff at TTIC for making it easy to focus on research, keeping us students well-fed, and being so responsive. Thanks to the IDEAL institute directors for fostering a broader sense of Chicago community: Aravindan Vijayaraghavan, Lev Reyzin, and Avrim Blum.

I'd also like to thank Mayee Chen (2 mod 3) and Charlie Hou (3 mod 3). During the pandemic, we started a group chat to work on exercises from Duchi's Information Theory lecture notes. Sadly, we did not do many of the problems, but we made up for it with countless hours of fruitful (and not so fruitful) discussions on research and Ph.D.~life.

Finally, I'd like to thank my parents, Dongmei Li and Zhaoliang Li. Thanks for instilling a sense of intellectual curiosity in me, giving me the confidence to pursue academics, and supporting me in all endeavors. Thanks to my sister, Greta Li, for teaching me Zoomer lingo. Lastly, I want to thank my fiancée, Emma Corless. You have been my best friend for the past 9 years, and I can't wait for us to be best friends for the rest of our lives. Thank you for your love and support.

% Table of contents
\tableofcontents

\mainmatter

\chapter{Introduction}
Reinforcement Learning (RL) is a widely studied framework for sequential decision-making, in which an agent interacts with an environment and seeks to learn how to maximize its long-term or cumulative reward. 
RL has demonstrated impressive empirical success in a wide array of challenging tasks, from achieving superhuman performance in the game of Go  \citep{silver2017mastering}, to solving intricate robotic manipulation tasks \citep{lillicrap2015continuous, akkaya2019solving, ji2023dribblebot}, to unlocking reasoning capabilities in large language models \cite{jaech2024openai, guo2025deepseek}.
Many practical domains in RL involve rich observations such as images, text, or audio \citep{mnih2015human, li2016deep, ouyang2022training, team2025kimi}. Since these state spaces can be vast and complex, traditional tabular RL approaches \citep{kearns2002near, brafman2002r, azar2017minimax, jin2018q} cannot scale. This has led to a need to develop efficient and theoretically sound approaches for RL that utilize \emph{function approximation}---allowing one to generalize observational data across unknown states/actions.

The goal of this thesis is to study the statistical complexity of RL with function approximation from a learning theoretic perspective. We focus on the setting of \emph{policy learning}, which is arguably the most basic and fundamental setting in RL. Here, the learner is given an abstract function class $\Pi$ of \emph{policies} (mappings from states to actions). For example, $\Pi$ could be the set of all policies represented by a certain deep neural network architecture. The learner's objective is to interact with the unknown environment and return a policy $\estpi$ which performs nearly as well as the best policy in the class $\Pi$. As per the tradition of learning theory, we endeavor to make as few assumptions as possible on nature (i.e., the environment) itself, thus focusing on the so-called \emph{agnostic setting}.  

At a high level, this thesis contributes to the theoretical foundations of RL in two ways: we design new, statistically efficient learning algorithms with provable guarantees, and we characterize the fundamental limitations of any algorithm's performance. 

\section{Overarching Themes}
We highlight several central themes that unify the results presented in this thesis.

\paragraph{Motivating Vignette.} Recall the setup of supervised learning \cite{shalev2014understanding}: the learner is given an \emph{instance space} $\cX$, \emph{label space} $\cY = \crl{0, 1}$, and \emph{hypothesis class} $\Pi = \cY^\cX$, and they observe $n$ i.i.d.~samples $S \sim \cD^n$, where $\cD$ is an unknown distribution over $\cX \times \cY$. 
For parameters $\eps, \delta \in (0,1)$, the learner is required to, with probability at least $1-\delta$, return a predictor $\wh{\pi}: \cX \to \cY$ that competes with the best predictor in $\Pi$ in terms of misclassification error on a fresh sample drawn from $\cD$:
\begin{align*}
    L(\wh{\pi}) \le \inf_{\pi \in \Pi} L(\pi) + \eps, \quad \text{where} \quad 
    L(\pi) \coloneqq \En_{(x,y) \sim \cD} \brk*{\ind{\pi(x) \ne y} } \numberthis \label{eq:classification-error}
\end{align*}
We say a learning rule (or algorithm) $\alg: (\cX \times \cY)^\star \to \cY^\cX$ is an $(\eps, \delta)$-PAC learner for $\Pi$ if it satisfies \eqref{eq:classification-error} for any data distribution $\cD$. We say $\Pi$ is PAC-learnable if such a learning rule exists. This goal is \emph{agnostic}: we make no assumptions on the distribution $\cD$ at all!

For supervised learning, the story is more or less complete. A hypothesis class $\Pi$ is PAC-learnable if and only if its VC dimension is bounded. Furthermore, the Empirical Risk Minimization (ERM) learning rule---which returns the predictor in $\Pi$ with the smallest misclassification error $\wh{L}(\pi) \coloneqq \frac{1}{n}\sum_{i=1}^n \ind{\pi(x_i) \ne y_i}$ over the sample $S$---is statistically optimal, up to log factors \cite{vapnik1971uniform, vapnik1974theory, blumer1989learnability, ehrenfeucht1989general}. 

The central question of this thesis is: Can we develop such a theory for RL? We now discuss the overarching themes of this thesis, both drawing connections to and contrasting with the story in supervised learning.

\paragraph{\raisebox{0.25ex}{$\blacktriangleright$} Theme 1: Role of Interaction.} RL is a substantially richer paradigm than supervised learning. Due to the interactive and sequential nature of the problem, RL presents two significant challenges to learning agents: \emph{exploration}---the agent must deliberately explore the environment to gather information---as well as \emph{error amplification}---the agent must account for potential future errors when making decisions in the present. 

In this thesis, we carefully study the richness of the RL paradigm which comes from the \emph{interaction protocol}, the way that the agent is allowed to interact with the environment. We study the statistical complexity of RL under almost every widely considered interaction protocol, both in theory and in practice. Our inquiry reveals surprising tradeoffs: under some interaction protocols, a learning agent can effectively address the exploration and error amplification challenges, while under other interaction protocols, it is impossible to do so. These separations clarify the power/limitations of the interaction protocols.

\paragraph{\raisebox{0.25ex}{$\blacktriangleright$} Theme 2: Two Flavors of Complexity Measures.} Analogous to classical VC theory, we study the worst-case/minimax sample complexity of RL, introducing new complexity measures which solely depend on the policy class $\Pi$. 

However, while the classical VC theory is beautiful, those results can be rather pessimistic. Considerable effort has been invested in developing refined, \emph{instance-dependent} complexity measures which depend on the underlying distribution $\cD$. One example is the so-called \emph{Rademacher complexity}: letting $\sigma_1, \cdots, \sigma_n$ be i.i.d.~Rademacher random variables, we define
\begin{align*}
    \mathrm{Rad}_n(\Pi) \coloneqq \En_{\cD} \brk*{ \En_{\sigma \sim \crl{\pm 1}^n} \brk*{ \sup_{\pi \in \Pi} \frac{1}{n} \sum_{i=1}^n \sigma_i \pi(x_i) }}. 
\end{align*}
Note that the Rademacher complexity $\mathrm{Rad}_n(\Pi)$ is a complexity measure that depends both on the hypothesis class $\Pi$ as well as the distribution $\cD$ (through the expectation). 

Similarly, we investigate instance-dependent notions of complexity in RL, which depend on the interaction between the policy class and the environment. We study whether algorithms can adapt to simplicity in the environment, as measured by such complexity measures.

\paragraph{\raisebox{0.25ex}{$\blacktriangleright$} Theme 3: Challenges of Large State Spaces.} How can one efficiently explore an environment when the number of possible states is large or even infinite? We address this fundamental challenge in both upper and lower bounds: 
\begin{itemize}
    \item We develop new algorithmic techniques to perform \emph{policy evaluation}, uniformly approximating the values of all policies in the class $\Pi$. These techniques naturally extend the concept of \emph{uniform convergence} \cite{shalev2014understanding} to the interactive setting of policy learning.
    \item We prove information theoretic lower bounds which crucially rely on the large state space aspect of the problem. Specifically, it can be highly unlikely for a learning agent to ever encounter ``repeated'' states over the course of interacting with the environment. In some sense, our lower bounds distill the difficulty of policy learning into known hard statistical tasks such as uniformity testing or learning a complicated predictor. 
\end{itemize}

\section{Outline of the Thesis}

We now provide a high-level summary of the main contributions of this thesis.
\begin{center}
\rule{0.95\linewidth}{0.5pt}
\end{center}

\paragraph{\pref{chap:background}: Background and Problem Setup.} We provide background on the finite horizon Markov Decision Process (MDP) and commonly studied interaction protocols. We also formalize the problem of PAC Reinforcement Learning, highlight some basic results, and present an overview of theoretical approaches. Lastly, we introduce complexity measures based on coverage which will be studied throughout the thesis. In addition to several standard complexity measures, we present a new worst-case notion of complexity called the \emph{spanning capacity} (\pref{def:dimension}) which is solely a structural property of the policy class $\Pi$ itself. 

\paragraph{\pref{chap:eluder}: Policy Eluder Dimension.} We investigate a candidate complexity measure called the \emph{policy eluder dimension}, a combinatorial variant of the widely studied (scale-sensitive) eluder dimension \cite{russo2013eluder}. Roughly speaking, the policy eluder dimension is the longest sequence of adversarially chosen points one must observe in order to accurately estimate the policy at any other point. Intuitively, it is a reasonable complexity measure for sequential decision making settings with partial information feedback such as RL.

We show several new relationships between the policy eluder dimension and other well-established complexity measures for learning theory:
\begin{itemize}
    \item First, we show a qualitative equivalence: namely finiteness of the policy eluder dimension is equivalent to finiteness of both the star number and threshold dimension, which are two learning theoretic quantities that characterize pool-based active learning and online learning respectively (\pref{thm:equivalence}).
    \item Next, we compare policy eluder dimension with a classical notion of dimensional complexity called the sign rank. In \pref{thm:separation} we show that there exist function classes with small policy eluder dimension but large sign rank. An important implication of this result is that sample complexity guarantees for RL stated in terms of the eluder dimension go far beyond what is previously known for linear or generalized linear function approximation settings.
    \item Lastly, we connect the policy eluder dimension, the star number, and the threshold dimension with the spanning capacity in \pref{thm:bounds-on-spanning}.
\end{itemize}
The rest of this thesis predominantly focuses on the coverage-based complexity measures described in \pref{sec:coverage-conditions}. We include \pref{chap:eluder} for several reasons: a huge body of literature has proposed to study the sample complexity of RL in terms of the eluder dimension, so the results in this section help us understand those bounds, and we believe that understanding the role of the complexity measures discussed herein (eluder dimension, star number, threshold dimension, and generalized rank) for policy learning and other sequential decision making problems is an exciting and fruitful direction for future work.

\begin{center}
\rule{0.95\linewidth}{0.5pt}
\end{center}

Now we study the sample complexity of policy learning along three key axes: \emph{environment access}, \emph{coverage conditions}, and \emph{representational conditions}. Each of the following chapters is devoted to a single interaction protocol, for which we will prove upper/lower bounds in terms of the complexity measures based on coverage, assuming various representational conditions on $\Pi$. A summary of our results can be found in \pref{tab:results}.

\newcolumntype{P}[1]{>{\centering\arraybackslash}p{#1}}

\renewcommand{\arraystretch}{1.5} 
\begin{table}[h]
\centering
\resizebox{0.99\linewidth}{!}{%
\begin{tabular}{l|P{5cm}|P{7cm}}
\toprule
 & \textbf{Minimax} & \textbf{Instance-Dependent} \\
\midrule
\rule{0pt}{2em} \textbf{Online RL} &
\begin{tabular}{@{}P{2.9cm} P{1.7cm}@{}}
        \makecell{$\color{Green} \lesssim \poly(\dimRL, K, D)$ \\ Thm.~\ref{thm:sunflower}} & \hspace{-1em}  {\makecell{ $\color{red}  \gtrsim \eps^{-\log \dimRL} $ \\ Thm.~\ref{thm:lower-bound-online} } }
  \end{tabular}
 & \\
\cline{1-2}
\rule{0pt}{2.5em} \textbf{Gen/Local Sim.} &  
\makecell{$\color{Green} \lesssim \dimRL, \quad \color{red} \gtrsim \dimRL,$\\ Thms.~\ref{thm:generative_upper_bound} and \ref{thm:generative_lower_bound}} &
\multirow{-2}{*}{\makecell{$\color{red} \not\lesssim \poly(\ccov)$ \\ Thm.~\ref{thm:lower-bound-coverability}} }\\
\hline
&  
\multirow{2}{*}{
    \begin{tabular}{@{}P{2.2cm} P{2.2cm}@{}}
      {\makecell{\scriptsize Realizable \\[-0.5em] \scriptsize $\pi^\star \in \Pi$}}&
      {\makecell{\scriptsize Agnostic \\[-0.5em] \scriptsize $\pi^\star \not\in \Pi$}}\\
        \makecell{$\color{Green} \lesssim 1$ \\ $\mathsf{AggreVaTe}$} &  \makecell{$\color{red} \gtrsim \dimRL$ \\ Thm.~\ref{thm:lower-bound-expert}}
  \end{tabular}
} & \\
\rule{0pt}{2.5em} \multirow{-2}{*}{\textbf{Imitation Learning}} & & \vspace{-2.5em} \makecell{---} \\
\hline\noalign{\vspace{2mm}}
\multicolumn{2}{c}{} & 
\multirow{3}{*}{
  \begin{tabular}{@{}P{1.8cm}|P{1.8cm}|P{2.8cm}@{}}
      {\makecell{\scriptsize Policy \\[-0.5em] \scriptsize Completeness} } &
      {\makecell{\scriptsize Realizable \\[-0.5em] \scriptsize $\pi^\star \in \Pi$}}&
      {\makecell{\scriptsize Agnostic \\[-0.5em] \scriptsize $\pi^\star \not\in \Pi$}}\\
      \rule{0pt}{2.2em} \multirow{2}{*}{\makecell{$\color{Green} \lesssim \cconc$ \\ \psdp{} }}
       &{\makecell{\textcolor{Dandelion}{\bf{?}}$^\star$ }}  & \makecell{$\color{red} \not\lesssim \poly(\cpush)$ \\ Thm.~\ref{thm:lower-bound-policy-completeness} } \\
       \cline{2-3}
        \rule{0pt}{2em} & \multicolumn{2}{c}{ \makecell{ $\color{Green} \lesssim \poly(\cpush)$ \\ Thm.~\ref{thm:block-mdp-result} (for BMDP)} } \\
  \end{tabular}
} \\
\hline
\rule{0pt}{2.5em} \textbf{\boldsymbol{$\mu$}-Resets} & \makecell{---} &\\
\cline{1-2}
\rule{0pt}{2.5em} \textbf{Hybrid Resets} & \makecell{---} & \\
\bottomrule
\end{tabular}
}
\caption{Summary of our results. The table is organized by interaction protocols well as the type of bound (minimax vs.~instance-dependent). We use \textcolor{Green}{green} to denote upper bounds, and \textcolor{red}{red} to denote lower bounds. Here, $\dimRL$ denotes the spanning capacity, $(K,D)$ are the parameters of the sunflower property, and $\ccov, \cconc, \cpush$ denote coverability, concentrability, and pushforward concentrability respectively. Some remarks: (1) $\mathsf{AggreVaTe}$ \cite{ross2014reinforcement} and \psdp{} \cite{bagnell2003policy} are classical RL algorithms. (2) For readability, we only state the dependence on the aforementioned complexity measures, and polynomial dependence on other problem parameters is omitted. (3) For $\mu$-resets + policy realizability (\textcolor{Dandelion}{\textbf{?}}), we establish sample-inefficiency for \psdp{} and \cpi{} \cite{kakade2002approximately} (see \pref{fig:psdp-lower-bound-simple} and \pref{thm:psdp-lower-bound}), but an information theoretic resolution remains open.} \label{tab:results}
\end{table}

\paragraph{\pref{chap:generative}: Generative Model and Local Simulator.} We explore policy learning when the learner is given access to an interaction protocol with resets (either the generative model or the weaker local simulator). We show that the minimax sample complexity of agnostic PAC RL here is characterized by the spanning capacity of the policy class (\pref{thm:generative_upper_bound} and \ref{thm:generative_lower_bound}), and give further refinements of these bounds for infinite policy classes, stated in terms of the Natarajan dimension of the policy class. Since these minimax results are somewhat pessimistic, we also investigate whether it is possible to adapt to coverability in an instance-dependent fashion. We prove this is impossible: \pref{thm:lower-bound-coverability} shows that there exists a setting with \emph{constant} coverability for which any algorithm must use number of samples scaling with the spanning capacity (which in this case is \emph{exponential} in the horizon). This result formalizes the folklore intuition that ``policy learning methods cannot explore'' and shows a statistical separation with value-based methods, for which it is known that adapting to coverability is possible \cite{xie2022role, mhammedi2024power}. 

\paragraph{\pref{chap:online}: Online RL.} In this chapter, we turn to the most standard and widely considered interaction protocol, online RL.
In \pref{chap:generative}, we showed that spanning capacity characterized the minimax sample complexity of agnostic PAC RL with a generative model. A tempting conjecture is that spanning capacity is also the right characterization in online RL. The lower bound is already clear since online RL is at least as hard as learning with a generative model. But is spanning capacity also sufficient? \pref{thm:lower-bound-online} answers this in the negative: it constructs a specific $\Pi$ for which the minimax sample complexity is \emph{superpolynomial} in the spanning capacity. On the positive side, we introduce and motivate a new structural assumption on the policy class called the \emph{sunflower property} (\pref{def:core_policy}). We show in \pref{thm:sunflower} that statistically-efficient learning is possible for policy classes which have both bounded spanning capacity and satisfy the sunflower property. This is accomplished via a new algorithm called Policy Optimization by Learning $\eps$-Reachable States ($\mathsf{POPLER}$). On a technical level, $\mathsf{POPLER}$ utilizes a new algorithmic technique for policy evaluation called the \emph{policy-specific Markov Reward Process} (\pref{sec:algorithm_description}).

\paragraph{\pref{chap:imitation-learning}: Imitation Learning.} This chapter studies \emph{imitation learning} (IL), a popular paradigm for RL which has been widely applied to robotics, autonomous control, game playing, and more recently, language generation. The basic motivation for IL is that in many RL settings, we already have ``expert'' demonstrations/feedback, and we are not learning from scratch. We study a particular form of \emph{interactive IL}, in which the learner can actively query for the value function of an expert $Q^{\star}(x,a)$ on a given state-action pair. Intuitively, this feedback should help the learner by guiding exploration, thus reducing the statistical dependence on spanning capacity established in \pref{chap:generative}--\ref{chap:online} (which for many policy classes can be quite large). 

Our main result is that the \emph{realizability} of the expert policy plays a crucial role. If the expert policy lies in the policy class $\Pi$, it is known that this feedback can be used efficiently to achieve $\poly( \log \abs{\Pi}, A, H)$ sample complexity (with no dependence on the spanning capacity). However, we show in \pref{thm:lower-bound-expert} that if the expert policy does not lie in $\Pi$, then the learner might need sample complexity which is at least \emph{linear} in spanning capacity.

\paragraph{\pref{chap:mu-reset}: Online RL with Exploratory Resets.} Now we study the $\mu$-reset interaction protocol, where the learner is given additional sampling access to an exploratory reset distribution $\mu = \crl{\mu_h}_{h \in [H]}$. Our starting point is the classical Policy Search by Dynamic Programming (\psdp{}) algorithm \cite{bagnell2003policy}. It is known that \psdp{} is sample-efficient when the reset has bounded concentrability and the policy class satisfies a \emph{policy completeness} condition. A natural question to ask is if either of these requirements be removed.

We have already shown that explicit sampling access to $\mu$ is critical: i.e., \pref{thm:lower-bound-coverability} demonstrated that even with the generative model, the learner cannot adapt to \emph{coverability} (which merely posits the existence of a sampling distribution $\mu$ with good concentrability; see \pref{def:coverability}) and must use sample complexity which scales with the spanning capacity.

Thus, this chapter primarily focuses on relaxing the policy completeness condition:
\begin{itemize}
    \item \emph{Realizable Policy Class:} When the policy class is realizable (a strictly weaker assumption than policy completeness), we show that under slightly stronger assumptions on $\mu$ (pushforward concentrability or admissibility), sample complexity which is exponential in horizon is possible (\pref{thm:psdp-ub-pushforward} and \pref{thm:psdp-ub-admissible}). We further show that this is tight for \psdp{} by giving an algorithm-dependent lower bound in \pref{thm:psdp-lower-bound}. 
    \item \emph{Agnostic Policy Class:} In the fully agnostic setting, we give an information theoretic lower bound (\pref{thm:lower-bound-policy-completeness}) which shows that \emph{no algorithm} can utilize the $\mu$-reset interaction protocol to achieve sample-efficient learning. 
\end{itemize}
% In light of our results, the main gap left is whether we can extend the information theoretic lower bound to the realizable policy class setting, or whether there exists an algorithm which improves upon \psdp{} to achieve polynomial sample complexity (see \glcomment{todo}). 

\paragraph{\pref{chap:hybrid-resets}: Hybrid Resets.} Motivated by the negative results for the local simulator (\pref{thm:lower-bound-coverability}) and $\mu$-resets (\pref{thm:lower-bound-policy-completeness}), we ask if statistically-efficient policy learning is possible under stronger forms of access to the environment. (Here, by statistical efficiency, we want the sample complexity to depend on instance-dependent notions of complexity rather than the worst-case spanning capacity.) 

Our main result is a new algorithm called Policy Learning for Hybrid Resets (\stochalg{}) that uses hybrid resets (both the local simulator and $\mu$-resets) to learn Block MDPs. Block MDPs are perhaps the simplest setting with large state space but low intrinsic complexity, as well as a stepping stone to more challenging settings, and the aforementioned lower bounds actually take the form of Block MDPs. In \pref{thm:block-mdp-result}, we prove that \stochalg{} achieves sample complexity which scales with the pushforward concentrability. This result highlights the significant power of hybrid resets in unlocking new statistical guarantees which are impossible under weaker forms of access to the environment. On a technical level, \stochalg{} introduces a new algorithmic technique for policy evaluation called the \emph{policy emulator} (\pref{def:policy-emulator}), which we anticipate will find further use. 

\section{Note on Contents of Thesis}
This thesis is based on the following prior publications by the author:
\begin{enumerate}
    \item \cite{li2022understanding}: with Pritish Kamath, Dylan J.~Foster, and Nathan Srebro, published in NeurIPS 2022.
    \item \cite{jia2023agnostic}: with Zeyu Jia, Alexander Rakhlin, Ayush Sekhari, and Nathan Srebro, published in NeurIPS 2023.
    \item \cite{krishnamurthy2025role}: with Akshay Krishnamurthy and Ayush Sekhari, published in COLT 2025.
\end{enumerate}
Some remarks are in order. We have significantly reorganized the material from these three papers into the thesis, so that several chapters contain material from multiple papers. We have also standardized the notation to be consistent across chapters, and introduced additional notation as needed throughout the thesis.

During the course of the Ph.D., the author also worked on other topics which are not central to the theme of this thesis. We briefly mention them here.
\begin{enumerate}
    \setcounter{enumi}{3}
    \item \cite{li2022exponential}: with Junbo Li, Anmol Kabra, Nathan Srebro, Zhaoran Wang, and Zhuoran Yang, published in NeurIPS 2022. This work studies online RL when the transition function follows a exponential family distribution.
    \item \cite{li2022pessimism}: with Cong Ma and Nathan Srebro, published in NeurIPS 2022. This work studies offline linear contextual bandits.
    \item \cite{blum2023dueling}: with Avrim Blum, Meghal Gupta, Naren S.~Manoj, Aadirupa Saha, and Yuanyuan Yang, published in ALT 2024. This work studies a generalization of dueling optimization, where the learner interacts with a monotone oracle that provides ``improvements'' over the points the learner queries.
    \item \cite{li2024optimistic}: with Lin Chen, Adel Javanmard, and Vahab Mirrokni, published in COLT 2024. This work studies a weakly supervised learning problem called Learning from Label Proportions (LLP) and studies various learning rules for LLP, both theoretically and empirically.
\end{enumerate}
Publications 4 and 5 are discussed in more detail in \pref{sec:realizability-approach} in order to situate them within the broader landscape of reinforcement learning theory work.

\chapter{Background and Problem Setup}\label{chap:background}

This chapter provides relevant background concepts which will be used throughout the thesis. We begin in \pref{sec:background-mdp} by reviewing the Markov Decision Process (MDP), which is the central model of study in this thesis. Next, in \pref{sec:interaction-models} we give a taxonomy of \emph{interaction protocols}, which formalize how a learner can interact with an unknown MDP. \pref{sec:pac-rl-basics} states the learning objective of PAC RL and provides a broad overview of typical approaches to solving this objective. Lastly, \pref{sec:coverage-conditions} introduces various notions of \emph{coverage conditions}, which are used to measure the complexity of exploration in an MDP. Of note, here we introduce a new worst-case measure called the \emph{spanning capacity}. 

\section{Markov Decision Process}\label{sec:background-mdp}
We study reinforcement learning (RL) in a finite horizon Markov Decision Process (MDP). 

\paragraph{MDP Notation.} We denote the MDP by the tuple $M = \prn{\statesp, \actionsp, P, R, H, d_1}$, which consists of a state space $\statesp$, action space $\actionsp$ with cardinality $A$, probability transition function $P: \statesp \times \actionsp \to \Delta(\statesp)$, reward function $R: \statesp \times \actionsp \to \Delta([0,1])$, horizon $H \in \bbN$, and initial state distribution $d_1 \in \Delta(\statesp)$. For simplicity we assume that the state space $\statesp$ is layered across time, i.e., $\statesp= \statesp_1 \cup \cdots \cup \statesp_H$ where $\statesp_i \cap \statesp_j = \emptyset$ for all $i \ne j$. Thus, given a state $x \in \statesp$ it can be inferred which layer $x$ belongs to, which we will overload as the function $h: \statesp \to [H]$. Beginning with $x_1 \sim d_1$, an episode proceeds in $H$ steps, where at each time step $h \in [H]$, the learner plays an action $a_h$, the reward is sampled as $r_h \sim R(x_h, a_h)$, and the next state is sampled as $x_{h+1} \sim  P(\cdot \mid x_h, a_h)$. This results in a trajectory $\tau = (x_1, a_1, r_1, \cdots, x_H, a_H, r_H)$. We assume that the rewards are normalized so that $\sum_{h=1}^H r_h \in [0,1]$ almost surely.

\paragraph{Policies and Value Functions.} A \emph{policy} is a function $\pi: \statesp \to \Delta(\actionsp)$. For a deterministic policy $\pi: \statesp \to \actionsp$ we denote $\pi(x_h)$ to be the action that the policy takes when presented with state $x_h$. For a stochastic policy, we denote $\pi(\cdot  \mid  x_h)$ to be the distribution over actions. We use $\En^\pi[\cdot]$ and $\Pr^\pi[\cdot]$ to denote the expectation and probability under the process of running $\pi$ in the MDP $M$. For a (partial) trajectory $\tau_{h_\bot:h_\top} = \prn{ x_{h_\bot}, a_{h_\bot}, r_{h_\bot}, \cdots, x_{h_\top}, a_{h_\top}, r_{h_\top}} $ we denote consistency with a deterministic policy $\pi : \statesp \to \actionsp$ using $\cons$, i.e., $\crl{ \pi \cons \tau_{h_\bot:h_\top} }$ denotes the event that $\pi(x_h) = a_h$ for every $h_\bot \le h \le h_\top$.  

The value function and $Q$-function for a given policy $\pi$ are
\begin{align*}
    V^\pi_h(x) = \En^\pi \brk*{ \sum_{h' = h}^H r_{h'}  \mid  x_h = x}, \quad \text{and} \quad Q^\pi_h(x, a) = \En^\pi \brk*{ \sum_{h' = h}^H r_{h'}  \mid  x_h = x, a_h = a }.
\end{align*}
%We sometimes abuse notation and denote $Q^\pi_h(x,\pi) \coloneqq Q^\pi_h(x, \pi(x))$. 
We let $\pi^\star$ denote an optimal (deterministic) policy which maximizes $Q^\pi(x,a)$ for every $(x,a) \in \statesp \times \actionsp$ simultaneously. Furthermore when clear from the context we denote $V^\pi \coloneqq \En_{x_1 \sim d_1} V^\pi(x_1)$. We also define the \emph{occupancy measures}
\begin{align*}
    d^\pi_h(x,a) \coloneqq \Pr^\pi \brk*{x_h = x, a_h = a} \quad \text{and} \quad d^\pi_h(x) \coloneqq \Pr^\pi \brk*{x_h = x}.
\end{align*}

We assume the learner is given a policy class $\Pi \subseteq \Delta(\actionsp)^\statesp$. For any $h \in [H]$ we let $\Pi_h \subseteq \Delta(\actionsp)^{\statesp_h}$ denote the restriction of the policy class to the states in layer $h$. We define a \emph{partial policy} $\pi_{h_\bot:h_\top}$ to be one that is defined over a contiguous subset of layers $[h_\bot, \cdots, h_\top] \subseteq [H]$, and use $\Pi_{h_\bot:h_\top}$ to denote the set of partial policies defined by $\Pi$. 

In the parlance of statistical learning theory, we say the policy class $\Pi$ satisfies \emph{(policy) realizability} if $\pi^\star \in \Pi$. Otherwise, we say we are in the \emph{agnostic} setting, which imposes no assumptions on nature (i.e., the underlying MDP).  

\paragraph{Block MDPs.} Block MDPs \cite{jiang2017contextual, du2019provably} are a prototypical setting for RL with large state spaces but low intrinsic complexity. In this thesis, we will give several results for Block MDPs, both upper and lower bounds. Formally, a Block MDP is given by the tuple $M = (\statesp, \latentsp, \actionsp, H, \optlatp, \optlatr, \psi)$. Compared to the definition of the MDP, we additionally specify a \emph{latent state space} $\latentsp$ and an \emph{emission function} $\psi: \latentsp \to \Delta(\statesp)$. To avoid confusion we refer to observed states $x \in \statesp$ as \emph{observations}. Typically, we assume the latent state space $\latentsp$ is finite, while the observation space $\statesp$ can be arbitrarily large or infinite. Without loss of generality, we will assume that the initial latent state $s_1$ is fixed and known to the learner. % \ascomment{Just 3 paragraphs ago we mentioned that the initial state could be stochastic. I think we should make it deterministic even earlier if possible!} \glcomment{this is fine. our convention is to assume the latent is deterministic, but the emission induces a stochastic first state. this makes the two settings consistent.} 

The dynamics of the Block MDP take the following form: Starting from an initial latent state $s_1$, an emission $x_1 \sim \emission(s_1)$ is generated. For every layer $h \in [H]$, the latent state evolves according to $s_{h+1} \sim \optlatp(\cdot  \mid  s_h, a_h)$ and the reward is sampled as $r_h \sim \optlatr(s_h, a_h)$. The latent state $s_h$ is never observed by the learner, and instead the learner receives only the observation $x_h \sim \emission(s_h)$.

The emission function $\psi$ satisfies the property of \emph{decodability}, which asserts that for every pair $s \ne s'$, we have $\supp(\emission(s)) \cap \supp(\emission(s')) = \emptyset$. Therefore, we can define the ground-truth decoder function $\optdec: \statesp \to \latentsp$ which maps every observation $x$ to the corresponding latent $s$ from which it was emitted. Under decodability, the observation-level transition function (resp.~reward function) can be written as $P(\cdot  \mid  x_h, a_h) = \emission \circ \optlatp(\cdot \mid  \optdec(x_h), a_h)$ (resp.~$R(x_h, a_h) = \optlatr(\optdec(x_h), a_h)$). A priori, both the emission $\psi$ and the decoder $\optdec$ are unknown to the learner and, in a departure from prior work on Block MDPs~\citep[e.g.,][]{misra2020kinematic}, in policy learning the learner does not have access to a decoder class $\Phi$ containing the true decoder $\optdec$, or an emission class \(\Psi\) containing \(\psi\).  

\clearpage
\section{Taxonomy of Interaction Protocols}\label{sec:interaction-models}
\begin{wrapfigure}{r}{0.4\textwidth}
    \centering
    \vspace{-2em}
    \includegraphics[scale=0.23, trim={0cm 18cm 43cm 0cm}, clip]{./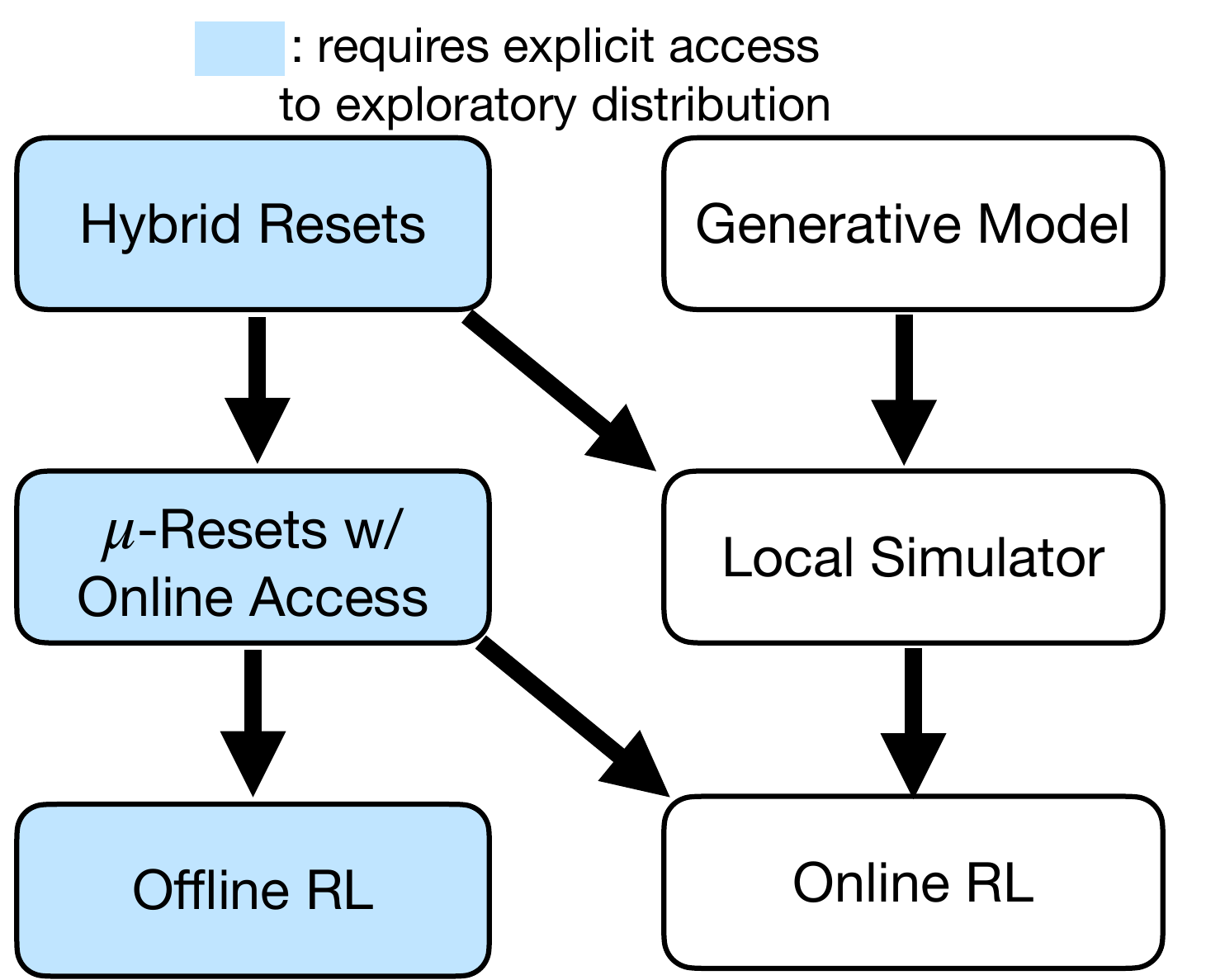}
    \caption{Relationships between interaction protocols. An arrow $A \boldsymbol{\rightarrow} B$ implies that protocol $B$ can be simulated with protocol $A$. \vspace{-2em}}
    \label{fig:interaction-models}
\end{wrapfigure}

Before we can begin to discuss the statistical complexity of RL, we first need to formalize what a ``sample'' is, and how it is collected. In the standard PAC learning framework for supervised learning, each sample is assumed to be i.i.d.~drawn from some underlying distribution $\cD$. 

However, the MDP formalism is considerably richer than supervised learning, and many different ways of collecting data have been proposed and studied in the literature, both in theory and practice. We refer the reader to \pref{fig:interaction-models}, which illustrates the relationships between these interaction protocols which are discussed below.

First, we state the most standard form of access: online reinforcement learning.
\begin{itemize}
\item \textbf{Online RL.} Here, the learner accesses $M$ through the following protocol: in every episode, they can submit any policy $\pi \in \Delta(\actionsp)^\statesp$ and receive a trajectory sampled by running $\pi$ from the initial state distribution $x_1 \sim d_1$. 
\end{itemize}
The next two forms of access represent practical scenarios in which the learner has simulator access by augmenting online RL with reset capabilities.

\begin{itemize}
    \item \textbf{Generative Model.} Also known as a \emph{global simulator}. The learner can query any tuple $(x,a) \in \statesp \times \actionsp$ and receive a sample $(x', r)$ where $x' \sim P(\cdot \mid x,a)$ and $r \sim R(x,a)$.
    \item \textbf{Local Simulator.} In addition to starting from a random initial state $x_1 \sim d_1$, the learner can choose to reset the MDP to any state $x \in \statesp$ which has been previously encountered and then generate a (partial) trajectory starting from this state. The main difference from the generative model is that the local simulator can only reset to previously seen states.
\end{itemize}
Finally, we list several settings in which the learner is also given additional access to some sampling distribution $\mu$ (either over states or state-action pairs). Typically, performance in these settings is measured in terms of ``coverage quality'' of this distribution $\mu$ (to be discussed shortly in \pref{sec:coverage-conditions}). 
\begin{itemize}
    \item \textbf{Offline/Batch RL.} Instead of on-demand sampling access to $M$, the learner receives a dataset $\cD = \crl{\cD_h}_{h=1}^H$ where each $\cD_h$ is comprised of tuples $(x_h,a_h,x'_{h+1},r_h)$ where $(x_h,a_h)$ are i.i.d.~drawn from $\mu_h \in \Delta(\statesp_h \times \actionsp)$, next state $x'_{h+1} \sim P(\cdot \mid x_h,a_h)$, and reward $r_h \sim R(x_h,a_h)$.
    \item \textbf{$\mu$-Resets.} The learner has access to an exploratory reset distribution $\mu = \crl{\mu_h}_{h=1}^H$, and can choose to either generate (partial) trajectories by running policies from the initial state distribution $d_1$ or any of the exploratory resets $\mu_1, \cdots, \mu_H$.
    \item \textbf{\resetmodel.} The learner has access to both an exploratory reset distribution $\mu = \crl{\mu_h}_{h=1}^H$ and a local simulator. 
\end{itemize}

\paragraph{What is a sample?}
For the generative model and offline RL, we define a sample to be an $(x,a,x',r)$ tuple. For the rest of the interaction protocols, we define a sample to be a single episode of interaction with $M$, i.e., a partial trajectory $\tau_{h:H} = (x_h, a_h, r_h, \cdots, x_{H}, a_{H}, r_{H})$ that is obtained by executing some policy on the MDP $M$. Under this convention, simulating one ``sample'' from the local simulator or online RL requires $H$ ``samples'' from the generative model; in this thesis, we are not concerned with $\poly(H)$ factors, so we will sweep this minor inconsistency under the rug.

\section{PAC Reinforcement Learning}\label{sec:pac-rl-basics}
In this section, we present the PAC RL objective, and give a broad overview of approaches to designing algorithms which solve this objective.

In PAC RL, the goal of the learner is to interact with the MDP (through one of the aforementioned interaction protocols) and with probability at least $1-\delta$, output a policy $\estpi: \statesp \to \actionsp$ such that
\begin{align*}
    V^{\estpi} \ge \max_{\pi \in \Pi} V^\pi - \eps. \numberthis \label{eq:pac-rl}
\end{align*}
We call a policy $\estpi$ which satisfies \eqref{eq:pac-rl} an $\eps$-optimal policy.
This objective directly extends the supervised learning objective \eqref{eq:classification-error}. (In fact, supervised learning can be viewed as a simple case of RL with binary actions, horizon 1, and binary rewards.)

\subsection{Basic Results for PAC RL} We begin with the following classical result which shows that PAC RL is statistically intractable in the worst case. 
\begin{proposition}[Informal, No Free Lunch for RL \cite{krishnamurthy2016pac}]\label{prop:nfl-rl}
    There exists a policy class $\Pi$ for which the sample complexity under a generative model is at least $\Omega(\min\crl{A^H, \abs{\Pi}, \abs{\statesp} A} / \eps^2)$. 
\end{proposition}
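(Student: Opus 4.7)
The plan is to reduce to a best-arm identification (BAI) lower bound: identifying the best of $N$ Bernoulli arms with mean gap $\Omega(\eps)$ requires $\Omega(N/\eps^2)$ samples (Mannor--Tsitsiklis). I will embed a BAI hard instance into an MDP family so that the learner's task is equivalent to $N$-arm BAI, with $N$ matching each of the three terms in the minimum.

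For the $A^H$ term, use a complete $A$-ary tree of depth $H$ with deterministic transitions, $\Pi$ the set of $A^H$ deterministic root-to-leaf path policies, and rewards that vanish except at layer $H$: each leaf $\ell$ emits $\ber(1/2)$, except one designated ``good'' leaf $\ell^\star$ which emits $\ber(1/2 + 2\eps)$. The family of MDPs is indexed by $\ell^\star \in [A^H]$, and any $\eps$-optimal policy must be the one routing to $\ell^\star$. To match the $\abs{\Pi}$ term, restrict $\Pi$ (and the candidate winners) to an arbitrary subset of size $\abs{\Pi}$. To match the $\abs{\statesp} A$ term, I use a parallel-bandits construction: a horizon-$1$ MDP (padded to horizon $H$ with trivial single-action layers) with $\abs{\statesp}$ states drawn uniformly from $d_1$ and $A$ actions per state, where each $(x,a)$ gives $\ber(1/2)$ except for a designated good action $a^\star(x)$ at each state yielding $\ber(1/2+2\eps)$.

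The core calculation in each regime is the standard KL/Fano bookkeeping. Because transitions are deterministic and identical across the family, a generative-model query at any $(x,a)$ not among the distinguished pairs has zero KL divergence between instances, while queries at a distinguished pair have KL divergence $O(\eps^2)$. Summing over the learner's query profile and applying Fano's inequality (or a two-point Bretagnolle--Huber argument averaged under a uniform prior over the identity of the winner) shows that distinguishing the $N$ candidates with constant probability forces $\Omega(N/\eps^2)$ total queries. In the parallel-bandits case, one additionally invokes a direct-sum argument: the optimality gap decomposes as an average over the $\abs{\statesp}$ sub-bandits, so achieving $\eps$-optimality requires correctly solving a constant fraction of them, each an $A$-arm BAI costing $\Omega(A/\eps^2)$ samples, for a total of $\Omega(\abs{\statesp} A /\eps^2)$.

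The main subtlety is ensuring in every regime that the generative model gives no shortcut around the BAI bottleneck: since the oracle can query \emph{any} $(x,a)$ directly, any family-dependence in the transition kernel could potentially be exploited. Using family-independent deterministic transitions and confining all family-specific information to the $N$ distinguished state-action pairs forces the generative oracle to behave as an $N$-arm BAI oracle, after which the classical lower bound closes the argument.
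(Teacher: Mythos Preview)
Your proposal is correct and matches the standard approach. The paper does not give its own proof of this proposition (it is cited from prior work), but the technique you use---embedding a best-arm-identification instance via family-independent deterministic transitions and Bernoulli rewards at designated $(x,a)$ pairs, then invoking the Mannor--Tsitsiklis lower bound---is precisely the one the paper employs for its own generative-model lower bound (Theorem~\ref{thm:generative_lower_bound}), and your $A^H$ and $|\Pi|$ constructions are the canonical witnesses of the spanning capacity used there.

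One minor difference: for the $|\statesp|A$ term you use a parallel-bandits construction with a direct-sum/Assouad argument (one independent $A$-arm bandit per context), whereas the paper's unified approach via spanning capacity would instead place a \emph{single} $\dimRL(\Pi)$-arm bandit at the witnessing layer of a deterministic tree-shaped MDP. Both are valid; your version is arguably more self-contained for that regime, while the paper's route has the advantage of handling all three terms at once through $\dimRL(\Pi)\le\min\{A^H,|\Pi|,|\statesp|A\}$ and a single BAI reduction.
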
 

Learning with a local simulator or online RL is only harder than learning with a generative model, so the lower bound of \pref{prop:nfl-rl} extends to these interaction protocols as well. \pref{prop:nfl-rl} is the analogue of the classical \emph{No Free Lunch} results in statistical learning theory \citep{shalev2014understanding}; it indicates that without placing further assumptions on the MDP or the policy class \(\Pi\) (e.g., by introducing additional structure or constraining the state/action space sizes, policy class size, or the horizon), sample efficient PAC RL is not possible. 

Indeed, an almost matching upper bound of $\wt{\cO} \prn{ \min\crl{A^H, \abs{\Pi}, \abs{\statesp} A} /\eps^2}$ under online RL access is quite easy to obtain. The $\abs{\Pi}/\eps^2$ guarantee can simply be obtained by iterating over $\pi \in \Pi$, collecting  $\wt{\cO}(1/\eps^2)$ trajectories per policy, and then picking the policy with highest empirical value. The $\abs{\statesp} A/\eps^2$ guarantee can be obtained by running known algorithms for tabular RL \cite{zhang2021reinforcement}. Finally, the $A^H/\eps^2$ guarantee is achieved by the classical importance sampling (IS) algorithm \citep{kearns1999approximate, agarwal2019reinforcement}. Since importance sampling is an important technique that we use and build upon in this thesis, we give a formal description of the algorithm in \pref{alg:is}.

\begin{algorithm}[!htp] 
    \caption{$\mathsf{ImportanceSampling}$}\label{alg:is}
    \begin{algorithmic}[1]
            \Require Policy class $\Pi$, online RL access to $M$.
            \State Collect $n = \cO(A^H \log \abs{\Pi}/\eps^2)$ trajectories by executing $(a_1, \dots, a_H)\sim \unif(\actionsp^H)$. 
            \State Return $\pihat = \argmax_{\pi \in \Pi} \wh{v}^\pi_{\mathrm{IS}}$, where \(\wh{v}^\pi_{\mathrm{IS}} \coloneqq \frac{A^H}{n} \sum_{i=1}^n \indd{\pi \cons \tau^{(i)}} \prn{\sum_{h=1}^H r_h^{(i)}}\).
    \end{algorithmic}
\end{algorithm} 
For every $\pi \in \Pi$, the quantity $\wh{v}^\pi_{\mathrm{IS}}$ is an unbiased estimate of $V^\pi$ with variance $A^H$; the sample complexity guarantee follows by standard concentration bounds \citep[see, e.g.,][]{agarwal2019reinforcement}.

\emph{Remark.} Another common performance metric in RL is the cumulative \emph{regret} of an online RL algorithm interacting with the MDP over $T$ rounds, which is motivated by the notion of regret in online learning or sequential prediction tasks. We focus on the PAC RL objective because we are interested in understanding the sample complexity of RL algorithms under various interaction protocols, not just online RL, and the PAC RL objective gives us an equal playing field for making comparisons. While the two performance metrics can be related, there can be subtle differences \cite{dann2017unifying, jin2018q, menard2020fast, wagenmaker2022beyond}.

In order to circumvent the lower bound of \pref{prop:nfl-rl}, two main approaches can be taken, which we describe next.

\subsection{Approach 1: Assumption of Realizability}\label{sec:realizability-approach}

A popular paradigm for  developing algorithms for MDPs with large state/action spaces is to use additional function approximation to either model the MDP dynamics or optimal value functions. Over the last decade, there has been a long line of work \citep{russo2013eluder, jiang2017contextual, dann2018oracle, sun2019model, du2019provably, wang2020reinforcement, du2021bilinear, foster2021statistical, jin2021bellman, zhong2022gec, foster2023tight} focusing on understanding structural conditions on the function class and the underlying MDP that enable statistically efficient RL. Algorithmically, these model- or value-based methods balance the exploration/exploitation tradeoff via uncertainty quantification, exploration bonuses, and the principle of optimism. They typically optimize surrogate objectives based on Bellman errors rather than directly optimizing the policy performance objective \eqref{eq:pac-rl}.

The universally adopted assumption underpinning all of these works is that of \emph{realizability}: the true model/value function belongs to the chosen class. Conceptually, the primary role of the realizability assumption is to tame \emph{error amplification} due to the sequential nature of the RL problem. Focusing on the class of algorithms based on \emph{value function approximation} \cite{wang2020provably, jin2021bellman, xie2022role, foster2024model}, theoretical guarantees for these algorithms typically requires the function class to satisfy an even stronger representational condition called \emph{Bellman completeness}, which states that the function class $\cF$ is closed under the Bellman operator. One drawback of the Bellman completeness assumption is that it is nonmontone, that is, if $\cF$ is a set of value functions which satisfies Bellman completeness, adding even a single function $\cF' = \cF \cup \crl{f}$ may break Bellman completeness, even though one might expect learning with $\cF'$ to not be significantly more challenging than learning with $\cF$.

In addition to the aforementioned papers, we highlight two additional papers by the author which fall under Approach 1:
\begin{enumerate}
    \item \cite{li2022exponential}: We study a nonlinear setting in RL proposed by \cite{chowdhury2021reinforcement} over continuous state space $\statesp \subseteq \bbR^{d_\statesp}$, when the transition function $P$ has an exponential family form:
        \begin{align*}
            P(x'\mid x,a) \coloneqq q(x') \cdot \exp\prn*{\tri*{\psi(x'), W_0 \phi(x,a)} - Z_{xa}(W_0) }, \numberthis \label{eq:exponential-family}
        \end{align*}
        where $\psi: \statesp \to \bbR^{d_\psi}$ and $\phi: \statesp \times \actionsp \to \bbR^{d_\phi}$ are known feature embeddings, $q: \statesp \to \bbR_{\ge 0}$ is a known base measure, $W_0 \in \bbR^{d_\psi \times d_\phi}$ is an unknown parameter, and $Z_{xa}: \bbR^{d_\psi \times d_\phi} \to \bbR$ is the log partition function that ensures the density integrates to 1. The transition model \eqref{eq:exponential-family} covers both the well-studied linear dynamical system as well as nonlinear extensions \cite{mania2020active, kakade2020information}. Instead of using maximum likelihood estimation (MLE) to estimate the model parameter $W_0$, we use score matching, an unnormalized density estimation technique \cite{hyvarinen2005estimation}. Our main result is an online RL algorithm which achieves sample complexity $\poly(d_\psi, d_\phi, H, 1/\eps)$ (ignoring some polynomial dependence on structural scale parameters). Notably, our guarantee incurs no dependence on the state space size $\abs{\statesp}$, which is infinite.
    \item \cite{li2022pessimism}: We study offline learning of contextual bandits (RL with $H=1$), when the expected reward function takes a linear form:
        \begin{align*}
            \En\brk*{R(x,a)} = \phi(x,a)^\top \theta^\star,
        \end{align*}
        where $\phi: \statesp \times \actionsp \to \bbR^d$ is a known feature mapping and $\theta^\star \in \bbR^d$ is an unknown parameter vector. We present a family of learning rules $\crl{\estpi_p}_{p \ge 1}$ based on the \emph{pessimism principle}, which discounts policies that are less represented/supported in the offline dataset. The learning rules are instantiated via confidence sets with respect to different $\ell_p$ norms, generalizing prior work \cite{xie2021bellman, rashidinejad2021bridging, jin2021pessimism}. We prove tight upper and lower bounds on the minimax sample complexity of these learning rules, and demonstrate that the novel $\estpi_\infty$ learning rule has an \emph{adaptive optimality property}, dominating all other predictors in the family. 
\end{enumerate}

\subsection{Approach 2: Agnostic Policy Learning}
The typical approach taken in statistical learning theory, and the main approach taken in this thesis, is to understand when statistically efficient RL is possible \emph{without} relying on assumptions on nature itself, i.e., obtaining assumption-free/agnostic guarantees. Why should we care about this, when we already have a relatively mature theory for Approach 1? We give some motivation below: 
\begin{itemize}
    \item Every model class $\cM$ or value class $\cF$ can be converted into a realizable policy class $\Pi$ of the same size (simply by selecting the greedy policies for each model/value function), but the converse is not true. Thus, this approach relies on \emph{strictly weaker} assumptions than Approach 1.
    \item In general, realizability-based guarantees do not yield guarantees for \eqref{eq:pac-rl} as $\eps \to 0$. In some cases, even mild misspecification can cause catastrophic breakdown of guarantees \citep{du2019good, lattimore2020learning}.
    \item Practically speaking, verifying realizability can be difficult. Furthermore, in various applications the optimal policy $\optpi$ might have a succinct representation, but the ground truth model $(P,r)$ or the optimal value function $V^\star$ may be highly complex, rendering accurate approximation of the dynamics or value functions infeasible without substantial domain knowledge \cite{dong2020expressivity}.
\end{itemize}
In this thesis, we treat \eqref{eq:pac-rl} as an agnostic objective (making as minimal assumptions on nature as possible), and we present two styles of guarantees: 
\begin{enumerate}
    \item We study the minimax sample complexity of PAC RL. To do so, we introduce several distribution-free complexity measures which only depend on the policy class $\Pi$. This is akin to the classical VC bounds in supervised learning, which depend only on the hypothesis class and make no assumptions on the distribution $\cD$.
    \item We also study instance-dependent bounds in terms of intrinsic measures of complexity that take into account the underlying MDP $M$, the policy class $\Pi$, and (when given) the sampling distribution $\mu$. We carefully establish when it is possible for learning algorithms to adapt to these intrinsic measures. This is akin to Rademacher-style bounds for supervised learning.
\end{enumerate}
Our results give sample complexity guarantees under various interaction protocols of the form
\begin{align*}
    \poly\prn*{ \mathsf{comp}(\dots), H, A, \eps^{-1}, \log(\delta^{-1}), \log \abs{\Pi} },    
\end{align*}
for some appropriately defined $\mathsf{comp}(\dots)$ which depends only on $\Pi$ (in the first case) or on $\Pi$, $M$, and possibly $\mu$ (in the second case). We will not worry too much about the precise polynomial dependence on action space $A$ and horizon $H$, and invest more effort into designing algorithms and proving guarantees which have no explicit dependence on $\abs{\statesp}$. We state our bounds using $\log \abs{\Pi}$ as a proxy for the statistical complexity of $\Pi$. It is not too difficult to extend our results to infinite policy classes, and we do so in some cases (see, e.g., the results in \pref{sec:generative-infinite-policy} and \pref{sec:online-infinite-policy}).

\section{Coverage Conditions}\label{sec:coverage-conditions}

In this section, we collect several complexity measures based on \emph{coverage}, which will be studied throughout the thesis.

\subsection{Concentrability, Coverability, and Pushforward Variants}
Given a sampling distribution $\mu = \crl{\mu_h}_{h=1}^H$, one can measure its quality by how well it covers the state space. This notion of quality is measured by \emph{coverage conditions}, and is well studied in RL (see Bibliographical Remarks in \pref{sec:background-bib}). Roughly speaking, coverage conditions measure the expansiveness of the set of occupancy measures $\crl{d^\pi_h}$ for policies in the given class $\Pi$. We state a classical notion called concentrability, which depends on the sampling distribution, MDP, and policy class.

\begin{definition}[Concentrability]\label{def:concentrability} The concentrability coefficient for a distribution $\mu = \crl{\mu_h}_{h=1}^H$ with respect to policy class $\Pi$ and MDP $M$ is defined as
\begin{align*}
    \cconc(\mu; \Pi, M) \coloneqq \sup_{\pi \in \Pi, h \in [H]} \nrm*{\frac{d^\pi_h}{\mu_h}}_\infty.
\end{align*}
When clear from the context we denote the concentrability coefficient as $\cconc$.
\end{definition} 

We also define a strengthening of concentrability called pushforward concentrability.

\begin{definition}[Pushforward Concentrability]\label{def:exploratory-pushforward-distribution}
The pushforward concentrability coefficient for a distribution $\mu = \crl{\mu_h}_{h\in[H]}$ with respect to MDP $M$ is 
\begin{align*}
    \cpush(\mu; M) \coloneqq \max_{h \in [H]} \sup_{(x,a,x') \in \statesp_{h-1} \times \actionsp \times \statesp_{h}} \frac{P(x' \mid x,a)}{\mu_h(x')}.
\end{align*}
When clear from the context we denote the pushforward concentrability coefficient as $\cpush$.
\end{definition}
Note that unlike concentrability, pushforward concentrability only depends on the distribution $\mu$ and the MDP $M$, and does not depend on the policy class $\Pi$. It is known that the pushforward concentrability coefficient is always an upper bound on the concentrability coefficient for any distribution $\mu$, but concentrability can be arbitrarily smaller \cite{xie2021batch}. 

Even without explicit access to a sampling distribution $\mu$, one can define a notion of \emph{coverability}, which merely posits the existence of a good sampling distribution, and thus lower bounds the concentrability coefficient for any distribution $\mu$. Coverability is an intrinsic property that depends on the underlying MDP and the policy class. The pushforward variant of coverability can also be defined (see \pref{def:pushforward-coverability}), but is not central to the results in this thesis.

\begin{definition}[Coverability]\label{def:coverability}
The coverability coefficient for policy class $\Pi$ and MDP $M$ is defined as
\begin{align*}
\ccov(\Pi, M)\coloneqq~ \inf_{\mu_1, \dots \mu_H \in \Delta(\statesp \times \actionsp)} \sup_{\pi \in \Pi, h \in [H]} ~ \nrm*{ \frac{d^\pi_h}{\mu_h} }_\infty 
         =~ \max_{h \in [H]}~ \sum_{s, a} \sup_{\pi \in \Pi} d^\pi_h(x,a). \numberthis \label{eq:cov-defn}
\end{align*} 
When clear from the context we denote the coverability coefficient as $\ccov$.
\end{definition}

The last equality of \eqref{eq:cov-defn}, which says that coverability is equivalent to a notion of cumulative reachability, is nontrivial, and it is shown in Lemma 3 of \cite{xie2022role}. 

\emph{Remark.} We state the original state-action variant of coverability as defined in \cite{xie2022role}, where the infimum is taken over distributions over state-action pairs. This is inconsistent with \pref{def:concentrability} and \ref{def:exploratory-pushforward-distribution}, but this is minor, since the state-action variant is related to the state variant of coverability (where the infimum is taken over distributions over states) by a factor of $A$. 

\subsection{Spanning Capacity}\label{sec:C-pi}
Now we introduce the \emph{spanning capacity}, which is the worst-case (over all MDPs defined over fixed state/action spaces and horizon) value of coverability (\pref{def:coverability}). It is solely a structural property of the policy class $\Pi$ itself. Our presentation follows that of \cite{jia2023agnostic}, which first defines it as a notion of maximum reachability over deterministic MDPs, and then shows the equivalence with worst-case coverability (see \pref{lem:coverability}).

\paragraph{Preliminaries.} In this section, we restrict ourselves to deterministic policy classes $\Pi \subseteq \actionsp^\statesp$. We set up some notation: define $\cMsto$ as the set of all (stochastic and deterministic) MDPs of horizon $H$ over the state space $\statesp$ and action space $\actionsp$, and $\cMdet \subset \cMsto$ to denote the set of all MDPs with both deterministic transitions and rewards (in particular, such MDPs have a fixed initial state). In deterministic MDP $M \in \cMdet$, we say $(x,a)$ is \emph{reachable} by $\pi \in \Pi$  if $(x,a)$ lies on the trajectory obtained by running $\pi$ on $M$. 

\paragraph{Definition and Examples.} The spanning capacity measures the ``complexity'' of a given policy class $\Pi$ as the maximum number of state-action pairs which are reachable by some $\pi \in \Pi$ in any \emph{fixed deterministic} MDP.

\begin{definition}[Spanning Capacity]\label{def:dimension}
Fix a deterministic MDP $M \in \cMdet$. We define the \emph{cumulative reachability} at layer $h\in[H]$, denoted by $\creach_h(\Pi, M)$, as
\begin{align*}
    \creach_h(\Pi, M) \coloneqq \abs{ \crl{(x,a) : (x,a)~\text{is  reachable by } \Pi ~\text{at layer \(h\)} } }.
\end{align*}
Denote $\dimRL_h(\Pi)\coloneqq \max_{M \in \cMdet} \creach_h(\Pi, M)$. We define the \emph{spanning capacity} of $\Pi$ as
\begin{align*}
    \dimRL(\Pi) \coloneqq \max_{h \in [H]} \dimRL_h(\Pi).
\end{align*}
\end{definition}

A natural interpretation of the spanning capacity is that it represents the largest ``needle in a haystack'' that can be embedded in a deterministic MDP using the policy class $\Pi$. To see this, let $(M^\star, h^\star)$ be the MDP and layer which witnesses $\dimRL(\Pi)$, and let $\crl{(x_{h^\star}^{(i)},a_{h^\star}^{(i)})}_{i=1}^{\dimRL(\Pi)}$ be the set of  state-action pairs reachable by \(\Pi\) in $M^\star$ at layer $h^\star$. Then one can hide a reward of 1 on one of these state-action pairs; since every trajectory visits a single $(x_{h^\star}^{(i)}, a_{h^\star}^{(i)})$ at layer $h^\star$, we need at least $\dimRL(\Pi)$ samples in order to discover which state-action pair has the hidden reward. Note that in the (agnostic) PAC RL objective \eqref{eq:pac-rl}, we only need to worry about the states that are reachable using \(\Pi\), even though the \(h^\star\) layer may have other non-reachable states and actions with possibly larger rewards.

To build intuition, we first look at some simple examples with bounded spanning capacity:

\begin{itemize}
    \item \textbf{Contextual Bandits:} Consider the standard formulation of contextual bandits (i.e., RL with $H = 1$). For any policy class \(\Picb\), since $H=1$, the largest deterministic MDP we can construct has a single state $x_1$ and at most $A$ actions available on $x_1$, so $\dimRL(\Picb) \le A$.
    \item \textbf{Tabular MDPs:} Consider tabular RL with the policy class \(\Pitab = \actionsp^\statesp\) consisting of all deterministic policies on the underlying state space. Depending on the relationship between $\abs{\statesp},A$ and $H$, we have two possible bounds on $\dimRL(\Pitab) \le \min\crl{A^H, \abs{\statesp} A}$. If the state space is exponentially large in $H$, then it is possible to construct a full $A$-ary ``tree'' such that every $(x,a)$ pair at layer $H$ is visited, giving us the $A^H$ bound. However, if the state space is small, then the number of $(x,a)$ pairs available at any layer $H$ is trivially bounded by $\abs{\statesp} A$.
    \item \textbf{Bounded Cardinality Policy Classes:} For any policy class \(\Pi\), we always have that $\dimRL(\Pi) \le \abs{\Pi}$, since in any deterministic MDP, in any layer \(h \in [H]\), each $\pi \in \Pi$ can visit at most one new $(x,a)$ pair. Thus, for policy classes  \(\abs{\Pismall}\) with small cardinality
 (e.g. \(\abs{\Pismall} = \poly(H, A)\)),  the spanning capacity is also bounded. % Note that in this case, we allow our sample complexity bounds to depend on \(\abs{\Pismall}\).
\end{itemize}
Before proceeding, we note that in light of this discussion, the spanning capacity is always bounded for any policy class \(\Pi\).
\begin{proposition}\label{prop:dimension-ub}
For any policy class $\Pi$, we have $\dimRL(\Pi) \le \min\crl{A^H, \abs{\Pi}, \abs{\statesp} A}$.
\end{proposition}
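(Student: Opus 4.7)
The proposal is to verify each of the three bounds separately, all via simple counting arguments on reachable state-action pairs in a fixed deterministic MDP. Fix an arbitrary $M \in \cMdet$ and layer $h \in [H]$; it suffices to show that $\creach_h(\Pi, M) \le \min\{A^H, \abs{\Pi}, \abs{\statesp} A\}$, and then take the max over $h$ and $M$.

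First I would handle the $\abs{\Pi}$ bound. Because $M$ is deterministic and has a fixed initial state, each policy $\pi \in \Pi$ induces a unique deterministic trajectory, which visits exactly one state-action pair $(x^\pi_h, a^\pi_h)$ at layer $h$. Hence the set of $(x,a)$ reachable by $\Pi$ at layer $h$ is $\{(x^\pi_h, a^\pi_h) : \pi \in \Pi\}$, of cardinality at most $\abs{\Pi}$. Next, the $\abs{\statesp}A$ bound is immediate: the reachable set is a subset of $\statesp_h \times \actionsp$, whose size is at most $\abs{\statesp_h} \cdot A \le \abs{\statesp} A$.

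For the $A^H$ bound, I would induct on $h$ to show the number of states reachable at layer $h$ in any deterministic MDP (with a fixed initial state) is at most $A^{h-1}$. The base case $h=1$ gives a single state; for the inductive step, each reachable state at layer $h$ has at most $A$ successors under deterministic transitions, so at most $A^{h-1} \cdot A = A^h$ states are reachable at layer $h+1$. Consequently the number of reachable $(x,a)$ pairs at layer $h$ is at most $A^{h-1} \cdot A = A^h \le A^H$, completing the third bound.

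There is no real obstacle here: the proposition is a direct consequence of the definition of $\creach_h$ combined with three elementary observations (one policy contributes one pair per layer, the state-action space has bounded cardinality per layer, and the branching factor of a deterministic MDP is $A$). Taking the maximum of $\creach_h(\Pi, M)$ over $h \in [H]$ and $M \in \cMdet$ yields the claimed upper bound on $\dimRL(\Pi)$.
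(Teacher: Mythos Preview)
Your proposal is correct and takes essentially the same approach as the paper. The paper does not give a standalone proof of this proposition; instead, the three bounds are established informally in the discussion of examples immediately preceding it (the $\abs{\Pi}$ bound under ``Bounded Cardinality Policy Classes,'' and the $A^H$ and $\abs{\statesp}A$ bounds under ``Tabular MDPs''), using exactly the same counting observations you spell out.
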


\pref{prop:dimension-ub} recovers the worst-case upper and lower bounds discussed in \pref{sec:pac-rl-basics}. We next present several policy classes for which spanning capacity is substantially smaller than upper bound of \pref{prop:dimension-ub}. For these policy classes we set the state/action spaces to be $\statesp_h = \crl{x_{(i,h)} : i \in [K]}$ for all $h\in [H]$ and $\actionsp = \crl{0,1}$, respectively. The bounds on spanning capacity are shown via induction. Detailed calculations are deferred to \pref{sec:examples-policy-classes}.

\begin{itemize}
    \item \textbf{Singletons:} We define $\Pisingleton \coloneqq \crl{\pi_{(i,h)}: i \in [K], h \in [H]}$, where $\pi_{(i,h)}$ takes action 1 on state $x_{(i,h)}$ and 0 everywhere else. We have $\dimRL(\Pisingleton) = H+1$, since once we fix a deterministic MDP, there are at most $H$ states where we can split from the trajectory taken by the policy which always plays $a=0$, so therefore the maximum number of $(x,a)$ pairs reachable at layer $h \in [H]$ is $h+1$. 
    \item \textbf{$\boldsymbol{\ell}$-tons}: This is a natural generalization of singletons. We define
$\PiLton \coloneqq \crl{\pi_I : I \subset \statesp, \abs{I} \le \ell}$, where the policy \(\pi_I\) is defined s.t.~\(\pi_I(x) = \ind{x \in I}\) for any \(x \in \statesp\). Here, $\dimRL(\PiLton) = \Theta(H^\ell)$.
    \item \textbf{1-Active Policies}: We define \(\Pioneactive\) to be the class of policies which can take both possible actions on a single state $x_{(1,h)}$ in each layer $h$, but on other states $x_{(i, h)}$ for $i \ne 1$ must take action 0. Formally, \(\Pioneactive \ldef{} \crl{\pi_b \mid b \in \crl{0, 1}^H}\), where for any bitstring \(b \in \crl{0, 1}^H\) the policy \(\pi_b\) is defined such that  \( \pi_b(x) = b[h] \) if \(x = x_{(1, h)} \), and $\pi_b(x) = 0$ otherwise. Here, $\dimRL(\Pioneactive) = \Theta(H)$.
    \item \textbf{All-Active Policies}: We define $\PiJactive \coloneqq \crl{\pi_b \mid b \in \crl{0, 1}^H}$, where for any bitstring \(b \in \crl{0, 1}^H\) the policy \(\pi_b\) is defined such that  \( \pi_b(x) = b[h] \) if \(x = x_{(j, h)} \), and $\pi_b(x) = 0$ otherwise. We let $\Piactive \coloneqq \bigcup_{j = 1}^K \PiJactive$. Here, $\dimRL(\Piactive) = \Theta(H^2)$.
\end{itemize}

\paragraph{Relating Spanning Capacity to Coverability.} 
It is straightforward from \pref{def:coverability} that spanning capacity equals worst-case coverability when we maximize over deterministic MDPs, since for any deterministic MDP, $\sup_{\pi \in \Pi} d^\pi_h(x,a) = \ind{(x,a)~\text{is  reachable by } \Pi ~\text{at layer \(h\)}}$. The next lemma shows that spanning capacity is \emph{exactly} worst-case coverability even when we maximize over the larger class of stochastic MDPs. As a consequence, there always exists a deterministic MDP that witnesses worst-case coverability.

\begin{lemma}\label{lem:coverability}
For any policy class $\Pi$, we have $\sup_{M \in \cMsto} \ccov(\Pi, M) =   \dimRL(\Pi)$.
\end{lemma}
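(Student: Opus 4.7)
The plan is to show both inequalities, with the harder direction being a derandomization argument via the probabilistic method.

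For the easy direction $\dimRL(\Pi) \le \sup_{M \in \cMsto} \ccov(\Pi, M)$: for any $M \in \cMdet$ we have $d^\pi_h(x,a) \in \{0,1\}$ for every $\pi$ and $(x,a)$, and moreover $\sup_{\pi \in \Pi} d^\pi_h(x,a) = \ind{(x,a)\text{ reachable by }\Pi\text{ at layer }h\text{ in }M}$. Summing yields $\ccov(\Pi, M) = \max_h \creach_h(\Pi, M)$, and since $\cMdet \subseteq \cMsto$ taking the supremum over $M \in \cMdet$ produces exactly $\dimRL(\Pi)$.

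For the reverse inequality, fix an arbitrary $M \in \cMsto$ and a layer $h \in [H]$; I aim to show $\sum_{(x,a)} \sup_{\pi \in \Pi} d^\pi_h(x,a) \le \dimRL(\Pi)$. The idea is to write the stochastic MDP as a mixture of deterministic MDPs via common randomness: independently sample, for each $(x,a) \in \statesp \times \actionsp$, a next state $\xi(x,a) \sim P(\cdot \mid x,a)$, and sample an initial state $\xi_1 \sim d_1$. This realization defines a (random) deterministic MDP $M' \in \cMdet$ (with rewards chosen arbitrarily, since they are irrelevant for reachability). The key identity is that for any deterministic policy $\pi$,
\begin{align*}
d^\pi_h(x,a) \;=\; \Pr_{M'}\!\brk*{(x,a)\text{ is reached by }\pi\text{ in }M'},
\end{align*}
because running $\pi$ on $M$ is distributionally equivalent to first sampling $M'$ and then running $\pi$ deterministically on $M'$.

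Now I apply the elementary swap $\sup_\pi \En[X_\pi] \le \En[\sup_\pi X_\pi]$ to the indicators $X_\pi(M') = \ind{\pi\text{ reaches }(x,a)\text{ in }M'}$, obtaining
\begin{align*}
\sup_{\pi \in \Pi} d^\pi_h(x,a) \;\le\; \En_{M'}\!\brk*{\ind{(x,a)\text{ is reachable by }\Pi\text{ in }M'}}.
\end{align*}
Summing over $(x,a)$ and exchanging the (nonnegative) sum with the expectation gives $\sum_{(x,a)} \sup_{\pi \in \Pi} d^\pi_h(x,a) \le \En_{M'}[\creach_h(\Pi, M')] \le \max_{M' \in \cMdet} \creach_h(\Pi, M') \le \dimRL(\Pi)$. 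Taking the max over $h$ yields $\ccov(\Pi, M) \le \dimRL(\Pi)$, and since $M$ was arbitrary this proves the claim. The main obstacle is the derandomization step, and the only subtlety is in making the sampling of $M'$ well defined when $\statesp$ is large; this causes no real trouble because only the countably many states reachable under $\Pi$ contribute to the sum, and on those the product measure over $\xi(x,a)$ is standard.
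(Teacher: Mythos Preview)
Your proof is correct and takes a genuinely different route from the paper's. The paper proceeds by explicitly unrolling the conditional visitation probability $d^\pi_h(x_h,a_h)$ layer by layer: at each step it writes the conditional distribution as a sum over the next action (using that $\pi$ is deterministic) and over the next state (weighted by $P$), swaps the $\sup_\pi$ inside each such sum, and finally replaces each expectation over $x_{h'}$ by a supremum. This yields the nested expression $\sum_{a_1}\sup_{x_2}\sum_{a_2}\cdots\sup_{x_h}\sum_{a_h}\ind{\exists\,\pi:\pi\cons(x_{1:h},a_{1:h})}$, which is exactly the definition of $\dimRL_h(\Pi)$ and moreover explicitly constructs the witnessing deterministic MDP.

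Your argument instead couples the stochastic MDP to a random deterministic one $M'$ (by sampling $\xi(x,a)\sim P(\cdot\mid x,a)$ independently), observes that $d^\pi_h(x,a)=\En_{M'}[\ind{\pi\text{ reaches }(x,a)\text{ in }M'}]$, and then applies the single swap $\sup_\pi\En\le\En\sup_\pi$ and linearity of expectation. This is shorter and conceptually cleaner, but it is nonconstructive: it shows the bound holds in expectation over $M'$ rather than exhibiting a specific deterministic MDP achieving the maximum. The paper's unrolling buys an explicit witness; your coupling buys brevity. The layered state space is what makes your coupling identity exact (each $(x,a)$ is visited at most once along any trajectory), and you correctly flag the only measure-theoretic subtlety.
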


\begin{proof}[Proof of \pref{lem:coverability}]
Fix any $M \in \cMsto$, as well as $h \in [H]$. We claim that
\begin{align*}
    \Gamma_h \coloneqq \sum_{x_h \in \statesp_h, a_h \in \actionsp_h} \sup_{\pi \in \Pi} d^\pi_h(x_h,a_h; M) \le \dimRL_h(\Pi).\numberthis \label{eq:lem-coverability-eq}
\end{align*}
Here, $d^\pi_h(x_h,a_h; M)$ is the state-action visitation distribution of the policy \(\pi\) on MDP $M$.

We first set up additional notation. Let us define a \emph{prefix} as any tuple of pairs of the form
\begin{align*}
    (x_1, a_1, x_2, a_2, \dots, x_k, a_k) \quad \text{or} \quad (x_1, a_1, x_2, a_2, \dots, x_{k}, a_{k}, x_{k+1}).
\end{align*}
We will denote prefix sequences as $(x_{1:k}, a_{1:k})$ or $(x_{1:k+1}, a_{1:k})$ respectively. For any prefix $(x_{1:k}, a_{1:k})$ (similarly prefixes of the type $(x_{1:k+1}, a_{1:k})$) we let $d^\pi_h(x_h, a_h \mid  (x_{1:k}, a_{1:k}) ; M)$ denote the conditional probability of reaching $(x_h, a_h)$ under policy $\pi$ given one observed the prefix $(x_{1:k}, a_{1:k})$ in MDP $M$, with $d^\pi_h(x_h, a_h \mid  (x_{1:k}, a_{1:k}) ; M) = 0$ if $\pi \not\cons (x_{1:k}, a_{1:k})$ or $ \pi \not\cons (x_h, a_h)$.

In the following proof, we assume that the start state $x_1$ is fixed, but this is without any loss of generality, and the proof can easily be adapted to hold for stochastic start states.

Our strategy will be to explicitly compute the quantity $\Gamma_h$ in terms of the dynamics of $M$ and show that we can upper bound it by a ``derandomized'' MDP $M'$ which maximizes reachability at layer $h$. Let us unroll one step of the dynamics:
\begin{align*}
    \Gamma_h
    &\coloneqq \sum_{x_h \in \statesp_h, a_h \in \actionsp} \sup_{\pi \in \Pi} d^\pi_h(x_h, a_h; M) \\
    &\overset{(i)}{=} \sum_{x_h \in \statesp_h, a_h \in \actionsp} \sup_{\pi \in \Pi} d^\pi_h(x_h, a_h \mid  x_1 ; M) , \\
    &\overset{(ii)}{=} \sum_{x_h \in \statesp_h, a_h \in \actionsp} \sup_{\pi \in \Pi} \crl*{ \sum_{a_1 \in \cA} d^\pi_h(x_h, a_h\mid x_1, a_1; M) } \\
    &\overset{(iii)}{\leq} \sum_{a_1 \in \actionsp} \sum_{x_h \in \statesp_h, a_h \in \cA} \sup_{\pi \in \Pi}  d^\pi_h(x_h, a_h \mid  x_1, a_1; M).
\end{align*}
The equality $(i)$ follows from the fact that $M$ always starts at $x_1$. The equality $(ii)$ follows from the fact that $\pi$ is deterministic, so there exists exactly one $a' = \pi(x_1)$ for which $d^\pi_h(x_h, a_h\mid x_1, a'; M) = d^\pi_h(x_h, a_h\mid x_1 ; M)$, with all other $a'' \ne a'$ satisfying $d^\pi_h(x_h, a_h\mid x_1, a''; M) = 0$. The inequality $(iii)$ follows by swapping the supremum and the sum.

Continuing in this way, we can show that
\begin{align*}
\Gamma_h &= \sum_{a_1 \in \actionsp} \sum_{x_h \in \statesp_h, a_h \in \cA} \sup_{\pi \in \Pi} \crl*{ \sum_{x_2 \in \statesp_2} P(x_2 \mid x_1, a_1) \sum_{a_2 \in \cA} d^\pi_h(x_h, a_h \mid  (x_{1:2}, a_{1:2}); M) } \\
&\le \sum_{a_1 \in \actionsp} \sum_{x_2 \in \statesp_2} P(x_2 \mid x_1, a_1) \sum_{a_2 \in \cA} \sum_{x_h \in \statesp_h, a_h \in \cA} \sup_{\pi \in \Pi}  d^\pi_h(x_h, a_h\mid (x_{1:2}, a_{1:2}); M)  \\
&\hspace{0.5in} \vdots \\
&\le \sum_{a_1 \in \actionsp} \sum_{x_2 \in \statesp_2} P(x_2 \mid x_1, a_1) \sum_{a_2 \in \cA} \dots \sum_{x_{h-1} \in \statesp_{h-1}} P(x_{h-1}| x_{h-2}, a_{h-2})  \\
&\hspace{2.0in} \times \sum_{a_{h-1} \in \actionsp} \sum_{x_h \in \statesp_h, a_h \in \cA} \sup_{\pi \in \Pi}  d^\pi_h(x_h, a_h\mid (x_{1:h-1}, a_{1:h-1}); M).
\end{align*}
Now we examine the conditional visitation $d^\pi_h(x_h, a_h\mid (x_{1:h-1}, a_{1:h-1}); M)$. Observe that it can be rewritten as
\begin{align*}
    d^\pi_h(x_h, a_h\mid (x_{1:h-1}, a_{1:h-1}); M) = P(x_h|x_{h-1}, a_{h-1}) \cdot \ind{\pi \cons (x_{1:h}, a_{1:h})}.
\end{align*}
Plugging this back into the previous display, and again swapping the supremum and the sum, we get that
\begin{align*}
\Gamma_h &\le \sum_{a_1 \in \actionsp} \dots \sum_{x_{h} \in \statesp_{h}} P(x_{h}| x_{h-1}, a_{h-1}) \sum_{a_{h} \in \cA} \sup_{\pi \in \Pi}   \ind{\pi \cons (x_{1:h}, a_{1:h})} \\
&= \sum_{a_1 \in \actionsp} \dots \sum_{x_{h} \in \statesp_{h}} P(x_{h}| x_{h-1}, a_{h-1}) \sum_{a_{h} \in \cA}   \ind{\exists \pi \in \Pi: \pi \cons (x_{1:h}, a_{1:h})}
\end{align*}
Our last step is to derandomize the stochastic transitions in the above stochastic MDP, simply by taking the sup over the transition probabilities:
\begin{align*}
&\Gamma_h \le \sum_{a_1 \in \actionsp} \sup_{x_2 \in \statesp_2} \sum_{a_2 \in \cA} \dots \sup_{x_{h} \in \statesp_{h}} \sum_{a_{h} \in \cA}   \ind{\exists \pi \in \Pi: \pi \cons (x_{1:h}, a_{1:h})} = \dimRL_h(\Pi).
\end{align*}
The right hand side of the inequality is exactly the definition of $\dimRL_h(\Pi)$, thus proving \eqref{eq:lem-coverability-eq}. In particular, the above process defines the deterministic MDP which maximizes the cumulative reachability at layer $h$. Taking the maximum over $h$ as well as supremum over $M$, we see that $\sup_{M \in \cMsto} \ccov(\Pi, M) \le \dimRL(\Pi)$. Furthermore, from the definitions we have
\begin{align*}
    \dimRL(\Pi) = \sup_{M \in \cMdet} \ccov(\Pi, M) \le \sup_{M \in \cMsto} \ccov(\Pi, M).
\end{align*}
This concludes the proof of \pref{lem:coverability}.
\end{proof}

\subsection{Examples of Policy Classes}\label{sec:examples-policy-classes}
We provide calculations for the spanning capacity for the examples considered in \pref{sec:C-pi}. 

\paragraph{$\ell$-tons.}
In the following, we will denote $\Pi_\ell \coloneqq \PiLton$. We will first prove that $\dimRL(\Pi_\ell)\le 2H^{\ell}$. To show this, we will prove that $\dimRL_h(\Pi_\ell)\le 2h^\ell$ by induction on $H$. When $H = 1$, the class is a subclass of the contextual bandit class, hence we have $\dimRL_1(\Pi_{\ell})\le 2$. Next, suppose $\dimRL_{h-1}(\Pi_\ell)\le 2(h-1)^\ell$. Fix any deterministic MDP and call the first state $x_1$. Policies taking $a=1$ at $x_1$ can only take $a=1$ on at most $\ell-1$ states in the following layers. Such policies reach at most $\dimRL_{h-1}(\Pi_{\ell-1})$ states in layer $h$. Policies taking $a=0$ at $x_1$ can only take $a=1$ on at most $\ell$ states in the following layers. Such policies reach at most $\dimRL_{h-1}(\Pi_{\ell})$ states in layer $h$.  Hence we obtain
$$\dimRL_{h}(\Pi_{\ell})\le \dimRL_{h-1}(\Pi_{\ell-1}) + \dimRL_{h-1}(\Pi_{\ell})\le 2(h-1)^{\ell-1} + 2(h-1)^{\ell}\le 2h^{\ell}.$$
This finishes the proof of the induction hypothesis. Based on the induction argument, we get
$$\dimRL(\Pi_\ell) = \max_{h\in [H]}\dimRL_h(\Pi_\ell)\le 2H^\ell.$$

\paragraph{$1$-Active Policies.}
We will first prove that $\dimRL(\Pioneactive)\le 2H$. For any deterministic MDP, we use $\bar{\statesp}_h$ to denote the set of states reachable by $\Pioneactive$ at layer $h$. We will show that $\bar{\statesp}_h \le 2h$ by induction on $h$. For $h = 1$, this holds since any deterministic MDP has only one state in the first layer. Suppose it holds at layer $h$. Then, we have
$$|\bar{\statesp}_{h+1}|\le |\{(x, \pi(x)):x\in\bar{\statesp}_h, \pi\in \Pi\}|.$$
Note that policies in $\Pioneactive$ must take $a=0$ on every $x \notin \{x_{(1,1)}, x_{(1,2)}, \cdots, x_{(1, H)}\}$. Hence $|\{(x, \pi(x)) ~|~x\in\bar{\statesp}_h, \pi\in \Pi\}|\le |\bar{\statesp}_h| + 1\le h+1$. Thus, the induction argument is complete. As a consequence we have $\dimRL_{h}(\Pi)\le 2h$ for all $h$, so
$$\dimRL(\Pioneactive) = \max_{h\in [H]} \dimRL_h(\Pioneactive)\le 2H.$$

\paragraph{All-Active Policies.} For any deterministic MDP, there is a single state $x_{(j,1)}$ in the first layer. Any policy which takes $a=1$ at state $x_{(j,1)}$ must belong to $\PiJactive$. Hence such policies can reach at most $\dimRL_{h-1}(\PiJactive)$ states in layer $h$. For polices which take action $0$ at state $h$, all these policies will transit to a fixed state in layer $2$. Hence such policies can reach at most $\dimRL_{h-1}(\Piactive)$ states at layer $h$. Therefore, we get
$$\dimRL_h(\Piactive) \le \dimRL_{h-1}(\Piactive) + \max_j \dimRL_{h-1}(\PiJactive)\le \dimRL_{h-1}(\Piactive) + 2(h-1).$$
By telescoping, we get
$$\dimRL_h(\Piactive)\le h(h-1),$$
which indicates that
$$\dimRL(\Piactive) = \max_{h\in [H]} \dimRL_h(\Piactive)\le H(H-1).$$

\paragraph{Discussion.} We now discuss some observations/extensions of these calculations.

\emph{Parameter Shared vs.~Product Policy Classes.}
Many of the policy class examples we have considered exhibit \emph{parameter sharing}, meaning that the behaviors of policies have dependencies across layers. In particular, such policy classes cannot be written as product classes $\Pi = \Pi_1 \times \Pi_2 \times \cdots \times \Pi_H$. Any parameter-shared policy class $\Pi$ can be written as a subset of a product policy class $\Pi'$ with $\log \abs{\Pi'} \le H \log \abs{\Pi}$. However, spanning capacity can increase arbitrarily (consider the singleton policy class, where the spanning capacity of the enlarged product class is $\dimRL(\Pisingleton') = A^H$). 

For product classes, we have the following inductive characterization of spanning capacity, which is more or less obvious given our previous calculations.
\begin{proposition}\label{prop:product-spanning-cap}
    Let $\Pi$ be any product policy class with $A=2$. Denote
    \begin{align*}
        M_h &\coloneqq \abs*{\crl*{(x,a) \in \statesp_h \times \actionsp : \exists \pi \in \Pi_h, \pi(x) = a}}, \\
        N_h &\coloneqq \abs*{ \crl*{x \in \statesp_h: \exists \pi, \pi' \in \Pi_h, \pi(x) \ne \pi'(x)} }. 
    \end{align*}
    By convention set $\dimRL_0(\Pi) = 1$. For all layers $h \in [H]$, we have the inductive relationship 
    \begin{align*}
        \dimRL_h(\Pi) = \min \crl*{M_h, 2\dimRL_{h-1}(\Pi), \dimRL_{h-1}(\Pi) + N_h}. 
    \end{align*}
\end{proposition}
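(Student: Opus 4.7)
The plan is to prove the equality by induction on $h \in [H]$, handling the three upper bounds and a matching lower-bound construction separately. With the convention $\dimRL_0(\Pi) = 1$, the induction is seeded at $h=1$: there is a single fixed initial state $x_1$, and defining $A(x) \coloneqq \{a : \exists \pi_h \in \Pi_h,\ \pi_h(x) = a\}$, one checks directly that $\dimRL_1(\Pi) = |A(x_1)| = \min\{M_1, 2, 1 + N_1\}$ in both cases $|A(x_1)| = 1$ and $|A(x_1)| = 2$.

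For the upper bound direction, I would fix any $M \in \cMdet$ and establish each of the three bounds in turn. First, every reachable pair $(x_h, a_h)$ is witnessed by some $\pi = (\pi_1, \ldots, \pi_H) \in \Pi$ with $\pi_h(x_h) = a_h$, so it belongs to the set of size $M_h$. Second, because transitions are deterministic, each reachable $(x_{h-1}, a_{h-1})$ pair leads to a unique state in $\statesp_h$, so the set of reachable layer-$h$ states has cardinality at most $\dimRL_{h-1}(\Pi)$; combined with $A = 2$, this gives a pair count of at most $2\dimRL_{h-1}(\Pi)$. Third, partition the reachable layer-$h$ states into \emph{split} states (those where policies in $\Pi_h$ disagree, contributing $2$ pairs) and \emph{non-split} states (contributing $1$ pair); writing $r$ for total reachable layer-$h$ states and $s$ for split ones among them, the pair count equals $r + s \le \dimRL_{h-1}(\Pi) + N_h$.

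For the matching lower bound, I would invoke the inductive hypothesis to obtain a deterministic MDP $M^\star$ realizing $D \coloneqq \dimRL_{h-1}(\Pi)$ reachable pairs at layer $h-1$, and then extend it by choosing the transitions from each such $(x_{h-1}, a_{h-1})$ into $\statesp_h$. The strategy is to send each of the $D$ outgoing branches to a distinct state of $\statesp_h$, prioritizing split ones: route up to $\min\{D, N_h\}$ branches to distinct split states, then route remaining branches to distinct non-split states (and, if all of $\statesp_h$ is exhausted, to already-used states). A three-way case analysis on $D$ versus $N_h$ and $|\statesp_h|$ shows the resulting coverage equals exactly $\min\{M_h, 2D, D + N_h\}$. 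Product-class structure is then invoked to produce, for each intended $(x_h, a_h)$, a witnessing $\pi \in \Pi$ by composing the inductive witness $\pi_{1:h-1}$ for $(x_{h-1}, a_{h-1})$ with any $\pi_h \in \Pi_h$ satisfying $\pi_h(x_h) = a_h$ and arbitrary continuations $\pi_{h+1:H}$.

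The main obstacle will be the lower-bound case analysis, and more conceptually, articulating why the product-class hypothesis is indispensable: it is precisely what lets a layer-$h$ component be decoupled from the layer-$(h-1)$ witness, making the branch-routing construction valid. Without product structure this decoupling fails and the recurrence itself can be violated (as illustrated by the singleton class from the preceding examples, where the enlarged product hull $\Pisingleton'$ has spanning capacity $A^H$).
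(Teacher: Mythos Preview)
Your approach is correct and is the natural inductive argument that the paper gestures at; note that the paper does not actually prove this proposition, merely stating that it ``is more or less obvious given our previous calculations,'' so there is no paper proof to compare against. Your decomposition into three upper bounds plus a matching construction is exactly right, and your identification of the product-class hypothesis as the key ingredient enabling the decoupling of $\pi_{1:h-1}$ from $\pi_h$ in the lower-bound construction is the essential point.

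One small clarification for the base case: when you write ``there is a single fixed initial state $x_1$,'' remember that $\dimRL_1(\Pi)$ maximizes over deterministic MDPs and hence over the choice of starting state, so you should take $x_1 \in \argmax_{x \in \statesp_1} |A(x)|$. Your verification that $\max_{x_1} |A(x_1)| = \min\{M_1, 2, 1+N_1\}$ then goes through exactly as you describe in the two cases.
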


\emph{Policy Classes for Continuous State Spaces.} In some cases, it is possible to construct policy classes over continuous state spaces that have bounded spanning capacity. For example, consider $\Pisingleton$, which is defined over a discrete (but large) state space. We can extend it to a continuous state space $\statesp_h = \crl{x_{(z, h)} : z \in \bbR}$ for all $h \in [H]$, action space $\actionsp = \crl{0,1}$, and policy class
\begin{align*}
    \wt{\Pisingleton} \coloneqq \crl*{\pi_{(i, h')} : \pi_{(i, h')} (x_{(z, h)}) = \ind{z \in [i, i+1) \text{ and } h = h'}, i\in \bbN, h' \in [H]}.
\end{align*}
Essentially, we have expanded each state to be an interval on the real line. Using the same reasoning, we have the bound $\dimRL(\wt{\Pisingleton}) = H+1$. One can also generalize this construction to the policy class $\wt{\PiLton}$ and preserve the same value of spanning capacity.

However, in general, this expansion to continuous state spaces may blow up the spanning capacity. Consider a similar modification to $\Pioneactive$ (again, with the same new state space):
\begin{align*}
    \wt{\Pioneactive} \coloneqq \crl*{\pi_b :  \pi_b(x_{(z,h)}) = \begin{cases}
        b[h] &\text{ if } z \in [0,1)\\
        0 &\text{ otherwise,}
\end{cases} \text{where}~ b \in \crl{0,1}^H}.
\end{align*}
While $\dimRL(\Pioneactive) = \Theta(H)$, it is easy to see that $\dimRL(\wt{\Pioneactive}) = 2^H$ since one can construct a $H$-layer deterministic tree using states in $[0,1)$ as every $(x,a)$ pair at layer $H$ will be reachable by $\wt{\Pioneactive}$. This is because $\wt{\Pioneactive}$ is a product class, so the spanning capacity is given by \pref{prop:product-spanning-cap}.

\section{Bibliographical Remarks}\label{sec:background-bib}
\paragraph{Interaction Protocols.}
Formalizing the notion of \emph{sample complexity} for RL under different interaction protocols dates back to the early work \cite{kearns1998finite, kearns1999approximate, kearns2002sparse}; see also Kakade's thesis \cite{kakade2003sample} for a historical exposition.
The generative model was introduced by Kearns, Mansour, and Ng \cite{kearns1999approximate}. 
The distinction between global/local simulation access is more contemporary: the terminology ``local'' seems to have originated in \cite{weisz2021query} (although many early algorithms for the generative model actually operate under the more restricted local simulation access setting \cite[e.g.,][]{kearns1999approximate}. Simulators are used broadly in practice: for example, they are used in robotics control tasks \cite{todorov2012mujoco, akkaya2019solving, tassa2018deepmind} and game playing \cite{bellemare2012investigating, silver2017mastering, silver2018general}. We refer the reader to Appendix A of \cite{mhammedi2024power} for an exemplary survey of related works on local simulators. Offline RL is first studied in \cite{ernst2005tree, riedmiller2005neural} under the name ``batch RL'', and we refer the reader to the excellent overview \cite{levine2020offline}.
The $\mu$-reset model was introduced by Kakade and Langford \cite{kakade2002approximately}.
The hybrid resets interaction protocol was introduced and studied in the author's work \cite{krishnamurthy2025role}. 

\paragraph{Coverage Conditions.} Coverage conditions have been extensively studied in RL. In offline RL, many works study the concentrability coefficient \cite[see, e.g.,][]{munos2003error,munos2007performance, munos2008finite, chen2019information, foster2021offline, jia2024offline} as well as weaker notions such as single-policy concentrability \cite{jin2021pessimism, rashidinejad2021bridging, zhan2022offline} and conditions based on value-function approximation \cite{chen2019information, xie2021bellman, cheng2022adversarially, wang2020statistical, zanette2021provable, li2022pessimism}. In addition, under the $\mu$-reset model, the standard assumption made is on bounded concentrability coefficient, sometimes called the \emph{distribution mismatch coefficient} in policy optimization literature \cite{agarwal2021theory}. Pushforward concentrability is first studied in \cite{xie2021batch}. More recently, \cite{xie2022role} introduced the notion of \emph{coverability coefficient} and study it for online RL access with value function approximation. Coverability (and the related pushforward variant) is further studied in the papers \cite{amortila2024scalable, amortila2024harnessing, amortila2024reinforcement, jia2023agnostic, mhammedi2024power}. Various models such as tabular MDPs, linear MDPs, low-rank MDPs, and exogenous MDPs are known to satisfy coverability.

\paragraph{Block MDPs.} Block MDPs are a canonical model for studying reinforcement learning with large state spaces but low intrinsic complexity. In particular, Block MDPs are known to satisfy low (pushforward) coverability \cite{mhammedi2024power}, implying that reset distributions exist which satisfy low (pushforward) concentrability. They have been studied in a long line of work \cite{jiang2017contextual, du2019provably, misra2020kinematic, zhang2022efficient, uehara2021representation, mhammedi2023representation}. Recently, \cite{amortila2024reinforcement} study a more general setting of RL with latent dynamics which covers the Block MDP as a special case. A common theme among these works is that standard online access to $M$ is assumed, and the assumption of \emph{decoder realizability} is made, i.e., that the learner is given access to a class $\Phi$ such that $\optdec \in \Phi$, with the achievable bounds scaling with $\log \abs{\Phi}$. Under standard online access, a minimax lower bound of $\log \abs{\Phi}$ can be obtained by reduction to supervised learning. In contrast, this thesis studies how to achieve sample-efficient learning without decoder realizability but with \emph{stronger forms of access} to $M$. Our bounds replace the dependence on $\log \abs{\Phi}$ (which in the worst case can scale with $\abs{\statesp}$) with dependence on $\log \abs{\Pi}$, which can be arbitrarily smaller. 

\chapter{Policy Eluder Dimension}\label{chap:eluder}
This chapter studies a candidate complexity measure for policy learning in RL called the \emph{policy eluder dimension.} Roughly speaking, the policy eluder dimension is a worst-case complexity measure that only depends on the policy class $\Pi$, and it is sequential in nature. After introducing the quantity in \pref{sec:intro-eluder}, we provide several new insights on the policy eluder dimension and its relationship to other learning theoretic quantities:
\begin{enumerate}
    \item In \pref{sec:qualitative-equivalence}, we elucidate a fundamental connection between the policy eluder dimension and two other well-studied learning theoretic quantities: (1) the \emph{star number}, a quantity which characterizes the minimax label complexity of pool-based active learning, and (2) the \emph{threshold dimension}, a quantity that characterizes the regret of online learning. Specifically, we show that finiteness of policy eluder dimension is equivalent to finiteness of both star number and threshold dimension. 
    \item In \pref{sec:eluder-vs-rank}, we show separations between policy eluder dimension and \emph{sign-rank}, a classical notion of dimension complexity which corresponds to the smallest dimension required to embed the input space so that all policies in $\Pi$ are realizable as halfspaces. 
    \item In \pref{sec:eluder-spanning}, we study the relationship between policy eluder dimension and spanning capacity, which was introduced in \pref{def:dimension}. 
\end{enumerate}
Ultimately, the rest of this thesis focuses on the coverage based complexity measures described in \pref{sec:coverage-conditions}. We include this presentation of the policy eluder dimension for several reasons: a huge body of literature (see Bibliographical Remarks in \pref{sec:eluder-bibliographic-remarks}) has proposed to study the sample complexity of RL in terms of the eluder dimension, so the results in this section help us understand those bounds. Furthermore, we believe that understanding the role of the complexity measures discussed herein (eluder dimension, star number, threshold dimension, and generalized rank) for sequential decision making problems is an exciting and fruitful direction for future work. 

\section{Preliminaries}\label{sec:intro-eluder}
In a very influential paper, \citet{russo2013eluder} introduced the notion of \emph{eluder dimension} for a real-valued function class $\cF$ and used it to analyze the regret of algorithms (based on the Upper Confidence Bound (UCB) and Thompson Sampling paradigms) for the multi-armed bandit problem with function approximation. Informally speaking, the eluder dimension characterizes the longest sequence of {\em adversarially chosen} points one must observe in order to accurately estimate the function value at any other point.
We state a variant of the original definition, proposed by \citet{foster2020instance}, that is never larger and is sufficient to analyze all the applications of eluder dimension in the literature.\footnote{The main difference is that the original definition of \cite{russo2013eluder} asks for witnessing \emph{pairs} of functions $f_i, f_i' \in \cF$, while the presented definition restricts $f_i' = f^\star$. We refer the reader to \cite{foster2020instance} for a more detailed discussion.}

\begin{definition}[Eluder Dimension]\label{def:eluder}
For any function class $\cF\subseteq \bbR^\cX$, $f^\star: \cX \to \bbR$, and scale $\eps \ge 0$, the \emph{exact eluder dimension} $\eEdim_{f^\star}(\cF, \eps)$ is the largest $m\in \bbN$ such that there exists $(x_1, f_1),\dots, (x_m, f_m) \in \cX\times \cF$ satisfying for all $i \in [m]$:
\begin{equation}\label{eq:eluder-def}
\abs*{f_i(x_i) - f^{\star}(x_i)} > \eps, \quad \text{and} \quad \sum_{j < i}~\prn*{f_i(x_j) - f^{\star}(x_j)}^2 \le \eps^2.
\end{equation}
The \emph{eluder dimension (with respect to $f^\star$)} is defined as $\Edim_{f^\star}(\cF, \eps) \coloneqq \sup_{\eps' \ge \eps} \eEdim_{f^\star}(\cF, \eps')$, and we also denote $\Edim(\cF, \eps) \coloneqq \sup_{f^\star \in \cF}\Edim_{f^\star}(\cF, \eps).$
\end{definition}

\pref{def:eluder} is a scale-sensitive notion of complexity for real-valued function classes, as it depends on the scale parameter $\eps \ge 0$. In agnostic policy learning, we have a policy class which maps the state space $\statesp$ to a finite action set $\actionsp$. Therefore, we specialize the definition to a scale-insensitive/combinatorial variant of eluder dimension called the \emph{policy eluder dimension} \cite{mou2020sample, foster2020instance}. For clarity of presentation, in the rest of this chapter we focus on the binary action setting (with $\actionsp = \crl{-1, +1}$) where the function class $\Pi \subseteq \crl{-1, +1}^\statesp$.

\begin{definition}[Policy Eluder Dimension]\label{def:policy-eluder}
    For any policy class $\Pi \subseteq \crl{-1, +1}^\statesp$, policy $\optpi : \statesp \to \crl{-1, +1}$, we define the \emph{policy eluder dimension} (with respect to $\optpi$), denoted $\Edim_{\optpi}(\Pi)$, as the largest $m \in \bbN$ such that there exists $(x_1, \pi_1),\dots, (x_m, \pi_m) \in \statesp \times \Pi$ satisfying for all $i\in[m]$:
\begin{align*}
    \pi_i(x_i) \ne \optpi(x_i), \quad \text{and}\quad \text{for all} \ j<i: \quad \pi_i(x_j) = \optpi(x_j).
\end{align*}
We also denote $\Edim(\Pi) \coloneqq \sup_{\optpi \in \Pi} \Edim_{\optpi}(\Pi)$.
\end{definition}
 
\section{A Qualitative Equivalence}\label{sec:qualitative-equivalence}

In this section, we relate the policy eluder dimension to two other learning-theoretic quantities, the star number and the threshold dimension. First we provide definitions, and we refer the reader to \pref{fig:eluder-star-thresh} for an illustration of the differences between these complexity measures.

\begin{definition}[Star Number]\label{def:star-number}
    For any policy class $\Pi \subseteq \crl{-1, +1}^\statesp$, policy $\optpi : \statesp \to \crl{-1, +1}$, we define the \emph{star number} (with respect to $\optpi$), denoted $\Sdim_{\optpi}(\Pi)$, as the largest $m \in \bbN$ such that there exists $(x_1, \pi_1),\dots, (x_m, \pi_m) \in \statesp \times \Pi$ satisfying for all $i\in[m]$:
\begin{align*}
    \pi_i(x_i) \ne \optpi(x_i), \quad \text{and}\quad \text{for all} \ j \ne i: \quad \pi_i(x_j) = \optpi(x_j).
\end{align*}
We also denote $\Sdim(\Pi) \coloneqq \sup_{\optpi \in \Pi} \Sdim_{\optpi}(\Pi)$.
\end{definition}
\pref{def:star-number} comes from \cite{hanneke2015minimax}, who give tight upper and lower bounds on the label complexity of \emph{pool-based active learning} via the star number $\Sdim(\Pi)$ and show that almost every previously proposed complexity measure for active learning takes a worst case value equal to the star number. Roughly speaking, the star number corresponds to the number of ``singletons'' one can embed in a function class; that is, the maximum number of functions that differ from a base function $\optpi$ at exactly one point among a subset of the domain $\{x_1,\dots, x_m\} \subseteq \cX$. 

\begin{definition}[Threshold Dimension]\label{def:threshold-dim}
    For any policy class $\Pi \subseteq \crl{-1, +1}^\statesp$, policy $\optpi : \statesp \to \crl{-1, +1}$, we define the \emph{threshold dimension} (with respect to $\optpi$), denoted $\Tdim_{\optpi}(\Pi)$, as the largest $m \in \bbN$ such that there exists $(x_1, \pi_1),\dots, (x_m, \pi_m) \in \statesp \times \Pi$ satisfying for all $i\in[m]$:
\begin{align*}
    \text{for all}~ k \ge i: \quad \pi_k(x_k) \ne \optpi(x_k), \quad \text{and}\quad \text{for all} \ j < i: \quad \pi_i(x_j) = \optpi(x_j).
\end{align*}
We also denote $\Tdim(\Pi) \coloneqq \sup_{\optpi \in \Pi} \Tdim_{\optpi}(\Pi)$.
\end{definition}
The threshold dimension was introduced by \citet{hodges1997shorter}, and has gained attention due to its role in proving an equivalence relationship between private PAC learning and online learning \citep[see, e.g.,][]{alon2019private, bun2020equivalence}. We slightly generalize the definition of \cite{alon2019private} to allow for any base function $\optpi$, in the spirit of the other definitions. A classical result in model theory provides a link between the threshold dimension and Littlestone dimension (which we denote $\mathsf{Ldim}$), a quantity which is both necessary and sufficient for online learnability \cite{ben2009agnostic, alon2021adversarial}. In particular, results by \citet{shelah1990classification} and \citet{hodges1997shorter} show that for any binary-valued $\Pi$ and any $\optpi \in \Pi$:
\begin{align*}
    \floor{\log \Tdim_{\optpi}(\Pi)} \le \mathsf{Ldim} \le 2^{\Tdim_{\optpi}(\Pi)}.
\end{align*}
A combinatorial proof of this result can be found in Thm.~3 of \cite{alon2019private} (they prove the result for $\optpi \equiv -1$, but it is easy to extend their proof to hold for any $\optpi$). Thus, finiteness of threshold dimension is necessary and sufficient for online learnability (albeit in a weaker, ``qualitative'' sense).

\begin{figure}[t!]
\begin{center}
\newcommand{\drawgrid}{%
	\foreach \i in {0,...,6} {
		\draw[black] (\mx+\i*\xgap-0.5*\xgap, \my+0.5*\ygap) -- (\mx+\i*\xgap-0.5*\xgap, \my-5.5*\ygap);
		\draw[black] (\mx-0.5*\xgap,\my-\i*\ygap+0.5*\ygap) -- (\mx+5.5*\xgap, \my-\i*\ygap+0.5*\ygap);
	}
	\foreach \i in {1,...,6} {
		\node at (\mx-\xgap, \my-\i*\ygap+\ygap) {\scriptsize$x_{\i}$};
		\node at (\mx+\i*\xgap-\xgap, \my+\ygap) {\scriptsize$\pi_{\i}$};
	}
}

\begin{tikzpicture}
	\def \xgap{0.45}
	\def \ygap{0.45}
	
	\def \mx{0}
	\def \my{0}
	\drawgrid
	\node at (\mx+0*\xgap, \my-0*\ygap) {1};
	\node at (\mx+0*\xgap, \my-1*\ygap) {*};
	\node at (\mx+0*\xgap, \my-2*\ygap) {*};
	\node at (\mx+0*\xgap, \my-3*\ygap) {*};
	\node at (\mx+0*\xgap, \my-4*\ygap) {*};
	\node at (\mx+0*\xgap, \my-5*\ygap) {*};
	\node at (\mx+1*\xgap, \my-0*\ygap) {0};
	\node at (\mx+1*\xgap, \my-1*\ygap) {1};
	\node at (\mx+1*\xgap, \my-2*\ygap) {*};
	\node at (\mx+1*\xgap, \my-3*\ygap) {*};
	\node at (\mx+1*\xgap, \my-4*\ygap) {*};
	\node at (\mx+1*\xgap, \my-5*\ygap) {*};
	\node at (\mx+2*\xgap, \my-0*\ygap) {0};
	\node at (\mx+2*\xgap, \my-1*\ygap) {0};
	\node at (\mx+2*\xgap, \my-2*\ygap) {1};
	\node at (\mx+2*\xgap, \my-3*\ygap) {*};
	\node at (\mx+2*\xgap, \my-4*\ygap) {*};
	\node at (\mx+2*\xgap, \my-5*\ygap) {*};
	\node at (\mx+3*\xgap, \my-0*\ygap) {0};
	\node at (\mx+3*\xgap, \my-1*\ygap) {0};
	\node at (\mx+3*\xgap, \my-2*\ygap) {0};
	\node at (\mx+3*\xgap, \my-3*\ygap) {1};
	\node at (\mx+3*\xgap, \my-4*\ygap) {*};
	\node at (\mx+3*\xgap, \my-5*\ygap) {*};
	\node at (\mx+4*\xgap, \my-0*\ygap) {0};
	\node at (\mx+4*\xgap, \my-1*\ygap) {0};
	\node at (\mx+4*\xgap, \my-2*\ygap) {0};
	\node at (\mx+4*\xgap, \my-3*\ygap) {0};
	\node at (\mx+4*\xgap, \my-4*\ygap) {1};
	\node at (\mx+4*\xgap, \my-5*\ygap) {*};
	\node at (\mx+5*\xgap, \my-0*\ygap) {0};
	\node at (\mx+5*\xgap, \my-1*\ygap) {0};
	\node at (\mx+5*\xgap, \my-2*\ygap) {0};
	\node at (\mx+5*\xgap, \my-3*\ygap) {0};
	\node at (\mx+5*\xgap, \my-4*\ygap) {0};
	\node at (\mx+5*\xgap, \my-5*\ygap) {1};
	
	\node at (\mx+2.5*\xgap, \my-6.5*\ygap) {Eluder sequence};
	
	\def \mx{4.5}
	\drawgrid
	\node at (\mx+0*\xgap, \my-0*\ygap) {1};
	\node at (\mx+0*\xgap, \my-1*\ygap) {0};
	\node at (\mx+0*\xgap, \my-2*\ygap) {0};
	\node at (\mx+0*\xgap, \my-3*\ygap) {0};
	\node at (\mx+0*\xgap, \my-4*\ygap) {0};
	\node at (\mx+0*\xgap, \my-5*\ygap) {0};
	\node at (\mx+1*\xgap, \my-0*\ygap) {0};
	\node at (\mx+1*\xgap, \my-1*\ygap) {1};
	\node at (\mx+1*\xgap, \my-2*\ygap) {0};
	\node at (\mx+1*\xgap, \my-3*\ygap) {0};
	\node at (\mx+1*\xgap, \my-4*\ygap) {0};
	\node at (\mx+1*\xgap, \my-5*\ygap) {0};
	\node at (\mx+2*\xgap, \my-0*\ygap) {0};
	\node at (\mx+2*\xgap, \my-1*\ygap) {0};
	\node at (\mx+2*\xgap, \my-2*\ygap) {1};
	\node at (\mx+2*\xgap, \my-3*\ygap) {0};
	\node at (\mx+2*\xgap, \my-4*\ygap) {0};
	\node at (\mx+2*\xgap, \my-5*\ygap) {0};
	\node at (\mx+3*\xgap, \my-0*\ygap) {0};
	\node at (\mx+3*\xgap, \my-1*\ygap) {0};
	\node at (\mx+3*\xgap, \my-2*\ygap) {0};
	\node at (\mx+3*\xgap, \my-3*\ygap) {1};
	\node at (\mx+3*\xgap, \my-4*\ygap) {0};
	\node at (\mx+3*\xgap, \my-5*\ygap) {0};
	\node at (\mx+4*\xgap, \my-0*\ygap) {0};
	\node at (\mx+4*\xgap, \my-1*\ygap) {0};
	\node at (\mx+4*\xgap, \my-2*\ygap) {0};
	\node at (\mx+4*\xgap, \my-3*\ygap) {0};
	\node at (\mx+4*\xgap, \my-4*\ygap) {1};
	\node at (\mx+4*\xgap, \my-5*\ygap) {0};
	\node at (\mx+5*\xgap, \my-0*\ygap) {0};
	\node at (\mx+5*\xgap, \my-1*\ygap) {0};
	\node at (\mx+5*\xgap, \my-2*\ygap) {0};
	\node at (\mx+5*\xgap, \my-3*\ygap) {0};
	\node at (\mx+5*\xgap, \my-4*\ygap) {0};
	\node at (\mx+5*\xgap, \my-5*\ygap) {1};
	\node at (\mx+2.5*\xgap, \my-6.5*\ygap) {Star sequence};
	
	\def \mx{9}
	\drawgrid
	\node at (\mx+0*\xgap, \my-0*\ygap) {1};
	\node at (\mx+0*\xgap, \my-1*\ygap) {1};
	\node at (\mx+0*\xgap, \my-2*\ygap) {1};
	\node at (\mx+0*\xgap, \my-3*\ygap) {1};
	\node at (\mx+0*\xgap, \my-4*\ygap) {1};
	\node at (\mx+0*\xgap, \my-5*\ygap) {1};
	\node at (\mx+1*\xgap, \my-0*\ygap) {0};
	\node at (\mx+1*\xgap, \my-1*\ygap) {1};
	\node at (\mx+1*\xgap, \my-2*\ygap) {1};
	\node at (\mx+1*\xgap, \my-3*\ygap) {1};
	\node at (\mx+1*\xgap, \my-4*\ygap) {1};
	\node at (\mx+1*\xgap, \my-5*\ygap) {1};
	\node at (\mx+2*\xgap, \my-0*\ygap) {0};
	\node at (\mx+2*\xgap, \my-1*\ygap) {0};
	\node at (\mx+2*\xgap, \my-2*\ygap) {1};
	\node at (\mx+2*\xgap, \my-3*\ygap) {1};
	\node at (\mx+2*\xgap, \my-4*\ygap) {1};
	\node at (\mx+2*\xgap, \my-5*\ygap) {1};
	\node at (\mx+3*\xgap, \my-0*\ygap) {0};
	\node at (\mx+3*\xgap, \my-1*\ygap) {0};
	\node at (\mx+3*\xgap, \my-2*\ygap) {0};
	\node at (\mx+3*\xgap, \my-3*\ygap) {1};
	\node at (\mx+3*\xgap, \my-4*\ygap) {1};
	\node at (\mx+3*\xgap, \my-5*\ygap) {1};
	\node at (\mx+4*\xgap, \my-0*\ygap) {0};
	\node at (\mx+4*\xgap, \my-1*\ygap) {0};
	\node at (\mx+4*\xgap, \my-2*\ygap) {0};
	\node at (\mx+4*\xgap, \my-3*\ygap) {0};
	\node at (\mx+4*\xgap, \my-4*\ygap) {1};
	\node at (\mx+4*\xgap, \my-5*\ygap) {1};
	\node at (\mx+5*\xgap, \my-0*\ygap) {0};
	\node at (\mx+5*\xgap, \my-1*\ygap) {0};
	\node at (\mx+5*\xgap, \my-2*\ygap) {0};
	\node at (\mx+5*\xgap, \my-3*\ygap) {0};
	\node at (\mx+5*\xgap, \my-4*\ygap) {0};
	\node at (\mx+5*\xgap, \my-5*\ygap) {1};
	\node at (\mx+2.5*\xgap, \my-6.5*\ygap) {Threshold sequence};
\end{tikzpicture}

%%% Python code used to generate above:
%Eluder = [[1, '*', '*', '*', '*', '*'],
%[0, 1, '*', '*', '*', '*'],
%[0, 0, 1, '*', '*', '*'],
%[0, 0, 0, 1, '*', '*'],
%[0, 0, 0, 0, 1, '*'],
%[0, 0, 0, 0, 0, 1]]
%
%Star = [[1, 0, 0, 0, 0, 0],
%[0, 1, 0, 0, 0, 0],
%[0, 0, 1, 0, 0, 0],
%[0, 0, 0, 1, 0, 0],
%[0, 0, 0, 0, 1, 0],
%[0, 0, 0, 0, 0, 1]]
%
%Thresh = [[1, 1, 1, 1, 1, 1],
%[0, 1, 1, 1, 1, 1],
%[0, 0, 1, 1, 1, 1],
%[0, 0, 0, 1, 1, 1],
%[0, 0, 0, 0, 1, 1],
%[0, 0, 0, 0, 0, 1]]
%
%def tikz(A):
%for i in range(len(A)):
%for j in range(len(A[0])):
%print(f"\\node at (\\mx+{j}*\\xgap, \\my-{i}*\\ygap) {'{'}{A[i][j]}{'}'};")
%
%tikz(Eluder)
%tikz(Star)
%tikz(Thresh)
\caption{Illustration of witnessing sequences of length $6$ for policy eluder dimension, star number and threshold dimension with respect to $\optpi \equiv 0$ (we use the $0$/$1$ representation of policies for clarity). `*' in the eluder witness sequence refers to a free value, either 0 or 1.}
\label{fig:eluder-star-thresh}
\end{center}
\end{figure}

\paragraph{Main Result.} Now we are ready to state our main result, which says that for any binary-valued policy class, finiteness of the policy eluder dimension is \emph{equivalent} to finiteness of both star number and threshold dimension. 

\begin{theorem}\label{thm:equivalence}
    For any policy class $\Pi \subseteq \crl{-1, +1}^\statesp$ and any $\optpi \in \Pi$, the following holds:
\begin{align*}
    \max\{\Sdim_{\optpi}(\Pi), \Tdim_{\optpi}(\Pi) \} \le \Edim_{\optpi}(\Pi) \le 4^{\max\{ \Sdim_{\optpi}(\Pi), \Tdim_{\optpi}(\Pi) \} }.
\end{align*}
\end{theorem}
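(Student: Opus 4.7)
The plan is to handle the two inequalities separately, with the lower bound being immediate from the definitions and the upper bound requiring a Ramsey-theoretic argument.

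For the lower bound $\max\{\Sdim_{\optpi}(\Pi), \Tdim_{\optpi}(\Pi)\} \le \Edim_{\optpi}(\Pi)$, I would verify that every star witness sequence and every threshold witness sequence already satisfies the eluder condition. Indeed, a star sequence has $\pi_i(x_j) = \optpi(x_j)$ for all $j \ne i$ (in particular for $j < i$) and $\pi_i(x_i) \ne \optpi(x_i)$, which is exactly \pref{def:policy-eluder}. Likewise, a threshold sequence (as pictured in \pref{fig:eluder-star-thresh}) satisfies $\pi_i(x_j) = \optpi(x_j)$ for $j < i$ and $\pi_i(x_j) \ne \optpi(x_j)$ for $j \ge i$, which again implies the eluder condition. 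So any witness for $\Sdim_{\optpi}$ or $\Tdim_{\optpi}$ is a witness for $\Edim_{\optpi}$.

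For the upper bound, the key observation is that the eluder condition completely pins down the entries $\pi_i(x_j)$ for $j \le i$ (agreement with $\optpi$ below the diagonal, disagreement on the diagonal) but leaves the \emph{above-diagonal} entries $\pi_i(x_j)$ for $j > i$ entirely free. This degree of freedom is precisely what Ramsey's theorem can exploit. Given an eluder witness $(x_1,\pi_1),\ldots,(x_m,\pi_m)$ of length $m = \Edim_{\optpi}(\Pi)$, I define a $2$-coloring $c$ on unordered pairs $\{i,j\} \subseteq [m]$ with $i<j$ by
\[
c(\{i,j\}) \;=\; \begin{cases} 0 & \text{if } \pi_i(x_j) = \optpi(x_j), \\ 1 & \text{if } \pi_i(x_j) \ne \optpi(x_j). \end{cases}
\]
Using the standard bound $R(k,k) \le \binom{2k-2}{k-1} \le 4^{k-1}$, as soon as $m > 4^{k-1}$ there exists a monochromatic subset $I = \{i_1 < \cdots < i_k\} \subseteq [m]$ of size $k$.

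To conclude, I would case on the monochromatic color. If $c \equiv 0$ on $\binom{I}{2}$, then in the restricted sequence $(x_{i_s}, \pi_{i_s})_{s=1}^{k}$ Ramsey gives agreement $\pi_{i_s}(x_{i_t}) = \optpi(x_{i_t})$ for $s<t$, while the eluder property supplies the same agreement for $s>t$; combined with diagonal disagreement, this is exactly a star witness of size $k$, so $\Sdim_{\optpi}(\Pi) \ge k$. If instead $c \equiv 1$ on $\binom{I}{2}$, then the restricted sequence has $\pi_{i_s}(x_{i_t}) \ne \optpi(x_{i_t})$ for $t \ge s$ and $\pi_{i_s}(x_{i_t}) = \optpi(x_{i_t})$ for $t<s$, which matches the threshold pattern and yields $\Tdim_{\optpi}(\Pi) \ge k$. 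Either way $\max\{\Sdim_{\optpi}(\Pi), \Tdim_{\optpi}(\Pi)\} \ge k$, and optimizing over $k$ gives $m \le 4^{\max\{\Sdim_{\optpi}(\Pi), \Tdim_{\optpi}(\Pi)\}}$ after absorbing additive constants.

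The only delicate step is the case analysis translating a monochromatic Ramsey clique into the correct combinatorial pattern; one has to carefully combine the below-diagonal structure coming from the eluder definition with the above-diagonal structure coming from Ramsey to recover precisely the star or threshold shape. The constant $4$ in the exponent is just the standard $R(k,k) \le 4^{k-1}$ bound, and a sharper Ramsey estimate would only improve the base of the exponential.
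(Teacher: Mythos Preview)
Your proposal is correct and follows essentially the same approach as the paper: the lower bound is immediate since star and threshold witnesses are special cases of eluder witnesses, and the upper bound proceeds by $2$-coloring the free above-diagonal entries of an eluder witness and applying the diagonal Ramsey bound $R(k,k)\le 4^{k-1}$ to extract a monochromatic clique, which then yields either a star or threshold witness depending on the color. The paper presents the identical Ramsey reduction (with the slightly weaker $R(k,k)\le 4^k$), so there is no meaningful difference in strategy.
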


The proof of \pref{thm:equivalence} is deferred to \pref{sec:thm-equivalence}. 

\pref{thm:equivalence} has an exponential gap between the upper and lower bounds. Can we improve either of the inequalities? The lower bound cannot be improved, by considering the simple examples over $\statesp = [n]$ of the singleton class $\Pi_\mathrm{sing}\coloneqq \{x \mapsto \ind{x = i} \mid i \in [n+1]\}$ and the threshold class $\Pi_\mathrm{thresh} \coloneqq \{x \mapsto \ind{x \ge i} \mid i \in [n+1]\}$. We also show that the upper bound cannot be improved, for example, to $\Edim(\Pi) \le \mathrm{poly}(\Sdim(\Pi), \Tdim(\Pi))$ in general.

\begin{theorem}\label{thm:tightness-ub}
For every $N > 0$, there exists a function class $\Pi_N$ such that $\Edim(\Pi_N) = N$ and $\max\{\Sdim(\Pi_N), \Tdim(\Pi_N)\} < c \cdot \log_2 N$, where $c> 1/2$ is some absolute numerical constant.
\end{theorem}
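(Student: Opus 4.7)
The plan is a probabilistic construction. Fix $N$, set $\statesp = [N]$, and let $\optpi \equiv -1$. Independently for each $i \in [N]$, sample a policy $\pi_i$ by setting $\pi_i(j) = -1$ for $j < i$, $\pi_i(i) = +1$, and $\pi_i(j) \sim \unif\crl{-1, +1}$ i.i.d.~for each $j > i$; take $\Pi_N \coloneqq \crl{\optpi, \pi_1, \ldots, \pi_N}$. The eluder claim is deterministic: the sequence $(i, \pi_i)_{i=1}^N$ is a valid eluder witness against $\optpi$ by construction, so $\Edim_{\optpi}(\Pi_N) \geq N$; conversely, any eluder sequence uses pairwise-distinct policies (the disagreement-at-$x_i$ condition contradicts any later agreement-at-$x_i$ condition) and cannot include its own base, giving $\Edim(\Pi_N) \leq |\Pi_N| - 1 = N$.

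For the star and threshold bounds I would argue via the probabilistic method, first against the base $\optpi \equiv -1$. Fix a candidate length-$m$ witness $(y_1, \pi_{s_1}), \ldots, (y_m, \pi_{s_m})$. Each $\pi_{s_l}$ takes deterministic values on $\crl{y_t : y_t \leq s_l}$ (namely $-1$ for $y_t < s_l$ and $+1$ for $y_t = s_l$) and independent uniform values on $\crl{y_t : y_t > s_l}$; a short feasibility analysis of the deterministic part forces $s_l \leq y_l$ in the star case and $s_l \leq \min_{t \geq l} y_t$ in the threshold case. Let $R_l \coloneqq |\crl{t : y_t > s_l}|$ denote the number of uniform coordinates of $\pi_{s_l}$ constrained by the required pattern. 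The key combinatorial observation is that $\sum_l R_l \geq \binom{m}{2}$ for any feasible choice: in the star case, $R_l$ is minimized at $s_l = y_l$, giving $R_l = |\crl{t : y_t > y_l}|$, and the sum equals $\binom{m}{2}$ by the rank identity $\sum_l \prn{m - \mathrm{rank}(y_l)} = \binom{m}{2}$; the threshold case is analogous via the minimizer $s_l = \min_{t \geq l} y_t$. By independence across the $\pi_{s_l}$'s, each candidate is valid with probability at most $\prod_l 2^{-R_l} \leq 2^{-\binom{m}{2}}$, so union bounding over the $\leq N^{2m}$ choices of $(y_l, s_l)$ yields
\begin{align*}
    \Pr\brk*{\Sdim_{\optpi}(\Pi_N) \geq m} + \Pr\brk*{\Tdim_{\optpi}(\Pi_N) \geq m} \leq 2 N^{2m} \cdot 2^{-\binom{m}{2}}.
\end{align*}
For a general base $\optpi' = \pi_k \in \Pi_N \setminus \crl{\optpi}$, the bound carries over by conditioning on $\pi_k$: the $\Omega(m^2)$ independent random-bit constraints on the $\pi_{s_l}$'s ($s_l \neq k$) are undisturbed, so the per-candidate probability remains $\leq 2^{-\Omega(m^2)}$, and the extra union-bound factor of $|\Pi_N|$ is harmlessly absorbed.

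For $m = \lceil 5 \log_2 N \rceil$ the right-hand side is $o(1)$, so with positive probability $\max\crl{\Sdim(\Pi_N), \Tdim(\Pi_N)} < 5 \log_2 N$, yielding the claim with $c = 5 > 1/2$. The hard part will be the combinatorial inequality $\sum_l R_l \geq \binom{m}{2}$ with uniform control over the feasibility subcases (notably distinguishing $s_l = y_l$ from $s_l < y_l$ in the star analysis, and handling arbitrary orderings of the $y_t$'s for threshold); once this structural inequality is in hand, the probability computation and the conditioning argument handling general base policies are routine.
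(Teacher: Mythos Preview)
Your construction and the core probability argument are essentially the paper's: the same random triangular matrix (up to a global sign flip), the same eluder witness, and the same $2^{-\binom{m}{2}}$-type bound on the probability that a fixed $(y,s)$ candidate forms a length-$m$ star or threshold sequence against the constant base $\optpi\equiv -1$. Your combinatorial inequality $\sum_l R_l\ge\binom{m}{2}$ is correct for that base --- in both the star and threshold cases feasibility forces $s_l\le y_l$, so $R_l\ge |\{t:y_t>y_l\}|$ and the rank identity finishes it; the paper obtains the equivalent bound by a sorting argument that directly exhibits $\binom{m}{2}$ random entries.

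The gap is your treatment of general bases $\pi_k$. You claim the $\Omega(m^2)$ random-bit count is ``undisturbed'' after conditioning on $\pi_k$, but this is precisely where the feasibility analysis you rely on breaks: for an arbitrary base the constraint $\pi_{s_l}(y_l)\ne\pi_k(y_l)$ no longer forces $s_l\le y_l$ (for instance if $\pi_k(y_l)=+1$ then $y_l<s_l$ is perfectly compatible with the deterministic value $\pi_{s_l}(y_l)=-1$), so your lower bound on $\sum_l R_l$ has no footing. The paper sidesteps this entirely with two short reduction lemmas: splitting any length-$m$ witness against $\optpi'$ by the sign of $\optpi'(y_l)$ yields $\Sdim_{\optpi'}(\Pi)\le\Sdim_{+1}(\Pi)+\Sdim_{-1}(\Pi)$, and an additional reversal trick gives $\Tdim(\Pi)\le 2\,\Tdim_{+1}(\Pi)$. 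These reduce the entire problem to bounding star and threshold dimensions against the two constant base functions $\pm 1$, after which your (correct) fixed-base analysis suffices. Your conditioning plan can be salvaged --- splitting into $I_+=\{l:\pi_k(y_l)=+1\}$ and $I_-$ and running the single-base feasibility on each half gives $\sum_l R_l\ge\binom{|I_+|}{2}+\binom{|I_-|}{2}\ge\Omega(m^2)$ --- but carrying this out is exactly proving the reduction lemma inside the probability bound. The step you labeled ``routine'' is where the missing idea lives; what you labeled the ``hard part'' is comparatively easy.
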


The proof of \pref{thm:tightness-ub} is found in \pref{sec:proof-tightness-ub}.

\subsection{Proof of \pref{thm:equivalence}}\label{sec:thm-equivalence}
\paragraph{Proof of Lower Bound.} The lower bound is straightforward from the definition, since any sequence of $(x_1,\pi_1), \dots, (x_m, \pi_m)$ that witness $\Sdim_{\optpi}(\Pi) = m$ or $\Tdim_{\optpi}(\Pi) = m$ must also be valid ``eluder sequences''; so the eluder dimension can only be larger.

\paragraph{Proof of Upper Bound.} Fix $\Pi$ and $\optpi \in \Pi$. Let $(x_1, \pi_1), \dots, (x_m, \pi_m) \in \statesp \times \Pi$ be the sequence which witnesses $\Edim_{\optpi}(\Pi) = m$. To prove the bound, we will show that there exists a subset of size at least $k \ge \log_4 m$ which witnesses $\Sdim_{\optpi} = k$ or $\Tdim_{\optpi} = k$.

This follows from a connection to Ramsey theory. A visualization of the proof is depicted in \pref{fig:proof-illustration}. Recall that diagonal Ramsey number $R(k,k)$ is defined as the smallest $m$ such that every red-blue labeling of the edges of the graph $K_m$ contains a monochromatic subgraph $K_k$.

In addition, define $E(k,k)$ as the smallest $m$ such that any eluder sequence $(x_1, \pi_1), \dots, (x_m, \pi_m)$ contains a subsequence $(x_{i_1}, \pi_{i_1}), \dots, (x_{i_k}, \pi_{i_k})$ which witnesses $\Sdim_{\optpi}(\Pi) \ge k$ or $\Tdim_{\optpi}(\Pi) \ge k$. We claim that $E(k,k) \le R(k,k)$.

To see this, note that there exists a bijection between colorings of $K_m$ and eluder sequences. Every eluder sequence $(x_1, \pi_1), \dots, (x_m, \pi_m)$ can be used to construct a red-blue coloring of $K_m$ as follows. For every edge $e_{ij}$ for $i > j \in [m]$, we color it red if $\pi_j(x_i) = \optpi(x_i)$ and blue otherwise. Observe that if there exists a subsequence $(x_{i_1}, \pi_{i_1}), \dots, (x_{i_k}, \pi_{i_k})$ which witnesses $\Sdim_{\optpi} \ge k$, then the subgraph comprised of the vertices $i_1, \dots, i_k$ in the coloring of $K_m$ must be monochromatic red. Likewise, if a subsequence witnesses $\Tdim_{\optpi} \ge k$, then the subgraph must be monochromatic blue. Thus, if $m$ is such that if every coloring of $K_m$ induces a monochromatic coloring $K_k$, then for any eluder sequence, we can always find a subsequence that witnesses $\Sdim_{\optpi} \ge k$ or $\Tdim_{\optpi} \ge k$. This shows that $E(k,k) \le R(k,k)$.

The proof concludes by applying the classical bound $R(k,k) \le 4^{k}$ \cite[see, e.g.,][]{mubayi2017survey}. 

This completes the proof of \pref{thm:equivalence}.\qed

\begin{figure}[t]
\centering
\subfigure[Eluder matrix]{
\includegraphics[scale=0.45, trim={0cm 27cm 55cm 0cm},clip]{./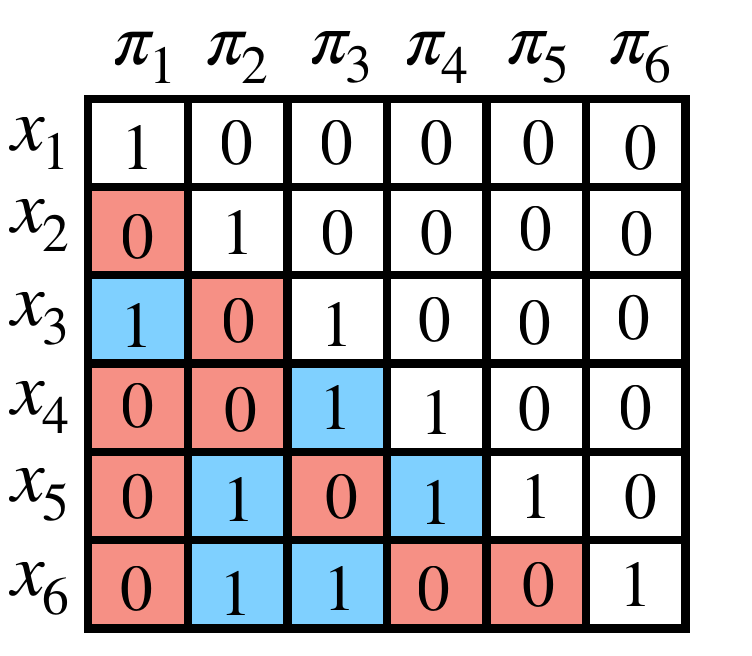}
}
\hspace{1em}
\subfigure[Ramsey graph]{
\includegraphics[scale=0.45, trim={0cm 28cm 55cm 0cm},clip]{./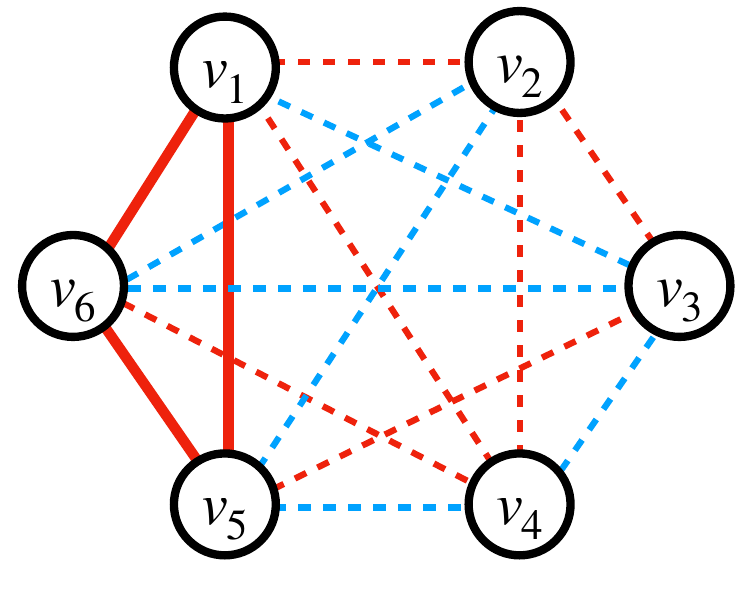}
}
\caption{An example illustrating the connection between the upper bound and Ramsey numbers. Left: a sequence $\{(x_1, \pi_1), \dots, (x_6, \pi_6)\}$ witnessing $\Edim_0(\Pi) = 6$, represented in matrix form. (We switch to 0/1-valued function classes for clarity.) Right: In the corresponding graph $K_6$, we color the graph edges $e_{ij}$ to be \textcolor{red}{red} if $\pi_j(x_i) = 0$ and \textcolor{blue}{blue} if $\pi_j(x_i) = 1$. Since $R(3,3) = 6$, we are guaranteed a subgraph $K_3$ which is monochromatic; in this example, the subgraph is given by the vertices $\{v_1, v_5, v_6\}$. Red subgraphs indicate sequences which witness $\Sdim_0(\Pi)$; blue subgraphs witness $\Tdim_0(\Pi)$. In this case, $\{(x_1, \pi_1), (x_5, \pi_5), (x_6, \pi_6)\}$ witnesses $\Sdim_0(\Pi) \ge 3$.}
\label{fig:proof-illustration}
\end{figure}

\subsection{Proof of \pref{thm:tightness-ub}}\label{sec:proof-tightness-ub}
We will construct $\Pi_N \subseteq \crl{-1,+1}^{[N]}$ randomly, such that $\abs{\Pi_N} = N+1$ and $\Edim_{1}(\Pi_N) = N$. Note that it is equivalent to define an $N\times (N+1)$ sign matrix $B$, representing the values of $\statesp \times \Pi$ with entry $B_{ij} = \pi_j(x_i)$. Let $B$ be randomly drawn according to the following distribution:
\begin{align*}
    B_{ij} \sim \begin{cases}
    1 & i < j,\\
    -1 & i = j,\\
    \mathrm{Rad}(1/2) & i > j.
    \end{cases}
\end{align*}
By construction, with probability 1, $(x_1, \pi_1), \dots, (x_N, \pi_N)$ is a valid sequence witnessing $\Edim(\Pi_N) = \Edim_1(\Pi_N) = N$.

We now have to argue that there exists some sign matrix $C \in \{-1,+1\}^{N\times (N+1)}$ such that the equivalent function class $\Pi_N$ has small threshold dimension and star number. We use the following two lemmas to simplify the requirement that $\Pi_N$ have small threshold dimension and star number with respect to all base functions $\optpi\in \Pi$ to just considering the base functions $\optpi \equiv -1$ and $\optpi \equiv 1$.

\begin{lemma}\label{lem:dim-ub}
For any $\Pi$ and $\dim\in \{\Edim, \Tdim, \Sdim\}$, we have  $\dim(\Pi) \le \dim_{1}(\Pi) + \dim_{-1}(\Pi)$. 
\end{lemma}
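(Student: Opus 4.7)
The plan is to reduce the bound with arbitrary reference $\optpi$ to the two constant references $\optpi \equiv +1$ and $\optpi \equiv -1$ by a simple partitioning of any witnessing sequence. Concretely, fix $\dim \in \{\Edim, \Tdim, \Sdim\}$, take any $\optpi \in \Pi$, and let $(x_1, \pi_1), \ldots, (x_m, \pi_m) \in \statesp \times \Pi$ be a sequence that witnesses $\dim_{\optpi}(\Pi) = m$. Partition the index set according to the value of $\optpi$ at the queried points:
\begin{align*}
    I_{+1} \coloneqq \{\, i \in [m] : \optpi(x_i) = +1 \,\}, \qquad I_{-1} \coloneqq \{\, i \in [m] : \optpi(x_i) = -1 \,\}.
\end{align*}

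Next I would check that for each choice of $c \in \{+1, -1\}$, the subsequence obtained by keeping only the pairs indexed by $I_c$, in their original relative order, is itself a valid witnessing sequence for $\dim_c(\Pi)$ (i.e.~with respect to the constant function $\optpi \equiv c$). This is a direct consequence of the fact that every condition appearing in the definitions of $\Edim_{\optpi}$, $\Sdim_{\optpi}$, and $\Tdim_{\optpi}$ is of the form $\pi_a(x_b) = \optpi(x_b)$ or $\pi_a(x_b) \ne \optpi(x_b)$ for some indices $a, b$ whose relative order is one of $a=b$, $b < a$, $b \ne a$, or $b \ge a$. When both $a, b \in I_c$, the right-hand side $\optpi(x_b)$ is exactly $c$, so the condition is identical to the one obtained by replacing $\optpi$ with the constant function $c$. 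And since we keep the original relative order of $I_c$, all of the ordering constraints ($b < a$, $b \ne a$, $b \ge a$) are preserved on the subsequence. Thus $|I_c| \le \dim_c(\Pi)$ for each $c \in \{+1, -1\}$.

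Summing gives $\dim_{\optpi}(\Pi) = m = |I_{+1}| + |I_{-1}| \le \dim_{+1}(\Pi) + \dim_{-1}(\Pi)$; taking the supremum over $\optpi \in \Pi$ yields the stated inequality $\dim(\Pi) \le \dim_{+1}(\Pi) + \dim_{-1}(\Pi)$. There is no serious obstacle: the argument is really just a ``change of variables'' that splits a generic reference policy $\optpi$ into its level sets $\{\optpi = +1\}$ and $\{\optpi = -1\}$. The only point requiring minor care is the ordered conditions in the threshold definition, but these survive unchanged because the partition is done on indices and the relative order within each $I_c$ is retained.
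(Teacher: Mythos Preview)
Your proof is correct and follows essentially the same approach as the paper: partition the witnessing sequence by the value of $\optpi(x_i)$, then observe that each sub-sequence (in its original order) is a valid witness for the corresponding constant reference. The paper spells this out for $\Sdim$ and defers the other two cases to ``a similar argument,'' whereas you give a single unified argument covering all three by noting that every defining condition compares $\pi_a(x_b)$ to $\optpi(x_b)$ and that relative order is preserved under restriction to $I_c$.
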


\begin{lemma}\label{lem:thres-ub}
For any $\Pi$, $\Tdim(\Pi) \le 2 \Tdim_{1}(\Pi)$.
\end{lemma}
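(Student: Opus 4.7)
The plan is to convert any witness $(x_1, \pi_1), \dots, (x_m, \pi_m)$ for $\Tdim_{\optpi}(\Pi) = m$ into a $\Tdim_1(\Pi)$ witness of length at least $\lceil m/2 \rceil$; since $\Tdim(\Pi) = \sup_{\optpi \in \Pi}\Tdim_{\optpi}(\Pi)$, I may assume $\optpi \in \Pi$, and this turns out to be the crucial extra ingredient. I start from the same pigeonhole argument that proves \pref{lem:dim-ub}: partition $[m]$ into $J_{+} = \{j : \optpi(x_j) = +1\}$ and $J_{-} = \{j : \optpi(x_j) = -1\}$, so that at least one of these two sets has size at least $\lceil m/2 \rceil$.

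If $|J_+| \ge \lceil m/2 \rceil$, the argument is immediate: writing $J_+ = \{j_1 < \dots < j_s\}$ in increasing order, the subsequence $(x_{j_b}, \pi_{j_b})_{b=1}^s$ inherits the threshold conditions of the original witness, and since $\optpi \equiv +1$ on the restricted point set, this is already a valid $\Tdim_1$ witness of length $s$. The substantive case is $|J_-| = s \ge \lceil m/2 \rceil$; here the restriction to $J_-$ is only a $\Tdim_{-1}$ witness, and the naive reverse-and-shift conversion from $\Tdim_{-1}$ to $\Tdim_1$ would drop one element due to a diagonal mismatch. The fix is to use $\optpi$ itself---which lies in $\Pi$ by assumption and takes the value $-1$ at every $J_-$ point---to plug the missing slot. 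Concretely, writing $J_- = \{j_1 < \dots < j_s\}$, the plan is to define
\[
y_b = x_{j_{s+1-b}} \text{ for } b \in [s], \qquad \sigma_1 = \optpi, \qquad \sigma_a = \pi_{j_{s+2-a}} \text{ for } 2 \le a \le s.
\]

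Verifying the $\Tdim_1$ conditions on this construction is then a short index chase. For $\sigma_1$, one has $\sigma_1(y_b) = \optpi(x_{j_{s+1-b}}) = -1$ for every $b$ because $j_{s+1-b} \in J_-$; and for $a \ge 2$, the monotonicity of $j_\bullet$ gives $j_{s+1-b} \ge j_{s+2-a}$ iff $b < a$, so the original threshold property $\pi_i(x_j) = \optpi(x_j) \iff j < i$ applied with $i = j_{s+2-a}$ and $j = j_{s+1-b}$ yields $\sigma_a(y_b) = +1$ for $b < a$ and $\sigma_a(y_b) = -1$ for $b \ge a$, as required. Combining both cases gives $\Tdim_1(\Pi) \ge \lceil m/2 \rceil$ and hence $\Tdim(\Pi) \le 2\Tdim_1(\Pi)$. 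The main obstacle is precisely the asymmetry between $\Tdim_1$ and $\Tdim_{-1}$: without invoking $\optpi \in \Pi$ to serve as $\sigma_1$, the pigeonhole together with reverse-and-shift would only deliver $\Tdim_1 \ge \lceil m/2 \rceil - 1$, and the bound would degrade by an additive constant rather than being exactly a factor of two.
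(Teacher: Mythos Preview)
Your proposal is correct and follows essentially the same approach as the paper: pigeonhole on the sign of $\pi^\star(x_j)$, keep the subsequence directly in the $J_+$ case, and in the $J_-$ case reverse the order and shift the policies by one while plugging $\pi^\star$ itself into the first slot. The paper's Case~2 sequence $(x_k, \pi^\star), (x_{k-1}, \pi_k), \dots, (x_1, \pi_2)$ is exactly your $(y_b, \sigma_b)$ construction after reindexing, and your explicit verification of the threshold conditions (which the paper leaves as an ``observe that'') is correct, including your remark that $\pi^\star \in \Pi$ is precisely what allows the conversion to preserve the full length rather than losing one.
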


We set up some additional notation. Denote ${\bf I} = (i_1, i_2, \dots, i_k)$ and ${\bf J} = (j_1, j_2, \dots, j_k)$ to be $k$-length sequences of distinct elements from $[N]$. For any two sequences ${\bf I}$, ${\bf J}$, we use $(x_{\bf I}, \pi_{\bf J}) \coloneqq ((x_{i_1}, \pi_{j_1}), \dots, (x_{i_k}, \pi_{j_k}))$. We define valid star sequences to be any $(x_{\bf I}, \pi_{\bf J})$ which witness $\Sdim_1(\Pi_N) = k$ or $\Sdim_{-1}(\Pi_N) = k$, and we define valid threshold sequences to be any $(x_{\bf I}, \pi_{\bf J})$ which witnesses $\Tdim_1(\Pi_N) = k$.

Define the random variable $X_k$ to be the number of valid star or threshold sequences, i.e.,~$X_k \coloneqq \abs{ \crl{ ({\bf I}, {\bf J}) \colon (x_{\bf I}, \pi_{\bf J}) \text{ is valid star or threshold sequence} } }.$ By linearity of expectation, we have
\begin{align*}
    \En[X_k] &\le N^{k}(N+1)^{k} \cdot \max_{ {\bf I}, {\bf J} \subset [N] } p_{\bf{I}, \bf{J}}, \\
    &\qquad \text{where } p_{\bf{I}, \bf{J}} \coloneqq \mathbb{P}\left[ (x_{\bf I}, \pi_{\bf J}) \text{ is a valid star or threshold sequence} \right].
\end{align*}
Now we apply the following lemma to upper bound the expectation.

\begin{lemma}\label{lem:prob-upper-bound}
For all $\bf{I}$, $\bf{J}$, we have $ p_{\bf{I}, \bf{J}} \le 3 \cdot 2^{-k(k-1)/2}$.
\end{lemma}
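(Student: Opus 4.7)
The approach is to analyze the structure of the random sign matrix $B$, which has deterministic structure ($+1$ strictly above the diagonal and $-1$ on the diagonal) and independent $\mathrm{Rad}(1/2)$ entries strictly below the diagonal. For any fixed pair $(\bf{I}, \bf{J})$ of $k$-length sequences of distinct indices, the probability of a specific pattern in the $k \times k$ sub-matrix $B[\bf{I}, \bf{J}]$ factors as the indicator that no deterministic entry contradicts the required pattern, times $(1/2)^{|T_L|}$, where $T_L \coloneqq \{(m, \ell) \in [k]^2 : i_m > j_\ell\}$ enumerates the ``random'' positions of the sub-matrix.

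I would first apply a union bound to decompose the event ``$(x_{\bf{I}}, \pi_{\bf{J}})$ is a valid star or threshold sequence'' into three sub-events corresponding to: (i) a valid star sequence with $\optpi \equiv 1$, (ii) a valid star sequence with $\optpi \equiv -1$, and (iii) a valid threshold sequence with $\optpi \equiv 1$. By \pref{lem:dim-ub} and \pref{lem:thres-ub}, these three base functions suffice for bounding $\Sdim(\Pi_N)$ and $\Tdim(\Pi_N)$. For each sub-event, I would identify the required pattern in $B[\bf{I}, \bf{J}]$, derive the deterministic feasibility conditions on $(\bf{I}, \bf{J})$ under which the event has positive probability, and then lower bound $|T_L|$.

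The main step is to show $|T_L| \ge \binom{k}{2}$ in each case. For the threshold sequence with $\optpi \equiv 1$, the required pattern of $-1$'s on and below the sub-diagonal and $+1$'s strictly above forces $i_m \ge j_\ell$ for all $m \ge \ell$ (else the deterministic $+1$ above the diagonal of $B$ contradicts the required $-1$). Since $\bf{I}$ has distinct entries, for each $\ell$ at most one $m \ge \ell$ has $i_m = j_\ell$, so $|\{m \ge \ell : i_m > j_\ell\}| \ge k - \ell$, and summing over $\ell$ yields $|T_L| \ge \sum_{\ell=1}^{k}(k - \ell) = \binom{k}{2}$. For the star sequence with $\optpi \equiv 1$, I would exploit the permutation symmetry of the star condition to relabel so that $j_1 < \cdots < j_k$ without loss of generality; the diagonal constraint $B_{i_\ell, j_\ell} = -1$ then forces $i_\ell \ge j_\ell$ for all $\ell$, and the key observation is that if $i_m \le j_\ell$, then $j_m \le i_m \le j_\ell$ gives $m \le \ell$, yielding $|\{m : i_m > j_\ell\}| \ge k - \ell$ again. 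For the star sequence with $\optpi \equiv -1$, the off-diagonal constraint $i_m \ge j_\ell$ (for $m \ne \ell$) together with distinctness yields $|\{m : i_m \le j_\ell\}| \le 2$ and hence $|T_L| \ge k(k-2) \ge \binom{k}{2}$ for $k \ge 3$; the lemma is trivial for $k \le 2$ since $3 \cdot 2^{-\binom{k}{2}} \ge 1$.

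The main obstacle is carefully tracking the role of ``diagonal'' entries of $B$ (positions where $i_m = j_\ell$, producing automatic $-1$'s) across the three cases, since these serve as automatic successes for some constraints but automatic failures for others; the distinctness of entries within $\bf{I}$ and within $\bf{J}$ is what prevents too many such coincidences from inflating the count. Once $|T_L| \ge \binom{k}{2}$ is established in each case, summing the three probabilities yields $p_{\bf{I}, \bf{J}} \le 3 \cdot 2^{-k(k-1)/2}$ as desired.
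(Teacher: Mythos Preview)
Your proposal is correct and follows essentially the same approach as the paper's proof: union bound over the three base-function cases, identify the deterministic feasibility constraints, and lower-bound the number of random (strictly-below-diagonal) entries in the submatrix by $\binom{k}{2}$. The only cosmetic differences are in the counting: the paper reorders $\mathbf{I}$ by decreasing value and reads off a triangular block of random entries directly, whereas you reorder $\mathbf{J}$ (for the star $\optpi\equiv 1$ case) and count column-by-column; for the star $\optpi\equiv -1$ case you obtain the slightly looser bound $|T_L|\ge k(k-2)$ and handle $k\le 2$ separately, while the paper extracts exactly $\binom{k}{2}$ random entries via a similar inductive feasibility argument.
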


We apply \pref{lem:prob-upper-bound} to the previous display. When $k = \Omega(\log_2 N)$, we have $\En[X_k] < 1$. Therefore, there must exist an $N\times (N+1)$-sized sign matrix $C$ such that the corresponding $\Pi_N \subseteq \crl{-1,+1}^{[N]}$ has $\Sdim_1(\Pi_N) < O(\log_2 N)$, $\Sdim_{-1}(\Pi_N) < O(\log_2 N)$, and $\Tdim_1(\Pi_N) < O(\log_2 N)$, but $\Edim_{1}(\Pi_N) = N$. By \pref{lem:dim-ub} and \ref{lem:thres-ub}, this concludes the proof of \pref{thm:tightness-ub}.\qed

It remains to prove the auxiliary lemmas used in the proof of \pref{thm:tightness-ub}.

\begin{proof}[Proof of \pref{lem:dim-ub}]
We prove the result for star number; the result for eluder dimension and threshold dimension can be shown with a similar argument. Fix any $\optpi \in \Pi$, and let $(x_1, \pi_1), \dots, (x_m, \pi_m)$ denote the sequence which witnesses $\Sdim_{\optpi}(\Pi) = m$. Let $I_{+} \subseteq [m]$ denote the indices $i$ for which $\optpi(x_i) = 1$ and $I_{-} \subseteq [m]$ denote the indices $i$ for which $\optpi(x_i) = -1$. By definition of star number, $\pi_j(x_j) = -1$ for every $j\in I_{+}$ and $\pi_k(x_j) = \optpi(x_j) = 1$ for every $j\ne k \in I_{+}$. Thus we know that $\{(x_k, \pi_k): k\in I_{+}\}$ is a valid sequence which witnesses $\Sdim_{1}(\Pi) \ge \abs{I_{+}}$. Similarly $\{(x_k, \pi_k): k\in I_{-}\}$ is a valid sequence which witnesses $\Sdim_{-1}(\Pi) \ge \abs{I_{-}}$. Thus we have shown that $\Sdim_{\optpi}(\Pi) = m \le \Sdim_{1}(\Pi) + \Sdim_{-1}(\Pi)$. Taking the supremum on the LHS yields the claim.
\end{proof}
\begin{proof}[Proof of \pref{lem:thres-ub}]
 Fix any $\optpi \in \Pi$, and let $(x_1, \pi_1), \dots, (x_m, \pi_m)$ denote the sequence which witnesses $\Tdim_{\optpi}(\Pi) = m$. Again let $I_{+} \subseteq [m]$ denote the indices $i$ for which $\optpi(x_i) = 1$ and $I_{-} \subseteq [m]$ denote the indices $i$ for which $\optpi(x_i) = -1$. Either $\abs{I_{+}} \ge m/2$ or $\abs{I_{-}} \ge m/2$. We break into cases.

\textit{Case 1.} If $\abs{I_{+}} \ge m/2$, then taking the subsequence indexed by $I_{+}$ already shows that $\Tdim_1(\Pi) \ge m/2 = \Tdim_{\optpi}(\Pi)/2$, and we are done.

\textit{Case 2.} If $\abs{I_{-}} \ge m/2$, then let us consider the subsequence indexed by $I_{-}$. We reindex it to call it $(x_1, \pi_1), \dots, (x_k, \pi_k)$, where $k = \abs{I_{-}}$. Observe that the sequence
\begin{align*}
(x_k, \optpi), (x_{k-1}, \pi_k), (x_{k-2}, \pi_{k-1}), \dots, (x_1, \pi_2)
\end{align*}
witnesses $\Tdim_1(\Pi) \ge k \ge m/2 = \Tdim_{\optpi}(\Pi)/2$.

Thus in both cases we have shown that $\Tdim_1(\Pi) \ge \Tdim_{\optpi}(\Pi)/2$; taking the supremum yields the claim.
\end{proof}

\begin{proof}[Proof of \pref{lem:prob-upper-bound}]
 Fix any $\bf{I}$, $\bf{J}$ to be $k$-length subsequences of $[N]$. In order for $(x_{\bf I}, \pi_{\bf J})$ to be a valid star sequence w.r.t.~$\optpi(x) = 1$, the following properties of the matrix $B$ must hold:

\begin{enumerate}
    \item For every $r \in [k]$, $B_{i_r, j_r} = -1$.
    \item For every $r, s \in [k]$ such that $r\ne s$, $B_{i_r, j_s} = 1$.
\end{enumerate}

 In order for $(x_{\bf I}, \pi_{\bf J})$ to be a valid star sequence w.r.t.~$\optpi(x) = -1$, we just flip the values in the above two properties.

Likewise, in order for $(x_{\bf I}, \pi_{\bf J})$ to be a valid threshold sequence w.r.t.~$\optpi(x)=1$, the following properties of the matrix $B$ must hold:
\begin{enumerate}
    \item For every $r \in [k]$, $B_{i_r, j_r} = -1$.
    \item For every $r, s \in [k]$ such that $r < s$, $B_{i_r, j_s} = 1$.
    \item For every $r, s \in [k]$ such that $r \ge s$, $B_{i_r, j_s} = -1$.
\end{enumerate}

First, we will prove that the probability that $(x_{\bf I}, \pi_{\bf J})$ is a valid star sequence w.r.t.~$\optpi(x)=1$, as well as the probability that $(x_{\bf I}, \pi_{\bf J})$ is a valid threshold sequence w.r.t.~$\optpi(x)=1$ are both $2^{-k(k-1)/2}$. For any $r \in [k]$, if $i_r <  j_r$, then by construction of $B$ we know that $\pi_{j_r}(x_{i_r}) = 1$, so $(x_{\bf I}, \pi_{\bf J})$ cannot be a valid star sequence or threshold sequence w.r.t~$\optpi(x)=1$. Henceforth, assume $i_r \ge j_r$ for all $r\in[k]$. Now define indices $r_1, r_2, \dots, r_k$ as the permutation of $[k]$ such that $i_{r_1}> i_{r_2}> \dots > i_{r_k}$. For any $p < q$, we have $i_{r_p} > i_{r_q} \ge j_{r_q}$. Thus for every $p < q$, the corresponding entry $(i_{r_p}, j_{r_q})$ is sampled from $\mathrm{Rad}(1/2)$. In order for $(x_{\bf I}, \pi_{\bf J})$ to be a valid star sequence, all of these must take the value of 1; likewise in order for  $(x_{\bf I}, \pi_{\bf J})$ to be a valid threshold sequence, all of these must take the value of $-1$. Since there are $k(k-1)/2$ of these, we have the desired result.

Now we bound the probability that $(x_{\bf I}, \pi_{\bf J})$ is a valid star sequence w.r.t.~$\optpi(x)=-1$. Note that because the definition of star number is permutation-invariant, we can assume that $i_1 > i_2 > \dots > i_k$ without loss of generality. Consider the pair $(i_1, j_1)$. We require $B_{i_1, j_1} = 1$, so either $i_1 > j_1$ or $i_1 < j_1$. Since we require $B_{i_2, j_1} = -1$, we cannot have $i_1 < j_1$, so we must have $i_1 > j_1$. Using a similar argument, we must have $i_r > j_r$ for all $r\in[k-1]$. Thus, there must be at least $k(k-1)/2$ random entries in the submatrix given by $({\bf I}, {\bf J})$, all of which must take value $-1$ in order for $(x_{\bf I}, \pi_{\bf J})$ is a valid star sequence w.r.t.~$\optpi(x)=-1$.
By union bound we get $p_{\bf{I}, \bf{J}} \le 3 \cdot 2^{-k(k-1)/2}$, thus proving the result.
\end{proof}

\section{Eluder Dimension vs.~Sign Rank}\label{sec:eluder-vs-rank}
In this section, we investigate the comparison of the policy eluder dimension with the sign rank of the policy class, which is a classical notion of \emph{dimension complexity} that has been studied extensively in combinatorics, learning theory, and communication complexity \citep[see, e.g.,][and references therein]{alon85geometrical, forster02upp, arriaga06algorithmic,alon2016sign}. Our main result is a separation between policy eluder and sign rank: namely there exists a function class with constant policy eluder dimension but infinite sign rank. We then explore the consequences of this result for sample complexity bounds in literature which utilize the scale-sensitive eluder dimension (\pref{def:eluder}).

\subsection{Definition and Main Result}
We state the definition of sign rank.
\begin{definition}\label{def:sign-rank}
    Let $\Pi \subseteq \crl{-1, +1}^\statesp$. We define the sign rank of $\Pi$, denoted $\signrank(\Pi)$, to be the smallest dimension $d$ for which there exists mappings $\phi: \statesp \to \bbR^d$ and $w: \Pi \to \bbR^d$ such that
    \begin{align*}
        \text{for all}~(x, \pi) \in \statesp \times \Pi:\quad \pi(x) = \sign(\tri{ w(\pi), \phi(x) } ), \numberthis \label{eq:sign-rank}
    \end{align*}
    or $\infty$ if no such $d$ exists.
\end{definition}

How does the policy eluder dimension of a class $\Pi$ relate to its sign rank? In one direction, there is an easy-to-see separation: namely, linear predictors in $\bbR^3$ (which trivially have $\signrank=3$) have infinite star number and threshold dimension (and therefore infinite policy eluder dimension by \pref{thm:equivalence}). However, we ask if the other separation is also possible: can we construct a $\Pi$ which has small policy eluder but large $\signrank$? 

We provide an explicit \emph{exponential} separation using the parity class.
\begin{proposition}\label{prop:parity-eluder-ub}
For $\statesp = \crl{-1,+1}^d$ and $\Fparity \coloneqq \crl{\pi_S : \pi_S(x) = \prod_{i\in S} x_i \mid S \subseteq [d]}$, it holds that $\signrank(\Fparity) \ge 2^{d/2}$, but $\Edim(\Fparity) \le d$.
\end{proposition}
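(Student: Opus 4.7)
The two bounds are proved by essentially independent arguments, one a clean $\bbF_2$-linear algebra calculation, the other an invocation of a classical spectral lower bound on sign rank.

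\textbf{Eluder upper bound.} The plan is to exploit the fact that the parity class is simply the group of Walsh--Fourier characters on $\bbF_2^d$. First I would reparametrize: identify $x \in \crl{-1,+1}^d$ with $y \coloneqq (1-x)/2 \in \bbF_2^d$, and identify each $S \subseteq [d]$ with its indicator vector, also viewed as an element of $\bbF_2^d$. Under this identification $\pi_S(x) = 1$ iff $\langle S, y \rangle \equiv 0 \pmod 2$, and the pointwise product rule $\pi_S(x)\pi_T(x) = \pi_{S \triangle T}(x)$ implies that for any base $\optpi = \pi_T$, the equality $\pi_{S_i}(x) = \pi_T(x)$ is equivalent to $\langle R_i, y \rangle = 0 \pmod 2$, where $R_i \coloneqq S_i \triangle T$.

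Now let $(x_1,\pi_{S_1}),\ldots,(x_m,\pi_{S_m})$ be any sequence witnessing $\Edim_{\optpi}(\Fparity) \ge m$. Writing $y_i$ and $R_i$ for the $\bbF_2$-objects associated to $x_i$ and $S_i \triangle T$, the eluder conditions translate to
\begin{align*}
\langle R_i, y_i \rangle = 1 \pmod 2, \qquad \langle R_i, y_j \rangle = 0 \pmod 2 \text{ for all } j < i.
\end{align*}
I would then show that $y_1,\ldots,y_m$ are linearly independent over $\bbF_2$: if $\sum_{i \in I} y_i = 0$ for some nonempty $I \subseteq [m]$ with maximal element $k$, then pairing with $R_k$ gives $1 = \langle R_k, y_k \rangle = \sum_{i \in I,\, i < k} \langle R_k, y_i \rangle = 0$, a contradiction. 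Hence $m \le \dim_{\bbF_2} \bbF_2^d = d$, and since this holds for every base policy $\optpi \in \Fparity$ we obtain $\Edim(\Fparity) \le d$.

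\textbf{Sign rank lower bound.} For the sign rank I would reduce to Forster's theorem on the Hadamard matrix. Form the $2^d \times 2^d$ sign matrix $M$ indexed by $(x, S) \in \crl{-1,+1}^d \times 2^{[d]}$ with entries $M_{x,S} = \chi_S(x) = \prod_{i \in S} x_i$. Any representation witnessing $\signrank(\Fparity) = r$ gives vectors $\phi(x), w(\pi_S) \in \bbR^r$ whose signed inner products recover $M$, so $\signrank(M) \le \signrank(\Fparity)$, and it suffices to lower bound $\signrank(M)$. The matrix $M$ is (a reindexing of) the $N \times N$ Hadamard matrix with $N = 2^d$, and the character orthogonality relation $MM^\top = N I$ gives $\|M\|_{\mathrm{op}} = \sqrt{N}$. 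Forster's theorem then yields
\begin{align*}
\signrank(\Fparity) \ge \signrank(M) \ge \frac{N}{\|M\|_{\mathrm{op}}} = \sqrt{N} = 2^{d/2}.
\end{align*}

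\textbf{Main obstacle.} There is no real technical obstacle in the usual sense: the eluder argument is essentially a two-line linear-independence observation once the $\bbF_2$ translation is in place, and the sign rank bound is an off-the-shelf application of Forster's theorem to the Hadamard matrix. The only nontrivial conceptual step is recognizing that the eluder condition for parities is equivalent to $\bbF_2$-linear independence of the queried points, which is what makes the exponential separation possible.
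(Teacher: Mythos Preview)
Your proposal is correct and takes essentially the same approach as the paper: the eluder bound is proved via the identical $\bbF_2$-linear-algebra observation that the witnessing points $y_1,\dots,y_m$ must be linearly independent (the paper phrases it as ``$\wt{x}_i$ is not in the span of $\wt{x}_1,\dots,\wt{x}_{i-1}$'' rather than via a minimal dependence, but these are cosmetically different versions of the same argument), and the sign-rank bound is exactly Forster's Hadamard bound, which the paper simply cites while you spell out the operator-norm computation.
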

\begin{proof}[Proof of \pref{prop:parity-eluder-ub}]
The first part is a well known result that $\signrank(\Fparity) \ge 2^{d/2}$ \citep{forster02upp}.

Now we show the second part. For any $x \in \crl{-1,+1}^d$ consider its $0$-$1$ representation $\wt{x} \in \bbF_2^d$ (representing $+1$ by $0$ and $-1$ by $1$). All functions in $\Fparity$ can be simply viewed as linear functions over $\bbF_2$. Namely, any parity function is indexed by a vector $a \in \bbF_2^d$, with $\pi_a(x) := (-1)^{\tri{a, \wt{x}}}$.

Fix any $\optpi \in \Fparity$. Suppose that $\Edim_{\optpi}(\Fparity) = m$, witnessed by $(x_1, \pi_{a_1}), \ldots, (x_m, \pi_{a_m}) \in \crl{-1,+1}^d$ and $\optpi = \pi_{a^{\star}}$. We have
\begin{align*}
    f_{a_{i}}(x_i) \ne \pi_{a^{\star}}(x_i),\quad \text{and} \quad f_{a_{i}}(x_j) = \pi_{a^{\star}}(x_j) \text{ for all } j < i.
\end{align*}
Thus, we have $\tri{a_{i}-a^{\star}, \wt{x}} = 0$ for all $\wt{x} \in \bbF_2\text{-}\mathrm{span}(\crl{\wt{x}_1, \ldots, \wt{x}_{i-1}})$.
But $\tri{a_{i}-a^{\star}, \wt{x}_i} = 1$ and hence $\wt{x}_i$ is linearly independent of $\crl{\wt{x}_1, \ldots, \wt{x}_{i-1}}$ over $\bbF_2^d$.
Thus, $\crl{\wt{x}_1, \ldots, \wt{x}_m}$ are all linearly independent over $\bbF_2^d$, and hence $m \le d$.
\end{proof}

We are also able to show stronger (but nonconstructive) separations by extending the probabilistic techniques of \cite{alon2016sign}, who provided similar separations for VC dimension versus $\signrank$.

\begin{theorem}\label{thm:separation} For every $N > 0$, there exists a policy class $\Pi_N \subseteq \crl{-1, +1}^{[N]}$ such that $\Edim_1(\Pi_N) = 4$ and $\signrank(\Pi_N) \ge \Omega(N^{1/9}/\log N)$, where $1$ is shorthand for the all 1s function.
\end{theorem}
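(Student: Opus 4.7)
The plan is to extend the probabilistic technique of Alon, Moran, and Yehudayoff (who separated VC dimension from sign rank), adapted to control the policy eluder dimension relative to $\optpi \equiv 1$ rather than VC dimension. I would construct $\Pi_N$ as the (pruned) columns of a random sign matrix $B \in \crl{-1, +1}^{N \times M}$, where each entry $B_{ij}$ is drawn independently with $B_{ij} = -1$ with probability $p$ and $B_{ij} = +1$ with probability $1 - p$, for parameters $M$ and $p$ polynomial in $N$ to be tuned at the end.

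To control the eluder dimension, I would unfold \pref{def:policy-eluder}: an eluder sequence of length $m$ relative to $\optpi \equiv 1$ corresponds (after permuting rows and columns) to an ordered $m \times m$ submatrix of $B$ whose diagonal is $-1$, whose strict lower triangle is $+1$, and whose strict upper triangle is unconstrained. Any fixed ordered submatrix realizes this pattern with probability exactly $p^m (1-p)^{m(m-1)/2}$, so a union bound over the $O(N^m M^m)$ ordered tuples yields
\[
\En\brk*{\#\text{ length-}m\text{ eluder witnesses}} \le N^m M^m \, p^m (1-p)^{m(m-1)/2}.
\]
Taking $m = 5$ and choosing $p$ sufficiently small, this expectation is $o(M)$, and a standard deletion step removes a vanishing fraction of columns to destroy every length-$5$ witness, leaving a class of cardinality $\Omega(M)$ with $\Edim_{1}(\Pi_N) \le 4$.

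To lower bound the sign rank of the pruned matrix $B'$, I would invoke the perturbation-robust form of Forster's bound, $\signrank(B') \ge \sqrt{N M'} / \min_{C \,:\, \sign(C) = B',\, |C_{ij}| \ge 1} \nrm*{C}_{\mathrm{op}}$. The naive choice $C = B'$ is useless because the rank-one mean component $(1 - 2p)\mathbf{1}\mathbf{1}^\top$ dominates $\nrm{B'}_{\mathrm{op}}$. Instead, one rescales $B'$ entrywise (allowing $|C_{ij}|$ to exceed $1$ where needed) so as to subtract off the mean while preserving the sign pattern, reducing the problem to bounding the operator norm of a centered random matrix with variance $p(1-p)$. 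By standard random matrix concentration this norm is of order $\sqrt{p}(\sqrt N + \sqrt M) \cdot \mathrm{polylog}(N)$, which yields $\signrank(B') \gtrsim \sqrt{NM'} / (\sqrt{p} (\sqrt N + \sqrt M))$ up to polylog factors.

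Finally, I would optimize $p$ and $M$ against the eluder-sequence constraint $N^5 M^4 p^5 \lesssim 1$. The two competing pressures (small $p$ improves the eluder bound and reduces the centered operator norm, but forces $M$ smaller) produce an optimum whose sign-rank exponent works out to $1/9$, with the $\log N$ factor absorbing the polylog corrections. \textbf{The main technical obstacle} is this last step: in the biased regime the standard Forster bound degenerates due to the large mean, and constructing a sign-preserving perturbation $C$ whose operator norm can be tightly controlled requires a delicate spectral argument, which is ultimately what drives the exponent to $1/9$ rather than the $1/2$ one obtains in the unbiased case.
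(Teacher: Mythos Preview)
Your proposal has a genuine gap in the Forster step. You claim that after rescaling, the sign-preserving matrix $C$ (with $|C_{ij}| \ge 1$) has operator norm of order $\sqrt{p}(\sqrt N + \sqrt M)$. But this is the norm of the \emph{centered} matrix $Z = B' - (1-2p)J$, whose entries satisfy $|Z_{ij}| = 2p$ whenever $B'_{ij} = +1$. To restore $|C_{ij}| \ge 1$ you must rescale by $\approx 1/p$, which blows up the norm to $(\sqrt N + \sqrt M)/\sqrt p$. Working through Forster with this (correct) norm and your constraint $N^5 M^4 p^5 \lesssim 1$ yields a sign-rank lower bound that is \emph{less than one} for every choice of $p,M$ --- the bound is useless. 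More conceptually: when $p$ is small enough to control length-$5$ eluder witnesses via union bound, $B'$ has only $\approx pNM$ negative entries, so it is close in sign pattern to the rank-one all-$+1$ matrix; Forster cannot certify large sign rank for such matrices, and no ``delicate spectral argument'' circumvents this, since any valid $C$ is forced to have $\|C\|_{\mathrm{op}}$ comparable to $\sqrt{NM}$.

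The paper's proof takes a completely different route: a \emph{counting argument} in the style of Alon--Moran--Yehudayoff. It first constructs (via the probabilistic method, with $+1$ rare at density $p = \Theta(N^{-8/9})$) a single $N \times N$ sign matrix $C$ containing no $5 \times 5$ submatrix with all-$+1$ above the diagonal, yet with $\Omega(N^{10/9})$ entries equal to $+1$. Since flipping any $+1$ to $-1$ cannot create such a forbidden submatrix, this yields $2^{\Omega(N^{10/9})}$ distinct matrices all satisfying $\Edim_1 \le 4$. But by a known bound (Lemma~22 of \cite{alon2016sign}), at most $2^{O(rN\log N)}$ sign matrices have sign rank $\le r$; comparing counts forces some matrix in the family to have sign rank $\ge \Omega(N^{1/9}/\log N)$. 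The key idea you are missing is this counting step --- the sign-rank lower bound is established nonconstructively by cardinality, not by Forster.
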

The proof of \pref{thm:separation} is deferred to \pref{sec:proof-thm-separation}.
It is straightforward to replace the reference policy $\optpi \equiv 1$ with any fixed reference policy $\optpi: [N]\to \{-1,+1\}$. 

The careful reader might notice that we do not prove the existence of a policy class where $\Edim(\Pi)$ is constant and the $\signrank$ is infinite; instead we prove the weaker statement that a policy class exists with $\Edim$ w.r.t.~any \emph{fixed} policy $\optpi$ is bounded. We conjecture that the stronger statement holds; see \pref{sec:proof-thm-separation} for more details.

\subsection{Discussion}
We now provide some motivation and context for these separation results with sign rank.

In the paper \cite{russo2013eluder}, they established upper bounds on eluder dimension (\pref{def:eluder}) for (i)~function classes for which inputs have finite cardinality (the ``tabular'' setting), (ii)~linear functions over $\bbR^d$ of bounded norm, and (iii)~generalized linear functions over $\bbR^d$ of bounded norm, with any activation that has derivatives bounded away from $0$.
%Their algorithms based on eluder dimension recovered the best known regret bounds for linear and generalized linear contextual bandits, up to log factors.
Apart from these function classes (and those that can be embedded into these), understanding of eluder dimension had been limited.
Indeed, one might wonder the following:
\begin{center}
    \textit{Are all function classes with bounded eluder dimension just generalized linear models?}
\end{center}
Answering this question has substantial ramifications on the scope of prior work. An answer of ``yes'' would imply that the results in the literature which give sample complexity or regret guarantees in terms of eluder dimension do not go beyond already-established regret guarantees for (generalized) linear settings~\citep[see, e.g.,][]{filippi10, li17, wang2019optimism, kveton2020randomized}. An answer of ``no'' can be construed as a positive result for RL theory, as it would indicate that existing (and future) results which use the eluder dimension apply to a richer set of function classes than generalized linear models.

To answer this question, the author's paper \cite{li2022understanding} first formally defines what it means for a function class to be written as a generalized linear model (GLM). Informally, for an activation $\sigma : \bbR \to \bbR$ and a function class $\cF \subseteq \bbR^\statesp$, we define the \textbf{\boldmath $\sigma$-rank} to be the smallest dimension $d$ needed to express every function in $\cF$ as a generalized linear function in $\bbR^d$ with activation $\sigma$. Paraphrasing Prop.~3 of \cite{li2022understanding}, when $\cF$ is $\crl{-1, +1}$-valued we have
\begin{align*}
    \signrank(\cF) \le \mathsf{id}\text{-}\dc(\cF),\quad \text{where}~\mathsf{id}(z) = z.
\end{align*}
A similar lower bound holds for nonlinear activations (i.e., GLMs).
Therefore, the separation results (\pref{prop:parity-eluder-ub} and \pref{thm:separation}) give examples where the answer to this question is ``no'', since the function classes described therein have small eluder dimension but cannot be written as GLMs with small dimension. Intuitively, the $\sigma$-rank captures the \emph{best possible} upper bound on eluder dimension that the results from \cite{russo2013eluder} can give for a given $\cF$ by treating it as a GLM with activation $\sigma$. We can conclude that sample complexity bounds via the eluder dimension are far more powerful than similar bounds for linear or GLM settings. 

\subsection{Proof of \pref{thm:separation}}\label{sec:proof-thm-separation}
For simplicity, let us consider only function classes of size $N$. It is equivalent to reason about $N\times N$ sign matrices which define the values that $\statesp \times \Pi$ take. We slightly abuse notation to define the $\signrank$ of an $N\times N$ matrix $S$ to be
\begin{align*}
    \signrank(S) \coloneqq \crl*{\mathrm{rank}(M) : M\in \bbR^{N\times N}, \ \sign(M_{ij}) = S_{ij} \text{ for all } i,j\in[N]}.
\end{align*}
We also define $\Edim_1(S)$ similarly: $\Edim_1(S)$ is the maximum $k$ such that we can find two $\bf{I}$, $\bf{J}$ which are $k$-length subsequences of $[N]$ such that the matrix $S$ restricted to $\bf{I}$, $\bf{J}$ has $-1$ on the diagonal and $+1$ above the diagonal.

The proof relies on the following key lemma, which bounds the number of matrices with $\signrank$ at most $r$.

\begin{lemma}[Lemma 22 of \cite{alon2016sign}]\label{lem:num-sign-matrices}
Let $r \le N/2$. The number of $N\times N$ sign matrices with sign rank at most $r$ does not exceed $2^{O(rN\log N)}$.
\end{lemma}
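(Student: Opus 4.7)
The plan is to bound the number of sign matrices of sign rank at most $r$ by reducing the problem to counting sign patterns of a family of real polynomials, and then invoking a classical result from real algebraic geometry (Warren's theorem / the Milnor--Thom bound).

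First, I would parameterize. A matrix $S \in \{-1,+1\}^{N \times N}$ has sign rank at most $r$ iff there exist vectors $u_1, \dots, u_N, v_1, \dots, v_N \in \mathbb{R}^r$ such that $S_{ij} = \sign(\langle u_i, v_j\rangle)$ for all $i, j \in [N]$. Stack these $2N$ vectors into a single parameter $\theta \in \mathbb{R}^{2Nr}$. For each pair $(i,j) \in [N]\times[N]$, define the polynomial $p_{ij}(\theta) \coloneqq \langle u_i, v_j\rangle$; this is a bilinear form in $\theta$, hence a polynomial of degree $2$ in $k \coloneqq 2Nr$ real variables. The key observation is that every matrix $S$ of sign rank at most $r$ arises as the sign pattern $(\sign(p_{ij}(\theta)))_{i,j\in[N]}$ for some $\theta \in \mathbb{R}^{2Nr}$. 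So the number of such $S$ is at most the number of distinct sign patterns realized by the family $\{p_{ij}\}_{i,j \in [N]}$ on $\mathbb{R}^{2Nr}$.

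Next I would apply the Warren/Milnor--Thom bound: for a family of $m$ real polynomials of degree at most $d$ in $k$ variables with $m \ge k$, the number of distinct sign patterns in $\{-1, 0, +1\}^m$ realized on $\mathbb{R}^k$ is at most $\bigl(O(md/k)\bigr)^k$. Plugging in $m = N^2$, $d = 2$, $k = 2Nr$ (valid since $r \le N/2$ gives $k \le N^2$), the count is bounded by
\[
\bigl(O(N^2 \cdot 2/(2Nr))\bigr)^{2Nr} = \bigl(O(N/r)\bigr)^{2Nr}.
\]
Taking logarithms yields $2Nr \cdot \log(O(N/r)) = O(Nr \log N)$, so the total count is at most $2^{O(Nr \log N)}$.

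The one piece of care needed is handling the distinction between strict sign patterns in $\{-1,+1\}^{N^2}$ versus allowing zeros. Warren's theorem counts nonzero sign patterns directly, which is exactly what we want, but one has to be sure the definition of sign rank used (strict, excluding zero) matches; if $\langle u_i, v_j\rangle = 0$ for some entry, the parameterization need not realize the matrix exactly. This can be handled by a standard perturbation argument: any $\theta$ with some zero entry can be perturbed by an arbitrarily small amount to break ties in any desired direction without changing already-nonzero signs, so every sign-rank-$r$ matrix $S$ is realized by some $\theta$ with all $p_{ij}(\theta) \ne 0$. The main obstacle in a fully rigorous write-up is simply invoking the right version of Warren's theorem and tracking the constants in the big-$O$; the algebraic-geometric heavy lifting is entirely off-the-shelf.
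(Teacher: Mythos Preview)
Your proposal is correct and is exactly the standard argument: parameterize rank-$r$ factorizations by $2Nr$ real variables, view the $N^2$ entries as degree-$2$ polynomials, and invoke Warren's theorem to bound the number of sign patterns by $(O(N/r))^{2Nr} = 2^{O(rN\log N)}$. The paper does not supply its own proof of this lemma but simply cites it from \cite{alon2016sign}, whose proof is the one you have outlined.
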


In order to prove the result, we use a probabilistic argument to show that there must exist many distinct $N\times N$ matrices with $\Edim_1 = 4$.

\begin{lemma}\label{lem:eluder-matrices-count}
The number of $N\times N$ sign matrices with $\Edim_1 \le 4$ is at least $2^{\Omega(N^{10/9})}$. 
\end{lemma}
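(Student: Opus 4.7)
The plan is to reduce the counting problem to finding a single ``row family'' $\mathcal{R} \subseteq \{-1, +1\}^N$ of size $|\mathcal{R}| \ge 2^{\Omega(N^{1/9})}$ such that the $|\mathcal{R}| \times N$ matrix $\tilde{R}$ whose rows are the vectors of $\mathcal{R}$ has $\Edim_1(\tilde{R}) \le 4$. Given such $\mathcal{R}$, every $N \times N$ matrix $S$ whose rows are vectors of $\mathcal{R}$ (possibly with repetition) satisfies $\Edim_1(S) \le 4$ by a monotonicity argument: if $(x_1, \pi_1), \ldots, (x_5, \pi_5)$ were a length-$5$ eluder witness in $S$, let $\rho_r \in \mathcal{R}$ denote the $x_r$-th row of $S$; the $\rho_r$'s must be pairwise distinct, for otherwise there would exist $r < s$ with $\rho_r = \rho_s$, forcing $\rho_r[\pi_s]$ to equal both $+1$ (from the witness constraint at $(x_r, \pi_s)$) and $-1$ (from the constraint at $(x_s, \pi_s)$ together with $\rho_r = \rho_s$), a contradiction. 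Hence the $5$ distinct $\rho_r$'s together with the columns $\pi_1, \ldots, \pi_5$ would form a length-$5$ witness in $\tilde{R}$, contradicting $\Edim_1(\tilde{R}) \le 4$. Enumerating ordered $N$-tuples of row choices from $\mathcal{R}$ then yields $|\mathcal{R}|^N = 2^{\Omega(N^{10/9})}$ distinct $N \times N$ matrices, as required.

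To construct such an $\mathcal{R}$, I would use a probabilistic alteration argument. Sample a candidate family $\mathcal{R}_0$ of $M$ vectors i.i.d.~from a structured distribution on $\{-1, +1\}^N$---for example, uniform over vectors of prescribed sparsity $s$. Bound the expected number of length-$5$ eluder witnesses in the corresponding matrix $\tilde{R}_0$ by linearity of expectation over the $(M)_5 \cdot (N)_5$ ordered candidate witness sequences: since the five vectors participating in a given witness are independent, the probability that a specific sequence is a witness factors into a product of per-vector probabilities, giving a bound of the form $\mathbb{E}[\#\text{witnesses}] \le (M)_5 (N)_5 \cdot p^5$ for a suitable $p$ depending on $s$ and $N$. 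Then apply the deletion method: delete one vector from $\mathcal{R}_0$ for each witness, leaving a witness-free family $\mathcal{R}$ of size $|\mathcal{R}| \ge M/2$ provided the expected witness count is at most $M/2$.

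The main obstacle is calibrating the sparsity $s$ and the starting size $M$ so that simultaneously (i) the expected witness count stays below $M/2$, and (ii) $M \ge 2^{\Omega(N^{1/9})}$ even after deletion. These constraints pull in opposite directions, since larger $s$ enlarges both the support of the distribution (allowing larger $M$) and the per-vector witness probability. The particular exponent $N^{1/9}$ (and hence $N^{10/9}$ in the conclusion) should emerge as the optimum of this tradeoff. Should a plain union bound prove too lossy, sharper tools---such as the Lov\'asz Local Lemma, exploiting that witnesses sharing no vectors of $\mathcal{R}_0$ are independent, or a hypergraph container argument on the witness hypergraph---may be required to complete the balance and recover the stated exponent.
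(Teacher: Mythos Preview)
Your monotonicity reduction to a row family $\mathcal{R}$ is correct, but the probabilistic alteration you sketch cannot reach the required size $|\mathcal{R}| \ge 2^{\Omega(N^{1/9})}$; it yields at most polynomial-size families, and a polynomial family gives only $|\mathcal{R}|^N = 2^{O(N\log N)} = 2^{o(N^{10/9})}$ matrices. Concretely, for i.i.d.\ rows the expected number of length-$5$ witnesses is $\Theta(M^5)\cdot Q$, where $Q$ sums the per-column-tuple witness probabilities over all $N^5$ ordered column tuples, and the deletion condition forces $M^4 \lesssim 1/Q$. For the distributions you propose---e.g., each coordinate independently $+1$ with probability $p$---one has $Q = N^5 (1-p)^5 p^{10}$; optimizing $p$ (even allowing $p\to 0$ subject to the row entropy being large enough to support $M$ distinct rows) still gives only $M = N^{O(1)}$. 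The Lov\'asz Local Lemma does not help: two witness events are dependent whenever they share a row, so the dependency degree is $\Theta(M^4 N^5)$, and the LLL condition reproduces the same polynomial barrier on $M$. Hypergraph containers upper-bound the number of independent sets and do not construct large ones.

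The paper's proof runs the alteration at the \emph{entry} level rather than the row level. Sample one $N\times N$ matrix with each entry independently $+1$ with probability $p = 1/(2N^{8/9})$: the expected number of $+1$ entries is $N^2 p = \Theta(N^{10/9})$, while the expected number of ordered $5\times 5$ submatrices with all above-diagonal entries equal to $+1$ (``$E_5$-light'' patterns) is at most $N^{10} p^{10} = \Theta(N^{10/9})/2^{10}$. Deleting one $+1$ from each $E_5$-light pattern leaves a matrix $C$ with no such submatrix---hence $\Edim_1(C) \le 4$---and $\Omega(N^{10/9})$ surviving $+1$ entries. Since flipping a $+1$ in $C$ to $-1$ cannot create a new $E_5$-light pattern, every subset of the surviving $+1$'s may be flipped independently, producing $2^{\Omega(N^{10/9})}$ distinct matrices with $\Edim_1 \le 4$. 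The crucial point you are missing is that the free choices are individual entries, not whole rows: the resulting matrices all share the fixed $-1$-skeleton of $C$, a coupling that your row-family decomposition explicitly disallows.
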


The above lemmas imply that there must exist at least one sign matrix with $\Edim_1(S) \le 4$ and $\signrank(S) \ge \Omega(N^{1/9}/\log N)$. This proves \pref{thm:separation}, assuming \pref{lem:eluder-matrices-count} which we now prove.\qed

\begin{proof}[Proof of \pref{lem:eluder-matrices-count}]
 Define the set of $E_5$-light matrices as the set of $5\times 5$ sign matrices which are always $+1$ above the diagonal. We claim there exists an $N\times N$ sign matrix $C$ which (1) contains no $E_5$-light matrices and (2) has at least $\Omega(N^{10/9})$ entries that are $+1$. Such a matrix $C$ has $\Edim_1(C) \le 4$; moreover changing any $+1$ to $-1$ in $C$ will not increase $\Edim_1$.

Let $B$ be a random $N\times N$ sign matrix with each entry $+1$ with probability $p \coloneqq 1/(2N^{8/9})$. Define the random variable
\begin{align*}
    X \coloneqq \text{(\# $+1$'s in $C$)} - \text{(\# $E_5$-light matrices in $C$)}.
\end{align*}
Then $\En[X] \ge N^2 p - N^{10} p^{10} \ge \Omega(N^{10/9})$. Take some matrix with value of $X$ at least the expectation and change a $+1$ to a $-1$ in every $E_5$-light matrix to get $C$. Since this does not affect the value of $X$, we know that the resulting matrix $B$ has $\Omega(N^{10/9})$ entries that are $+1$.

Since changing any $+1$ to $-1$ in $C$ will not increase $\Edim_1$, we see that there are at least $2^{\Omega(N^{10/9})}$ distinct sign matrices with $\Edim \le 4$. This concludes the proof of \pref{lem:eluder-matrices-count}.
\end{proof}

\paragraph{A stronger separation?}
Our result does not fully show the separation between eluder dimension and $\signrank$. In the randomized construction used in \pref{lem:eluder-matrices-count}, it could be the case that the matrix $C$ we pick satisfies $\Edim_1(C) \le 4$, but there could be some other $\optpi$ (that is, another column of $C$) such that $\Edim_{\optpi}(C) = \omega(1)$. 
We conjecture that the stronger separation result also holds:

\begin{conjecture}
There exists absolute constants $k \in \bbN$ and $c > 0$ such that the following hold. For every $N > 0$, there exists a function class $\Pi_N \subseteq \crl{-1, +1}^{[N]}$ such that $\Edim(\Pi_N) \le k$ and $\signrank(\Pi_N) \ge \Omega(N^{c}/\log N)$.
\end{conjecture}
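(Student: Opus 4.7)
The plan leverages \pref{lem:dim-ub}, which reduces bounding $\Edim(\Pi_N)$ to simultaneously bounding $\Edim_1(\Pi_N)$ and $\Edim_{-1}(\Pi_N)$. The existing proof of \pref{thm:separation} already yields a matrix with $\Edim_1(\Pi_N) \le 4$ and $\signrank(\Pi_N) \ge \Omega(N^{1/9}/\log N)$, so the task is to upgrade the probabilistic construction of \pref{lem:eluder-matrices-count} so that the resulting matrix \emph{simultaneously} avoids ``$+1$ above diagonal'' configurations of size $k'+1$ and ``$-1$ above diagonal'' configurations of size $k'+1$, for some absolute constant $k'$. Equivalently, by \pref{thm:equivalence}, one can aim for a class with $\max\{\Sdim(\Pi_N), \Tdim(\Pi_N)\} \le k'$ and $\signrank(\Pi_N) \ge \Omega(N^c / \log N)$, and recover the conjecture with $k = 4^{k'}$.

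The most natural first attempt is to replace the Bernoulli$(p)$ ensemble used in \pref{lem:eluder-matrices-count} by a more symmetric ensemble where both sign patterns are controlled at once. Concretely, for a carefully chosen parameter $p$, one would count the expected number of length-$(k'+1)$ threshold-type configurations of either sign, show that it is bounded by (say) $N^2 p / 2$, and then perform a correction that removes a single entry from each bad configuration. Comparing the resulting count of distinct good matrices, which one would hope is at least $2^{\Omega(N^c)}$, with the upper bound of \pref{lem:num-sign-matrices} of $2^{O(rN \log N)}$ matrices of sign rank at most $r$, yields a sign rank lower bound of $\Omega(N^c/\log N)$ for a class with constant eluder dimension.

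The main obstacle is that an ordered Ramsey-type argument appears to rule out purely balanced random matrices from having constant $\max\{\Edim_1, \Edim_{-1}\}$: a uniformly random $\pm 1$ matrix has $\max\{\Edim_1, \Edim_{-1}\} = \Theta(\sqrt{\log N})$ in expectation, leaving a genuine gap to the constant regime. A secondary obstacle is that the deletion step of \pref{lem:eluder-matrices-count} is asymmetric: flipping a $+1$ to $-1$ can destroy a forbidden $\Edim_1$ configuration but may also create a new forbidden $\Edim_{-1}$ configuration elsewhere, so the greedy correction needs to be replaced by a more delicate simultaneous cleanup (for example, randomly re-sampling the involved entries, or alternating deletions of both signs until stability), or avoided entirely by a concentration-based entropy comparison.

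To circumvent the Ramsey barrier altogether, I would also explore explicit algebraic constructions generalizing \pref{prop:parity-eluder-ub}: for instance, classes built from characters of carefully chosen finite abelian groups, or from incidence structures of projective planes, might yield classes with provably constant eluder dimension and polynomially large sign rank by direct analysis rather than via probabilistic counting. Borrowing techniques from the recent literature on separations between Littlestone dimension and sign rank (which, via the Shelah--Hodges theorem, addresses the closely related problem of separating threshold dimension from sign rank) is a promising additional avenue, though those constructions typically control only one of $\Sdim$ or $\Tdim$ and so would need to be hybridized with a star-number analysis to address the full eluder dimension.
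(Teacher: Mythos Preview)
This statement is a \emph{conjecture} in the paper, not a theorem: the paper explicitly leaves it open and does not provide a proof. So there is no proof in the paper to compare your proposal against.

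That said, your plan closely mirrors the paper's own discussion of why the conjecture is hard. The paper notes the same reduction via \pref{thm:equivalence} to bounding $\max\{\Sdim(\Pi_N),\Tdim(\Pi_N)\}$, observes that the threshold-dimension half is easy (via \pref{lem:thres-ub}, giving $\Tdim(\Pi_N)\le 8$ from \pref{thm:separation} directly), and identifies the star-number half as the obstruction: there is no analogue of \pref{lem:thres-ub} for $\Sdim$, and the weaker \pref{lem:dim-ub} only reduces to controlling the dimension with respect to two base functions, which the paper says it does not know how to exploit in the probabilistic construction of \pref{lem:eluder-matrices-count}. Your identification of the asymmetric-deletion obstacle (flipping a $+1$ to $-1$ to kill an $\Edim_1$ configuration may create a new $\Edim_{-1}$ configuration) is exactly the concrete form of this difficulty, and the paper does not resolve it.

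In short: your proposal is a reasonable research outline, and you have correctly diagnosed the same bottleneck the paper flags, but you should be aware that you are attempting to prove an open problem, not reproduce a known argument.
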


In light of \pref{thm:equivalence}, it suffices to show that there exists some function class $\Pi_N$ such that $\max\{\Sdim(\Pi_N), \Tdim(\Pi_N) \} \le k$ and $\signrank(\Pi_N) \ge \Omega(N^{c}/\log N)$.

Consider the easier problem of showing the separation for just threshold dimension. Here there is no difficulty. The key step is to apply \pref{lem:thres-ub} to reduce the problem to showing the result with respect to a \emph{single} $\optpi$. It follows as a corollary of \pref{thm:separation} that there exists a function class $\Pi_N$ such that $\Tdim(\Pi_N) \le 8$ and $\signrank(\Pi_N) \ge \Omega(N^{1/9}/\log N)$. (Using a more direct analysis, it is possible to improve the constants 8 and $1/9$.)

Showing the separation for star number (and eluder dimension) is a different story. We do not have a direct analogue of \pref{lem:thres-ub} for star number and eluder dimension. The weaker \pref{lem:dim-ub} allows us to reduce to showing the separation for two functions; but it is unclear how to leverage this reduction to extend the construction in the proof of \pref{lem:eluder-matrices-count}.

\section{Connection to Spanning Capacity}\label{sec:eluder-spanning}

Now we investigate the relationship between the spanning capacity and previously discussed complexity measures (policy eluder dimension, star number, and threshold dimension).

For clarity of presentation, we make the following assumptions, which are useful but not required. For every $h \in [H]$ we denote the state space at layer $h$ as $\statesp_h \coloneqq \crl{x_{(j,h)} : j \in [K]}$ for some $K \in \bbN$. We will restrict ourselves to binary action spaces $\cA = \crl{-1,+1}$. In addition, we assume that all policy class $\Pi$ under consideration satisfy the following stationarity assumption (in other words, it has parameter sharing across the state indices $j \in [K]$).
\begin{assumption}\label{ass:stationary}
The policy class $\Pi$ satisfies \emph{stationarity}: for every $\pi \in \Pi$ we have 
\begin{align*}
    \pi(x_{(j,1)}) = \pi(x_{(j,2)}) = \cdots = \pi(x_{(j,H)}) \quad \text{for every}~j\in [K].
\end{align*} 
For any $\pi \in \Pi$ and $j \in [K]$, we use $\pi(j)$ as a shorthand for the value of $\pi(x_{(j,h)})$ on every $h$.
\end{assumption} 

Now we state our main result.

\begin{theorem}\label{thm:bounds-on-spanning}
For any policy class $\Pi$ satisfying \pref{ass:stationary} we have
\begin{align*}
  \max\crl*{\min\crl*{\Sdim(\Pi), H + 1},\min\crl*{ 2^{\floor{ \log_2 \Tdim(\Pi)}}, 2^H}} \le \dimRL(\Pi) \le 2^{\Edim(\Pi)}.
\end{align*}
\end{theorem}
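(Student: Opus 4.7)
The plan is to establish each of the three inequalities by a direct MDP construction or a combinatorial extraction argument.

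\textbf{Lower bound from $\Sdim$.} Fix $\optpi \in \Pi$ and a star sequence $(x_1,\pi_1),\ldots,(x_m,\pi_m)$ of length $m = \Sdim_{\optpi}(\Pi)$; by \pref{ass:stationary} the underlying state indices $i_1,\ldots,i_m \in [K]$ may be taken distinct. Setting $L = \min\{m,H\}$, I construct a deterministic MDP whose ``spine'' trajectory passes through $y_h = x_{(i_h,h)}$ for $h \in [L]$, with action $\optpi(i_h)$ advancing to $y_{h+1}$ and action $\neg \optpi(i_h)$ diverting into a private cascade of fresh dead states indexed by the deviation layer. At layer $L$, the reference $\optpi$ and $\pi_L$ realize both actions at $y_L$, while each $\pi_k$ with $k<L$ occupies a distinct dead state reached with at least one action, giving at least $L+1 \ge \min\{m,H+1\}$ reachable state-action pairs.

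\textbf{Lower bound from $\Tdim$.} Fix a threshold sequence of length $m = \Tdim_{\optpi}(\Pi)$; by the threshold structure, at any state $x_r$, policies $\pi_s$ with $s > r$ play $\optpi(x_r)$, while those with $s \le r$ play $\neg \optpi(x_r)$. Let $d = \min\{\lfloor \log_2 m\rfloor, H\}$ and pick any $2^d$ policies from the sequence. I build a deterministic MDP of horizon $d$ whose reachable state tree forms a balanced binary tree of depth $d-1$, constructed recursively: at each node carrying policy index set $S$, assign the state $x_r$ with $r = \mathrm{median}(S)$, so that the two outgoing actions cleanly partition $S$ into its lower and upper halves. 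At MDP layer $d$, each of the $2^{d-1}$ states is visited by exactly two policies playing opposite actions, yielding $2^d$ reachable state-action pairs. Stationarity causes no conflict since distinct nodes at the same MDP layer receive medians of disjoint index sets and thus distinct state indices, and $K \ge m \ge 2^d$ suffices. This is the most delicate of the three steps: the recursive median-split must be consistently embedded in the layered state space, and one must verify both actions are realized at every MDP-layer-$d$ node by two policies sharing the appropriate prefix.

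\textbf{Upper bound.} Let $N = \dimRL(\Pi)$, witnessed by a deterministic MDP $M$ and layer $h^\star$ with $N$ reachable state-action pairs. Form the reachability tree with alternating state and action levels: each state node has one or two reachable action children, each action node has a unique state child by determinism, and the leaves at the bottom correspond to the $N$ reachable state-action pairs at layer $h^\star$. A greedy ``descend into the heavier subtree at every branching state node'' argument yields a root-to-leaf path traversing at least $\log_2 N$ branching state nodes (those with two reachable action children). Letting $\optpi \in \Pi$ be any policy realizing this path, at each such branching state $s$ there exists $\pi_s \in \Pi$ reaching $s$ and playing $\neg \optpi(s)$; since $\pi_s$ shares $\optpi$'s trajectory up to $s$, it agrees with $\optpi$ on all earlier branching states. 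Enumerating the branching states in order of layer produces an eluder sequence with respect to $\optpi$ of length at least $\log_2 N$, so $\Edim(\Pi) \ge \log_2 N$ and hence $\dimRL(\Pi) \le 2^{\Edim(\Pi)}$.
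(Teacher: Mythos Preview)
Your proposal is correct and follows essentially the same approach as the paper: the star-number lower bound uses the same spine-plus-divert construction, the threshold lower bound uses the same balanced binary search tree (your recursive median-split is exactly the inorder-traversal labeling the paper uses), and the upper bound extracts an eluder sequence from the branching states along a root-to-leaf path in the reachability tree. The only noticeable variation is in the upper bound, where you use a direct heavy-subtree descent to locate $\ge \log_2 N$ branching states, whereas the paper packages the same counting via a recursive potential $f(x)$ satisfying (leaves under $x$) $\le 2^{f(x)}$; both arguments then observe that the branching states along the resulting path, together with witnessing policies, form an eluder sequence with respect to the leaf policy $\optpi \in \Pi$.
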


We give several remarks on \pref{thm:bounds-on-spanning}. The proof is deferred to the following subsection. 

It is interesting to understand to what degree we can improve the bounds in \pref{thm:bounds-on-spanning}. On the lower bound side, we note that each of the terms individually cannot be sharpened:
\begin{itemize}
    \item For the singleton class $\Pisingleton \coloneqq \crl{\pi_i(j) \mapsto \ind{j = i} : i \in [K]}$ we have $\dimRL(\Pisingleton) = \min\crl{K, H + 1}$ and $\Sdim(\Pisingleton) = K$.
    \item For the threshold class $\Pi_\mathrm{thres} \coloneqq \crl{\pi_i(j) \mapsto \ind{j \ge i} : i \in [K]}$, when $K$ is a power of two, it can be shown that $\dimRL(\Pi_\mathrm{thres}) = \min\crl{K, 2^H}$ and $\Tdim(\Pi_\mathrm{thres}) = 
    K$.
\end{itemize}
For the upper bound, our bound is exponential in $\Edim(\Pi)$; in light of \pref{thm:equivalence}, there can be a huge gap between the lower bound and upper bound. It is interesting to see whether the upper bound can be improved (possibly, to scale polynomially with $\Edim(\Pi)$, or more directly in terms of some function of $\Sdim(\Pi)$ and $\Tdim(\Pi)$. We do not have any examples of policy classes for which the upper bound improved upon what is shown by \pref{prop:dimension-ub}---for a good reason: it was later shown by \citet[][Thm.~11]{hanneke2024star} that $\Edim(\Pi) \ge \log_2(\Pi)$. Thus, the upper bound in \pref{thm:bounds-on-spanning} cannot offer any improvement at all! We include the result and its proof because it may be a stepping stone to showing refinements in future work.

Lastly, we remark that the lower bound of $\dimRL(\Pi) \ge \min\crl{\Omega(\Tdim(\Pi)), 2^H}$ is especially noteworthy since lower bounds for policy learning in terms of $\dimRL(\Pi)$ (to be shown later on) directly generalize prior work that shows the class of linear policies cannot be learned with $\poly(H)$ sample complexity \citep[e.g.,][]{du2019good}, since linear policies even in 2 dimensions have infinite threshold dimension.

\subsection{Proof of \pref{thm:bounds-on-spanning}}

We will prove each bound separately.

\textbf{Star Number Lower Bound.} 
Let $\bar{\pi}\in \Pi$ and the sequence $(j_1, \pi_1), \dots, (j_N, \pi_N)$ witness $\Sdim(\Pi) = N$. We construct a deterministic MDP $M$ for which the cumulative reachability at layer $h_\mathrm{max} \coloneqq \min\crl{N, H}$ (\pref{def:dimension}) is at least $\min\crl{N, H+1}$. The transition dynamics of $M$ are as follows; we will only specify the transitions until $h_\mathrm{max} - 1$ (afterwards, the transitions can be arbitrary).
\begin{itemize}
    \item The starting state of $M$ at layer $h=1$ is $x_{(j_1, 1)}$.
    \item (On-Chain Transitions): For every $h < h_\mathrm{max}$,
    \begin{align*}
        P(x' \mid x_{(j_h, h)}, a) &= \begin{cases}
            \ind{x' = x_{(j_{h+1}, h+1)}} & \text{if}~a = \bar{\pi}(x_{(j_h, h)}), \\[0.5em]
            \ind{x' = x_{(j_{h}, h+1)}} & \text{if}~a \ne \bar{\pi}(x_{(j_h, h)}).
        \end{cases}
    \end{align*}
    \item (Off-Chain Transitions): For every $h < h_\mathrm{max}$, state index $\tilde{j} \ne j_h$, and action $a \in \cA$,
    \begin{align*}
        P(x' \mid x_{(\tilde{j}, h)}, a) =
            \ind{x' = x_{(\tilde{j}, h+1)}}. 
    \end{align*}
\end{itemize}
We now compute the cumulative reachability at layer $h_\mathrm{max}$. If $N \le H$, the the number of $(x,a)$ pairs that $\Pi$ can reach in $M$ is $N$ (namely the pairs $(x_{(j_1, N)}, 1), \cdots, (x_{(j_N, N)}, 1)$). On the other hand, if $N > H$, then the number of $(x,a)$ pairs that $\Pi$ can reach in $M$ is $H+1$ (namely the pairs $(x_{(j_1, H)}, 1), \cdots, (x_{(j_H, H)}, 1), (x_{(j_H, H)}, 0)$). Thus we have shown that $\dimRL(\Pi) \geq \min \crl{N, H+1}$.

\textbf{Threshold Dimension Lower Bound.}
Let $\bar{\pi}\in \Pi$ and the sequence $(j_1, \pi_1), \dots, (j_N, \pi_N)$ witness $\Tdim(\Pi) = N$. We define a deterministic MDP $M$ as follows. Set $h_\mathrm{max} = \min\crl{\floor{\log_2 N}, H}$. Up until layer $h_\mathrm{max}$, the MDP will be a full binary tree of depth $h_\mathrm{max}$; afterward, the transitions will be arbitrary. It remains to assign state labels to the nodes of the binary tree (of which there are $2^{h_\mathrm{max}} - 1 \le N$). We claim that it is possible to do so in a way so that every policy $\pi_\ell$ for $\ell \in [2^{h_\mathrm{max}}]$ reaches a different state-action pair at layer $h_\mathrm{max}$. Therefore the cumulative reachability of $\Pi$ on $M$ is at least $2^{h_\mathrm{max}} = \min \crl{ 2^{\floor{\log_2 N}}, 2^H }$ as claimed.

It remains to prove the claim. The states of $M$ are labeled $j_2, \cdots, j_{2^{h_\mathrm{max}}}$ according to the order they are traversed using inorder traversal of a full binary tree of depth $h_\mathrm{max}$ \citep{cormen2022introduction}. One can view the MDP $M$ as a binary search tree where the action 0 corresponds to going left and the action 1 corresponds to going right. Furthermore, if we imagine that the leaves of the binary search tree at depth $h_\mathrm{max}$ are labeled from left to right with the values $1.5, 2.5, \cdots, 2^{h_\mathrm{max}} + 0.5$, then it is clear that for any $\ell \in [2^{h_\mathrm{max}}]$, the trajectory generated by running $\pi_{\ell}$ on $M$ is exactly the path obtained by searching for the value $\ell + 0.5$ in the binary search tree. Thus we have shown that the cumulative reachability of $\Pi$ on $M$ is the number of leaves at depth $h_\mathrm{max}$, thus proving the claim.

\textbf{Eluder Dimension Upper Bound.} 
Let $\Edim(\Pi) = N$. We only need to prove this statement when $N \le H$, as otherwise the statement already follows from \pref{prop:dimension-ub}. Let $(M^\star, h^\star)$ be the MDP and layer which witness $\dimRL(\Pi)$. Also denote $x_1$ to be the starting state of $M^\star$. For any state $x$, we denote $\mathrm{child}_0(x)$ and $\mathrm{child}_1(x)$ to be the states in the next layer which are reachable by taking $a=0$ and $a=1$ respectively.

For any reachable state $x$ at layer $h$ in the MDP $M^\star$ we define the function $f(x)$ as follows. For any state $x$ at layer $h^\star$, we set $f(x) \coloneqq 1$ if the state-action pairs $(x,0)$ and $(x,1)$ are both reachable by $\Pi$; otherwise we set $f(x) \coloneqq 0$. For states in layers $h < h^\star$ we set 
\begin{align*}
    f(x) \coloneqq \begin{cases}
        \max \crl{f(\mathrm{child}_0(x)), f(\mathrm{child}_1(x))} + 1 &\text{if both $(x,0)$ and $(x,1)$ are reachable by $\Pi$}, \\
        f(\mathrm{child}_0(x)) &\text{if only $(x,0)$ is reachable by $\Pi$},\\
        f(\mathrm{child}_1(x)) &\text{if only $(x,1)$ is reachable by $\Pi$}.
    \end{cases}
\end{align*}

We claim that for any state $x$, the contribution to $\dimRL(\Pi)$ by policies that pass through $x$ is at most $2^{f(x)}$. We prove this by induction. Clearly, the base case of $f(x) = 0$ or $f(x) = 1$ holds. If only one child of $x$ is reachable by $\Pi$ then the contribution to $\dimRL(\Pi)$ by policies that pass through $x$ equal to the contribution to $\dimRL(\Pi)$ by policies that pass through the child of $x$. If both children of $x$ are reachable by $\Pi$ then the contribution towards $\dimRL(\Pi)$ by policies that pass through $x$ is upper bounded by the sum of the contribution towards $\dimRL(\Pi)$ by policies that pass through the two children, i.e.~it is at most $2^{f(\mathrm{child}_0(x))} + 2^{f(\mathrm{child}_1(x))} \le 2^{f(x)}$. This concludes the inductive argument.

Now we bound $f(x_1)$. Observe that the quantity $f(x_1)$ counts the maximum number of layers $h_1, h_2, \cdots, h_L$ that satisfy the following property: there exists a trajectory $\tau = (x_1, a_1, \cdots, x_H, a_H)$ for which we can find $L$ policies $\pi_1, \pi_2, \cdots, \pi_L$ so that each policy $\pi_\ell$ when run on $M^\star$ (a) reaches $x_{h_\ell}$, and (b) takes action $\pi_\ell(x_{h_\ell}) \ne a_{h_\ell}$. Thus, by definition of the eluder dimension we have $f(x_1) \le N$. Therefore, we have shown that the cumulative reachability of $\Pi$ in $M^\star$ is at most $2^N$. \qed

\section{Bibliographical Remarks}\label{sec:eluder-bibliographic-remarks}
First, we remark that our presentation mostly focused on results pertaining to the \emph{policy eluder dimension}; we refer the interested reader to the paper \cite{li2022understanding} for additional results on the (scale-sensitive) eluder dimension. We now highlight several related works here.

\paragraph{Bounds on Eluder Dimension.} Several papers provide bounds on eluder dimension for various function classes. The original bounds on tabular, linear, and generalized linear functions were proved by \cite{russo2013eluder} (and later generalized by \cite{osband14}). \cite{mou2020sample} provides several bounds for the policy eluder dimension, mostly focusing on linear function classes. The bound in part (ii) of \pref{prop:parity-eluder-ub} was also calculated by \cite[][Prop.~3]{mou2020sample}.
When the function class lies in an RKHS, \cite{huang2021short} show that the eluder dimension is equivalent to the notion of \emph{information gain} \cite{srinivas2009gaussian, russo2016information}, which can be seen as an infinite dimensional generalization of the fact that the eluder dimension for linear functions over $\bbR^d$ is $\tilde{\Theta}(d)$. We also point the reader to \cite{hanneke2024star}, which provides several complementary insights on the relationship between the policy eluder dimension, star number, and threshold dimension (for a discussion on \pref{thm:equivalence}, see their Sec.~D).

\paragraph{Applications of Eluder Dimension.} The main application of the eluder dimension is to design algorithms and prove regret guarantees for contextual bandits and reinforcement learning. A few examples include the papers \cite{wen13,osband14, wang2020statistical, ayoub20, du20, foster2020instance, jin2021bellman, feng2021provably, ishfaq2021randomized, huang2021towards, mou2020sample}. While the majority of papers prove upper bounds via eluder dimension, \cite{foster2020instance} provided lower bounds for contextual bandits in terms of eluder dimension, if one is hoping for instance-dependent regret bounds. In addition, several works observe that eluder dimension sometimes does not characterize the sample complexity, as the guarantee via eluder dimension can be too loose \cite{jin2021bellman,foster2021statistical}. The works \cite{jia2024does, pacchiano2024second} show that eluder dimension plays a fundamental role in characterizing variance-dependent regret bounds for contextual bandits.

Beyond the online RL setting, eluder dimension has been applied to risk sensitive RL \cite{fei2021risk}, Markov games \cite{huang2021towards, jin2021power}, representation learning \cite{xu2021representation}, active online learning \cite{chen2021active}, imitation learning \cite{sekhari2023contextual, sekhari2023contextual, hanneke2025reliable}, and online learning with safety constraints \cite{sridharan2024online}.

\chapter{Generative Model and Local Simulator}\label{chap:generative}
In this chapter, we study the sample complexity of policy learning with simulator access.
\begin{enumerate}
    \item In \pref{sec:generative-spanning-capacity-bounds} we study the minimax sample complexity. We prove upper/lower bounds which show that bounded spanning capacity $\dimRL(\Pi)$ is necessary and sufficient for policy learning with a generative model. These results paint a relatively complete picture for the minimax sample complexity of learning any policy class $\Pi$, in the generative model, up to an $H\cdot \log \abs{\Pi}$ factor. 
    \item In \pref{sec:generative-infinite-policy} we sketch refinements to these bounds for infinite policy classes. Specifically, we show that (1) the $\log \abs{\Pi}$ dependence in the upper bound can be replaced by a suitable complexity for multiclass learning such as Natarajan dimension; and (2) a corresponding lower bound in terms of Natarajan dimension can also be established.
    \item In \pref{sec:generative-adapting-coverability} we investigate whether it is possible to adapt to the intrinsic difficulty of exploration as measured by the coverability coefficient, which can be much smaller than the (worst-case) spanning capacity. We show that this is impossible in general.
\end{enumerate}

\emph{Remark on Generative Model vs.~Local Simulator.} In this chapter, all upper bounds use local simulator access, while all lower bounds hold against the generative model. Therefore, while this distinction is worth keeping in mind, it is not critical for our results.

\section{Spanning Capacity is Necessary and Sufficient}\label{sec:generative-spanning-capacity-bounds}

In this section, we show that for any policy class, the \Compname{} characterizes the minimax sample complexity for agnostic PAC RL under generative model. We formalize the minimax sample complexity as follows.

\begin{definition}\label{def:generative-minimax}
Fix any $(\eps, \delta) \in (0,1)$ as well as a policy class $\Pi$. We denote the minimax sample complexity $\ngen(\Pi; \eps, \delta)$ to be the smallest $n \in \bbN$ for which there exists an algorithm $\alg$ which for any MDP $M$ collects at most $\ngen(\Pi; \eps, \delta)$ samples using generative access to $M$ and returns an $\eps$-optimal policy for $M$ with probability at least $1-\delta$.
\end{definition}

\subsection{Upper Bound}
\begin{theorem}[Upper Bound for Generative Model]
\label{thm:generative_upper_bound}
For any policy class $\Pi$, the minimax sample complexity $(\eps,\delta)$-PAC learning $\Pi$ is at most
\begin{align*}
    \ngen(\Pi; \eps, \delta) \le  \cO \prn*{\frac{H \cdot \dimRL(\Pi)}{\eps^2} \cdot \log \frac{\abs{\Pi}}{\delta}  }.
\end{align*}
\end{theorem}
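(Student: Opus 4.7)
The plan is to combine the coverability characterization of spanning capacity (\pref{lem:coverability}) with an importance-weighted Monte Carlo estimator that enjoys uniform accuracy over $\Pi$. \pref{lem:coverability} gives $\ccov(\Pi, M) \le \dimRL(\Pi)$ on the unknown MDP $M$, and unwinding the reachability form of \pref{def:coverability} produces per-layer distributions $\mu_h(x, a) \propto \sup_{\pi \in \Pi} d^\pi_h(x, a)$ for which $\nrm{d^\pi_h / \mu_h}_\infty \le \dimRL(\Pi)$ simultaneously for every $\pi \in \Pi$ and $h \in [H]$. Using the local simulator, I would promote this existence statement into an algorithmic primitive that draws triples $(x_h, a_h, r_h) \sim \mu_h$ at each layer---for example, by an adaptive procedure that maintains an empirical mixture of rollouts and repeatedly adds trajectories generated by the policy $\pi \in \Pi$ whose occupancy is currently worst-covered, terminating when the induced mixture satisfies the required concentrability condition.

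Given $n$ independent draws $\crl{(x_h^{(i)}, a_h^{(i)}, r_h^{(i)})}_{i \in [n], h \in [H]}$ from the $\mu_h$, define the unbiased importance-weighted estimator
\[
\hat V^\pi \coloneqq \frac{1}{n} \sum_{i=1}^n \sum_{h=1}^H w^\pi_h(x_h^{(i)}, a_h^{(i)}) \cdot r_h^{(i)}, \qquad w^\pi_h \coloneqq \frac{d^\pi_h}{\mu_h}.
\]
Because $w^\pi_h \le \dimRL(\Pi)$ and $\sum_h r_h \le 1$ almost surely, each summand $X_i \coloneqq \sum_h w^\pi_h(x_h^{(i)}, a_h^{(i)}) r_h^{(i)}$ is bounded by $\dimRL(\Pi)$; and using the change of measure $\En_{\mu_h}[w^\pi_h \cdot f] = \En^\pi[f]$ together with $r_h^2 \le r_h$,
\[
\mathrm{Var}(X_i) \le \sum_h \En_{\mu_h}\brk*{(w^\pi_h)^2 r_h^2} \le \dimRL(\Pi) \sum_h \En^\pi[r_h^2] \le \dimRL(\Pi) \cdot V^\pi \le \dimRL(\Pi).
\]
Bernstein's inequality combined with a union bound over $\pi \in \Pi$ then yields $\max_{\pi \in \Pi} \abs{\hat V^\pi - V^\pi} \le \eps$ with probability at least $1-\delta$ as soon as $n \gtrsim \dimRL(\Pi) \log(\abs{\Pi}/\delta)/\eps^2$. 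Each of the $n$ rounds uses $H$ rollouts in the local simulator, giving an overall sample count of $\cO(H \dimRL(\Pi) \log(\abs{\Pi}/\delta)/\eps^2)$ as claimed, and returning $\hat\pi \coloneqq \argmax_{\pi \in \Pi} \hat V^\pi$ is $2\eps$-optimal by the standard ERM argument.

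The main obstacle is turning the coverability-based \emph{existence} of $\mu_h$ into a \emph{sampling subroutine} on a local simulator whose cost does not blow up with $\abs{\Pi}$ or $\abs{\statesp}$. A uniform mixture over $\Pi$ would introduce $\abs{\Pi}$-sized importance weights, while naively enumerating state-actions would incur dependence on $\abs{\statesp}$. I expect the bulk of the technical effort to go into showing that an adaptive covering procedure as above terminates within the claimed sample budget and produces an empirical distribution whose concentrability under every $\pi \in \Pi$ is $\cO(\dimRL(\Pi))$; any slack between $\mu_h$ and its empirical approximation can then be absorbed into the variance bound via mild constant inflation.
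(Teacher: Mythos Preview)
Your approach has two genuine gaps. First, the importance weights $w^\pi_h = d^\pi_h/\mu_h$ are \emph{not computable}: the occupancy measure $d^\pi_h(x,a)$ depends on the unknown transition function of $M$, so the estimator $\hat V^\pi$ cannot be evaluated from samples. (Trajectory-level importance ratios would be computable, but their variance scales with $A^H$, not $\dimRL(\Pi)$.) Second, the sampling subroutine you flag as the ``main obstacle'' is indeed the crux, and the adaptive procedure you sketch would itself require estimating $\nrm{d^\pi_h/\mu_h}_\infty$ for all $\pi$ to identify the worst-covered policy---which again needs the unknown $d^\pi_h$. There is also a smaller error: your boundedness claim $|X_i| \le \dimRL(\Pi)$ invokes $\sum_h r_h \le 1$, but your rewards $r_h^{(i)}$ at different layers are drawn \emph{independently} from different $\mu_h$ rather than along a single trajectory, so the correct bound is $\dimRL(\Pi) \cdot H$.

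The paper's argument avoids all of this by analyzing the \textsf{TrajectoryTree} algorithm of Kearns, Mansour, and Ng. Rather than importance sampling against a reference distribution, it uses the generative model to build $n$ random deterministic MDPs (the ``trees''): starting from $x_1 \sim d_1$, it repeatedly queries the simulator on every $(x,a)$ pair reachable by some $\pi \in \Pi$ in the current tree until no such pairs remain. The key observation is that each tree has at most $H \cdot \dimRL(\Pi)$ reachable state--action pairs by definition of spanning capacity, so each tree costs $H\dimRL(\Pi)$ samples. Every $\pi \in \Pi$ can then be evaluated on each tree by simply following it---no importance weights---and the resulting $\hat v_i^\pi$ are i.i.d.\ unbiased estimates of $V^\pi$ bounded in $[0,1]$, so Hoeffding and a union bound over $\Pi$ give uniform accuracy with $n = \cO(\eps^{-2}\log(|\Pi|/\delta))$ trees.
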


\begin{proof}[Proof of \pref{thm:generative_upper_bound}]
 
We show that, with minor changes, the $\mathsf{TrajectoryTree}$ algorithm of \cite{kearns1999approximate} attains the guarantee in \pref{thm:generative_upper_bound}. The pseudocode can be found in \pref{alg:trajectory-tree}. The key modification is \pref{line:trajectory-tree-sample}: we simply observe that only $(x,a)$ pairs which are reachable by some $\pi \in \Pi$ in the current tree $\wh{T}_i$ need to be sampled (in contrast, the original algorithm of \cite{kearns1999approximate} samples all $A^H$ transitions).
   
\begin{algorithm}[!htp]
\caption{$\mathsf{TrajectoryTree}$ \citep{kearns1999approximate}}\label{alg:trajectory-tree}
    \begin{algorithmic}[1]
            \Require Policy class $\Pi$, generative access to the MDP $M$, number of samples $n$
            \State Initialize dataset of trajectory trees $\cD = \emptyset$.
            \For{$i=1, \dots, n$}
            \State Initialize trajectory tree $\wh{T}_i = \emptyset$.
            \State Sample initial state $x_1^{(i)} \sim \mu$.
            \While{$\mathrm{True}$}   \hfill
            \algcomment{Sample transitions and rewards for a trajectory tree}
            \State{Find any unsampled $(x,a)$ s.t.~$(x,a)$ is reachable in $\wh{T}_i$ by some $\pi \in \Pi$.}\label{line:trajectory-tree-sample}
            \If{no such $(x,a)$ exists} \textbf{break}
            \EndIf
            \State Sample $x' \sim P(\cdot\mid x,a)$ and $r \sim R(x,a)$ \label{line:generative-query}
            \State Add tuple $(x,a,r,x')$ to $\wh{T}_i$.
            \EndWhile
            \State $\cD \gets \cD \cup \wh{T}_i$.
            \EndFor
            \For{$\pi \in \Pi$}
            \hfill
            \algcomment{Policy evaluation}
            \State Set $\wh{V}^\pi \gets \frac{1}{n} \sum_{i=1}^n \wh{v}^\pi_i$, where $\wh{v}^\pi_i$ is the cumulative reward of $\pi$ on $\wh{T}_i$.
            \EndFor
            \State \textbf{Return} $\wh{\pi} \gets \argmax_{\pi \in \Pi} \wh{V}^\pi$.
    \end{algorithmic}
\end{algorithm}

Fix any $\pi \in \Pi$. For every trajectory tree $i \in [n]$, the algorithm has collected enough transitions so that $\wh{v}_i^\pi$ is well-defined, by \pref{line:trajectory-tree-sample} of the algorithm. By the sampling process, it is clear that the values $\crl{\wh{v}_i^\pi}_{i \in [n]}$ are i.i.d.~generated. We claim that they are unbiased estimates of $V^\pi$. Observe that one way of defining $V^\pi$ is the expected value of the following process:
\begin{enumerate}[label=\((\arabic*)\)]
    \item
For every $(x, a) \in \statesp \times \cA$, independently sample a next state $x' \sim P(\cdot\mid x,a)$ and a reward $r \sim R(x,a)$ to define a deterministic MDP $\wh{M}$.
\item Return the value $\wh{v}^\pi$ to be the value of $\pi$ run on $\wh{M}$.
\end{enumerate}
Define the law of this process as $\overline{\cQ}$. The sampling process of $\mathsf{TrajectoryTree}$ (call the law of this process $\cQ$) can be viewed as sampling the subset of $\wh{M}^\mathrm{det}$ which is reachable by some $\pi \in \Pi$. Thus, we have
\begin{align*}
V^\pi = \En_{\wh{M}\sim\overline{\cQ}}\brk*{ \wh{v}^\pi } = \En_{\wh{T}_\sim \cQ} \brk*{ \En \brk*{\wh{v}^\pi \mid{} \wh{T}} } = \En_{\wh{T}_\sim \cQ} \brk*{ \wh{v}^\pi } ,
\end{align*}
where the second equality is due to the law of total probability, and the third equality is due to the fact that $\wh{v}^\pi$ is measurable with respect to the trajectory tree $\wh{T}$. Thus, $\crl{\wh{v}_i^\pi}_{i \in [n]}$ are unbiased estimates of $V^\pi$.

Therefore, by Hoeffding's inequality (\pref{lem:hoeffding}) we see that $\abs{V^\pi - \wh{V}^\pi} \le \sqrt{ \tfrac{\log (2/\delta)}{2n} }$. Applying union bound we see that when the number of trajectory trees exceeds $n \gtrsim \tfrac{\log (\abs{\Pi}/\delta)}{\eps^2}$, with probability at least $1-\delta$, for all $\pi \in \Pi$, the estimates satisfy $\abs{V^\pi - \wh{V}^\pi} \le \eps/2$. Thus, the $\mathsf{TrajectoryTree}$ algorithm returns an $\eps$-optimal policy. Since each trajectory tree uses at most $H \cdot \dimRL(\Pi)$ queries to the generative model, we have the claimed sample complexity bound. This concludes the proof of \pref{thm:generative_upper_bound}.
\end{proof}

\subsection{Lower Bound}

\begin{theorem}[Lower Bound for Generative Model] \label{thm:generative_lower_bound}
For any policy class $\Pi$, the minimax sample complexity $(\eps,\delta)$-PAC learning $\Pi$ is at least
\begin{align*}
    \ngen(\Pi; \eps, \delta) \ge  \Omega \prn*{\frac{\dimRL(\Pi)}{\eps^2} \cdot \log \frac{1}{\delta}  }.
\end{align*}
\end{theorem}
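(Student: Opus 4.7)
The plan is to lower-bound the sample complexity by reducing a multi-armed best-arm identification problem to PAC RL, using the witness of spanning capacity to embed many ``needle in a haystack'' instances into a single hard MDP family. Let $d \coloneqq \dimRL(\Pi)$ and let $(M^\star, h^\star)$ be the deterministic MDP and layer witnessing $d$ (guaranteed by \pref{def:dimension}). Let $\{(x^{(i)}, a^{(i)})\}_{i=1}^d \subseteq \statesp_{h^\star} \times \actionsp$ be the state-action pairs reachable at layer $h^\star$, and for each $i \in [d]$ pick some $\pi^{(i)} \in \Pi$ with $\pi^{(i)} \cons (x^{(i)}, a^{(i)})$. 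Construct an MDP family $\{M_0, M_1, \dots, M_d\}$ whose transitions coincide with $M^\star$ and whose rewards are zero everywhere except at layer $h^\star$; in $M_0$, every $(x^{(i)}, a^{(i)})$ yields a $\mathrm{Ber}(1/2)$ reward, while in $M_i$ for $i \in [d]$, the pair $(x^{(i)}, a^{(i)})$ yields $\mathrm{Ber}(1/2 + 2\eps)$ and the rest yield $\mathrm{Ber}(1/2)$ (after rescaling by $1/(2H)$ if needed to enforce normalization, which only rescales $\eps$ by a constant).

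Next I would verify that distinguishing $M_i$ from $M_0$ is forced by the PAC guarantee. Under $M_i$, since every trajectory of every $\pi \in \Pi$ passes through a unique pair $(x^{(j)}, a^{(j)})$ at layer $h^\star$, we have $V^\pi(M_i) = 2\eps \cdot \ind{\pi \cons (x^{(i)}, a^{(i)})}$ (again up to normalization). Thus any $\eps$-optimal policy $\hat\pi$ for $M_i$ must satisfy $\hat\pi \cons (x^{(i)}, a^{(i)})$, and in particular the algorithm's output uniquely identifies the index $i$. Combined with correctness on $M_0$ (any output is vacuously $\eps$-optimal there), this yields a hypothesis test with $d+1$ alternatives, each pair $(M_0, M_i)$ being $(\eps, \delta)$-distinguishable from the algorithm's output.

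The core step is then a Le Cam / change-of-measure argument tailored to the generative-model protocol. A query to $(x,a)$ under $M_i$ versus $M_0$ has KL divergence zero unless $(x,a) = (x^{(i)}, a^{(i)})$, in which case it equals $\kl{\mathrm{Ber}(1/2+2\eps)}{\mathrm{Ber}(1/2)} \le C\eps^2$. Let $N_i$ denote the (random) number of queries the algorithm makes to $(x^{(i)}, a^{(i)})$. By the chain rule of KL and Wald's identity, $\KL{\Pr_{M_0}}{\Pr_{M_i}} \le C\eps^2 \cdot \En_{M_0}[N_i]$. A standard high-confidence testing lower bound (e.g., Lemma 1 of Kaufmann--Cappé--Garivier, or Theorem 2.2 of Tsybakov) then gives $\En_{M_0}[N_i] \ge \Omega(\log(1/\delta)/\eps^2)$ for every $i \in [d]$. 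Summing over $i$ and using $\sum_i \En_{M_0}[N_i] \le n$ yields $n \ge \Omega(d \log(1/\delta)/\eps^2)$, as required.

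The main obstacle I anticipate is making the per-index lower bound $\En_{M_0}[N_i] \gtrsim \log(1/\delta)/\eps^2$ hold simultaneously for every $i$, since an adaptive algorithm can concentrate queries on a few indices. This is handled cleanly by the standard argument: for each $i$ separately, the two-point test between $M_0$ and $M_i$ forces $\En_{M_0}[N_i] \gtrsim \log(1/\delta)/\eps^2$, and this per-$i$ inequality is unconditional, so summing is valid. A minor bookkeeping point is that, on transitions and non-layer-$h^\star$ rewards, the two instances are identical, so those queries contribute nothing to the KL divergence and can be safely ignored.
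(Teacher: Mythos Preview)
Your approach is essentially the same as the paper's: both fix the deterministic MDP $M^\star$ witnessing $\dimRL(\Pi)$ at layer $h^\star$, place Bernoulli rewards on the $d$ reachable $(x,a)$ pairs, and reduce to a $d$-arm best-arm-identification lower bound. The paper simply cites \cite{mannor2004sample} as a black box for the $\Omega(d\eps^{-2}\log(1/\delta))$ step, whereas you reconstruct that argument via per-arm KL counts.

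There is one small gap in your justification of the summing step. The two-point test between $M_0$ and $M_i$ alone does \emph{not} unconditionally force $\En_{M_0}[N_i]\gtrsim \eps^{-2}\log(1/\delta)$: under $M_0$ every output is $\eps$-optimal, so correctness on $M_0$ places no constraint on $\Pr_{M_0}(E_i)$, and the algorithm could (for a single fixed $i$) happen to output a policy reaching $(x^{(i)},a^{(i)})$ under $M_0$ as well, making $M_0$ and $M_i$ indistinguishable from the output without any pulls of arm $i$. What actually drives the bound is that the events $E_i$ are disjoint, so $\sum_i \Pr_{M_0}(E_i)\le 1$ and hence at most one index can have $\Pr_{M_0}(E_i)>1/2$; for the remaining $d-1$ indices, Bretagnolle--Huber (or \pref{lem:KL-to-kl}) then gives $\En_{M_0}[N_i]\cdot C\eps^2 \ge \kl{\Pr_{M_0}(E_i)}{1-\delta}\gtrsim \log(1/\delta)$, and summing over those $d-1$ indices yields the claim. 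This is the standard argument you allude to, but the mechanism is the disjointness-and-pigeonhole step, not an unconditional per-index two-point test.
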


\begin{proof}[Proof of \pref{thm:generative_lower_bound}]
    
Fix any worst-case deterministic MDP $M^\star$ which witnesses $\dimRL(\Pi)$ at layer $h^\star$. Since $\dimRL(\Pi)$ is a property depending on the dynamics of $M^\star$, we can assume that $M^\star$ has zero rewards. We can also assume that the algorithm knows $M^\star$ and $h^\star$ (this only makes the lower bound stronger). We construct a family of instances $\cM^\star$ where all the MDPs in $\cM^\star$ have the same dynamics as $M^\star$ but different nonzero rewards at the reachable $(x,a)$ pairs at layer $h^\star$.

Observe that we can embed a multi-armed bandit instance with $\dimRL(\Pi)$ arms using the class $\cM^\star$. The value of any policy $\pi \in \Pi$ is exactly the reward that it receives at the \emph{unique} $(x,a)$ pair in layer $h^\star$ that it reaches. Any $(\eps,\delta)$-PAC algorithm that works over the family of instances $\cM^\star$ must return a policy $\estpi$ that reaches an $(x,a)$ pair in layer $h^\star$ with near-optimal reward. Furthermore, in the generative model setting, the algorithm can only receive information about a single $(x,a)$ pair. Thus, such a PAC algorithm must also be able to PAC learn the best arm for multi-armed bandits with $\dimRL(\Pi)$ arms. Therefore, we can directly apply existing PAC lower bounds which show that the sample complexity of $(\eps, \delta)$-PAC learning the best arm for $K$-armed multi-armed bandits is at least $\Omega(\tfrac{K}{\eps^2} \cdot \log \tfrac{1}{\delta})$ \citep[see, e.g.,][]{mannor2004sample}.
\end{proof}

\section{Extension to Infinite Policy Classes}\label{sec:generative-infinite-policy}

In this section, we extend the results from \pref{sec:generative-spanning-capacity-bounds} to the infinite policy classes, using concepts from multiclass PAC learning (for a refresher, we refer the reader to \pref{app:multiclass-learning}). We will state our results in terms of the Natarajan dimension (\pref{def:natarajan}), which is a generalization of the VC dimension used to study multiclass learning. We note that the results in this section could be stated in terms of other complexity measures from multiclass learning such as the graph dimension and DS dimension \citep[see, e.g.,][]{natarajan1989learning, shalev2014understanding, daniely2014optimal, brukhim2022characterization}.

\subsection{Upper Bound Extension}
We modify the analysis of the $\mathsf{TrajectoryTree}$ to account for infinite policy classes and replace the dependence on $\log \abs{\Pi}$ with the Natarajan dimension $\natarajan(\Pi)$ (and additional log factors). Recall that our analysis of $\mathsf{TrajectoryTree}$ required us to prove a uniform convergence guarantee for the estimate $\wh{V}^\pi$: with probability at least $1-\delta$, for all $\pi \in \Pi$, we have $\abs{\wh{V}^\pi - V^\pi} \lesssim \eps$. We previously used a union bound over $\abs{\Pi}$, which gave us the $\log \abs{\Pi}$ dependence. Now we sketch an argument to replace the $\log \abs{\Pi}$ with $\natarajan(\Pi)$. 

Let $\cT$ be the set of all possible trajectory trees. We introduce the notation $v^\pi: \cT \to \bbR$ to denote the function that takes as input a trajectory tree $\wh{T}$ (for example, as sampled by $\mathsf{TrajectoryTree}$) and returns the value of running $\pi$ on it. Then we can rewrite the desired uniform convergence guarantee:
\begin{align*}
    \text{w.p.~at least}~1-\delta, \quad \sup_{\pi \in \Pi} ~ \abs*{ \frac{1}{n} \sum_{i=1}^n v^\pi(\wh{T}_i) - \En\brk*{ v^\pi(\wh{T}) } } \le \eps. \numberthis\label{eq:unif-convergence-statement}
\end{align*}
In light of \pref{lem:uniform-convergence}, we will compute the pseudodimension for the function class $\cV^\Pi = \crl{v^\pi: \pi \in \Pi}$. Define the subgraph class
\begin{align*}
    \cV^{\Pi,+} \coloneqq \crl*{(\wh{T}, \theta) \mapsto \ind{v^\pi(\wh{T}) \le \theta} : \pi \in \Pi} \subseteq \crl{0,1}^{\cT \times \bbR}.
\end{align*}
By definition, $\mathrm{Pdim}(\cV^\Pi) = \mathrm{VC}(\cV^{\Pi,+})$. Fix any $Z = \crl{(\wh{T}_1, \theta_1), \cdots, (\wh{T}_d, \theta_d)} \in (\cT \times \bbR)^d$. In order to show that $\mathrm{VC}(\cV^{\Pi,+}) \le d$ for some value of $d$ it suffices to prove that the projection of $\cV^{\Pi,+}$ onto $Z$ satisfies $\abs{\cV^{\Pi,+}\big|_Z} < 2^d$.

For any index $t\in [d]$, we also denote $\pi(\vec{x}_i) \in \cA^{\le H \dimRL(\Pi)}$ to be the vector of actions selected by $\pi$ on all $\Pi$-reachable states in $\wh{T}_i$ (of which there are at most $H \cdot \dimRL(\Pi)$). We claim that
\begin{align}\label{eq:bound-pseudo}
    \abs[\Big]{\cV^{\Pi,+}\big|_{Z}} \le \abs*{ \crl*{\prn*{\pi(\vec{x}_1), \cdots, \pi(\vec{x}_d) } : \pi \in \Pi} } =: \abs[\Big]{\Pi\big|_Z}. 
\end{align}
This holds because once the $d$ trajectory trees are fixed, for every $\pi \in \Pi$, the value of the vector $\cV^{\Pi, +}\big|_{Z} \in \crl{0,1}^d$ is determined by the trajectory that $\pi$ takes in every trajectory tree. This in turn is determined by the assignment of actions to every reachable state in all the $d$ trajectory trees, of which there are at most $\dimRL(\Pi) \cdot H \cdot d$ of. Therefore, we can upper bound the size of $\cV^{\Pi, +}\big|_{Z}$ by the number of ways any $\pi \in \Pi$ assign actions to every state among the trajectory trees $\wh{T}_1, \cdots, \wh{T}_d$.

Applying \pref{lem:sauer-natarajan} to Eq.~\eqref{eq:bound-pseudo}, we get that 
\begin{align*}
    \abs[\Big]{\cV^{\Pi,+}\big|_{Z}} \le \prn*{\frac{H \dimRL(\Pi) d \cdot e \cdot  (A+1)^2}{2 \natarajan(\Pi)}}^{\natarajan(\Pi)}. 
\end{align*}
For the choice of $d = \wt{\cO}\prn*{\natarajan(\Pi)}$, the previous display is at most $2^d$, thus proving the bound on $\pseudo(\cV^\Pi)$. Lastly, the bound can be plugged back into \pref{lem:uniform-convergence} to get a bound on the error of $\mathsf{TrajectoryTree}$: the uniform convergence bound in Eq.~\eqref{eq:unif-convergence-statement} holds using
\begin{align*}
    n = \wt{\cO}\prn*{ H \dimRL(\Pi) \cdot \frac{\natarajan(\Pi) + \log \frac{1}{\delta}}{\eps^2}} \quad \text{samples}.
\end{align*}
This in turn yields a guarantee on $\wh{\pi}$ returned by $\mathsf{TrajectoryTree}$, via a standard argument.

\subsection{Lower Bound Extension}\label{sec:generative-lower-bound-extend}
Now we address the lower bound. Observe that it is possible to achieve a lower bound that depends on $\natarajan(\Pi)$ with the following construction. First, identify the layer $h\in [H]$ such that the witnessing set $C$ contains the maximal number of states in $\statesp_h$; by the pigeonhole principle there must be at least $\natarajan(\Pi)/H$ such states in layer $h$. Then, we construct an MDP which ``embeds'' a hard multiclass learning problem at layer $h$ over these states. A lower bound of $\Omega \prn*{ \tfrac{\natarajan(\Pi)}{H\eps^2} \cdot \log \tfrac{1}{\delta} }$ follows from \pref{thm:multiclass-fun}.

By combining \pref{thm:generative_lower_bound} with the above idea we get the following corollary. 

\begin{corollary}[Lower Bound for Generative Model with Infinite Policy Classes] 
For any policy class $\Pi$, the minimax sample complexity $(\eps,\delta)$-PAC learning $\Pi$ is at least
\begin{align*}
     \ngen(\Pi; \eps, \delta) \ge  \Omega \prn*{\frac{\dimRL(\Pi) + \natarajan(\Pi)/H}{\eps^2} \cdot \log \frac{1}{\delta}}.
\end{align*}
\end{corollary}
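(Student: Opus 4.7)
The plan is to establish two independent lower bounds on $\ngen(\Pi; \eps, \delta)$ and then combine them via the trivial observation that $\max\{a, b\} \ge (a+b)/2$. The first of the two, $\Omega(\dimRL(\Pi)/\eps^2 \cdot \log(1/\delta))$, follows directly by invoking \pref{thm:generative_lower_bound} as a black box. The second bound, $\Omega((\natarajan(\Pi)/H)/\eps^2 \cdot \log(1/\delta))$, is the new content and requires a separate hard-instance construction that embeds a hard multiclass classification problem into a single layer of the MDP.

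To obtain the second bound, I would start by extracting a set $C = \{x_1, \ldots, x_{\natarajan(\Pi)}\} \subseteq \statesp$ together with two labelings $f_0, f_1 : C \to \actionsp$ that witness $\natarajan(\Pi)$; by definition, every target labeling $y^\star \in \{f_0, f_1\}^C$ is realized on $C$ by some policy in $\Pi$. Because $\statesp$ is layered, pigeonhole produces a layer $h \in [H]$ containing at least $d \coloneqq \lceil \natarajan(\Pi)/H \rceil$ of these witness states; relabel them $x_1, \ldots, x_d$. I would then construct a family of MDPs all sharing the same fixed dynamics: the initial distribution together with deterministic routing through layers $1, \ldots, h-1$ deposits probability mass (roughly uniformly) onto $\{x_1, \ldots, x_d\}$ at layer $h$, and layers $h+1, \ldots, H$ carry no reward. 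At layer $h$, for each $i \in [d]$ I choose a hidden target label $y_i^\star \in \{f_0(x_i), f_1(x_i)\}$ and set the Bernoulli reward on $(x_i, y_i^\star)$ to exceed that on the opposite action by a gap of order $\eps$. Since every choice of $y^\star$ is realized by some $\pi \in \Pi$, competing with the best policy in $\Pi$ on this family of instances forces the learner to identify $y_i^\star$ on a constant fraction of the $d$ coordinates.

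The main obstacle will be executing the embedding so that (i) all $d$ witness states at layer $h$ are simultaneously reachable under the fixed dynamics while remaining consistent with the Natarajan witnesses, and (ii) a generative-model query is no more informative than a query in the corresponding multiclass contextual-bandit problem. For (i), the layered structure lets me dedicate disjoint routing paths through layers $1, \ldots, h-1$ (adding auxiliary states if needed) so that each $x_i$ is reachable with probability $\Omega(1/d)$; the Natarajan witnesses guarantee that the prescribed action assignments at layer $h$ are consistent with policies in $\Pi$, regardless of what actions they must take on the routing portion. For (ii), each generative query at an $(x, a)$ pair returns only one noisy Bernoulli reward and a deterministic successor state, which matches exactly one classification-style label query; hence standard multiclass PAC lower bounds transfer without loss. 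Invoking \pref{thm:multiclass-fun} on the embedded instance then yields $\Omega(d/\eps^2 \cdot \log(1/\delta))$ samples, and combining with the first lower bound via the $\max$ trick completes the proof.
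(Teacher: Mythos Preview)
Your proposal is correct and follows essentially the same approach as the paper: obtain the $\dimRL(\Pi)$ term by invoking \pref{thm:generative_lower_bound}, obtain the $\natarajan(\Pi)/H$ term by pigeonholing the Natarajan witness set onto a single layer and embedding a hard multiclass learning instance there (appealing to \pref{thm:multiclass-fun}), and then combine via $\max$. The paper's argument is a terse sketch; your plan fills in more implementation details (routing to the witness states, the generative-query-to-label-query equivalence) but does not deviate from the paper's strategy.
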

Since the generative model setting is easier than online RL, this lower bound also extends to the online RL setting.

\emph{Remark.} Our lower bound is additive in $\dimRL(\Pi)$ and $\natarajan(\Pi)$; we do not know if it is possible to strengthen this to be a product of the two factors, as is shown in the upper bound extension.

\section{Adapting to Coverability is Statistically Intractable}\label{sec:generative-adapting-coverability}
We have shown that the minimax (i.e., worst case over all MDPs) sample complexity for learning with a generative model or local simulator is characterized by the spanning capacity $\spancap(\Pi)$. Unfortunately, spanning capacity is exponentially large for many policy classes of interest (such as linear policies, see \pref{sec:eluder-spanning}) and can be arbitrarily larger than coverability. Naturally, one might wonder if it is possible to achieve instance-dependent guarantees which adapt to coverability $\ccov(\Pi, M)$.

Our main result in this section is that in general, it is not possible to adapt to coverability, even with generative access.

\begin{theorem}\label{thm:lower-bound-coverability} For any $H \in \bbN$, there exists a policy class $\Pi$ of size $2^H$ and a family of MDPs $\cM$ over a  state space of size $2^{O(H)}$, binary action space, and horizon $H$ such that every $M \in \cM$ satisfies $\ccov(\Pi, M) = 2$ and $\Pi$ is policy complete\footnote{See \pref{def:policy-completeness}. Policy completeness is a strong \emph{representational condition} on $\Pi$ which asserts that the policy class $\Pi$ is closed under the policy improvement operator. Further discussion is deferred to \pref{chap:mu-reset}.} for $M$, so that any proper deterministic algorithm that returns a $1/8$-optimal policy must use at least $2^{\Omega(H)}$ generative access samples for some MDP in $\cM$. 
\end{theorem}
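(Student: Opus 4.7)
The plan is to construct a family of MDPs in which a large state space is used to hide the structure needed to identify $\pi^\star$, so that any generative-model learner is effectively forced into brute-force search. First I would construct the family: fix $K \coloneqq 2^H$ and $N \coloneqq 2^{2H}$, set $\cX_h = [N]$ at each layer $h \in [H]$ with a fixed initial state $x_1$, and index each $M_{b^\star, \ell^\star} \in \cM$ by a hidden bitstring $b^\star \in \{0,1\}^H$ together with hidden ``live'' sets $\cX^+_h \subseteq \cX_h$ of size $K$ (with $x_1 \in \cX^+_1$); write $\cX^-_h \coloneqq \cX_h \setminus \cX^+_h$. The transition from $x \in \cX^+_h$ with $a = b^\star_h$ returns the next state uniformly on $\cX^+_{h+1}$, and in every other case (wrong action from $\cX^+_h$, or any action from $\cX^-_h$) it returns the next state uniformly on $\cX^-_{h+1}$. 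The reward is $R(x,a) = \ind{x \in \cX^+_H \text{ and } a = b^\star_H}$. Finally take $\Pi = \{\pi_b : b \in \{0,1\}^H\}$ with $\pi_b(x) = b_h$ for $x \in \cX_h$, so that $|\Pi| = 2^H$ and $|\statesp| = HN = 2^{O(H)}$.

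Next I would verify the structural conditions. A short induction on $h$ shows that $\pi_b$ remains in $\cX^+$ through layer $h$ iff $b_{1:h-1} = b^\star_{1:h-1}$, whence $V^{\pi_b} = \ind{b = b^\star}$, so $\pi^\star = \pi_{b^\star} \in \Pi$ and any $1/8$-optimal policy must equal $\pi^\star$. For coverability: under any $\pi_b$ the state at layer $h$ is uniform over exactly one of $\cX^+_h$ (size $K$) or $\cX^-_h$ (size $N-K$), hence $\sum_x \sup_\pi d^\pi_h(x) = K \cdot (1/K) + (N-K) \cdot (1/(N-K)) = 2$, so the state coverability is exactly $2$ (the state-action variant is $O(1)$). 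For policy completeness: $Q^{\pi_b}_h(x,a) = \ind{x \in \cX^+_h,\ a = b^\star_h} \cdot \ind{b_{h+1:H} = b^\star_{h+1:H}}$, so the greedy improvement of any $\pi_b$ coincides with $\pi_{b^\star}$ on $\cX^+$-states and is unconstrained on $\cX^-$-states; choosing $b^\star_h$ everywhere lifts the improvement to $\pi_{b^\star} \in \Pi$.

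For the lower bound I would invoke Yao's minimax principle with the uniform prior on $(b^\star, \ell^\star)$. Marginally over the prior on $\ell^\star$, every query's next state is uniform on $\cX_{h+1}$ and carries no first-order information about $b^\star$; moreover the probability of observing reward $1$ on any single query is $K/N \cdot 1/2 = 2^{-H-1}$, so by a union bound the probability that \emph{any} of $T$ queries ever produces a reward is at most $T \cdot 2^{-H-1}$. Conditioned on the no-reward event, the only residual signal comes from correlations among the observed next states (two queries that secretly land on the same side of a hidden partition can only be linked by a coinciding observation), and a birthday-type argument shows that detecting such a coincidence requires $\Omega(\sqrt{K \cdot (N-K)}) = \Omega(K)$ queries. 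Making this precise via a chain-rule KL bound yields $I(b^\star;\tau) = O(T/K)$ for the transcript $\tau$, so Fano's inequality forces the success probability of any proper deterministic algorithm to be bounded away from $1$ unless $T \ge c \cdot 2^H$; the usual minimax reduction then gives the claimed $2^{\Omega(H)}$ sample complexity for some $M \in \cM$.

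The main technical obstacle is rigorously controlling the per-query information contribution while allowing the algorithm to adapt its queries (and potentially exploit the rare reward observations), since a clever learner could in principle combine transition observations with a handful of reward hits to locate several $\cX^+_h$-states at once. The key estimate to establish is that each transition query at layer $h$ contributes at most $O(1/K)$ bits of mutual information with $b^\star$; I would prove this by coupling the algorithm's transcript under a random $M_{b^\star,\ell^\star}$ to the transcript under a ``reference'' MDP whose transitions are uniform on $\cX_{h+1}$ regardless of $(x,a)$, and bounding the single-step KL by a direct calculation that exploits the small overlap $|\cX^+_h|/|\cX_h| = K/N = 2^{-H}$.
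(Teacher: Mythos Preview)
Your construction is essentially the same as the paper's: a rich-observation combination lock with unbalanced emission supports ($|\cX^+_h|=2^H$ inside a layer of size $2^{2H}$), open-loop policy class, reward only at $(x,a)\in\cX^+_H\times\{b^\star_H\}$. The structural verifications (coverability, policy completeness, identification of $\pi^\star$) match as well.

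Where you diverge is in the information-theoretic machinery. You propose Yao $+$ Fano, aiming for a mutual-information bound $I(b^\star;\tau)=O(T/K)$ obtained by coupling to a uniform-transition reference MDP and a birthday argument. The paper instead uses the interactive Le~Cam convex-hull method: it partitions the parameter space by the last bit $b^\star_H$, takes uniform mixtures over both $b^\star$ and the hidden partition $\ell^\star$, and bounds the TV distance between the two resulting transcript laws via the chain rule, separating transition terms from reward terms. The paper then handles precisely the obstacle you flag (adaptive exploitation of rare reward hits) by isolating two explicit high-probability events---$\cE_F$ (every freshly generated state in layers $\geq 2$ has a bad label, so the generative oracle is no stronger than local simulation) and $\cE_N$ (no repeated transition observations)---and by an explicit posterior calculation showing that, on these events, the posterior of $b^\star_{1:H-1}$ stays within $O(T^2H/2^H)$ of uniform. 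This replaces your proposed per-query $O(1/K)$ KL estimate; note that in the paper's analysis the $t$-th step contributes $O(t/K)$ in TV (not $O(1/K)$) because information about the partition accumulates, though the endpoint is the same $2^{\Omega(H)}$ bound. Your Fano route would work, but to make it rigorous you would end up tracking the same two events and the same posterior-stability argument, so the Le~Cam/TV framing mostly just packages these steps more directly.
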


The proof is deferred to \pref{sec:lower-bound-generative-access}. For now, we discuss some implications of \pref{thm:lower-bound-coverability} as well as the key ideas behind the construction. 

\pref{thm:lower-bound-coverability} shows that even under the strongest interaction protocol and the strongest representational condition on $\Pi$, the mere existence of a good exploratory distribution $\mu$ is insufficient for policy learning. In other words, it formalizes the folklore intuition that ``policy learning methods cannot explore''. Prior work \cite[Proposition 4.1 of][]{agarwal2020pc} suggests that policy gradient methods may fail to explore due to vanishing gradients; \pref{thm:lower-bound-coverability} shows that this is not an algorithmic limitation of policy gradient methods but an information theoretic barrier. Furthermore, there is no contradiction between \pref{thm:lower-bound-coverability} and works which imbue policy gradient methods with exploration capabilities \cite{agarwal2020pc, zanette2021cautiously, liu2024optimistic}, since the latter impose stronger dynamics and/or function approximation assumptions.

Additionally, \pref{thm:lower-bound-coverability} reveals a strict separation between policy-based RL and value-based RL with a local simulator. Under the stronger assumption that the learner has access to a $Q$-function class $\cF \in [0,1]^{\statesp \times \actionsp}$ satisfying value function realizability ($Q^\star \in \cF$), \cite[Theorem 3.1 of ][]{mhammedi2024power} gives an algorithm that achieves sample complexity $\poly(\ccov, H, \log \abs{\cF}, 1/\eps, \log 1/\delta)$. Again, this is not in contradiction with our result because in \pref{thm:lower-bound-coverability}, the implicitly defined value function class $\cF$ has cardinality which is double-exponential in $H$.   

\paragraph{Key Ideas for \pref{thm:lower-bound-coverability}.} The lower bound construction takes the form of \emph{rich observation combination locks}, which are Block MDP variants of the classic combination lock construction \cite{sutton2018reinforcement}. At a high level, the latent transitions of these instances are given by a combination lock parameterized by an unknown open-loop policy $\optpi \in \Pi_\mathsf{open}$; taking the optimal policy $\optpi$ gives the learner reward of 1, while deviation from $\optpi$ at any layer gives the learner reward of zero. Also, the emission function $\emission$ for each state is supported on an exponentially large set which is a-priori unknown to the learner (hence the name ``rich observations''). Such constructions have appeared in previous lower bounds for online RL \cite{sekhari2021agnostic, jia2023agnostic}. The classic combination lock can easily be solved in $\poly(H)$ samples using tabular RL approaches which use the principle of \emph{optimism in the face of uncertainty}---when the learner sees a previously observed state $x_h$, they explore by trying out a new action $a_h$ since it could potentially lead to higher reward. However, the addition of rich observations makes the problem statistically intractable, since it is likely that the learner always sees new observations, so they cannot identify what latent state they are in or when they have deviated from $\optpi$ in a given episode.

\begin{figure}[ht!]
    \centering
\includegraphics[scale=0.32, trim={0cm 19.5cm 32cm 0cm}, clip]{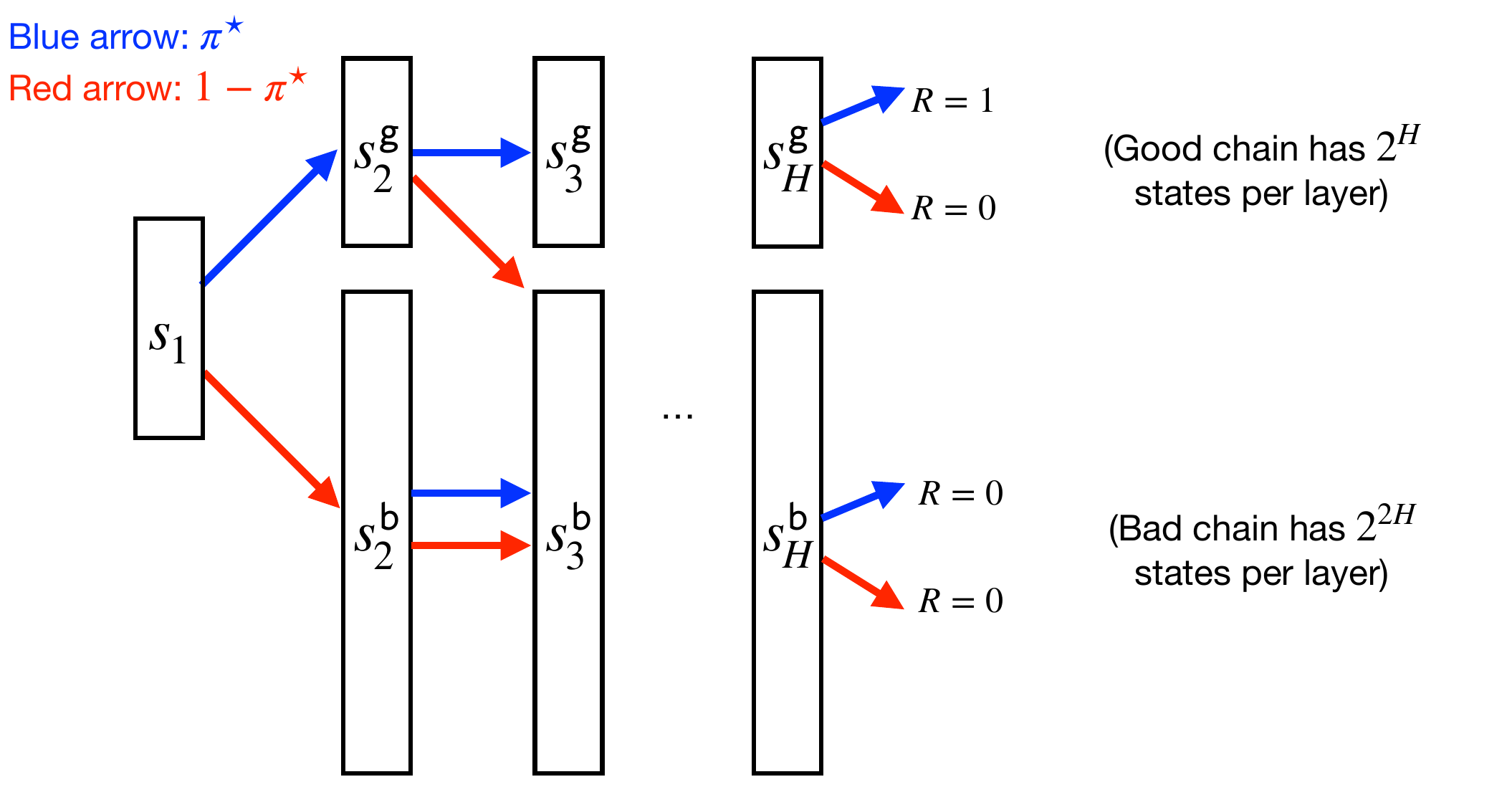}
    \caption{Construction used for proof of \pref{thm:lower-bound-coverability}.} 
    \label{fig:lb1}
\end{figure}

Since the rich observation combination lock is a Block MDP, it naturally satisfies small coverability. (Furthermore, exploratory distributions $\mu$ can be constructed which satisfy small concentrability.) Therefore, it is a natural starting point for proving lower bounds in our setting. 
An example of the lower bound construction can be found in \pref{fig:lb1}. In order to prevent the learner from using the more powerful generative model, the lower bound construction has unbalanced emission supports: namely for all $h \ge 2$, the support of $\emission(s_h^\mathsf{g})$ is of size $2^H$, while the support of $\emission(s_h^\mathsf{b})$ is of size $2^{2H}$. Intuitively, the learner receives little information unless they can sample from $(s_H^\mathsf{g}, \optpi_H)$ and receive reward of 1. Since the emission support for $s_h^\mathsf{g}$ is exponentially smaller than that of $s_h^\mathsf{b}$, unless the learner guesses $\exp(H)$ times with the generative model, it is likely that they only receive observations sampled from $s_h^\mathsf{b}$. Stated in a different way, it is not possible for the learner to construct an exploratory distribution $\mu$ which has $\cconc = \poly(H)$, even using $\poly(H)$ adaptive queries to the generative model. Thus, the generative model provides no real additional power over the online RL setting, for which we know $2^{\Omega(H)}$ lower bounds \cite{sekhari2021agnostic}.

\subsection{Proof of \pref{thm:lower-bound-coverability}}\label{sec:lower-bound-generative-access}

% The construction of \pref{thm:lower-bound-coverability} is given by the rich observation combination lock, which has appeared in previous lower bounds for RL \cite{sekhari2021agnostic, jia2023agnostic}. Since the rich observation combination lock is a Block MDP with $2$ latent states per layer, it satisfies $\ccov = 2$. The key intuition is that the set of observations associated with latent states on the good chain is much smaller than the set of observations associated with latent states on the bad chain. Therefore, even though coverability is small, the learner cannot effectively use the generative model to ``guess'' observations which are emitted from states on the good chain. In other words, they cannot sample from a distribution with low concentrability, which is crucial for learning $\optpi$.

\paragraph{Lower Bound Construction.} We define a family of Block MDPs $\cM = \crl{M_{\optpi, \optdec}}_{\optpi \in \Pi, \optdec \in \Phi}$ which are parameterized by an optimal policy $\optpi \in \Pi$ and a decoding function $\optdec \in \Phi$ (to be described). 

\begin{itemize}
    \item \textbf{Policy Class:} We define the policy class $\Pi$ to be open loop policies:
\begin{align*}
    \Pi \coloneqq \crl{\pi: \forall x \in \statesp_h, \pi_h(x) \equiv a_h, (a_1, \cdots, a_H) \in \actionsp^H}.
\end{align*}
For partial policies $\pi_{h_\bot:h_\top} \in \Pi_{h_\bot:h_\top}$ we sometimes drop the subscript $h_\bot:h_\top$ if clear from context. Observe that any policy can be identified with the sequence of actions taken at every layer $h$. We also overload equality to compare partial policies $\pi_{h_\bot: h_\top}$ with complete policies $\pi'_{1:H}$, i.e., we write $\pi = \pi'$ iff $\pi_{h_\bot: h_\top} = \pi'_{h_\bot: h_\top}$.
    \item \textbf{Latent MDP:} The latent state space $\latentsp$ is layered where each $\latentsp_h \coloneqq \crl{ s_h^\sgood, s_h^\sbad}$ is comprised of a good and bad state. We abbreviate the state as $\crl{\sgood, \sbad}$ if the layer $h$ is clear from context. The action space $\actionsp = \crl{0,1}$. The starting state is always $\sgood$. Let $\optpi \in \Pi$ be any policy, which can be represented by a vector in $(\pi^\star_1, \cdots, \pi^\star_H) \in \crl{0,1}^H$. The latent transitions/rewards of an MDP parameterized by $\optpi \in \Pi$ are given by the standard combination lock. For every $h \in [H]$: 
    \begin{align*}
        \optlatp(\cdot  \mid  s, a) = \begin{cases}
            \delta_{s_{h+1}^\sgood} & \text{if}~s = s_h^\sgood ~\text{and}~a = \pi^\star_h\\
            \delta_{s_{h+1}^\sbad}& \text{otherwise}.
        \end{cases} \quad \text{and} \quad  \optlatr(s,a) = \ind{s = s_H^\sgood, a = \pi^\star_H}.
    \end{align*} 
    \item \textbf{Rich Observations:} The observation state space $\statesp$ is layered where each $\cX_h \coloneqq \crl{x_h^{(1)}, \cdots, x_h^{(m)}}$ with $m = 2^{2H}$. The decoding function class $\Phi$ is the collection of all decoders which for every $h \ge 2$ assigns $s_h^\sgood$ to a subset of $\statesp_h$ of size $2^H$ and $s_h^\sbad$ to the rest:
    \begin{align*}
        \Phi &\coloneqq \crl*{\optdec: \statesp \mapsto \latentsp :~\forall~ x_1 \in \statesp_1,~\optdec(x_1) = \sgood,  \text{ and } \forall~h \ge 2, \abs*{ \crl*{x \in \statesp_h: \optdec(x) = \sgood}} = 2^H},\\
             &\text{so that}~\abs{\Phi} = \binom{2^{2H}}{2^H}^{H-1} = 2^{2^{\wt{O}(H)}}.
    \end{align*}
    In the MDP parameterized by $\optdec \in \Phi$, the emission for every $s \in \latentsp$ is
    \begin{align*}
        \emission(s) = \unif\prn*{\crl*{x \in \statesp_h: \optdec(x) = s}}.
    \end{align*}
    
    % \begin{align*}
    %     \emission(s_h^\sgood) = \unif\prn*{\crl*{x \in \statesp_h: \optdec(x) = s_h^\sgood}}, \quad \text{and} \quad \emission(s_h^\sbad) = \unif\prn*{\crl*{x \in \statesp_h: \optdec(x) = s_h^\sbad}}.
    % \end{align*}
\end{itemize}

Now we establish several facts about the lower bound construction. Fix any $M = M_{\optpi, \optdec}$.
\begin{enumerate}
    \item Since $M$ is a Block MDP with 2 latent states per layer, $\ccov(\Pi, M) = 2$.
    \item The class $\Pi$ satisfies policy completeness with respect to $M$. To see this, fix any layer $h \in [H]$ and partial policy $\pi \in \Pi_{h+1:H}$. We have:
    \begin{align*}
        \forall~(x,a) \in \statesp_h \times \actionsp:\quad Q^\pi(x,a) = \ind{\optdec(x) = s_h^\sgood, a = \pi^\star_h, \pi = \pi^\star_{h+1:H}}.
    \end{align*}
    Therefore in \pref{def:policy-completeness} we can take $\wt{\pi}_h \coloneqq \pi^\star_h$, which satisfies $\wt{\pi}_h \in \argmax_{a \in \actionsp} Q^\pi(x,a)$ for all $x \in \statesp_h$.
\end{enumerate}

\paragraph{Notation for Algorithms.}  We use $\alg$ to denote a deterministic algorithm that collects $T$ samples, i.e., full-length episodes. For technical convenience, we will suppose that $\alg$ sequentially queries the generative model by looping over layers, i.e., it queries $(X_1, A_1) \in \statesp_1 \times \actionsp$, then $(X_2, A_2) \in \statesp_2 \times \actionsp$, etc. This only increases the sample complexity of $\alg$ by a factor of $H$, which is negligible since we will show that $\alg$ requires $\exp(H)$ samples.

For any $t \in [T]$ we define $\cF_{t-1}$ to be the sigma-field of everything observed in the first $t-1$ episodes. We further define for any $h \in [H]$ the filtration $\cF_{t,h-1}$ to be the sigma-field of everything observed in the first $t-1$ episodes as well as the first $h-1$ steps of the $t$-th sample. For the generative model, $\cF_{t,h-1} \coloneqq \sigma( \cF_{t-1}, \crl{\prn{X_{t,i}, A_{t,i}, R_{t,i}, X'_{t,i}}}_{i \le h-1} )$, where $R_{t,i}$ and $X'_{t,i}$ are the reward and transition that is returned by the environment. The tuple $(X_{t,h}, A_{t,h})$ is measurable with respect to $\cF_{t,h-1}$ (since $\alg$ is deterministic).

\paragraph{Sample Complexity Lower Bound.} We use an interactive variant of Le Cam's Convex Hull Method (\pref{thm:interactive-lecam-cvx}) to prove our lower bound. First we need to instantiate the parameter space. We will let $\Theta \coloneqq \crl{(\optpi, \optdec):~\optpi \in \Pi, \optdec \in \Phi}$ so that $\cM = \crl{M_\theta}_{\theta \in \Theta} = \crl{M_{\optpi, \optdec}}_{\optpi \in \Pi, \optdec \in \Phi}$. We further denote the subsets 
\begin{align*}
    \Theta_0 &\coloneqq \crl{(\optpi, \optdec):~\optpi \in \Pi~\text{s.t.}~\optpi_H = 0, \optdec \in \Phi} \\
    \Theta_1 &\coloneqq \crl{(\optpi, \optdec):~\optpi \in \Pi~\text{s.t.}~\optpi_H = 1, \optdec \in \Phi}
\end{align*}
The observation space $\cY$ is defined as the set of observations over $T$ rounds as well as returned proper policy for an algorithm interacting with the MDP, i.e.,
\begin{align*}
    \cY \coloneqq (\statesp \times \actionsp \times \statesp \times [0,1])^{HT} \times \Pi.
\end{align*}

For an observation $y \in \cY$ we define the final returned policy as $y^\pi$. The loss function is given by
\begin{align*}
    L((\optpi, \phi), y) \coloneqq \ind{\optpi \ne y^\pi}. 
\end{align*}
Then we have for any $y \in \cY$, $(\optpi_0, \phi_0) \in \Theta_0$, and $(\optpi_1, \phi_1) \in \Theta_1$ that
\begin{align*}
    L((\optpi_0, \phi_0), y) +  L((\optpi_1, \phi_1), y) \ge 1 \coloneqq 2\Delta,
\end{align*}
since the last bit of $y^\pi$ can be either 0 or 1, thus only matching exactly one of $\optpi_0$ and $\optpi_1$.

Now we are ready to apply \pref{thm:interactive-lecam-cvx}. We get that for any $\alg$, we must have
\begin{align*}
    \sup_{(\optpi, \optdec) \in \Pi \times \Phi} \En_{Y \sim \Pr^{M_{\optpi, \optdec}, \alg}} \brk*{V^\star - V^{\estpi}} &= \sup_{(\optpi, \optdec) \in \Pi \times \Phi} \En_{Y \sim \Pr^{M_{\optpi, \optdec}, \alg}} \brk*{1 - \ind{\optpi = Y^\pi} } \\
    &= \sup_{(\optpi, \optdec) \in \Pi \times \Phi} \En_{Y \sim \Pr^{M_{\optpi, \optdec}, \alg}} \brk*{ L((\optpi, \phi), Y) } \\
    &\ge \frac{1}{4} \cdot \max_{\nu_0 \in \Delta(\Theta_0), \nu_1 \in \Delta(\Theta_1)} \prn*{1 - \dtv\prn*{\Pr^{\nu_0, \alg}, \Pr^{\nu_1, \alg}}} \\
    &\ge \frac{1}{4} \cdot \prn*{1 - \dtv\prn*{\Pr^{\unif(\Theta_0), \alg}, \Pr^{\unif(\Theta_1), \alg}}}. 
\end{align*}
It remains to compute an upper bound $\dtv\prn*{\Pr^{\unif(\Theta_0), \alg}, \Pr^{\unif(\Theta_1), \alg}}$ which holds for any $\alg$. This is accomplished by the following lemma.

\begin{lemma}\label{lem:tv-bound-generative} Let $T = 2^{O(H)}$. For any deterministic $\alg$ that adaptively collects $HT$ samples via generative access, we have
\begin{align*}
    \dtv\prn*{\Pr^{\unif(\Theta_0), \alg}, \Pr^{\unif(\Theta_1), \alg}} \le \frac{T^4H}{2^{H-9}}.
\end{align*}
\end{lemma}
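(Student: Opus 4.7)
The argument will have two parts: a coupling-based reduction that converts the TV distance to the probability of a distinguishing event $E$, and a chain-of-discoveries bound on $\Pr[E]$.

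For the reduction, I couple $\Pr^{\unif(\Theta_0),\alg}$ and $\Pr^{\unif(\Theta_1),\alg}$ by drawing $(\optpi_{1:H-1}, \optdec)$ once from the shared marginal, sampling $\optpi_H$ independently as $0$ and $1$ in the two branches, and running $\alg$ against both MDPs with the same realization of the MDP randomness. Under this coupling every observation returned to $\alg$ is identical in the two branches, since the transitions and emissions depend only on $(\optpi_{1:H-1}, \optdec)$. The only possible point of divergence is the layer-$H$ reward $\ind{\optdec(X_{t,H}) = \sgood, A_{t,H} = \optpi_H}$, which equals $0$ in both branches whenever $\optdec(X_{t,H}) = \sbad$. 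Hence the coupled transcripts can disagree only on the event $E \coloneqq \{\exists t : \optdec(X_{t,H}) = \sgood\}$, and the coupling lemma gives $\dtv(\Pr^{\unif(\Theta_0),\alg}, \Pr^{\unif(\Theta_1),\alg}) \le \Pr[E]$.

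To bound $\Pr[E]$, I plan to use a deferred-revelation simulator in which $\optdec(X)$ is sampled only when the algorithm first queries or observes a fresh $X$. Under the uniform prior on $\Phi$, symmetry together with the fact that at most $HT \ll 2^H$ observations have ever been revealed will yield $\Pr[\optdec(X) = \sgood \mid \cF_{t,h-1}] \le 2 \cdot 2^{-H}$ for any fresh $X \in \statesp_h$ with $h \ge 2$. A layer-$H$ query can thus have $\optdec(X_{t,H}) = \sgood$ for one of two reasons: (a) $X_{t,H}$ is a fresh guess that happens to decode to $\sgood$, contributing at most $O(T \cdot 2^{-H})$ across all $T$ layer-$H$ queries; or (b) $X_{t,H}$ was produced as the outcome of a successful layer-$(H{-}1)$ query, which recursively requires a $\sgood$ layer-$(H{-}1)$ input and the correct action guess $A = \optpi_{H-1}$. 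Unrolling, each chain of length $H$ with fixed action sequence matches $\optpi_{1:H-1}$ with probability $2^{-(H-1)}$ by independence of $\optpi_{1:H-1}$ from the observations under the coupling.

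The main obstacle is the adaptivity of $\alg$ across episodes: because the learner can reuse previously observed $\sgood$ observations, the set of chains is not a fixed list and cannot simply be union-bounded. I plan to control this through a potential function counting the number of $\sgood$ observations discovered at each layer, and to show that it grows by at most $O(T \cdot 2^{-H})$ per layer from fresh guesses plus an inherited carry-over from the previous layer. Combining a layer-by-layer union bound over $h \in [H]$, the $T$ queries per layer, and the polynomial number of ways an adaptive algorithm can stitch queries across episodes will produce the claimed $T^4 H / 2^{H-9}$ bound. The combinatorial constraint $|\optdec^{-1}(\sgood)_h| = 2^H$ that breaks independence between labels only perturbs the posterior by constant factors as long as $T \ll 2^{H/4}$, which is precisely the regime in which the bound is nontrivial.
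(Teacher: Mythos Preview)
Your coupling reduction in the first paragraph is correct and is a genuinely different (and cleaner) route than the paper's. The paper applies the TV chain rule and then splits into a \emph{transition TV distance} term and a \emph{reward TV distance} term, handling each separately via triangle inequality against a reference distribution (uniform over $\cX_{h+1}$ for transitions, $\delta_0$ for rewards). Your coupling observation---that under shared $(\pi^\star_{1:H-1},\phi)$ the two branches produce identical transcripts until the first layer-$H$ query with $\phi(X_{t,H})=\sgood$---collapses both terms into a single event $E$, and in particular removes the need for a separate transition-side bound. This is a real simplification.

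The second half of your plan, however, has a gap at the line ``each chain of length $H$ with fixed action sequence matches $\pi^\star_{1:H-1}$ with probability $2^{-(H-1)}$ by independence of $\pi^\star_{1:H-1}$ from the observations under the coupling.'' This independence is false: the transition outcome $X'$ depends on whether the next latent state is $\sgood$ or $\sbad$, which in turn depends on whether $A=\pi^\star_h$, so the stream of observed $X'$ values \emph{does} carry information about $\pi^\star_{1:H-1}$. An adaptive algorithm could in principle exploit this. What is actually true (and what the paper proves as its Lemma on the posterior of $\pi^\star$) is that the posterior of $\pi^\star_{1:H-1}$ given the filtration remains \emph{approximately} uniform, with deviation $O(HT^2/2^H)$, because the marginal of each $X'$ is approximately uniform over $\cX_{h+1}$ regardless of the latent transition. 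This approximate uniformity is the heart of the argument and does not come for free from the coupling; you still need to compute it, and it is where most of the work lies. Your potential-function sketch and ``polynomial number of ways to stitch'' do not substitute for this posterior calculation---the paper's route is to condition on a high-probability event that all freshly queried states have bad labels and no transitions repeat, then view the dataset as a DAG with at most $T$ root-to-layer-$H$ paths, and argue that under the approximately-uniform posterior any fixed path matches $\pi^\star_{1:H-1}$ with probability $\lesssim 2^{-(H-1)}$.
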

Plugging in \pref{lem:tv-bound-generative}, we conclude that for any $\alg$ that collects $2^{cH}$ samples for sufficiently small constant $c > 0$ must be $1/8$-suboptimal in expectation. This concludes the proof of \pref{thm:lower-bound-coverability}.\qed

\subsection{Proof of \pref{lem:tv-bound-generative} (TV Distance Calculation for \pref{thm:lower-bound-coverability})}

Let us denote $\nu_0 \coloneqq \unif(\Theta_0)$ and $\nu_1 \coloneqq \unif(\Theta_1)$. By the TV distance chain rule (\pref{lem:chain-rule-tv}) we have
\begin{align*}
     & \dtv\prn*{\Pr^{\nu_0, \alg}, \Pr^{\nu_1, \alg}} \\
     &\le \sum_{t=1}^T \sum_{h=1}^H \En^{\nu_0, \alg} \brk*{\dtv \prn*{\Pr^{\nu_0, \alg} \brk*{ X_{t,h}, A_{t,h} \mid \cF_{t,h-1} }, \Pr^{\nu_1, \alg} \brk*{ X_{t,h}, A_{t,h} \mid \cF_{t,h-1} }}} \\
     &\qquad + \En^{\nu_0, \alg} \brk*{\dtv \prn*{\Pr^{\nu_0, \alg} \brk*{ X_{t,h}', R_{t,h} \mid X_{t,h}, A_{t,h}, \cF_{t,h-1} }, \Pr^{\nu_1, \alg} \brk*{ X_{t,h}', R_{t,h} \mid X_{t,h}, A_{t,h}, \cF_{t,h-1} }}} \\
     &= \sum_{t=1}^T \sum_{h=1}^H \En^{\nu_0, \alg} \brk*{\dtv \prn*{\Pr^{\nu_0, \alg} \brk*{ X_{t,h}', R_{t,h} \mid \cF_{t,h-1} }, \Pr^{\nu_1, \alg} \brk*{ X_{t,h}', R_{t,h} \mid \cF_{t,h-1} }}} \\
     &= \underbrace{ \sum_{t=1}^T \sum_{h=1}^{H-1} \En^{\nu_0, \alg} \brk*{\dtv \prn*{\Pr^{\nu_0, \alg} \brk*{ X_{t,h}' \mid  \cF_{t,h-1} }, \Pr^{\nu_1, \alg} \brk*{ X_{t,h}' \mid  \cF_{t,h-1} }}} }_{\text{transition TV distance}} \\
     &\qquad + 
     \underbrace{ \sum_{t=1}^T \En^{\nu_0, \alg} \brk*{\dtv \prn*{\Pr^{\nu_0, \alg} \brk*{ R_{t,H} \mid  \cF_{t,H-1} }, \Pr^{\nu_1, \alg} \brk*{ R_{t,H} \mid  \cF_{t,H-1} }}} }_{\text{reward TV distance}}.
\end{align*}
The first equality follows from the fact that the TV distance for the distribution over state-action pairs $(X_{t,h}, A_{t,h})$ is zero since $(X_{t,h}, A_{t,h})$ is measurable with respect to $\cF_{t,h-1}$. The second equality follows because the rewards only come at the last layer in every MDP instance.

We now show how to bound each term separately.

\paragraph{Transition TV Distance.} For the transition TV distance, we have the following computation for all $t \in [T], h \in [H-1]$:
\begin{align*}
    \hspace{2em}&\hspace{-2em} \En^{\nu_0, \alg} \brk*{\dtv \prn*{ \Pr^{\nu_0, \alg} \brk*{ X_{t,h}' \mid  \cF_{t, h-1} }, \Pr^{\nu_1, \alg} \brk*{X_{t,h}' \mid  \cF_{t, h-1} } } } \\
    &\overset{(i)}{\le} \En^{\nu_0, \alg} \brk*{\dtv \prn*{ \Pr^{\nu_0, \alg} \brk*{ X_{t,h}' \mid  \cF_{t, h-1} }, \unif(\cX_{h+1}) } } \\
    &\qquad + \En^{\nu_0, \alg} \brk*{\dtv \prn*{ \Pr^{\nu_1, \alg} \brk*{ X_{t,h}' \mid  \cF_{t, h-1} }, \unif(\cX_{h+1}) } } \\ 
    &\overset{(ii)}{\le} \frac{t}{2^{H-3}}. \numberthis \label{eq:ub-transition-calculation}
\end{align*}
The inequality $(i)$ follows by triangle inequality and the inequality $(ii)$ uses \pref{lem:tv-distance-from-uniform-generative}.

\paragraph{Reward TV Distance.} 
We can compute that
\begin{align*}
    \hspace{2em}&\hspace{-2em} \En^{\nu_0, \alg} \brk*{\dtv \prn*{\Pr^{\nu_0, \alg} \brk*{ R_{t,H} \mid  \cF_{t,H-1} }, \Pr^{\nu_1, \alg} \brk*{ R_{t,H} \mid  \cF_{t,H-1} }}} \\
    &\overset{(i)}{\le} \En^{\nu_0, \alg} \brk*{\dtv \prn*{\Pr^{\nu_0, \alg} \brk*{ R_{t,H} \mid  \cF_{t,H-1} },  \delta_0 }} + \En^{\nu_0, \alg} \brk*{\dtv \prn*{\Pr^{\nu_1, \alg} \brk*{ R_{t,H} \mid  \cF_{t,H-1} },  \delta_0 }} \\
    &\overset{(ii)}{=} \En^{\nu_0, \alg} \brk*{\Pr^{\nu_0, \alg} \brk*{ R_{t,H} = 1 \mid  \cF_{t,H-1} } } + \En^{\nu_0, \alg} \brk*{ \Pr^{\nu_1, \alg} \brk*{ R_{t,H} = 1 \mid  \cF_{t,H-1} } } \\
    &\overset{(iii)}{\le} t \cdot \frac{T^2H}{2^{H-8}}. \numberthis\label{eq:reward-tv}
\end{align*}
The inequality $(i)$ follows by triangle inequality, while $(ii)$ uses the fact that the rewards are in $\crl{0,1}$. Lastly, $(iii)$ follows by \pref{lem:reward-bound-lb1}.

\paragraph{Final Bound.} Thus, combining Eqs.~\eqref{eq:ub-transition-calculation} and \eqref{eq:reward-tv} we can conclude that:
\begin{align*}
    \dtv\prn*{\Pr^{\nu_0, \alg}, \Pr^{\nu_1, \alg}} \le \frac{T^2H}{2^{H-3}} + \frac{T^4H}{2^{H-8}} \le \frac{T^4H}{2^{H-9}}.
\end{align*}
This concludes the proof of \pref{lem:tv-bound-generative}.\qed

\begin{lemma}[Transition TV Distance for Construction in \pref{thm:lower-bound-coverability}]\label{lem:tv-distance-from-uniform-generative}
For any $t \in [T], h \in [H-1]$, we have
\begin{align*}
    \nrm*{\Pr^{\nu_0, \alg}\brk*{X_{t,h}'  \mid X_{t,h}, A_{t,h}, \cF_{t, h-1}} - \unif(\statesp_{h+1}) }_1 &\le \frac{t}{2^{H-2}}, \\
    \nrm*{\Pr^{\nu_1, \alg}\brk*{X_{t,h}'  \mid X_{t,h}, A_{t,h}, \cF_{t, h-1}} - \unif(\statesp_{h+1})}_1 &\le \frac{t}{2^{H-2}}.
\end{align*}
\end{lemma}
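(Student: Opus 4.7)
I prove the bound under $\nu_0$; the $\nu_1$ case is identical. Fix a realization of $\cF_{t,h-1}, X_{t,h}, A_{t,h}$, and write $p_y \coloneqq \Pr^{\nu_0, \alg}[X_{t,h}' = y \mid X_{t,h}, A_{t,h}, \cF_{t,h-1}]$ for each $y \in \statesp_{h+1}$. Let $\cS \subseteq \statesp_{h+1}$ denote the set of layer-$(h+1)$ observations that appear anywhere in $\cF_{t,h-1}$---either as a past transition $X_{s,h}'$ or as a past query $X_{s,h+1}$ for some $s < t$. Each prior round contributes at most two such observations, so $\abs{\cS} \le 2(t-1)$.

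For any fixed $(\optpi, \optdec) \in \Theta_0$, the transition kernel $P_{\optpi, \optdec}(\cdot \mid X_{t,h}, A_{t,h})$ is uniform on either $\optdec^{-1}(s_{h+1}^\sgood)$ or $\optdec^{-1}(s_{h+1}^\sbad)$, and both sets have cardinality at least $2^H$. Averaging over the posterior on $(\optpi, \optdec)$ therefore gives $p_y \le 2^{-H}$ pointwise, and in particular $\sum_{y \in \cS} p_y \le 2t \cdot 2^{-H}$.

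Next I argue by symmetry that $p_y$ is constant across $y \in \statesp_{h+1} \setminus \cS$. Given any two unseen $y_1, y_2$, the involution $\sigma_{y_1, y_2}$ on $\Theta_0$ that fixes $\optpi$ and swaps the decoder values $\optdec(y_1) \leftrightarrow \optdec(y_2)$ maps $\Phi$ to itself (swapping two values preserves the sizes of the preimages required by $\Phi$) and hence preserves the uniform prior on $\Theta_0$. Since $y_1, y_2 \notin \cS$ and neither equals the layer-$h$ query $X_{t,h}$, the swap leaves every term in the likelihood of $\cF_{t,h-1}$ unchanged and also leaves the deterministic query $(X_{t,h}, A_{t,h})$ unchanged; consequently the posterior on $\Theta_0$ is $\sigma_{y_1, y_2}$-invariant. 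Combined with the elementary identity $P_{\optpi, \optdec}(y_1 \mid \cdot) = P_{\sigma_{y_1, y_2}(\optpi, \optdec)}(y_2 \mid \cdot)$, this forces $p_{y_1} = p_{y_2}$. By normalization, this common value equals $\bar p = (1 - \sum_{y \in \cS} p_y)/(2^{2H} - \abs{\cS})$.

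Finally, the $L^1$ distance decomposes into contributions from $\cS$ and from $\statesp_{h+1} \setminus \cS$. A short algebraic simplification yields the identity
\begin{align*}
(2^{2H} - \abs{\cS}) \cdot \abs{\bar p - 2^{-2H}} = \Bigl|\sum_{y \in \cS} p_y - \abs{\cS}/2^{2H}\Bigr|,
\end{align*}
so the unseen contribution is bounded by $\sum_{y \in \cS} p_y + \abs{\cS}/2^{2H}$, and the seen contribution $\sum_{y \in \cS}\abs{p_y - 2^{-2H}}$ satisfies the same bound. Plugging in $\sum_{y \in \cS} p_y \le 2t \cdot 2^{-H}$ and $\abs{\cS} \le 2t$ gives the claimed estimate, up to absolute constants. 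The main delicate step is the symmetry argument: one must verify that swapping the decoder labels at two unseen observations truly preserves the posterior on $\Theta_0$, which crucially uses that $\alg$ is deterministic and that the layer of $X_{t,h}$ is disjoint from the layer of the swap.
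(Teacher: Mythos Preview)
Your symmetry argument is correct and constitutes a genuinely different proof from the paper's. The paper proceeds by introducing an \emph{annotated} filtration $\cF_{t,h-1}'$ that additionally reveals the latent label $\optdec(X)$ of every observed $X$ and whether each past action matched $\optpi$. Conditional on this richer $\sigma$-field, the next-state distribution can be written down explicitly by casework on the (now measurable) target latent label $\ell \in \{\sgood,\sbad\}$, giving a closed-form bound $\le 4\abs{\cX_\mathsf{obs}}/2^H$; Jensen's inequality then transfers the bound back to the original filtration. Your route instead stays at the level of the unannotated posterior and exploits exchangeability: the involution $\sigma_{y_1,y_2}$ on $\Theta_0$ that swaps decoder values at two unseen layer-$(h+1)$ observations preserves both the uniform prior (since $\Phi$ is preserved under the swap) and the likelihood of $\cF_{t,h-1}$ (since no likelihood factor touches $\optdec(y_1)$ or $\optdec(y_2)$), forcing $p_{y_1}=p_{y_2}$. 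This is arguably cleaner---it avoids the explicit casework on $\ell$ and the intermediate conditioning step---though the verification that $\sigma_{y_1,y_2}$ truly leaves every likelihood factor invariant is the place where care is needed, as you flag. The paper's approach has the advantage of being entirely mechanical once the annotated filtration is introduced.

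One minor point: your final numerics give roughly $2\bigl(\sum_{y\in\cS}p_y + \abs{\cS}/2^{2H}\bigr) \le 4t(1+2^{-H})/2^H$, which lands at $t/2^{H-3}$ rather than the stated $t/2^{H-2}$. You hedge with ``up to absolute constants'', which is fine for the downstream use in \pref{lem:tv-bound-generative}, but as written you have not proved the lemma with its exact constant.
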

\begin{proof}[Proof of \pref{lem:tv-distance-from-uniform-generative}]
We prove the bound for $\nu_0$, since the proof for $\nu_1$ is identical. Denote the ``annotated'' sigma-field
\begin{align*}
    \cF_{t, h-1}' = \sigma \Bigg( &\cF_{t, h-1}, X_{t,h}, A_{t,h}, \crl*{\optdec(X): X \in \cF_{t, h-1} \cup \crl{X_{t,h}} }, \\
    &\crl*{ \ind{A = \optpi(X)}: (X,A) \in \cF_{t, h-1} \cup \crl{X_{t,h}, A_{t,h}} } \Bigg) 
\end{align*}
to be the sigma-field which also includes the latent state labels for all of the seen observations as well as whether the actions taken followed $\optpi$ or not. Let us denote $\ell = \phi(X_{t,h}') \in \crl{\sgood, \sbad}$ to be the latent state of the next observation. Observe that the label $\ell$ is measurable with respect to $\cF_{t,h-1}'$ since the filtration $\cF_{t-1}'$ includes $\optdec(X_{t,h})$ as well as $\ind{A_{t,h} = \optpi(X_{t,h})}$. Furthermore denote $\cX_\mathsf{obs}$ to denote the total number of observations that we have encountered already in layer $h+1$ and $\cX_\mathsf{obs}^\ell$ to denote the observations we have encountered whose latent state is $\ell$.

Under the uniform distribution over decoders, the assignment of the remaining observations is equally likely. Therefore we can write the distribution of $X_{t,h}'$ as: 
\begin{align*}
    \text{if}~\ell = \sgood:&\quad \Pr^{\nu_0, \alg} \brk*{X_{t,h}' = x  \mid  \cF_{t,h-1}'} = \begin{cases}
        \frac{1}{2^H} &\text{if}~x \in \cX_\mathsf{obs}^\ell \\
        0 &\text{if}~x \in \cX_\mathsf{obs} - \cX_\mathsf{obs}^\ell \\
        \frac{1}{2^H} \cdot \frac{2^H - \abs{\cX_\mathsf{obs}^\ell}}{2^{2H} - \abs{\cX_\mathsf{obs}}} &\text{if}~x \in \cX_{h+1} - \cX_\mathsf{obs}
    \end{cases}\\
    \text{if}~\ell = \sbad:&\quad \Pr^{\nu_0, \alg} \brk*{X_{t,h}' = x  \mid  \cF_{t,h-1}'} = \begin{cases}
        \frac{1}{2^{2H}-2^{H}} &\text{if}~x \in \cX_\mathsf{obs}^\ell \\
        0 &\text{if}~x \in \cX_\mathsf{obs} - \cX_\mathsf{obs}^\ell \\
        \frac{1}{2^{2H}-2^{H}} \cdot \frac{2^{2H}-2^{H} - \abs{\cX_\mathsf{obs}^\ell}}{2^{2H} - \abs{\cX_\mathsf{obs}}} &\text{if}~x \in \cX_{h+1} - \cX_\mathsf{obs}
    \end{cases}
\end{align*}
We elaborate on the calculation for the last probability in each case. Suppose $\ell = \sgood$. Then for any $x \in \cX_{h+1} - \cX_\mathsf{obs}$ which has not been observed yet we assign $\optdec(x) = \ell$ in
\begin{align*}
    &\binom{2^{2H} - \abs{\cX_\mathsf{obs}} -1}{2^{H} -\abs{\cX_\mathsf{obs}^\ell} - 1 } \text{ ways out of } \binom{2^{2H} - \abs{\cX_\mathsf{obs}}}{2^{H} -\abs{\cX_\mathsf{obs}^\ell } } \text{ assignments.} \\
    &\quad \Longrightarrow \optdec(x) = \sgood \text{ with probability } \frac{2^{H} -\abs{\cX_\mathsf{obs}^\ell}}{2^{2H} - \abs{\cX_\mathsf{obs}}}.
\end{align*}
For each assignment where $\optdec(x) = \sgood$ we will select it with probability $1/2^H$ since the emission is uniform, giving us the final probability as claimed. A similar calculation can be done for the case where $\ell = \sbad$.

Therefore we can calculate the final bound that
\begin{align*}
    \nrm*{\Pr^{\nu_0, \alg}\brk*{X_{t,h}'  \mid  \cF_{t, h-1}'} - \unif(\statesp_{h+1}) }_1 &= \sum_{x \in \cX_{h+1}} \abs*{\Pr^{\nu_0, \alg}\brk*{X_{t,h}' = x \mid  \cF_{t, h-1}'} - \frac{1}{2^{2H}}} \\
    &\le \begin{cases}
        \frac{\abs{\cX_\mathsf{obs}^\sgood}}{2^H} + \frac{\abs{\cX_\mathsf{obs}^\sbad} }{2^{2H}} + \abs*{\frac{2^H - \abs{\cX_\mathsf{obs}^\sgood}}{2^{H}} - \frac{2^{2H} - \abs{\cX_\mathsf{obs}}}{2^{2H}}} &\text{if}~\ell = \sgood, \\[0.5em]
        \frac{\abs{\cX_\mathsf{obs}^\sbad}}{2^{2H}- 2^H} + \frac{\abs{\cX_\mathsf{obs}^\sgood} }{2^{2H}} + \abs*{\frac{2^{2H} - 2^H - \abs{\cX_\mathsf{obs}^\sbad}}{2^{2H} - 2^H} - \frac{2^{2H} - \abs{\cX_\mathsf{obs}}}{2^{2H}}} &\text{if}~\ell = \sbad.
    \end{cases}\\
    &\le \frac{4 \cdot \abs{\cX_\mathsf{obs}}}{2^H} \le \frac{4 t}{2^H}.
\end{align*}
Since $\Pr^{\nu_0, \alg}\brk*{X_{t,h}'  \mid  X_{t,h}, A_{t,h}, \cF_{t, h-1}} = \En^{\nu_0, \alg} \Pr^{\nu_0, \alg}\brk{X_{t,h}'  \mid  \cF_{t, h-1}'}$, we have by convexity of TV distance and Jensen's inequality,
\begin{align*}
    \nrm*{\Pr^{\nu_0, \alg}\brk*{X_{t,h}'  \mid X_{t,h}, A_{t,h}, \cF_{t, h-1}} - \unif(\statesp_{h+1}) }_1 \le \frac{4 t}{2^H},
\end{align*}
which concludes the proof of \pref{lem:tv-distance-from-uniform-generative}.
\end{proof}

\begin{lemma}[Reward Bound for Construction in \pref{thm:lower-bound-coverability}]\label{lem:reward-bound-lb1} 
Let $T \le 2^H$. For any $t \in [T]$:
    \begin{align*}
        \En^{\nu_0, \alg} \brk*{\Pr^{\nu_0, \alg} \brk*{ R_{t,H} = 1 \mid  \cF_{t,H-1} } } &\le t \cdot \frac{HT^2}{2^{H-7}}.\\
        \En^{\nu_0, \alg} \brk*{ \Pr^{\nu_1, \alg} \brk*{ R_{t,H} = 1 \mid  \cF_{t,H-1} } } &\le t \cdot \frac{HT^2}{2^{H-7}}.
    \end{align*}
\end{lemma}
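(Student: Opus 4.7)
The plan is to bound $\Pr^{\nu_0, \alg}[R_{s, H} = 1]$ for each $s \le T$; by the tower property this equals the left-hand side of the first inequality, and the second inequality and the $\nu_1$ case follow by analogous arguments. The strategy is to couple with a ``null'' process $\Pr^\mathrm{null}$ that runs $\alg$ on a modified MDP in which the transition at every step is drawn uniformly from $\cX_{h+1}$ instead of from $\emission(\cdot)$, while preserving the prior $(\optpi, \optdec) \sim \nu_0$ and the reward structure $R_{t,H} = \ind{\optdec(X_{t,H}) = s_H^{\sgood}, A_{t,H} = \optpi_H}$.

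Under the null, transitions carry no information about $(\optpi, \optdec)$, so the posterior on $\optdec$ given $\cF_{s, H-1}$ is determined entirely by observed rewards. Letting $\tau$ denote the first episode with $R_{t,H} = 1$ and conditioning on $\tau > s - 1$, the only constraints on $\optdec$ learned from the history are that each prior layer-$H$ query $(X_{t', H}, 0)$ with $t' < s$ has $\optdec(X_{t', H}) = s_H^{\sbad}$; there are at most $s - 1$ such constraints. A direct Bayes computation on the uniform prior over $2^H$-element subsets of $\cX_H$ yields, for any $x \in \cX_H$ not in this ruled-out set,
\begin{align*}
\Pr^\mathrm{null}\brk*{\optdec(x) = s_H^{\sgood} \mid \cF_{s, H-1}, \tau > s - 1} \le \frac{2^H}{2^{2H} - (s - 1)} \le 2^{-H+1}.
\end{align*}
Since $X_{s, H}$ is deterministic given $\cF_{s, H-1}$, this gives $\Pr^\mathrm{null}[R_{s, H} = 1, \tau > s - 1] \le 2^{-H+1}$, and applying the bound inductively in $s$ together with a union bound produces $\Pr^\mathrm{null}[R_{s, H} = 1] \le s \cdot 2^{-H+1}$.

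To transfer to the real process, the TV chain rule (\pref{lem:chain-rule-tv}) localizes the discrepancy between real and null to the transition steps at each $(t, h < H)$, and \pref{lem:tv-distance-from-uniform-generative} summed over the at most $sH$ such steps yields
\begin{align*}
\dtv\prn*{\Pr^{\nu_0, \alg}, \Pr^\mathrm{null}} \le \sum_{t \le s} \sum_{h < H} \frac{4t}{2^H} \le \frac{4 s^2 H}{2^H}.
\end{align*}
Combining the two pieces gives $\Pr^{\nu_0, \alg}[R_{s, H} = 1] \le s \cdot 2^{-H+1} + 4 s^2 H / 2^H \le s^2 H / 2^{H-3}$, which is at most $s \cdot HT^2 / 2^{H-7}$ for every $s \le T$, matching the stated bound with room to spare in the constants.

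The main obstacle will be controlling the reward-TV contribution in the chain rule cleanly, because the conditional distribution of $R_{t, H}$ given the same history formally differs between real and null: the posterior on $(\optpi, \optdec)$ differs since real transitions are informative about $\optdec$ while null transitions are not. I expect this is most naturally resolved either by running the chain rule only up to step $(s, H-1)$ and bounding the final posterior step directly via the annotated sigma-field argument from the proof of \pref{lem:tv-distance-from-uniform-generative}, or by an induction on $s$ in which earlier bounds on $\Pr^{\nu_0, \alg}[R_{t, H} = 1]$ absorb the reward-TV contribution at prior steps. The second statement $\En^{\nu_0, \alg}[\Pr^{\nu_1, \alg}[R_{t, H} = 1 \mid \cF_{t, H-1}]]$ is handled analogously: under $\nu_1$ the algorithm instead needs $A_{t, H} = 1$, and a parallel posterior calculation shows that the $\nu_1$-posterior of $\optdec(X_{t, H}) = s_H^{\sgood}$ at any $\nu_0$-sampled history is also bounded by $O(s / 2^H)$, closing the argument.
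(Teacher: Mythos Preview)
Your null-process coupling is natural and the bound $\Pr^\mathrm{null}[R_{s,H}=1]\le O(s/2^H)$ is fine, but the obstacle you flag---reward TV in the chain rule---is the heart of the lemma, not a technicality, and neither fix you sketch closes it without reproducing the paper's posterior analysis. The point is that \pref{lem:tv-distance-from-uniform-generative} controls the \emph{marginal} next-state law, not the posterior on $(\optdec,\optpi)$ given a full history. Concretely, if $X_{s,H}$ arose as a transition output along a chain $X_1\to\cdots\to X_{s,H}$ in the DAG of observations, then under $\Pr^{\nu_0}$ the event $\optdec(X_{s,H})=\sgood$ is \emph{equivalent} to that chain's actions matching $\optpi_{1:H-1}$; the null process forgets this entirely. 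Hence the residue $\En^{\nu_0}\bigl[\ind{\cE_{R,t-1}}(f_{\nu_0}-f_\mathrm{null})\bigr]$ that your coupling leaves behind---whether you halt the chain rule at $(s,H-1)$ or absorb prior reward-TV by induction---is governed by $\Pr^{\nu_0}[\text{some observed path matches }\optpi\mid\cF]$, and this is not small unless you separately argue that the $\nu_0$-posterior on $\optpi_{1:H-1}$ remains close to uniform. Your Fix~1 (annotated sigma-field) gives $O(2^{-H})$ only when $X_{s,H}$ is a genuinely fresh query; the transition-output case is exactly where the information about $\optpi$ lives, and the annotation does not bypass it.

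The paper handles this directly, without the null detour. Two moves make the posterior tractable: (i) condition on the high-probability event $\eventfresh$ that every fresh generative-model query at layer $\ge 2$ has \emph{bad} latent label (this is where the $2^H$-vs-$2^{2H}$ emission-support imbalance enters), which under the DAG viewpoint forces $\optdec(X_{t,H})=\sgood$ to imply a path from layer~1 to $X_{t,H}$ with edge-actions equal to $\optpi_{1:H-1}$; (ii) a direct likelihood-ratio computation (\pref{lem:posterior-of-optimal-generative}) showing that under $\eventfresh\wedge\eventnew\wedge\cE_{R,t-1}$ the posterior on $\optpi_{1:H-1}$ is within $O(HT^2/2^H)$ of uniform in total variation. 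A union bound over the at most $T$ paths in the DAG then gives the fresh-step bound, and the induction $\Pr[\cE_{R,t}^c\wedge\cdots]\le\Pr[\cE_{R,t-1}^c\wedge\cdots]+O(HT^2/2^H)$ closes linearly. If you push either of your fixes through carefully you will land on exactly these two moves; the null coupling repackages the difficulty but does not reduce it.
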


\begin{proof}[Proof of \pref{lem:reward-bound-lb1}]

To show the proof, we use induction to show that the probability of see nonzero reward remains small throughout the entire execution of $\alg$.

\paragraph{Peeling Off Bad Events.} First, we will peel off a couple ``bad'' events which occur with low probability:
\begin{itemize}
    \item Let $\eventfresh$ be the event that every freshly sampled observation (i.e., querying the generative model on some observation $X_{t,h} \notin \cF_{t,h-1}$) in any layer $h \ge 2$ has a bad label:
    \begin{align*}
        \eventfresh \coloneqq \crl*{\phi(X_{t,h}) = \sbad \text{ for every } t \in [T], h \ge 2, X_{t,h} \notin \cF_{t,h-1}}.
    \end{align*}
    We will show in \pref{lem:fresh-states} that due to the unbalanced sizes of every layer and the uniform distribution over decoders, $\eventfresh$ must occur with high probability. This captures the intuition that the generative model affords no additional power over local simulation, since data generated from states with a bad label are not informative, and with high probability all fresh samples have a bad label. 
    \item Let $\eventnew$ be the event that every sampled transition is a new observation that has never been seen before:
    \begin{align*}
        \eventnew \coloneq \crl*{X_{t,h}' \notin \cF_{t,h-1} \text{ for every }t \in [T], h \in [H] }.
    \end{align*} 
    We show in \pref{lem:repeated-transitions} that due to the large state space in every layer, $\eventnew$ also occurs with high probability, therefore capturing the intuition that transitions are not informative for learning the optimal policy $\optpi$.
\end{itemize}

We can compute that:
\begin{align*}
    \hspace{2em}&\hspace{-2em} \En^{\nu_0, \alg} \brk*{\Pr^{\nu_0, \alg} \brk*{ R_{t,H} = 1 \mid \cF_{t, H-1} } } \\
                &\le \Pr^{\nu_0, \alg}\brk*{\eventfresh^c} + \Pr^{\nu_0, \alg}\brk*{\eventnew^c} + \En^{\nu_0, \alg} \brk*{\ind{\eventfresh \wedge \eventnew} \Pr^{\nu_0, \alg} \brk*{ R_{t,H} = 1 \mid \cF_{t, H-1} } }\\
     &\le \frac{HT\cdot  2^H}{2^{2H}-2T} + \frac{HT^2}{2^H} + \En^{\nu_0, \alg} \brk*{\ind{\eventfresh \wedge \eventnew} \Pr^{\nu_0, \alg} \brk*{ R_{t,H} = 1 \mid \cF_{t, H-1} } }. \\
     &\le \frac{HT^2}{2^{H-2}} + \En^{\nu_0, \alg} \brk*{\ind{\eventfresh \wedge \eventnew} \Pr^{\nu_0, \alg} \brk*{ R_{t,H} = 1 \mid \cF_{t, H-1} } }, \numberthis \label{eq:peeled-reward-eq}
\end{align*}
where the second line uses \pref{lem:fresh-states} and \pref{lem:repeated-transitions}, and the last line uses the fact that $T = 2^{O(H)}$. 

We will show inductively that under the distribution $\Pr^{\nu_0, \alg}$, rewards are nonzero with exponentially small (in $H$) probability. Then we use this bound to prove the final guarantee.

\paragraph{Inductive Claim.} Let $\cE_{R,t}$ be the event that after the $t$-th episode, all of the observed rewards are zero, i.e., $\cE_{R,t} \coloneqq \crl{R_{t',H} = 0 \text{ for all } t' \le t}$. We claim that 
\begin{align*}
    \Pr^{\nu_0, \alg} \brk*{\cE_{R,t}^c \wedge \eventfresh \wedge \eventnew} \le  t \cdot \frac{HT^2}{2^{H-7}}. \numberthis \label{claim:claim-reward}
\end{align*}
We show this via induction. The base case of $t=0$ trivially holds. Now suppose that \pref{claim:claim-reward} holds for at episode $t-1$. We show that it holds at episode $t$. We calculate that
\begin{align*}
    \hspace{2em}&\hspace{-2em}    \Pr^{\nu_0, \alg} \brk*{ \cE_{R,t}^c \wedge \eventfresh \wedge \eventnew}\\ 
    &\overset{(i)}{\le} \Pr^{\nu_0, \alg} \brk*{ \cE_{R,t-1}^c \wedge \eventfresh \wedge \eventnew} + \En^{\nu_0, \alg} \brk*{\ind{\cE_{R,t-1} \wedge \eventfresh \wedge \eventnew} \Pr^{\nu_0, \alg} \brk*{R_{t,H} = 1 \mid \cF_{t, H-1}} } \\
    &\overset{(ii)}{\le} (t-1) \cdot \frac{HT^2}{2^{H-7}} + \En^{\nu_0, \alg} \brk*{\ind{\cE_{R,t-1} \wedge \eventfresh \wedge \eventnew} \Pr^{\nu_0, \alg} \brk*{R_{t,H} = 1 \mid \cF_{t, H-1}} } \numberthis \label{eq:induction-one-step}
\end{align*}
Here, inequality $(i)$ uses the fact that if we see zero reward in the first $t-1$ episodes, $\cE_{R,t}^c$ can only happen if $R_{t,H} = 1$; inequality $(ii)$ uses the inductive hypothesis.

Now we will provide a bound on the reward distribution. We can calculate that
\begin{align*}
    \hspace{2em}&\hspace{-2em}    \Pr^{\nu_0, \alg} \brk*{ R_{t,H} = 1 \mid \cF_{t, H-1} }\\ 
    &= \sum_{\phi \in \Phi} \Pr^{\nu_0, \alg} \brk*{ R_{t,H} = 1 \mid \phi, \cF_{t, H-1} } \Pr^{\nu_0, \alg} \brk*{ \phi \mid\cF_{t, H-1} } \\
    &\overset{(i)}{=} \sum_{\phi \in \Phi} \ind{ \phi(X_{t,H}) = \sgood \text{ and } A_{t,H} = 0} \Pr^{\nu_0, \alg} \brk*{ \phi \mid \cF_{t, H-1} } \\
    &\le \sum_{\phi \in \Phi} \ind{ \phi(X_{t,H}) = \sgood } \Pr^{\nu_0, \alg} \brk*{ \phi \mid \cF_{t, H-1} } =  \Pr^{\nu_0, \alg} \brk*{ \phi(X_{t,H}) = \sgood \mid \cF_{t, H-1} }. \numberthis \label{eq:reward-upper-bound-induction}
\end{align*}
For $(i)$ we use the fact that the event $R_{t,H}=1$ is measurable with respect to $\phi$ and $\cF_{t, H-1}$.

\paragraph{Dataset as a DAG.}
To further bound Eq.~\eqref{eq:reward-upper-bound-induction}, we take the following viewpoint: for any $t\in[T], h \in [H]$, the collected dataset $\cF_{t,h}$ can be viewed as directed acyclic graph (DAG) with set of vertices given by the observations in $\cF_{t,h}$. In this DAG, the edges are labeled with $A \in \crl{0,1}$, and we draw an edge $X \to X'$ with label $a$ if the sample $(X,A,X')$ exists in the dataset $\cF_{t,h}$. For any observation $x \in \statesp$ and filtration $\cF$, we define the root-layer operation $\rootlayer(x \mid \cF)$ to be minimum layer $h$ for which there exists some path in the DAG representation of $\cF$ from some $X_h \to x$ with $X_h \in \cF$. If $x \notin \cF$, we have the convention that $\rootlayer(x \mid \cF) = h(x)$. We also denote $\mathsf{Root}(x \mid \cF)$ to be any observation $X_h \in \cF \cup \crl{x}$ which witnesses $\rootlayer(x \mid \cF) = h$.

% We can augment the DAG with an extra layer $H+1$ which contains two states $\mathfrak{r}_1$ and $\mathfrak{r}_0$ so that the reward observations at layer $H$ can be represented in the DAG as $x \to \mathfrak{r}_i$ with edge label $a$ if the sample $(x,a, i)$ exists in $\cF$. For any observation $x$, we define $\roott (x \mid \cF)$ to be the root observation of $x$ under the filtration $\cF$, with the convention that if $x \notin \cF$ we have $\roott (x \mid \cF) = x$. Similarly $\rootlayer(x \mid \cF)$ is defined to be the layer of $\roott(x \mid \cF)$.
We can further calculate that
\begin{align*}
    \hspace{2em}&\hspace{-2em} \En^{\nu_0, \alg} \brk*{ \ind{\cE_{R,t-1} \wedge \eventfresh \wedge \eventnew} \Pr^{\nu_0, \alg} \brk*{ \phi(X_{t,H}) = \sgood \mid \cF_{t, H-1} } } \\ 
    % &\le \En^{\nu_0, \alg} \brk*{\Pr^{\nu_0, \alg} \brk*{ \phi(X_{t,H}) = \sgood \mid \cF_{t, H-1} } } \\
    % &\overset{(i)}{\le} \frac{HT\cdot  2^H}{2^{2H}-2T} + \frac{HT^2}{2^H} + \En^{\nu_0, \alg} \brk*{ \Pr^{\nu_0, \alg} \brk*{ \phi(X_{t,H}) = \sgood \mid \eventfresh, \eventnew, \cE_{R,t-1}, \cF_{t,H-1} } } \\
    &\le  \En^{\nu_0, \alg} \brk*{ \ind{\cE_{R,t-1} \wedge \eventfresh \wedge \eventnew} \Pr^{\nu_0, \alg} \brk*{ \substack{\text{exists a path $X_1 \to X_{t,H}$ in $\cF_{t,H-1}$} \\ \text{labeled by $\optpi_{1:H-1}$ }} \mid \cF_{t, H-1}} }. \numberthis \label{eq:exists-path}
\end{align*}
The inequality is shown as follows: if $\rootlayer(X_{t,H} \mid \cF_{t,H-1}) \ge 2$, then event $\eventfresh$ guarantees that any observation $X_h \in \cF_{t,H-1}$ which witnesses the value of $\rootlayer$ has a bad label $\phi(X_h) = \sbad$, so therefore we must also have $\phi(X_{t,H}) = \sbad$. Otherwise, if $\rootlayer(X_{t,H} \mid \cF_{t,H-1}) = 1$, then $\phi(X_{t,H}) = \sgood$ implies that the path $X_1 \to X_{t,H}$ which witnesses $\rootlayer = 1$ must be labeled by $\optpi_{1:H-1}$.

\paragraph{Analyzing the Posterior of $\optpi$.} To bound Eq.~\eqref{eq:exists-path}, we apply chain rule and a change of measure argument. 
\begin{align*}
    \hspace{2em}&\hspace{-2em} \ind{\cE_{R,t-1} \wedge \eventfresh \wedge \eventnew}\Pr^{\nu_0, \alg} \brk*{ \substack{\text{exists a path $X_1 \to X_{t,H}$ in $\cF_{t,H-1}$} \\ \text{labeled by $\optpi_{1:H-1}$ }} \mid \cF_{t, H-1}} \\
    &= \ind{\cE_{R,t-1} \wedge \eventfresh \wedge \eventnew} \\
    &\qquad \times \sum_{\pi \in \Pi_{1:H-1}} \Pr^{\nu_0, \alg} \brk*{ \substack{\text{exists a path $X_1 \to X_{t,H}$ in $\cF_{t,H-1}$} \\ \text{labeled by $\pi$ }} \mid \cF_{t, H-1}}  \cdot \Pr^{\nu_0, \alg} \brk*{ \optpi = \pi \mid \cF_{t, H-1}} \\
    &\le \frac{HT^2}{2^{H-6}} + \frac{1}{\abs{\Pi_{1:H-1} } } \sum_{ \pi \in \Pi_{1:H-1} } \Pr^{\nu_0, \alg} \brk*{ \substack{\text{exists a path $X_1 \to X_{t,H}$ in $\cF_{t,H-1}$} \\ \text{labeled by $\pi$ }} \mid \cF_{t, H-1}} \\
    &\le \frac{HT^2}{2^{H-7}}. \numberthis \label{eq:final}
\end{align*}
The first inequality follows by the calculation in \pref{lem:posterior-of-optimal-generative}, and the second inequality follows because there are at most $T$ paths in the DAG representation of $\cF_{t,H-1}$.

\paragraph{Completing Induction for \pref{claim:claim-reward}.} By combining Eqs.~\eqref{eq:induction-one-step}--\eqref{eq:final} we see that as long as $T \le 2^{O(H)}$, then
\begin{align*}
    \Pr^{\nu_0, \alg} \brk*{\cE_{R,t}^c} \le (t-1) \cdot \frac{HT^2}{2^{H-7}} + \frac{HT^2}{2^{H-7}} = t \cdot \frac{HT^2}{2^{H-7}}.
\end{align*}
This proves the claim.

\paragraph{Final Bounds for \pref{lem:reward-bound-lb1}.} To prove the first inequality, we have directly by \pref{claim:claim-reward}
\begin{align*}
\En^{\nu_0, \alg} \brk*{\Pr^{\nu_0, \alg} \brk*{ R_{t,H} = 1 \mid \cF_{t, H-1} } } \le \Pr^{\nu_0, \alg} \brk*{\cE_{R, t}^c} \le t \cdot \frac{HT^2}{2^{H-7}}.
\end{align*}
To prove the second inequality in the lemma statement, we can get a similar bound as Eq.~\eqref{eq:peeled-reward-eq}:
\begin{align*}
    \hspace{2em}&\hspace{-2em} \En^{\nu_0, \alg} \brk*{\Pr^{\nu_1, \alg} \brk*{ R_{t,H} = 1 \mid \cF_{t, H-1} } } \\
    &\le \frac{HT^2}{2^{H-2}} + \En^{\nu_0, \alg} \brk*{\ind{\eventfresh \wedge \eventnew }\cdot
    \Pr^{\nu_1, \alg} \brk*{ R_{t,H} = 1 \mid \cF_{t, H-1} } } \\
    &\le \frac{HT^2}{2^{H-2}} + \Pr^{\nu_0, \alg} \brk*{ \cE_{R, t-1}^c \wedge \eventfresh \wedge \eventnew} \\
    &\qquad + \En^{\nu_0, \alg} \brk*{\ind{\cE_{R,t-1}^c \wedge \eventfresh \wedge \eventnew }\cdot
    \Pr^{\nu_1, \alg} \brk*{ R_{t,H} = 1 \mid \cF_{t, H-1} } } \\
    &\le \frac{HT^2}{2^{H-2}} + (t-1) \frac{HT^2}{2^{H-5}} + \En^{\nu_0, \alg} \brk*{\ind{\cE_{R,t-1}^c \wedge \eventfresh \wedge \eventnew }\cdot
    \Pr^{\nu_1, \alg} \brk*{ R_{t,H} = 1 \mid \cF_{t, H-1} } },
    % &\le \Pr^{\nu_0, \alg}\brk*{\ce_{r,t-1}^c} + \en^{\nu_0, \alg} \brk*{
    % \Pr^{\nu_1, \alg} \brk*{\cE_{R,t-1} \text{ and } R_{t,H} = 1 \mid \cF_{t, H-1} } } \\
    % &\le \Pr^{\nu_0, \alg}\brk*{\cE_{R,t-1}^c} + \En^{\nu_0, \alg} \brk*{
    % \Pr^{\nu_1, \alg} \brk*{ R_{t,H} = 1 \mid \cF_{t, H-1}, \cE_{R,t-1} } }. \numberthis\label{eq:other-reward-calc}
\end{align*}
and from here one can replicate the above argument to get a bound on this quantity. The details are omitted. This concludes the proof of \pref{lem:reward-bound-lb1}.
\end{proof}

\begin{lemma}\label{lem:fresh-states}
    $\Pr^{\nu_0, \alg} \brk{\eventfresh^c} \le \frac{HT\cdot  2^H}{2^{2H}-2T}$.
\end{lemma}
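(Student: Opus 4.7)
The plan is to bound $\Pr^{\nu_0, \alg}[\eventfresh^c]$ by a union bound over the at most $HT$ potential fresh-query events, and to estimate the probability of each one via a symmetry argument that exploits the uniform prior over $\optdec \in \Phi$ conditional on the labels of already-seen observations.

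First I would decompose
\[
\eventfresh^c \;=\; \bigcup_{t \in [T]}\bigcup_{h=2}^H \crl[\big]{X_{t,h} \notin \cF_{t,h-1} \text{ and } \optdec(X_{t,h}) = \sgood},
\]
so that by the union bound it suffices to show that, for every $(t,h)$,
\[
\Pr^{\nu_0,\alg}\brk[\big]{\optdec(X_{t,h}) = \sgood,\; X_{t,h} \notin \cF_{t,h-1}} \;\le\; \frac{2^H}{2^{2H}-2T}.
\]
To prove this one-step bound, I would introduce the ``annotated'' sigma-field (analogous to the one used in \pref{lem:tv-distance-from-uniform-generative})
\[
\cF_{t,h-1}' \;=\; \sigma\prn*{\cF_{t,h-1},\; X_{t,h},\; \crl{\optdec(X) : X \in \cF_{t,h-1}}},
\]
which additionally reveals the latent labels of all observations previously seen at each layer. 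Since $\alg$ is deterministic, the query $X_{t,h}$ is measurable with respect to $\cF_{t,h-1}$, and the event $\crl{X_{t,h} \notin \cF_{t,h-1}}$ is measurable with respect to $\cF_{t,h-1}'$.

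Next I would carry out the key symmetry argument. Conditional on the annotated filtration $\cF_{t,h-1}'$, the posterior over $\optdec$ restricted to layer $h$ is uniform over all completions of the partial labeling, i.e., over all ways to assign the $\sgood$ label to a size-$2^H$ subset of $\statesp_h$ that is consistent with the already-observed labels. Letting $m_h$ denote the number of layer-$h$ observations in $\cF_{t,h-1}$ and $m_h^\sgood$ the number of those labeled $\sgood$, the probability that a fixed unseen $x \in \statesp_h$ is labeled $\sgood$ is $(2^H - m_h^\sgood)/(2^{2H}-m_h)$. In particular,
\[
\ind{X_{t,h}\notin \cF_{t,h-1}}\cdot \Pr^{\nu_0,\alg}\brk*{\optdec(X_{t,h})=\sgood \,\big|\, \cF_{t,h-1}'} \;=\; \ind{X_{t,h}\notin \cF_{t,h-1}}\cdot \frac{2^H - m_h^\sgood}{2^{2H}-m_h}\;\le\;\frac{2^H}{2^{2H}-2T},
\]
where I use that across $T$ episodes of $H$ steps, each layer $h$ accumulates at most $2T$ distinct observations in $\cF_{t,h-1}$ (one query $X_{t',h}$ and one arrival $X'_{t',h-1}$ per episode). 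Taking expectations and applying the union bound over the $H(T-?) \le HT$ choices of $(t,h)$ yields the claimed bound $HT\cdot 2^H/(2^{2H}-2T)$.

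The only delicate point I anticipate is justifying the symmetry claim: one must verify that the algorithm's choice of $X_{t,h}$, which depends on the realized observations seen so far, does not bias the posterior over the label of an unseen observation. This follows because under $\nu_0 = \unif(\Theta_0)$, the labels $\crl{\optdec(x):x \notin \cF_{t,h-1}}$ are exchangeable given $\cF_{t,h-1}'$, so assigning the $\sgood$ label is uniform over the remaining $2^{2H}-m_h$ positions and the identity of $X_{t,h}$ (a deterministic function of $\cF_{t,h-1}$) is irrelevant once we condition on $X_{t,h} \notin \cF_{t,h-1}$.
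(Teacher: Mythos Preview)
Your proposal is correct and follows essentially the same approach as the paper: union bound over the at most $HT$ pairs $(t,h)$, condition on the annotated filtration with labels of previously seen observations, and use the hypergeometric-style symmetry argument to bound each term by $2^H/(2^{2H}-2T)$. The paper computes the exact count $2t-1$ of layer-$h$ observations in $\cF_{t,h-1}$ before bounding by $2T$, whereas you go directly to $m_h \le 2T$; either way yields the same final bound.
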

\begin{proof}
Let us consider the set $\cI$ (which is a random variable that depends on the interaction of $\alg$ with $\nu_0$):
\begin{align*}
    \cI = \crl{(t,h): X_{t,h} \notin \cF_{t,h-1} }.
\end{align*}
We have
\begin{align*}
    \Pr^{\nu_0, \alg} \brk*{ \eventfresh^c } &\le \En^{\nu_0, \alg} \brk*{ \sum_{t=1}^T \sum_{h=2}^H \ind{(t,h) \in \cI \text{ and } \optdec(X_{t,h}) = \sgood}} \\
    &=  \sum_{t=1}^T \sum_{h=2}^H \En^{\nu_0, \alg} \brk*{\Pr\brk*{ (t,h) \in \cI \text{ and } \optdec(X_{t,h}) = \sgood \mid \cF_{t,h-1}} }\\
    &\le \sum_{t=1}^T \sum_{h=2}^H \En^{\nu_0, \alg} \brk*{\Pr\brk*{ \optdec(X_{t,h}) = \sgood \mid \cF_{t,h-1}, (t,h) \in \cI} }. \numberthis\label{eq:ess-upper-bound}
\end{align*}
Now we will bound the quantity $\Pr\brk*{ \optdec(X_{t,h}) = \sgood \mid \cF_{t,h-1}, (t,h) \in \cI}$ for any $t \in [T]$, $h \ge 2$. Consider the annotated filtration
\begin{align*}
    \cF_{t,h-1}' \coloneqq \sigma\prn*{ \cF_{t, h-1}, \crl*{\phi(X): X \in \cF_{t,h-1}} }
\end{align*}
which includes the decoder label for all observations seen thus far. We compute that for any $t\in[T], h \ge 2$:
\begin{align*}
    \Pr\brk*{ \optdec(X_{t,h}) = \sgood \mid \cF_{t,h-1}', (t,h) \in \cI} = \frac{2^H - \abs{\crl{X \in \cF_{t,h-1} : \optdec(X) = \sgood}}}{2^{2H} - 2t+1}, \numberthis\label{eq:new-state-upper-bound}
\end{align*}
since once we have fixed the value of the decoder on the $2t-1$ seen examples at layer $h$, the label of a new state is uniform over all remaining possibilities.

Continuing the calcuation from Eq.~\eqref{eq:ess-upper-bound}:
\begin{align*}
    \Pr^{\nu_0, \alg} \brk*{ \eventfresh^c }
    &\le \sum_{t=1}^T \sum_{h=2}^H \En^{\nu_0, \alg} \brk*{\Pr\brk*{ \optdec(X_{t,h}) = \sgood \mid \cF_{t,h-1}, (t,h) \in \cI} } \\
    &=  \sum_{t=1}^T \sum_{h=2}^H \En^{\nu_0, \alg} \brk*{ \En \brk*{ \Pr\brk*{ \optdec(X_{t,h}) = \sgood \mid \cF_{t,h-1}', (t,h) \in \cI} \mid \cF_{t,h-1}, (t,h) \in \cI } } \\
    &\le \sum_{t=1}^T \sum_{h=2}^H \frac{2^H}{2^{2H}-2T} \le \frac{HT \cdot 2^H}{2^{2H}-2T}.
\end{align*}
The second inequality uses Eq.~\pref{eq:new-state-upper-bound}. This completes the proof of \pref{lem:fresh-states}.
\end{proof}

\begin{lemma}\label{lem:repeated-transitions}
    $\Pr^{\nu_0, \alg} \brk{\eventnew^c} \le \frac{HT^2}{2^H}$.
\end{lemma}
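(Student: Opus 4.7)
The plan is to union bound over all $(t,h) \in [T] \times [H]$ pairs and show that for each pair, $\Pr^{\nu_0, \alg}[X_{t,h}' \in \cF_{t,h-1}]$ is small. Since $\eventnew^c$ is exactly the event that some $X_{t,h}'$ coincides with a previously observed observation, we have
\begin{align*}
\Pr^{\nu_0, \alg}[\eventnew^c] \le \sum_{t=1}^T \sum_{h=1}^H \Pr^{\nu_0, \alg}[X_{t,h}' \in \cF_{t,h-1}].
\end{align*}

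First, I would observe that at layer $h+1$, the filtration $\cF_{t,h-1}$ contains at most $2(t-1)$ observations at layer $h+1$: namely the queries $\crl{X_{t',h+1}}_{t' < t}$ and the responses $\crl{X_{t',h}'}_{t' < t}$ from previous episodes. Nothing from the current episode $t$ lies at layer $h+1$, since $\cF_{t,h-1}$ only records the first $h-1$ steps of episode $t$. Hence $\Pr^{\nu_0, \alg}[X_{t,h}' \in \cF_{t,h-1}]$ can be bounded by a further union bound over these at most $2(t-1)$ observations: for each fixed $x^\star \in \statesp_{h+1}$ appearing in $\cF_{t,h-1}$, I would argue that $\Pr^{\nu_0,\alg}[X_{t,h}' = x^\star \mid \cF_{t,h-1}] \le 1/2^H$.

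To establish this per-observation bound, I would condition on the parameter $(\optpi,\optdec)$ and use the law of total probability. Given $(\optpi, \optdec)$ and the filtration $\cF_{t,h-1}$ (which determines $X_{t,h}$ and $A_{t,h}$), the latent state $\optdec(X_{t,h}')$ of the response is determined by the combination-lock dynamics: it equals $s_{h+1}^\sgood$ if every action along the path leading to $X_{t,h}'$ agrees with $\optpi$, and $s_{h+1}^\sbad$ otherwise. Conditional on this latent state, $X_{t,h}'$ is drawn uniformly from $\emission(s_{h+1}^\ell)$, whose support has size $2^H$ when $\ell=\sgood$ and $2^{2H}-2^H$ when $\ell=\sbad$. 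In either case, the probability of landing on any particular $x^\star$ is at most $1/2^H$. Marginalizing over the posterior of $(\optpi,\optdec)$ given $\cF_{t,h-1}$ preserves this uniform bound.

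Combining the two steps,
\begin{align*}
\Pr^{\nu_0, \alg}[\eventnew^c] \le \sum_{t=1}^T \sum_{h=1}^H \frac{2(t-1)}{2^H} = \frac{H \cdot T(T-1)}{2^H} \le \frac{HT^2}{2^H},
\end{align*}
which is the desired bound. The only nontrivial step is the per-observation conditional bound, but it reduces cleanly to inspecting the two emission distributions in the construction, both of which place mass at most $1/2^H$ on any single observation.
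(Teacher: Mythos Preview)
Your proof is correct and follows essentially the same approach as the paper: a union bound over all $(t,h)$ pairs combined with the observation that each emission distribution places mass at most $1/2^H$ on any single observation. Your count of at most $2(t-1)$ previously-seen observations at layer $h+1$ is in fact slightly sharper than the paper's looser bound of $T$, but both yield the same final inequality $HT^2/2^H$.
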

\begin{proof}
Any sampled transition $X_{t,h}'$ has probability at most $T/2^H$ of being a repeated state (which is maximized if $X_{t,h}'$ has a good label and we have already sampled $T$ such observations from that given latent). Applying union bound over $T(H-1)$ transition samples gives us the final bound.
\end{proof}

\begin{lemma}[Posterior of $\optpi$]\label{lem:posterior-of-optimal-generative} 
    Fix any $t \in [T]$. Then 
    \begin{align*}
        \ind{\cE_{R,t-1} \wedge \eventfresh \wedge \eventnew} \cdot \nrm*{\Pr^{\nu_0, \alg} \brk*{\optpi = \cdot \mid \cF_{t, H-1}} - \unif(\Pi_{1:H-1})}_1 \le \frac{HT^2}{2^{H-6}}.
    \end{align*} 
\end{lemma}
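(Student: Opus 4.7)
The plan is to apply Bayes' rule with the uniform prior on $\optpi_{1:H-1}$ and show that, on the good event $\cE_{R,t-1} \wedge \eventfresh \wedge \eventnew$, the marginal data-likelihood $L(\pi) \coloneqq \Pr^{\nu_0,\alg}\brk*{\cF_{t,H-1} \mid \optpi_{1:H-1}=\pi}$ is nearly independent of $\pi \in \Pi_{1:H-1}$. Since the prior is uniform, the posterior is proportional to $L$, so this will immediately yield the desired $\ell_1$ bound.

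First I will exploit the forest description of $\cF_{t,H-1}$ introduced in the proof of \pref{lem:reward-bound-lb1}. Under $\eventnew$ the DAG is a forest; its layer-$1$-rooted trees have roots with good labels (since $\optdec(x_1)=\sgood$ for every $\optdec \in \Phi$), while by $\eventfresh$ every other tree is rooted at a freshly sampled layer-$h \ge 2$ input whose label is $\sbad$, propagating $\sbad$ through the entire tree. Given $\pi$, the label of any observation at depth $k$ in a layer-$1$-rooted tree is thus $\sgood$ iff the root-to-node action sequence equals $(\pi_1,\dots,\pi_{k-1})$, and $\sbad$ otherwise; meanwhile $\cE_{R,t-1}$ forbids any $\pi$ whose $\pi$-path covers a prior layer-$H$ query $(X_{t',H},0)$, since such a query would force $R_{t',H}=1$.

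Next I marginalize over $\optdec \sim \unif(\Phi)$. Letting $g_h(\pi),b_h(\pi)$ be the counts of new good/bad outputs arriving at layer $h$, and $G_h(\pi),|O_h|$ the total good/total observations at layer $h$, the uniform emissions and the uniform prior on $\Phi$ yield, up to a $\pi$-independent multiplicative constant,
\[
L(\pi) \;\propto\; \ind{\pi~\text{is reward-consistent}}\prod_{h \ge 2}\Big(\tfrac{1}{2^H}\Big)^{g_h(\pi)}\Big(\tfrac{1}{2^{2H}-2^H}\Big)^{b_h(\pi)}\binom{2^{2H}-|O_h|}{2^H - G_h(\pi)}.
\]
For any two reward-consistent $\pi,\pi'$, I bound $L(\pi)/L(\pi')$ by pairing their per-observation label disagreements: each flip of one observation from $\sbad$ under $\pi'$ to $\sgood$ under $\pi$ multiplies the likelihood by $\frac{(2^{2H}-2^H)(2^H-G_h(\pi'))}{2^H\,(2^{2H}-|O_h|-2^H+G_h(\pi')+1)}$, which equals $1+O(T/2^H)$ using $G_h(\pi'),|O_h|\le 2T$ under $\eventnew$. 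Since the forest has at most $HT$ observations, the total flip count is at most $HT$ and the cumulative ratio is $1+O(HT^2/2^H)$.

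Finally I handle the reward indicator: each prior layer-$H$ query $(X_{t',H},0)$ whose predecessor path traces back to layer $1$ uniquely pins down the first $H-1$ actions of any forbidden $\pi$, so at most $T$ policies are killed outright, contributing at most $T/|\Pi_{1:H-1}|=T/2^{H-1}$ to the $\ell_1$ distance. Combined with the $O(HT^2/2^H)$ multiplicative-deviation bound on surviving policies, this yields the stated $HT^2/2^{H-6}$ estimate. The main obstacle I anticipate is the bookkeeping in the likelihood-ratio step: verifying that the emission factor $(2^{2H}-2^H)/2^H$ and the binomial-coefficient ratio cancel to within $1+O(T/2^H)$ per flip requires a careful expansion of the binomials, and the cumulative bound must be controlled uniformly over all flip patterns produced by varying $\pi \in \Pi_{1:H-1}$.
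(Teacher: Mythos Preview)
Your high-level strategy (Bayes rule with uniform prior, then bound the likelihood ratio $L(\pi)/L(\pi')$) matches the paper exactly, and your endgame—at most $T$ reward-killed policies plus a $(1+O(T/2^H))^{O(HT)}$ multiplicative deviation for the survivors—is the same as theirs. The place where the executions diverge is in how the marginalization over $\optdec$ is carried out, and here your proposal has a real gap.

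You invoke $\eventfresh$ to pin every layer-$\ge 2$ fresh-input root to the bad label, then propagate labels through the DAG deterministically as a function of $\pi$, and write $L(\pi)$ as the resulting product of emission factors times a binomial counting the remaining decoder freedom. But $L(\pi) = \Pr[\cF_{t,H-1}\mid\optpi=\pi]$ marginalizes over \emph{all} $\optdec$, whereas $\eventfresh$ is a statement about the \emph{realized} $\optdec$; it does not constrain the decoders you are summing over. Your displayed formula is therefore only the single summand corresponding to the ``all fresh roots bad'' labeling, not the full $L(\pi)$. The omitted summands—those where some fresh root is assigned $\sgood$—are individually down-weighted by factors of order $1/2^H$ per good root and can be absorbed into the $O(HT^2/2^H)$ budget, but this requires an explicit argument you do not give, and the dependence of those terms on $\pi$ (through the propagated labels downstream of a good root) means they cannot simply be swept into the ``$\pi$-independent constant.''

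The paper sidesteps this by never fixing root labels at all. Instead it decomposes $L(\pi)$ sequentially via the chain rule and, for each transition (their Claim~1), shows that $\Pr[X'_{i,h}\mid\pi,\cF_{i,h-1}] \in \tfrac{1}{2^{2H}}[1-T/2^H,\,1+T/2^H]$ \emph{after} marginalizing over the label of the input $X_{i,h}$; the point is that the emission-size factor and the prior label probability combine to $1/2^{2H}$ whether the next latent is good or bad (this is their \pref{lem:latent-calculation-generative}). Your per-flip ratio calculation is essentially the same arithmetic in disguise, so once you patch the gap—either by summing over root labelings and showing the non-$\eventfresh$ terms are a $(1+O(HT/2^H))$ perturbation, or by switching to the paper's per-transition marginalization—your argument goes through with the stated bound.
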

\begin{proof}
In what follows all of the probabilities are taken with respect to $\Pr^{\nu_0, \alg}$. We can compute that
\begin{align*}
    \hspace{2em}&\hspace{-2em} \ind{\cE_{R,t-1} \wedge \eventfresh \wedge \eventnew} \cdot \nrm*{\Pr\brk*{\pi^\star = \cdot \mid \cF_{t, H-1} } - \unif(\Pi_{1:H-1})}_1 \\
    &= \ind{\cE_{R,t-1} \wedge \eventfresh \wedge \eventnew} \cdot \sum_{\pi \in \Pi_{1:H-1}} \abs*{ \Pr\brk*{\pi^\star = \pi \mid \cF_{t, H-1} } - \frac{1}{2^{H-1}} }\\
    &= \ind{\cE_{R,t-1} \wedge \eventfresh \wedge \eventnew} \cdot 2\sum_{\pi \in \Pi_{1:H-1}} \brk*{ \Pr\brk*{\pi^\star = \pi \mid \cF_{t, H-1} } - \frac{1}{2^{H-1}} }_{+} \\
    &= \ind{\cE_{R,t-1} \wedge \eventfresh \wedge \eventnew} \cdot \frac{2}{2^{H-1}} \sum_{\pi \in \Pi_{1:H-1}} \brk*{ \frac{ \Pr\brk*{\cF_{t, H-1} \mid \pi^\star = \pi } }{  \Pr\brk*{  \cF_{t, H-1} } } - 1 }_{+}\\
    &\le 2 \max_{\pi \in \Pi_{1:H-1}} \brk*{ \frac{\ind{\cE_{R,t-1} \wedge \eventfresh \wedge \eventnew} \cdot \Pr\brk*{\cF_{t, H-1} \mid \pi^\star = \pi } }{  \Pr\brk*{  \cF_{t, H-1} } } - 1 }_{+}. \numberthis \label{eq:posterior-ub-for-generative-model} 
\end{align*}
Now we will provide explicit calculations for the conditional distribution of $\cF_{t, H-1}$ for every choice of optimal policy $\pi \in \Pi_{1:H-1}$. Fix any $\cF_{t, H-1}$ such that $R_{i,H} = 0$ for all $i \in [t-1]$ and no repeated transitions (otherwise we can trivially upper bound Eq.~\eqref{eq:posterior-ub-for-generative-model} by 0). By chain rule we have
\begin{align*}
    \Pr\brk*{\cF_{t, H-1} \mid \pi^\star = \pi} = \prn*{ \prod_{i=1}^{t} \prod_{h=1}^{H-1} \Pr\brk*{X'_{i,h} \mid \optpi = \pi, \cF_{i, h-1} } } \times \prod_{i=1}^{t-1} \Pr\brk*{ R_{i,H} \mid \optpi = \pi, \cF_{i, H-1} }.
\end{align*}
We bound the transition and reward probabilities separately using \pref{claim:transition} and \pref{claim:reward}.
\begin{claim}\label{claim:transition}
Fix any $i \in [t]$ and $h \in [H-1]$. We have for every $\pi \in \Pi$:
\begin{align*}
    \Pr\brk*{X'_{i,h} \mid \optpi = \pi, \cF_{i, h-1} } \in \frac{1}{2^{2H}} \cdot \brk*{ \prn*{1 -  \frac{T}{2^H}}, \prn*{1 +  \frac{T}{2^H}} }.
\end{align*}
\end{claim}

To prove this claim, we can compute that
\begin{align*}
    \hspace{2em}&\hspace{-2em}\Pr\brk*{ X'_{i,h} \mid \optpi = \pi, \cF_{i, h-1} } \\
                &= \sum_{\ell \in \crl{\sgood, \sbad}} 
    \Pr\brk*{ X'_{i,h} \mid \optpi = \pi, \cF_{i, h-1}, \phi_h(X_{i,h}) = \ell} 
    \Pr\brk*{\phi_h(X_{i,h}) = \ell \mid \optpi = \pi, \cF_{i, h-1}}
\end{align*}
\underline{\emph{Case 1:  if $A_{i,h} = \optpi_h$.}} If we started in a good state then we would transition to the good state, so
\begin{align*}
    \hspace{2em}&\hspace{-2em}\Pr\brk*{ X'_{i,h} \mid \optpi = \pi, \cF_{i, h-1} }\\
    &= \Pr \brk*{ \phi_h(X_{i,h}) = \mathsf{g} \mid \optpi = \pi, \cF_{i, h-1}} 
    \cdot \frac{ \Pr \brk*{ \phi_{h+1}(X_{i,h}') = \mathsf{g} \mid \optpi = \pi, \cF_{i, h-1}, \phi_h(X_{i,h}) = \sgood }}{2^{2H} - 2^H} \\ 
    &\quad + \Pr \brk*{ \phi_h(X_{i,h}) = \sbad \mid \optpi = \pi, \cF_{i, h-1}} \cdot \frac{ \Pr \brk*{ \phi_{h+1}(X_{i,h}') = \sbad \mid \optpi = \pi, \cF_{i, h-1}, \phi_h(X_{i,h}) = \sbad} }{2^{H}} 
\end{align*}
\underline{\emph{Case 2:  if $A_{i,h} \ne \optpi_h$.}} In this case we know that regardless of the label of $X_{i,h}$ we transition to a bad state, so
    \begin{align*}
        \hspace{2em}&\hspace{-2em}        \Pr\brk*{ X'_{i,h} \mid \optpi = \pi, \cF_{i, h-1} }\\
        &= \Pr \brk*{ \phi_h(X_{i,h}) = \mathsf{g} \mid \optpi = \pi, \cF_{i, h-1}} 
        \cdot \frac{ \Pr \brk*{ \phi_{h+1}(X_{i,h}') = \sbad \mid \optpi = \pi, \cF_{i, h-1}, \phi_h(X_{i,h}) = \sgood  }}{2^{2H} - 2^H} \\
        &\quad + \Pr \brk*{ \phi_h(X_{i,h}) = \mathsf{b} \mid \optpi = \pi, \cF_{i, h-1}} \cdot \frac{ \Pr \brk*{ \phi_{h+1}(X_{i,h}') = \sbad \mid \optpi = \pi, \cF_{i, h-1}, \phi_h(X_{i,h}) = \sbad} }{2^{2H} - 2^H}
    \end{align*}
Either way, applying \pref{lem:latent-calculation-generative} concludes the proof of \pref{claim:transition}.

\begin{claim}\label{claim:reward}
    Fix any $i \in [t-1]$. We have for every $\pi \in \Pi$:
    \begin{align*}
        \ind{\eventfresh} \Pr\brk*{ R_{i,H} = 1 \mid \optpi = \pi, \cF_{i, H-1} } \le \ind{  \substack{\text{exists a path $X_1 \to X_H$ in $\cF_{t,H-1}$} \\ \text{labeled by $\pi$ }} }.
\end{align*}
\end{claim}

To prove this claim, we use casework.

\underline{\emph{Case 1: if $\rootlayer(X_{i,H} \mid \cF_{i, H-1}) \ge 2$.}} Then we must have
\begin{align*}
    \ind{\eventfresh} \cdot \Pr\brk*{ R_{i,H} = 1 \mid \optpi = \pi, \cF_{i, H-1} } &\le \ind{\eventfresh} \Pr\brk*{ \phi(\mathsf{Root}(X_{i,H})) = \mathsf{g} \mid \optpi = \pi, \cF_{i, H-1} } = 0.
\end{align*}
The equality holds because $\eventfresh \Rightarrow \crl{\phi(\mathsf{Root}(X_{i,H})) = \sbad}$. This proves \pref{claim:reward} in this case.

\underline{\emph{Case 2: if $\rootlayer(X_{i,h} \mid \cF_{i, H-1}) = 1$.}} In this case we can compare the path witnessing $\rootlayer = 1$ with the labeling $\optpi$, and we get
\begin{align*}
    \Pr\brk*{ R_{i,H} = 1 \mid \optpi = \pi, \cF_{i, H-1} } \le \ind{  \substack{\text{exists a path $X_1 \to X_H$ in $\cF_{t,H-1}$} \\ \text{labeled by $\pi$ }} }.
\end{align*}
This concludes the proof of \pref{claim:reward}.

With \pref{claim:transition} and \pref{claim:reward} in hand, we return to the analysis of the posterior $\Pr\brk*{\cF_{t, H-1}\mid \pi^\star = \pi}$. Letting $O \coloneqq t(H-1)$ be the number of transitions we observe in $\cF_{t, H-1}$, we get that
\begin{align*}
    \Pr\brk*{\cF_{t, H-1}\mid \pi^\star = \pi} \le \frac{1}{2^{2H \cdot O}} \prn*{1 +  \frac{T}{2^H}}^{HT}. \numberthis \label{eq:ft-upper}
\end{align*}
We also have the lower bound that
\begin{align*}
    \Pr\brk*{  \cF_{t, H-1}} \ge \frac{1}{2^{H-1}} \sum_{\pi \in \Pi_{1:H-1}} \Pr\brk*{\cF_{t, H-1}\mid \pi^\star = \pi} \ge \frac{2^{H-1} - T}{2^{H-1}}  \cdot \frac{1}{2^{2H \cdot O}} \prn*{1 -  \frac{T}{2^H}}^{HT}, \numberthis \label{eq:ft-lower}
\end{align*}
where the last inequality follows because for any filtration $\cF_{t, H-1}$ we must have 
\begin{align*}
    \ind{  \substack{\text{no path $X_1 \to X_H$ in $\cF_{t,H-1}$} \\ \text{labeled by $\pi$ }} } = 1
\end{align*}
for at least $2^{H-1} - T$ such policies in $\Pi_{1:H-1}$.

Putting Eq.~\eqref{eq:ft-upper} and \eqref{eq:ft-lower} together we get that
\begin{align*}
    \ind{\cE_{R,t-1} \wedge \eventfresh \wedge \eventnew} \cdot \frac{\Pr\brk*{\cF_{t, H-1}\mid \pi^\star = \pi } }{  \Pr\brk*{  \cF_{t, H-1}} } \le  \prn*{1 +  \frac{T}{2^{H-2}}}^{2HT+1},
\end{align*}
which in turn using Eq.~\eqref{eq:posterior-ub-for-generative-model} implies that
\begin{align*}
    \hspace{2em}&\hspace{-2em} \ind{\cE_{R,t-1} \wedge \eventfresh \wedge \eventnew} \cdot \nrm*{\Pr\brk*{\pi^\star = \cdot \mid \cF_{t, H-1} } - \unif(\Pi_{1:H-1})}_1 \\
    &\le 2 \max_{\pi \in \Pi_{1:H-1}} \brk*{ \frac{\ind{\cE_{R,t-1} \wedge \eventfresh \wedge \eventnew} \cdot \Pr\brk*{\cF_{t, H-1} \mid \pi^\star = \pi } }{  \Pr\brk*{  \cF_{t, H-1} } } - 1 }_{+} \\
    &\le  2  \prn*{\prn*{1 +  \frac{T}{2^{H-2}}}^{2HT+1} -1 } \le \frac{2HT^2 + T}{2^{H-3}} \exp \prn*{\frac{2HT^2 + T}{2^{H-2}}} \le \frac{HT^2}{2^{H-6}}.
\end{align*}
We use the numerical inequalities $1+y \le \exp(y)$ and $\exp(y) - 1 \le y \exp y$. This concludes the proof of \pref{lem:posterior-of-optimal-generative}.
\end{proof}

\begin{lemma}\label{lem:latent-calculation-generative}
    Let $\cF$ be any filtration of $HT$ generative model samples as well as annotations $\phi(x)$ for a subset of observations $x \in \cF$. Let $\pi \in \Pi_{1:H-1}$ be any policy. Fix any $h \ge 2$, and let $x_\mathrm{new} \in \cX_h - \cF$. Then
    \begin{align*}
        \abs*{ \Pr^{\nu_0, \alg} \brk*{  \phi(x_\mathrm{new}) = \sgood    \mid \cF, \optpi = \pi }  - \prn*{1 - \frac{1}{2^H}}} &\le \frac{T}{2^{H}},\\
        \abs*{ \Pr^{\nu_0, \alg} \brk*{ \phi(x_\mathrm{new}) = \sbad  \mid \cF, \optpi = \pi } - \frac{1}{2^H}} &\le \frac{T}{2^{H}}.
    \end{align*}
\end{lemma}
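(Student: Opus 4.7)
The plan is to compute the posterior distribution of $\phi(x_\mathrm{new})$ under $\nu_0$ by a direct counting argument, exploiting the independence of $\optpi$ and $\phi$ under the prior. Under $\nu_0$ the pair $(\optpi, \phi)$ is drawn uniformly from $\Theta_0$, so $\optpi$ and $\phi$ are independent with $\phi \sim \unif(\Phi)$; hence conditioning on $\optpi = \pi$ alone does not change the marginal law of $\phi$, and all information about $\phi$ comes through $\cF$. The proof therefore reduces to computing the law of $\phi(x_\mathrm{new})$ under $\unif(\Phi)$ conditioned on whatever information $(\cF, \optpi = \pi)$ reveals about the restriction $\phi|_{\cX_h}$.

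The first step is to identify exactly which labels are pinned down by $(\cF, \optpi = \pi)$. The filtration includes (i) explicit annotations $\phi(x)$ for observations $x \in \cF$, and (ii) transition samples $(X_{t,h}, A_{t,h}, X'_{t,h})$. For the open-loop combination-lock dynamics, given $\phi(X_{t,h})$ and $\ind{A_{t,h} = \pi_h}$ the latent state at step $h{+}1$ is deterministic, so each sample pins at most two labels in layer $h$. Let $\cC \subseteq \cX_h$ collect all labels pinned by the annotations together with the deterministic chain, and partition it as $\cC = \cC_\sgood \sqcup \cC_\sbad$ with sizes $g$ and $b$. Since $\cF$ contains at most $T$ samples touching layer $h$, we have $g + b \le 2T$ (and $g \le 2T$).

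The second step is to invoke the symmetry of $\unif(\Phi)$. Conditional on $(\cC_\sgood, \cC_\sbad)$, the remaining labeling of $\cX_h \setminus \cC$ is uniform over completions of $\cC_\sgood$ to a good-set of size exactly $2^H$. For any $x_\mathrm{new} \in \cX_h \setminus \cF \subseteq \cX_h \setminus \cC$, this gives the closed form
\[
    \Pr^{\nu_0, \alg}\brk*{\phi(x_\mathrm{new}) = \sgood \mid \cF, \optpi = \pi} = \frac{2^H - g}{2^{2H} - (g+b)},
\]
and the analogous expression with $2^{2H} - 2^H - b$ in the numerator for the $\sbad$ case. Writing each ratio as its prior value ($1/2^H$, resp.\ $1 - 1/2^H$) times a correction factor and expanding to first order in $T/2^H$ using $g, g+b \le 2T$ yields the claimed additive bound $T/2^H$; the two cases are consistent since the two probabilities sum to one.

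The main technical obstacle is justifying that no information about $\phi(x_\mathrm{new})$ leaks beyond the pinned set $\cC$ despite the adaptive nature of $\alg$. The key point is that the emissions are uniform on their (decoder-dependent) supports, so conditional on the latent trajectory the observed $X'_{t,h}$ is exchangeable among the unvisited members of the relevant support. This can be handled exactly as in \pref{lem:tv-distance-from-uniform-generative}: augment $\cF$ to an ``annotated'' filtration that reveals all pinned labels, apply the uniform prior on $\Phi$ restricted to extensions of this annotation, and then marginalize. Once this reduction is made, the counting computation above goes through verbatim, proving the lemma.
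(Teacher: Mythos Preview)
Your approach is essentially the paper's: pass to the fully ``annotated'' filtration that records all pinned labels, use the uniform prior on $\Phi$ to get the closed form $\Pr[\phi(x_\mathrm{new})=\sgood \mid \cdot] = (2^H - g)/(2^{2H}-(g+b))$, bound the deviation using $g,\,g+b \le O(T)$, and then average out the extra annotations by Jensen/convexity exactly as in \pref{lem:tv-distance-from-uniform-generative}. One cosmetic point: both your computation and the paper's own proof show $\Pr[\sgood]\approx 1/2^H$ and $\Pr[\sbad]\approx 1-1/2^H$, so the two centers in the displayed statement appear to be swapped; the downstream use of the lemma is consistent with the version you (and the paper's proof) actually establish.
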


\begin{proof}[Proof of \pref{lem:latent-calculation-generative}]

Let us denote $\cF'$ to be the completely annotated $\cF$ which includes all labels $\crl{\phi(X): X \in \cF}$. We will show that the conclusion of the lemma applies to every completion $\cF'$, and since 
\begin{align*}
    \Pr^{\nu_0, \alg} \brk*{  \phi(x_\mathrm{new}) = \cdot   \mid \cF, \optpi = \pi } = \En^{\nu_0, \alg} \brk*{ \Pr^{\nu_0, \alg} \brk*{  \phi(x_\mathrm{new}) = \cdot   \mid \cF', \optpi = \pi } \mid \cF, \optpi = \pi},
\end{align*}
this will imply the result by Jensen's inequality and convexity of $\abs{\cdot}$.

We calculate the good label probability:
\begin{align*}
    \Pr^{\nu_0, \alg} \brk*{  \phi(x_\mathrm{new}) = \sgood    \mid \cF', \optpi = \pi } = \frac{2^H - \abs{\crl*{X \in \cF: \phi(X) = \sgood}}}{2^{2H} - \abs{\cF}}.
\end{align*}
For the lower bound we have
\begin{align*}
    \frac{2^H - \abs{ \crl*{X \in \cF: \phi(X) = \sgood}}}{2^{2H} - \abs{\cF}} \ge \frac{2^H - T}{2^{2H} } =  \frac{1}{2^H}\cdot \prn*{ 1 - \frac{T}{{2^{H}}} }.
\end{align*}
For the upper bound we have
\begin{align*}
    \frac{2^H - \abs{\crl*{X \in \cF: \phi(X) = \sgood}}}{2^{2H} - \abs{\cF}} \le \frac{2^H}{2^{2H} - T} =  \frac{1}{2^H}\cdot \prn*{ 1- \frac{T}{2^{2H}} }^{-1} \le \frac{1}{2^H}\cdot  \prn*{ 1 + \frac{T}{{2^{H}}} },
\end{align*}
which holds as long as $T \le 2^{H}$. Combining both upper and lower bounds proves the lemma for the good label. The calculation for $ \phi(x_\mathrm{new}) = \sbad$ is similar, so we omit it. This concludes the proof of \pref{lem:latent-calculation-generative}.
\end{proof}

\section{Bibliographical Remarks}
\paragraph{Exponential Lower Bounds for RL.} We highlight the influential paper \cite{du2019good}, which shows an exponential (in horizon $H$) lower bound for policy learning with a generative model when the policy class $\Pi$ is comprised of \emph{linear policies} which take the form $\pi_\theta(x) \coloneqq \argmax_{a \in \actionsp} \phi(x,a)^\top \theta$. In fact, our lower bound of \pref{thm:generative_lower_bound} recovers these results as a special case, since it is a well known fact that linear policies have infinite threshold dimension, and spanning capacity is lower bounded by threshold dimension (\pref{thm:bounds-on-spanning}).

\paragraph{Policy Eluder Dimension.}
\cite{mou2020sample} study policy learning with a generative model when the policy class has bounded policy eluder dimension (\pref{def:policy-eluder})---see their Thm.~1. However, their result requires additional assumptions: that the policy class contains the optimal policy and that the optimal value function has a gap.

\chapter{Online RL}\label{chap:online}

Now we study the online RL interaction protocol. An overview of our results is provided in \pref{sec:online-overview}. First, we show in \pref{sec:online-lower-bound} that unlike the generative model, bounded spanning capacity by itself is insufficient for PAC RL with online RL interaction. Next, to circumvent this lower bound, we introduce and study a new structural complexity measure called the sunflower property in \pref{sec:online-upper-bound}. We give a positive result on the sample complexity of PAC RL for policy classes which satisfy bounded spanning capacity and the sunflower property. We conclude with several open problems in \pref{sec:online-open-problems}. Proofs of our main results are deferred to \pref{sec:online-deferred-proofs}.

\section{Overview}\label{sec:online-overview}
We define $\non(\Pi; \eps, \delta)$ to be the minimax sample complexity of online RL (c.f.~\pref{def:generative-minimax}). 
\begin{definition}\label{def:online-minimax}
Fix any $(\eps, \delta) \in (0,1)$ as well as a policy class $\Pi$. We denote the minimax sample complexity $\non(\Pi; \eps, \delta)$ to be the smallest $n \in \bbN$ for which there exists an algorithm $\alg$ which for any MDP $M$ collects at most $\non(\Pi; \eps, \delta)$ samples using online RL access to $M$ and returns an $\eps$-optimal policy for $M$ with probability at least $1-\delta$.
\end{definition}

\paragraph{A Tempting Conjecture.} The central question we ask in this chapter is:
\begin{center}
    \textit{Is $\non(\Pi; \eps, \delta)$ always $\Theta(\poly(\dimRL(\Pi)))$ for every policy class $\Pi$?}
\end{center}
The lower bound is already clear. Since online RL is at least as hard as learning with a generative model, by \pref{thm:generative_lower_bound}, we know that $\non(\Pi; \eps, \delta) \ge \ngen(\Pi; \eps, \delta) \ge \Omega(\dimRL(\Pi))$. But is spanning capacity also sufficient? In other words, can we design an algorithm which always achieves $\poly(\dimRL(\Pi))$ sample complexity with online RL access? 

Note that the $\mathsf{TrajectoryTree}$ algorithm used to prove \pref{thm:generative_upper_bound} crucially relied on the ability to perform local resets. If the MDP has deterministic transitions (and fixed initial state $x_1$), then online RL access can be used to perform local resets, since for any state $x$ which the learner has previously observed, we know the exact sequence of actions which will land the learner in that state again. In stochastic MDPs over a large state space, the learner might never see the same state again, and it seems unlikely that the $\mathsf{TrajectoryTree}$ algorithm can be adapted to online RL access.

We now state the key results in this chapter which address this central question.
\begin{enumerate}
    \item In \pref{thm:lower-bound-online}, we give a lower bound which refutes the conjecture: namely, we construct a policy class $\Pi$ for which the minimax sample complexity is at least
        \begin{align*}
        \non(\Pi; \eps, 1/8) = \eps^{-\Omega\prn{\log \dimRL(\Pi)}} .
        \end{align*}
    In particular, this rules out any algorithm that achieves $\poly(\dimRL(\Pi))$ sample complexity. The lower bound exemplifies the challenge of policy learning in large state spaces with online interaction: since the learner is unlikely to to see the same state more than once, they cannot perform (optimistic) exploration to reduce the uncertainty about the optimal policy. On a technical level, \pref{thm:lower-bound-online} relies on a probabilistic construction of certain ``block-free'' matrices, which may also be of independent interest.
\item Interestingly, for all of the policy class examples considered in \pref{sec:C-pi}, it \emph{is} possible to show $\poly(\dimRL(\Pi))$ sample complexity. In \pref{sec:online-upper-bound}, we show that all of these policy class examples satisfy a general structural condition which we call the \emph{sunflower property}. The sunflower property has two parameters $K$ and $D$. For these examples we have $K, D = \poly(H)$ (in fact, the policy class from \pref{thm:lower-bound-online} is the only construction we know of which has $\dimRL(\Pi) = \poly(H)$ but $K,D = 2^{\poly(H)}$). Intuitively, the sunflower property allows us to leverage the two main algorithmic tools available: \emph{tabular algorithms} which adaptively explore the environment when encountering previously-seen states, and \emph{importance sampling}, which enables sample-efficient estimation of many policies simultaneously.

    In \pref{thm:sunflower}, we design a new algorithm called $\mathsf{POPLER}$ which exploits the sunflower structure of a policy class to achieve a sample complexity which is
    \begin{align*}
        \poly \prn*{ \dimRL(\Pi), K, D, \eps^{-1}, \log \abs{\Pi}, \delta^{-1} }.
    \end{align*}
    To perform policy evaluation over a large policy class $\Pi$, $\mathsf{POPLER}$ utilizes a new technical tool called the \emph{policy-specific Markov Reward Process (MRP)}, which for every $\pi \in \Pi$ collapses the original MDP onto a smaller tabular MRP. The transitions/rewards of every policy-specific MRP can be simultaneously estimated via an approach that interpolates between tabular algorithms and importance sampling. 
\end{enumerate}

\section{Spanning Capacity is Insufficient}\label{sec:online-lower-bound}
We prove a somewhat surprising negative result showing that bounded \Compname{} by itself is insufficient to characterize the minimax sample complexity in online RL. In particular, we provide an example for which we have a \emph{superpolynomial} (in \(H\)) lower bound on the number of trajectories needed for learning in online RL, that is not captured by any polynomial function of \Compname{}. This implies that, contrary to RL with a generative model, one can not hope for \(\non(\Pi; \eps, \delta) = \wt{\Theta}\prn{\poly\prn{\dimRL(\Pi), H, \eps^{-1}, \log \delta^{-1}}}\) in online RL.

\begin{theorem}[Lower Bound for Online RL]
\label{thm:lower-bound-online}
Fix any sufficiently large $H$. Let $\eps \in \prn{1/2^{\cO(H)},\cO(1/H)}$ and $\ell \in \crl{2, \dots, H}$ such that $1/\eps^\ell \le 2^H$. There exists a policy class $\Piell$ of size $\cO(1/\eps^\ell)$ with $\dimRL(\Piell) \le \cO(H^{4\ell+2})$ and a family of MDPs $\cM$ with state space $\statesp$ of size $2^{\cO(H)}$, binary action space, and horizon $H$ such that: for any $(\eps, 1/8)$-PAC algorithm, %that returns an $\eps/16$-optimal policy with probability at least $1/8$,
there exists an MDP $M \in \cM$ for which the algorithm must collect at least
\begin{align*}
    \Omega\prn*{\min \crl*{\frac{1}{\eps^\ell}, 2^{H/3} } } \quad \text{online trajectories in expectation.}
\end{align*}
\end{theorem}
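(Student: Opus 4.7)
The plan is to build a family $\cM = \crl{M_{\optpi}}_{\optpi \in \Piell}$ of Block MDP-style rich observation combination locks, generalizing the $H$-step lock of \pref{thm:lower-bound-coverability} to an $\ell$-slot branching pattern. The latent structure will consist of a single ``good'' path with $\ell$ designated decision layers (among $[H]$) at which the optimal policy $\optpi$ must play a specific action; deviating at any of these $\ell$ layers sends the trajectory into an absorbing bad state. The observation space has size $2^{\cO(H)}$, with the decoder drawn uniformly from a large class $\Phi$ so that with high probability each sampled observation is fresh and contains no useful information about the underlying latent. Rewards are delivered only upon completing the full good path and are Bernoulli of magnitude $\Theta(\eps)$, so the optimal policy achieves value $\Theta(\eps)$ while any policy in $\Piell$ that deviates at one or more decision layers has value $0$.

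\textbf{Policy class via block-free matrices.} Construct $\Piell$ by a probabilistic argument. I would associate each policy $\pi \in \Piell$ with a subset $S_\pi \subseteq [K]$ of size $\ell$ indicating the $\ell$ latent ``slots'' at which $\pi$ plays action $1$, and require the resulting family $\crl{S_\pi}_{\pi \in \Piell}$ to be \emph{block-free}: for every $\poly(H)$-sized set $T \subseteq [K]$, only $\poly(H)$ of the $S_\pi$'s lie inside $T$. A random selection of $\Theta(1/\eps^\ell)$ subsets of $[K]$ drawn from an appropriate product distribution should yield this property with high probability, in the spirit of standard probabilistic constructions of incidence structures with no large ``combinatorial block''. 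This block-free property is the key structural ingredient for controlling spanning capacity later on, while still allowing $\abs{\Piell}$ to be as large as $\Theta(1/\eps^\ell)$.

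\textbf{Sample complexity lower bound via interactive Le Cam.} To obtain $T \ge \Omega(\min\crl{1/\eps^\ell, 2^{H/3}})$, I would apply the interactive Le Cam method (as in \pref{thm:lower-bound-coverability}) with priors $\nu_0, \nu_1$ chosen to be uniform distributions over two disjoint halves of $\Piell$ whose optimal policies disagree on the identity of the last decision slot. Following the template of \pref{lem:tv-bound-generative}, I would perform a chain-rule TV decomposition across $T$ online trajectories and control two contributions: (i) a transition/observation TV term of order $\cO(T^2/2^H)$, using the rich-observation argument exactly as in the generative lower bound to rule out repeated or informative observations; and (ii) a reward TV term of order $\cO(T \eps^\ell)$, which is the binding one, since under the uniform prior any single trajectory hits the entire $\ell$-fold ``combination'' (and hence contains any information about which $S_\pi$ is active) with probability at most $\cO(\eps^\ell)$. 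Summing and setting the total TV below a constant forces the stated bound on $T$.

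\textbf{Bounding spanning capacity.} The main obstacle will be establishing $\dimRL(\Piell) \le \cO(H^{4\ell+2})$, despite $\abs{\Piell}$ being exponential in $\log(1/\eps)$. Fix any deterministic MDP $M$ and layer $h^\star$. I would count the reachable $(x,a)$ pairs at $h^\star$ by grouping policies according to their \emph{action-$1$ signature} --- the ordered sequence of layers and slots at which they have played action $1$ so far --- and argue by induction on $\ell$. At each step a policy plays action $1$, the block-free property limits the number of distinct next slots that can actually branch off into a new reachable pair to $\poly(H)$; this yields a tree of reachable signatures of depth at most $\ell$ and branching factor $\poly(H)$, so the total count is bounded by a product of $\ell$ factors of size $\cO(H^4)$, where the exponent $4$ absorbs choosing the next active slot ($\cO(H)$), its layer position ($\cO(H)$), and an extra $\cO(H^2)$ slack allowed by block-freeness. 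The most delicate step is threading the block-free guarantee carefully through the recursion: a naive counting would give $\binom{K}{\ell}$ which can be as large as $1/\eps^\ell$, and only the block-free property prevents this blow-up.
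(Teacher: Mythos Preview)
Your proposal has the right overall shape---probabilistic block-free policy class, rich-observation locks, and a combinatorial spanning-capacity bound---but the hard instance you describe differs from the paper's in a way that would break both the lower bound and the spanning-capacity argument.

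\textbf{Construction of the MDP and policy class.} You describe a \emph{single} rich-observation lock with $\ell$ designated decision layers, and policies parameterized by an $\ell$-subset $S_\pi$ of ``slots.'' The paper's construction is structurally different: it is a uniform distribution over $2^{2H}$ \emph{parallel} $H$-step combination locks, indexed by $j \in [2^{2H}]$. The policy class $\Piell$ is built from a $(k,\ell)$-block-free binary matrix $B \in \{0,1\}^{N \times 2^{2H}}$ with $N = \Theta(1/\eps^\ell)$: each policy $\pi_i$ is ``active'' on an $\eps$-fraction of state indices (the columns with $B_{ij}=1$, so roughly $\eps \cdot 2^{2H}$ indices, not $\ell$), and on each active index plays a \emph{unique} $H$-bit pattern. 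Your notion of block-freeness (``for every $\poly(H)$-sized $T$, only $\poly(H)$ of the $S_\pi$ lie inside $T$'') is not the same as the paper's: the paper needs the matrix $B$ to contain no $(\ell \log d) \times \ell$ all-ones submatrix, which is what eventually bounds the number of long ``layer decompositions'' in the spanning-capacity argument.

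\textbf{Why the parallel-lock structure matters for the lower bound.} The paper's argument is not a two-point Le Cam over $\Theta_0, \Theta_1$ with a reward-TV term of $\cO(T\eps^\ell)$. Instead it uses a stopping-time argument \`a la Garivier--Kaufmann: one upper-bounds $\sum_{\pistar \in \Piell} \En_0[N^{\pistar}]$ by $\En_0[\eta]$ (because at each index $j$, policies in $\Piell_j$ play pairwise-distinct $H$-bit patterns, so the algorithm's trajectory can match at most one of them), and lower-bounds it via KL. The change of measure goes through only because with $2^{2H}$ locks the learner never sees the same lock twice, so the reference measure $M_0$ and the averaged measure $\Pr_{0,\pistar}$ are $\cO(T^2/2^H)$-close. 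In a single-lock construction this step fails: repeated visits to the same lock let the learner accumulate information about $\optpi$, and your claimed ``reward TV $\cO(T\eps^\ell)$'' has no clear source.

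\textbf{Spanning capacity.} Your inductive ``action-1 signature'' idea is close in spirit to the paper's \emph{layer decomposition}, which tracks the layers $h_1 < \cdots < h_m$ at which a trajectory first plays action $1$ on a new state index. For $m \le \ell$ the count is $\cO(H^{4m})$ by direct enumeration; for $m > \ell$, the block-free matrix property bounds $|\cap_{j \in J} \Piell_j| \le \ell \log 2^{2H}$ for any $|J|=\ell$, cutting off the recursion. Your recursion instead appeals to block-freeness of the $\ell$-subsets $S_\pi$, which is a different (and weaker) combinatorial object; without the matrix formulation it is not clear how to get the $\poly(H)$ branching factor you need.
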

Informally speaking, the lower bound shows that there exists a policy class \(\Pi\) for which \(\non(\Pi; \eps, \delta) = \eps^{-\Omega(\log_H \dimRL(\Pi))} \). In order to interpret this theorem, we can instantiate choices of $\epsilon = 1/2^{\sqrt{H}}$ and $\ell = \sqrt{H}$ to show an explicit separation.
\begin{corollary} For any sufficiently large $H$, there exists a policy class $\Pi$ with $\dimRL(\Pi) = 2^{\cO(\sqrt{H} \log H)}$ such that for any $(1/2^{\sqrt{H}}, 1/8)$-PAC algorithm, there exists an MDP for which the algorithm must collect at least $2^{\Omega(H)}$ online trajectories in expectation.
\end{corollary}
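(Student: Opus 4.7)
The plan is to prove the corollary as a direct instantiation of Theorem \ref{thm:lower-bound-online} with the parameter choices $\eps = 1/2^{\sqrt{H}}$ and $\ell = \lceil \sqrt{H} \rceil$. First I would verify the three preconditions of the theorem for these choices, for $H$ sufficiently large. Namely, (i) $\eps = 2^{-\sqrt{H}} \in (1/2^{\cO(H)}, \cO(1/H))$, since $2^{-\sqrt{H}} > 2^{-H}$ and $2^{\sqrt{H}}$ eventually dominates $H$; (ii) $\ell = \lceil \sqrt{H} \rceil \in \crl{2, \ldots, H}$ for $H \ge 4$; and (iii) $1/\eps^\ell = 2^{\ell \sqrt{H}} \le 2^{H}$, which holds with (near-)equality for our choice and allows a slight adjustment of constants if needed.

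Next I would plug the chosen parameters into the conclusion of Theorem \ref{thm:lower-bound-online}. The theorem supplies a policy class $\Pi^{(\ell)}$ with spanning capacity
\begin{align*}
    \dimRL(\Pi^{(\ell)}) \le \cO(H^{4\ell + 2}) = \cO(H^{4 \sqrt{H} + 2}) = 2^{\cO(\sqrt{H} \log H)},
\end{align*}
matching the bound claimed in the corollary. For the lower bound, the quantity $\min\crl{1/\eps^\ell, 2^{H/3}}$ becomes $\min\crl{2^{\ell \sqrt{H}}, 2^{H/3}} = 2^{\Omega(H)}$, giving the claimed sample complexity lower bound of $2^{\Omega(H)}$ trajectories in expectation.

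There is no real obstacle here beyond bookkeeping: the content of the corollary is already contained in Theorem \ref{thm:lower-bound-online}, and the only subtlety is ensuring that $\ell$ can be taken as an integer (which is why I would take $\ell = \lceil \sqrt{H} \rceil$) and that the integrality does not disturb the asymptotic exponents in either $\dimRL(\Pi^{(\ell)})$ or in the final $2^{\Omega(H)}$ lower bound. Both are easily checked since $\lceil \sqrt{H} \rceil = \sqrt{H}(1 + o(1))$, so both expressions retain the same $\cO(\cdot)$ and $\Omega(\cdot)$ asymptotics.
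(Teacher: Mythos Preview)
Your proposal is correct and matches the paper's approach exactly: the paper states the corollary immediately after Theorem~\ref{thm:lower-bound-online} as a direct instantiation with $\eps = 1/2^{\sqrt{H}}$ and $\ell = \sqrt{H}$, without further proof. One minor point: taking $\ell = \lceil \sqrt{H}\rceil$ can make $1/\eps^{\ell} = 2^{\sqrt{H}\lceil\sqrt{H}\rceil}$ slightly exceed $2^{H}$, so it is cleaner to use $\ell = \lfloor \sqrt{H}\rfloor$ (which still yields $1/\eps^\ell \ge 2^{H-\sqrt{H}} = 2^{\Omega(H)}$ and $\dimRL \le H^{O(\sqrt{H})} = 2^{O(\sqrt{H}\log H)}$), but you already flagged this as a constant adjustment.
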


\pref{thm:lower-bound-online} shows that (1) online RL is \emph{strictly harder} than RL with generative access, and (2) online RL for stochastic MDPs is \emph{strictly harder} than online RL for MDPs with deterministic transitions. 

\emph{Remark.} The seminal work of \citet{foster2021statistical} provides a unified complexity measure called Decision-Estimation Coefficient (DEC) that characterizes the complexity of model-based RL. Given a realizable model class $\cM$, they give an upper bound of $\mathsf{DEC} \cdot \log \abs{\cM}$. However, in our policy learning setup, this guarantee is too loose to be useful, since the size of the implicit model class is at least $\log \abs{\cM} \propto \abs{\statesp}$. Followup works \cite{foster2024model, chen2024assouad, chen2025decision} study whether the $\log \abs{\cM}$ dependence can be improved. We observe that \pref{thm:lower-bound-online} rules out an improvement of $\mathsf{DEC} \cdot \log \abs{\Pi_\cM}$, where $\Pi_\cM$ is the induced policy class. This follows because it is know that coverability is an upper bound on the DEC \cite{xie2022role}, and our lower bound rules out $\poly(\ccov, \log \abs{\Pi})$ guarantees.

\begin{figure}[!t]
    \centering
\includegraphics[scale=0.275, trim={2cm 1.5cm 5cm 0.5cm}, clip]{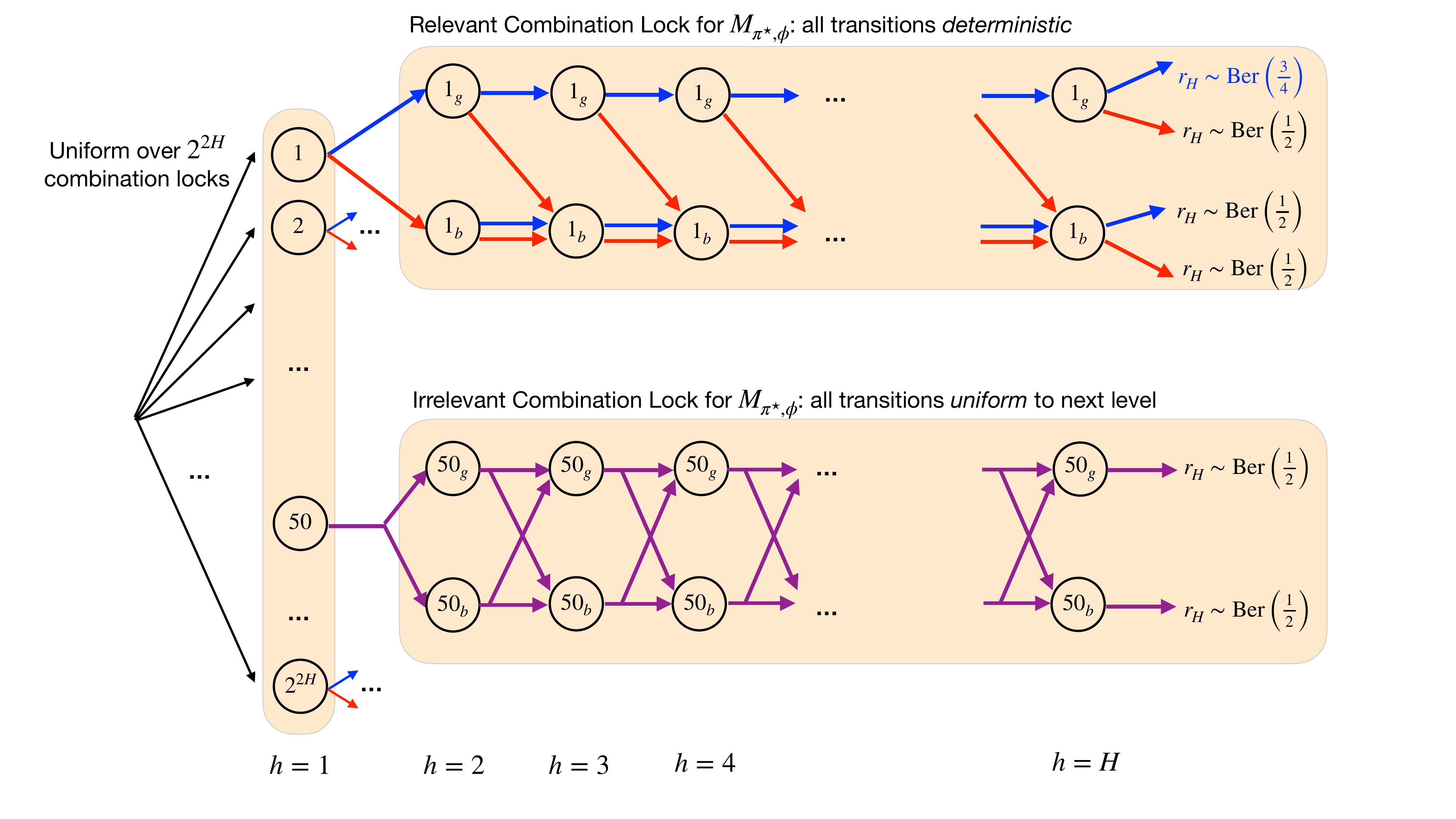}
    \caption{Illustration of the lower bound from \pref{thm:lower-bound-online}. \textcolor{blue}{Blue} arrows represent taking the action $\pistar(x)$, while \textcolor{red}{red} arrows represent taking the action $1- \pistar(x)$. \textcolor{purple}{Purple} arrows denote uniform transition to the states in the next layer, regardless of action. The MDP $M_{\pistar, \phi}$ is a uniform distribution of $2^{2H}$ combination locks of two types. In the \emph{relevant} combination locks (such as Lock 1 in the figure), following $\pistar$ keeps one in the ``good'' chain and gives reward of $\mathrm{Ber}(3/4)$ in the last layer, while deviating from $\pistar$ leads one to the ``bad'' chain and gives reward of $\mathrm{Ber}(1/2)$. In \emph{irrelevant} combination locks (such as Lock 50 in the figure), the next state is uniform regardless of action, and all rewards at the last layer are $\mathrm{Ber}(1/2)$.}
    \label{fig:lower-bound-idea}
\end{figure}

\paragraph{Proof Sketch for \pref{thm:lower-bound-online}.} We defer the full proof of \pref{thm:lower-bound-online} to \pref{sec:proof-lower-bound-online} and sketch the main ideas here.  An illustration of an MDP in the family $\cM$ can be found in \pref{fig:lower-bound-idea}.

The basic building block for our lower bound is the combination lock, a prototypical construction used in prior works \citep{krishnamurthy2016pac,du2019provably, sekhari2021agnostic}. The MDPs we construct are essentially a uniform distribution over $2^{2H}$  different combination locks. In order to receive positive feedback, the learner must play a sequence of $H$ correct actions in a particular combination lock. Intuitively, figuring out this sequence requires multiple revisits to the same lock. However, under online access, it is  unlikely that the learner will get to see the same lock multiple times unless they use an exponential number of samples. Note that under generative access, this is not an issue, since the learner can reset to any state they like.

In more detail, each hard MDP $M_{\pistar, \phi} \in \cM$ is parameterized by a policy $\pistar \in \Piell$ (which is optimal for that MDP) and a decoder $\phi: \statesp \mapsto \crl{\textsc{Good}, \textsc{Bad}}$. In the MDP $M_{\pistar, \phi}$ there will be a \emph{planted set} of ``relevant'' combination locks on which running $\pistar$ achieves $\ber(3/4)$ reward; on the rest of the combination locks any policy $\pi \in \cA^\statesp$ achieves $\ber(1/2)$ reward. Since the planted set is an $\eps$-fraction of the total, the learner must solve an \(\Omega(\epsilon)\)-fraction of the relevant locks in order to find an $O(\eps)$-optimal policy. However, as we have established, the learner will never see the same combination lock multiple times, so their only hope is to try to identify $\pistar$ through alternative means (e.g.~via elimination).

In the vanilla combination lock, it becomes easy to learn $\pistar$ via trajectory data, since once the learner observes a jump to the ``bad'' chain, they can immediately eliminate many candidate policies. Our construction utilizes a \emph{contextual} variant of the combination lock which minimizes information leakage about $\pistar$ from transition data. This is formalized by the decoder $\phi$, which randomly assigns states to be in the ``good'' chain and the ``bad'' chain. In this way, the learner, upon a single visit to a certain lock, cannot know when they have jumped to the ``bad'' chain (or even whether they are in a planted combination lock or not!) unless they can identify whether the reward at level $H$ in that combination lock is $\ber(1/2)$ instead of $\ber(3/4)$.

The last key to the puzzle is to prove that there exists such a policy class $\Piell$ which satisfies these properties yet still has bounded \Compname{}. We reduce this problem to showing the existence of certain ``block-free'' binary matrices, whose existence is shown using the probabilistic method. Further discussion on this is deferred to \pref{sec:remarks-zaran}.

\emph{Remark.} On a technical level, the lower bound argument relies on a stopping-time argument pioneered in the works \cite{garivier2019explore, domingues2021episodic, sekhari2021agnostic}. It is possible to prove the result using the machinery of interactive lower bounds \cite{chen2024assouad}, as is done in several other results in this thesis (e.g., \pref{thm:lower-bound-coverability} and \ref{thm:lower-bound-policy-completeness}). 

\section{Spanning Capacity + Sunflower Property Enable Statistically Efficient Learning}\label{sec:online-upper-bound}
The lower bound in \pref{thm:lower-bound-online} suggests that further structural assumptions on \(\Pi\)  are needed for statistically efficient agnostic PAC RL under the online interaction protocol. Essentially, the lower bound construction provided in \pref{thm:lower-bound-online} is hard to learn because any two distinct policies \(\pi, \pi' \in \Pi\) can differ substantially on a large subset of states (of size at least \(\epsilon \cdot 2^{2H}\)). Thus, we cannot hope to learn ``in parallel'' via a low variance importance sampling strategy that utilizes extrapolation to evaluate all policies $\pi \in \Pi$. %, as we did for singletons. 

In the sequel, we consider the following \coreset{} property to rule out such problematic scenarios, and show how bounded \Compname{} along with the \coreset{} property enable sample-efficient PAC RL with the online interaction protocol. The \coreset{} property only depends on the state space, action space, and policy class, and is independent of the transition dynamics and rewards of the underlying MDP. We first define a petal, a key ingredient of a sunflower.  

\begin{definition}[Petal] 
\label{def:petal_policy}
For a policy set \(\bar{\Pi}\), and states \(\bar{\statesp} \subseteq \statesp\), a policy \(\pi\) is said to be a \(\bar{\statesp}\)-\textit{petal} on \(\bar{\Pi}\) if for all \(1 \le h \leq h' \leq H\), and partial trajectories $\tau = (x_h, a_h, \cdots, x_{h'}, a_{h'})$ that are consistent with $\pi$: either \(\tau\) is also consistent with some \(\pi' \in \bar{\Pi}\), or there exists \(i \in (h, h']\) s.t.~$x_i\in \bar{\statesp}$. 
\end{definition}  

Informally, \(\pi\) is a \(\bar{\statesp}\)-petal on \(\bar{\Pi}\) if any trajectory that can be obtained using \(\pi\) can either also be obtained using a policy in \(\bar{\Pi}\)  or must pass through \(\bar{\statesp}\). Thus, any policy that is a \(\bar{\statesp}\)-petal on \(\bar{\Pi}\) can only differentiate from \(\bar{\Pi}\) in a structured way. A policy class is said to be a sunflower if it is a union of petals as defined below: 

\begin{definition}[Sunflower]  
\label{def:core_policy} 
    A policy class $\Pi$ is said to be a \((K, D)\)-\coreset{} if there exists a set \(\Picore\) of Markovian policies with $|\Picore|\le K$ such that for every policy $\pi\in \Pi$ there exists a set $\statesp_\pi \subseteq \statesp$, of size at most \(D\), so that \(\pi\) is an \(\statesp_\pi\)-petal on \(\Picore\). 
\end{definition}

% \begin{remark}
% 	Notice that $(K, D)$-\coreset{} is only a property of state space, action space and policy class, and is independent to the underlying transitions and rewards. Hence whether $\Pi$ is a $(K, D)$-\coreset{} can be verified and the set $\Picore$ and $\statesp_\pi$ can be computed before start of the algorithm.
% \end{remark}

% Note that the above \coreset{} property only depends on the state space, action space, and policy class, and is independent of the transition dynamics and rewards of the underlying MDP. In fact, one can enumerate and verify whether $\Pi$ is a $(K, D)$-\coreset{}, and compute the sets $\Picore$ and $\crl{\statesp_\pi}_{\pi \in \Pi}$ without any \ayush{replace \(\Picore\) by \(\Pi_{\mathrm{core}}\)} interaction with the MDP. 

Our next theorem provides a sample complexity bound for agnostic PAC RL for~policy classes that have \((K, D)\)-sunflower structure. This bound is obtained via a new exploration algorithm called $\mathsf{POPLER}$ that takes as input the set \(\Picore\) and corresponding petals \(\crl{\statesp_\pi}_{\pi \in \Pi}\) and leverages importance sampling as well as reachable state identification techniques to simultaneously estimate the value of every policy in \(\Pi\). Algorithm details are deferred to \pref{sec:algorithm_description}.

\begin{theorem} \label{thm:sunflower} 
Let \(\epsilon, \delta > 0\). Suppose the policy class \(\Pi\) satisfies \(\dimRL(\Pi)\) and is a \((K, D)\)-\sunflower. Then, for any MDP \(M\), with probability at least \(1 - \delta\), $\mathsf{POPLER}$ (\pref{alg:main}) succeeds in returning a policy \(\wh \pi\) that satisfies  \(V^{\wh \pi} \geq \max_{\pi \in \Pi} V^\pi - \epsilon\), after collecting 
\begin{align*}
\widetilde{\cO}\prn*{\prn*{\frac{1}{\epsilon^2} + \frac{HD^6 \dimRL(\Pi)}{\epsilon^4}} \cdot K^2 \log\frac{|\Pi|}{\delta}} \quad \text{online trajectories in \(M\).} 
\end{align*}
\end{theorem}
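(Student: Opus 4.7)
The plan is to analyze $\mathsf{POPLER}$ by establishing a uniform convergence bound on value estimates over all $\pi \in \Pi$, and then returning the maximizer. The central object is the \emph{policy-specific Markov Reward Process} $\MRP{\pi}$ mentioned in the statement: using the $(K,D)$-sunflower decomposition, any trajectory generated by $\pi$ is, by \pref{def:petal_policy}, either consistent with some core policy $\pi' \in \Picore$ throughout, or has its first deviation from every $\pi' \in \Picore$ occur while passing through one of the at-most-$D$ petal states in $\statesp_\pi$. Hence we can collapse the original MDP onto a tabular MRP over the ``macro-states'' $\statesp_\pi \cup \Picore$, whose depth is at most $D+1$ (since once a core policy is picked up it is followed until termination). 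The point is that estimating $V^\pi$ reduces to estimating (i) the tail values $Q^{\pi'}_h$ of each core policy at each layer, and (ii) the petal-to-petal transition kernel obtained by taking $\pi$'s action at $x \in \statesp_\pi$ and rolling in with the appropriate core policy. Both are tabular objects indexed by $\Picore$ and $\bigcup_{\pi}\statesp_\pi$.

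First I would run an \emph{exploration} phase that, for each $\pi' \in \Picore$ and each layer $h$, draws $\tilde{O}(K \log(|\Pi|/\delta)/\eps^2)$ rollouts of $\pi'$ to estimate (a) the visitation probability to each candidate petal state and (b) the tail reward there, up to accuracy $\eps/\poly(D, H)$. Petal states with estimated reachability below an $\eps/\poly(D,H)$ threshold are pruned. By \pref{lem:coverability}, $\ccov(\Pi, M) \le \dimRL(\Pi)$, so across all $\pi \in \Pi$ only $\tilde{O}(H\dimRL(\Pi)/\eps)$ petal states can survive the threshold; this is the origin of the $H \dimRL(\Pi)$ factor in the bound. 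In a second phase, for each surviving petal state $x$ and each choice of $\pi(x)$ (there are at most $A \le \poly$ such choices among policies in $\Pi$), I would again roll in with core policies to estimate the petal-to-petal kernel, once again with budget $\tilde{O}(K\log(|\Pi|/\delta)/\eps^2)$ per estimate to enable a uniform union bound over $\Pi$. Combining these per-estimate budgets with the $\tilde{O}(H\dimRL(\Pi)/\eps)$ surviving petal states and the $K$ core policies produces the $HD^6 \dimRL(\Pi) K^2 \log(|\Pi|/\delta)/\eps^4$ term, while a final $\tilde{O}(\log(|\Pi|/\delta)/\eps^2)$ trajectories for the overall policy selection step gives the additive $1/\eps^2$ term. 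Having built $\widehat{\MRP{\pi}}$ for each $\pi \in \Pi$, a tabular simulation-lemma argument on an MRP of depth $D+1$ and reward horizon $H$ yields $|\widehat{V}^\pi - V^\pi| \le \eps/2$ uniformly, whence $\wh\pi = \argmax_\pi \widehat{V}^\pi$ is $\eps$-optimal.

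The main obstacle will be controlling the interaction between the three error sources without blowing up to exponential dependence in $D$. Specifically: (i) after reaching a petal state $x$, one must take $\pi(x)$ and then continue with a core policy, so the branching kernel is defined by mixing on-policy rollouts of $\pi'$ with a single off-policy action, and its variance must be kept polynomial in $D$ rather than exponential; (ii) the simulation lemma must be applied on a depth-$(D+1)$ MRP, which is exactly what forces a $D^3$-style inflation per estimated parameter and, after squaring inside the Hoeffding budget, yields the $D^6$ dependence; and (iii) the $\eps$-reachability threshold must be set delicately so that pruned states contribute at most $\eps$ total mass (via $\dimRL(\Pi)$) while surviving states remain estimable with the claimed accuracy. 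Once these three are aligned, the union bound over $|\Pi|$ enters only through $\log|\Pi|$ inside the per-parameter Hoeffding budget, and the stated sample complexity drops out.
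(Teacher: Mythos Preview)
Your proposal has the right high-level shape---build a policy-specific tabular MRP over $\statesp_\pi$, bound the number of reachable petal states via $\dimRL(\Pi)$, and apply a simulation lemma---but it contains two linked conceptual errors that would prevent the argument from going through.

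First, the MRP is not over $\statesp_\pi \cup \Picore$; it is over $\statesp_\pi \cup \{x_\top, x_\bot\}$. The sunflower property does \emph{not} say that once a core policy is picked up it is followed until termination. It says that any partial trajectory of $\pi$ between two consecutive petal states is consistent with \emph{some} core policy, and which core policy can change at every petal state. So core policies are not macro-states and there is no meaningful ``tail value $Q^{\pi'}_h$'' to estimate; what the MRP needs are the transition probabilities and segment rewards $P^\pi_{x \to x'}$, $r^\pi_{x \to x'}$ for $x,x' \in \statesp_\pi^+$. These are estimated by importance sampling: collect data with $\pi' \sim \unif(\Picore)$ starting from $x$, and reweight using the fact that the segment is consistent with at least one $\pi' \in \Picore$ (giving variance $\le K$).

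Second, and more seriously, your exploration phase---rolling out each $\pi' \in \Picore$ from the start---will not discover all petal states reachable by $\pi$. A petal state $x_2 \in \statesp_\pi$ may only be reachable by first going to $x_1 \in \statesp_\pi$ (via some core policy), then playing $\pi(x_1)$ (which need not be consistent with \emph{any} core policy at $x_1$), and only then proceeding. No single core policy reaches $x_2$, so your threshold test would prune it even when $d^\pi(x_2)$ is large. The paper handles this by an \emph{iterative} discovery loop: maintain a set $\cT$ of pairs $(x, \pi_x)$ with $d^{\pi_x}(x) \ge \Omega(\eps/D)$; from each discovered $x$, collect data by running $\pi_x$ to reach $x$, then $\unif(\Picore)$ thereafter; use the resulting partial MRP (via $\estreach$) to find new petal states with high estimated reachability; repeat. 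The bound $|\cT| \le O(HD\dimRL(\Pi)/\eps)$ comes from coverability exactly as you note, but the iterative structure is essential. Your second phase (``roll in with core policies to estimate the petal-to-petal kernel'') has the same problem: you cannot roll in to $x_2$ with a core policy if $x_2$ is only reachable through $x_1$.
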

 The proof of \pref{thm:sunflower}, and the corresponding hyperparameters in $\mathsf{POPLER}$ needed to obtain the above bound, can be found in \pref{sec:upper_bound_main}. Before diving into the algorithm and proof details, let us highlight several key aspects of the above sample complexity bound: 

\begin{enumerate}[label=\(\bullet\)]
    \item Note that a class \(\Pi\) may be a \((K, D)\)-\coreset{} for many different choices of \(K\) and \(D\). For example, every policy class trivially satisfies the sunflower property with
        \begin{align*}
            \prn*{K= \min\crl{A^H, \abs{\Pi}}, D=0} \quad \text{or} \quad \prn*{K=0, D=\abs{\statesp}}.
        \end{align*}
        These extremes correspond to a vanilla importance sampling approach or a tabular approach.
        Barring computational issues, one can enumerate over all choices of $\Picore$ and $\crl{\statesp_\pi}_{\pi \in \Pi}$ and check if $\Pi$ is a \((K, D)\)-\coreset{} for that choice of $K = \abs{\Picore}$ and $D = \max_{\pi \in \Pi} \abs{\statesp_\pi}$. Since our bound in \pref{thm:sunflower} scales with \(K\) and \(D\), we are free to choose \(K\) and \(D\) to minimize the corresponding sample complexity bound. In this sense, the sunflower property maps out a \emph{pareto frontier of algorithms} which interpolate between the vanilla importance sampling and tabular approaches. 
    \item In order to achieve polynomial sample complexity in \pref{thm:sunflower}, both \(\dimRL(\Pi)\) and \((K, D)\) are required to be \(\poly(H, \log\abs{\Pi})\). All of the policy classes considered in \pref{sec:C-pi} have the sunflower property, with both \(K, D = \poly(H)\), and thus our sample complexity bound apply to all these classes. See \pref{sec:sunflower-property-examples} for details. 
    \item Notice that for \pref{thm:sunflower} to hold, we need both bounded \Compname{} as well as the sunflower structure on the policy class with bounded \((K, D)\). Thus, one may wonder if we can obtain a similar polynomial sample complexity guarantee in online RL  under weaker assumptions. In \pref{thm:lower-bound-online}, we already showed that bounded \(\dimRL(\Pi)\) alone is not sufficient to obtain polynomial sample complexity in online RL. Likewise, as we show in \pref{sec:sunflower-property-examples}, sunflower property with bounded \((K, D)\) alone is also not sufficient for polynomial sample complexity, and hence both assumptions cannot be individually removed. However, it is an interesting question if there is some other structural assumption that combines both spanning capacity and the sunflower property, and is both sufficient and necessary for agnostic PAC learning in online RL. See \pref{sec:online-open-problems} for further discussions on this. 
\end{enumerate}

\paragraph{How does the sunflower property enable sample-efficient learning?}
Intuitively, the \coreset{} property captures the intuition of simultaneous estimation of all policies \(\pi \in \Pi\) via importance sampling (IS), and allows control of both bias and variance. Let \(\pi\) be a \(\statesp_\pi\)-petal on \(\Picore\). Any trajectory $\tau \cons \pi$ that avoids \(\statesp_\pi\) must be consistent with some policy in \(\Picore\), and will thus be covered by the data collected using \(\pi' \sim \unif(\Picore)\). Thus, using IS with variance scaling with \(K\), one can create a biased estimator for \(V^\pi\), where the bias is \emph{only due} to trajectories that pass through \(\statesp_\pi\). There are two cases. 
\begin{itemize}
    \item  If every state in \(\statesp_\pi\) has small reachability under \(\pi\), i.e.~$d^\pi(x) \ll \eps$ for every \(x \in \statesp_\pi\), then the IS estimate will have a low bias (linear in $\abs{\statesp_\pi}$), so we can compute \(V^\pi\) up to error at most \(\epsilon \abs{\statesp_\pi}\).
    \item On the other hand, if $d^\pi(x)$ is large for some $x \in \statesp_\pi$, it is possible to explicitly control the bias that arises from trajectories passing through them since there are at most $D$ of them. We accomplish this by explicit exploration starting from this state $x$. 
\end{itemize}

\subsection{Algorithm and Proof Ideas}  \label{sec:algorithm_description}

$\mathsf{POPLER}$ (\pref{alg:main}) takes as input a policy class $\Pi$, as well as sets \(\Picore\) and \(\crl{\statesp_{\pi}}_{\pi \in \Pi}\), which can be computed beforehand by enumeration. $\mathsf{POPLER}$ has two phases: a \emph{state identification phase}, where it finds ``petal'' states $x \in \bigcup_{\pi \in \Pi} \statesp_\pi$ that are reachable with sufficiently large probability; and an \emph{evaluation phase} where it computes estimates $\wh{V}^\pi$ for every $\pi \in \Pi$. It uses three subroutines $\datacollector$, $\estreach$, and $\evaluate$, whose pseudocodes are stated in \pref{sec:algorithm_details}.

\paragraph{Key Tool: Policy-Specific Markov Reward Process (MRP).} MRPs are key technical tools used by our main algorithm.  An MRP  \(\MRPsign = \mathrm{MRP}(\statesp, P, R, H, x_\top, x_\bot)\) is defined over the state space \(\statesp\) with start state \(x_\top\) and end state \(x_\bot\),  for trajectory length \(H + 2\). Without loss of generality, we assume that \(\crl{x_\top, x_\bot} \in \statesp\). The transition kernel is denoted by \(P: \statesp \times \statesp \to [0,1] \), such that for any $x \in \statesp$, $\sum_{x'} P_{x \to x'} = 1$. The reward kernel is denoted \(R: \statesp \times \statesp \to \Delta([0,1])\). Throughout, we use the notation $\rightarrow$ to signify that the transitions and rewards are defined along the edges of the MRP. At an intuitive level, an MRP is an MDP with singleton action space, i.e.~the actions have no effect. For technical reasons, the state space in the MRPs is not layered.

Now we elaborate on the MRP construction, which is the key technical tool used in both the identification and evaluation phases of the algorithm. To build intuition, let us consider a fixed policy $\pi \in \Pi$ with petal states $\statesp_\pi$ and define a population version of policy-specific MRP. In particular, let \(\statesp_\pi^+ = \statesp \cup \crl{x_\bot, x_\top}\), and associated with $\pi$ define $\MRPsign^\pi = \mathrm{MRP}(\statesp^+_{\pi}, P^\pi, R^\pi, H, x_\top, x_\bot)$  which essentially compresses the transition and reward information in the original MDP relevant to the policy \(\pi\). % In order to define \(P^\pi\) and \(R^\pi\), we first define additional notation. Let $x, x' \in \statesp_\pi$ denote two petal states which reside in different layers $h < h'$ in the underlying MDP. We also use $\tau_{h:h'}$ to denote a partial trajectory from layer $h$ to $h'$, and $R(\tau_{h:h'})$ to denote the cumulative rewards from layer $h$ to $h'$ along the partial trajectory $\tau_{h:h'}$.  

For any states \(x \in \statesp_\pi \cup \crl{x_\top}\) and $x' \in \statesp_\pi \cup \crl{x_\bot}$ residing in different layers $h < h'$ in the underlying MDP,\footnote{For the simplicity of analysis, we slightly abuse the notation and assume that all trajectories in the underlying MDP start at \(x_\top\) at time step \(0\) and terminate at \(x_\bot\) at time step \(H+1\), and do not observe \(x_\bot\) and \(x_\top\) in between from time steps \(h = 1, \dots, H\). However, recall that \(x_\bot\) and \(x_\top\) are not part of the original layered state space \(\statesp\) for the MDP. } we define: 

\begin{itemize}
    \item \textbf{Transition $\bm{P_{x\to x'}^\pi}$} as:
    \begin{align*} 
        P_{x\to x'}^\pi \coloneqq \bbP^\pi \brk*{\substack{ \text{$\tau_{h:h'}$ goes from \(x\) to \(x'\)} \\ \text{without passing through any other $x'' \in \statesp_\pi$} } \mid \text{$\tau_h = x$}}.
    \end{align*} 

    \item \textbf{Rewards $\bm{R_{x\to x'}^\pi}$} as:
       \begin{align*}
        R_{x\to x'}^\pi \coloneqq \En^\pi \brk*{ R(\tau_{h:h'})\ind{\substack{ \text{$\tau_{h,h'}$ goes from \(x\) to \(x'\)} \\ \text{without passing through any other $x'' \in \statesp_\pi$ } }} \mid \text{$\tau_h = x$}}.
    \end{align*}
\end{itemize}

where $\tau_{h:h'}$ denotes a partial trajectory from layer $h$ to $h'$, and $R(\tau_{h:h'})$ denotes the cumulative rewards from layer $h$ to $h'$ along the partial trajectory $\tau_{h:h'}$.  Furthermore, 
\(P_{x_\bot \to x_\bot}^\pi = 1\) and \(R_{x_\bot \to x_\bot}^\pi = 0\).

%Furthermore, \(P_{x\to x'}^\pi = 0\) and \(R_{x\to x'}^\pi = 0\) whenever \(x' \neq x_\bot\) and \(h \geq h'\). Finally, \(P_{x_\bot \to x_\bot}^\pi = 1\) and \(R_{x_\bot \to x_\bot}^\pi = 0\). % Furthermore, \({P}^\pi_{s \rightarrow x_\bot} = 1\) for all \(x \in \statesp_{\pi}\), and \(R_{x\to x'}^\pi = 0\) whenever \(x = x_\top\) or \(x'= x_\bot\).  \ayush{@zeyu, please verify the last line} 

The key technical benefit of using policy-specific MRPs is that the value $V^\pi$ for the policy \(\pi\) in the original MDP is identical to the value of policy-specific MRP \(\MRPsign^\pi\) (starting from \(x_\bot\)). Thus, if one knew the transitions and rewards in $\MRPsign^\pi$, one could calculate the value of the policy \(\pi\) via dynamic programming on $\MRPsign^\pi$. Of course, we do not know these quantities, so we must estimate them by interacting with the original MDP. A naive approach is to simply run $\pi$ many times to get estimates for each transition and reward---but since we want to estimate $V^\pi$ for every $\pi \in \Pi$ simultaneously, this approach would incur an $\abs{\Pi}$ dependency in the sample complexity. Instead, our algorithm uses importance sampling to estimate transitions and rewards in the corresponding $\MRPsign^\pi$ for many policies simultaneously.

We next describe the key algorithmic ideas, as well as the empirical estimation of policy-specific MRPs, in the two phases of \pref{alg:main}. 

\begin{algorithm}[!t] 
    \caption{{\bf{P}}olicy {\bf{OP}}timization by {\bf{L}}earning $\boldsymbol\eps$-{\bf{R}}eachable States ($\mathsf{POPLER}$)}  \label{alg:main} 
    \begin{algorithmic}[1]  
            \Require Policy class \(\Pi\), Sets \(\Picore\) and \(\crl{\statesp_\pi}_{\pi \in \Pi}\),  Parameters \(K, D, n_1, n_2, \epsilon, \delta\). 
           % \State Set $n_1 = \widetilde{\cO}\prn*{\tfrac{(D^4+1)K^2}{\epsilon^2} \log(|\Pi|/\delta)}, n_2 = \widetilde{\cO}\prn*{\tfrac{C_2D^5K^2}{\epsilon^3} \log(|\Pi|/\delta },$ 
%            \State Find $\Picore$, and the set $\statesp_\pi$ for every $\pi\in\Pi$, corresponding to the \((K, D)\)-sunflower of \(\Pi\). 
%           \State  
            \State Define an additional start state \(x_\top\) (at \(h = 0\)) and end state \(x_\bot\) (at \(h = H+1\)). 
           \State Initialize $\reachablestates = \crl{x_\top}$, $\cT \leftarrow \crl{(x_\top, \mathrm{Null})}$; for every \(\pi \in \Pi\), set \(\statesp_\pi^+ \ldef{} \statesp_\pi \cup \crl{x_\top, x_\bot}\). 
           \label{line:initialization}
%       \State $U, \crl{\cD_x}_{s \in U} \leftarrow \text{Identify}(\{\Picore\}, n_1, n_2)$. \\ 
        \State $\mD_\top \leftarrow \datacollector(x_\top, \mathrm{Null}, \Picore, n_1)$
        \vspace{0.5em}
\\
\algcommentbig{Identification of Petal States that are Reachable with \(\Omega(\epsilon/D)\) Probability}  
             \While{$\mathsf{Terminate}=\mathsf{False}$}  \label{line:while_loop}     \hfill % \algcomment{} 
             \State Set \(\mathsf{Terminate}=\mathsf{True}\).
     \For{$\pi\in \Pi$} \label{line:while_loop_start}            
                \State Compute the sets: \label{line:S_sets_comp} 
                % \State Compute the set of already explored reachable states \( \SHPalg{\pi} = \statesp^+_\pi \cap \reachablestates \), and the remaining states \(\SRemalg{\pi} = \statesp_{\pi} \setminus (\SHPalg{\pi}\cup \crl{x_\bot})\). \label{line:S_sets_comp} 
                \vspace{-1em}
                \begin{align*}
                    \text{already-explored reachable states:}\quad \SHPalg{\pi} &= \statesp^+_\pi \cap \reachablestates\\
                    \text{remaining states:}\quad \SRemalg{\pi} &= \statesp_{\pi} \setminus (\SHPalg{\pi}\cup \crl{x_\bot})
                \end{align*}
                \vspace{-1.75em}
                 \State Estimate the policy-specific MRP $\widehat \MRPsign^\pi_{\reachablestates}$ according to \eqref{eq:emp-transitions} and \eqref{eq:emp-rewards}. \label{line:est-policy-specific-mrp}
            %  \State Estimate transition probability $\widehat{P}^{\pi} = \crl{\widehat{P}^\pi_{x\to x'} \mid x, x' \in \statesp^+_\pi}$ using \eqref{eq:empirical}.  \label{line:prob_estimate}
            % \ascomment{Account for 0 transitions!} 
            \For{$\bar{x}\in\SRemalg{\pi}$} \label{line:state_identification} 
                \State Estimate prob of reaching \(\bar{x}\) under \(\pi\): $\widehat{d}^\pi (\bar{x})\leftarrow \estreach(\statesp_\pi^+, \widehat \MRPsign^\pi_{\reachablestates}, \bar{x})$. \label{line:DP_solver_search}  \label{line:DP}
                \If{$\widehat{d}^\pi(\bar{x})\ge \nicefrac{\epsilon}{6D}$}   \label{line:test_and_add}
                    \State Update $\reachablestates \gets \reachablestates \cup \crl{\bar{x}}$, $\cT\leftarrow \cT \cup \crl{(\bar{x}, \pi)}$ and set  \(\mathsf{Terminate}=\mathsf{False}\). 
                    \label{line:add_s} %, and update $\SHP{\pi'}$ for all $\pi'\in\Pi$. 
                    \State Collect dataset $\mD_{\bar{x}}\leftarrow \datacollector(\bar{x}, \pi, \Picore, n_2)$. \label{line:fresh_dataset}
%                   \State Set \(\mathsf{Terminate}=\mathsf{False}\), and repeat from \pref{line:while_loop}. 
                \EndIf 
            \EndFor 
        \EndFor \label{line:while_loop_end}
  \EndWhile
    \\
%       \State break the while loop.
%       \EndWhile \\ 
        \vspace{0.5em}
        \algcommentbig{Policy Evaluation and Optimization} 
        \For{$\pi\in \Pi$} 
            \State $\hV^\pi\leftarrow \evaluate(\Picore, \reachablestates, \{\mD_x\}_{x \in \reachablestates}, \pi)$. \label{line:evaluate}
        \EndFor
        \State \textbf{Return} $\widehat{\pi} \in \argmax_\pi \hV^\pi$. \label{line:return}
    \end{algorithmic}
\end{algorithm} 

\subsubsection*{State Identification Phase} 
The goal of the state identification phase is to discover all such petal states that are reachable with probability  $\Omega(\eps/D)$. The algorithm proceeds in a loop and sequentially grows the set $\cT$, which contains tuples of the form \((x, \pi_x)\), where $x \in \bigcup_{\pi \in \Pi} \statesp_\pi$ is a sufficiently reachable petal state (for some policy) and \(\pi_x\) denotes a policy that reaches \(x\) with probability \(\Omega(\eps/D)\). We also denote $\reachablestates \coloneqq \crl*{x: (x, \pi_x) \in \cT}$ to denote the set of reachable states in \(\cT\). Initially, $\cT$ only contains a dummy start state $x_\top$ and a null policy. We will collect data using the $\datacollector$ subroutine that: for a given $(x, \pi_x) \in \cT$, first run $\pi_x$ to reach state $x$, and if we succeed in reaching $x$, restart exploration by sampling a policy from $\unif(\Pi_\mathrm{exp})$. Note that $\datacollector$ will be sample-efficient for any \((x, \pi_x)\) since \(\Omega(\epsilon/D)\) fraction of the trajectories obtained via \(\pi_x\) are guaranteed to reach $x$ (by definition of \(\pi_x\) and construction of set $\cT$). Initially, we run $\datacollector$ using $\unif(\Pi_\mathrm{exp})$ from the start, where we slightly abuse the notation and assume that all trajectories in the MDP start at the dummy state $x_\top$ at time step \(h = 0\). 

In every loop, the algorithm attempts to find a new petal state $\bar{x}$ for some $\pi \in \Pi$ that is guaranteed to be $\Omega(\eps/D)$-reachable by $\pi$. This is accomplished by constructing a (estimated and partial) version of the policy-specific MRP using the datasets collected up until that loop (\pref{line:est-policy-specific-mrp}). In particular given a policy \(\pi\) and a set \( \SHPalg{\pi} = \statesp^+_\pi \cap \reachablestates\),  we construct $\widehat \MRPsign^\pi_{\reachablestates} = \mathrm{MRP}(\statesp^+_{\pi}, \wh P^\pi, \wh R^\pi, H, x_\top, x_\bot)$ which essentially compresses our empirical knowledge of the original MDP relevant to the policy \(\pi\). In particular, for any states \(x \in \statesp_\pi \cup \crl{x_\top}\) and $x' \in \statesp_\pi \cup \crl{x_\bot}$ residing in different layers $h < h'$ in the underlying MDP, we define: 
\begin{itemize}[label=\(\bullet\)]
    \item \textbf{Transition $\bm{\wh P_{x \to x'}^\pi}$} as:
    \begin{align*} 
        \widehat{P}_{x\to x'}^\pi = 
	    \frac{1}{|\mD_x|}\sum_{\tau\in \mD_x}\frac{\ind{\pi\cons\tau_{h:h'}}}{\tfrac{1}{|\Picore|} \sum_{\pi'\in \Picore}\ind{\pi' \cons \tau_{h:h'}}}\ind{ \substack{ \text{$\tau_{h:h'}$ goes from \(x\) to \(x'\)} \\ \text{without passing through any other \(x'' \in \statesp_\pi\)}}}.
	\numberthis\label{eq:emp-transitions}
    \end{align*}
    \item \textbf{Transition $\bm{\wh R_{x \to x'}^\pi}$} as: 
    \begin{align*} 
        \widehat{R}_{x\to x'}^\pi = \frac{1}{|\mD_x|}\sum_{\tau\in \mD_x}\frac{R(\tau_{h:h'}) \cdot \ind{\pi\cons\tau_{h:h'}}}{\tfrac{1}{|\Picore|}\sum_{\pi'\in \Picore}\ind{\pi' \cons \tau_{h:h'}}}\ind{ \substack{ \text{$\tau_{h:h'}$ goes from \(x\) to \(x'\)} \\ \text{without passing through any other \(x'' \in \statesp_\pi\)}}}. \numberthis\label{eq:emp-rewards}
    \end{align*}   
\end{itemize}

Clearly, the above definition implies that $\widehat{P}_{x\to x'}^\pi = 0$ and  $\widehat{R}_{x\to x'}^\pi = 0$ for any $x \notin \SHPalg{\pi}$ since \(\cD_x\) would be empty corresponding to these unexplored states. Furthermore, %\(P_{x\to x'}^\pi = 0\) and \(R_{x\to x'}^\pi = 0\) whenever \(x' \neq x_\bot\) and \(h \geq h'\), and, 
\(P_{x_\bot \to x_\bot}^\pi = 1\) and \(R_{x_\bot \to x_\bot}^\pi = 0\). 

%In the above, if $x \notin \SHPalg{\pi}$, then $\cD_x$ is empty, and by convention, we will set the transitions $\widehat{P}_{x\to x'}^\pi$ and rewards $\widehat{R}_{x\to x'}^\pi$ to 0. 

Note that since the set $\reachablestates$ is changing in each iteration of the loop as the algorithm collects more data, the policy-specific MRP $\widehat \MRPsign^\pi_{\reachablestates}$ also changes in every iteration of the loop --- in particular, more and more transitions/rewards are assigned nonzero values due to new states being added to $\reachablestates$. More details on policy-specific MRPs is given in \pref{sec:algorithm_details}. 

The key advantage of constructing the empirical versions of policy-specific MRPs is that they allow us to explore and find new leaf states in \(\statesp_\pi\) which are reachable with probability at least \(\Omega(\nicefrac{\epsilon}{6D})\). In particular, using standard dynamic programming (subroutine $\estreach$), we can check whether a candidate petal $\bar{x}$ is reachable with decent probability by $\pi$ (lines \ref{line:DP}-\ref{line:test_and_add}); If it is, then we add $(\bar{x}, \pi)$ to the set $\cT$ and collect a fresh dataset using $\datacollector$ (lines \ref{line:add_s}-\ref{line:fresh_dataset}). Crucially, the importance sampling technique enables us to be sample efficient, since the same dataset $\cD_x$ can be used to evaluate transitions/rewards in Eqs.~\eqref{eq:emp-transitions} and \eqref{eq:emp-rewards} for multiple $\pi \in \Pi$ for which $x$ is a petal state. Furthermore, the number of such datasets we collect must be bounded---each $(x, \pi_x) \in \cT$ contributes $\Omega(\eps/D)$  to cumulative reachability, but since cumulative reachability is bounded from above by $\dimRL(\Pi)$ (\pref{lem:coverability}), we know that $\abs{\cT} \le \cO(D \cdot \dimRL(\Pi)/\eps)$.

% We show that $\widehat{P}_{x\to x'}^\pi$ is an accurate estimate of the transition probability for some (imaginary) MRP. \zeyu{Do we need to mention for $x\in\SRem{\pi}$ this MRP has nearly the same occupancy as the original MDP?} Therefore, $\mathsf{POPLER}$ can identify whether a new petal state $\bar{x} \notin \cT$ is $\Omega(\eps/D)$-reachable by $\pi$, and then add the tuple $(\bar{x}, \pi)$ to $\cT$, after which the next iteration starts

\subsubsection*{Evaluation Phase} 
Next, $\mathsf{POPLER}$ moves to the evaluation phase. Using the collected data, it executes the $\evaluate$ subroutine for every $\pi \in \Pi$ to get estimates $\wh{V}^\pi$ (\pref{line:evaluate}) corresponding to \(V^\pi\). For a given \(\pi \in \Pi\), the $\evaluate$ subroutine also constructs an empirical policy-specific MRP $\widehat \MRPsign^\pi_{\reachablestates}$ for every \(\pi \in \Pi\) and computes the value of \(\pi\) via dynamic programming on $\widehat\MRPsign^\pi_{\reachablestates}$. While the returned estimate $\wh{V}^\pi$ is biased, in the complete proof, we will show that the bias is negligible since it is now only due to the states in the petal $\statesp_\pi$ which are \emph{not} $\Omega(\eps/D)$-reachable. Thus, we can guarantee that $\wh{V}^\pi$ closely estimates \(V^\pi\) for every $\pi \in \Pi$, and therefore $\mathsf{POPLER}$ returns a near-optimal policy.

\subsection{Sunflower Property Examples}\label{sec:sunflower-property-examples}
We will show that the examples in \pref{sec:C-pi} satisfy the sunflower property (\pref{def:core_policy}) with small \(K\) and \(D\). At the end, we also give an example of a policy class which satisfies the sunflower property but has large spanning capacity, thus indicating that sunflower property itself is insufficient for sample-efficient RL. 

\paragraph{Tabular MDP.}
We choose $\Picore = \{\pi_a: \pi_a(x) = a, a\in \mA\}$ to be the set of policies which play the constant $a$ for each $a \in \cA$  and $\statesp_{\pi} = \statesp$ for every $\pi\in \Pi$, then any partial trajectory which satisfies the condition in \pref{def:core_policy} is of the form $(x_h, a_h)$, which is consistent with $\pi_{a_h}\in \Picore$. Hence $\Pi$ is a $(A, \abs{\statesp})$-\sunflower.

\paragraph{Contextual Bandit.}
We choose $\Picore = \{\pi_a: \pi_a(x)\equiv a, a\in\mA\}$, and $\statesp_{\pi} = \emptyset$ for every $\pi\in\Pi$, then any partial trajectory which satisfies the condition in \pref{def:core_policy} is in the form $(x,a)$, which is consistent with $\pi_a\in \Picore$. Hence $\Pi$ is a $(A, 0)$-\sunflower.

\paragraph{$H$-Layer Contextual Bandit.} 
We choose 
$$\Picore = \{\pi_{a_1, \cdots, a_H}: \pi_{a_1, \cdots, a_H}(x_h)\equiv a_h, a_1, \cdots, a_H\in\mA\}$$
and $\statesp_{\pi} = \emptyset$ for every $\pi\in\Pi$, then any partial trajectory which satisfies the condition in \pref{def:core_policy} is in the form $(x_1, a_1, \cdots, x_H, a_H)$, which is consistent with $\pi_{a_1, a_2, \cdots, a_H}\in \Picore$. Hence $\Pi$ is a $(A^H, 0)$-\sunflower.

\paragraph{$\ell$-tons.}
We choose 
$$\Picore = \{\pi_0\}\cup\{\pi_h:1\le h\le H\},$$
where $\pi_0(x)\equiv 0$, and \(\pi_h\) chooses the action \(1\) on all the states at layer \(h\), i.e., $\pi_h(x)\coloneqq \ind{x\in\statesp_h}$. For every $\pi\in \Pi_\ell$, we choose $\statesp_\pi$ to be the states for which $\pi(x) = 1$ (there are at most $\ell$ such states). Fix any partial trajectory $\tau = (x_{h}, a_h \cdots, x_{h'}, a_{h'})$ which satisfies $\pi\cons \tau$. Suppose that for all $i \in (h, h']$, $x_i\not\in\statesp_\pi$. Then we must have $a_i = 0$ for all $i \in (h, h']$. Hence $\pi_h \cons \tau$ (if $a_h = 1$) or $\pi_0 \cons \tau$ (if $a_h = 0$), and $\tau$ is consistent with some policy in $\Picore$.  Therefore, $\Pi_\ell$ is an $(H+1, \ell)$-\sunflower.

\paragraph{$1$-Active Policies.} 
We choose $\statesp_\pi = \{x_{(1,1)}, x_{(1,2)}, \cdots, x_{(1, H)}\}$ for all $\pi\in \Pi$ as well as 
$$\Picore = \{\pi_0\}\cup\{\pi_h:1\le h\le H\},$$
where $\pi_0(x)\equiv 0$ and $\pi_h(x)\coloneqq \ind{x\in\statesp_h}$. Fix any partial trajectory $\tau = (x_{h}, a_h, \cdots, x_{h'}, a_{h'})$ which satisfies $\pi\cons \tau$. If we have $i \in (h, h']$, $x_i\not\in\statesp_\pi$, then we must have $a_i = 0$. Thus, $\pi_h\cons \tau$ (if $a_h = 1$) or $\pi_0\cons \tau$ (if $a_h = 0$), so $\tau$ is consistent with some policy in $\Picore$. Therefore, $\Pioneactive$ is a $(H+1, H)$-\sunflower.

\paragraph{All-Active Policies.} We choose $\statesp_\pi = \{x_{(j,1)}, \cdots, x_{(j,H)}\}$ for all $\pi\in \PiJactive$, as well as 
$$\Picore = \{\pi_0\}\cup\{\pi_h:1\le h\le H\},$$ 
where $\pi_0(x)\coloneqq 0$ and $\pi_h(x)\coloneqq \ind{x\in\statesp_h}$. Fix any partial trajectory $\tau = (x_{h}, a_h \cdots, x_{h'}, a_{h'})$ which satisfies $\pi\cons \tau$. If we have $i \in (h, h']$, $x_i\not\in\statesp_\pi$, then we must have $a_i = 0$. Thus, $\pi_h\cons \tau$ (if $a_h = 1$) or $\pi_0\cons \tau$ (if $a_h = 0$), so $\tau$ is consistent with some policy in $\Picore$. Therefore, $\Piactive$ is a $(H+1, H)$-\sunflower.

\paragraph{Sunflower property is insufficient by itself.}
We give an example of a policy class $\Pi$ for which the sunflower property holds for $K,D = \poly(H)$ but $\dimRL(\Pi) = 2^H$. Therefore, in light of \pref{thm:generative_lower_bound}, the sunflower property by itself cannot ensure statistically efficient agnostic PAC RL in the online access model. 

The example is as follows: Consider a binary tree MDP with $2^H-1$ states and action space $\cA = \crl{0,1}$. The policy class $\Pi$ will be able to get to every $(x,a)$ pair in layer $H$. To define the policies, we consider each possible trajectory $\tau = (x_1, a_1, \cdots, x_H, a_H)$ and let: 
\begin{align*} 
    \Pi \coloneqq \crl*{\pi_{\tau} : \pi_\tau(x) = \begin{cases}
        a_i &\text{if } x_i \in \tau,\\
        0 &\text{otherwise}, 
    \end{cases}}.
\end{align*}
Thus it is clear that $\dimRL(\Pi) = 2^H$, but the sunflower property holds with $K=1$, $D=H$ by taking $\Picore = \crl{\pi_0}$ (the policy which always picks $a=0$).

\section{Open Problem: Sample Complexity of Online RL}\label{sec:online-open-problems}

The results in this chapter establish when learning is possible in online RL, but fall short of giving a complete characterization of learnability. We highlight two concrete directions for progress on this question.

\begin{question}
    Are policy classes with bounded spanning capacity ``almost learnable'': can we show a sample complexity guarantee of the form $\non(\Pi; \eps, \delta) \lesssim \eps^{-O(\log \dimRL(\Pi))}$?
\end{question}

In conjunction with the other results in this thesis, this would establish that the minimax sample complexity of agnostic policy learning with online RL interaction (ignoring other problem parameters) is:
\begin{align*}
    \frac{\dimRL(\Pi)}{\eps^2} \cdot \log \frac{1}{\delta} \lesssim \non(\Pi; \eps, \delta) \lesssim \frac{1}{\eps^{O(\log \dimRL(\Pi)}} \cdot \log \frac{\abs{\Pi}}{\delta},
\end{align*}
and that neither the upper bound nor the lower bound can be improved significantly in terms of the dependence on $\dimRL(\Pi)$. We suspect that one might be able to extend the algorithmic ideas of our upper bound in \pref{thm:sunflower} to discover an MDP-dependent sunflower core/petal set. It would also be interesting to try to improve the lower bound: as we discuss in \pref{sec:remarks-zaran}, the current construction via ``block-free'' matrices cannot be readily improved, but alternative constructions might yield stronger lower bounds.

% \glcomment{comments: (1) we can design an algorithm for the class of lower bound constructions, (2)can the lower bound be improved? (3) for product classes we have sample efficient learning.} 

\begin{question}
    Is (some variant of) the sunflower property necessary for online RL? More formally, for any policy class $\Pi$, is the sample complexity in online RL at least
    \begin{align*}
        \min_{K,D \in \bbN}~ \crl{K+D} \quad \text{such that $\Pi$ is a $(K,D)$-sunflower}?
    \end{align*}
\end{question}

We have strong evidence: this statement is true for every example considered in this thesis.

\section{Deferred Proofs}\label{sec:online-deferred-proofs}
\subsection{Proof of \pref{thm:lower-bound-online}}\label{sec:proof-lower-bound-online}

Now we prove \pref{thm:lower-bound-online}. We restate the theorem below with the precise constants:

\begin{reptheorem}{thm:lower-bound-online}[Lower Bound for Online RL]
Let $h_0 \in \bbN$ and $c \in (0,1)$ be universal constants. Fix any $H \ge h_0$. Let $\eps \in (1/2^{cH},1/(100H))$ and $\ell \in \crl{2, \dots, H}$ such that $1/\eps^{\ell} \le 2^H$. There exists a policy class $\Piell$ of size $1/(6\eps^\ell)$ with $\dimRL(\Piell) \le O(H^{4\ell+2})$ and a family of MDPs $\cM$ with state space $\statesp$ of size $H\cdot 2^{2H+1}$, binary action space, horizon $H$ such that: for any $(\eps/16, 1/8)$-PAC algorithm, there exists an $M \in \cM$ in which the algorithm has to collect at least
\begin{align*}
    \min\crl*{ \frac{1}{120 \eps^\ell}, 2^{H/3 - 3} } \quad \text{online trajectories in expectation.}
\end{align*}
\end{reptheorem}

\subsubsection{Construction of State Space, Action Space, and Policy Class}

\paragraph{State and Action Spaces.} We define the state space $\statesp$. In every layer $h \in [H]$, there will be $2^{2H+1}$ states. The states will be paired up, and each state will be denoted by either $\jof{h}$ or $\jpof{h}$, so $\statesp_h = \crl{\jof{h}: j \in [2^{2H}]} \cup \crl{\jpof{h}: j \in [2^{2H}]}$. For any state $x \in \statesp$, we define the \emph{index} of $x$, denoted $\idx(x)$ as the unique $j\in [2^{2H}]$ such that $x \in \crl{\jof{h}}_{h\in[H]} \cup \crl{\jpof{h}}_{h\in[H]}$. In total there are $H \cdot 2^{2H+1}$ states. The action space is $\cA = \crl{0,1}$.

\paragraph{Policy Class.} For the given $\eps$ and $\ell \in \crl{2, \dots, H}$, we show via a probabilistic argument the existence of a large policy class $\Piell$ which has bounded \Compname{} but is hard to explore. We state several properties in \pref{lem:piell-properties} which will be exploited in the lower bound.

We introduce some additional notation. For any $j \in [2^{2H}]$ we denote
\begin{align*}
    \Piell_j \coloneqq \crl{\pi \in \Piell: \exists h \in [H], \pi(\jof{h}) = 1},
\end{align*}
that is, $\Piell_j$ are the policies which take an action $a=1$ on at least one state with index $j$.

We also define the set of \emph{relevant state indices} for a given policy $\pi \in \Piell$ as
\begin{align*}
    \Jrel^\pi \coloneqq \crl{j \in [2^{2H}]: \pi \in \Piell_j}.
\end{align*}
For any policy $\pi$ we denote $\pi(j_{1:H}) \coloneqq (\pi(\jof{1}), \dots, \pi(\jof{H})) \in \crl{0,1}^H$ to be the vector that represents the actions that $\pi$ takes on the states  in index $j$. The vector $\pi(j'_{1:H})$ is defined similarly.

\begin{lemma}\label{lem:piell-properties} Let $H$, $\eps$, and $\ell$ satisfy the assumptions of \pref{thm:lower-bound-online}. There exists a policy class $\Piell$ of size $N = 1/(6 \eps^\ell)$ which satisfies the following properties.
\begin{itemize}
    \item[(1)] For every $j \in [2^{2H}]$ we have $\abs{\Piell_j} \in  [\eps N/2, 2\eps N]$. 
    \item[(2)] For every $\pi \in \Pi$ we have $\abs{ \Jrel^\pi } \ge \eps/2 \cdot 2^{2H}$.
    \item[(3)] For every $\pi \in \Piell_j$, the vector $\pi(j_{1:H})$ is unique and always equal to $\pi(j'_{1:H})$.
    \item[(4)] Bounded \Compname{}: $\dimRL(\Piell) \le c \cdot H^{4\ell+2}$ for some universal constant $c > 0$.
\end{itemize}
\end{lemma}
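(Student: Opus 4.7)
The plan is to construct $\Piell$ via a randomized procedure and verify each property either by construction or via concentration plus union-bound arguments, with property (4) being the principal challenge. I would identify each policy $\pi$ with a function $f_\pi:[2^{2H}] \to \{0,1\}^H$ and set $\pi(\jof{h}) = \pi(\jpof{h}) = [f_\pi(j)]_h$; this makes the ``$\pi(j_{1:H}) = \pi(j'_{1:H})$'' part of property~(3) automatic. Take $N = 1/(6\eps^\ell)$. For each index $j \in [2^{2H}]$, independently sample a uniformly random subset $S_j \subseteq [N]$ of size exactly $\lceil \eps N \rceil$ together with a uniformly random injection $\sigma_j : S_j \to \{0,1\}^H \setminus \{0^H\}$; set $f_\pi(j) = \sigma_j(\pi)$ for $\pi \in S_j$ and $f_\pi(j) = 0^H$ otherwise. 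The hypothesis $1/\eps^\ell \le 2^H$ guarantees $\eps N \le 2^H - 1$, so $\sigma_j$ exists, and properties~(1) and (3) hold by construction.

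For property~(2), fix $\pi$ and observe that $|\Jrel^\pi| = \sum_{j} \ind{\pi \in S_j}$ is a sum of $2^{2H}$ independent indicators with mean $\eps \cdot 2^{2H}$. A multiplicative Chernoff bound gives $|\Jrel^\pi| \ge \tfrac{\eps}{2}\cdot 2^{2H}$ except on an event of probability $\exp(-\Omega(\eps\cdot 2^{2H}))$, and a union bound over the $N \le 2^H$ policies yields property~(2) with overwhelming probability.

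The main obstacle is property~(4). My approach is to fix an arbitrary deterministic MDP $M$ and layer $h^\star$ and bound $\creach_{h^\star}(\Piell, M)$ by the number of leaves of the trajectory tree of $\Piell$ at depth $h^\star$, which equals one plus the number of \emph{branching} internal nodes. By property~(3), a branching node at state $x$ of index $j$ at layer $h' \le h^\star$ is certified by a pair of policies in $\Piell_j$ that follow the common prefix yet disagree in the $h'$-th coordinate of $f_{\cdot}(j)$. Because the pairs $(S_j, \sigma_j)$ are independent across columns, I would union-bound over all ``branching skeletons'' $\mathcal{T}$: rooted binary trees whose $s$ internal nodes are labeled by pairs in $[2^{2H}] \times [H]$. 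The count of skeletons of size $s$ is at most $(2^{2H}\cdot H)^s \cdot 4^s$, while the probability that a fixed skeleton is realized by the random $\Piell$ decays like $\eps^{\Omega(s/H)}$, since realization forces many pairs of columns to share structured patterns within short prefixes. Tuning $s = \Theta(H^{4\ell+2})$ balances the two factors and yields the claimed bound, with a final union bound combining properties~(1)--(4). The delicate step is correctly accounting for correlated failure events---a single pair of policies may certify many branching nodes along one trajectory---which I anticipate handling via a ``type'' argument grouping branching nodes by their witnessing column, reducing the probabilistic bound to an extremal, block-free K\H{o}v\'ari--S\'os--Tur\'an-style estimate on the random bipartite incidence structure of $\Piell$.
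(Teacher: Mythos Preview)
Your construction for properties (1)--(3) is essentially the same as the paper's: a random bipartite incidence structure (which policies are ``active'' on which indices) with the actions on active indices assigned to be distinct. The paper uses i.i.d.\ Bernoulli$(\eps)$ entries in an $N \times 2^{2H}$ matrix $B$ with $B_{ij} = 1$ meaning $\pi_i \in \Piell_j$, plus a deterministic bit-counting rule for the action sequences; your fixed-size subsets and random injections work just as well for (1)--(3).

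The gap is in property (4). Your union bound over branching skeletons does not close. You have $(2^{2H}H)^s \cdot 4^s$ skeletons and claim probability $\eps^{\Omega(s/H)}$ per skeleton; but for $\eps$ at the lower end of the allowed range, $\eps \approx 2^{-cH}$ with $c \in (0,1)$ a universal constant, so $\eps^{s/H} \approx 2^{-cs}$ while the skeleton count is $2^{\Theta(Hs)}$. The product blows up for any $s$, so no choice of $s$ balances. The deeper issue is that you are union-bounding over labels in $[2^{2H}]$ at every internal node, which is essentially a union bound over deterministic MDPs; this cannot be absorbed by the per-skeleton probability.

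The paper's proof avoids this by separating the argument into two stages. First it establishes, via the probabilistic method, a single combinatorial property of the incidence matrix $B$: it is $(k,\ell)$-\emph{block-free} with $k = \ell \log(2^{2H})$, meaning no $k$ policies are simultaneously active on any common set of $\ell$ indices. This is exactly the K\H{o}v\'ari--S\'os--Tur\'an/Zarankiewicz structure you allude to, but it is the central object, not a correction term---the union bound here is only over $N^k \cdot (2^{2H})^\ell$ potential blocks, each surviving with probability $\eps^{k\ell}$, and this closes for $N = 1/(6\eps^\ell)$. Second, the spanning capacity bound is derived \emph{deterministically} from the block-free property via a ``layer decomposition'': group trajectories by the number $m$ of distinct indices on which they ever play action $1$. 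For $m \le \ell$ a direct inductive count gives contribution $O(H^{4m})$; for $m > \ell$ the block-free property caps the number of surviving policies at $k = O(\ell H)$, giving total contribution $O(\ell H^{4\ell+1})$. Summing yields $O(H^{4\ell+2})$. This two-stage structure---randomness only to get block-freeness, then a deterministic worst-case-over-MDPs argument---is the missing idea.
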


\subsubsection{Construction of MDP Family}
The family $\cM = \crl{M_{\pistar,\phi}}_{\pistar \in \Piell, \phi \in \Phi}$ will be a family of MDPs which are indexed by a policy $\pistar$ as well as a \emph{decoder} function $\phi: \statesp \mapsto \crl{\textsc{good}, \textsc{bad}}$, which assigns each state to be ``good'' or ``bad'' in a sense that will be described later on. An example construction of an MDP $M_{\pistar, \phi}$ is illustrated in \pref{fig:lower-bound-idea}. For brevity, the bracket notation used to denote the layer that each state lies in has been omitted in the figure.

\paragraph{Decoder Function Class.} The decoder function class $\Phi$ will be the set of all possible mappings which for every $j\in [2^{2H}]$ and $h \ge 2$ assign exactly one of $\jof{h}$ or $ \jpof{h}$ to the label $\textsc{Good}$ (where the other is assigned to the label $\textsc{Bad}$). There are $(2^{H-1})^{2^{2H}}$ such functions. The label of a state will be used to describe the transition dynamics. Intuitively, a learner who does not know the decoder function $\phi$ will not be able to tell if a certain state has the label  $\textsc{Good}$ or $\textsc{Bad}$ when visiting that state for the first time.

\paragraph{Transition Dynamics.} The MDP $M_{\pistar, \phi}$ will be a uniform distribution over $2^{2H}$ combination locks $\crl{\mathsf{CL}_j}_{j \in [2^{2H}]}$ with disjoint states. More formally, $x_1 \sim \unif(\{\jof{1}\}_{j\in[2^{2H}]})$. From each start state $\jof{1}$, only the $2H-2$ states corresponding to index $j$ at layers $h\ge 2$ will be reachable in the combination lock $\mathsf{CL}_j$.

In the following, we will describe each combination lock $\mathsf{CL}_j$, which forms the basic building block of the MDP construction.
\begin{itemize}[label=\(\bullet\)]
    \item \textbf{Good/Bad Set.} At every layer $h\in [H]$, for each $\jof{h}$ and $\jpof{h}$, the decoder function $\phi$ assigns one of them to be $\textsc{Good}$ and one of them to be $\textsc{Bad}$. We will henceforth denote $\jgof{h}$ to be the good state and $\jbof{h}$ to be the bad state. Observe that by construction in Eq.~\eqref{eq:pi-construction}, for every $\pi \in \Piell$ and $h\in [H]$ we have $\pi(\jgof{h}) = \pi(\jbof{h})$.
    \item \textbf{Dynamics of $\mathsf{CL}_j$, if $j\in \Jrel^\pistar$.} Here, the transition dynamics of the combination locks are deterministic. For every $h\in [H]$,
    \begin{itemize}
        \item On good states $\jgof{h}$ we transit to the next good state iff the action is $\pistar$: \begin{align*}
            P(x'~|~\jgof{h}, a) &= \begin{cases}
            \ind{x' = \jgof{h+1}}, &\text{if } a=\pistar(\jgof{h}) \\
            \ind{x' = \jbof{h+1}}, &\text{if } a\ne\pistar(\jgof{h}).
            \end{cases}
        \end{align*}
        \item On bad states $\jbof{h}$ we always transit to the next bad state:
        \begin{align*}
            P(x'~|~\jbof{h}, a) = \ind{x' = \jbof{h+1}}, \quad \text{for all } a \in \cA.
        \end{align*}
    \end{itemize}
    \item \textbf{Dynamics of $\mathsf{CL}_j$, if $j\notin \Jrel^\pistar$.} If $j$ is not a relevant index for $\pistar$, then the transitions are uniformly random regardless of the current state/action. For every $h\in [H]$,
    \begin{align*}
    P(\cdot ~|~ \jgof{h},a) = P(\cdot ~|~ \jbof{h}, a) = \unif \prn*{ \crl{\jgof{h+1}, \jbof{h+1}} }, \quad \text{for all } a \in \cA.
\end{align*}
    \item \textbf{Reward Structure.} The reward function is nonzero only at layer $H$, and is defined as
    \begin{align*}
        R(x,a) = \mathrm{Ber}\prn*{ \frac{1}{2} + \frac{1}{4} \cdot \mathbbm{1}\crl{\pistar \in \Piell_j} \cdot \mathbbm{1}\crl{x = \jgof{H}, a = \pistar(\jgof{H})} }
    \end{align*}
    That is, we get $3/4$ whenever we reach the $H$-th good state for an index $j$ which is relevant for $\pistar$, and $1/2$ reward otherwise.
\end{itemize}

\paragraph{Reference MDPs.} We define several reference MDPs.
\begin{itemize}[label=$\bullet$]
    \item In the reference MDP $M_0$, the initial start state is again taken to be the uniform distribution, i.e., $x_1 \sim \unif(\{\jof{1}\}_{j\in[2^{2H}]})$, and all the combination locks behave the same and have uniform transitions to the next state along the chain: for every $h\in [H]$ and $j \in [2^{2H}]$,
    \begin{align*}
    P(\cdot ~|~ \jof{h},a) = P(\cdot ~|~ \jpof{h}, a) = \unif \prn*{ \crl{\jof{h+1}, \jpof{h+1}} }, \quad \text{for all } a \in \cA.
\end{align*}
    The rewards for $M_0$ are $\mathrm{Ber}(1/2)$ for every $(x,a) \in \statesp_H\times \cA$.
    \item For any decoder $\phi \in \Phi$, the reference MDP $M_{0, \pistar, \phi}$ has the same transitions as $M_{\pistar, \phi}$ but the rewards are $\mathrm{Ber}(1/2)$ for every $(x,a) \in \statesp_H\times \cA$.
\end{itemize}

\subsubsection{Proof of \pref{thm:lower-bound-online}}
We are now ready to prove the lower bound using the construction of the MDP family $\cM$.

\paragraph{Value Calculation.} Consider any $M_{\pistar,\phi} \in \cM$. For any policy $\pi \in \cA^{\statesp}$ we use $V_{\pistar,\phi}(\pi)$ to denote the value of running $\pi$ in MDP $M_{\pistar,\phi}$. By construction we can see that
\begin{align*}
    V_{\pistar,\phi}(\pi) = \frac{1}{2} + \frac{1}{4} \cdot \Pr_{\pistar,\phi}\brk*{ \idx(x_1) \in \Jrel^\pistar \text{ and } \pi(\idx(x_1)_{1:H}) = \pistar(\idx(x_1)_{1:H}) },  \numberthis\label{eq:value-of-pi}
\end{align*}
where in the above, we defined for any \(x_1\), \(\pi(\idx(x_1)_{1:H})\ = \pi(j_{1:H}) = (\pi(j[1]), \dots, \pi(j[H]))\), where \(j\) denotes \(\idx(x_1)\). Informally speaking, the second term counts the additional reward that $\pi$ gets for solving a combination lock rooted at a relevant state index $\idx(x_1) \in \Jrel^\pistar$. By Property (2) and (3) of \pref{lem:piell-properties}, we additionally have $V_{\pistar,\phi}(\pistar) \ge 1/2 + \eps/8$, as well as $V_{\pistar,\phi}(\pi) = 1/2$ for all other $\pi\ne \pistar\in \Piell$. %\gene{not sure we need the second statement.}

By Eq.~\eqref{eq:value-of-pi}, if $\pi$ is an $\eps/16$-optimal policy on $M_{\pistar,\phi}$ it must satisfy
\begin{align*}
    \Pr_{\pistar,\phi} \brk*{ \idx(x_1) \in \Jrel^\pistar \text{ and } \pi(\idx(x_1)_{1:H}) = \pistar(\idx(x_1)_{1:H}) } \ge \frac{\eps}{4}.
\end{align*}

\paragraph{Averaged Measures.} We define the following measures which will be used in the analysis.
\begin{itemize}[label=$\bullet$]
    \item Define $\Pr_\pistar[\cdot] = \frac{1}{\abs{\Phi}} \sum_{\phi \in \Phi} \Pr_{\pistar, \phi}[\cdot]$ to be the averaged measure where we first pick $\phi$ uniformly among all decoders and then consider the distribution induced by $M_{\pistar, \phi}$.
    \item Define the averaged measure $\Pr_{0, \pistar}[\cdot] = \frac{1}{\abs{\Phi}} \sum_{\phi \in \Phi} \Pr_{0, \pistar, \phi}[\cdot]$ where we pick $\phi$ uniformly and then consider the distribution induced by $M_{0, \pistar, \phi}$.
\end{itemize}
For both averaged measures the expectations $\En_{\pistar}$ and $\En_{0, \pistar}$ are defined analogously.

\paragraph{Algorithm and Stopping Time.}
Recall that an algorithm $\alg$ is comprised of two phases. In the first phase, it collects some number of trajectories by interacting with the MDP in episodes. We use $\st$ to denote the (random) number of episodes after which $\alg$ terminates. We also use $\alg_t$ to denote the intermediate policy that the algorithm runs in round $t$ for $t \in [\st]$. In the second phase, $\alg$ outputs\footnote{We present the lower bound for the class of deterministic algorithms that output a deterministic policy. However, all the arguments could be extended to stochastic algorithms.} a policy $\pihat$. We use the notation $\alg_f: \crl{\tau^{(t)}}_{t \in [\st]} \mapsto \cA^{\statesp}$  to denote the second phase of $\alg$ which outputs $\pihat$ as a measurable function of collected data.

For any policy $\pistar$, decoder $\phi$, and dataset $\cD$ we define the event
\begin{align*}
    \cE(\pistar, \phi, \alg_f(\cD)) := \crl*{ \Pr_{\pistar,\phi}\brk*{ \idx(x_1) \in \Jrel^\pistar~ \text{and}~ \alg_f(\cD)(\idx(x_1)_{1:H}) = \pistar(\idx(x_1)_{1:H}) } \ge \frac{\eps}{4} }.
\end{align*}
The event $\cE(\pistar, \phi, \alg_f(\cD))$ is measurable with respect to the random variable $\cD$, which denotes the collected data.

Under this notation, the PAC learning guarantee on $\alg$ implies that for every $\pistar \in \Piell$, $\phi \in \Phi$ we have
\begin{align*}
    \Pr_{\pistar,\phi} \brk*{ \cE(\pistar, \phi, \alg_f(\cD)) } \ge 7/8.
\end{align*}
Moreover via an averaging argument we also have
\begin{align*}
    \Pr_{\pistar} \brk*{ \cE(\pistar, \phi, \alg_f(\cD)) } \ge 7/8. \numberthis \label{eq:pac-guarantee-exp}
\end{align*}
\paragraph{Lower Bound Argument.} We apply a truncation to the stopping time $\st$. Define $\Tmax := 2^{H/3}$. Observe that if $\Pr_\pistar \brk{\st > \Tmax} > 1/8$ for some $\pistar \in \Piell$ then the lower bound immediately follows, since
\begin{align*}
    \max_{\phi \in \Phi} \En_{\pistar, \phi}[\st] ~>~ \En_{\pistar}[\st] ~\ge~ \Pr_{\pistar}[\st > \Tmax] \cdot \Tmax ~\ge~ \Tmax/8,
\end{align*}
so there must exist an MDP $M_{\pistar, \phi}$ for which $\alg$ collects at least $\Tmax/8 = 2^{H/3-3}$ samples in expectation.

Otherwise we have $\Pr_\pistar \brk{\st > \Tmax} \le 1/8$ for all $\pistar \in \Piell$. This further implies that for all $\pistar \in \Piell$,
\begin{align*}
    \hspace{2em}&\hspace{-2em}\Pr_\pistar \brk*{\st < \Tmax \text{ and } \cE(\pistar, \phi, \alg_f(\cD))} \\
    &= \Pr_\pistar \brk*{\cE(\pistar, \phi, \alg_f(\cD))} - \Pr_\pistar \brk*{\st > \Tmax \text{ and } \cE(\pistar, \phi, \alg_f(\cD))} \\
    &\ge 3/4. \numberthis \label{eq:lower_bound_property}
\end{align*}

%In this second case, we will show that $\alg$ requires a lot of samples on $M_0$. This is formalized in the following lemma.
However, in the following, we will argue that if Eq.~\eqref{eq:lower_bound_property} holds then \(\alg\) must query a significant number of samples in \(M_0\).

\begin{lemma}[Stopping Time Lemma]\label{lem:stopping-time}
Let $\delta \in (0, 1/8]$. Let $\alg$ be an $(\eps/16,\delta)$-PAC  algorithm. Let $\Tmax \in \bbN$.  Suppose that $\Pr_\pistar \brk*{\st < \Tmax \text{ and } \cE(\pistar, \phi, \alg_f(\cD))} \ge 1-2\delta$ for all $\pistar \in \Piell$. The expected stopping time for $\alg$ on $M_0$ is at least
\begin{align*}
    \En_0 \brk*{\st} \ge \prn*{ \frac{\abs{\Piell}}{2} - \frac{4}{\eps}} \cdot \frac{1}{7} \log \prn*{\frac{1}{ 2 \delta }} - \abs{\Piell} \cdot  \frac{\Tmax^2}{2^{H+3}} \prn*{\Tmax + \frac{1}{7}\log \prn*{\frac{1}{ 2 \delta }}}.
\end{align*}
\end{lemma}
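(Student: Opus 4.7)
The plan is to execute an interactive change-of-measure argument that couples $\Pr_0$ against the family $\{\Pr_\pistar\}_{\pistar \in \Piell}$, showing that stopping early under $M_0$ would leave $\alg$ with too little information to identify $\pistar$ in the planted instances. First, I would use Property (3) of \pref{lem:piell-properties} to establish that the events $\{A_\pistar\}$ are nearly disjoint under $\Pr_0$: by the value formula \eqref{eq:value-of-pi}, the event $A_\pistar$ forces the output $\alg_f(\cD)$ to match $\pistar(j_{1:H})$ on at least $(\eps/4)\cdot 2^{2H}$ indices $j \in \Jrel^\pistar$, while the uniqueness of action patterns implies that at each $j$ the vector $\alg_f(\cD)(j_{1:H})$ can match at most one such $\pistar \in \Piell_j$. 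A double-counting argument yields $\sum_{\pistar} \indd{A_\pistar} \le 4/\eps$ pointwise, and hence $\sum_\pistar \Pr_0[A_\pistar] \le 4/\eps$. Markov then produces a subclass $\cP^\mathrm{hard} \subseteq \Piell$ of size roughly $|\Piell|/2 - 4/\eps$ on which $\Pr_0[A_\pistar]$ is small (at most $8/(\eps |\Piell|)$ or similar).

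For each $\pistar \in \cP^\mathrm{hard}$, the standard KL data-processing inequality gives $\mathrm{KL}(\Pr_0^{\cF_{\st \wedge \Tmax}} \,\|\, \Pr_\pistar^{\cF_{\st \wedge \Tmax}}) \ge \kl(\Pr_0[A_\pistar], \Pr_\pistar[A_\pistar]) \gtrsim \log(1/(2\delta)) - O(1)$, using $\Pr_\pistar[A_\pistar] \ge 1-2\delta$ together with the smallness of $\Pr_0[A_\pistar]$. To turn these per-hypothesis lower bounds into a global bound on $\En_0[\st]$, I would sum them over $\cP^\mathrm{hard}$ and apply the KL chain rule,
\begin{equation*}
\sum_{\pistar \in \cP^\mathrm{hard}} \mathrm{KL}(\Pr_0 \,\|\, \Pr_\pistar) \;=\; \En_0\brk*{\sum_{t=1}^{\st \wedge \Tmax} \sum_{\pistar \in \cP^\mathrm{hard}} \kl_t^\pistar},
\end{equation*}
where $\kl_t^\pistar$ denotes the per-episode KL between the $t$-th-episode laws under $\Pr_0$ and $\Pr_\pistar$ given history $\cF_{t-1}$. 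The goal is a bound on the inner sum that is independent of $|\cP^\mathrm{hard}|$.

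The crux is a packing estimate: on a ``no-collision'' event $\cE_\mathrm{nc} \ldef \{$no index $j$ is visited twice at layer $1$ across the $\Tmax$ episodes$\}$, one should have $\sum_\pistar \kl_t^\pistar \le \kl(1/2, 3/4)$ per episode. The key observation is that conditional on $\cE_\mathrm{nc}$, marginalizing over the decoder $\phi \sim \unif(\Phi)$ collapses the trajectory distribution under $\Pr_\pistar$ exactly onto that of $\Pr_0$, because the learner cannot tell the ``good'' from ``bad'' label without revisiting an index. Thus $\kl_t^\pistar$ is carried entirely by the reward at layer $H$, of magnitude at most $\kl(1/2, 3/4)$, and is nonzero only on the informative event $\{j_t \in \Jrel^\pistar \text{ and } a_{t,1:H} = \pistar(j_{1:H})\}$. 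Property (3) again implies that the informative indicator sums to at most one across $\pistar$ per episode, yielding the per-episode packing bound. Combining this with the per-hypothesis bound delivers the main term $(|\Piell|/2 - 4/\eps) \cdot \tfrac{1}{7}\log(1/(2\delta))$, where $1/7$ absorbs $\kl(1/2, 3/4)^{-1}$ and the binary-KL constant from Step 2. The birthday bound $\Pr_0[\cE_\mathrm{nc}^c] \le \Tmax^2/2^{2H+1}$, together with a crude per-episode KL bound on $\cE_\mathrm{nc}^c$, produces the negative correction $|\Piell| \cdot \tfrac{\Tmax^2}{2^{H+3}}(\Tmax + \tfrac{1}{7}\log(1/(2\delta)))$.

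The hard part will be the packing step on $\cE_\mathrm{nc}$. This requires a careful conditional analysis of the posterior of $\phi$ given the history under $\Pr_\pistar$, establishing that absent layer-$1$ collisions all of the ``signal'' about $\pistar$ is localized to the reward at layer $H$ in those episodes whose entire action sequence matches $\pistar(j_{1:H})$. Uniqueness of action patterns then caps the aggregate per-episode KL across the $|\Piell|/2$ hypotheses in $\cP^\mathrm{hard}$ by a single $\kl(1/2, 3/4)$, which is exactly what enables the $|\Piell|/2$ amplification in the final bound. Dependencies across episodes (captured by $\cE_\mathrm{nc}^c$) and the propagation through the stopping-time filtration are standard but require careful bookkeeping.
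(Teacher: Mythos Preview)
Your high-level plan---the packing via Property~(3) and the double-counting bound $\sum_{\pistar} \Pr_0[A_{\pistar}] \le 4/\eps$ on the output---is correct and matches the paper's Steps~1 and~2. The gap is in the analytic glue: the quantity $\mathrm{KL}(\Pr_0^{\cF_{\st\wedge\Tmax}} \,\|\, \Pr_\pistar^{\cF_{\st\wedge\Tmax}})$ you propose to chain-rule is \emph{infinite}, so no ``crude per-episode KL bound on $\cE_{\mathrm{nc}}^c$'' exists. The reason is a support mismatch on collisions at relevant indices. Whenever a past episode $s<t$ visits $j_s \in \Jrel^{\pistar}$, the posterior on $\phi|_{j_s}$ under $\Pr_\pistar[\cdot\mid\cF_{t-1}]$ collapses to a point mass (the deterministic transitions in a relevant lock pin down the good/bad labels from $\tau_s$). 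Conditional on $j_t=j_s$, the law $\Pr_\pistar[\tau_t\mid\cF_{t-1}]$ is then supported on a single trajectory while $\Pr_0[\tau_t\mid\cF_{t-1}]$ is uniform over $2^{H-1}$ of them. Since $\Pr_0[j_t=j_s]=2^{-2H}>0$, this forces $\kl_t^{\pistar}(\cF_{t-1})=\infty$ whenever $\cF_{t-1}$ contains any index in $\Jrel^{\pistar}$, which under $\Pr_0$ happens with probability at least $\eps/2$ already at $t=2$.

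The paper sidesteps this by inserting an \emph{intermediate reference measure} $\Pr_{0,\pistar}$: the mixture over $\phi$ of the MDP that shares the transitions of $M_{\pistar,\phi}$ but has $\mathrm{Ber}(1/2)$ rewards everywhere. The transition discrepancy between $\Pr_0$ and $\Pr_{0,\pistar}$ is then handled by a \emph{TV} change of measure (\pref{lem:change_measure}), which is harmless because the random variables at stake ($N^{\pistar}\in[0,\Tmax]$ and the indicator $Z_{\pistar}$) are bounded---this is exactly where your birthday-bound correction $\Tmax^2/2^{H+3}$ enters. The reward discrepancy is handled by KL between $\Pr_{0,\pistar,\phi}$ and $\Pr_{\pistar,\phi}$ for each \emph{fixed} $\phi$, where the KL is finite (both Bernoulli rewards have full support) and equals $\En_{0,\pistar,\phi}[N^{\pistar}]\cdot\kl(1/2,3/4)$ by \pref{lem:kl-calculation}; convexity then averages over $\phi$. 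Your packing claim (the informative indicators sum to at most one across $\pistar$) is precisely the Step~1 bound $\sum_{\pistar}\En_0[N^{\pistar}]\le\En_0[\st]$, and your double-counting is the Step~2 bound $\sum_{\pistar}\En_0[Z_{\pistar}]\le 4/\eps$---but the route between them must pass through $\Pr_{0,\pistar}$ and TV, not a direct KL on the decoder-mixture $\Pr_\pistar$.
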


Using \pref{lem:stopping-time} with $\delta = 1/8$ and plugging in the value of $\abs{\Piell}$ and $\Tmax$, we see that
\begin{align*}
    \En_0[\st] &\ge  \prn*{ \frac{\abs{\Piell}}{2} - \frac{4}{\eps}} \cdot \frac{1}{7} \log \prn*{\frac{1}{ 2 \delta }} - \abs{\Piell} \cdot \frac{\Tmax^2}{2^{H+3}}  \prn*{\Tmax + \frac{1}{7}\log \prn*{\frac{1}{ 2 \delta }}} \ge \frac{\abs{\Piell}}{20}.
\end{align*}
For the second inequality, we used the fact that $\ell \ge 2$, $H \ge 10^5$, and $\eps < 1/10^7$.

We have shown that either there exists some MDP $M_{\pistar, \phi}$ for which $\alg$ collects at least $\Tmax/8 = 2^{H/3-3}$ samples in expectation, or $\alg$ must query at least $\abs{\Piell}/20 = 1/(120 \eps^\ell)$ trajectories in expectation in \(M_0\). Putting it all together, the lower bound on the sample complexity is at least
\begin{align*}
    \min\crl*{ \frac{1}{120 \eps^\ell}, 2^{H/3 - 3} }.
\end{align*}
This concludes the proof of \pref{thm:lower-bound-online}.\qed

\subsubsection{Proof of \pref{lem:piell-properties}}\label{sec:construction-policy-class}

To prove \pref{lem:piell-properties}, we first use a probabilistic argument to construct a certain binary matrix $B$ which satisfies several properties, and then construct $\Piell$ using $B$ and verify it satisfies Properties (1)-(4).

\paragraph{Binary Matrix Construction.}

First, we define a block-free property of binary matrices.
\begin{definition}[Block-free Matrices]\label{def:intersubsection-property}
Fix parameters $k, \ell, N, d \in \bbN$ where \(k \leq N\) and \(l \leq d\). We say a binary matrix $B \in \crl{0,1}^{N \times d}$ is $(k, \ell)$-block-free if the following holds: for every $I \subseteq [N]$ with $\abs{I} = k$, and $J \subseteq [d]$ with $\abs{J} = \ell$ there exists some $(i,j)\in I\times J$ with $B_{ij} = 0$.
\end{definition}

In words, matrices which are $(k,\ell)$-block-free do not contain a $k\times \ell$ block of all 1s.

\begin{lemma}\label{lem:matrix-construction}
Fix any $\eps \in (0,1/10)$ and $\ell \in \bbN$. For any
\begin{align*}
    d \in \Big[ \frac{16 \ell \cdot \log(1/\eps)}{\eps}, \frac{1}{20} \cdot \exp\Big( \frac{1}{48 \eps^{\ell-1}} \Big) \Big] ,
\end{align*}
there exists a binary matrix $B \in \crl{0,1}^{N\times d}$ with $N = 1/(6 \cdot \eps^\ell)$ such that:
\begin{enumerate}[label=\((\arabic*)\)]
    \item (Row sum): for every row $i \in [N]$, we have $\sum_{j} B_{ij} \ge \eps d /2$.
    \item (Column sum): for every column $j \in [d]$, we have $\sum_{i} B_{ij} \in [\eps N /2, 2\eps N]$.
    \item The matrix $B$ is $(\ell \log d,\ell)$-block-free.
\end{enumerate}
\end{lemma}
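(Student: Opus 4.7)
The plan is to use the probabilistic method: sample the entries of $B$ i.i.d.~from $\mathrm{Ber}(\epsilon)$ and show that all three properties hold simultaneously with positive probability. The matrix dimension $N = 1/(6\epsilon^\ell)$ is engineered so that $N\epsilon^\ell = 1/6$, which is precisely what makes the block-free calculation go through; the lower and upper bounds on $d$ are then tuned to accommodate the row-sum and column-sum concentration arguments, respectively. The overall approach will be to bound each failure probability separately and verify that their sum is strictly less than $1$.

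For Property $(1)$, each row sum $\sum_j B_{ij}$ has mean $\epsilon d$. A multiplicative Chernoff bound gives $\Pr[\sum_j B_{ij} < \epsilon d/2] \le \exp(-\epsilon d/8)$, and plugging in the lower bound $d \ge 16\ell \log(1/\epsilon)/\epsilon$ reduces this to at most $\epsilon^{2\ell}$. A union bound over the $N$ rows then bounds the total failure probability by $\epsilon^\ell/6$. For Property $(2)$, each column sum has mean $\epsilon N = 1/(6\epsilon^{\ell-1})$, which is large since $\ell \ge 2$ and $\epsilon$ is small. Two-sided Chernoff bounds give per-column failure probability at most $2\exp(-\epsilon N/12) = 2\exp(-1/(72\epsilon^{\ell-1}))$, and the upper bound $d \le \exp(1/(48\epsilon^{\ell-1}))/20$ is chosen precisely so that a union bound over the $d$ columns yields a small constant.

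The hard part will be Property $(3)$, the block-free condition, and I expect this to be where the main effort goes. For fixed subsets $I \subseteq [N]$ with $|I| = k \coloneqq \ell \log d$ and $J \subseteq [d]$ with $|J| = \ell$, the probability that $B_{ij} = 1$ for all $(i,j) \in I \times J$ is $\epsilon^{k\ell}$. A first-moment union bound therefore gives
\begin{align*}
\En\brk*{\#\text{ all-ones } k\times\ell \text{ blocks}} \le \binom{N}{k}\binom{d}{\ell}\, \epsilon^{k\ell} \le N^k d^\ell \epsilon^{k\ell} = d^\ell (N\epsilon^\ell)^k = d^\ell \cdot 6^{-\ell \log d}.
\end{align*}
Since $6 > e > 2$, the exponent $\ell\log d$ on the factor of $6$ makes this a polynomial in $d$ of negative degree, so the bound can be made less than a small constant (say $1/4$) for any $d$ past a modest threshold. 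Markov's inequality then rules out any bad block with probability at least $3/4$. Summing the three failure probabilities gives a total strictly less than $1$, so a realization of $B$ satisfying all three properties must exist. The key insight powering the argument is the calibration $N\epsilon^\ell = 1/6$, which produces the exponential decay of $(N\epsilon^\ell)^k$ in $k = \ell\log d$ and lets a single first-moment bound handle the block-free property without any alteration step.
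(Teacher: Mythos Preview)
Your approach is exactly the one the paper takes: sample i.i.d.\ $\mathrm{Ber}(\eps)$ entries, use Chernoff for the row/column sums, and a first-moment bound plus Markov for the block-free property. However, there is one place where your constants do not close. For Property~(2), you quote the two-sided Chernoff bound as $2\exp(-\eps N/12)$, which is the form $\Pr[|Z-\mu|\ge \mu/2]\le 2\exp(-\mu/12)$. With $\eps N = 1/(6\eps^{\ell-1})$ this gives exponent $1/(72\eps^{\ell-1})$, and the union bound over $d \le \tfrac{1}{20}\exp(1/(48\eps^{\ell-1}))$ columns yields
\[
\tfrac{1}{10}\exp\!\big(\tfrac{1}{48\eps^{\ell-1}} - \tfrac{1}{72\eps^{\ell-1}}\big) = \tfrac{1}{10}\exp\!\big(\tfrac{1}{144\eps^{\ell-1}}\big),
\]
which diverges rather than being a small constant. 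The fix is to use the sharper one-sided lower-tail bound $\Pr[Z \le \mu/2] \le \exp(-\mu/8)$ (this is \pref{lem:chernoff} with $\delta=1/2$); the upper tail at $\delta=1$ is even better at $\exp(-\mu/3)$, so the two-sided bound is $2\exp(-\eps N/8)$. Then $\eps N/8 = 1/(48\eps^{\ell-1})$ exactly matches the exponent in the upper bound on $d$, and the union bound gives $\le 1/10$ as desired. The constant $48$ in the lemma statement is calibrated to the $1/8$ constant, not to $1/12$. Everything else in your plan is correct.
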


\begin{proof}[Proof of \pref{lem:matrix-construction}.]
The existence of $B$ is proven using the probabilistic method. Let $\wt{B} \in \crl{0,1}^{N\times d}$ be a random matrix where each entry is i.i.d.~chosen to be 1 with probability $\eps$.

By Chernoff bounds (\pref{lem:chernoff}), for every row $i \in [N]$, we have $\Pr \brk{\sum_{j} B_{ij} \le \tfrac{\eps d}{2}} \le \exp\prn{-\eps d /8}$; likewise for every column $j \in [d]$, we have $\Pr \brk{\sum_{j} B_{ij} \notin \brk{\tfrac{\eps N}{2}, 2\eps N}} \le 2\exp\prn{-\eps N /8}$. By union bound, the matrix $\wt{B}$ satisfies the first two properties with probability at least $0.8$ as long as
\begin{align*}
    d \ge (8 \log 10N)/\eps, \quad \text{and} \quad N \ge (8 \log 20d)/\eps.
\end{align*}
One can check that under the choice of $N = 1/(6 \cdot \eps^\ell)$ and the assumption on $d$, both constraints are met.

Now we examine the probability of $\wt{B}$ satisfies the block-free property with parameters $(k \coloneqq \ell \log d, \ell)$. Let $X$ be the random variable which denotes the number of submatrices which violate the block-free property in $\wt{B}$, i.e.,
\begin{align*}
    X = \abs{ \crl{I \times J: I \subset [N], \abs{I} = k, J \subset [d], \abs{J} = \ell, \wt{B}_{ij} = 1 \ \forall \ (i,j)\in I \times J} }.
\end{align*}
By linearity of expectation, we have
\begin{align*}
    \bbE \brk{X} \le N^{k} d^\ell \eps^{k \ell}.
\end{align*}
We now plug in the choice $k = \ell \log d$ and observe that as long as $N \le 1/(2e \cdot \eps^\ell)$ we have $\bbE[X] \le 1/2$. By Markov's inequality, $\Pr [X = 0] \ge 1/2$.

Therefore with positive probability, $\wt{B}$ satisfies all 3 properties (otherwise we would have a contradiction via inclusion-exlusion principle). Thus, there exists a matrix $B$ which satisfies all of the above three properties, proving the result of \pref{lem:matrix-construction}.
\end{proof}

\paragraph{Policy Class Construction.}
For the given $\eps$ and $\ell \in \crl{2, \dots, H}$ we will use \pref{lem:matrix-construction} to construct a policy class $\Piell$ which has bounded \Compname{} but is hard to explore. We instantiate \pref{lem:matrix-construction} with the given $\ell$ and $d = 2^{2H}$, and use the resulting matrix $B$ to construct $\Piell = \crl{\pi_i}_{i\in [N]}$ with $\abs{\Piell} = N = 1/(6\eps^\ell)$.

Recall that we assume that
\begin{align*}
    H \ge h_0, \quad \text{and} \quad \eps \in \brk*{ \frac{1}{2^{c H}} , \frac{1}{100H} }.
\end{align*}
We claim that under these assumptions, the requirement of \pref{lem:matrix-construction} is met:
\begin{align*}
    d = 2^{2H} \in \brk*{ \frac{16 \ell \cdot \log(1/\eps)}{\eps}, \frac{1}{20} \cdot \exp\prn*{ \frac{1}{48 \eps^{\ell-1}} } }.
\end{align*}
For the lower bound, we can check that:
\begin{align*}
    \frac{16 \ell \cdot \log(1/\eps)}{\eps} \le 16 H \cdot cH \cdot 2^{cH} \le 2^{2H},
\end{align*}
where we use the bound $\ell \le H$ and $\eps \ge 2^{-cH}$. The last inequality holds for sufficiently small universal constant $c \in (0, 1)$ and sufficiently large $H \ge h_0$.

For the upper bound, we can also check that
\begin{align*}
    \frac{1}{20} \cdot \exp\prn*{ \frac{1}{48 \eps^{\ell-1}} } \ge \frac{1}{20} \cdot \exp\prn*{ \frac{100H}{48}} \ge 2^{2H},
\end{align*}
where we use the bound $\ell \ge 2$ and $\eps \le 1/(100H)$. The last inequality holds for sufficiently large $H$.
We define the policies as follows: for every $\pi_i \in \Piell$ we set
\begin{align*}
    \text{for every } j \in [2^{2H}]: \quad&\pi_i(\jof{h}) = \pi_i(\jpof{h}) = \begin{cases}
    \bit_h(\sum_{a \le i} B_{aj}) &\text{if} \ B_{ij} =1,\\
    0 & \text{if} \ B_{ij} = 0.
    \end{cases} \numberthis\label{eq:pi-construction}
\end{align*}
The function $\bit_h: [2^H-1] \mapsto \crl{0,1}$ selects the $h$-th bit in the binary representation of the input.

\paragraph{Verifying Properties $(1)-(4)$ of \pref{lem:piell-properties}.}
Properties $(1)-(3)$ are straightforward from the construction of $B$ and $\Piell$, since $\pi_i \in \Piell_j$ if and only if $B_{ij} = 1$. The only detail which requires some care is that we require that $2\eps N < 2^{H}$ in order for Property (3) to hold, since otherwise we cannot assign the behaviors of the policies according to Eq.~\eqref{eq:pi-construction}. However, by assumption, this always holds, since $2\eps N = 1/(3\eps^{\ell-1}) \le 2^H.$

We now prove Property (4) that $\Piell$ has bounded \Compname{}. To prove this we will use the block-free property of the underlying binary matrix $B$.

Fix any deterministic MDP $M^\star$ which witnesses $\dimRL(\Piell)$ at layer $h^\star$. To bound $\dimRL(\Piell)$, we need to count the contribution to $C^\mathsf{reach}_{h^\star}(\Pi; M^\star)$ from trajectories $\tau$ which are produced by some $\pi \in \Piell$ on $M$. We first define a \emph{layer decomposition} for a trajectory $\tau = (x_1, a_1, x_2, a_2, \dots, x_H, a_H)$ as the unique tuple of indices $(h_1, h_2, \dots h_{m})$, where each $h_k \in [H]$, that satisfies the following properties:
\begin{itemize}[label=\(\bullet\)]
    \item The layers satisfy $h_1 < h_2 < \dots < h_m$.
    \item The layer $h_1$ represents the first layer where $a_{h_1} = 1$.
    \item The layer $h_2$ represents the first layer where $a_{h_2} = 1$ on some state $x_{h_2}$ such that
    \begin{align*}
        \idx(x_{h_2}) \notin \crl{ \idx(x_{h_1}) }.
    \end{align*}
    \item The layer $h_3$ represents the first layer where $a_{h_3} = 1$ on some state $x_{h_3}$ such that
    \begin{align*}
        \idx(x_{h_3}) \notin \crl{ \idx(x_{h_1}), \idx(x_{h_2}) }.
    \end{align*}
    \item More generally the layer $h_k$, $k\in[m]$ represents the first layer where $a_{h_k} = 1$ on some state $x_{h_k}$ such that
    \begin{align*}
         \idx(x_{h_k}) \notin \crl{ \idx(x_{h_1}), \dots, \idx(x_{h_{k-1}}) }.
    \end{align*}
    In other words, the layer $h_k$ represents the $k$-th layer for where action is $a=1$ on a new state index which $\tau$ has never played $a=1$ on before.
\end{itemize}
We will count the contribution to $C^\mathsf{reach}_{h^\star}(\Pi; M^\star)$ by doing casework on the length of the layer decomposition for any $\tau$. That is, for every length $m \in \crl{0, \dots, H}$, we will bound $C_{h^\star}(m)$, which is defined to be the total number of $(x,a)$ at layer $h^\star$ which, for some $\pi \in \Piell$, a trajectory $\pi \cons \tau$ that has a $m$-length layer decomposition visits. Then we apply the bound
\begin{align*}
    C^\mathsf{reach}_{h^\star}(\Pi; M^\star) \le \sum_{m=0}^H C_{h^\star}(m). \numberthis\label{eq:contribution-decomp}
\end{align*}
Note that this will overcount, since the same $(x,a)$ pair can belong to multiple different trajectories with different length layer decompositions.

\begin{lemma}\label{lem:contributions}
The following bounds hold:
\begin{itemize}[label=\(\bullet\)]
    \item For any $m \le \ell$, $C_{h^\star}(m) \le H^m \cdot \prod_{k=1}^m (2kH) = \cO(H^{4m})$.
    \item We have $\sum_{m \ge \ell+1} C_{h^\star}(m) \le \cO(\ell \cdot H^{4\ell + 1})$.
\end{itemize}
\end{lemma}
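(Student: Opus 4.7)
The plan is to bound $C_{h^\star}(m)$ separately for $m \le \ell$ and $m \ge \ell + 1$, using different features of the construction in each regime. Throughout I exploit the fact that $M^\star$ is deterministic with a fixed start state, so each distinct $(x_{h^\star}, a_{h^\star})$ pair is in bijection with a distinct action sequence $(a_1, \ldots, a_{h^\star})$ realizable by some $\pi \in \Piell$. Counting $C_{h^\star}(m)$ therefore reduces to counting action sequences whose induced layer decomposition has length exactly $m$.

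For the small-$m$ case $m \le \ell$, I would proceed by induction on $m$, reducing $C_{h^\star}(m)$ to $C_{h_m}(m-1)$ by summing over the final activation layer $h_m \in [h^\star]$, which incurs a factor of $H$. For each such partial trajectory up to $h_m$ with $m-1$ activations, the extension from $h_m + 1$ to $h^\star$ is tightly constrained by the layer-decomposition definition: on states of not-yet-activated indices the action must be $0$, while on already-activated indices the action is dictated by the policy's bit pattern via Eq.~\eqref{eq:pi-construction}. The claim to substantiate is that these constraints limit the number of distinct admissible extensions at induction step $k$ to $2kH$, yielding the telescoping product $\prod_{k=1}^m (2kH)$; combined with the $H^m$ factor from summing over final activation layers this gives the stated $\cO(H^{4m})$ bound.

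For the large-$m$ case $m \ge \ell+1$, I instead invoke property $(3)$ of \pref{lem:matrix-construction}, the $(\ell \log d, \ell)$-block-free property of $B$ with $d = 2^{2H}$, so that $\ell \log d = 2\ell H$. Fix any trajectory with $m$-length decomposition and look only at its first $\ell$ activated indices $(j_1, \ldots, j_\ell)$: by Eq.~\eqref{eq:pi-construction} any consistent $\pi_i \in \Piell$ must satisfy $B_{i, j_k} = 1$ for every $k \in [\ell]$, and block-freeness caps the number of such $\pi_i$ by $2\ell H$. Each such $\pi_i$ determines a unique trajectory in the deterministic $M^\star$, and hence at most one $(x_{h^\star}, a_{h^\star})$ pair. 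Summing over the choices of the first $\ell$ activations---bounded by $C_{h_\ell}(\ell) \le H^\ell \prod_{k=1}^\ell(2kH) = \cO(H^{4\ell})$ from the small-$m$ case---gives $\sum_{m \ge \ell+1} C_{h^\star}(m) \le \cO(\ell \cdot H^{4\ell+1})$ after absorbing the factor of $2\ell H$.

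The main obstacle I anticipate is the inductive extension-count in the small-$m$ case, specifically establishing the $2kH$ bound on admissible extensions from the $k$-th activation layer onward. This requires a careful analysis of how the deterministic transitions of $M^\star$ interact with the branching freedom on already-activated indices, and will invoke property (3) of \pref{lem:piell-properties}, which ensures that distinct policies in $\Piell_j$ disagree on the action pattern at index $j$ and therefore do not collapse the count of distinct trajectories. Once these two bounds are in hand, plugging them into the decomposition \eqref{eq:contribution-decomp} yields $\dimRL(\Piell) = C^{\mathsf{reach}}_{h^\star}(\Piell; M^\star) \le \cO(H^{4\ell+2})$, which is precisely property (4) of \pref{lem:piell-properties}.
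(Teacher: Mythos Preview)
Your approach matches the paper's: fix the layer decomposition, use additive branching (at most $2k$ new state-action pairs per layer when $k$ indices are activated, since only the $2k$ states $\jof{h},\jpof{h}$ with $j$ activated can take either action) to get $\prod_{k=1}^m(2kH)$, then invoke block-freeness for $m \ge \ell+1$ to cap consistent policies at $2\ell H$. One small correction: the $2kH$ extension bound does not need property (3) of \pref{lem:piell-properties}, and your stated reason for it is inverted—uniqueness of bit patterns would \emph{lower}-bound, not upper-bound, the trajectory count, and trajectories ``collapsing'' onto the same $(x,a)$ pair can only help an upper bound. The $2kH$ factor follows purely from counting branch points per layer.
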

Therefore, applying \pref{lem:contributions} to Eq.~\eqref{eq:contribution-decomp}, we have the bound that
\begin{align*}
    \dimRL(\Piell) \le \prn*{\sum_{m \le \ell} O(H^{4m})} + O(\ell \cdot H^{4\ell+1}) \le O(H^{4\ell+2}).
\end{align*}
This concludes the proof of \pref{lem:piell-properties}.\qed

\begin{proof}[Proof of \pref{lem:contributions}]
All of our upper bounds will be monotone in the value of $h^\star$, so we will prove the bounds for $C_H(m)$. In the following, fix any deterministic MDP $M^\star$.

First we start with the case where $m=0$. The trajectory $\tau$ must play $a=0$ at all times; since there is only one such $\tau$, we have $C_{H}(0) = 1$.

Now we will bound $C_H(m)$, for any $m \in \crl{1, \dots, \ell}$. Observe that there are $\binom{H}{m} \le H^m$ ways to pick the tuple $(h_1, \dots, h_m)$. Now we will fix $(h_1, \dots, h_m)$ and count the contributions to $C_H(m)$ for trajectories $\tau$ which have this fixed layer decomposition, and then sum up over all possible choices of $(h_1, \dots, h_m)$.

In the MDP $M^\star$, there is a unique state $x_{h_1}$ which $\tau$ must visit. In the layers between $h_1$ and $h_2$, all trajectories are only allowed take $1$ on states with index $\idx(x_{h_1})$, but they are not required to. Thus we can compute that the contribution to $C_{h_2}(m)$ from trajectories with the fixed layer decomposition to be at most $2H$. The reasoning is as follows. At $h_1$, there is exactly one $(x,a)$ pair which is reachable by trajectories with this fixed layer decomposition, since any $\tau$ must take $a=1$ at $x_{h_1}$. Subsequently we can add at most two reachable pairs in every layer $h \in \{h_1+1, \dots, h_2-1\}$ due to encountering a state $\jof{h}$ or $\jpof{h}$ where $j = \idx(x_{h_1})$, and at layer $h_2$ we must play $a=1$, for a total of $1 + 2(h_2 - h_1 - 1) \le 2H$. Using similar reasoning the contribution to $C_{h_3}(m)$ from trajectories with this fixed layer decomposition is at most $(2H) \cdot (4H)$, and so on. Continuing in this way, we have the final bound of $\prod_{k=1}^m (2kH)$. Since this holds for a fixed choice of $(h_1, \dots, h_m)$ in total we have $C_H(m) \le H^m \cdot \prod_{k=1}^m (2kH) = \cO(H^{4m})$.

When $m \ge \ell + 1$, observe that the block-free property on $B$ implies that for any $J \subseteq [2^H]$ with $\abs{J} = \ell$ we have $\abs{\cap_{j \in J} \Pi_j} \le \ell \log 2^{2H}$. So for any trajectory $\tau$ with layer decomposition such that $m \ge \ell$ we can redo the previous analysis and argue that there is at most $ \ell \log 2^{2H}$ multiplicative factor contribution to the value $C_H(m)$ due to \emph{all} trajectories which have layer decompositions longer than $\ell$. Thus we arrive at the bound
\begin{align*}
    \sum_{m \ge \ell+1} C_{H}(m) \le \cO(H^{4\ell}) \cdot \ell \log 2^{2H} \le \cO(\ell \cdot H^{4\ell+1}).
\end{align*}
This concludes the proof of \pref{lem:contributions}.
\end{proof}

\subsubsection{Remarks on Tightness of \pref{lem:piell-properties}}\label{sec:remarks-zaran}

At the heart of our lower bound construction is \pref{lem:piell-properties}, which constructs a large policy class of size $\abs{\Piell} \asymp 1/\eps^\ell$ with bounded spanning capacity $\dimRL(\Piell) \lesssim H^{\ell}$. The size of the policy class directly corresponds to the lower bound on sample complexity in \pref{thm:lower-bound-online}. Thus, it is natural to ask: can we improve upon \pref{lem:piell-properties} to construct even larger policy classes? In this section, we explain how the current strategy of constructing $\Piell$ via ``block-free'' matrices (\pref{lem:matrix-construction}) \emph{cannot} yield an improved size of $\Piell$, thus ruling out the most direct way of improving our lower bound.

\paragraph{The Zarankiewicz Problem.} Block-free matrices are studied in the context of a problem in extremal graph theory called the Zarankiewicz problem \cite{zarankiewicz1951problem}.\footnote{We are aware of only one other work in learning theory which relates to the Zarankiewicz problem: \cite{ben2002limitations} uses bounds on $z(N,d;k,\ell)$ to prove separations between sign rank and VC dimension. Coincidentally, we used techniques from later improvements on these results \cite{alon2016sign} to prove our separations between eluder dimension and sign rank in \pref{sec:eluder-vs-rank}.} In the language of graph theory, binary matrices $B \in \crl{0,1}^{N \times d}$ correspond to bipartite graphs $G = \prn{V = [N] \cup [d], E}$, where an edge $e_{ij}$ is drawn if and only if the matrix entry $B_{ij} = 1$. Denote $K_{k,\ell}$ to be a complete bipartite graph with $k$ vertices on the left and $\ell$ vertices on the right. We define the Zarankiewicz function as follows:

\begin{definition}
    The Zarankiewicz function $z(N,d;k,\ell)$ denotes the maximum possible number of edges in a bipartite graph $G$ with vertex set $V = [N] \cup [d]$ which does not contain the subgraph $K_{k,\ell}$.
\end{definition}

Alternatively, $z(N,d;k, \ell)$ denotes the maximum number of 1s in an $N \times d$ binary matrix that is $(k, \ell)$ block-free. The Zarankiewicz problem is to determine tight bounds on $z(N,d;k,\ell)$. The following upper bound is shown in \cite{anderson1980bela}:
\begin{align*}
    z(N,d;k, \ell) \le (k-1)^{1/\ell} (d-\ell+1) N^{1-1/\ell} + (\ell-1)N. \numberthis\label{eq:zaran-bound}
\end{align*}

\paragraph{Implications for Our Construction.} In \pref{lem:matrix-construction}, we construct a binary matrix of size $N \times d$ which is block-free yet has a minimum density of 1s. In fact, either property (1) or (2) implies that the total number of 1s is at least $\eps/2 \cdot Nd$. On the other hand, \eqref{eq:zaran-bound} implies that
\begin{align*}
    \frac{\eps}{2} \cdot Nd &\le (\ell \log d)^{1/\ell} \cdot d \cdot N^{1-1/\ell} + \ell N 
\end{align*}
Since we are interested in the regime where $\ell$ is a constant (and therefore $\ell \ll \eps d$, we can ignore the last term. Rearranging we get that
\begin{align*}
    N &\lesssim \prn*{\frac{\ell \log d}{\eps}}^\ell.
\end{align*}
Thus, it is not possible to significantly increase the size of the policy class $\Piell$ beyond $1/\eps^\ell$ via this construction!

\subsubsection{Proof of \pref{lem:stopping-time}}
The proof of this stopping time lemma follows standard machinery for PAC lower bounds \citep{garivier2019explore, domingues2021episodic, sekhari2021agnostic}. In the following we use $\KL{P}{Q}$ to denote the Kullback-Leibler divergence between two distributions $P$ and $Q$ and $\kl{p}{q} $ to denote the Kullback-Leibler divergence between two Bernoulli distributions with parameters $p,q \in [0,1]$.

For any $\pistar \in \Piell$ we denote the random variable
\begin{align*}
N^\pistar = \sum_{t=1}^{\st \wedge \Tmax} \ind{\alg_t(\idx(x_1)_{1:H}) = \pistar(\idx(x_1)_{1:H}) \text{ and } \idx(x_1) \in \Jrel^\pistar},
\end{align*}
the number of episodes for which the algorithm's policy at round $t \in [\st \wedge \Tmax]$ matches that of $\pistar$ on a certain relevant state of $\pistar$.

In the sequel we will prove upper and lower bounds on the intermediate quantity $\sum_{\pistar \in \Pi} \En_0 \brk{N^\pistar}$ and relate these quantities to $\En_0[\st]$.

\paragraph{Step 1: Upper Bound.} First we prove an upper bound. We can compute that
\begin{align*}
&\sum_{\pistar \in \Pi} \En_0 \brk*{N^\pistar} \\
&= \sum_{t=1}^\Tmax \sum_{\pistar \in \Pi} \En_0 \brk*{\ind{\st > t - 1} \ind{\alg_t(\idx(x_1)_{1:H}) = \pistar(\idx(x_1)_{1:H}) \text{ and } \idx(x_1) \in \Jrel^\pistar}} \\
&= \sum_{t=1}^\Tmax \En_0 \brk*{\ind{\st > t - 1} \sum_{\pistar \in \Pi}
\ind{\alg_t(\idx(x_1)_{1:H}) = \pistar(\idx(x_1)_{1:H}) \text{ and } \idx(x_1) \in \Jrel^\pistar}}  \\
&\overleq{(i)} \sum_{t=1}^\Tmax \En_0 \brk*{\ind{\st > t - 1} } \leq \En_0 \brk*{\st \wedge \Tmax} \leq \En_0 \brk*{\st}. \numberthis \label{eq:sum-upper-bound}
\end{align*}
Here, the first inequality follows because for every  index $j$ and every $\pistar \in \Piell_j$, each $\pistar$ admits a unique sequence of actions (by Property (3) of \pref{lem:piell-properties}), so any policy $\alg_t$ can completely match with at most one of the $\pistar$.

\paragraph{Step 2: Lower Bound.} Now we turn to the lower bound. We use a change of measure argument.
\begin{align*}
\En_0 \brk*{N^\pistar} &\overgeq{(i)} \En_{0, \pistar} \brk*{N^\pistar}  - \Tmax \Delta(\Tmax) \\
&= \frac{1}{\abs{\Phi}} \sum_{\phi \in \Phi} \En_{0, \pistar, \phi} \brk*{N^\pistar}  - \Tmax \Delta(\Tmax) \\
&\overgeq{(ii)} \frac{1}{7}\cdot \frac{1}{\abs{\Phi}} \sum_{\phi \in \Phi}  \KL{\Pr_{0,\pistar, \phi}^{\cF_{\st \wedge \Tmax}}}{\Pr_{\pistar, \phi}^{\cF_{\st \wedge \Tmax}}} - \Tmax \Delta(\Tmax) \\
&\overgeq{(iii)} \frac{1}{7} \cdot \KL{\Pr_{0,\pistar}^{\cF_{\st \wedge \Tmax}}}{\Pr_{\pistar}^{\cF_{\st \wedge \Tmax}}} - \Tmax \Delta(\Tmax)
\end{align*}
The inequality $(i)$ follows from a change of measure argument using \pref{lem:change_measure}, with $\Delta(\Tmax) \coloneqq \Tmax^2/2^{H+3}$. Here, $\cF_{\st\wedge\Tmax}$ denotes the natural filtration generated by the first $\st\wedge \Tmax$ episodes. The inequality $(ii)$ follows from \pref{lem:kl-calculation}, using the fact that $M_{0, \pistar, \phi}$ and $M_{\pistar, \phi}$ have identical transitions and only differ in rewards at layer $H$ for the trajectories which reach the end of a relevant combination lock. The number of times this occurs is exactly $N^\pistar$. The factor $1/7$ is a lower bound on $\kl{1/2}{3/4}$. The inequality $(iii)$ follows by the convexity of KL divergence.

Now we apply \pref{lem:KL-to-kl} to lower bound the expectation for any $\cF_{\st\wedge \Tmax}$-measurable random variable $Z \in [0,1]$ as
\begin{align*}
\En_0 \brk*{N^\pistar} &\geq \frac{1}{7} \cdot \kl{\En_{0,\pistar} \brk*{Z}}{\En_{\pistar} \brk*{Z}} - \Tmax \Delta(\Tmax)\\
&\geq \frac{1}{7} \cdot (1- \En_{0,\pistar} \brk*{Z}) \log \prn*{\frac{1}{1-\En_{\pistar} \brk*{Z}}} - \frac{\log(2)}{7} - \Tmax \Delta(\Tmax),
\end{align*}
where the second inequality follows from the bound $\kl{p}{q} \ge (1-p)\log(1/(1-q)) - \log(2)$ \citep[see, e.g.,][Lemma 15]{domingues2021episodic}.

Now we pick $Z = Z_\pistar \coloneqq \ind{ \st < \Tmax \text{ and } \cE(\pistar, \phi, \alg_f(\cD)) }$ and note that $\En_{\pistar}[Z_\pistar] \geq 1-2\delta$ by assumption. This implies that
\begin{align*}
\En_0 \brk*{N^\pistar} &\geq  (1- \En_{0,\pistar} \brk*{Z_\pistar}) \cdot \frac{1}{7} \log \prn*{\frac{1}{ 2 \delta }} - \frac{\log(2)}{7} - \Tmax \Delta(\Tmax).
\end{align*}
Another application of \pref{lem:change_measure} gives
\begin{align*}
\En_0 \brk*{N^\pistar} &\geq (1- \En_0 \brk*{Z_\pistar}) \cdot \frac{1}{7}  \log \prn*{\frac{1}{ 2 \delta }} - \frac{\log(2)}{7} - \Delta(\Tmax) \prn*{\Tmax + \frac{1}{7}\log \prn*{\frac{1}{ 2 \delta }}}.
\end{align*}
Summing the above over $\pistar\in\Piell$, we get
\begin{align*}
\sum_{\pistar} \En_0 \brk*{N^\pistar} &\geq \prn*{ \abs{\Piell} - \sum_{\pistar}\En_0 \brk*{Z_\pistar} } \cdot \frac{1}{7} \log \prn*{\frac{1}{ 2 \delta }} \\
                                      &\qquad - \abs{\Piell} \cdot \frac{\log(2)}{7}  - \abs{\Piell} \cdot \Delta(\Tmax)  \prn*{\Tmax + \frac{1}{7}\log \prn*{\frac{1}{ 2 \delta }}}. \numberthis\label{eq:sum-lower-bound-1}
\end{align*}

It remains to prove an upper bound on $\sum_{\pistar}\En_0 \brk*{Z_\pistar}$. We calculate that
\begin{align*}
\sum_{\pistar}\En_0 \brk*{Z_\pistar} &= \sum_{\pistar}\En_0 \brk*{ \ind{\st < \Tmax \text{ and } \cE(\pistar, \phi, \alg_f(\cD))  }} \\
&\leq \sum_{\pistar}\En_0 \brk*{ \ind{ \Pr_\pistar\brk*{ \idx(x_1) \in \Jrel^\pistar \text{ and } \alg_f(\cD)(\idx(x_1)_{1:H}) = \pistar(\idx(x_1)_{1:H}) } \ge \frac{\eps}{4} } }   \\
&\leq \frac{4}{\eps} \cdot \En_0 \brk*{\sum_{\pistar} \Pr_\pistar\brk*{\idx(x_1) \in \Jrel^\pistar \text{ and } \alg_f(\cD)(\idx(x_1)_{1:H}) = \pistar(\idx(x_1)_{1:H}) } } \numberthis \label{eq:e0-upper-bound-1}
\end{align*}
The last inequality is an application of Markov's inequality.

Now we carefully investigate the sum. For any $\phi \in \Phi$, the sum can be rewritten as
\begin{align*}
    & \hspace{-0.3in}  \sum_{\pistar} \Pr_{\pistar, \phi} \brk*{ \idx(x_1) \in \Jrel^\pistar \text{ and } \alg_f(\cD)(\idx(x_1)_{1:H}) = \pistar(\idx(x_1)_{1:H}) } \\
    = ~& \sum_{\pistar} \sum_{x_1 \in \statesp_1} \Pr_{\pistar, \phi} \brk*{x_1} \Pr_{\pistar, \phi}\brk*{ \idx(x_1)\in \Jrel^\pistar \text{ and } \alg_f(\cD)(\idx(x_1)_{1:H}) = \pistar(\idx(x_1)_{1:H}) ~\mid~ x_1} \\
    \overeq{(i)} ~& \frac{1}{\abs{\statesp_1}} \sum_{x_1 \in \statesp_1} \sum_{\pistar} \Pr_{\pistar, \phi} \brk*{ \idx(x_1) \in \Jrel^\pistar \text{ and } \alg_f(\cD)(\idx(x_1)_{1:H}) = \pistar(\idx(x_1)_{1:H}) ~\mid~ x_1} \\
    \overeq{(ii)} ~& \frac{1}{\abs{\statesp_1}} \sum_{x_1 \in \statesp_1} \sum_{\pistar} \ind{ \idx(x_1) \in \Jrel^\pistar \text{ and } \alg_f(\cD)(\idx(x_1)_{1:H}) = \pistar(\idx(x_1)_{1:H})}. \numberthis\label{eq:sum-of-indicators}
\end{align*}
The equality $(i)$ follows because regardless of which MDP $M_\pistar$ we are in, the first state is distributed uniformly over $\statesp_1$. The equality $(ii)$ follows because once we condition on the first state $x_1$, the probability is either 0 or 1.

Fix any start state $x_1$. We can write
\begin{align*}
\hspace{2em}&\hspace{-2em} \sum_{\pistar} \ind{ \idx(x_1) \in \Jrel^\pistar \text{ and } \alg_f(\cD)(\idx(x_1)_{1:H})
\pistar(\idx(x_1)_{1:H}) } \\
&= \sum_{\pistar \in \Piell_{\idx(x_1)}} \ind{\alg_f(\cD)(\idx(x_1)_{1:H}) = \pistar(\idx(x_1)_{1:H}) } = 1,
\end{align*}
where the second equality uses the fact that on any index $j$, each $\pistar \in  \Piell_{j}$ behaves differently (Property (3) of \pref{lem:piell-properties}), so $\alg_f(\cD)$ can match at most one of these behaviors. Plugging this back into Eq.~\eqref{eq:sum-of-indicators}, averaging over $\phi \in \Phi$, and combining with Eq.~\eqref{eq:e0-upper-bound-1}, we arrive at the bound
\begin{align*}
    \sum_{\pistar}\En_0 \brk*{Z_\pistar} \le \frac{4}{\eps}.
\end{align*}
We now use this in conjunction with Eq.~\eqref{eq:sum-lower-bound-1} to arrive at the final lower bound
\begin{align*}
\sum_{\pistar} \En_0 \brk*{N^\pistar} &\geq \prn*{ \abs{\Piell} - \frac{4}{\eps} } \cdot \frac{1}{7} \log \prn*{\frac{1}{ 2 \delta }} \\
                                      &\qquad - \abs{\Piell} \cdot \frac{\log(2)}{7}  - \abs{\Piell} \cdot \Delta(\Tmax)  \prn*{\Tmax + \frac{1}{7}\log \prn*{\frac{1}{ 2 \delta }}}. \numberthis\label{eq:sum-lower-bound-2}
\end{align*}

\paragraph{Step 3: Putting it All Together.}
Combining Eqs.~\eqref{eq:sum-upper-bound} and \eqref{eq:sum-lower-bound-2}, plugging in our choice of $\Delta(\Tmax)$, and simplifying we get
\begin{align*}
    \En_0 \brk*{\st} &\geq \prn*{ \abs{\Piell} - \frac{4}{\eps} } \cdot \frac{1}{7} \log \prn*{\frac{1}{ 2 \delta }} - \abs{\Piell} \cdot \frac{\log(2)}{7}  - \abs{\Piell} \cdot \Delta(\Tmax)  \prn*{\Tmax + \frac{1}{7}\log \prn*{\frac{1}{ 2 \delta }}}. \\
    &\ge \prn*{ \frac{\abs{\Piell}}{2} - \frac{4}{\eps}} \cdot \frac{1}{7} \log \prn*{\frac{1}{ 2 \delta }} - \abs{\Piell} \cdot  \frac{\Tmax^2}{2^{H+3}} \prn*{\Tmax + \frac{1}{7}\log \prn*{\frac{1}{ 2 \delta }}}.
\end{align*}
The last inequality follows since $\delta \le 1/8$ implies $\log(1/(2\delta)) \geq 2 \log (2)$.

This concludes the proof of \pref{lem:stopping-time}.\qed

\subsubsection{Change of Measure Lemma}

\begin{lemma}
\label{lem:change_measure}
Let \(Z \in \brk{0, 1}\) be a  \(\cF_{\Tmax}\)-measurable random variable. Then, for every $\pistar \in \Piell$,
\begin{align*}
\abs{\En_0 \brk*{Z}  - \En_{0,\pistar} \brk*{Z}} \leq  \Delta(\Tmax) :=  \frac{\Tmax^2}{2^{H+3}}
\end{align*}
\end{lemma}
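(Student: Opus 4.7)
The plan is to reduce $|\En_0[Z] - \En_{0,\pistar}[Z]|$ to a total variation bound and then identify a ``good'' event on which the two laws agree exactly. Since $Z \in [0,1]$ is $\cF_{\Tmax}$-measurable, the variational characterization of TV gives
\[
|\En_0[Z] - \En_{0,\pistar}[Z]| \le \dtv\!\prn*{\Pr_0^{\cF_{\Tmax}},\Pr_{0,\pistar}^{\cF_{\Tmax}}},
\]
so it suffices to bound the right-hand side by $\Tmax^2/2^{H+3}$.

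Let $\eventfresh$ denote the event that the start-state indices $\idx(x_1^{(1)}),\ldots,\idx(x_1^{(\Tmax)})$ are pairwise distinct. Under both $\Pr_0$ and $\Pr_{0,\pistar}$ (which is $\tfrac{1}{|\Phi|}\sum_{\phi}\Pr_{0,\pistar,\phi}$), the initial state of each episode is drawn i.i.d.~uniformly from $\{j[1]\}_{j \in [2^{2H}]}$, so $\Pr_0(\eventfresh)=\Pr_{0,\pistar}(\eventfresh)$ and a birthday calculation gives $\Pr_0(\eventfresh^c) \le \binom{\Tmax}{2}/2^{2H} \le \Tmax^2/2^{2H+1}$.

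The main step is to argue that the two measures, restricted to $\eventfresh$, are identical. On $\eventfresh$ each of the $\Tmax$ episodes lies in a distinct combination lock $\mathsf{CL}_j$, and because the decoder $\phi$ assigns labels independently across distinct indices, the $\Tmax$ episodes are mutually independent under $\Pr_{0,\pistar}$ after marginalizing over $\phi$. They are mutually independent under $\Pr_0$ by construction. It therefore suffices to show that, for a single episode with a \emph{fixed} start index $j$, the two laws agree. For $j \notin \Jrel^{\pistar}$ this is immediate from the definition of $M_{0,\pistar,\phi}$, which prescribes uniform transitions and $\ber(1/2)$ rewards identical to $M_0$. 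For $j \in \Jrel^{\pistar}$, I will argue step by step that the conditional law of $x_{h+1}$ given the history is uniform on $\{j[h+1],j'[h+1]\}$ under both measures: in $M_{0,\pistar,\phi}$, whether the trajectory lies on the good or the bad chain at layer $h$ is determined by the history and the labels $\phi(j,h')$ for $h' \le h$, while the identity of $j_g[h+1]$ versus $j_b[h+1]$ depends only on the independent label $\phi(j,h+1)$, which is still uniform in the posterior. Rewards in both $M_0$ and $M_{0,\pistar,\phi}$ are $\ber(1/2)$ everywhere, so there is no reward mismatch.

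Putting the pieces together, for any $\cF_{\Tmax}$-measurable event $A$,
\[
|\Pr_0(A)-\Pr_{0,\pistar}(A)| = |\Pr_0(A \cap \eventfresh^c)-\Pr_{0,\pistar}(A \cap \eventfresh^c)| \le \Pr_0(\eventfresh^c) \le \frac{\Tmax^2}{2^{2H+1}} \le \frac{\Tmax^2}{2^{H+3}},
\]
the last inequality using $H \ge 2$ (which is implied by the standing assumption that $H$ is sufficiently large). Hence $\dtv(\Pr_0^{\cF_{\Tmax}},\Pr_{0,\pistar}^{\cF_{\Tmax}}) \le \Delta(\Tmax)$, proving the lemma. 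The principal obstacle is the equality of the two measures on $\eventfresh$: it looks plausible but requires carefully tracking the posterior over $\phi$ after each transition and exploiting the product-independence of the decoder across layers and indices; once this is in place, the rest is a standard birthday bound.
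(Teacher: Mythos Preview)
Your proposal is correct and takes a genuinely different route from the paper. The paper applies the TV chain rule $\dtv(\Pr_0^{\cF_{\Tmax}},\Pr_{0,\pistar}^{\cF_{\Tmax}}) \le \sum_t \En_0[\dtv(\Pr_0[\cdot\mid\cF_{t-1}],\Pr_{0,\pistar}[\cdot\mid\cF_{t-1}])]$ and then bounds each conditional term by enumerating trajectories with a repeated start state, obtaining a per-step bound of $(t-1)/2^{H+2}$ and summing. You instead isolate a single global bad event $\eventfresh^c$ (any repeated start-state index across the $\Tmax$ episodes), show the two measures coincide exactly on $\eventfresh$, and pay only for $\Pr_0(\eventfresh^c)$ via a birthday bound. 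Both hinge on the same observation---that a fresh index $j_t$ makes the posterior over $\phi|_{j_t}$ uniform, so the $\phi$-averaged one-step law matches $M_0$---but your packaging is cleaner and in fact yields the sharper bound $\Tmax^2/2^{2H+1}$ before you loosen it to match the stated $\Tmax^2/2^{H+3}$.

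One small correction: your sentence that ``the $\Tmax$ episodes are mutually independent under $\Pr_{0,\pistar}$ after marginalizing over $\phi$'' is not literally true, since the adaptive algorithm couples episodes through $\cF_{t-1}$. What you actually need (and what your argument delivers) is that for any fixed $\omega\in\eventfresh$, the product structure of $\phi$ over distinct indices lets you factor $\frac{1}{|\Phi|}\sum_\phi \prod_t \Pr_{0,\pistar,\phi}(\tau_t\mid\cF_{t-1}(\omega))=\prod_t \En_{\phi|_{j_t}}[\Pr_{0,\pistar,\phi}(\tau_t\mid\cF_{t-1}(\omega))]$, and each factor equals $\Pr_0(\tau_t\mid\cF_{t-1}(\omega))$ by your single-episode argument (exactly one of the $2^{H-1}$ choices of $\phi|_{j_t}$ is consistent with any given state sequence). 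With that phrasing the proof is complete.
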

\begin{proof}
First, we note that
\begin{align*}
    \abs{ \En_0 \brk*{Z}  - \En_{0,\pistar}\brk*{Z} }\le \dtv\prn*{\Pr_0^{\cF_\Tmax}, \Pr_{0, \pistar}^{\cF_\Tmax}} \le \sum_{t=1}^\Tmax \En_0 \brk*{ \dtv\prn*{ \Pr_0[\cdot|\cF_{t-1}], \Pr_{0,\pistar}[\cdot|\cF_{t-1}] } }.
\end{align*}
Here $\Pr_0[\cdot|\cF_t]$ denotes the conditional distribution of the $t$-th trajectory given the first $t-1$ trajectories. Similarly $\Pr_{0, \pistar}[\cdot|\cF_t]$ is the averaged over decoders condition distribution of the $t$-th trajectory given the first $t-1$ trajectories. The second inequality follows by chain rule of TV distance (\pref{lem:chain-rule-tv}). 

Now we examine each term $ \dtv\prn*{ \Pr_0[\cdot|\cF_{t-1}], \Pr_{0,\pistar}[\cdot|\cF_{t-1}] }$. Fix a history $\cF_{t-1}$ and sequence $x_{1:H}$ where all $x_i$ have the same index. We want to bound the quantity
\begin{align*}
    \abs*{\Pr_{0, \pistar}\brk*{X_{1:H}^{(t)} = x_{1:H} ~\mid~ \cF_{t-1}} - \Pr_{0} \brk*{X_{1:H}^{(t)} =  x_{1:H} ~\mid~ \cF_{t-1} }},
\end{align*}
where it is understood that the random variable $x_{1:H}^{(t)}$ is drawn according to the MDP dynamics and algorithm's policy $\alg_t$ (which is in turn a measurable function of $\cF_{t-1}$).

We observe that the second term is exactly
\begin{align*}
\Pr_{0}\brk*{X_{1:H}^{(t)} =  x_{1:H} ~\mid~ \cF_{t-1} } = \frac{1}{\abs{\statesp_1}} \cdot \frac{1}{2^{H-1}},
\end{align*}
since the state $x_1$ appears with probability $1/\abs{\statesp_1}$ and the transitions in $M_0$ are uniform to the next state in the combination lock, so each sequence is equally as likely.

For the first term, again the state $x_1$ appears with probability $1/\abs{\statesp_1}$. Suppose that $\idx(x_1) \notin \Jrel^\pistar$. Then the dynamics of $\Pr_{0,\pi^\star, \phi}$ for all $\phi \in \Phi$ are exactly the same as $M_0$, so again the probability in this case is $1/(\abs{\statesp_1}2^{H-1})$. Now consider when $\idx(x_1) \in \Jrel^\pistar$. At some point $\wh{h} \in [H+1]$, the policy $\alg_t$ will deviate from $\pistar$ for the first time (if $\alg_t$ never deviates from $\pistar$ we set $\wh{h} = H+1)$. The layer $\wh{h}$ is only a function of $x_1$ and $\alg_t$ and does not depend on the MDP dynamics. The correct decoder must assign $\phi(x_{1:\wh{h}-1}) = \textsc{Good}$ and $\phi(x_{\wh{h}: H}) = \textsc{Bad}$, so therefore we have
\begin{align*}
    \Pr_{0, \pistar} \brk*{X_{1:H}^{(t)} =  x_{1:H} ~\mid~ \cF_{t-1} }
    &= \Pr_{0, \pistar}\brk*{\phi(x_{1:\wh{h}-1}) = \textsc{Good} \text{ and }\phi(x_{\wh{h}: H}) = \textsc{Bad} ~\mid~ \cF_{t-1} }
\end{align*}
If $x_1 \notin \cF_{t-1}$, i.e., we are seeing $x_1$ for the first time, then the conditional distribution over the labels given by $\phi$ is the same as the unconditioned distribution: % \gene{how to argue this more rigorously? Key idea: while it could be the case that the algorithm has identified $\pistar$ and only plays the good actions, we care about the distribution of \emph{states}, and for a new state index, this is still the uniform distribution.} \ayush{I think it is fine as it is!}
\begin{align*}
    \Pr_{0, \pistar}\brk*{\phi(x_{1:\wh{h}-1}) = \textsc{Good} \text{ and }\phi(x_{\wh{h}: H}) = \textsc{Bad} ~\mid~ \cF_{t-1} } = \frac{1}{\abs{\statesp_1}} \cdot \frac{1}{2^{H-1}}.
\end{align*}
Otherwise, if $x_1 \in \cF_{t-1}$ then we bound the conditional probability by 1.
\begin{align*}
    &\Pr_{0, \pistar} \brk*{X_{1:H}^{(t)} =  x_{1:H} ~\mid~ \cF_{t-1} }  \le \frac{1}{\abs{\statesp_1}}.
\end{align*}
Putting this all together we can compute
\begin{align*}
    \Pr_{0, \pistar} \brk*{X_{1:H}^{(t)} =  x_{1:H} ~\mid~ \cF_{t-1} }  \quad \begin{cases}
         = \frac{1}{\abs{\statesp_1}} \cdot \frac{1}{2^{H-1}} &\text{if}\quad \idx(x_1) \notin \Jrel^\pistar, \\[0.5em]
        = \frac{1}{\abs{\statesp_1}} \cdot \frac{1}{2^{H-1}} &\text{if}\quad \idx(x_1) \in \Jrel^\pistar \text{ and } x_1 \notin \cF_{t-1}, \\[0.5em]
        \le \frac{1}{\abs{\statesp_1}} &\text{if}\quad \idx(x_1) \in \Jrel^\pistar \text{ and } x_1 \in \cF_{t-1}, \\[0.5em]
        = 0 &\text{otherwise.}
    \end{cases}
\end{align*}

Therefore we have the bound
\begin{align*}
%\hspace{1in}&\hspace{-1in}
\abs*{\Pr_{0, \pistar}\brk*{X_{1:H}^{(t)} = x_{1:H} ~\mid~ \cF_{t-1}} - \Pr_{0} \brk*{X_{1:H}^{(t)} =  x_{1:H} ~\mid~ \cF_{t-1} }}
&\leq \frac{1}{\abs{\statesp_1}}\ind{\idx(x_1) \in \Jrel^\pistar, x_1 \in \cF_{t-1}}.
\end{align*}
Summing over all possible sequences $x_{1:H}$ we have
\begin{align*}
    \dtv\prn*{ \Pr_0[\cdot|\cF_{t-1}], \Pr_{0,\pistar}[\cdot|\cF_{t-1}] } &\le \frac{1}{2} \cdot \frac{(t-1) \cdot 2^{H-1}}{\abs{\statesp_1}},
\end{align*}
since the only sequences $x_{1:H}$ for which the difference in the two measures are nonzero are the ones for which $x_1 \in \cF_{t-1}$, of which there are $(t-1) \cdot 2^{H-1}$ of them.

Lastly, taking expectations and summing over $t=1$ to $\Tmax$ and plugging in the value of $\abs{\statesp_1} = 2^{2H}$ we have the final bound.
\end{proof}

The next lemma is a straightforward modification of \citep[Lemma 5]{domingues2021episodic}, with varying rewards instead of varying transitions.
\begin{lemma}\label{lem:kl-calculation}
Let $M$ and $M'$ be two MDPs that are identical in transition and differ in the reward distributions, denote $r_h(x,a)$ and $r'_h(x,a)$. Assume that for all $(x,a)$ we have $r_h(x,a) \ll r'_h(x,a)$. Then for any stopping time $\st$ with respect to $(\cF^t)_{t\ge 1}$ that satisfies $\Pr_{M}[\st < \infty] = 1$,
\begin{align*}
    \KL{\Pr_{M}^{I_{\st}}}{\Pr_{M'}^{I_{\st}}} = \sum_{x\in\statesp, a \in \cA, h \in [H]} \En_{M}[N^\st_{s,a,h}] \cdot \KL{r_h(x,a)}{r'_h(x,a)},
\end{align*}
where $N^\st_{s,a,h} := \sum_{t=1}^\st \ind{(x_h^{(t)}, A_h^{(t)}) = (x,a) }$ and $I_\st: \Omega \mapsto \bigcup_{t\ge 1} \cI_t : \omega \mapsto I_{\st(\omega)}(\omega)$ is the random vector representing the history up to episode $\st$.
\end{lemma}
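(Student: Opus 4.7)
The plan is to compute the log-likelihood ratio between $\Pr_{M}$ and $\Pr_{M'}$ on the natural filtration and then invoke a Wald-type identity to handle the stopping time $\st$. The starting observation is that since $M$ and $M'$ share the same transition kernel $P$, and since the algorithm is a fixed mapping from past observations to the next action, the Radon--Nikodym derivative between the two measures restricted to a fixed number of episodes $t$ depends only on the reward factors. Concretely, letting $I_t$ denote the history of the first $t$ episodes, one can write
\[
\log \frac{d\Pr_{M}^{I_t}}{d\Pr_{M'}^{I_t}} \;=\; \sum_{s=1}^{t}\sum_{h=1}^{H} \log \frac{d r_h(\cdot \mid x_h^{(s)}, a_h^{(s)})}{d r_h'(\cdot \mid x_h^{(s)}, a_h^{(s)})}\bigl(r_h^{(s)}\bigr),
\]
where the action likelihoods and the transition likelihoods cancel because they are identical under both measures. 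The hypothesis $r_h(x,a) \ll r_h'(x,a)$ ensures absolute continuity so this log-ratio is well defined $\Pr_M$-almost surely.

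Next I would take expectations under $\Pr_M$ at a fixed deterministic horizon $t$. Conditioning on $\cF^{s}_{h-1}$ (the sigma-field just before the reward at step $h$ of episode $s$ is sampled) and using the tower property, the conditional expectation of each log-ratio term equals $\KL{r_h(x_h^{(s)}, a_h^{(s)})}{r_h'(x_h^{(s)}, a_h^{(s)})}$. Re-grouping the double sum by state-action-layer triples yields
\[
\KL{\Pr_{M}^{I_t}}{\Pr_{M'}^{I_t}} \;=\; \sum_{x,a,h} \En_M\!\bigl[N_{x,a,h}^{t}\bigr]\cdot \KL{r_h(x,a)}{r_h'(x,a)},
\]
which is precisely the statement of the lemma with the deterministic time $t$ in place of the stopping time $\st$.

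The main obstacle is extending the identity from deterministic $t$ to the random stopping time $\st$. This is a standard optional-stopping / monotone-convergence argument but deserves care. Since the per-step quantities $\KL{r_h(x_h^{(s)}, a_h^{(s)})}{r_h'(x_h^{(s)}, a_h^{(s)})}$ are nonnegative, the partial sums $Y_t := \sum_{s=1}^{t\wedge \st}\sum_{h=1}^H \KL{r_h(x_h^{(s)}, a_h^{(s)})}{r_h'(x_h^{(s)}, a_h^{(s)})}$ are monotone in $t$ and converge $\Pr_M$-a.s.\ to $Y_\infty := \sum_{s=1}^{\st}\sum_h \KL{r_h(x_h^{(s)}, a_h^{(s)})}{r_h'(x_h^{(s)}, a_h^{(s)})}$ by the assumption $\Pr_M[\st < \infty]=1$. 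By monotone convergence, $\En_M[Y_t] \to \En_M[Y_\infty]$, which after regrouping equals the right-hand side of the lemma. On the probability-measure side, the chain rule for KL divergence applied to the stopped filtration (or equivalently the optional-stopping identity $\KL{\Pr_M^{I_\st}}{\Pr_{M'}^{I_\st}} = \En_M\!\bigl[\log(d\Pr_M/d\Pr_{M'})(I_\st)\bigr]$) gives $\KL{\Pr_M^{I_\st}}{\Pr_{M'}^{I_\st}} = \En_M[Y_\infty]$. Equating the two expressions completes the proof. This argument closely parallels Lemma~5 of \citet{domingues2021episodic}, with the only modification being that the per-step KL contribution comes from reward distributions rather than transition kernels; since the structure of the derivation is identical, the adaptation is routine once the likelihood-ratio factorization above is in hand.
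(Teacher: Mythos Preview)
Your proposal is correct and matches the paper's approach exactly: the paper does not give a standalone proof but simply states that the lemma is ``a straightforward modification of [Lemma 5, \citet{domingues2021episodic}], with varying rewards instead of varying transitions,'' which is precisely the factorization-and-optional-stopping argument you wrote out. You have supplied the details the paper omits, and even cite the same source with the same observation about the modification.
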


\subsection{Proof of \pref{thm:sunflower}}\label{sec:upper_bound_main} 
\subsubsection{Algorithmic Details and Preliminaries}   \label{sec:algorithm_details} 
In this section, we provide the details of the subroutines that do not appear in the main body, in Algorithms \ref{alg:sample}, \ref{alg:dp}, and \ref{alg:eval}. The transitions and  reward functions in \pref{line:reward_calculator} in \pref{alg:eval} are computed using Eqs.~\eqref{eq:empirical_MRP_dynamics} and \eqref{eq:empirical_MRP_rewards}, which are specified below, after introducing additional notation. 

\begin{algorithm}[ht] 
    \caption{$\datacollector$}\label{alg:sample}
    \begin{algorithmic}[1]
            \Require State: \(x\), Reacher policy: \(\pi_x\), Exploration policy set: \(\Picore\), Number of samples: \(n\). 
            \If {$x = x_\top$}  \hfill \algcomment{Uniform sampling for start state \(x_\top\)}  
            \For{$t = 1, \dots, n$}
                \State Sample $\pi' \sim\mathrm{Uniform}(\Picore)$, and run $\pi'$ to collect \(\tau = \prn{x_1, a_1, \cdots, x_H, a_H}\). 
%               \State run $\pi'$ from layer $h$ to $H$ to collect $\tau$;
            \State $\mD_x \leftarrow \mD_x\cup\{\tau\}$. 
            \EndFor 
            \Else \hfill \algcomment{\(\pi_x\)-based sampling for all other states \(x \neq x_\top\)}  
        \State Identify the layer \(h\) such that $x\in\statesp_h$. 
        \For{$t = 1, \dots, n$} 
            \State Run $\pi_x$ for first $h-1$ layers and collect trajectory \(\prn{x_1, a_1, \cdots, x_{h-1}, a_{h-1}, x_h}\). 
            \If{$x_h = x$}\label{line:is1} 
                \State Sample $\pi'\sim\mathrm{Uniform}(\Picore)$ and run $\pi'$ to collect \(\prn{x_h, a_h, \cdots, x_H, a_H}\).
%               \State run $\pi'$ from layer $h$ to $H$ to collect $\tau$;
                \State $\mD_x\leftarrow \mD_x\cup\{\tau = \prn{x_1, a_1, \cdots, x_H, a_H}\}$. \label{line:is2}
            \EndIf
        \EndFor  
            \EndIf 

        \State \textbf{Return} dataset $\mD_x$.  
    \end{algorithmic}
\end{algorithm} 

%\ayush{The data collector may end up collecting lesser than \(n_2\) many trajectories} 
 
\begin{algorithm}[ht]
\caption{$\estreach$}\label{alg:dp}
	\begin{algorithmic}[1]
		\Require State space \(\Stab\), MRP \(\MRPsign\),  State $\bar{x}\in\statesp^{\mathrm{tab}}$.
        \State Let $P$ be the transition of $\MRPsign$.
		\State Initialize $V(x) = \ind{x = \bar{x}}$ for all \(x \in \Stab\). 
        \State \textbf{Repeat} \(H + 1\) times:
               % \For{\(H + 1\) times}  
			\State \(\quad\) For all  $x \in \Stab$, calculate $V(x)\leftarrow \sum_{x'\in\statesp^{\mathrm{tab}}} P_{x\to x'} \cdot V(x').$ \hfill \algcomment{Dynamic Programming} 
%		\EndFor
		\State \textbf{Return} $V(x_\top)$.
	\end{algorithmic}
\end{algorithm}

\begin{algorithm}[ht]
	\caption{$\evaluate$}\label{alg:eval}
	\begin{algorithmic}[1] 
            \Require Policy set $\Picore$, Reachable states \(\reachablestates\), Datasets $\{\mD_x\}_{x \in \reachablestates}$, Policy $\pi$. 
            \State Compute \(\SHP{\pi} \leftarrow \statesp_\pi^+ \cap \reachablestates\) and \(\Stab = \SHP{\pi} \cup \crl{x_\bot}\). % \(\SRem{\pi} \leftarrow \statesp_\pi^+ \setminus \SHP{\pi}\). 

		\For{$x, x'$ in $\Stab$}  \hfill \algcomment{Compute transitions and rewards on \(\Stab\)}
            \State Let \(h, h'\) be such that \(x \in \statesp_h\) and \(x' \in \statesp_{h'}\) 
            \If{\(h < h'\)}
           \State Calculate $\widehat{P}_{x\to x'}^\pi, \widehat{r}_{x\to x'}^\pi$ according to \eqref{eq:empirical_MRP_dynamics} and \eqref{eq:empirical_MRP_rewards};\label{line:reward_calculator} 
           \Else 
            \State Set $\widehat{P}_{x\to x'}^\pi \leftarrow 0$, $\widehat{r}_{x\to x'}^\pi \leftarrow 0$.  
           \EndIf
            \EndFor 
		\State Set $\widehat{V}(x) = 0$ for all \(x \in \Stab\).  
		%\For{$h=H:1$} 
            \State \textbf{Repeat} for \(H+1\) times: \hfill \algcomment{Evaluate \(\pi\) by dynamic programming} 
            \State \quad For all $x \in \Stab$, calculate $\widehat{V}(x)\leftarrow \sum_{\Stab} \widehat{P}_{x\to x'}^\pi \cdot \left(\widehat{r}_{x\to x'}^\pi + \widehat{V}(x')\right).$ 
		%\EndFor
            %\State Calculate $\widehat{V}^\pi\leftarrow \sum_{x'\in\SHP{\pi}\cup\{x_\bot\}} \widehat{P}_{x_\top\to x'}^\pi\left(\widehat{r}_{x_\top\to x'}^\pi + \widehat{V}(x')\right)$.
		\State \textbf{Return} $\widehat{V}(x_\top)$.  
	\end{algorithmic}
\end{algorithm}

% \begin{algorithm}[ht]
% 	\caption{$\evaluate(\{\Picore\}, \{\mD_x\}, \pi)$}\label{alg:eval}
% 	\begin{algorithmic}[1] 
%             \State \textbf{Input: } Exploration policies $\Picore$, datasets $\{\mD_x\}$, policy $\pi$ to be evaluated.
% 		\For{$x\in(\SHP{\pi})\cup \{x_\top\}, x'\in\SHP{\pi}\cup\{x_\bot\}$} 
% 				\State Calculate $\widehat{P}_{x\to x'}^\pi, \widehat{r}_{x\to x'}^\pi$ according to \eqref{eq:empirical} and \eqref{eq:reward};
% 		\EndFor
% 		\State Set $\widehat{V}(x_\bot) = 0$. 
% 		\For{$h=H:1$} 
% 			\State For $\forall x\in\statesp_h\cap \SHP{\pi}$, calculate $\widehat{V}(x)\leftarrow \sum_{x'\in\SHP{\pi}\cup\{x_\bot\}} \widehat{P}_{x\to x'}^\pi\left(\widehat{r}_{x\to x'}^\pi + \widehat{V}(x')\right).$
% 		\EndFor
%             \State Calculate $\widehat{V}^\pi\leftarrow \sum_{x'\in\SHP{\pi}\cup\{x_\bot\}} \widehat{P}_{x_\top\to x'}^\pi\left(\widehat{r}_{x_\top\to x'}^\pi + \widehat{V}(x')\right)$.
% 		\State \textbf{Output: } $\widehat{V}^$.
% 	\end{algorithmic}
% \end{algorithm}

%\subsubsection{Preliminaries and Additional Notation} 

%\ayush{Recall superscript and subscript notations in policies!} 
\paragraph{Additional notation.} Recall that we assumed that the state space \(\statesp = \statesp_1 \times \dots \statesp_H\) is layered. Thus, given a state \(x\), we can infer the layer \(h\) such that \(x \in \statesp_h\). By definition \(x_\top\) belongs to the layer \(h = 0\) and \(x_\bot\) belongs to the layer \(h = H\). In the following, we define additional notation: 

\begin{enumerate}[label=\((\alph*)\)]  
%\item For any set \(\bar{\statesp} \subseteq \statesp\), we define \(\bar{\statesp}^\complement\) to denote its complement i.e.~the \(\bar{\statesp}^\complement = \statesp \setminus \bar{\statesp}\).  \ayush{Do we need this?} \zeyu{I think the only thing we needed here is $\SRem{\pi} = \statesp_\pi\backslash \SHP{\pi}$}

\item \textit{Sets \(\event{x}{x'}{\bar \statesp}\)}: For any set \(\bar \statesp\), and states \(x, x' \in \statesp\), we define \(\event{x}{x'}{\bar \statesp}\) as the set of all the trajectories that go from \(x\) to \(x'\) without passing through any state in \(\bar \statesp\) in between. 

More formally, let state \(x\) be at layer \(h\), and \(x'\) be at layer \(h'\). Then, \(\event{x}{x'}{\bar \statesp}\) denotes the set of all the trajectories \(\tau = (x_1, a_1, \cdots, x_H, a_H)\) that satisfy all of the following:  
\begin{enumerate}[label=\(\bullet\)] 
\item \(x_h = x\), where \(x_h\) is the state at timestep \(h\) in \(\tau\). 
\item \(x_{h'} = x'\), where \(x_{h'}\) is the state at timestep \(h'\) in \(\tau\). 
\item For all \(h < \wt{h} < h'\), the state \(x_{\wt{h}}\), at time step \(\wt h\) in \(\tau\), does not lie in the set \(\bar \statesp\).  
\end{enumerate}

\noindent 
Note that when \(h' \leq h\), we define 
 \(\event{x}{x'}{\bar \statesp} = \emptyset\). Additionally, we define  \(\event{x_\top}{x}{\bar \statesp}\) as the set of all trajectories that go to \(x'\) (from a start state) without going through any state in \(\bar \statesp\) in between. Finally, we define   \(\event{x}{x_\bot}{\bar \statesp}\) as the set of all the trajectories that go from \(x\) at time step \(h\) to the end of the episode without passing through any state in \(\bar \statesp\) in between. 

Furthermore, we use the shorthand $\setall{\pi}{x}{x'} \coloneqq \event{x}{x'}{\statesp_\pi}$ to denote the set of all the trajectories that go from \(x\) to \(x'\) without passing though any leaf state \(\statesp_\pi\). 

% \item \textit{Sets \(\setall{\pi}{x}{x'}\)}: For any policy \(\pi\), and states \(x, x' \in \statesp\), we define \(\setall{\pi}{x}{x'}\) as the set of all the trajectories that are consistent with \(\pi\), and that go from \(x\) to \(x'\) without passing through any state in \(\statesp_\pi\) in between. 

% More formally, let \(\pi \in \Pi\), state \(x\) be at layer \(h\), and \(x'\) be at layer \(h'\). Then, \(\setall{\pi}{x}{x'}\) denotes the set of all the trajectories \(\tau = (x_1, a_1, \cdots, x_H, a_H)\) that satisfy all of the following:  
% \begin{enumerate}[label=\(\bullet\)] 
% \item \(\tau\) is consistent with \(\pi\), i.e.~\(\pi \cons \tau\). 
% \item \(x_h = x\), where \(x_h\) is the state at timestep \(h\) in \(\tau\). 
% \item \(x_{h'} = x'\), where \(x_{h'}\) is the state at timestep \(h'\) in \(\tau\). 
% \item For all \(h < \wt{h} < h'\), the state \(x_{\wt{h}}\) at timestep \(\wt h\) in \(\tau\), does not lie in the set \(\statesp_\pi\).  
% \end{enumerate}

% \noindent 
% Note that when \(h' \leq h\), we define 
% \(\setall{\pi}{x}{x'} = \emptyset\). Additionally, we define \(\setall{\pi}{x_\top}{x'}\) as the set of all trajectories consistent with \(\pi \) that go to \(x'\) (from a start state) without going through any state in \(\statesp_\pi\) in between. Finally, we define  \(\setall{\pi}{x}{x_\bot}\) as the set of all the trajectories that are consistent with \(\pi\) and go from \(x\) at time step \(h\) to the end of the episode without passing through any state in \(\statesp_\pi\) in between. 

\item Using the above notation, for any \(x \in \statesp\) and set \(\bar \statesp \subseteq \statesp\), we define \(\bard{\pi}{x}{\bar{\statesp}}\) as the probability of reaching \(x\) (from a start state)  without passing through any state in \(\bar \statesp\) in between, i.e.~
\begin{align*}
\bard{\pi}{x}{\bar{\statesp}} &= \bbP^{\pi}\brk*{\tau~\text{reaches}~x~\text{without passing through any \(x'\in \bar{\statesp}\) before reaching \(x\)}} \\
&= \bbP^{\pi}\brk*{\tau~\in \event{x_\top}{x}{\bar \statesp}}. \numberthis \label{eq:dbar}
%&= \Pr^{\pi, M}\prn*{ \tau \in \setall{\pi}{x_\top}{x}}. 
\end{align*}

\end{enumerate}

We next recall the notation of Markov Reward Process and formally define both the population versions of policy-specific MRPs. 

\paragraph{Markov Reward Process (MRP).} A Markov reward process
\begin{align*}
    \MRPsign = \mathrm{MRP}(\statesp, P, R, H, x_\top, x_\bot)
\end{align*}
is defined over the state space \(\statesp\) with start state \(x_\top\) and end state \(x_\bot\),  for trajectory length \(H + 2\). Without loss of generality, we assume that \(\crl{x_\top, x_\bot} \in \statesp\). The transition kernel is denoted by \(P: \statesp \times \statesp \to [0,1] \), such that for any $x \in \statesp$, $\sum_{x'} P_{x \to x'} = 1$; the reward kernel is denoted \(R: \statesp \times \statesp \to \Delta([0,1])\). Throughout, we use the notation $\rightarrow$ to signify that the transitions and rewards are defined along the edges of the MRP. 

A trajectory in $\MRPsign$ is of the form \(\tau = (x_\top, x_1, \cdots, x_{H}, x_\bot)\), where \(x_h \in \statesp\) for all \(h \in [H]\). Furthermore, from any state \(x \in \statesp\), the MRP transitions\footnote{Our definition of Markov Reward Processes (MRP) deviates from MDPs that we considered, in the sense that we do not assume that the state space \(\statesp\) is layered in an MRP. This variation is only adapted to simplify the proofs and the notation.} to another state \(x' \in \statesp\) with probability \(P_{x \rightarrow x'}\), and obtains the rewards \(r_{x \rightarrow x'} \sim R_{x \rightarrow x'}\). Thus,  
\begin{align*}
\bbP^{\MRPsign}[\tau] =  P_{x_\top \rightarrow x_1} \cdot \prn*{\prod_{h=1}^{H-1} P_{x_h \rightarrow x_{h+1}} } \cdot P_{x_h \rightarrow x_\bot},  
\end{align*} 
and the rewards   
\begin{align*} 
R^{\MRPsign}(\tau) = r_{x_{\top} \rightarrow x_1} + \sum_{h=1}^H r_{x_{h} \rightarrow x_{h+1}} + r_{x_{H} \rightarrow x_\bot}. 
\end{align*} 
Furthermore, in all the MRPs that we consider in herein, we have \(P_{x_\bot \rightarrow x_\bot} = 1\) and \(r_{x_\bot \rightarrow x_\bot} = 0\). 

\paragraph{Policy-Specific Markov Reward Processes.} 
A key technical tool in our analysis will be policy-specific MRPs that depend on the set \(\reachablestates\) of the states that we have explored so far. Recall that for any policy \(\pi\), \(\statesp_\pi^+ = \statesp_\pi \cup \crl{x_\top, x_\bot}\), \(\SHP{\pi} = \statesp_\pi^+ \cap \reachablestates\) and \(\SRem{\pi} = \statesp_\pi^+  \setminus (\SHP{\pi} \cup \crl{x_\bot})\). We define the expected and the empirical versions of policy-specific MRPs below; see \pref{fig:MRP} for an illustration. 
% Given a set \(\reachablestates\) such that \(x_\top \in \reachablestates\) but \(x_\bot \notin \reachablestates\), recall that for any policy \(\pi\), in our algorithms we define \(\statesp_\pi^+ = \statesp_\pi \cup \crl{x_\top, x_\bot}\), \(\SHP{\pi} = \statesp_\pi^+ \cap \reachablestates\) and \(\SRem{\pi} = \statesp_\pi^+  \setminus \SHP{\pi}\). \ayush{Make MRP and MDP look as same as possible where $M = (\statesp, \cA, H, P, R, \mu)$ !} 
\begin{enumerate}[label=\((\alph*)\)]
    \item \textbf{Expected Version of Policy-Specific MRP.} We define
        \begin{align*}
            \MRPsign^\pi_{\reachablestates} = \mathrm{MRP}(\statesp_\pi^+, P^\pi, r^\pi, H, x_\top, x_\bot)
        \end{align*}
        where 

\begin{enumerate}
    \item \textit{Transition Kernel \(P^\pi\):} For any \(x \in \SHP{\pi}\) and \(x' \in \statesp_\pi^+\), we have 
    \begin{align*}
P_{x\to x'}^\pi =  \EE^\pi\left[\ind{\tau\in \setall{\pi}{x}{x'}} \mid x_h = x\right], \numberthis \label{eq:policy_MRP_dynamics} 
\end{align*}
where the expectation above is w.r.t.~the  trajectories drawn using \(\pi\) in the underlying MDP, and \(h\) denotes the time step such that \(x \in \statesp_h\) (again, in the underying MDP). Thus, the transition $P_{x\to x'}^\pi$ denotes the probability of taking policy $\pi$ from $x$ and directly transiting to $x'$ without visiting any other states in $\statesp_\pi$. Furthermore, \(P^\pi_{x \rightarrow x'} = \ind{x' = x_\bot}\) for all \(x \in \SRem{\pi}\cup \{x_\bot\}\). 

\item \textit{Reward Kernel \(r^\pi\):} 
 For any \(x \in \SHP{\pi}\) and \(x' \in \statesp_\pi^+\), we have 
\begin{align*}
r_{x\to x'}^\pi & \coloneqq \EE^\pi\brk*{R(\tau_{h:h'})\ind{\tau\in \setall{\pi}{x}{x'}} \mid x_h = x}, \numberthis \label{eq:policy_MRP_reward} 
\end{align*} 
where \(R(\tau_{h:h'})\) denotes the reward for the partial trajectory \(\tau_{h:h'}\) in the underlying MDP. The reward $r_{x\to x'}^\pi$ denotes the expectation of rewards collected by taking policy $\pi$ from $x$ and directly transiting to $x'$ without visiting any other states in $\statesp_\pi$. Furthermore, \(r^\pi_{x \rightarrow x'} = 0\) for all \(x \in \SRem{\pi}\cup \{x_\bot\}\). 
\end{enumerate}
Throughout the analysis, we use $\Pr^\MRPsign[\cdot] \coloneqq \Pr^{\MRPsign^{\pi}_{\reachablestates}}[\cdot]$ and $\En^\MRPsign[\cdot] \coloneqq \En^{\MRPsign^{\pi}_{\reachablestates}}[\cdot]$ as a shorthand, whenever clear from the context.

	% where $P_{x\to x'}^\pi, r_{x\to x'}^\pi$ are defined as
	% \begin{equation}\label{eq:policy_MRP}
	% 	\begin{aligned}
	% 		P_{x\to x'}^\pi &\coloneqq \EE_{\tau\sim \pi}\left[\ind{\tau\in \event{x}{x'}{\statesp_\pi}} \mid x_h = x\right],\\
	% r_{x\to x'}^\pi & \coloneqq \EE_{\tau\sim \pi}\left[R(\tau_{h:h'})\ind{\tau\in \event{x}{x'}{\statesp_\pi}} \mid x_h = x\right],  \quad \forall x\in \SHP{\pi}, x'\in \statesp_\pi\cup\{x_\bot\}\\
	% 	\end{aligned}
	% \end{equation} 

\medskip

\item \textbf{Empirical Version of Policy-Specific MRPs.}  Since the learner only has sampling access to the underlying MDP, it can not directly construct the MRP \(\MRPsign^\pi_{\reachablestates}\). Instead, in  \pref{alg:main}, the learner constructs an empirical estimate for \(\MRPsign^\pi_{\reachablestates}\), defined as
    \begin{align*}
        \widehat \MRPsign^\pi_{\reachablestates} = \mathrm{MRP}(\statesp_\pi^+, \widehat P^\pi, \widehat r^\pi, H, x_\top, x_\bot)
    \end{align*}
    where  

%Given a set \(\reachablestates\) such that \(x_\top \in \reachablestates\) but \(x_\bot \notin \reachablestates\), recall that, for any policy \(\pi\), \(\statesp_\pi^+ = \statesp_\pi \cup \crl{x_\top, x_\bot}\), \(\SHP{\pi} = \statesp_\pi^+ \cap \reachablestates\) and \(\SRem{\pi} = \statesp_\pi^+  \setminus \SHP{\pi}\). 
%In \pref{alg:main} and \pref{alg:eval}, we define an empirical Markov Reward Process \(\widehat \MRPsign^\pi_{\reachablestates} = \mathrm{MRP}(\statesp_\pi^+, \widehat P^\pi, \widehat r^\pi, H, x_\top, x_\bot)\) where 

\begin{enumerate}[label=\(\bullet\)]
    \item \textit{Transition Kernel \(\widehat P^\pi\):} For any \(x \in \SHP{\pi}\) and \(x' \in \statesp_\pi^+\), we have %\ascomment{Fill similar to the rewards below:} 
\begin{align*} 
	\widehat{P}_{x\to x'}^\pi = \frac{|\Picore|}{|\mD_x|} \sum_{\tau\in \mD_x}\frac{\ind{\pi\cons\tau_{h:h'}}}{\sum_{\pi'\in \Picore}\ind{\pi' \cons\tau_{h:h'}}}\ind{\tau\in \setall{\pi}{x}{x'} }, \numberthis \label{eq:empirical_MRP_dynamics}
\end{align*}
	where \(\Picore\) denotes the core of the sunflower corresponding to \(\Pi\) and \(\cD_x\) denotes a dataset of trajectories collected via \(\datacollector(x, \pi_x, \Picore, n_2)\). Furthermore, \(\widehat{P}^\pi_{x \rightarrow x'} = \ind{x' = x_\bot}\) for all \(x \in \SRem{\pi}\cup \{x_\bot\}\). 
    \item \textit{Reward Kernel \(\widehat r^\pi\):} 
 For any \(x \in \SHP{\pi}\) and \(x' \in \statesp_\pi^+\), we have 
\begin{align*} 
	\widehat{r}_{x\to x'}^\pi = \frac{|\Picore|}{|\mD_x|} \sum_{\tau\in \mD_x}\frac{\ind{\pi\cons\tau_{h:h'}}}{\sum_{\pi'\in \Picore}\ind{\pi' \cons\tau_{h:h'}}}\ind{\tau\in \setall{\pi}{x}{x'} }R(\tau_{h:h'}), \numberthis \label{eq:empirical_MRP_rewards} 
\end{align*}
% \begin{align*} 
% 	\widehat{r}_{x\to x'}^\pi = \frac{|\Picore|}{|\mD_x|}\sum_{\tau\in \mD_x}\frac{\ind{\pi\cons\tau_{h:h'}}}{\sum_{\pi'\in \Picore}\ind{\pi_e\cons\tau_{h:h'}}}\ind{\tau\in \event{x}{x'}{\statesp_{\pi}}}R(\tau_{h:h'}), 1 \numberthis \label{eq:empirical_MRP_rewards}
% \end{align*}
where \(\Picore\) denotes the core of the sunflower corresponding to \(\Pi\), \(\cD_x\) denotes a dataset of trajectories collected via \(\datacollector(x, \pi_x, \Picore, n_2)\), and  $R(\tau_{h:h'}) = \sum_{i=h}^{h'-1} r_i$. Furthermore, \(\wh r^\pi_{x \rightarrow x'} = 0\) for all \(x \in \SRem{\pi}\).
%\ayush{Do we need both of the indicators above?} 
\end{enumerate}
The above approximates the MRP given by \pref{eq:emp-transitions} and \pref{eq:emp-rewards}.
\end{enumerate}

\begin{figure}[!t]
    \centering
    \includegraphics[scale=0.44, trim={0cm 12cm 41cm 0cm}, clip]{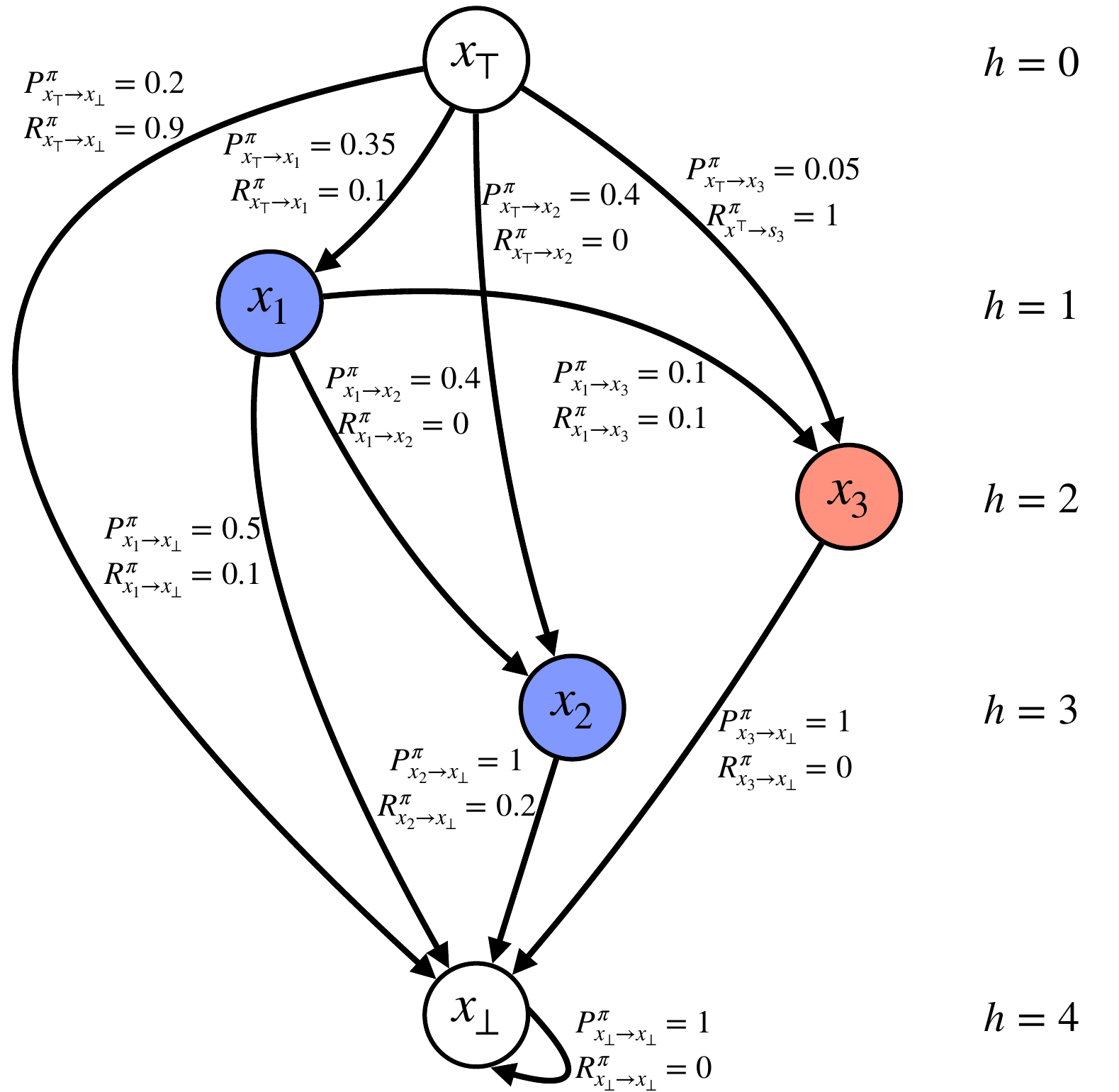}
    \caption{Illustration of an MRP $\MRPsign^\pi_{\reachablestates}$ with $\statesp_\pi = \crl{x_1, x_2, x_3}$ and $\reachablestates = \crl{x_1, x_2}$. In the original MDP $M$, $x_1 \in \statesp_1$, $x_2 \in \statesp_3$, and $x_3 \in \statesp_2$. The edges are labeled with the values of $P^\pi_{x \to x'}$ and $R^\pi_{x \to x'}$. Notice that (1) there are no edges from $x_2 \to x_3$ or $x_3 \to x_2$ because trajectories cannot go from later layers to earlier ones; (2) since $x_3 \notin \reachablestates$, there is no edge from $x_3 \to x_2$, and instead we have $P^\pi_{x_3 \to x_\bot} = 1$ and $R^\pi_{x_3 \to x_\bot} = 0$; (3) $x_\bot$ is an absorbing state with no rewards.} 
    \label{fig:MRP}
\end{figure}

\paragraph{Properties of Trajectories in the Policy-Specific MRPs.} We state several properties of trajectories in the policy-specific MRPs, which will be used in the proofs. Let \(\tau = (x_\top, x_1, \cdots, x_{H}, x_\bot)\) denote a trajectory from either $\MRPsign^\pi_{\reachablestates}$ or $\widehat \MRPsign^\pi_{\reachablestates}$.  
\begin{enumerate}[label=\((\arabic*)\)] 
    \item For some $k \leq H$ we have $x_1, \cdots, x_k \in \statesp_\pi$ and $x_{k+1} = \cdots = x_H = x_\bot$ (if $k = H$ we say the second condition is trivially met). 
    \item Each state in $x_1, \cdots, x_k$ is unique.
    \item Letting $\mathsf{h}(x)$ denote the layer that a state $x \in \statesp_\pi$ is in, we have $\mathsf{h}(x_1) < \cdots < \mathsf{h}(x_k)$.
    \item Either (a) $x_1, \cdots, x_k \in \SHP{\pi}$, or (b) $x_1, \cdots, x_{k-1} \in \SHP{\pi}$ and $x_k \in \SRem{\pi}$.
\end{enumerate}

\iffalse
\ascomment{Older stuff:  
		\begin{align*}
			\wh P_{x\to x'}^\pi &\coloneqq \EE_{\tau\sim \pi}\left[\ind{\tau\in \event{x}{x'}{\statesp_\pi}} \mid x_h = x\right],\\
	\wh r_{x\to x'}^\pi & \coloneqq \EE_{\tau\sim \pi}\left[R(\tau_{h:h'})\ind{\tau\in \event{x}{x'}{\statesp_\pi}} \mid x_h = x\right],  \quad \forall x\in \SHP{\pi}, x'\in \statesp_\pi\cup\{x_\bot\} \numberthis \label{eq:empirical_MRP_dynamics} 
		\end{align*}
For any state $x \in \statesp\cup\{x_\top\}$, policy \(\pi \in \Pi\), exploration set \(\Picore\), and dataset \(\cD_x\) of trajectories, we define for an empirical estimate of the reward ${r}_{x\to x'}^\pi$ for any $x'\in \statesp\cup\{x_\bot\}$, as: 
\begin{equation}\label{eq:reward}
	\widehat{r}_{x\to x'}^\pi = \frac{|\Picore|}{|\mD_x|}\sum_{\tau\in \mD_x}\frac{\ind{\pi\cons\tau_{h:h'}}}{\sum_{\pi'\in \Picore}\ind{\pi_e\cons\tau_{h:h'}}}\ind{\tau\in \event{x}{x'}{\statesp_{\pi}}}R(\tau_{h:h'}),
\end{equation}
where \(h\) and \(h'\) are the layers corresponding to \(x\) and \(x'\) respectively, and for any trajectory $\tau = (x_1, a_1, r_1, \cdots, x_H, a_H, r_H)$, we define $R(\tau_{h:h'}) = \sum_{i=h}^{h'-1} r_i$. 
} 
\fi

\paragraph{Parameters Used in \pref{alg:main}.}  Here, we list all the parameters that are used in \pref{alg:main} and its subroutines: 
\begin{align*}
n_1 &= C_1 \frac{(D+1)^4K^2\log(|\Pi|(D+1)/\delta)}{\epsilon^2}, \\
 n_2 &= C_2 \frac{D^3(D+1)^2K^2\log(|\Pi| (D+1)^2/\delta)} {\epsilon^3}, \numberthis \label{eq:alg_parameter}
\end{align*}
where $C_1, C_2 > 0$ are absolute numerical constants, which will be specified later in the proofs. 

\begin{comment} 
\begin{definition}
	We define the following events: for any trajectory $\tau = (x_1, a_1, \cdots, x_H, a_H)$, $x\in \statesp_{h_1}, x'\in\statesp_{h_2}$ and $\bar{\statesp}\subset \statesp$, we say
$$\tau\in \event{x}{x'}{\bar{\statesp}}$$
if $x_{h_1} = x, x_{h_2} = x'$, and for any $h_1 < h < h_2$, $x_{h}\not\in \bar{\statesp}$. If $h_1\ge h_2$, $\event{x}{x'}{\bar{\statesp}}$ will always be empty. We further define cases for $x = x_\top$, $\tau \in \event{x}{x'}{\bar{\statesp}}$ if and only if $x_{h_2} = x'$ and for any $h < h_2$, $x_h\not\in \bar{\statesp}$; and also for $x' = x_\bot$, $\tau\in \event{x}{x'}{x}$ if and only if $x_{h_1} = x$ and for any $h > h_1$, $x_h\not\in \bar{\statesp}$.
\end{definition}
\par For any $x\in\statesp$ and $\bar{x}\subset \statesp$, we use $\bar{d}(s, \bar{x})$ to denote
\begin{equation}\label{eq:dbar}\bard{\pi}{x}{\bar{x}}\coloneqq P^{\pi, M}[\tau: \tau\in \event{x_\top}{x}{\bar{x}}].\end{equation}
\par The empirical rewards calculated in \pref{alg:eval} is: for $x\in\statesp\cup\{x_\top\}, x'\in \statesp\cup\{x_\bot\}$, if $x, x'$ are in layer $h$ and $h'$, 
\begin{equation}\label{eq:reward}
	\widehat{r}_{x\to x'}^\pi = \frac{|\Picore|}{|\mD_x|}\sum_{\tau\in \mD_x}\frac{\ind{\pi\cons\tau_{h:h'}}}{\sum_{\pi'\in \Picore}\ind{\pi_e\cons\tau_{h:h'}}}\ind{\tau\in \event{x}{x'}{\statesp_{\pi}}}R(\tau_{h:h'}),
\end{equation}
where $R(\tau_{h:h'}) = \sum_{i=h}^{h'-1} r_i(\tau)$ with $\tau = (x_1, a_1, r_1, x_2, a_2, \cdots, x_H, a_H, r_H)$.
\end{comment}

\subsubsection{Supporting Technical Results}
We start by stating the following variant of the classical  simulation lemma \citep{kearns2002near, agarwal2019reinforcement}.

\begin{lemma}[Simulation Lemma {\citep[Lemma F.3]{foster2021statistical}}]\label{lem:sim}
Let \(\MRPsign = (\statesp, P, r, H, x_\top, x_\bot)\) be a Markov Reward Process. Then, the empirical version \(\widehat \MRPsign = (\statesp, \widehat P, \widehat r, H, x_\top, x_\bot)\) corresponding to \(\MRPsign\) satisfies: 
%	Suppose $V = \mathrm{DP}(\statesp, P, r)$ and $\widehat{V} = \mathrm{DP}(\statesp, \widehat{P}, \widehat{r})$, where we assume $P$ to be a transition model ($\forall x\in\statesp, \sum_{x'\in\statesp} P_{x\to x'} = 1$) and $\widehat{P}$ doesn't need need to be, then we have
	$$|V_{\mathrm{MRP}} - \widehat{V}_{\mathrm{MRP}}|\le \sum_{x\in \statesp} d_{\MRPsign}(x)\cdot \left(\sum_{x'\in\statesp} |P_{x\to x'} - \widehat{P}_{x\to x'}| + \left|r_{x\to x'} - \widehat{r}_{x\to x'}\right|\right),$$
	% \zeyu{Actually we only need the following inequality in the proof, and we don't need to deal with $d(x)$. Shall we change into this}
	% $$|V - \widehat{V}|\le |\statesp|\cdot \sup_{s, x'\in\statesp} \left(|P_{x\to x'} - \widehat{P}_{x\to x'}| + \left|r_{x\to x'} - \widehat{r}_{x\to x'}\right|\right),$$
	where $d_{\MRPsign}(x)$ is the probability of reaching $x$ under $\MRPsign$, and \(V_{\mathrm{MRP}}\) and \(\widehat V_{\mathrm{MRP}}\) denotes the value of \(x_\top\) under \(\MRPsign\) and \(\widehat{\MRPsign}\) respectively. 
\end{lemma}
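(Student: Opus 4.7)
I would follow the standard performance-difference / backward-induction argument. First, for each time step $h \in \{0, 1, \ldots, H+1\}$, introduce the step-$h$ value functions
\begin{align*}
V_h^\MRPsign(x) &= \sum_{x'\in\statesp} P_{x\to x'}\bigl(\En[r_{x\to x'}] + V_{h+1}^\MRPsign(x')\bigr), \\
V_h^{\widehat\MRPsign}(x) &= \sum_{x'\in\statesp} \widehat P_{x\to x'}\bigl(\En[\widehat r_{x\to x'}] + V_{h+1}^{\widehat\MRPsign}(x')\bigr),
\end{align*}
with terminal condition $V_{H+2}^\MRPsign(x) = V_{H+2}^{\widehat\MRPsign}(x) = 0$ and absorbing state $P_{x_\bot \to x_\bot} = 1$. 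By construction $V_\MRPsign = V_0^\MRPsign(x_\top)$ and $\widehat V_\MRPsign = V_0^{\widehat\MRPsign}(x_\top)$.

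The key identity is a one-step add-and-subtract that isolates the three sources of error at state $x$:
\begin{align*}
V_h^\MRPsign(x) - V_h^{\widehat\MRPsign}(x)
&= \sum_{x'}\bigl(P_{x\to x'} - \widehat P_{x\to x'}\bigr)\bigl(\widehat r_{x\to x'} + V_{h+1}^{\widehat\MRPsign}(x')\bigr) \\
&\quad + \sum_{x'} P_{x\to x'}\bigl(r_{x\to x'} - \widehat r_{x\to x'}\bigr) \\
&\quad + \sum_{x'} P_{x\to x'}\bigl(V_{h+1}^\MRPsign(x') - V_{h+1}^{\widehat\MRPsign}(x')\bigr).
\end{align*}
The last term propagates the error to the next step, while the first two are local. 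Since the underlying MDP rewards satisfy $\sum_h r_h \in [0,1]$ almost surely, one has $|\widehat r_{x\to x'} + V_{h+1}^{\widehat\MRPsign}(x')| \le 1$, so after taking absolute values the transition-error term loses its value-function multiplier.

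Unrolling the recursion from $h=0$ and taking expectations over trajectories of $\MRPsign$ gives
\begin{align*}
|V_\MRPsign - \widehat V_\MRPsign| \le \En^\MRPsign\!\left[\sum_{h=0}^{H+1} \sum_{x'} \bigl|P_{x_h \to x'} - \widehat P_{x_h \to x'}\bigr| + P_{x_h \to x'}\bigl|r_{x_h \to x'} - \widehat r_{x_h \to x'}\bigr|\right].
\end{align*}
Regrouping the expectation as a sum over $d_\MRPsign(x) = \sum_h \Pr^\MRPsign[x_h = x]$, and upper bounding $P_{x\to x'} \le 1$ on the reward term, yields exactly the claimed inequality.

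The main obstacle is justifying the removal of the $V_{h+1}$ multiplier from the transition-error term; this is where the specific normalization of our policy-specific MRPs matters, since rewards $r_{x\to x'}^\pi$ are aggregated over several layers of the underlying MDP. The bound $\widehat r_{x\to x'} + V_{h+1}^{\widehat\MRPsign}(x') \le 1$ ultimately follows from $\sum_h r_h \in [0,1]$ a.s., together with the fact that each trajectory in $\widehat\MRPsign$ corresponds to a single partial trajectory in the MDP whose total reward is at most $1$. All other steps are routine algebra and the law of total expectation.
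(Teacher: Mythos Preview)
The paper does not supply its own proof of this statement; the lemma is quoted with a citation to \cite[Lemma F.3]{foster2021statistical} and used as a black box in the proofs of Lemmas~\ref{lem:identify} and~\ref{lem:eval_error}. There is therefore nothing in the paper to compare your argument against.

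Your backward-induction argument with the one-step add-and-subtract decomposition is the standard proof of simulation-lemma results and is correct. The concern you flag --- that one needs $|\widehat r_{x\to x'} + V_{h+1}^{\widehat\MRPsign}(x')|\le 1$ to strip the value-function multiplier off the transition-error term --- is indeed the only nontrivial step, and you have identified it properly. In the abstract statement it follows from the cumulative-reward normalization $\sum_h r_h\in[0,1]$; for the paper's specific empirical MRPs (Eqs.~\eqref{eq:empirical_MRP_dynamics}--\eqref{eq:empirical_MRP_rewards}) the importance-weighted estimates $\widehat P$, $\widehat r$ are not a priori bounded by $1$, but the lemma is only ever applied after Lemma~\ref{lem:dynamics_error} has established that each $\widehat P_{x\to x'}$ and $\widehat r_{x\to x'}$ is within $O(\eps)$ of its true counterpart, so the value functions remain bounded and the argument goes through.
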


% \subsubsubsection{Proof of \pref{lem:dynamics_error}} 

The following technical lemma shows that for any policy \(\pi\), the empirical version of policy-specific MRP closely approximates its expected version. 
\begin{lemma}\label{lem:dynamics_error} 
Let \pref{alg:main} be run with the parameters given in Eq.~\pref{eq:alg_parameter}, and consider any iteration of the while loop in \pref{line:while_loop} with the instantaneous set  \(\reachablestates\). Further, suppose that $|\mD_x|\ge \tfrac{\epsilon n_2}{24D}$ for all $x\in \reachablestates$. Then, with probability at least $1 - \delta$, the following hold: 
\begin{enumerate}[label=\((\alph*)\)] 
\item For all \(\pi \in \Pi\), \(x \in \SHP{\pi}\) and \(x' \in \statesp_\pi \cup \crl{x_\bot}\),
\begin{align*}
	\max\crl*{|P_{x\to x'}^\pi - \widehat{P}_{x\to x'}^\pi|, \left|r_{x\to x'}^\pi - \widehat{r}_{x\to x'}^\pi\right|} &\leq  \frac{\epsilon}{12D(D+1)}. 
\end{align*}
\item For all \(\pi \in \Pi\) and \(x' \in \statesp_\pi \cup \crl{x_\bot}\),
\begin{align*}
		\max \crl{|P_{x_\top\to x'}^\pi - \widehat{P}_{x_\top\to x'}^\pi|, |r^\pi_{x_{\top} \to x'} - \widehat{r}^\pi_{x_{\top} \to x'}|} \le \frac{\epsilon}{12(D+1)^2}.  
\end{align*} 
\end{enumerate}
\end{lemma}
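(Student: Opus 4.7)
The plan is to write each empirical quantity as the average of independent, bounded, \emph{unbiased} importance-sampling estimators of the corresponding population quantity, apply Hoeffding's inequality, and then take a union bound over all $|\Pi|\cdot(D+1)^2$ relevant triples $(\pi,x,x')$, tuning the constants in \(n_1, n_2\) in \eqref{eq:alg_parameter} to obtain the stated accuracies.

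For part $(a)$, fix $\pi\in\Pi$, $x\in\SHP{\pi}$ (at layer $h$) and $x'\in\statesp_\pi\cup\{x_\bot\}$ (at layer $h'$). By construction of $\datacollector(x,\pi_x,\Picore,n_2)$, conditioned on the reacher $\pi_x$ successfully delivering the learner to $x$, the suffix $\tau_{h:H}$ of each $\tau\in\cD_x$ is generated by executing $\pi''\sim\unif(\Picore)$ from $x$. The per-trajectory IS weight is
\[
W_\tau \;=\; \frac{|\Picore|\,\ind{\pi\cons\tau_{h:h'}}}{\sum_{\pi''\in\Picore}\ind{\pi''\cons\tau_{h:h'}}}\,\ind{\tau\in\setall{\pi}{x}{x'}},
\]
so that $\widehat P^\pi_{x\to x'}=|\cD_x|^{-1}\sum_{\tau\in\cD_x} W_\tau$ and $\widehat r^\pi_{x\to x'}=|\cD_x|^{-1}\sum_{\tau\in\cD_x}R(\tau_{h:h'})\,W_\tau$. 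A direct computation gives $\En[W_\tau\mid x_h=x]=\frac{1}{|\Picore|}\sum_{\pi''}\En^{\pi''}[W_\tau\mid x_h=x]=\sum_{\tau_{h:h'}}P(\tau_{h:h'})\ind{\pi\cons\tau_{h:h'}}\ind{\tau\in\setall{\pi}{x}{x'}}=P^\pi_{x\to x'}$; the analogous identity with an extra factor of $R(\tau_{h:h'})\in[0,1]$ produces $r^\pi_{x\to x'}$.

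For these estimators to be bounded (by $|\Picore|\leq K$), we need $\sum_{\pi''}\ind{\pi''\cons\tau_{h:h'}}\geq 1$ whenever both numerator indicators are $1$. This is exactly where the sunflower/petal structure of \pref{def:core_policy}--\ref{def:petal_policy} is used: on the event $\{\pi\cons\tau_{h:h'}\}\cap\{\tau\in\setall{\pi}{x}{x'}\}$, the partial trajectory is consistent with $\pi$ and contains no state of $\statesp_\pi$ strictly between layers $h$ and $h'$, so \pref{def:petal_policy} forces some $\pi''\in\Picore$ to be consistent with $\tau_{h:h'}$, making the denominator at least $1$. Applying Hoeffding to the bounded, unbiased variables $W_\tau$ (resp.\ $R(\tau_{h:h'})W_\tau$), one obtains, for any $\delta'>0$,
\[
\max\crl*{|\widehat P^\pi_{x\to x'}-P^\pi_{x\to x'}|,|\widehat r^\pi_{x\to x'}-r^\pi_{x\to x'}|}\;\leq\;K\sqrt{\frac{\log(4/\delta')}{2|\cD_x|}}
\]
with probability at least $1-\delta'$. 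Substituting $|\cD_x|\geq \epsilon n_2/(24D)$ and the value of $n_2$ from \eqref{eq:alg_parameter} with a sufficiently large constant $C_2$, and taking $\delta'=\delta/(|\Pi|(D+1)^2)$, the right-hand side is at most $\epsilon/(12D(D+1))$. A union bound over $\pi\in\Pi$, $x\in\SHP{\pi}$ ($\leq D+1$ choices) and $x'\in\statesp_\pi\cup\{x_\bot\}$ ($\leq D+1$ choices) proves $(a)$.

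Part $(b)$ is identical in structure: $\cD_\top$ consists of $n_1$ trajectories generated by $\pi''\sim\unif(\Picore)$ from the start, the estimator is again unbiased and bounded by $K$ (with the petal property now applied to the full prefix $\tau_{1:h'}$), and plugging $n_1$ from \eqref{eq:alg_parameter} with $\delta'=\delta/(|\Pi|(D+1))$ into the Hoeffding bound yields the sharper threshold $\epsilon/(12(D+1)^2)$. The main obstacle in the argument is the correct invocation of the petal property at the layer boundaries: one must carefully handle the cases $x\in\statesp_\pi$ vs.\ $x=x_\top$, $x'\in\statesp_\pi$ vs.\ $x'=x_\bot$, and the convention for what $\pi\cons\tau_{h:h'}$ means at the endpoint layer $h'$ (in particular, consistency is on actions at layers $h,\dots,h'-1$), so that the ``no intermediate visits to $\statesp_\pi$'' clause of $\setall{\pi}{x}{x'}$ matches exactly the hypothesis of \pref{def:petal_policy} and produces a nonempty set of consistent core policies. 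Once this endpoint bookkeeping is settled, the remainder is the routine Hoeffding-and-union-bound computation sketched above.
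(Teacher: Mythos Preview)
Your proposal is correct and follows essentially the same approach as the paper: show that each importance-sampling term is an unbiased estimator of the target transition/reward, invoke the sunflower/petal property (\pref{def:petal_policy}) to bound the IS weight by $K$, apply Hoeffding's inequality, and take a union bound over the $O(|\Pi|(D+1)^2)$ triples to obtain the stated accuracies from the choices of $n_1,n_2$ in \eqref{eq:alg_parameter}. You even flag the endpoint bookkeeping for $\tau_{h:h'}$ versus $\tau_{h:h'-1}$ in applying the petal property---a subtlety the paper passes over---so your write-up is, if anything, slightly more careful on that point.
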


In the sequel, we define the event that the conclusion of \pref{lem:dynamics_error} holds as $\cE_\mathrm{est}$. 

\begin{proof} Fix any \(\pi \in \Pi\). We first prove the bound for $x\in \SHP{\pi}$. Let \(x\) be at layer \(h\). 
Fix any policy \(\pi \in \Pi\), and consider any state \(x' \in \statesp_\pi \cup \crl{x_{\bot}}\), where \(x'\) is at layer \(h'\). % Recall that \(\event{x}{x'}{\statesp_{\pi}}\) denotes the set of all the trajectories that go from the state \(x\) (at timestep \(h\))  to the state \(x'\) (at timestep \(h'\)) without passing through  any state in \(\statesp_\pi\) in between. For the ease of notation, in the rest of the proof, we define \(\setall{\pi}{x}{x'}\) as the set of all the trajectories \(\tau \in \event{x}{x'}{\statesp_{\pi}}\) that are consistent with \(\pi\) between time steps \(h\) and \(h'\), i.e.~\(\pi \cons \tau_{h:h'}\). 
Note that since \(\Pi\) is a \((K, D)\)-sunflower, with its core \(\Picore\) and petals \({\crl{\statesp_\pi}}_{\pi \in \Pi}\), we must have that any trajectory \(\tau \in \setall{\pi}{x}{x'}\) is also consistent with at least one \(\pi_e \in \Picore\). Furthermore, for any such \(\pi_e\), we have 
	\begin{align*}
		\Pr^{\pi_e}\brk*{\tau_{h:h'} \mid x_h=x} &= \prod_{i=h}^{h'-1} P\brk*{x_{i+1} \mid x_i, \pi_e(x_i), x_h = x} \\ 
		&= \prod_{i=h}^{h'-1} P\brk*{x_{i+1} \mid x_i, \pi(x_i), x_h = x} = \Pr^{\pi} \brk*{\tau_{h:h'} \mid x_h=x},  \numberthis \label{eq:pi-pie} 
	\end{align*} 
	where the second line holds because both \(\pi \cons \tau_{h:h'}\) and \(\pi_e \cons \tau_{h:h'}\). Next, recall from Eq.~\eqref{eq:policy_MRP_dynamics}, that 
\begin{align*}
P_{x\to x'}^\pi &= \EE^\pi \brk*{\ind{\tau \in \setall{\pi}{x}{x'}} \mid x_h = x }.  \numberthis \label{eq:original_MRP_estimate} 
\end{align*}	 

Furthermore, from Eq.~\eqref{eq:empirical_MRP_dynamics}, recall that the empirical estimate \(\widehat{P}_{x\to x'}^\pi \) of \({P}_{x\to x'}^\pi \) is given by :
\begin{align*}
\widehat{P}_{x\to x'}^\pi = \frac{1}{|\mD_x|}\sum_{\tau\in \mD_x}\frac{\ind{ \tau \in \setall{\pi}{x}{x'} }}{\frac{1}{|\Picore|}\sum_{\pi_e\in \Picore}\ind{\pi_e\cons\tau_{h:h'}}},  \numberthis \label{eq:empirical_estimate} 
\end{align*}
where the dataset \(\cD_x\) consists of i.i.d.~samples, and is collected in  lines \ref{line:is1}-\ref{line:is2} in \pref{alg:sample} ($\datacollector$), by first running the policy \(\pi_x\) for \(h\) timesteps and if the trajectory reaches \(x\), then executing \(\pi_e \sim \unif(\Picore)\) for the remaining time steps (otherwise this trajectory is rejected). Let the law of this process be \(q\). We thus note that,
\begin{align*}
\hspace{0.4in} &\hspace{-0.4in} \En_{\tau\sim q} \brk*{\widehat{P}_{x\to x'}^\pi} \\  
&= \En_{\tau \sim q} \brk*{\frac{\ind{\tau \in \setall{\pi}{x}{x'}}}{\frac{1}{|\Picore|}\sum_{\pi_e\in \Picore}\ind{\pi_e\cons\tau_{h:h'}}} \mid x_h = x}    \\ 
&= \sum_{\tau \in \setall{\pi}{x}{x'}} \Pr_q \brk*{\tau_{h:h'} \mid x_h = x} \cdot \frac{1}{\frac{1}{|\Picore|}\sum_{\pi_e\in \Picore} \ind{\pi_e\cons\tau_{h:h'}}}  \\
&\overeq{\proman{1}}  \sum_{\tau \in \setall{\pi}{x}{x'}} \frac{1}{|\Picore|}\sum_{\pi'_e\in\Picore}\ind{\tau \in \setall{\pi'_e}{x}{x'}} \Pr^{\pi'_e} \brk*{\tau_{h:h'}  \mid x_h=x} \\
&\hspace{12em}\cdot \frac{1}{\frac{1}{|\Picore|}\sum_{\pi_e\in \Picore} \ind{\pi_e\cons\tau_{h:h'}}}  \\ 
&\overeq{\proman{2}}  \sum_{\tau \in \setall{\pi}{x}{x'}} \frac{1}{|\Picore|}\sum_{\pi'_e\in\Picore}\Pr^{\pi} \brk*{\tau_{h:h'} \mid x_h=x}  \cdot \frac{\ind{\pi'_e \cons \tau_{h:h'}}}{\frac{1}{|\Picore|}\sum_{\pi_e\in \Picore}  \ind{\pi_e\cons\tau_{h:h'}}}  \\ 
&= \sum_{\tau \in \setall{\pi}{x}{x'}} \Pr^{\pi}\brk*{\tau_{h:h'} \mid x_h=x}   \\  
&\overeq{\proman{3}}  \EE^\pi \brk*{\ind{\tau \in \setall{\pi}{x}{x'}} \mid x_h = x } =  P_{x\to x'}^\pi, 
\end{align*} 
where \(\proman{1}\) follows from the sampling strategy in \pref{alg:sample} after observing \(x_h = x\), and \(\proman{2}\) simply uses the relation \pref{eq:pi-pie} since both \(\pi'_e \cons \tau_{h:h'}\) and \(\pi \cons \tau_{h:h'}\) hold. Finally, in \(\proman{3}\), we use the relation \pref{eq:original_MRP_estimate}.

We have shown that \(\widehat{P}_{x\to x'}^\pi\) is an unbiased estimate of \(P^\pi_{x \to x'}\) for any \(\pi\) and \(x, x' \in \statesp_\pi^+\). Thus, using Hoeffding's inequality (\pref{lem:hoeffding}), followed by a union bound, we get that with probability at least \(1 - \delta/4\), for all \(\pi \in \Pi\), \(x \in \SHP{\pi}\),  and \(x' \in \statesp_\pi \cup \crl{x_\bot}\), 
\begin{align*}
|\widehat{P}_{x\to x'}^\pi - P_{x\to x'}^\pi|\le K\sqrt{\frac{2\log(4 |\Pi|D(D+1)/\delta)}{|\mD_x|}}, 
\end{align*}
where the additional factor of \(K\) appears because for any \(\tau \in \setall{\pi}{x}{x'}\), there must exist some \(\pi_e \in \Picore\) that is also consistent with \(\tau\) (as we showed above), which implies that each of the terms in Eq.~\eqref{eq:empirical_estimate} satisfies the bound a.s.: 
\begin{align*}
\abs*{\frac{\ind{ \tau \in \setall{\pi}{x}{x'} }}{\tfrac{1}{|\Picore|}\sum_{\pi_e\in \Picore}\ind{\pi_e\cons\tau_{h:h'}}}} &\leq \abs{\Picore} = K. 
\end{align*} 

Since $|\mD_x|\ge \tfrac{\epsilon n_2}{24D}$ by assumption, we have 
\begin{align*}
|\widehat{P}_{x\to x'}^\pi - P_{x\to x'}^\pi|\le  K\sqrt{\frac{48D\log(4|\Pi|D(D+1)/\delta)}{\epsilon n_2}}.
\end{align*}

Repeating a similar argument for the empirical reward estimation in Eq.~\pref{eq:empirical_MRP_dynamics}, we get that with probability at least \(1 - \delta/4\), for all \(\pi \in \Pi\), and \(x \in \SHP{\pi}\) and \(x' \in \statesp_\pi \cup \crl{x_\bot}\), we have that 
\begin{align*}
|\widehat{r}_{x\to x'}^\pi - r_{x\to x'}^\pi|\le  K\sqrt{\frac{48D\log(4 |\Pi|D(D+1)/\delta)}{\epsilon n_2}}. 
\end{align*}

Similarly, we can also get for any $\pi\in \Pi$ and $x'\in\statesp_\pi\cup\{x_\bot\}$, with probability at least $1 - \delta/2$, 
\begin{align*}
\max\crl*{|\widehat{r}_{x_\top\to x'}^\pi - r_{x_\top\to x'}^\pi|, |\widehat{P}_{x_\top\to x'}^\pi - P_{x_\top\to x'}^\pi|}  &\le K\sqrt{\frac{2\log(4 |\Pi|(D+1)/\delta)}{|\mD_{x_\top}|}} \\ &= K\sqrt{\frac{2\log(4 |\Pi|(D+1)/\delta)}{n_1}},
\end{align*} 
where the last line simply uses the fact that \(\abs{\cD_{x_\top}} = n_1\). 
%and that
%\begin{align*}
%|\widehat{r}_{x_\top\to x'}^\pi - r_{x_\top\to x'}^\pi|\le K\sqrt{\frac{2\log(|\Pi|(D+1)/\delta)}{n_1}}. 
%\end{align*}
%
%	Adopting union bound over all $\pi\in \Pi, x\in\SHP{\pi}, x'\in \{x_\bot\}\cup\statesp_{\pi}$, we know that with our choice of $n_1$ and $n_2$, \eqref{eq:error} holds for all $\pi, x, x'$ holds with probability at least $1 - \delta$.
The final statement is due to a union bound on the above results. This concludes the proof of \pref{lem:dynamics_error}.\end{proof}%\gene{it is a bit awkward going between $K$ and $\abs{\Pi_\mathrm{core}}$ in the proof a bunch of times.} 

\begin{lemma}\label{lem:dbar}
Fix a policy $\pi \in \Pi$ and a set of reachable states $\reachablestates$, and consider the policy-specific MRP $\MRPsign^{\pi}_{\reachablestates}$ (as defined by Eqs.~\eqref{eq:policy_MRP_dynamics} and \eqref{eq:policy_MRP_reward}). Then for any $x\in\SRem{\pi}$, the quantity $\bard{\pi}{x} {\SRem{\pi}} = d^{\MRPsign}(x)$, where $d^{\MRPsign}(x)$ is the occupancy of state $x$ in $\MRPsign^{\pi}_{\reachablestates}$. 
\end{lemma}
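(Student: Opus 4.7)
The plan is to prove the equality by decomposing both sides as a sum over finite $\SHP{\pi}$-valued paths and matching term by term. On the MRP side, because every state in $\SRem{\pi}$ transitions deterministically to the absorbing state $x_\bot$ with zero reward, any trajectory of $\MRPsign^{\pi}_{\reachablestates}$ that reaches $x \in \SRem{\pi}$ must do so via a (possibly empty) sequence $x_\top = y_0 \to y_1 \to \cdots \to y_k \to x$ with each $y_i \in \SHP{\pi} \cap \statesp_\pi$. Moreover, since $P^\pi_{y \to y'}$ is nonzero only when the MDP-layer of $y$ is strictly less than that of $y'$ (by Eq.~\eqref{eq:policy_MRP_dynamics} and the layered structure of $\statesp$), such sequences are automatically ordered by layer and in particular cannot revisit the same state. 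Therefore
\[
d^{\MRPsign}(x) \;=\; \sum_{k \geq 0}\; \sum_{y_1, \ldots, y_k} P^\pi_{x_\top \to y_1} \cdot P^\pi_{y_1 \to y_2} \cdots P^\pi_{y_{k-1} \to y_k} \cdot P^\pi_{y_k \to x},
\]
where the inner sum ranges over all layer-increasing sequences in $\SHP{\pi} \cap \statesp_\pi$.

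On the MDP side, I would expand $\bard{\pi}{x}{\SRem{\pi}}$ from its definition in Eq.~\eqref{eq:dbar} and partition the event $\{\tau \in \event{x_\top}{x}{\SRem{\pi}}\}$ according to the ordered list $(y_1, \ldots, y_k)$ of $\statesp_\pi$-states visited by $\tau$ strictly before $x$. Since $\statesp_\pi = (\SHP{\pi} \cap \statesp_\pi) \cup \SRem{\pi}$ and the trajectory is forbidden from visiting $\SRem{\pi}$ before reaching $x$, every such $y_i$ must lie in $\SHP{\pi} \cap \statesp_\pi$; moreover they are automatically ordered by layer. This produces exactly the same index set as the MRP decomposition above.

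The key technical step is to use the Markov property of the underlying MDP to factor the probability of each cell of the partition. Specifically, conditioning on the trajectory being consistent with $\pi$ and sequentially hitting $y_0, y_1, \ldots, y_k, x$ without any $\statesp_\pi$-visits in between, the Markov property yields
\[
\Pr^\pi\brk*{\tau \in \setall{\pi}{x_\top}{y_1},\, \tau \in \setall{\pi}{y_1}{y_2}, \ldots, \tau \in \setall{\pi}{y_k}{x}} \;=\; \prod_{i=0}^{k} \Pr^\pi\brk*{\tau \in \setall{\pi}{y_i}{y_{i+1}} \,\mid\, x_{h(y_i)} = y_i},
\]
where $y_{k+1} \coloneqq x$, and each conditional factor equals the corresponding MRP transition probability by definition \eqref{eq:policy_MRP_dynamics}. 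Summing over all layer-increasing sequences in $\SHP{\pi} \cap \statesp_\pi$ yields exactly the expansion of $d^{\MRPsign}(x)$ above.

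The main obstacle I anticipate is bookkeeping: carefully verifying that the partitioning on the MDP side is indeed a partition (the sequence of $\statesp_\pi$-visits is well-defined and unique for each trajectory) and that the corresponding MRP paths bijectively enumerate the same index set. Once that bookkeeping is in place, the Markov factorization is standard and the two sums agree term by term, giving $\bard{\pi}{x}{\SRem{\pi}} = d^{\MRPsign}(x)$.
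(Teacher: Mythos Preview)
Your proposal is correct and follows essentially the same approach as the paper: both decompose $d^{\MRPsign}(x)$ as a sum over ordered sequences $(y_1,\ldots,y_k)$ in $\SHP{\pi}\cap\statesp_\pi$, partition the event defining $\bard{\pi}{x}{\SRem{\pi}}$ according to the sequence of $\statesp_\pi$-visits before $x$, and then use the Markov property to factor each term into a product of $P^\pi_{y_i\to y_{i+1}}$'s. The only cosmetic difference is that the paper first fixes a sequence, relates the MRP path probability to the MDP event probability, and then sums, whereas you describe decomposing both sides in parallel and matching term by term; the underlying argument is identical.
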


\begin{proof}
We use $\bar{\tau}$ to denote a trajectory in $\MRPsign^{\pi}_{\reachablestates}$ and $\tau$ to denote a ``corresponding'' (in a sense which will be described shortly) trajectory in the original MDP $M$. For any $x\in\SRem{\pi}$, we have 
\begin{align*}
    d^{\MRPsign}(x) & = \sum_{\bar{\tau} ~\text{s.t.}~s\in \bar{\tau}} \Pr^{\MRPsign} \brk*{\bar{\tau}} = \sum_{k = 0}^{H-1} \sum_{\bar{x}_{1}, \bar{x}_{2}, \cdots, \bar{x}_{k}\in \SHP{\pi}} \Pr^{\MRPsign} \brk*{\bar{\tau} = (x_{\top}, \bar{x}_{1}, \cdots, \bar{x}_{k}, x, x_\bot, \cdots)}.
\end{align*}
The first equality is simply due to the definition of $d^\MRPsign$. For the second equality, we sum up over all possible sequences which start at $x_\top$, pass through some (variable length) sequence of states $\bar{x}_1,\cdots, \bar{x}_k \in \SHP{\pi}$, then reach $x$ and the terminal state $x_\perp$. By definition of the policy-specific MRP, we know that once the MRP transits to a state $x\in\SRem{\pi}$, it must then transit to $x_\bot$ and repeat $x_\bot$ until the end of the episode.

Now fix a sequence $\bar{x}_1,\cdots, \bar{x}_k \in \SHP{\pi}$. We relate the term in the summand to the probability of corresponding trajectories in the original MDP $M$. To avoid confusion, we let $x_{h_1}, \dots, x_{h_k} \in \SHP{\pi}$ denote the corresponding sequence of states in the original MDP, which are unique and satisfy $h_1 < h_2 < \cdots < h_k$. We also denote $x_{h_{k+1}} = x$.

Using the definition of $\MRPsign^\pi_{\reachablestates}$, we write 
\begin{align*} 
\hspace{0.5in}&\hspace{-0.5in} \Pr^{\MRPsign} \brk*{\bar{\tau} = (x_{\top}, \bar{x}_{1}, \cdots, \bar{x}_{k}, x, x_\bot, \cdots)} \\
&= \prod_{i=1}^k \Pr^{M, \pi}\brk*{ \tau_{h_{i}:h_{i+1}} \in \setall{\pi}{x_{h_i}}{x_{h_{i+1}}} \mid \tau[h_i] =x_{h_i} } \\
&= \Pr^{M, \pi}\brk*{ \forall i\in [k+1],~ \tau[h_i] = x_{h_i} ~\text{and}~ \forall h \in [h_{k+1}] \backslash \crl{h_1, \cdots, h_{k+1}},~ \tau[h] \notin \statesp_\pi}.
\end{align*}
Now we sum over all sequences $\bar{x}_1,\cdots, \bar{x}_k \in \SHP{\pi}$ to get \begin{align*}
&d^\MRPsign(x) \\
     &= \sum_{k=0}^{H-1} \sum_{x_{h_1}, \cdots, x_{h_k} \in \SHP{\pi}} \Pr^{M, \pi}\brk*{ \forall i\in [k+1],~ \tau[h_i] = x_{h_i} ~\text{and}~ \forall h \in [h_{k+1}] \backslash \crl{h_1, \cdots, h_{k+1}},~ \tau[h] \notin \statesp_\pi} \\
    &= \Pr^{M, \pi} \brk*{x \in \tau ~\text{and}~ \forall h \in [h_{k+1}-1],~\tau[h] \notin \SRem{\pi}} \\
    &= \Pr^\pi \brk*{\tau \in \event{x_\top}{x}{\SRem{\pi}}} =  \bard{\pi}{x} {\SRem{\pi}}. 
\end{align*}
The second equality follows from the definition of $\SRem{\pi}$, and the last line is the definition of the $\bar{d}$ notation. This concludes the proof of \pref{lem:dbar}.
\end{proof}

%\subsubsubsection{Proof of \pref{lem:identify}} \ayush{Remove this subsection names later on} 
\begin{lemma}\label{lem:identify}
With probability at least $1 - 2\delta$,  any $(\bar{x}, \pi)$ that is added into $\cT$ (in \pref{line:add_s} of \pref{alg:main}) satisfies $d^{\pi}(\bar{x})\ge \nicefrac{\epsilon}{12D}$. 
\end{lemma}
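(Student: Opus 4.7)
The plan is to prove the lemma by induction on the iterations of the while loop in \pref{alg:main}, synchronized with two high-probability events that I will set up in tandem. Define $\cE_{\mathrm{est}}$ to be the event that the uniform bounds of \pref{lem:dynamics_error} hold, and $\cE_{\mathrm{size}}$ to be the event that every invocation of $\datacollector(\bar x, \pi, \Picore, n_2)$ in \pref{line:fresh_dataset} produces $|\cD_{\bar x}| \ge \epsilon n_2/(24D)$ \emph{provided} the inductive hypothesis $d^{\pi}(\bar x) \ge \epsilon/(12D)$ holds. The second event is a Chernoff application: each such dataset collection succeeds with probability $1 - \exp(-\Omega(\epsilon n_2/D))$, and a union bound over the at most $O(D \cdot \dimRL(\Pi)/\epsilon)$ states ever added to $\cT$ (using the cumulative-reachability interpretation of \pref{lem:coverability}) gives $\Pr[\cE_{\mathrm{size}}] \ge 1 - \delta$. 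By construction of $n_1, n_2$ we also have $\Pr[\cE_{\mathrm{est}}] \ge 1 - \delta$, so the intersection holds with probability $\ge 1 - 2\delta$.

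The circularity --- $\cE_{\mathrm{est}}$ requires dataset-size lower bounds, $\cE_{\mathrm{size}}$ requires reachability lower bounds --- will be broken by induction on the number of elements of $\cT$. The base case is immediate since $|\cD_{x_\top}| = n_1 \ge \epsilon n_2/(24D)$. For the inductive step, assume every previously added $(x, \pi_x) \in \cT$ satisfies $d^{\pi_x}(x) \ge \epsilon/(12D)$, so by $\cE_{\mathrm{size}}$ the datasets $\cD_x$ are all large enough and $\cE_{\mathrm{est}}$'s conclusion applies. Consider any candidate $(\bar x, \pi)$ passing the test $\widehat d^\pi(\bar x) \ge \epsilon/(6D)$ in \pref{line:test_and_add}. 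By construction of $\estreach$, the quantity $\widehat d^\pi(\bar x)$ is precisely the occupancy of $\bar x$ in the empirical MRP $\widehat\MRPsign^\pi_{\reachablestates}$ --- equivalently, the value at $x_\top$ under the reward $R(x') = \ind{x' = \bar x}$.

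Applying the Simulation Lemma (\pref{lem:sim}) with these known indicator rewards (so reward-difference terms vanish) gives
\begin{align*}
    \bigl|\widehat d^{\MRPsign^\pi_{\reachablestates}}(\bar x) - d^{\MRPsign^\pi_{\reachablestates}}(\bar x)\bigr| \le \sum_{x} d^{\MRPsign^\pi_{\reachablestates}}(x) \cdot \sum_{x'} \bigl|P^\pi_{x \to x'} - \widehat P^\pi_{x \to x'}\bigr|.
\end{align*}
Under $\cE_{\mathrm{est}}$, the per-entry bounds of \pref{lem:dynamics_error} ($\epsilon/(12(D+1)^2)$ from the $x_\top$ row and $\epsilon/(12D(D+1))$ from each interior row, both having at most $D+1$ nonzero target entries), together with $\sum_{x \in \SHP{\pi}} d^{\MRPsign^\pi_{\reachablestates}}(x) \le |\SHP{\pi}| \le D$, allow me to show the RHS is at most $\epsilon/(12D)$. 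Finally, I invoke \pref{lem:dbar} (which applies since $\bar x \in \SRem{\pi}$) to conclude
\begin{align*}
    d^\pi(\bar x) \ge \bard{\pi}{\bar x}{\SRem{\pi}} = d^{\MRPsign^\pi_{\reachablestates}}(\bar x) \ge \widehat d^\pi(\bar x) - \frac{\epsilon}{12D} \ge \frac{\epsilon}{6D} - \frac{\epsilon}{12D} = \frac{\epsilon}{12D},
\end{align*}
closing the induction.

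The main obstacle will be the coupled conditioning between $\cE_{\mathrm{est}}$ and $\cE_{\mathrm{size}}$, which I handle by the inductive scheme above. A secondary concern is tight bookkeeping in the simulation-lemma step: the gap $\epsilon/(6D) - \epsilon/(12D) = \epsilon/(12D)$ between the test threshold and the claimed reachability lower bound is exactly what the $1/(D(D+1))$ factor in the per-entry bounds of \pref{lem:dynamics_error} is engineered to absorb, and one must carefully aggregate the $x_\top$ contribution and the interior contribution (exploiting $|\SHP{\pi}| \le D$ from the sunflower property) to keep the bound at $\epsilon/(12D)$.
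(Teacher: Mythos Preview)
Your proof follows essentially the same route as the paper's: establish a high-probability event for dataset sizes, proceed by induction on elements added to $\cT$, apply the simulation lemma (\pref{lem:sim}) together with \pref{lem:dbar} to bound $|\widehat d^\pi(\bar x) - \bard{\pi}{\bar x}{\SRem{\pi}}|$, and then use $\bard{\pi}{\bar x}{\SRem{\pi}} \le d^\pi(\bar x)$ to close the induction. One correction: your union bound for $\cE_{\mathrm{size}}$ over ``at most $O(D \cdot \dimRL(\Pi)/\epsilon)$ states ever added to $\cT$'' is circular --- that bound on $|\cT|$ (the content of \pref{lem:while_terminate}(a)) itself requires the conclusion of the present lemma. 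The paper sidesteps this by union-bounding over all $D|\Pi|$ possible $(\bar x, \pi)$ pairs (each $\pi$ has at most $D$ petal states), a deterministic bound that does not depend on the algorithm's execution; the resulting $\log|\Pi|$ factor is already absorbed by the choice of $n_2$ in \eqref{eq:alg_parameter}.
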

\begin{proof} ~ 
For any $(\bar{x}, \pi)\in\cT$, when we collect $\mD_{\bar{x}}$ in \pref{alg:sample}, the probability that a trajectory will be accepted (i.e.~the trajectory satisfies the ``if'' statement in \pref{line:is1}) is exactly $d^{\pi}({\bar{x}})$. Thus, using Hoeffding's inequality (\pref{lem:hoeffding}), with probability at least $1 - \nicefrac{\delta}{D|\Pi|}$, 
\begin{equation*}%\label{eq:ds}
\left|\frac{|\mD_{\bar{x}}|}{n_2} - d^\pi({\bar{x}})\right|\le \sqrt{\frac{2\log(D|\Pi|/\delta)}{n_2}}.\end{equation*} 
Since $|\cT|\le D|\Pi|$, by union bound, the above holds for every $(\bar{x}, \pi) \in \cT$ with probability at least $1-\delta$. Let us denote this event as $\cE_\mathrm{data}$. Under $\cE_\mathrm{data}$, for any $(\bar{x}, \pi)$ that satisfies $d^\pi(\bar{x}) \ge \tfrac{\eps}{12D}$,
\begin{align*}
|\mD_{\bar{x}} | &\ge n_2 d^{\pi}(\bar{x}) - \sqrt{2 n_2 \log(D|\Pi|/\delta)} \ge \frac{\epsilon n_2}{12D} - \frac{\epsilon n_2}{24D} = \frac{\epsilon n_2}{24D}, \numberthis \label{eq:ds}
\end{align*} 
where the second inequality follows by the bound on \(d^\pi(\bar{x})\) and our choice of parameter \(n_2\) in Eq.~\eqref{eq:alg_parameter}. 

In the following, we prove by induction that every $(\bar{x}, \pi)$ that is added into $\cT$ in the while loop from lines \ref{line:while_loop_start}-\ref{line:while_loop_end} in \pref{alg:main} satisfies $d^{\pi}(\bar{x})\ge \tfrac{\epsilon}{12D}$. This is trivially true at initialization when $\cT = \{(x_\top, \mathrm{Null})\}$, since every trajectory starts at the dummy state $x_\top$, for which we have $d^{\mathrm{Null}}(x_\top) = 1$. 

We now proceed to the induction hypothesis. Suppose that in some iteration of the while loop, every tuple $(\bar{x}, \pi) \in \cT$ satisfies $d^{\pi}(\bar{x})\ge \nicefrac{\epsilon}{12D}$, and that \(\prn{ \bar{x}', \pi'}\) is a new tuple that will be added to \(\cT\). We will show that \(\prn{\bar{x}', \pi'}\) will also satisfy $d^{\pi'}({\bar{x}'})\ge \nicefrac{\epsilon}{12D}$. 

Recall that \(\statesp_{\pi'}^+ = \statesp_{\pi'} \cup \crl{x_\top, x_\bot}\), \(\SHP{\pi'} = \statesp_{\pi'}^+ \cap \reachablestates\), and \(\SRem{\pi'} = \statesp_{\pi'}^+ \setminus \SHP{\pi'}\). Let \(\MRPsign^{\pi'}_{\reachablestates} =  \mathrm{MRP}(\statesp_{\pi'}^+, P^{\pi'}, r^{\pi' }, H, x_\top, x_\bot)\) %\MRP{\pi'}(\SHP{\pi'}, \statesp_{\pi'}^+)\)\gene{Did we introduce this notation?} 
    be the policy-specific MRP, where \(P^{\pi'}\) and \(r^{\pi'}\) are defined in Eqs.~\pref{eq:policy_MRP_dynamics} and \pref{eq:policy_MRP_reward} respectively for the policy \(\pi'\). Similarly let $\wh{\MRPsign}^{\pi'}_{\reachablestates} = \mathrm{MRP}(\statesp_{\pi'}^+, \widehat P^{\pi'}, \widehat r^{\pi'}, H, x_\top, x_\bot)$ denote the estimated policy-specific MRP, where \(\wh P^{\pi'}\) and \(r^{\pi'}\) are defined using \pref{eq:empirical_MRP_dynamics} and \pref{eq:empirical_MRP_rewards} respectively. Note that for any state \(x \in \SHP{\pi'}\), the bound in \pref{eq:ds} holds. 

    For the rest of the proof, we assume that the event $\cE_\mathrm{est}$, defined in \pref{lem:dynamics_error}, holds (this happens with probability at least $1-\delta$). By definition of $\cE_\mathrm{est}$, we have
	\begin{equation}\label{eq:error1}|P_{x\to x'}^{\pi'} - \widehat{P}_{x\to x'}^{\pi'}|\le \frac{\epsilon}{12D(D+1)},  \qquad \text{for all} \qquad x'\in \statesp_{\pi'} \cup\{x_\bot\}.
 \end{equation}
 
\iffalse
Additionally, note that for any state \(x \in \SRem{\bar \pi}\), its occupancy \(d^{\MRPsign_1}(x)\) in \(\MRPsign_1\) satisfies \(d^{\MRPsign_1}(x) = \bard{\bar \pi}{x} {\SRem{\bar \pi}}\)  where $\bard{\bar \pi}{x} {\SRem{\bar \pi}}$ is defined in \eqref{eq:dbar}.\gene{the notation with superscript of $\MRPsign$ for $d$ is inconsistent here, have we introduced this before?} If we use $\bar{\tau}$ to denote a trajectory in $\MRPsign_1$ and $\tau$ to denote a trajectory in the original MDP $M$, this follows from \gene{I'm not sure I follow these calculations, maybe due to inconsistency in notation.}
 	\begin{align*}
		d^{\MRPsign_1}(x) & = \sum_{\bar{\tau} ~\text{s.t.}~s\in \bar{\tau}}P^{\MRPsign_1}(\bar{\tau}) \\
  &= \sum_{x_{h_1} = x_\top, x_{h_2}, \cdots, x_{h_t}\in \SHP{\pi}}P^{\MRPsign_1}(\bar{\tau}\coloneqq (x_\top, x_{h_2}, \cdots, x_{h_t}, s))\\
  &= \sum_{x_{h_1} = x_\top, x_{h_2}, \cdots, x_{h_t}\in \SHP{\pi}}P^{\pi, M}(\tau:\tau\cap \statesp_\pi = \{x_{h_2}, \cdots, x_{h_t}, s\})\\
		& = \sum_{x_{h_1} = x_\top, x_{h_2}, \cdots, x_{h_t}\in \SHP{\pi}}\prod_{i=1}^{t} P^{\pi, M}(\tau:\tau\in \event{x_{h_i}}{x_{h_{i+1}}}{\statesp_\pi}|\tau[h_i] =x_{h_i})\qquad (x_{h_{t+1}}\coloneqq s)\\
		& = P^{\pi, M}(\tau: \tau[h_i] = x_{h_i}, \forall 1\le i\le t+1, \tau[h]\not\in \statesp_\pi, \forall h\neq h_1, \cdots, h_t)\\
		& = P^{\pi, M}(\tau:s\in \tau, x_h\not\in \SRem{\pi}, \forall 1\le h\le h_{t+1}) = P^{\pi, M}[\tau: \tau\in \event{x_\top}{x}{\bar{x}}] = \bard{\pi}{x} {\SRem{\pi}}. 
	\end{align*}\fi

Furthermore, note that $\widehat{d}^{\pi'} (\bar{x}')\leftarrow \estreach(\statesp_{\pi'}^+, \wh{\MRPsign}^{\pi'}_{\reachablestates}, \bar{x}')$ since in \pref{alg:dp} we start with \(V(x) = \ind{x = \bar{x}'}\). Furthermore, using  \pref{lem:sim}, we have 
\begin{equation}\label{eq:derror}
\begin{aligned}
    |\widehat{d}^{\pi'}(\bar x') - d^{\MRPsign}(\bar x')| & \le (D+1) \sup_{x\in\SHP{\pi'}, x'\in \statesp_{\pi'}\cup\{x_\bot\}}|\widehat{P}_{x\to x'}^{\pi'} - P_{x\to x'}^{\pi'}| \\ 
        &\le \frac{\epsilon}{12D(D+1)}\cdot (D+1) = \frac{\epsilon}{12D}.
\end{aligned}\end{equation}
where the second inequality follows from \eqref{eq:error1}. Additionally, \pref{lem:dbar} states that \(d^{\MRPsign}(\bar x') = \bard{\pi'}{\bar x'} {\SRem{\pi'}}\). Therefore we obtain
$$|\bard{\pi'}{\bar x'} {\SRem{\pi'}} - \widehat{d}^{\pi'}(\bar x')|\le \frac{\epsilon}{12D}.$$
	
Thus, if the new state-policy pair $(\bar x', \pi')$ is added into $\cT$, we will have 
$$\bard{\pi'}{\bar x'}{\SRem{\pi'}}\ge \frac{\epsilon}{6D} - \frac{\epsilon}{12D} = \frac{\epsilon}{12D}.$$
Furthermore, by definition of $\bar{d}$ we have
$$\bard{\pi'}{\bar x'}{\SRem{\pi'}} = \Pr^{\pi'}\brk*{\tau\in \event{x_\top}{\bar x'}{\SRem{\pi'}}}\le 
\Pr^{\pi'}[\bar x'\in \tau] = d^{\pi'}(\bar x'),$$
so we have proved the induction hypothesis $d^{\pi'}(\bar{x}')\ge \nicefrac{\epsilon}{12D}$ for the next round. This concludes the proof of \pref{lem:identify}.
\end{proof}

The next lemma establishes that \pref{alg:main} will terminate after finitely many rounds, and that after termination will have explored all sufficiently reachable states. 
\begin{lemma}\label{lem:while_terminate} With probability at least \(1 - 2\delta\),  
\begin{enumerate}[label=\((\alph*)\)] 
\item The while loop in \pref{line:while_loop} in \pref{alg:main} will terminate after at most $\tfrac{12HD\dimRL(\Pi)}{\epsilon}$ rounds. 
\item  After the termination of the while loop, for any $\pi\in \Pi$, the remaining states $x\in\SRem{\pi}$ that are not added to \(\reachablestates\)
satisfy $\bard{\pi}{x}{\SRem{\pi}}\le \nicefrac{\epsilon}{4D}$.
\end{enumerate}
\end{lemma}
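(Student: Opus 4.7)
The plan is to prove the two parts using the reachability arguments already built up. Throughout, I will work on the event where both \pref{lem:dynamics_error} ($\cE_\mathrm{est}$) and the dataset concentration event $\cE_\mathrm{data}$ from the proof of \pref{lem:identify} hold, which together have probability at least $1 - 2\delta$.

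For part (a), I observe that every iteration of the while loop either terminates or adds at least one new pair $(\bar{x}, \pi)$ to the set $\cT$ (and hence a new state $\bar{x}$ to $\reachablestates$). By \pref{lem:identify}, every such $(\bar{x}, \pi) \in \cT$ satisfies $d^{\pi}(\bar{x}) \ge \epsilon/(12D)$. So if the loop runs for $T$ non-terminating iterations, then there exist at least $T$ distinct states $\bar{x}$ (coming from the added pairs) each witnessing $\sup_{\pi \in \Pi} d^{\pi}(\bar{x}) \ge \epsilon/(12D)$. Partitioning these states by their layer $h \in [H]$ and applying \pref{lem:coverability} (via the state-variant bound $\sum_{x \in \statesp_h} \sup_{\pi \in \Pi} d^\pi_h(x) \le \sum_{x, a} \sup_{\pi \in \Pi} d^\pi_h(x, a) \le \dimRL(\Pi)$), I get that at most $\dimRL(\Pi) / (\epsilon/(12D)) = 12D\dimRL(\Pi)/\epsilon$ such states can lie in any single layer. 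Summing over layers yields $T \le 12HD\dimRL(\Pi)/\epsilon$, which is the claimed termination bound.

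For part (b), I use that at termination, for every $\pi \in \Pi$ and every remaining state $\bar{x} \in \SRem{\pi}$ (i.e., not added to $\reachablestates$), the test on \pref{line:test_and_add} must have failed, so $\widehat{d}^\pi(\bar{x}) < \epsilon/(6D)$. To convert this empirical bound into one on $\bard{\pi}{\bar{x}}{\SRem{\pi}}$, I combine \pref{lem:dbar} (which equates $\bard{\pi}{\bar{x}}{\SRem{\pi}}$ with the occupancy $d^{\MRPsign}(\bar{x})$ in the policy-specific MRP $\MRPsign^\pi_{\reachablestates}$) with the simulation-lemma style bound already derived in \pref{eq:derror} inside the proof of \pref{lem:identify}. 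That bound, which relies only on $\cE_\mathrm{est}$ together with the dataset size condition (which is ensured by the inductive argument of \pref{lem:identify} since all elements of $\reachablestates$ are $(\epsilon/(12D))$-reachable), yields $|\widehat{d}^\pi(\bar{x}) - \bard{\pi}{\bar{x}}{\SRem{\pi}}| \le \epsilon/(12D)$. Combining with $\widehat{d}^\pi(\bar{x}) < \epsilon/(6D)$ gives $\bard{\pi}{\bar{x}}{\SRem{\pi}} < \epsilon/(6D) + \epsilon/(12D) = \epsilon/(4D)$, as required.

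The main technical point to be careful about is ensuring that the concentration event $\cE_\mathrm{est}$ and the dataset-size condition $|\cD_x| \ge \epsilon n_2/(24D)$ used in \pref{lem:dynamics_error} remain valid across \emph{every} iteration of the while loop simultaneously, not just at termination. This is handled by the same induction as in \pref{lem:identify}: at every iteration the newly added states are $(\epsilon/(12D))$-reachable (inductively), so their fresh datasets collected by $\datacollector$ have the required size via Hoeffding, and then the union-bound formulation of \pref{lem:dynamics_error} covers all $\pi \in \Pi$ uniformly. No additional estimation steps or new concentration events are needed beyond those already established, so the overall failure probability remains $2\delta$.
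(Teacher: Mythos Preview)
The proposal is correct and follows essentially the same argument as the paper: part (a) bounds $|\cT|$ via the coverability inequality $\sum_x \sup_\pi d^\pi(x) \le H\dimRL(\Pi)$ from \pref{lem:coverability} combined with the per-state lower bound from \pref{lem:identify}, and part (b) combines the termination condition $\widehat d^\pi(\bar x) < \epsilon/(6D)$ with the $\epsilon/(12D)$ approximation error established via \pref{lem:dbar} and the simulation lemma (exactly the bound \eqref{eq:derror}). Your additional paragraph making explicit that $\cE_{\mathrm{est}}$ and the dataset-size condition must hold across \emph{all} iterations is a useful clarification the paper leaves implicit in the induction of \pref{lem:identify}.
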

Notice that according to our algorithm, the same state cannot be added multiple times into $\reachablestates$. Therefore, $|\reachablestates| \le D|\Pi|$, and the maximum number of rounds of the while loop is $D|\Pi|$ (i.e., the while loop eventually terminates).  

\begin{proof}
% According to \pref{lem:dynamics_error} and \pref{lem:identify}, \eqref{eq:derror} holds with probability at least $1 - 2\delta$.\gene{a bit weird to refer to equations in other lemmas.}
We prove each part separately.

\begin{enumerate}[label=\((\alph*)\)]
\item 
First, note that from the definition of coverability and \pref{lem:coverability}, we have  
$$\sum_{x\in\statesp}\sup_{\pi\in \Pi}d^{\pi}(x)\le HC^\mathsf{cov}(\Pi; M)\le H\dimRL(\Pi).$$
Furthermore, \pref{lem:identify} states that every $(x, \pi_x)\in \cT$ satisfies $d^{\pi_x}(x)\ge \nicefrac{\epsilon}{12D}$. Thus, at any point in \pref{alg:main}, we have
\begin{align*}
\sum_{x\in\reachablestates}\sup_{\pi\in \Pi}d^{\pi}(x) &\geq \sum_{x\in\reachablestates} d^{\pi_x}(x) \geq \abs{\cT} \cdot \frac{\epsilon}{12D}. 
\end{align*}
Since, \(\reachablestates \subseteq \statesp\), the two bounds indicate that 
$$|\cT| \le \frac{12HD\dimRL(\Pi)}{\epsilon}.$$

Since every iteration of the while loop adds one new $(x, \pi_x)$ to $\cT$, the while loop terminates after at most $\nicefrac{12HD\dimRL(\Pi)}{\epsilon}$ many rounds.

\item  We know that once the while loop has terminated, for every $\pi\in \Pi$ and $\bar{x}\in\SRem{\pi}$, we must have $\widehat{d}^\pi(\bar{x})\le \nicefrac{\epsilon}{6D}$, or else the condition in \pref{line:test_and_add} in \pref{alg:main} is violated. 

Fix any such $(\bar{x},\pi)$ pair. Inspecting the proof of \pref{lem:identify}, we see that 
\begin{align*}
|\bard{\pi}{\bar x} {\SRem{\pi}} - \widehat{d}^{\pi}(\bar x)|\le \frac{\epsilon}{12D}.
\end{align*}
To conclude,  we get 
$$\bard{\pi}{x}{\SRem{\pi}}\le \frac{\epsilon}{6D} + \frac{\epsilon}{12D} = \frac{\epsilon}{4D}.$$
\end{enumerate}
This concludes the proof of \pref{lem:while_terminate}.
\end{proof} 

\begin{lemma} \label{lem:eval_error}
Suppose that the conclusions of Lemmas \ref{lem:dynamics_error} and \ref{lem:while_terminate} hold. Then for every $\pi \in \Pi$, the estimated value $\widehat{V}^\pi$ computed in \pref{alg:main} satisfies
	$$|\widehat{V}^\pi - V^\pi|\le \epsilon.$$ 
\end{lemma}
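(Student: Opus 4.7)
The plan is to decompose $|V^\pi - \widehat{V}^\pi|$ via the triangle inequality through the true value $V_{\MRPsign^\pi_\reachablestates}(x_\top)$ of the population policy-specific MRP $\MRPsign^\pi_\reachablestates$ starting from $x_\top$, since by inspection of \pref{alg:eval}, $\widehat{V}^\pi$ is exactly the value of the empirical MRP $\widehat{\MRPsign}^\pi_\reachablestates$ starting from $x_\top$, computed by dynamic programming. The two resulting pieces are a bias term $|V^\pi - V_{\MRPsign^\pi_\reachablestates}(x_\top)|$, controlled by \pref{lem:while_terminate}, and an estimation error $|V_{\MRPsign^\pi_\reachablestates}(x_\top) - \widehat{V}^\pi|$, controlled by \pref{lem:dynamics_error} together with the simulation lemma.

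For the bias, the plan is to show that $V_{\MRPsign^\pi_\reachablestates}(x_\top)$ equals the expectation under $\pi$ in the original MDP of the cumulative reward \emph{truncated} at the first time $\tau$ visits $\SRem{\pi}$. This follows from the construction in Eqs.~\eqref{eq:policy_MRP_dynamics} and \eqref{eq:policy_MRP_reward}, generalizing the argument used in \pref{lem:dbar}: any MRP trajectory takes the form $x_\top \to y_1 \to \cdots \to y_k \to x_\bot$ with $y_1,\ldots,y_{k-1}\in \SHP{\pi}$ and either $y_k = x_\bot$ or $y_k \in \SRem{\pi}$, and the total MRP reward along such a trajectory equals the MDP reward of the corresponding partial trajectory up to (but not including) the hit layer, since $r^\pi_{y_k \to x_\bot} = 0$ for $y_k \in \SRem{\pi}$ by definition. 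Because total MDP rewards lie in $[0,1]$, the bias is bounded by $\Pr^\pi[\tau \text{ hits }\SRem{\pi}] \le \sum_{x \in \SRem{\pi}} \bard{\pi}{x}{\SRem{\pi}}$. Using $|\SRem{\pi}| \le D$ together with \pref{lem:while_terminate}, every such term is at most $\epsilon/(4D)$, so the total bias is at most $\epsilon/4$.

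For the estimation error, I will invoke \pref{lem:sim} applied to $\MRPsign^\pi_\reachablestates$ and $\widehat{\MRPsign}^\pi_\reachablestates$, both of which live on $\statesp_\pi^+$ and which agree (with zero transition/reward error) on rows corresponding to $\SRem{\pi} \cup \{x_\bot\}$. \pref{lem:dynamics_error} then gives per-pair errors of $\epsilon/(12(D+1)^2)$ out of $x_\top$ and $\epsilon/(12D(D+1))$ out of $x \in \SHP{\pi}$. Summing $\sum_{x'}(|P^\pi - \widehat{P}^\pi| + |r^\pi - \widehat{r}^\pi|)$ over the at most $D+1$ targets $x' \in \statesp_\pi \cup \{x_\bot\}$ gives a per-row bound of $\epsilon/(6(D+1))$ for $x_\top$ and $\epsilon/(6D)$ for $x\in \SHP{\pi}$. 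Weighting by the MRP occupancies, we have $d^\MRPsign(x_\top) = 1$, and since each state in $\statesp_\pi$ is visited at most once in any MRP trajectory, $\sum_{x \in \SHP{\pi}} d^\MRPsign(x) \le |\SHP{\pi}| \le D$. This yields a total estimation error of at most $\epsilon/(6(D+1)) + \epsilon/6 \le \epsilon/3$.

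Putting the two pieces together via triangle inequality gives $|V^\pi - \widehat{V}^\pi| \le \epsilon/4 + \epsilon/3 < \epsilon$, as required. The main obstacle is the first step: carefully verifying the identification of $V_{\MRPsign^\pi_\reachablestates}(x_\top)$ with the $\SRem{\pi}$-truncated MDP expectation, which requires a case analysis of MRP trajectories (those that eventually hit $\SRem{\pi}$ vs.\ those that reach $x_\bot$ directly from $\SHP{\pi}$) and an application of the sunflower/petal structure so that partial trajectories segments in $\setall{\pi}{x}{x'}$ correctly collapse to single MRP edges with the right probabilities and rewards; the simulation-lemma step is then essentially bookkeeping with the bounds already provided by \pref{lem:dynamics_error}.
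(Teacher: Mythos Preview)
Your proposal is correct and follows essentially the same two-step decomposition as the paper: bound the bias $|V^\pi - V_{\MRPsign^\pi_\reachablestates}(x_\top)|$ via the probability of hitting $\SRem{\pi}$ (controlled by \pref{lem:while_terminate}) and bound the estimation error $|V_{\MRPsign^\pi_\reachablestates}(x_\top) - \widehat{V}^\pi|$ via the simulation lemma together with \pref{lem:dynamics_error}. The paper applies the simulation lemma a bit more crudely (using a uniform $2(D+2)\max_{x,x'}$ bound to get $\epsilon/2$ rather than your occupancy-weighted $\epsilon/3$), and its bias argument phrases the MRP-to-MDP correspondence slightly differently, but the structure and ingredients are identical.
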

\begin{proof}
We will break up the proof into two steps. First, we show that for any \(\pi\), the value estimate $\wh{V}^\pi$ obtained using the empirical policy-specific MRP $\wh \MRPsign^\pi_{\reachablestates}$ is close to its value in the  policy-specific MRP $\MRPsign^\pi_{\reachablestates}$, as defined via~\eqref{eq:policy_MRP_dynamics} and \eqref{eq:policy_MRP_reward}. We denote this quantity as $V_{\mathrm{MRP}}^\pi$. Then, we will show that $V_{\mathrm{MRP}}^\pi$ is close to $V^\pi$, the value of the policy \(\pi\) in the original MDP $M$.  

\paragraph{Part 1: $\widehat{V}^\pi$  is close to  $V_{\mathrm{MRP}}^\pi$.} Note that the output $\widehat{V}^\pi$ of \pref{alg:eval} is exactly the value function of MRP $\widehat \MRPsign^\pi_{\reachablestates}$ defined by Eqs.~\eqref{eq:empirical_MRP_dynamics} and \eqref{eq:empirical_MRP_rewards}. When $D = 0$, by part (b) of \pref{lem:dynamics_error}, we obtain
$$|\widehat{V}^{\pi} - V_{\mathrm{MRP}}^\pi| = |\widehat{r}^{\pi}_{x_\top\to x_\bot} - r^\pi_{x_\top\to x_\bot}|\le \frac{\epsilon}{12(D+1)^2}\le \frac{\epsilon}{2}.$$

 When $D \ge 1$, using \pref{lem:dynamics_error}, we have 
% When $D\ge 1$, we have $\frac{\epsilon}{12D(D+1)}\le \frac{\epsilon}{8(D+2)}$. Additionally, according to \pref{lem:dynamics_error}, we have
\begin{align*}
    &\forall x'\in \statesp_\pi \cup \crl{x_\bot}:\\
    &\qquad |r_{x_\top\to x'}^\pi - \widehat{r}_{x_\top\to x'}^\pi| \le \frac{\epsilon}{12(D+1)^2}, \quad
    |P_{x_\top\to x'}^\pi - \widehat{P}_{x_\top\to x'}^\pi| \le \frac{\epsilon}{12(D+1)^2},\\[0.5em]
    &\forall x\in \SHP{\pi}, x'\in \statesp_\pi^+ \cup \crl{x_\bot}:\\
    &\qquad |r_{x\to x'}^\pi - \widehat{r}_{x\to x'}^\pi| \le \frac{\epsilon}{12D(D+1)}, \quad
    |P_{x\to x'}^\pi - \widehat{P}_{x\to x'}^\pi| \le \frac{\epsilon}{12D(D+1)}. 
\end{align*}
By the simulation lemma (\pref{lem:sim}), we get
\begin{align*}
    |\widehat{V}^{\pi} - V_{\mathrm{MRP}}^\pi| & \le 2(D+2)\max_{s, x'\in\statesp_\pi^+}\left(\left|P_{x\to x'}^\pi - \widehat{P}_{x\to x'}^\pi\right| + \left|r_{x\to x'}^\pi - \widehat{r}_{x\to x'}^\pi\right|\right)\\
    & \le 2(D+2)\left(\frac{\epsilon}{12D(D+1)} + \frac{\epsilon}{12D(D+1)}\right)\le \frac{\epsilon}{2}.
\end{align*}

\paragraph{Part 2: $V_{\mathrm{MRP}}^\pi$ is close to $V^\pi$.} As in the proof of \pref{lem:dbar}, let us consider different trajectories $\bar{\tau}$ that are possible in $\MRPsign^\pi_{\reachablestates}$. We can represent $\bar{\tau} = (x_\top, \bar{x}_1,\cdots, \bar{x}_k, x_\bot, \cdots)$ where the states $\bar{x}_1,\cdots, \bar{x}_k$ are distinct and all except possibly $\bar{x}_k$ belong to $\SHP{\pi}$, and all subsequent states after $x_\bot$ are repetitions of $x_\bot$ until the end of the episode. Let $x_{h_1}, x_{h_2}, \dots, x_{h_k}$ be the same sequence (in the original MDP $M$). Again, we have %\gene{I had to change up some of the proof, which might have hampered readability. The issue is that trajectories are possible in $\MRPsign$ which contain a single state in $\SRem{\pi}$ that have nonzero reward due to the way rewards are being defined.}
\begin{align*}
    \hspace{1in}&\hspace{-1in}\Pr^\MRPsign[\bar{\tau} = (x_\top, \bar{x}_1,\cdots, \bar{x}_k, x_\bot, \cdots)] \\
    &= \Pr^{\pi}\brk*{ \forall i\in [k],~ \tau[h_i] = x_{h_i}, ~\text{and}~ \forall h \in [H] \backslash \crl{h_1, \cdots, h_{k}},~ \tau[h] \notin \statesp_\pi}, 
\end{align*}
where recall that \( \Pr^\MRPsign\) denotes probability under the $\MRPsign^\pi_{\reachablestates}$, and \(\Pr^{\pi}\) denotes the probability under trajectories drawn according to \(\pi\) in the underlying MDP; \( \EE^\MRPsign\)  and \(\EE^{\pi}\) are defined similarly. 

% Additionally for any $x_{h_1} = x_\top, x_{h_2}, \cdots, x_{h_{t-1}}, x_{h_t} = x_\bot\in\{x_\bot\}\cup \SHP{\pi}$, the probability of seeing trajectory $\bar{\tau} = (x_{h_1}, x_{h_2}, \cdots, x_{h_t})$ in $\MRPsign^\pi_{\reachablestates}$ is\gene{notation consistency.}
% 	\begin{align*}
% 		\quad P^{\MRPsign^\pi_{\reachablestates}}(\bar{\tau} = (x_{h_1}, x_{h_2}, \cdots, x_{h_t})) &= \prod_{i=1}^{t-1}P_{x_{h_i}\to x_{h_{i+1}}}^\pi\\
% 		& = \prod_{i=1}^{t-1} P^{\pi, M}(\tau:\tau\in \event{x_{h_i}}{x_{h_{i+1}}}{\statesp_\pi}|\tau[h_i] =x_{h_i})\\
% 		& = P^{\pi, M}(\tau: \tau[h_i] = x_{h_i}, \forall 1\le i\le t, \tau[h]\not\in \statesp_\pi, \forall h\neq h_1, \cdots, h_t).
% 	\end{align*}
Furthermore, the expectation of rewards collected in $\MRPsign^\pi_{\reachablestates}$ with trajectories $\bar{\tau}$ is
\begin{align*}
 \hspace{1in}&\hspace{-1in} \EE^{\MRPsign}\brk*{ R[\bar{\tau}] \ind{ (x_\top, \bar{x}_1,\cdots, \bar{x}_k, x_\bot, \cdots) } } \\
= &\EE^{\pi}\brk*{ R[\tau]\ind{ \forall i\in [k],~ \tau[h_i] = x_{h_i}, ~\text{and}~ \forall h \in [H] \backslash \crl{h_1, \cdots, h_{k}},~ \tau[h] \notin \statesp_\pi }}. 
\end{align*}
Next, we sum over all possible trajectories. However, note that the only trajectories that are possible in $M$ whose corresponding trajectories are \emph{not accounted for} in $\MRPsign^\pi_{\reachablestates}$ are precisely those that visit states in $\statesp_\pi$, after visiting some $x_{h_k}$ in the remaining states $\SRem{\pi}$ (since, by construction, the MRP transitions directly to $x_\bot$ after encountering a state in $\SRem{\pi}$). Thus, 
	% Summing over all possible $x_{h_1}, \cdots, x_{h_t}$, we will get
$$V_{\mathrm{MRP}}^\pi = \EE^{\pi}\brk*{ R[\tau] \prn*{\ind{\tau \cap \SRem{\pi} = \emptyset} + \ind{\exists k\in [H]: x_{h_k} \in \SRem{\pi} \text{ and } \forall h > h_k: x_h \notin \statesp_\pi} }},$$ 
where the first term corresponds to trajectories that do not pass through \(\SRem{\pi}\), and the second term corresponds to trajectories that pass through some state in \(\SRem{\pi}\) but then does not go through any other state in \(\statesp_\pi\). On the other hand, 
\begin{align*}
V^\pi = \EE^{\pi}\brk*{ R[\tau]}. 
\end{align*}
Clearly, $V_{\mathrm{MRP}}^\pi \le V^{\pi}$. Furthermore, we also have
\begin{align*}
    V^{\pi} - V_{\mathrm{MRP}}^\pi
    &= \EE^{\pi}\brk*{R[\tau] \ind{\tau \cap \SRem{\pi} \ne \emptyset} - \ind{\exists k\in [H]: x_{h_k} \in \SRem{\pi} \text{ and } \forall h > h_k: x_h \notin \statesp_\pi}}  \\
     &\le \EE^{\pi}\brk*{R[\tau] \ind{\tau \cap \SRem{\pi} \ne \emptyset}} \\ 
    &\le D \cdot \frac{\eps}{4D} = \frac{\eps}{4},
\end{align*}
where the first inequality follows by omitting the second indicator term, and the second inequality follows by taking a  union bound over all possible values of $\SRem{\pi}$ as well as the conclusion of \pref{lem:while_terminate}. 

Putting it all together, we get that 
\begin{align*}
 |\widehat{V}^\pi - V^\pi|\le |V^{\pi} - V_{\mathrm{MRP}}^\pi| + |\widehat{V}^{\pi} - V_{\mathrm{MRP}}^\pi|\le \frac{\epsilon}{4} + \frac{\epsilon}{2} < \epsilon.  
\end{align*}
This concludes the proof of \pref{lem:eval_error}.
\end{proof}
	% Hence since $\forall x\in\SRem{\pi}, \bard{\pi}{x}{\SRem{\pi}}\le \frac{\epsilon}{4D}$, we get
	% \begin{align*}
	% 	|V^{\pi} - V_{\mathrm{MRP}}^\pi| & = \EE^{\pi, M}\left[R[\tau]\right] - \EE^{\pi, M}\left[R[\tau]\ind{\tau\cap \SRem{\pi} = \emptyset}\right]\\
	% 	& = \EE^{\pi, M}\left[R[\tau]\ind{\tau\cap \SRem{\pi} \neq \emptyset}\right]\\
	% 	& = \sum_{x\in\SRem{\pi}}\EE^{\pi, M}\left[R[\tau]\ind{x\in\tau, \tau[0:s]\cap \SRem{\pi} = \emptyset}\right]\\
	% 	& \le \sum_{x\in\SRem{\pi}}\bard{\pi}{x}{\SRem{\pi}}\le D\cdot \frac{\epsilon}{4D} = \frac{\epsilon}{4},
	% \end{align*}
	% which indicates that
	% $$|\widehat{V}^\pi - V^\pi|\le |V^{\pi} - V_{\mathrm{MRP}}^\pi| + |\widehat{V}^{\pi} - V_{\mathrm{MRP}}^\pi|\le \frac{\epsilon}{2} + \frac{\epsilon}{4} < \epsilon.$$
% \end{proof}

\subsubsection{Proof of \pref{thm:sunflower}} 
%\begin{proof}[Proof of \pref{thm:sunflower}]
We assume the events defined in Lemmas \ref{lem:dynamics_error}, \ref{lem:identify} and \ref{lem:while_terminate} hold (which happens with probability at least $1 - 2 \delta$). With our choices of $n_1, n_2$ in Eq.~\eqref{eq:alg_parameter}, the total number of samples used in our algorithm is at most
	% $$n_1 = \frac{C_1(D+1)^4K^2\log(|\Pi|(D+1)/\delta)}{\epsilon^2}, \quad n_2 = \frac{C_2D^3(D+1)^2K^2\log(|\Pi|(D+1)^2/\delta)}{\epsilon^3},$$
	% if further noticing that the while loop runs at most $\frac{12HD\dimRL(\Pi)}{\epsilon}$ rounds (\pref{lem:while_terminate}), the total number of samples used in our algorithm is upper bounded by
$$n_1 + n_2\cdot \frac{12HD\dimRL(\Pi)}{\epsilon} = \widetilde{\cO}\prn*{\prn*{\frac{1}{\epsilon^2} + \frac{HD^6 \dimRL(\Pi)}{\epsilon^4}} \cdot K^2 \log\frac{|\Pi|}{\delta}}.$$
After the termination of the while loop, we know that for any policy \(\pi \in \Pi\) and $x\in\SRem{\pi}$ we have 
$$\bard{\pi}{x}{\SRem{\pi}}\le \frac{\epsilon}{4D}.$$
Therefore, by \pref{lem:eval_error}, we know for every $\pi\in \Pi$, $|\widehat{V}^\pi - V^\pi|\le \epsilon$. Hence the output policy $\widehat{\pi} \in \argmax_\pi \hV^\pi$ satisfies
$$\max_{\pi\in\Pi} V^\pi - V^{\widehat{\pi}}\le 2\epsilon + \hV^\pi - \hV^{\widehat{\pi}}\le 2\epsilon.$$
Rescaling $\epsilon$ by $2\epsilon$ and $\delta$ by $2\delta$ concludes the proof of \pref{thm:sunflower}.\qed
% \end{proof} 
\subsubsection{Extending \pref{thm:sunflower} to Infinite Policy Classes}\label{sec:online-infinite-policy} 
The modified analysis for the online RL upper bound (\pref{thm:sunflower}) proceeds similarly as in \pref{sec:generative-lower-bound-extend}; we sketch the ideas below.

There are two places in the proof of \pref{thm:sunflower} which require a union bound over $\abs{\Pi}$: the event $\cE_\mathrm{est}$ (defined by \pref{lem:dynamics_error}) that the estimated transitions and rewards of the MRPs are close to their population versions, and the event $\cE_\mathrm{data}$ (defined by \pref{lem:identify}) that the datasets collected are large enough. The latter is easy to address, since we can simply modify the algorithm's while loop to break after $\cO\prn*{\tfrac{HD\dimRL(\Pi)}{\epsilon}}$ iterations and union bound over the size of the set $\abs{\cT}$ instead of the worst-case bound on the size $D \abs{\Pi}$. For $\cE_\mathrm{data}$, we follow a similar strategy as the analysis for the generative model upper bound.

Fix a state $x$. Recall that the estimate for the probability transition kernel in the MDP in Eq.~\eqref{eq:empirical_MRP_dynamics} takes the form
\begin{align*}
	\widehat{P}_{x\to x'}^\pi = \frac{1}{|\mD_x|} \sum_{\tau\in \mD_x}\frac{\ind{\pi\cons\tau_{h:h'}}}{\tfrac{1}{|\Picore|}\sum_{\pi'\in \Picore}\ind{\pi_e\cons\tau_{h:h'}}}\ind{\tau\in \setall{\pi}{x}{x'} }.
\end{align*}
(The analysis for the rewards is similar, so we omit it from this proof sketch.)

We set up some notation. Define the function $p^{\pi}_{x\to x'}: (\statesp\times \cA \times \bbR)^H \to [0, \abs{\Picore}]$ as
\begin{align}\label{eq:is-function-def}
    p^\pi_{x\to x'}(\tau) \coloneqq \frac{\ind{\pi\cons\tau_{h:h'}}}{\tfrac{1}{|\Picore|}\sum_{\pi'\in \Picore}\ind{\pi_e\cons\tau_{h:h'}}}\ind{\tau\in \setall{\pi}{x}{x'} },
\end{align}
with the implicit restriction of the domain to trajectories $\tau$ for which the denominator is nonzero. We have $\En[p^\pi_{x\to x'}(\tau)] = P^\pi_{x\rightarrow x'}$. Also let $\Pi_x = \crl*{\pi \in \Pi: x\in \statesp_\pi}$. 

Restated in this notation, our objective is to show the uniform convergence guarantee
\begin{align}\label{eq:uniform-convergence-probabilities}
    \text{w.p.~at least}~ 1-\delta, \quad \sup_{\pi \in \Pi_x, x' \in \statesp_\pi} \Big|\frac{1}{|\mD_x|} \sum_{\tau\in \mD_x} p^\pi_{x\to x'}(\tau) - \En[p^\pi_{x\to x'}(\tau)]\Big| \le \eps.
\end{align}
Again, in light of \pref{lem:uniform-convergence}, we need to compute the pseudodimension for the function class $\cP^{\Pi_x} = \crl*{p^\pi_{x\to x'}: \pi \in \Pi_x,  x' \in \statesp_\pi}$, since these are all possible transitions that we might use the dataset $\cD_s$ to evaluate. Define the subgraph class
\begin{align*}
    \cP^{\Pi_x, +} \coloneqq \crl{(\tau, \theta) \mapsto \ind{p^\pi_{x\to x'}(\tau) \le \theta}: \pi \in \Pi_x, x' \in \statesp_\pi }.
\end{align*}
Fix the set $Z = \crl*{(\tau_1, \theta_1), \cdots, (\tau_d, \theta_d)} \in ((\statesp \times \cA \times \bbR)^H \times \bbR)^d$, where the trajectories $\tau_1, \cdots, \tau_d$ pass through $x$. We also denote $\statesp_Z$ to be the union of all states which appear in $\tau_1, \cdots, \tau_d$. In order to show a bound that $\pseudo(\cP^{\Pi_x}) \le d$ it suffices to prove that $\abs{\cP^{\Pi_x, +}\big|_Z} < 2^d$.

We first observe that
\begin{align*}
    \abs{\cP^{\Pi_x, +}\big|_Z} 
    &\le 1 + \sum_{x' \in \statesp_Z} \abs*{ \crl*{  \prn*{ \ind{p^\pi_{x \to x'}(\tau_1) \le \theta_1 }, \cdots, \ind{p^\pi_{x \to x'}(\tau_d) \le \theta_d }  }  : \pi \in \Pi_x } }.
\end{align*}
The inequality follows because for any choice $x' \notin \statesp_Z$, we have
\begin{align*}
    \prn*{ \ind{p^\pi_{x \to x'}(\tau_1) \le \theta_1 }, \cdots, \ind{p^\pi_{x \to x'}(\tau_d) \le \theta_d }  } = \vec{0},
\end{align*}
no matter what $\pi$ is, contributing at most 1 to the count. Furthermore, once we have fixed $x'$ and the $\crl{\tau_1, \cdots, \tau_d}$ the quantities $\tfrac{1}{|\Picore|}\sum_{\pi'\in \Picore}\ind{\pi_e\cons\tau_{i, h:h'}}$ for every $i \in [d]$ are constant (do not depend on $\pi$), so we can reparameterize $\theta_i' \coloneqq \theta_i \cdot \tfrac{1}{|\Picore|}\sum_{\pi'\in \Picore}\ind{\pi_e\cons\tau_{i, h:h'}} $ to get:
\begin{align*}
\abs{\cP^{\Pi_x, +}\big|_Z}  &\le 1 + \sum_{x' \in \statesp_Z} \abs*{ \crl*{ (  b_1(\pi) , \cdots,
 b_d(\pi) )  : \pi \in \Pi } }, \numberthis\label{eq:projection-bound}\\
&\quad \text{where} \quad b_i(\pi) \coloneqq \ind{\ind{\pi\cons\tau_{i, h:h'}} \ind{\tau_i \in \setall{\pi}{x}{x'} } \le \theta_i'}.
\end{align*}
Now we count how many values the vector $(b_1(\pi), \cdots, b_d(\pi))$ can take for different $\pi \in \Pi_x$. Without loss of generality, we can (1) assume that the $\theta_i'=0$ (since a product of indicators can only take values in $\crl{0,1}$, and if $\theta' \ge 1$ then we must have $b_i(\pi) = 1$ for every $\pi$), and (2) $x' \in \tau_i$ for each $i \in [d]$ (otherwise $b_i(\pi) = 0$ for every $\pi \in \Pi$). So we can rewrite $b_i(\pi) = \ind{\pi\cons\tau_{i, h:h'}} \ind{\tau_i \in \setall{\pi}{x}{x'} }$. For every fixed choice of $x'$ we upper bound the size of the set as:
\begin{align*}
    \hspace{2em}&\hspace{-2em} \abs*{ \crl*{ \prn*{  b_1(\pi) , \cdots,
 b_d(\pi) }  : \pi \in \Pi } } \\ \overleq{\proman{1}}  & \abs*{ \crl*{ \prn*{ \ind{\pi\cons\tau_{1, h:h'}}, \cdots, \ind{\pi\cons\tau_{d, h:h'}} } : \pi \in \Pi }} \\
 &\quad\quad \times \abs*{ \crl*{ \prn*{ \ind{\tau_1 \in \setall{\pi}{x}{x'} }, \cdots, \ind{\tau_d \in \setall{\pi}{x}{x'} } } : \pi \in \Pi }} \\
 \overleq{\proman{2}} & \abs*{ \crl*{ \prn*{\pi(x^{(1)}), \pi(x^{(2)}), \cdots, \pi(x^{(dH)}) } : \pi \in \Pi } } \\
                      &\quad \times \abs*{ \crl*{  \prn*{\ind{x^{(1)} \in \statesp_\pi}, \cdots, \ind{x^{(dH)} \in \statesp_\pi} } : \pi \in \Pi} } \\
 \overleq{\proman{3}} & \prn*{\frac{dH \cdot e (A+1)^2}{2\natarajan(\Pi)}}^{\natarajan(\Pi)} \times \prn{dH}^D. \numberthis\label{eq:b-bound}
\end{align*}
The inequality $\proman{1}$ follows by upper bounding by the Cartesian product. The inequality $\proman{2}$ follows because (1) for the first term, the vector $\prn*{ \ind{\pi\cons\tau_{1, h:h'}}, \cdots, \ind{\pi\cons\tau_{d, h:h'}} }$ is determined by the number of possible behaviors $\pi$ has over all $dH$ states in the trajectories, and (2) for the second term, the vector $\prn*{ \ind{\tau_1 \in \setall{\pi}{x}{x'} }, \cdots, \ind{\tau_d \in \setall{\pi}{x}{x'} } }$ is determined by which of the $dH$ states lie in the petal set for $\statesp_\pi$. The inequality $\proman{3}$ follows by applying \pref{lem:sauer-natarajan} to the first term and Sauer's Lemma to the second term, further noting that every petal $\statesp_\pi$ set has cardinality at most $D$.

Combining Eqs.~\eqref{eq:projection-bound} and \eqref{eq:b-bound} we get the final bound that
\begin{align*}
\abs{\cP^{\Pi_x, +}\big|_Z}  &\le 1 + (dH)^{D+1} \cdot  \prn*{\frac{dH \cdot e (A+1)^2}{2\natarajan(\Pi)}}^{\natarajan(\Pi)}.
\end{align*}
To conclude the calculation, we observe that this bound is $< 2^d$ whenever $d = \wt{\cO}(D + \natarajan(\Pi))$, which we can again use in conjunction with \pref{lem:uniform-convergence} to prove the desired uniform convergence statement found in Eq.~\eqref{eq:uniform-convergence-probabilities}. Ultimately this allows us to replace the $\log \abs{\Pi}$ with $\wt{\cO}(D + \natarajan(\Pi))$ in the upper bound of \pref{thm:sunflower}; the precise details are omitted.

\section{Bibliographical Remarks}

\paragraph{Agnostic RL in Low-Rank MDPs.} \cite{sekhari2021agnostic} explored agnostic PAC RL in low-rank MDPs, and showed that one can perform agnostic learning with respect to any policy class for MDPs that have a small rank. Their results are complementary to the ones established in this chapter. They study how to solve \eqref{eq:pac-rl} without access to a realizable model/value function class. The main difference is that they investigate the low-rank assumption on the underlying MDP dynamics and show how it enables sample-efficient learning for any given policy class, while our results establish what assumptions on the given policy class are necessary and sufficient for agnostic learning \emph{for any MDP}.
\paragraph{Reward-Free RL.} From a technical viewpoint, our algorithm (\pref{alg:main}) share similaries to algorithms developed in the reward-free RL literature \citep{jin2020reward}. In reward-free RL, the goal of the learner is to output a dataset or a set of policies, after interacting with the underlying MDP, that can be later used for planning (with no further interaction with the MDP) for downstream reward functions. The key ideas in our \pref{alg:main}, in particular, that the learner first finds states \(\cI\) that are \(\Omega(\epsilon)\)-reachable and corresponding policies that can reach them, and then outputs datasets \(\crl{\cD_s}_{s \in \cI}\) that can be later used for evaluating any policy \(\pi \in \Pi\), share similarities to algorithmic ideas used in reward-free RL. However, our algorithm strictly generalizes prior works in reward-free RL, and in particular can work with large state-action spaces where the notion of reachability as well as the offline RL objective, is defined w.r.t.~the given policy class. In comparison, prior reward-free RL works compete with the best policy for the underlying MDP, and make structure assumptions on the dynamics, e.g. tabular structure \citep{jin2020reward, menard2021fast, li2023minimax} or linear dynamics \citep{wang2020reward, zanette2020provably, zhang2021reward, wagenmaker2022reward}, to make the problem tractable.

\chapter{Imitation Learning}\label{chap:imitation-learning}

This chapter studies \emph{imitation learning} (IL), a setting which provides the learner with stronger feedback than considered earlier in this thesis. We introduce an interactive imitation learning setup in \pref{sec:il-prelims}, where the learner has access to an ``expert oracle'' that returns value functions of an expert on queried state-action pairs. Our main result, presented in \pref{sec:expert-realizability}, is that the \emph{realizability} of the expert policy plays a crucial role in determining whether the learner can effectively utilize this expert oracle. We show that if the expert policy does not lie in the given policy class $\Pi$, then the additional access of the expert oracle cannot be leveraged by the learner to get improved statistical guarantees.

\section{Preliminaries}\label{sec:il-prelims}

\paragraph{Motivation.} For many policy classes, the spanning capacity may be quite large, and our lower bounds (\pref{thm:generative_lower_bound}, \ref{thm:lower-bound-coverability}, and \ref{thm:lower-bound-online}) demonstrate an unavoidable dependence on the spanning capacity $\dimRL(\Pi)$. Now we investigate whether it is possible to achieve bounds which are independent of $\dimRL(\Pi)$ and instead only depend on $\poly(H, A, \log \abs{\Pi})$ under a stronger feedback model.

The motivation for our feedback model comes from practice. It is uncommon to learn from scratch: often we would like to utilize domain expertise or prior knowledge to learn with fewer samples. For example, during training one might have access to a simulator which can roll out trajectories to estimate the optimal value function $Q^\star$, or one might have access to expert advice/demonstrations. However, this access does not come for free; estimating value functions with a simulator may require significant computation, or the expert might be a human who is providing labels or feedback on the performance of the algorithm. Thus, we consider additional feedback in the form of an expert oracle.

\begin{definition}[Expert Oracle]\label{def:expert-advice-oracle}
An expert oracle $\oracle: \statesp \times \cA \to \bbR$ is a function which given an $(x,a)$ pair as input returns the $Q$ value of some expert policy $\pi_\circ$, denoted $Q^{\pi_\circ}(x,a)$.
\end{definition}

\pref{def:expert-advice-oracle} is a natural formulation for understanding how expert feedback can be used for RL in large state spaces. We do not require $\pi_\circ$ to be the optimal policy (or even the best within the policy class $\Pi$). The objective is to compete with $\pi_\circ$, i.e., with probability at least $1-\delta$, return a policy $\wh{\pi}$ such that $V^{\wh{\pi}} \ge V^{\pi_\circ} - \eps$ using few online interactions with the MDP and calls to $\oracle$. A discussion of how \pref{def:expert-advice-oracle} relates to other imitation learning settings is deferred to the Bibliographical Remarks in \pref{sec:il-bib}.

Any sample efficient algorithm (one which uses at most $\poly(H, A, \log \abs{\Pi}, \eps^{-1}, \delta^{-1})$ online trajectories and calls to the oracle) must use \emph{both} forms of access. The aforementioned lower bounds show that an algorithm which only uses online access to the MDP must use $\Omega(\dimRL(\Pi))$ samples. Likewise, an algorithm which only queries the expert oracle must use $\Omega(\abs{\statesp}A)$ queries because it does not know the dynamics of the MDP, so the best it can do is just learn the optimal action on every state.

\paragraph{Upper Bound Under Realizability.}

Under realizability (namely, $\pi_\circ \in \Pi$), it is known that the dependence on $\dimRL(\Pi)$ can be entirely removed with few queries to the expert oracle.

\begin{algorithm}[ht]
\caption{\textsf{AggreVaTe} \cite{ross2014reinforcement}}\label{alg:aggrevate}
	\begin{algorithmic}[1]
		\Require Oracle Access $\oracle: \statesp \times \cA \to \bbR$, Policy Class $\Pi$, Online RL Access to $M^\star$
        \State Let $\rho_0 = \unif(\Pi)$. Let $\eta \coloneqq \sqrt{8 \log \abs{\Pi}/T}$.
        \State \textbf{for} $t=1,2,\cdots, T$:
			\State \(\quad\) Let $h_t \sim \unif([H])$.
            \State \(\quad\) Sample a state $x_t \sim d^{\rho_t}_{h_t}$.
            \State \(\quad\) Query oracle to get $Q^\star_t(x_t, a) \gets \oracle(x_t, a)$ for all $a \in \cA$.
            \State \(\quad\) Update policy distribution $\rho_{t+1}(\pi) \propto \exp\prn*{\eta \cdot \sum_{t=1}^T Q^\star(x_t, \pi(x_t)) }$
%		\EndFor
		\State \textbf{Return} $\wh{\rho} = \unif(\crl{ \rho_t}_{t=1}^T)$.
	\end{algorithmic}
\end{algorithm}

\begin{theorem} \label{thm:aggrevate}
    For any $\Pi$ such that $\pi_\circ \in \Pi$, with probability at least $1-\delta$, the $\mathsf{AggreVaTe}$ algorithm (\pref{alg:aggrevate}) 
computes an $\eps$-optimal policy using
\begin{align*}
    n_1 = \cO\prn*{\frac{H^2}{\eps^2} \cdot \log \frac{\abs{\Pi}}{\delta}} ~\text{online trajectories} \quad \text{and} \quad n_2 = \cO\prn*{\frac{H^2A}{\eps^2} \cdot \log \frac{\abs{\Pi}}{\delta}} ~\text{calls to } \oracle.
\end{align*}
\end{theorem}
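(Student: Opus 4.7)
The plan is to analyze \textsf{AggreVaTe} as a reduction from policy learning to a standard no-regret online learning problem over the policy class \(\Pi\), where the Performance Difference Lemma (PDL) converts a per-round regret bound into the suboptimality gap \(V^{\pi_\circ}-V^{\wh\rho}\). Realizability \(\pi_\circ\in\Pi\) enters exactly once: it guarantees that the benchmark in the Hedge regret bound is a policy we are actually competing with.

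First I would write down PDL in its ``surrogate'' form. For any (possibly randomized) policy \(\rho\),
\[
V^{\pi_\circ}-V^{\rho} \;=\; H\cdot \En_{h\sim\unif([H])}\En_{x\sim d^{\rho}_{h}}\brk*{V^{\pi_\circ}(x)-Q^{\pi_\circ}(x,\rho(x))}.
\]
Applying this to \(\rho=\rho_t\) and summing over \(t\in[T]\), the suboptimality of \(\wh\rho=\unif(\{\rho_t\})\) equals \(\frac{H}{T}\sum_t \En_{x_t\sim d^{\rho_t}_{h_t}}\brk{Q^{\pi_\circ}(x_t,\pi_\circ(x_t))-Q^{\pi_\circ}(x_t,\rho_t(x_t))}\), where I used \(V^{\pi_\circ}(x)=Q^{\pi_\circ}(x,\pi_\circ(x))\). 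This is precisely the cumulative on-policy regret of the iterates \(\rho_t\) against the expert \(\pi_\circ\), under the linear loss \(\ell_t(\pi)\coloneqq -Q^{\pi_\circ}(x_t,\pi(x_t))\in[-1,0]\).

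Second, I would invoke the standard Hedge/exponential-weights regret bound for \(|\Pi|\) experts with the stated step size \(\eta=\sqrt{8\log|\Pi|/T}\): for any fixed comparator \(\pi^\dagger\in\Pi\),
\[
\sum_{t=1}^{T}\En_{\pi\sim\rho_t}[\ell_t(\pi)]-\sum_{t=1}^{T}\ell_t(\pi^\dagger)\;\le\; \cO\prn*{\sqrt{T\log|\Pi|}}.
\]
Setting \(\pi^\dagger=\pi_\circ\) (which is in \(\Pi\) by realizability) and rearranging yields \(\sum_{t=1}^T\prn{Q^{\pi_\circ}(x_t,\pi_\circ(x_t))-\En_{\pi\sim\rho_t}Q^{\pi_\circ}(x_t,\pi(x_t))}\le \cO(\sqrt{T\log|\Pi|})\). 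This is the in-sample version of the quantity appearing in the PDL identity above.

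Third, I would convert the in-sample Hedge regret into a bound on the conditional expectations via Azuma--Hoeffding. Let \(\cF_{t-1}\) denote the history through round \(t-1\); then \(\rho_t\) and \(h_t\) are \(\cF_{t-1}\)-measurable, and \(\En[Q^{\pi_\circ}(x_t,\pi(x_t))\mid \cF_{t-1}]=\En_{x\sim d^{\rho_t}_{h_t}}Q^{\pi_\circ}(x,\pi(x))\). The differences between the realized per-round quantities and their conditional expectations form a bounded (in \([-1,1]\)) martingale difference sequence, so Azuma gives an additional \(\cO(\sqrt{T\log(1/\delta)})\) slack with probability \(1-\delta\). Combining with the PDL identity, this yields \(V^{\pi_\circ}-V^{\wh\rho}\le \cO\prn{H\sqrt{\log(|\Pi|/\delta)/T}}\); choosing \(T=\cO(H^2\log(|\Pi|/\delta)/\eps^2)\) makes this at most \(\eps\). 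Finally, each round uses one online trajectory (to sample \(x_t\sim d^{\rho_t}_{h_t}\) by rolling out \(\rho_t\) for \(h_t\) steps) and \(A\) oracle queries (one per action at \(x_t\)), giving the stated \(n_1\) and \(n_2\). The main obstacle is the last step: correctly specifying the filtration so that \(x_t\) is conditionally independent given \(\cF_{t-1}\) and verifying that the martingale differences are uniformly bounded despite the adaptivity of \(\rho_t\); everything else is a direct appeal to PDL and a textbook Hedge bound.
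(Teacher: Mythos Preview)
Your proposal is correct and follows essentially the same approach as the paper: reduce to an online learning problem via the Performance Difference Lemma, apply the standard Hedge regret bound with comparator \(\pi_\circ\in\Pi\) (this is where realizability enters), and use Azuma--Hoeffding to pass from realized to expected losses. The paper presents these ingredients in a slightly different order (regret bound first, then PDL) and takes a union bound over all \(\pi\in\Pi\) in the concentration step rather than only over the two relevant sequences, but the argument is otherwise identical.
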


$\mathsf{AggreVaTe}$ is a reduction to online learning with experts. 
We note that we actually require a slightly weaker, ``local'' oracle: $\mathsf{AggreVaTe}$ only queries the value of $Q^{\pi_\circ}$ on $(x,a)$ pairs which are encountered in online trajectories. Furthermore, it is also possible adapt the algorithm to work with only expert action feedback, i.e., queries of the form $\oracle(x) = \pi_\circ(x)$, by changing the loss to the indicator function $\ind{\pi(x_t) \ne \pi_\circ(x_t)}$. 

For completeness, we prove \pref{thm:aggrevate}; this result can also be found in \cite{ross2014reinforcement, agarwal2019reinforcement}.

\begin{proof}[Proof of \pref{thm:aggrevate}]
Let $\Pi = \crl{\pi_1, \cdots, \pi_N}$. For any $\pi \in \Pi$, let us define the loss function
\begin{align*}
    \bm{\ell}^t_\pi \coloneqq - Q^\star(x_t, \pi(x_t)), \quad \text{ and }\quad \bm{\ell}^t \coloneqq \prn*{\bm{\ell}^t_{\pi_1}, \bm{\ell}^t_{\pi_2}, \cdots, \bm{\ell}^t_{\pi_N}}.
\end{align*}
Note that for any $\pi$ we have $\bm{\ell}^t_\pi \in [-1,0]$. Then the policy distribution update takes the form $\rho_{t+1}(\pi) \propto \exp \prn*{-\eta \sum_{t=1}^T \bm{\ell}^t_\pi}$. We have written the \textsf{AggreVaTe} update rule in the form of the exponential weights algorithm. Now we apply the standard regret bound \cite[see, e.g.,][]{cesa2006prediction}: 
\begin{align*}
    \textsf{Regret} = \sum_{t=1}^T \tri{\rho_t, \bm{\ell}^t} - \min_{i \in [N]} \sum_{t=1}^T \tri{e_i, \bm{\ell}^t} \le \frac{\eta T}{8} + \frac{\log \abs{\Pi}}{\eta} = \sqrt{\frac{T \log \abs{\Pi}}{2}}, \numberthis\label{eq:online-guarantee}
\end{align*}
where the last equality uses the definition of $\eta$.

Now we use \eqref{eq:online-guarantee} to get a guarantee on the final policy $\wh{\rho}$. Let us define
\begin{align*}
    \wt{\bm{\ell}}^t_\pi \coloneqq \En_{h_t \sim \unif([H]), x_t \sim d^{\rho_t}_{h_t}} \brk*{-Q^\star(x_t, \pi(x_t))}, \quad \text{and}\quad  \wt{\bm{\ell}}^t \coloneqq \prn*{\wt{\bm{\ell}}^t_{\pi_1}, \wt{\bm{\ell}}^t_{\pi_2}, \cdots, \wt{\bm{\ell}}^t_{\pi_N}}.
\end{align*}
This can be viewed as the ``expected'' loss we should have incurred in a given round $t \in [T]$. Formally let $\En_{t}[\cdot] \coloneqq \En[\cdot ~|~ \cH_t]$ denote the conditional expectation, which is conditioned on all history up and including the end of iteration $t$. We have $\En_{t-1}[\tri{\rho_t, \bm{\ell}^t}] = \tri{\rho_t,\wt{\bm{\ell}}^t}$ since $\rho_t$ only depends on the history up until time $t-1$. 

By Azuma-Hoeffding (\pref{lem:azuma-hoeffding}) and union bound we get:
\begin{align*}
    \abs*{\frac{1}{T}\sum_{t=1}^T \tri{\rho_t, \bm{\ell}^t} - \frac{1}{T}\sum_{t=1}^T \tri{\rho_t,\wt{\bm{\ell}}^t}} &\le 2 \sqrt{\frac{ \log(1/\delta) }{T} }, \\
    \text{for all}~i \in [N], \quad \abs*{\frac{1}{T}\sum_{t=1}^T \tri{e_i, \bm{\ell}^t} - \frac{1}{T}\sum_{t=1}^T \tri{e_i,\wt{\bm{\ell}}^t}} &\le 2 \sqrt{\frac{ \log(\abs{\Pi}/\delta) }{T} }.
\end{align*}
We use this in conjunction with \eqref{eq:online-guarantee} to get that with probability at least $1-\delta$:
\begin{align*}
    \frac{1}{T}\sum_{t=1}^T \tri{\rho_t,\wt{\bm{\ell}}^t} - \min_{i \in [N]} \sum_{t=1}^T \tri{e_i, \wt{\bm{\ell}}^t} \le \sqrt{\frac{ \log \abs{\Pi}}{2T}} + 4 \sqrt{\frac{ \log(\abs{\Pi}/\delta) }{T} }. \numberthis\label{eq:expected-online-guarantee}
\end{align*}
By Performance Difference Lemma, we know that for any $\pi$,
\begin{align*}
    V^\star - V^{\rho_t} = \sum_{h=1}^H \En_{x_h, a_h \sim d^{\rho_t}_h} \brk*{-A^\star(x_h, a_h)}  = H \cdot \tri{\rho_t, \wt{\bm{\ell}}^t} + \sum_{h=1}^H \En_{x_h, a_h \sim d^{\rho_t}_h} \brk*{V^\star(x_h)}.
\end{align*}
So therefore we have:
\begin{align*}
    V^\star - \frac{1}{T}\sum_{t=1}^T V^{\rho_t} &= \frac{1}{T}\sum_{t=1}^T H\cdot\tri{\rho_t, \wt{\bm{\ell}}^t} + \frac{1}{T}\sum_{t=1}^T \sum_{h=1}^H \En_{x_h, a_h \sim d^{\rho_t}_h} \brk*{Q^\star(x_h, \pi^\star(x_h))}\\
    &= \frac{1}{T}\sum_{t=1}^T H\cdot\tri{\rho_t, \wt{\bm{\ell}}^t} + \max_{i\in [N]} \frac{1}{T}\sum_{t=1}^T \sum_{h=1}^H \En_{x_h, a_h \sim d^{\rho_t}_h} \brk*{Q^\star(x_h, \pi_i(x_h))} \\
    &= \frac{1}{T}\sum_{t=1}^T H\cdot\tri{\rho_t, \wt{\bm{\ell}}^t} -  \min_{i \in [N]} \frac{1}{T} \sum_{t=1}^T H\cdot \tri{e_i, \wt{\bm{\ell}}^t} \lesssim H\sqrt{\frac{\log (\abs{\Pi}/\delta)}{T}}.\numberthis\label{eq:final-aggrevate-bound}
\end{align*}
In the second equality we use realizability, and the inequality uses \eqref{eq:expected-online-guarantee}.

To conclude the proof of \pref{thm:aggrevate}, we set the RHS of \eqref{eq:final-aggrevate-bound} to $\eps$, and observe that in every round $t \in [T]$, $\mathsf{AggreVaTe}$ collects one online episode and calls the oracle $A$ times.
\end{proof}

\section{Lower Bound in Agnostic Setting}\label{sec:expert-realizability}

Realizability of the expert policy used for $\oracle$ is a strong assumption in practice. For example, one might choose to parameterize $\Pi$ as a class of neural networks, but one would like to use human annotators to give expert feedback on the actions taken by the learner; here, it is unreasonable to assume that realizability of the expert policy holds.

We sketch a lower bound in \pref{thm:lower-bound-expert} that shows that without realizability ($\pi_\circ \notin \Pi$), we can do no better than $\Omega(\dimRL(\Pi))$ queries to a generative model or queries to $\oracle$.

\begin{theorem}[informal]\label{thm:lower-bound-expert}
For any $H \in \bbN$, $C \in [2^H]$, there exists a policy class $\Pi$ with $\dimRL(\Pi) = \abs{\Pi} = C$, expert policy $\pi_\circ \notin \Pi$, and family of MDPs $\cM$ with state space $\statesp$ of size $O(2^H)$, binary action space, and horizon $H$ such that any algorithm that returns a $1/4$-optimal policy must either use $\Omega(C)$ queries to a generative model or $\Omega(C)$ queries to the $\oracle$.
\end{theorem}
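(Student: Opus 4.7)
The plan is to show that the expert feedback provides essentially no information about the planted MDP beyond what the generative model already provides, thereby reducing the problem to the bandit-style lower bound of \pref{thm:generative_lower_bound}. I would construct the policy class $\Pi = \{\pi_1, \dots, \pi_C\}$ together with a deterministic MDP $M^\star$ witnessing $\dimRL(\Pi) = C$ (following the template of \pref{thm:generative_lower_bound}), in which each $\pi_i$ reaches a distinct pair $(x^{(i)}, a^{(i)})$ at a designated layer $h^\star$. The family $\cM = \{M_{i^\star}\}_{i^\star \in [C]}$ would share these transitions and plant a constant-gap Bernoulli reward at $(x^{(i^\star)}, a^{(i^\star)})$. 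Then I would augment the MDP with an auxiliary ``dummy'' region (disjoint from the $\Pi$-reachable states), navigated by $\pi_\circ$, so that $V^{\pi_\circ}$ is a fixed constant (say $1/2$) independent of $i^\star$ and $\pi_\circ \notin \Pi$ by construction.

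The crucial design choice would be to arrange $\pi_\circ$ so that its continuation from every state avoids the planted reward location. Consequently, for every $(x,a)$ pair that is not one of the $C$ arms $\{(x^{(i)}, a^{(i)})\}_{i=1}^C$, the value $Q^{\pi_\circ}(x,a)$ would be identically $V^{\pi_\circ}$ (or some other constant independent of $i^\star$); only queries at the $C$ arms would reveal a single bit about $i^\star$, and the same holds for generative-model reward samples at these arms. With this decoupling in place, identifying $i^\star$ with constant probability requires $\Omega(C)$ informative queries, by a standard Le Cam argument mirroring \pref{thm:generative_lower_bound}. I would then argue that to achieve $V^{\wh{\pi}} \geq V^{\pi_\circ} - 1/4$, the learner must either identify $i^\star$ (to hit the planted reward, giving $V = 1$) or replicate $\pi_\circ$'s dummy-region navigation (giving $V = 1/2$); the construction would be set up so that replicating $\pi_\circ$ also requires $\Omega(C)$ queries, for instance by encoding $\pi_\circ$'s action choices along the dummy region as a length-$\Omega(C)$ secret sequence that must be learned bit-by-bit from either the oracle or the generative model.

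The main obstacle will be the simultaneous design of $\pi_\circ$ satisfying four properties: (i) $V^{\pi_\circ}$ is constant across instances, (ii) $\pi_\circ \notin \Pi$, (iii) $Q^{\pi_\circ}(x,a)$ is independent of $i^\star$ outside the $C$ arm locations, and (iv) no instance-independent ``trivial'' policy achieves $V \geq V^{\pi_\circ} - 1/4$ on all instances. The tension is that making $\pi_\circ$'s trajectory hard to replicate (to rule out trivial outputs) risks leaking information about $\pi_\circ$ through internal $Q$-values, which could enable binary-search-style shortcuts exploiting the tree-like structure of MDPs with $H \approx \log_2 C$. Resolving this tension requires carefully decoupling the dummy region from the $\Pi$-branching structure so that $\pi_\circ$'s continuation from \emph{every} state routes into the dummy region with the same terminal reward (flattening internal $Q^{\pi_\circ}$ values), while non-$\pi_\circ$ trajectories fall into zero-reward sinks (ruling out trivial policies). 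This decoupling, together with the choice of horizon $H$ large enough to embed an $\Omega(C)$-long secret into the dummy region, constitutes the main technical content behind the theorem.
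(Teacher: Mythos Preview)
Your proposal identifies the right tension (property (iii) versus (iv)) but does not resolve it, and the proposed resolution has a concrete gap. You want $Q^{\pi_\circ}$ to be flat outside the $C$ arm locations so that oracle queries there are uninformative, while simultaneously making $\pi_\circ$'s trajectory hard to replicate. These are incompatible: if $Q^{\pi_\circ}(x,0)=Q^{\pi_\circ}(x,1)$ for all $x$ in the dummy region, then by the performance difference lemma every trajectory through the dummy region has the same value, so the learner can trivially output any policy that enters that region and match $V^{\pi_\circ}$. Conversely, if you allow $Q^{\pi_\circ}$ to be non-flat (so that deviating from $\pi_\circ$ leads to a zero-reward sink), then each oracle query reveals one bit of $\pi_\circ$'s action sequence; but that sequence has length at most $H$, and the theorem fixes $H$ with $C$ ranging up to $2^H$, so you cannot ``choose $H$ large enough to embed an $\Omega(C)$-long secret''---$C$ can be exponentially larger than $H$.

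The paper resolves this by a different mechanism. The expert $\pi_\circ$ is taken to be an \emph{optimal} policy (so $\oracle$ returns $Q^\star$), and the hardness of replicating $\pi_\circ$ is encoded not as a sequence along the horizon but as a hidden labeling $f^\star:\statesp_H\to\{0,1\}$ over $2^H$ states at the final layer. The construction is a depth-$(H-1)$ binary tree in which one leaf $x_b$ (indexed by $b\in\{0,1\}^{H-1}$) transitions to a terminal state with reward $1$ on action $b[H-1]$, while every other leaf transitions \emph{uniformly at random} to $\statesp_H\setminus\{x_\bot\}$, where the reward is $\ind{a=f^\star(x)}$. The crucial point is that $Q^\star=1$ identically at layers $1$ through $H-2$ (oracle queries there are useless), and at layers $H-1$ and $H$ each query reveals at most one bit of $b$ or $f^\star$ respectively. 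To be $1/4$-optimal the learner must either identify $b$ (a bandit problem over $C$ arms at layer $H-1$) or learn $f^\star$ on a $3/4$-fraction of $\statesp_H$ (a supervised learning problem over $2^H\ge C$ points). The stochastic transition at layer $H-1$ is the missing idea in your proposal: it is what forces the ``replicate $\pi_\circ$'' route to require learning a width-$\Omega(C)$ function rather than a depth-$\Omega(C)$ sequence.
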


Before sketching the proof, several remarks are in order.
\begin{itemize}[label=$\bullet$]
    \item By comparing with \pref{thm:aggrevate}, \pref{thm:lower-bound-expert} demonstrates that realizability of the expert policy is crucial for circumventing the dependence on spanning capacity via the expert oracle.
    \item In the lower bound construction of \pref{thm:lower-bound-expert}, $\pi_\circ$ is an optimal policy. Furthermore, the optimal policy is not unique.
    \item While $\pi_\circ \notin \Pi$, the lower bound still has the property that $V^{\pi_\circ} = V^\star = \max_{\pi \in \Pi} V^\pi$; that is, the best-in-class policy $\wt{\pi} \coloneqq \argmax_{\pi \in \Pi} V^{\pi}$ attains the same value as the optimal policy. This is possible because there exist multiple states for which $\wt{\pi}(x) \ne \pi_\circ(x)$, however these states have $d^{\wt{\pi}}(x) = 0$. Thus, we rule out guarantees of the form
        \begin{align*}
            V^{\pi_\circ} - V^{\wh{\pi}} \le ~\underbrace{ \min_{\pi \in \Pi}~ \crl*{ V^{\pi_\circ} - V^\pi } }_{= 0 \text{ in construction}}~  + \eps, \numberthis\label{eq:il-misspecification-suboptimality}
        \end{align*}
        where the misspecification of the policy class $\Pi$ is stated in terms of the suboptimality difference. We mention several related works in \pref{sec:il-bib} which give imitation learning guarantees similar to \eqref{eq:il-misspecification-suboptimality}, but with the misspecification instead measured in terms of some information-theoretic quantity.
\end{itemize}

\begin{proof}[Proof Sketch of \pref{thm:lower-bound-expert}]
    We present the construction as well as intuition for the lower bound, leaving out a formal information-theoretic proof. A formal proof can be obtained using similar arguments as \pref{thm:lower-bound-coverability}.

\paragraph{Construction of MDP Family.} We describe the family of MDPs $\cM$. In every layer, the state space is $\statesp_h = \crl{x_{(j,h)} : j \in [2^h]}$, except at $\statesp_H$ where we have an additional terminating state, $\statesp_H = \crl{x_{(j,h)} : j \in [2^H]} \cup \crl{x_\bot}$. The action space is $\cA = \crl{0,1}$.

The MDP family $\cM = \crl{M_{b, f^\star}}_{b \in \cA^{H-1}, f^\star \in \cA^{\statesp_H}}$ is parameterized by a bit sequence $b \in \cA^{H-1}$ as well as a labeling function $f^\star \in \cA^{\statesp_H}$. The size of $\cM$ is $2^{H-1} \cdot 2^{2^H}$. We now describe the transitions and rewards for any $M_{b, f^\star}$. In the following, let $x_b \in \statesp_{H-1}$ be the state that is reached by playing the sequence of actions $(b[1], b[2], \cdots, b[H-2])$ for the first $H-2$ layers.

\begin{itemize}[label=$\bullet$]
    \item \textbf{Transitions.} For the first $H-2$ layers, the transitions are the same for $M_{b, f^\star} \in \cM$. At layer $H-1$, the transition depends on $b$.
    \begin{itemize}
        \item For any $h \in \crl{1, 2, \dots, H-2}$, the transitions are deterministic and given by a tree process: namely
    \begin{align*}
      P(x' \mid x_{(j,h)}, a) = \begin{cases}
          \ind{x' = x_{(2j -1, h+1)}} &\text{if}~a =0,\\
          \ind{x' = x_{(2j, h+1)}} &\text{if}~a =1.
      \end{cases}
    \end{align*}
        \item At layer $H-1$, for the state $x_b$, the transition is $P(x' \mid x_b, a) = \ind{x' = x_\bot}$ for any $a \in \cA$. For all other states, the transitions are uniform to $\statesp_H$, i.e., for any $x \in \statesp_{H-1} \backslash \crl{x_b}$, $a \in \cA$, the transition is $P(\cdot \mid x, a) = \unif(\statesp_H \backslash \crl{x_\bot})$.
    \end{itemize}
    \item \textbf{Rewards.} The rewards depend on the $b \in \cA^{H-1}$ and $f^\star \in \cA^{\statesp_H}$.
    \begin{itemize}
        \item The reward at layer $H-1$ is $R(x,a) = \ind{x = x_b, a = b[H-1]}$.
        \item The reward at layer $H$ is
        \begin{align*}
            &R(x_\bot, a) = 0 &\text{for any $a \in \cA$,}\\
            &R(x,a) = \ind{a = f^\star(x)} &\text{for any $x \ne x_\bot$, $a \in \cA$.}
        \end{align*}
    \end{itemize}
\end{itemize}

From the description of the transitions and rewards, we can compute the value of $Q^\star(\cdot,\cdot)$.
\begin{itemize}[label=$\bullet$]
    \item {Layers $1, \cdots, H-2$:} For any $x \in \statesp_1 \cup \statesp_2 \cup \cdots \cup \statesp_{H-2}$ and $a \in \cA$, the $Q$-value is $Q^\star(x,a) = 1$.
    \item {Layer $H-1$:} At $x_b$, the $Q$-value is $Q^\star(x_b,a) = \ind{a = b[H-1]}$. For other states $x \in \statesp_{H-1} \backslash \crl{x_b}$, the $Q$-value is $Q^\star(x,a) = 1$ for any $a \in \cA$.
    \item {Layer $H$:} At $x_\bot$, the $Q$-value is $Q^\star(x_\bot,a) = 0$ for any $a \in \cA$. For other states $x \in \statesp_{H} \backslash \crl{x_\bot}$, the $Q$-value is $Q^\star(x,a) = \ind{a = f^\star(x)}$.
\end{itemize}

Lastly, the optimal value is $V^\star = 1$.

\paragraph{Expert Oracle.}
The oracle $\oracle$ returns the value of $Q^\star(x,a)$.

\paragraph{Policy Class.} The policy class $\Pi$ is parameterized by bit sequences of length $H-1$. Denote the function $\mathrm{bin}: \crl{0, 1, \dots, 2^{H-1}} \mapsto \cA^{H-1}$ that returns the binary representation of the input. Specifically,
\begin{align*}
    \Pi \coloneqq \crl*{\pi_b: b \in \crl*{\mathrm{bin}(i) : i \in \crl*{0, 1, \dots, C-1}}},
\end{align*}
where each $\pi_b$ is defined such that $\pi_b(x) \coloneqq b[h]$ if $x \in \statesp_h$, and $\pi_b(x) \coloneqq 0$ otherwise. By construction it is clear that $\dimRL(\Pi) = \abs{\Pi} = C$.

\paragraph{Lower Bound Argument.}
Consider any $M_{b, f^\star}$ where $b \in \crl*{\mathrm{bin}(i) : i \in \crl*{0, 1, \dots, C-1}}$ and $f^\star \in \cA^{\statesp_H}$. There are two ways for the learner to identify a $1/4$-optimal policy in $M_{b, f^\star}$:
\begin{itemize}
    \item Find the value of $b$, and return the policy $\pi_b$, which has $V^{\pi_b} = 1$.
    \item Estimate $\wh{f} \approx f^\star$, and return the policy $\pi_{\wh{f}}$ which picks arbitrary actions for any $x \in \statesp_1 \cup \statesp_2 \cup \dots \cup \statesp_{H-1}$ and picks $\pi_{\wh{f}}(x) = \wh{f}(x)$ on $x \in \statesp_H$.
\end{itemize}

We claim that in any case, the learner must either use many samples from a generative model or many calls to $\oracle$. First, observe that since the transitions and rewards at layers $1, \cdots, H-2$ are known and identical for all $M_{b, f^\star} \in \cM$, querying the generative model on these states does not provide the learner with any information. Furthermore, in layers $1, \cdots, H-2$, every $(x,a)$ pair has $Q^\star(x,a) = 1$, so querying $\oracle$ on these $(x,a)$ pairs also does not provide any information to the learner. Thus, we consider learners which query the generative model or the expert oracle at states in layers $H-1$ and $H$.

In order to identify $b$, the learner must identify which $(x,a)$ pair at layer $H-1$ achieves reward of 1. They can do this either by (1) querying the generative model at a particular $(x,a)$ pair and observing if $r(x,a) = 1$ (or if the transition goes to $x_\bot$); or (2) querying $\oracle$ at a particular $(x,a)$ pair and observing if $Q^\star(x,a) = 0$ (which informs the learner that $x_b = x$ and $b[H-1] = 1-a$). In either case, the learner must expend $\Omega(C)$ queries in total in order to identify $b$.

To learn $f^\star$, the learner must solve a supervised learning problem over $\statesp_H \backslash x_\bot$. They can learn the identity of $f^\star(x)$ by querying either the generative model or the expert oracle on $\statesp_H$. Due to classical supervised learning lower bounds \cite{shalev2014understanding}, learning $f^\star$ requires
\begin{align*}
    \Omega \prn*{ \mathrm{VC}(\cA^{\statesp_{H}}) } = \Omega(2^H) \quad \text{queries.}
\end{align*}
This concludes the proof sketch of \pref{thm:lower-bound-expert}.
\end{proof}

\section{Bibliographical Remarks}\label{sec:il-bib}

The literature on imitation learning is vast; we state several of the most relevant works.

\paragraph{IL in Practice.} Imitation learning has been extensively used in autonomous driving \cite{pomerleau1988alvinn, abbeel2004apprenticeship, bojarski2016end, bansal2018chauffeurnet}, robotic control \cite{finn2017one, stepputtis2020language}, and game playing \cite{ibarz2018reward, vinyals2019grandmaster}. The problem of autoregressive next-token prediction for large language models (LLMs) can be viewed as an imitation learning task \cite{foster2024behavior}.

\paragraph{Offline IL.} Offline approaches to IL require a dataset of expert demonstrations, i.e., trajectories of the form $\tau = (x_1, \pi_\circ(x_1), \cdots, x_H, \pi_\circ(x_H))$. Offline IL is typically solved via the behavior cloning algorithm \cite{ross2010efficient}, which returns the policy which minimizes the 0/1 disagreement with the expert on the dataset via reduction to supervised learning. Other works study behavior cloning with losses other than the indicator loss. \cite{foster2024behavior} study behavior cloning with the log loss and investigate the dependence on horizon for deterministic/stochastic policy classes under the realizability assumption $\pi_\circ \in \Pi$. \cite{rohatgi2025computational} propose a $\rho$-estimator for agnostic offline IL, and show a sample complexity bound where the misspecification is measured in terms of the Hellinger distance $D_\mathsf{H}(\cdot, \cdot)$. Specifically they achieve a guarantee of the form (cf.~\eqref{eq:il-misspecification-suboptimality}):
\begin{align*}
    V^{\pi_\circ} - V^{\wh{\pi}} \lesssim \min_{\pi \in \Pi} D_\mathsf{H}\prn*{\bbP^{\pi}, \bbP^{\pi_\circ} } + \sqrt{\frac{\log \abs{\Pi}/\delta}{n}}, 
\end{align*}
where $\bbP^{\pi}$ denotes the law over trajectories given by running $\pi$ in the MDP.

\paragraph{Interactive IL.}
Interactive settings for IL have been studied in a long line of work, dating back to the seminal results \cite{ross2010efficient,ross2011reduction, ross2014reinforcement}; some additional works include \cite{sun2017deeply, cheng2018convergence, cheng2020policy, yan2021explaining, li2022efficient}. Several works study \emph{active} imitation learning algorithms which are similar to our setting (which measures the number of calls to the oracle $\oracle$). \cite{amortila2022few} study a related expert action oracle under the assumption of linear value functions. They show that in a generative model, with $\poly(d)$ resets and queries to an expert action oracle, one can learn an $\eps$-optimal policy, thus circumventing known hardness results for the linear value function setting. Up to a factor of $A$, one can simulate queries to the expert action oracle by querying $\oracle(x,a)$ for each $a\in \cA$. Therefore, our lower bound in \pref{thm:lower-bound-expert} extends to this weaker setting. \cite{sekhari2023selective} propose an algorithm which selectively queries a stochastic realizable expert, and they prove that this algorithm achieves regret bound scaling with the eluder dimension of the function class.

\paragraph{Tabular RL with Predictions.} We mention a few papers which do not strictly speaking fall under the umbrella of IL, but are relevant for the results in this chapter. \citep{golowich2022can, gupta2022unpacking} study tabular RL with \emph{inexact} value predictions for either the optimal $Q^\star$ or $V^\star$. The main difference with our results is that they assume access to the entire table of values, while we study the agnostic RL setting with a large state space and formalize access to predictions via an expert oracle that is always exactly correct (\pref{def:expert-advice-oracle}).

\chapter{Online RL with Exploratory Resets}\label{chap:mu-reset}

In this chapter, we study the sample complexity of learning under the $\mu$-reset interaction protocol, meaning that the learner is given sampling access to an additional exploratory reset distribution $\mu = \crl{\mu_h}_{h\in[H]}$ and can execute online rollouts from these $\mu_h$ in addition to online rollouts from the starting distribution $d_1$.

To set the stage, in \pref{sec:psdp}, we review a classical algorithm for this setting called Policy Search By Dynamic Programming (\psdp{}), which was introduced by \citet{bagnell2003policy}. We state the classical analysis of \psdp{}, which solves PAC RL under the assumption that the reset satisfies bounded concentrability and the policy class satisfies a strong \emph{policy completeness} assumption. The rest of this chapter is devoted to understanding if it is possible to relax this policy completeness assumption. 
\begin{enumerate}
    \item In \pref{sec:psdp}, we investigate the performance of \psdp{} when the policy class only satisfies \emph{realizability}, and we show that under slightly stronger assumptions, \psdp{} achieves exponential in horizon sample complexity; moreover, we give an algorithm-dependent lower bound which shows this result is tight.
    \item In \pref{sec:mu-reset-lb}, we turn to the fully agnostic setting, and give an information-theoretic lower bound showing that sample-efficient agnostic policy learning is not possible. 
\end{enumerate}
We conclude with open problems in \pref{sec:mu-reset-open-problems}.

\section{Policy Search By Dynamic Programming}\label{sec:psdp}

In this section, we provide a description of the \psdp{} algorithm and analyze its sample complexity. We show the standard upper bound for \psdp{} which has appeared in prior works \cite[e.g.,][]{misra2020kinematic} in \pref{sec:psdp-policy-completeness}. We also prove several new results about \psdp{} when only policy realizability is satisfied: namely if the reset distribution $\mu$ satisfies stronger properties beyond bounded concentrability, we show exponential in $H$ upper bounds in \pref{sec:psdp-upper} as well as a matching lower bound in \pref{sec:psdp-lower}. We also discuss in \pref{sec:psdp-lower} how our lower bounds against \psdp{} also apply against the \cpi{} algorithm.

\subsection{\psdp{} Guarantee Under Policy Completeness}\label{sec:psdp-policy-completeness}

Policy Search By Dynamic Programming (\psdp{}) is a widely studied policy learning algorithm \cite{bagnell2003policy} that relies on $\mu$-reset access.  \psdp{} constructs partial policies $\estpi_{h:H} \in \Pi_{h:H}$, starting from layer $H$, and returns the estimated policy $\estpi_{1:H}$. We provide pseudocode for \psdp{} in \pref{alg:psdp}. 

\begin{algorithm}[!htp]
\caption{\psdp{} \cite{bagnell2003policy}}\label{alg:psdp}
	\begin{algorithmic}[1]
        \Require Reset distributions $\mu = \crl{\mu_h}_{h\in [H]}$, policy class $\Pi$.
        \For {$h = H,\cdots, 1$} 
            \State Initialize dataset $\cD_h = \varnothing$.
            \For {$n$ times}: \hfill \algcomment{Collecting $(x_h, a_h, v_h)$ requires $\mu$-reset access.}
            \State Sample $(x_h, a_h)$ where $x_h \sim \mu_{h}$ and $a_h \sim \unif(\cA)$.
            \State Let $v_h \coloneqq \sum_{h'=h}^H r_{h'}$ be the value of executing $a_h \circ \estpi_{h+1:H}$ from $x_h$.
            \State Set $\cD_h \gets \cD_h \cup \crl{(x_h, a_h, v_h)}$.
        \EndFor
        \State Call CB oracle: $\estpi_h \coloneqq \argmax_{\pi \in \Pi} \frac{1}{n} \sum_{(x_h, a_h, v_h)\in \cD_h} \frac{\ind{a_h = \pi(x_h)}}{A} \cdot v_h$. \label{line:cb-oracle}

        \EndFor
        \State \textbf{Return} $\estpi_{1:H}$.
	\end{algorithmic}
\end{algorithm} 
The classic analysis of \psdp{} requires two key assumptions:
\begin{enumerate}
    \item  An exploration condition of \emph{concentrability} (\pref{def:concentrability}).
    \item A representation condition called \emph{policy completeness}, to be described next.
\end{enumerate}

\paragraph{Policy Completeness.} Completeness assumptions on the function approximator class are often assumed in the study of RL algorithms (see discussion in \pref{sec:realizability-approach}). \psdp{} requires a notion called \emph{policy completeness}, which ensures that the policy class is closed under the policy improvement operator \cite{dann2018oracle, misra2020kinematic}. 

\begin{definition}[Policy Completeness]\label{def:policy-completeness}
A policy class $\Pi$ satisfies policy completeness if for every $\pi \in \Pi$ and $h \in [H]$, there exists a policy $\wt{\pi} \in \Pi$ such that:
% \colt{
% for all $x \in \statesp_h$, $\wt{\pi}_h(x) = \argmax_{a \in \actionsp} Q^\pi(x,a)$.
% }
\begin{align*}
    \text{for all $x \in \statesp_h$}: \quad \wt{\pi}_h(x) = \argmax_{a \in \actionsp} Q^\pi(x,a).
\end{align*}
\end{definition}
This is a worst-case variant of policy completeness. As we will see, the analysis of \psdp{} only requires a weaker $\ell_1$ variant of policy completeness, and the resulting suboptimality incurs an additive dependence on the $\ell_1$ policy completeness error. Policy realizability (which only asserts that such a $\wt{\pi}$ exists for $\optpi_{h+1:H}$ at every $h \in [H]$) is implied by policy completeness. 

\paragraph{Sample Complexity Guarantee for \psdp.} As a prototypical classical result on policy learning, we now state the guarantee for \psdp{}.

\begin{theorem}\label{thm:psdp-ub}
Suppose the policy class $\Pi$ satisfies policy completeness (\pref{def:policy-completeness}), and the reset distribution $\mu$ satisfies concentrability with parameter $\cconc$. With  probability \(1 - \delta\), \psdp{} finds an $\eps$-optimal policy using $\poly( \cconc, A, H, \eps^{-1}, \log \abs{\Pi}, \log \delta^{-1})$ samples from the reset distribution.
\end{theorem}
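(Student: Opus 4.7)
My proof will proceed by backward induction on the layer index, combined with the performance difference lemma and a change-of-measure argument using concentrability. For each layer $h \in [H]$, the goal is to establish a one-step regret bound: the policy $\estpi_h$ returned by the CB oracle on line \ref{line:cb-oracle} approximately maximizes the expected value of $Q^{\estpi_{h+1:H}}$ under the reset distribution $\mu_h$, relative to the best policy in $\Pi$ at this layer. Concentrability then converts this $\mu_h$-weighted guarantee into a bound on the $d^{\optpi}_h$-weighted suboptimality gap that appears in the performance difference lemma, and summing over $h$ yields the final bound.

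\textbf{Step 1 (CB oracle regret under a fixed tail).} Fix a layer $h$ and condition on the already-computed partial policy $\estpi_{h+1:H}$. For any $\pi \in \Pi$, the empirical objective on line \ref{line:cb-oracle} is an unbiased estimate of $\tfrac{1}{A}\En_{x\sim \mu_h}[Q^{\estpi_{h+1:H}}(x,\pi(x))]$, with summands bounded in $[0,1/A]$ since $v_h\in[0,1]$. Hoeffding plus a union bound over $\Pi$ shows that for $n = \widetilde{O}(A^{2}\log(|\Pi|H/\delta)/\epsstat^{2})$, uniformly over $\pi \in \Pi$ the empirical maximizer $\estpi_h$ satisfies $\En_{x\sim \mu_h}[Q^{\estpi_{h+1:H}}(x,\estpi_h(x))] \ge \max_{\pi\in\Pi} \En_{x\sim \mu_h}[Q^{\estpi_{h+1:H}}(x,\pi(x))] - \epsstat$. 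Applying policy completeness (\pref{def:policy-completeness}) to the tail policy $\estpi_{h+1:H}$ produces $\wt\pi\in\Pi$ with $\wt\pi(x)\in \argmax_a Q^{\estpi_{h+1:H}}(x,a)$, so the $\max_\pi$ above equals $\En_{x\sim \mu_h}[\max_a Q^{\estpi_{h+1:H}}(x,a)]$. Combining yields the per-layer advantage bound $\En_{x\sim\mu_h}\bigl[\max_a Q^{\estpi_{h+1:H}}(x,a) - Q^{\estpi_{h+1:H}}(x,\estpi_h(x))\bigr] \le \epsstat$.

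\textbf{Step 2 (Performance difference and concentrability).} Since policy completeness implies realizability, $\optpi \in \Pi$. By the performance difference lemma,
\[
V^{\optpi} - V^{\estpi} = \sum_{h=1}^H \En_{x\sim d^{\optpi}_h}\bigl[Q^{\estpi_{h+1:H}}(x,\optpi(x)) - Q^{\estpi_{h+1:H}}(x,\estpi_h(x))\bigr],
\]
and each summand is upper bounded by $\En_{x\sim d^{\optpi}_h}[\max_a Q^{\estpi_{h+1:H}}(x,a) - Q^{\estpi_{h+1:H}}(x,\estpi_h(x))]$. Using concentrability to change measure, $d^{\optpi}_h(x) \le \cconc\cdot \mu_h(x)$, and invoking Step 1 gives $V^{\optpi}-V^{\estpi} \le H\cconc\epsstat$. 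Setting $\epsstat = \epsilon/(H\cconc)$ yields the claimed polynomial sample complexity.

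\textbf{Main obstacle.} The subtle point is that Step 1 must hold uniformly for the \emph{random} tail policy $\estpi_{h+1:H}$ produced by earlier iterations. This is resolved cleanly because (a) the CB oracle outputs an element of $\Pi$, so the policy completeness witness $\wt\pi$ lives in the same class we union-bound over, and (b) the dataset $\cD_h$ is collected freshly after $\estpi_{h+1:H}$ is fixed, so conditioning on that tail gives i.i.d.\ samples to which Hoeffding applies. A minor additional care is that \pref{def:policy-completeness} is phrased in a worst-case (rather than $\ell_1$) form, which is what allows the bound $\max_\pi\En_{\mu_h}[Q^{\estpi_{h+1:H}}(x,\pi(x))] = \En_{\mu_h}[\max_a Q^{\estpi_{h+1:H}}(x,a)]$ to hold with no additive slack; relaxing to an $\ell_1$ variant would introduce an extra additive completeness error into the final bound.
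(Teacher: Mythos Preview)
Your proof is correct and follows essentially the same route as the paper: Hoeffding plus union bound for the contextual-bandit step, performance difference lemma with reference $d^{\optpi}$, and concentrability to change measure. The only stylistic difference is that the paper introduces an averaged policy-completeness error $\epspc(\estpi_{h+1:H})$ as an intermediate quantity (which pays off in the subsequent realizability-only analysis), whereas you directly invoke the worst-case completeness witness in Step~1; you even anticipate this at the end of your ``main obstacle'' paragraph.
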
 

Before we prove \pref{thm:psdp-ub}, we first set up some additional notation. We define an averaged notion of policy completeness; compared to \pref{def:policy-completeness}, this notion is weaker since it only requires completeness to hold in an averaged sense over the reset $\mu$.

For readability, we slightly abuse notation: for $Q$-functions we denote $Q^\pi_h(x,\pi) \coloneqq Q^\pi_h(x, \pi(x))$. Similarly, we sometimes denote rewards as $R(x,\pi) \coloneqq R(x, \pi(x))$ and transitions as $P(\cdot\mid x, \pi) \coloneqq P(\cdot\mid x, \pi(x))$. 

\begin{definition}[Average Policy Completeness]\label{def:averaged-pc} Fix any policy class $\Pi$, as well as exploratory distribution $\mu = \crl{\mu_h}_{h\in[H]}$. For any layer $h \in [H]$ and policy $\estpi \coloneqq \estpi_{h+1:H} \in \Pi_{h+1:H}$ we define the (average) \emph{policy completeness error}, denoted $\epspc: \Pi_{h+1:H} \to \bbR$, as
\begin{align*}
    \epspc(\estpi) \coloneqq \min_{\pi_h \in \Pi_h} ~ \En_{x \sim \mu_h} \brk*{\max_{a \in \cA}~ Q^{\estpi}(x, a) - Q^{\estpi}(x, \pi_h)}.
\end{align*}
\end{definition}
\pref{def:averaged-pc} is similar to previously defined notions of policy completeness \cite{scherrer2014local, agarwal2023variance}. As a point of comparison, Definition 2 of \cite{agarwal2023variance} defines the average policy completeness to be the worst case over the convex hull of suffix policies $\estpi$, i.e. $\epspc \coloneqq \sup_{\estpi \in \mathsf{Conv}(\Pi)} \epspc(\estpi)$, while we define it as a function which takes as input a rollout policy $\estpi$.

\begin{proof}[Proof of \pref{thm:psdp-ub}]
First, we state a standard generalization bound on the contextual bandit oracle invoked in \pref{line:cb-oracle}. With probability at least $1-\delta$, for every $h \in [H]$ the returned policy $\estpi_h$ satisfies
\begin{align*}
    \En_{x \sim \mu_h} \brk*{Q^{\estpi}(x, \estpi_h)} \ge \max_{\pi_h \in \Pi_h} \En_{x \sim \mu_h}\brk*{Q^{\estpi}(x, \pi_h)} - \epsstat, \quad \text{where}\quad \epsstat \coloneqq O\prn*{\sqrt{\frac{A\log (\abs{\Pi}/\delta)}{n} }}.\numberthis\label{eq:cb-eq}
\end{align*}
For every $h \in [H]$, let us define:
\begin{align*}
    \wt{\pi}_h^\star(x) \coloneqq \argmax_{a \in \cA}~ Q^{\estpi}(x,a), \quad \text{and} \quad \wt{\pi}_h \coloneqq \argmax_{\pi_h \in \Pi_h}~ \En_{x \sim \mu_h}\brk*{Q^{\estpi}(x, \pi_h)}
\end{align*}
Then we calculate:
\begin{align*}
    V^\star - V^{\estpi} &\overset{(i)}{=} \sum_{h=1}^H \En_{x \sim d^{\optpi}_h} \brk*{Q^{\estpi}(x, \optpi) - Q^{\estpi}(x, \estpi_h) } \\
    &\overset{(ii)}{\le} \sum_{h=1}^H  \En_{x \sim d^{\optpi}_h} \brk*{Q^{\estpi}(x, \wt{\pi}_h^\star) - Q^{\estpi}(x, \estpi_h) } \\
    &\overset{(iii)}{\le} \sum_{h=1}^H \nrm*{\frac{d^{\optpi}_h}{\mu_h}}_\infty \En_{x \sim \mu_h} \brk*{ Q^{\estpi}(x, \wt{\pi}_h^\star) - Q^{\estpi}(x, \estpi_h) } \\
    &\overset{(iv)}{\le} \cconc \cdot \sum_{h=1}^H \En_{x \sim \mu_h} \brk*{ Q^{\estpi}(x, \wt{\pi}_h^\star) - Q^{\estpi}(x, \estpi_h) }  \\
    &= \cconc \cdot \sum_{h=1}^H \prn*{ \En_{x \sim \mu_h} \brk*{ Q^{\estpi}(x, \wt{\pi}_h^\star) - Q^{\estpi}(x, \wt{\pi}_h) } + \En_{x \sim \mu_h} \brk*{ Q^{\estpi}(x, \wt{\pi}_h) - Q^{\estpi}(x, \estpi_h) } }\\ 
    &\overset{(v)}{\le} H \cconc  \epsstat + \cconc \sum_{h=1}^H  \epspc(\estpi_{h+1:H}).
\end{align*}
Here, $(i)$ follows by the Performance Difference Lemma, $(ii)$ is due to the optimality of $\wt{\pi}^\star_h$, $(iii)$ is due to nonnegativity of $Q^{\estpi}(x, \wt{\pi}_h^\star) - Q^{\estpi}(x, \estpi_h)$, $(iv)$ is due to the definition of $\cconc$, and $(v)$ follows by \pref{def:averaged-pc} and Eq.~\eqref{eq:cb-eq}. Therefore, if the policy completeness error is zero, then we have a bound which is at most $H \cconc \epsstat$, and therefore \psdp{} returns an $\eps$-optimal policy using $\poly(\cconc, A, H, \log\abs{\Pi}, \eps^{-1}, \log \delta^{-1})$ samples.
\end{proof}

\subsection{Upper Bounds for \psdp{} with Policy Realizability}\label{sec:psdp-upper}
The analysis of \psdp{} in \pref{thm:psdp-ub} crucially relied on the policy completeness assumption. Now we ask if policy completeness can be relaxed, i.e.,
\begin{center}
    \textit{Can we still get guarantees for \psdp{} with only realizability?} 
\end{center}

It is easy to see that policy realizability is \emph{insufficient} for \psdp{} even for horizon $H=2$, as shown in \pref{fig:psdp-lower-bound-simple}. Similar to lower bounds for offline RL \cite{foster2021offline}, the construction relies on \emph{overcoverage}, as $\mu$ has nonzero mass on a nonreachable state $\bar{s}_{1}$, which is somewhat unnatural. In the construction, \psdp{} may not even be consistent, since one can take $\gamma$ to be arbitrarily close to 0 so that with constant probability \psdp{} returns a $(1+\gamma)$-suboptimal policy. 

In this section, we circumvent the lower bound and show that if we make stronger assumptions on the reset distribution $\mu$, \psdp{} achieves consistency:
\begin{enumerate}
    \item If $\Pi$ is realizable and the reset $\mu$ has bounded pushforward concentrability (\pref{def:exploratory-pushforward-distribution}), \pref{thm:psdp-ub-pushforward} achieves $(\cpush)^{O(H)}$ sample complexity. 
    \item If $\Pi$ is realizable and the reset $\mu$ is admissible (\pref{def:admissible}) with bounded concentrability, \pref{thm:psdp-ub-admissible} achieves $(\cconc)^{O(H)}$ sample complexity.
\end{enumerate}
The two upper bounds are in general incomparable, as there exist settings in which one achieves a better guarantee than the other. In addition, to the best of our knowledge, neither result is implied by any known results for policy learning---note that the trivial bound of $A^H$ achieved by importance sampling \cite{kearns1999approximate, agarwal2019reinforcement} can be much larger when $\cpush \ll A$.

\begin{figure}[!t]
    \centering
    % \vspace{-2em}
\includegraphics[scale=0.30, trim={0cm 24cm 21cm 0cm}, clip]{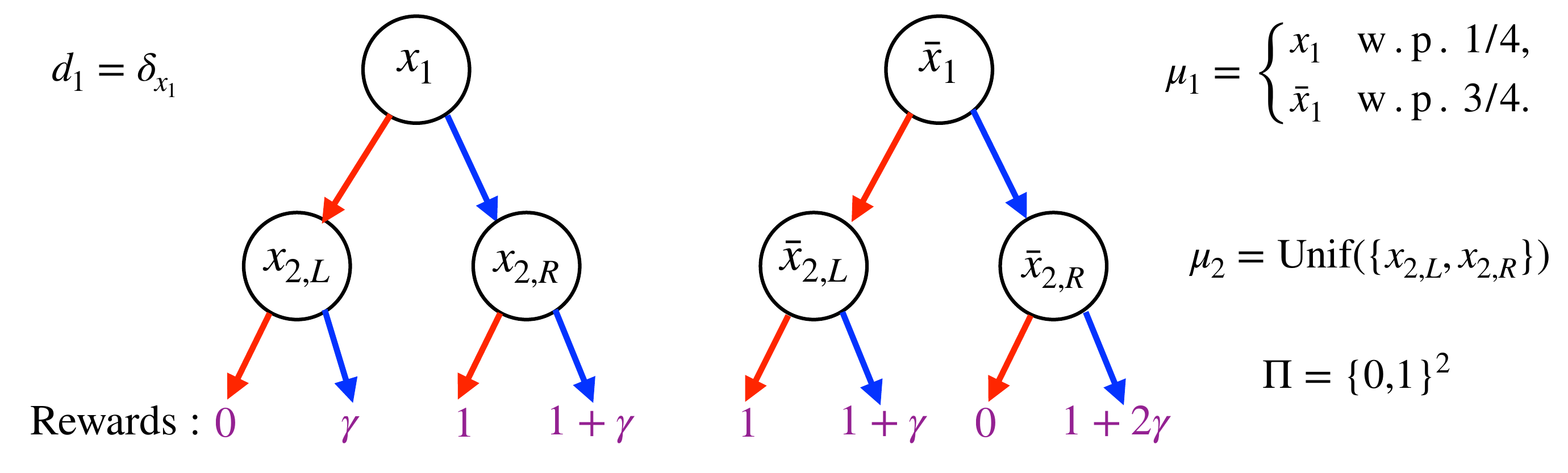}
    % \vspace{-1em}
    \caption{Lower bound for \psdp{} without policy completeness. \textcolor{red}{Red} arrows represent action $0$ and \textcolor{blue}{blue} arrows represent action $1$. In \textcolor{Purple}{purple} we denote the expectation of the stochastic reward. Let $\gamma > 0$ be an arbitrarily small constant. At layer $h=2$, with constant probability, \psdp{} selects $\textcolor{red}{\estpi^{(2)} \gets 0}$ since $\En_{x\sim \mu_2} V^{\pi_0}(x) = 1/2$ and $\En_{x\sim \mu_2} V^{\pi_1}(x) = 1/2 + \gamma$. Conditioned on $\estpi^{(2)} = 0$, we have $\En_{x \sim \mu_1} V^{\pi_0 \circ \estpi^{(2)}}(x) = 3/4$ while $\En_{x \sim \mu_1} V^{\pi_1 \circ \estpi^{(2)}}(x) = 1/4$, so therefore \psdp{} selects $\textcolor{red}{\estpi^{(1)} \gets 0}$. The returned policy $\estpi^{(1)} \circ \estpi^{(2)}$ is $(1+\gamma)$-suboptimal on $d_1$. Note that $\mu = \crl{\mu_1, \mu_2}$ satisfies $\cconc = 4$, and that $\Pi$ satisfies realizability.} 
    \label{fig:psdp-lower-bound-simple}
    % \vspace{-1.5em}
\end{figure}   

\subsubsection{Policy Realizability + Pushforward Concentrability}

\begin{theorem}\label{thm:psdp-ub-pushforward}
Suppose $\Pi$ is realizable, and the reset $\mu$ satisfies pushforward concentrability with parameter $\cpush$. With high probability, \psdp{} returns an $\eps$-optimal policy using \begin{align*}
    \poly((\cpush)^{H}, A, \log \abs{\Pi}, \eps^{-1}) \quad \text{samples.}
\end{align*}
\end{theorem}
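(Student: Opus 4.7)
The plan is to prove the bound by backward induction on the layer index, showing that the suffix policy $\estpi_{h:H}$ produced by \psdp{} is nearly as good as $\optpi_{h:H}$ when evaluated in expectation under the reset $\mu_h$. Concretely, I will establish that the quantity $\epsilon_h \coloneqq \En_{x \sim \mu_h}[V^{\optpi_{h:H}}(x) - V^{\estpi_{h:H}}(x)]$ obeys the one-step recursion $\epsilon_{h-1} \le \cpush \cdot \epsilon_h + \epsstat$, where $\epsstat = \wt{O}(\sqrt{A\log(\abs{\Pi}/\delta)/n})$ is the per-layer contextual-bandit statistical error already appearing in \pref{eq:cb-eq}. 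Unrolling will give $\epsilon_1 \le 2(\cpush)^H \epsstat$, and one final change of measure from $d_1$ to $\mu_1$ then delivers $V^\optpi - V^\estpi \le O((\cpush)^{H+1} \epsstat)$, which upon inverting yields the claimed polynomial sample complexity.

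The base case $h = H$ is immediate: the value at the last layer is simply the expected immediate reward and does not depend on any suffix, so applying the CB oracle guarantee \pref{eq:cb-eq} at layer $H$ with the realizable comparator $\optpi_H \in \Pi_H$ yields $\epsilon_H \le \epsstat$. For the inductive step, I add and subtract $Q^{\estpi_{h:H}}(x, \optpi_{h-1})$ to decompose
\begin{align*}
\epsilon_{h-1} = \En_{x \sim \mu_{h-1}}\brk*{Q^{\optpi_{h:H}}(x, \optpi_{h-1}) - Q^{\estpi_{h:H}}(x, \optpi_{h-1})} + \En_{x \sim \mu_{h-1}}\brk*{Q^{\estpi_{h:H}}(x, \optpi_{h-1}) - Q^{\estpi_{h:H}}(x, \estpi_{h-1})}.
\end{align*}
The second expectation is at most $\epsstat$ by the CB oracle guarantee at layer $h-1$ applied to the realizable comparator $\optpi_{h-1} \in \Pi_{h-1}$. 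In the first expectation the immediate reward at layer $h-1$ cancels, leaving $\En_{x \sim \mu_{h-1}} \En_{x' \sim P(\cdot \mid x, \optpi_{h-1}(x))}[V^{\optpi_{h:H}}(x') - V^{\estpi_{h:H}}(x')]$; because $\optpi$ is globally optimal the integrand is pointwise nonnegative, so I may change measure from the induced distribution $\wt{\mu}_h(x') = \sum_x \mu_{h-1}(x) P(x' \mid x, \optpi_{h-1}(x))$ to $\mu_h$ at the cost of a single multiplicative factor of $\cpush$ by \pref{def:exploratory-pushforward-distribution}, yielding the bound $\cpush \cdot \epsilon_h$ and closing the recursion.

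Unrolling from $h = H$ down to $h = 1$ gives $\epsilon_1 \le \sum_{k=0}^{H-1}(\cpush)^k \epsstat \le 2(\cpush)^H \epsstat$. A final application of pushforward concentrability (together with the standard convention $d_1 \le \cpush \mu_1$, or simply $\mu_1 = d_1$), once again justified by the pointwise nonnegativity of $V^\optpi - V^\estpi$, yields $V^\optpi - V^\estpi \le \cpush \cdot \epsilon_1 \le O((\cpush)^{H+1} \epsstat)$. Choosing $n = \wt{\Theta}((\cpush)^{2(H+1)} A \log(\abs{\Pi}/\delta) / \eps^2)$ rollouts per layer and union-bounding the $H$ invocations of the CB oracle produces the claimed $\poly((\cpush)^H, A, \log \abs{\Pi}, 1/\eps, \log(1/\delta))$ sample complexity.

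The main obstacle---and the key conceptual departure from the policy-complete analysis in \pref{thm:psdp-ub}---is the change-of-measure step on the ``suffix mismatch'' term. Without policy completeness, the \psdp{} CB oracle only certifies competitiveness against $\optpi_{h-1}$ rather than against the locally greedy policy $x \mapsto \argmax_a Q^{\estpi_{h:H}}(x,a)$, and consequently the residual mismatch between executing $\optpi$ versus $\estpi$ at all future layers cannot be absorbed into a single per-layer CB regret. The argument crucially exploits two structural facts: the pointwise optimality $V^{\optpi_{h:H}} \ge V^{\estpi_{h:H}}$, which legitimizes one-sided change of measure against an $L^\infty$ coefficient; and the pushforward property $P(\cdot \mid x, a) \le \cpush \cdot \mu_h(\cdot)$ for \emph{every} $(x,a)$, not merely those reached by $\optpi$. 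This is why pushforward concentrability, rather than the weaker $\cconc$, is essential, and the per-layer application of this change of measure is precisely what compounds into the $(\cpush)^H$ rate.
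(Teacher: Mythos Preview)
Your proof is correct and follows the same overall strategy as the paper: backward induction on $\En_{x \sim \mu_h}[V^\star(x) - V^{\estpi}(x)]$, with each step driven by a pushforward change of measure. There is, however, a worthwhile difference in the per-layer decomposition. The paper inserts $\max_a Q^{\estpi}(x,a)$, which produces a policy-completeness error term $\epspc(\estpi_{h+1:H})$; this is then bounded separately via \pref{lem:pc-bound-pushforward} by a second change of measure, yielding the recursion $\epsilon_h \le 2\cpush \cdot \epsilon_{h+1} + \epsstat$. You instead insert $Q^{\estpi}(x,\optpi)$ and bound $\En_{\mu_{h-1}}[Q^{\estpi}(\cdot,\optpi) - Q^{\estpi}(\cdot,\estpi)]$ directly by $\epsstat$ using only realizability in the CB oracle guarantee \pref{eq:cb-eq}, obtaining the cleaner recursion $\epsilon_{h-1} \le \cpush \cdot \epsilon_h + \epsstat$. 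Your route bypasses the policy-completeness machinery entirely and shaves a factor of $2^H$ off the final constant; the paper's route, on the other hand, makes explicit how $\epspc$---the central object in the policy-complete analysis of \pref{thm:psdp-ub}---can itself be controlled recursively once realizability holds.
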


The proof relies on the following lemma, which relates the policy completeness error to the pushforward concentrability coefficient of $\mu$.

\begin{lemma}\label{lem:pc-bound-pushforward}
Fix any layer $h \in [H]$. For any suffix policy $\estpi_{h+1:H}$ we have
\begin{align*}
    \epspc(\estpi_{h+1:H}) \le  \cpush \cdot \En_{x'\sim \mu_{h+1}} \brk*{ V^\star(x') - V^{\estpi}(x') }. 
\end{align*}
\end{lemma}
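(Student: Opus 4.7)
The plan is to chain together three standard steps: a realizability reduction, a pointwise bound via optimality of $\optpi_h$, and a change-of-measure that produces the $\cpush$ factor.

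First, by policy realizability we have $\optpi_h \in \Pi_h$, so I would simply plug $\optpi_h$ in as a feasible choice in the minimum defining $\epspc(\estpi_{h+1:H})$, yielding
\begin{align*}
    \epspc(\estpi_{h+1:H}) \le \En_{x \sim \mu_h} \brk*{\max_{a \in \actionsp} Q^{\estpi}(x,a) - Q^{\estpi}(x, \optpi_h)}.
\end{align*}

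Next, I would prove the pointwise inequality $\max_a Q^{\estpi}(x,a) - Q^{\estpi}(x,\optpi_h) \le Q^{\star}(x,\optpi_h) - Q^{\estpi}(x,\optpi_h)$. Let $a^\star(x) \in \argmax_a Q^{\estpi}(x,a)$. Adding and subtracting $Q^\star(x, a^\star(x))$ and $Q^\star(x, \optpi_h)$ gives
\begin{align*}
    Q^{\estpi}(x, a^\star(x)) - Q^{\estpi}(x, \optpi_h) &= \underbrace{[Q^{\estpi}(x,a^\star(x)) - Q^\star(x,a^\star(x))]}_{\le 0} \\
    &\qquad + \underbrace{[Q^\star(x,a^\star(x)) - Q^\star(x,\optpi_h)]}_{\le 0 \text{ by optimality of } \optpi_h} \\
    &\qquad + [Q^\star(x,\optpi_h) - Q^{\estpi}(x,\optpi_h)],
\end{align*}
so only the last nonnegative term survives. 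Note $Q^\star(x,\optpi_h) - Q^{\estpi}(x,\optpi_h) = \En_{x' \sim P(\cdot \mid x, \optpi_h)}[V^\star(x') - V^{\estpi}(x')]$ by the Bellman equation (the reward and action match, only the suffix differs).

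Finally, the main substance is a change-of-measure step. After the two reductions above, it remains to bound
\begin{align*}
    \En_{x \sim \mu_h}\En_{x' \sim P(\cdot \mid x, \optpi_h)} \brk*{V^\star(x') - V^{\estpi}(x')} \le \cpush \cdot \En_{x' \sim \mu_{h+1}}\brk*{V^\star(x') - V^{\estpi}(x')}.
\end{align*}
This is the key step where pushforward concentrability enters. I would swap the order of summation over $x$ and $x'$, and apply the pointwise bound $P(x' \mid x, \optpi_h(x)) \le \cpush \cdot \mu_{h+1}(x')$ from \pref{def:exploratory-pushforward-distribution}, using nonnegativity of $V^\star - V^{\estpi}$ to preserve the inequality. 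The inner sum over $x$ of $\mu_h(x)$ equals one, leaving precisely $\cpush \cdot \En_{x' \sim \mu_{h+1}}[V^\star(x') - V^{\estpi}(x')]$.

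There is no real obstacle here; the only subtlety worth double-checking is that the change-of-measure step crucially uses pushforward (rather than standard) concentrability, since $x$ is distributed as $\mu_h$ rather than the on-policy occupancy $d^{\optpi}_h$, and we want to avoid any dependence on $\optpi$ in the concentrability constant.
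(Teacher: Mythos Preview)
Your proposal is correct and follows essentially the same approach as the paper: plug in $\optpi_h$ via realizability, bound $\max_a Q^{\estpi}(x,a) - Q^{\estpi}(x,\optpi_h)$ by $Q^\star(x,\optpi_h) - Q^{\estpi}(x,\optpi_h)$, expand via Bellman, and change measure using pushforward concentrability. The only cosmetic difference is that the paper collapses your three-term decomposition into a single line ($\max_a Q^{\estpi}(x,a) \le Q^\star(x,\optpi)$, justified by ``optimality of $\optpi$''), whereas you spell out the two sub-inequalities explicitly.
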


\begin{proof}[Proof of \pref{lem:pc-bound-pushforward}] 
We have the following computation: 
\begin{align*}
    \hspace{2em}&\hspace{-2em} \epspc(\estpi_{h+1:H}) \\
    &= \min_{\pi_h \in \Pi_h}~\En_{x \sim \mu_h} \brk*{\max_{a \in \cA}~ Q^{\estpi}(x,a) - Q^{\estpi}(x, \pi_h)} \\
    &\le \En_{x \sim \mu_h} \brk*{\max_{a \in \cA}~ Q^{\estpi}(x,a) - Q^{\estpi}(x, \optpi)} \\
    &\le \En_{x \sim \mu_h} \brk*{ Q^{\star}(x,\optpi) - Q^{\estpi}(x, \optpi)} \\
    % &= \En_{x \sim \mu_h} \brk*{\max_{a \in \cA}~ r(x,a) + \En_{x' \sim P(\cdot \mid x,a)} V^{\estpi}(x') }   - \En_{x \sim \mu_h} \brk*{r(x, \optpi) + \En_{x' \sim P(\cdot \mid x,\optpi)} V^{\estpi}(x')} \\
    % &\le \En_{x \sim \mu_h} \brk*{\max_{a \in \cA}~ r(x,a) + \En_{x' \sim P(\cdot \mid x,a)} V^\star(x')}   - \En_{x \sim \mu_h} \brk*{r(x, \optpi) + \En_{x' \sim P(\cdot \mid x,\optpi)} V^{\estpi}(x')} \\
    &= \En_{x \sim \mu_h} \brk*{ r(x,\optpi) + \En_{x' \sim P(\cdot \mid x,\optpi)} V^\star(x')}   - \En_{x \sim \mu_h} \brk*{r(x, \optpi) + \En_{x' \sim P(\cdot \mid x,\optpi)} V^{\estpi}(x')} \\
    &= \En_{x \sim \mu_h, x' \sim P(\cdot \mid x , \optpi)} \brk*{ V^\star(x') - V^{\estpi}(x')}. \numberthis\label{eq:ub-policy-completeness-error} 
\end{align*}
The first inequality is due to the realizability $\optpi \in \Pi$, and the second one is due to the optimality of $\optpi$. 
% The second inequality is due to the fact that for any $x$, by definition of the optimal policy we have $V^\star(x) \ge V^{\estpi}(x)$. 
Now we will perform a change of measure to relate the bound in Eq.~\eqref{eq:ub-policy-completeness-error} to the error of $\estpi$ on the layer $h+1$.
\begin{align*}
    \En_{x \sim \mu_h, x' \sim P(\cdot \mid x, \optpi)} \brk*{ V^\star(x') - V^{\estpi}(x')} &= \En_{x' \sim \mu_{h+1}} \brk*{ \frac{\En_{x \sim \mu_h} P(x' \mid x, \optpi)}{\mu_{h+1}(x')} \cdot \prn*{ V^\star(x') - V^{\estpi}(x')} } \\
    &\le \cpush \cdot \En_{x' \sim \mu_{h+1}} \brk*{  V^\star(x') - V^{\estpi}(x') }, 
\end{align*}
where the inequality uses the nonnegativity of $V^\star(x') - V^{\estpi}(x')$ and the definition of pushforward concentrability. Plugging this back into Eq.~\eqref{eq:ub-policy-completeness-error} proves \pref{lem:pc-bound-pushforward}.
\end{proof}

\begin{proof}[Proof of \pref{thm:psdp-ub-pushforward}]
Using Performance Difference Lemma we have for the learned policy $\estpi \in \Pi$:
\begin{align*}
    V^\star - V^{\estpi} &= \sum_{h=1}^H \En_{x \sim d^{\estpi}_h} \brk*{V^\star(x) - Q^{\star}(x, \estpi_h) } = \sum_{h=1}^H \En_{x \sim \mu_h} \brk*{\frac{d^{\estpi_h}(x)}{\mu_h(x)}\prn*{V^\star(x) - Q^{\star}(x, \estpi_h) }}\\
    &\le \cconc \cdot \sum_{h=1}^H \En_{x \sim \mu_h} \brk*{V^\star(x) - Q^\star(x, \estpi)} \le  \cconc \cdot \sum_{h=1}^H \En_{x \sim \mu_h} \brk*{V^\star(x) - V^{\estpi}(x)}.
\end{align*}
The first inequality uses the fact that $\estpi \in \Pi$ as well as $V^\star(x) \ge Q^{\star}(x, \estpi_h)$, and the second inequality uses the latter fact again. From here, we apply an inductive argument to bound the suboptimality $\En_{x \sim \mu_h} \brk{V^\star(x) - V^{\estpi}(x)}$ for all $h \in [H]$. Fix any $h \in [H]$. We have
\begin{align*}
    \En_{x \sim \mu_h} \brk*{V^\star(x) -  V^{\estpi}(x)}
    &= \En_{x \sim \mu_h} \brk*{Q^\star(x, \optpi) -  Q^{\estpi}(x,\estpi)}\\
    &\le \En_{x \sim \mu_h} \brk*{Q^\star(x, \optpi) - Q^{\estpi}(x,\optpi) + \max_{a} Q^{\estpi}(x,a) -  Q^{\estpi}(x,\estpi)} \\
    &= \En_{x \sim \mu_h, x' \sim P(\cdot \mid x,\optpi)} \brk*{V^\star(x') - V^{\estpi}(x')} + \En_{x \sim \mu_h} \brk*{\max_{a} Q^{\estpi}(x,a) - Q^{\estpi}(x,\estpi)} \\
    &\le \cpush \En_{x' \sim \mu_{h+1}} \brk*{V^\star(x') - V^{\estpi}(x')} + \epsstat + \epspc(\estpi_{h+1:H}) \\
    &\le 2C_\mathrm{push} \cdot \En_{x' \sim \mu_{h+1}} \brk*{ V^\star(x') - V^{\estpi}(x') } + \epsstat, \numberthis\label{eq:recursion-1}
\end{align*}
where the last inequality uses \pref{lem:pc-bound-pushforward}. Recursive application of \pref{eq:recursion-1} and the fact that $\En_{x \sim \mu_H} \brk{V^\star(x) - V^{\estpi}(x)} = \En_{x \sim \mu_H} \brk{r(x, \optpi) - r(x, \estpi_H)} \le \epsstat$ gives us
\begin{align*}
    \En_{x \sim \mu_h} \brk*{V^\star(x) -  V^{\estpi}(x)} \le H \cdot (2\cpush)^{H} \epsstat,
\end{align*}
so therefore the final suboptimality of \psdp{} is at most
\begin{align*}
    V^\star - V^{\estpi} \le \cconc \cdot \sum_{h=1}^H \En_{x \sim \mu_h} \brk*{V^\star(x) - V^{\estpi}(x)} \le H^2 \cdot (2 \cpush)^{H+1} \epsstat.
\end{align*}
Choosing $n = \poly((\cpush)^{H}, A, \log \abs{\Pi}, \eps^{-1})$ so that the right hand side is at most $\eps$ proves the final bound.
\end{proof}

\subsubsection{Policy Realizability + Admissibility + Concentrability}

\begin{definition}
\label{def:admissible}
We say a distribution $\mu$ is admissible if for every $h \in [H]$ there exists some $\pi_b \in \Delta(\Pi)$:
\begin{align*}
    \mu_h(x) = d^{\pi_b}_h(x) \quad \text{for all}~x \in \statesp_h.
\end{align*}

\end{definition}

\begin{theorem}\label{thm:psdp-ub-admissible}
Suppose $\Pi$ is realizable, and the reset $\mu$ (1) satisfies concentrability with parameter $\cconc$, and (2) is admissible. With high probability, \psdp{} finds an $\eps$-optimal policy using $\poly((\cconc)^{H}, A, \log \abs{\Pi}, \eps^{-1})$ samples.
\end{theorem}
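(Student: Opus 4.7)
The plan is to mirror the proof of \pref{thm:psdp-ub-pushforward}, replacing the pushforward-concentrability bound from \pref{lem:pc-bound-pushforward} with one derived from admissibility combined with concentrability. I would set up the same inductive quantity $\Delta_h \coloneqq \En_{x \sim \mu_h}\brk*{V^\star(x) - V^{\estpi}(x)}$ and repeat the derivation leading to \pref{eq:recursion-1}: realizability ($\optpi \in \Pi$) together with the CB oracle guarantee \pref{eq:cb-eq} immediately gives
\begin{align*}
    \Delta_h \le 2\Gamma_h + \epsstat, \qquad \text{where} \qquad \Gamma_h \coloneqq \En_{x \sim \mu_h,\, x' \sim P(\cdot \mid x, \optpi)}\brk*{V^\star(x') - V^{\estpi}(x')}.
\end{align*}
This step uses only realizability and the contextual bandit oracle and makes no use of properties of $\mu$.

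The heart of the argument is to show $\Gamma_h \le \cconc \cdot \Delta_{h+1}$. By admissibility, $\mu_h = d^{\pi_b^{(h)}}_h$ for some $\pi_b^{(h)} \in \Delta(\Pi)$, and hence
\begin{align*}
    \Gamma_h = \En_{x' \sim \nu_{h+1}}\brk*{V^\star(x') - V^{\estpi}(x')}, \qquad \text{where} \qquad \nu_{h+1}(x') \coloneqq \sum_{x} \mu_h(x)\, P(x' \mid x, \optpi(x))
\end{align*}
is the one-step pushforward of $\mu_h$ under $\optpi$. I plan to establish the pointwise bound $\nrm*{\nu_{h+1}/\mu_{h+1}}_\infty \le \cconc$, which immediately yields $\Gamma_h \le \cconc \Delta_{h+1}$.

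The main obstacle will be this domination. The distribution $\nu_{h+1}$ is the layer-$(h+1)$ occupancy of a spliced policy that follows $\pi_b^{(h)}$ for layers $1,\dots,h-1$ and then plays $\optpi$ at layer $h$, which in general is not a member of $\Delta(\Pi)$, so concentrability cannot be invoked directly. My approach would be to write $\nu_{h+1} = \En_{\pi \sim \pi_b^{(h)}}\brk*{d^{\pi \circ \optpi}_{h+1}}$, where $\pi \circ \optpi$ denotes the policy that agrees with $\pi$ on layers $1,\dots,h-1$ and with $\optpi$ on layer $h$; because the state space is layered, the occupancy $d^{\pi \circ \optpi}_{h+1}$ depends only on the first-$h$-layer behavior of $\pi$ and $\optpi$, so when $\Pi$ is closed under layer-wise splicing (for example, product classes) the spliced policy can be realized in $\Pi$ and concentrability gives $d^{\pi \circ \optpi}_{h+1} \le \cconc \mu_{h+1}$. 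In full generality this closure may fail, and the plan would be to handle it by importance-sampling the layer-$h$ action: using $\ind{a = \optpi(x)} \le A \cdot \tfrac{1}{A}$ and then a change of measure, one incurs at worst an additional multiplicative factor of $A$ (absorbed into $\poly(A)$) and reduces to bounding $\En_{\pi \sim \pi_b^{(h)}}\brk*{d^\pi_{h+1}}$, which by concentrability is at most $\cconc \mu_{h+1}$ pointwise.

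Granted $\Gamma_h \le \cconc \Delta_{h+1}$, the recursion $\Delta_h \le 2\cconc \Delta_{h+1} + \epsstat$ unrolls from the base case $\Delta_H \le \epsstat$ (which follows directly from the CB guarantee at the final layer, since there are no future rewards to propagate) to $\Delta_h \lesssim (2\cconc)^{H-h+1} \epsstat$. Combining with the concentrability-based bound $V^\star - V^{\estpi} \le \cconc \sum_h \Delta_h \lesssim H^2 (2\cconc)^{H+1} \epsstat$ and plugging in $\epsstat = \cO\prn*{\sqrt{A \log(\abs{\Pi}/\delta)/n}}$ from \pref{eq:cb-eq}, setting the right-hand side to $\eps$ and solving for $n$ yields the claimed sample complexity $\poly((\cconc)^{H}, A, H, \log \abs{\Pi}, \eps^{-1})$.
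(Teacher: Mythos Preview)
Your overall strategy is sound and close to the paper's: deriving $\Delta_h \le 2\Gamma_h + \epsstat$ from realizability plus the contextual-bandit guarantee goes through exactly as in the pushforward proof, and you correctly isolate the crux---controlling the one-step pushforward $\nu_{h+1}=\mu_h\circ\optpi$ in terms of $\mu_{h+1}$ when $\Pi$ need not be closed under layer-wise splicing.

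However, your proposed patch does not work. Bounding $P(x'\mid x,\optpi(x))\le\sum_a P(x'\mid x,a)$ gives
\[
\nu_{h+1}(x')\;\le\;A\cdot\En_{\pi\sim\pi_b^{(h)}}\Bigl[\sum_x d^\pi_h(x)\cdot\tfrac1A\sum_a P(x'\mid x,a)\Bigr],
\]
and the right-hand side is the layer-$(h{+}1)$ occupancy of ``$\pi$ on layers $1{:}h{-}1$, then $\unif(\cA)$ at layer $h$''---still a spliced policy outside $\Delta(\Pi)$, \emph{not} $\En_{\pi\sim\pi_b^{(h)}}[d^\pi_{h+1}]$ as you assert. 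So the factor-of-$A$ reduction to concentrability does not go through, and the recursion $\Gamma_h\le A\cconc\,\Delta_{h+1}$ remains unproven.

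The paper's route is structurally different: rather than recursing on $\Delta_h$, it tracks the policy-completeness error $\epspc(\estpi_{h+1:H})$ through two helper lemmas. \pref{lem:transfer} is a transfer step that, for any admissible $\nu$, changes measure to $\mu_h$ using nonnegativity of $\max_a Q^{\estpi}(x,a)-Q^{\estpi}(x,\estpi_h)$ together with concentrability. \pref{lem:pc-bound} then unrolls $\epspc$ along the rollout sequence $\nu,\ \nu\circ\optpi,\ \nu\circ\optpi\circ\optpi,\ldots$, applying the transfer lemma at each step and asserting (using $\optpi\in\Pi$) that admissibility is preserved along this sequence. This is the same splicing step you flagged, so both routes ultimately rest on this closure property; the paper simply does not attempt to sidestep it via an action-level importance-sampling argument.
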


To prove \pref{thm:psdp-ub-admissible}, we first establish a few helper lemmas on the errors of the learned policy $\estpi$.

\begin{lemma}\label{lem:transfer} For any layer $h \in [H]$ and admissible distribution $\nu \in \Delta(\statesp_h)$, we have 
\begin{align*}
    \max_{\pi \in \Pi_h}~\En_{x \sim \nu} \brk*{Q^{\estpi}(x, \pi) - Q^{\estpi}(x, \estpi)} \le \cconc \prn*{\epsstat + \epspc(\estpi_{h+1:H})}.
\end{align*}
\end{lemma}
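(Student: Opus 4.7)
The plan is a three-step argument: relax the in-class maximum to a pointwise maximum over actions, change measure from $\nu$ to $\mu_h$ via admissibility plus concentrability, and then decompose using the policy completeness witness as a bridge policy.

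First, since every $\pi \in \Pi_h$ satisfies $Q^{\estpi}(x,\pi) \le \max_{a \in \actionsp} Q^{\estpi}(x,a)$ pointwise, and the right-hand side no longer depends on $\pi$, I would upper bound
\[
\max_{\pi \in \Pi_h} \En_{x \sim \nu}\brk*{Q^{\estpi}(x, \pi) - Q^{\estpi}(x, \estpi_h)} \le \En_{x \sim \nu}\brk*{\max_{a} Q^{\estpi}(x, a) - Q^{\estpi}(x, \estpi_h)}.
\]
The resulting integrand is pointwise nonnegative, which is the key property enabling a one-sided change of measure in the next step.

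Second, I would invoke admissibility of $\nu$: by \pref{def:admissible} there exists $\pi_b \in \Delta(\Pi)$ with $\nu = d^{\pi_b}_h$. Since concentrability bounds $\nrm*{d^\pi_h/\mu_h}_\infty \le \cconc$ uniformly over $\pi \in \Pi$, linearity of $d^\pi_h$ in $\pi$ extends the bound to the mixture $\pi_b$, yielding $\nu(x)/\mu_h(x) \le \cconc$ pointwise. Combined with the nonnegativity from the first step, this produces
\[
\En_{x \sim \nu}\brk*{\max_{a} Q^{\estpi}(x, a) - Q^{\estpi}(x, \estpi_h)} \le \cconc \cdot \En_{x \sim \mu_h}\brk*{\max_{a} Q^{\estpi}(x, a) - Q^{\estpi}(x, \estpi_h)}.
\]

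Finally, I would introduce the minimizer $\wt{\pi}_h \in \argmin_{\pi_h \in \Pi_h} \En_{x \sim \mu_h}\brk*{\max_a Q^{\estpi}(x,a) - Q^{\estpi}(x, \pi_h)}$, which witnesses $\epspc(\estpi_{h+1:H})$ by \pref{def:averaged-pc}, and decompose
\[
\En_{x \sim \mu_h}\brk*{\max_a Q^{\estpi}(x,a) - Q^{\estpi}(x, \estpi_h)} = \En_{x \sim \mu_h}\brk*{\max_a Q^{\estpi}(x,a) - Q^{\estpi}(x, \wt{\pi}_h)} + \En_{x \sim \mu_h}\brk*{Q^{\estpi}(x, \wt{\pi}_h) - Q^{\estpi}(x, \estpi_h)}.
\]
The first term equals $\epspc(\estpi_{h+1:H})$ by definition, and the second is at most $\epsstat$ by the contextual bandit oracle guarantee \eqref{eq:cb-eq}, which applies because $\wt{\pi}_h \in \Pi_h$. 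Multiplying through by $\cconc$ gives the claim. There is no serious conceptual obstacle; the main subtlety to get right is that admissibility is precisely what allows the change-of-measure step to convert the $\pi_b$-occupancy to a bounded density ratio against $\mu_h$, by combining the $\ell_\infty$ concentrability of each $\pi \in \Pi$ with linearity in the mixture.
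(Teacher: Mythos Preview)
Your proof is correct and follows essentially the same approach as the paper: relax to the pointwise maximum over actions, change measure from $\nu$ to $\mu_h$ via admissibility and concentrability (using nonnegativity of the integrand), and then split the $\mu_h$-expectation through the policy-completeness witness to obtain $\epspc(\estpi_{h+1:H}) + \epsstat$. The paper writes the final decomposition in terms of $\max_{\pi \in \Pi_h} \En_{\mu_h}[Q^{\estpi}(x,\pi)]$ rather than naming the witness $\wt{\pi}_h$ explicitly, but this is the same argument.
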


\begin{proof}
We calculate that
\begin{align*}
    \hspace{2em}&\hspace{-2em} \max_{\pi \in \Pi_h}~ \En_{x \sim \nu} \brk*{Q^{\estpi}(x, \pi) - Q^{\estpi}(x, \estpi_h)} \\
    &= \max_{\pi \in \Pi_h} \En_{x \sim \nu} \brk*{Q^{\estpi}(x, \pi) - \max_a Q^{\estpi}(x,a) } + \En_{x \sim \nu} \brk*{\max_a Q^{\estpi}(x,a) - Q^{\estpi}(x, \estpi_h)} \\
    &\le \cconc \cdot \En_{x \sim \mu_h} \brk*{\max_a Q^{\estpi}(x,a) - Q^{\estpi}(x, \estpi_h)} \\
    &= \cconc \cdot \Bigg( \En_{x \sim \mu_h} \brk*{\max_a Q^{\estpi}(x,a)} - \max_{\pi \in \Pi_h} \En_{x \sim \mu_h} \brk*{ Q^{\estpi}(x, \pi)}  \\
    &\hspace{8em} + \max_{\pi \in \Pi_h} \En_{x \sim \mu_h} \brk*{Q^{\estpi}(x, \pi)} - \En_{x \sim \mu_h} \brk*{Q^{\estpi}(x, \estpi_h)} \Bigg) \\
    &\le \cconc \prn*{\epsstat + \epspc(\estpi_{h+1:H})}.
\end{align*}
In the first inequality we use the fact that $\nu$ is admissible, so we can use concentrability to relate the density ratios $\nrm{\nu/\mu}_\infty$.
\end{proof}

\emph{Additional Notation.} In the subsequent analysis, for any distribution $\nu$ we denote $\epspc(\estpi, \nu)$ to be the policy completeness error under distribution $\nu$, i.e.,
\begin{align*}
    \epspc(\estpi, \nu) \coloneqq \min_{\pi_h \in \Pi_h} ~ \En_{x \sim \nu} \brk*{\max_{a \in \cA}~ Q^{\estpi}(x, a) - Q^{\estpi}(x, \pi)}.
\end{align*}
For any partial policy $\pi_{h:t-1}$, we also denote $\nu \circ \pi_{h:t-1} \in \Delta(\statesp_{t})$ to denote the distribution over states in layer $t$ which is achieved by first sampling a state $x_h \sim \nu$ then rolling out with partial policy $\pi_{h:t-1}$.

\begin{lemma}\label{lem:pc-bound}
For any layer $h \in [H]$ and admissible distribution $\nu \in \Delta(\statesp_h)$, we have
\begin{align*}
    \epspc(\estpi_{h+1:H}, \nu) \le (H- h) \cdot \cconc \epsstat + \cconc \cdot \sum_{h' = h+1}^H \epspc(\estpi_{h'+1:H}) 
\end{align*}
\end{lemma}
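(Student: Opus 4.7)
The plan is to mimic the classical \psdp{} analysis but substitute the use of policy completeness with realizability ($\optpi \in \Pi$) together with a one-step Bellman expansion followed by Performance Difference Lemma (PDL). First, from the definition of $\epspc(\estpi_{h+1:H}, \nu)$ and realizability, I would instantiate $\pi_h = \optpi$ in the $\min$ and use $\max_a Q^{\estpi}(x,a) \le Q^\star(x,\optpi)$ (since $Q^{\estpi} \le Q^\star$ pointwise) to get
\begin{align*}
\epspc(\estpi_{h+1:H}, \nu) \le \En_{x \sim \nu}\brk*{Q^\star(x,\optpi) - Q^{\estpi}(x,\optpi)} = \En_{x' \sim \nu \circ \optpi}\brk*{V^\star(x') - V^{\estpi}(x')},
\end{align*}
where $\nu \circ \optpi \in \Delta(\statesp_{h+1})$ is the distribution obtained by sampling $x_h \sim \nu$ and transitioning under action $\optpi(x_h)$.

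Next, I would invoke PDL to unroll $V^\star(x') - V^{\estpi}(x')$ layer by layer as
\begin{align*}
V^\star(x') - V^{\estpi}(x') = \sum_{h'=h+1}^H \En^{\optpi}\brk*{Q^{\estpi}(x_{h'}, \optpi) - Q^{\estpi}(x_{h'}, \estpi_{h'}) \mid x_{h+1} = x'}.
\end{align*}
Taking the outer expectation over $x' \sim \nu \circ \optpi$, the distribution at layer $h'$ becomes $d^{\tilde\pi}_{h'}$, where $\tilde\pi$ is the composite policy that follows $\pi_b$ (the policy witnessing $\nu = d^{\pi_b}_h$) for layers $1,\ldots,h-1$ and $\optpi$ from layer $h$ onwards. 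Under the standard closure property on $\Delta(\Pi)$ that allows layerwise concatenation of policies in $\Pi$ (which is implicit in the admissibility framework), we have $\tilde\pi \in \Delta(\Pi)$, so $d^{\tilde\pi}_{h'}$ is admissible for every $h' \ge h+1$.

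Finally, since $\optpi \in \Pi_{h'}$, we can bound
\begin{align*}
\En_{y \sim d^{\tilde\pi}_{h'}}\brk*{Q^{\estpi}(y, \optpi) - Q^{\estpi}(y, \estpi_{h'})} \le \max_{\pi \in \Pi_{h'}} \En_{y \sim d^{\tilde\pi}_{h'}}\brk*{Q^{\estpi}(y, \pi) - Q^{\estpi}(y, \estpi_{h'})},
\end{align*}
and then apply \pref{lem:transfer} with the admissible distribution $d^{\tilde\pi}_{h'}$ to bound each summand by $\cconc(\epsstat + \epspc(\estpi_{h'+1:H}))$. Summing over $h' = h+1,\ldots,H$ yields the claimed bound of $(H-h)\cconc\epsstat + \cconc \sum_{h'=h+1}^H \epspc(\estpi_{h'+1:H})$.

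The main obstacle is verifying that the intermediate rollout distribution $d^{\tilde\pi}_{h'}$ is itself admissible, as this is what allows a second, cleaner invocation of \pref{lem:transfer}. This requires the mild closure property that $\Delta(\Pi)$ contains the concatenation of a policy in $\Delta(\Pi)$ for early layers with $\optpi \in \Pi$ for later layers; this is the only place beyond plain realizability where structure on $\Pi$ enters, and it should be flagged explicitly. Apart from this point, the remaining steps are routine one-step expansion and reuse of \pref{lem:transfer}.
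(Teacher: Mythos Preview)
Your proposal is correct and follows essentially the same argument as the paper. The paper also upper bounds $\epspc(\estpi_{h+1:H},\nu)$ by $\En_{x\sim\nu}\brk{Q^\star(x,\optpi)-Q^{\estpi}(x,\optpi)}$ via realizability and $Q^{\estpi}\le Q^\star$, then unrolls along $\optpi$ and applies \pref{lem:transfer} at each intermediate layer; the only cosmetic difference is that the paper performs the unrolling by a one-step Bellman recursion (peeling one $\cconc(\epsstat+\epspc)$ term at a time) whereas you invoke PDL once to get the whole sum. The admissibility-closure issue you flag---that rolling $\optpi$ forward from an admissible $\nu$ keeps the distribution admissible---is precisely what the paper uses when it writes ``Because $\nu$ is admissible, so is $\nu\circ\optpi$,'' so your concern is well placed but not an additional gap relative to the paper.
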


\begin{proof}
Using the definition of policy completeness we have
\begin{align*}
    \epspc(\estpi_{h+1:H}, \nu) &\le \En_{x \sim \nu} \brk*{\max_a Q^{\estpi}(x,a) - Q^{\estpi}(x, \optpi)} \le \En_{x \sim \nu} \brk*{ Q^{\star}(x,\optpi) - Q^{\estpi}(x, \optpi)}. 
\end{align*}
Now, we apply a recursive argument, which gives us 
\begin{align*}
     \epspc(\estpi_{h+1:H}, \nu) &\le \En_{x \sim \nu} \brk*{ Q^{\star}(x,\optpi) - Q^{\estpi}(x, \optpi)} \\
    &= \En_{x' \sim \nu \circ \optpi} \brk*{ Q^{\star}(x',\optpi) - Q^{\estpi}(x', \estpi)} \\
    &= \En_{x' \sim  \nu \circ \optpi} \brk*{ Q^{\star}(x',\optpi) - Q^{\estpi}(x', \optpi) + Q^{\estpi}(x', \optpi) - Q^{\estpi}(x', \estpi) }
\end{align*}
Because $\nu$ is admissible, so is $\nu \circ \optpi$. Therefore, the second term in the sum is bounded using \pref{lem:transfer}:
\begin{align*}
    \En_{x' \sim  \nu \circ \optpi} \brk*{ Q^{\estpi}(x', \optpi) - Q^{\estpi}(x', \estpi) } &\le \cconc \prn*{ \epspc(\estpi_{h+2:H}) + \epsstat }.
\end{align*}
The first term in the sum can be rewritten as
\begin{align*}
    \En_{x' \sim  \nu \circ \optpi} \brk*{ Q^{\star}(x',\optpi) - Q^{\estpi}(x', \optpi)} = \En_{{x''} \sim  \nu \circ \optpi \circ \optpi} \brk*{ Q^{\star}(x'',\optpi) - Q^{\estpi}(x'', \estpi)}.
\end{align*}
Applying recursion, we get the final bound of
\begin{align*}
    \epspc(\estpi_{h+1:H}, \nu) &\le (H - h) \cdot \cconc \epsstat + \cconc \cdot \sum_{h' = h+1}^H \epspc(\estpi_{h'+1:H}).
\end{align*}
This concludes the proof of \pref{lem:pc-bound}.
\end{proof}

\begin{proof}[Proof of \pref{thm:psdp-ub-admissible}]
We compute the suboptimality as
\begin{align*}
    V^\star - V^{\estpi} &= \sum_{h=1}^H \En_{x \sim d^{\star}_h} \brk*{Q^{\estpi}(x, \optpi) - Q^{\estpi}(x, \estpi) } &\text{(Performance Difference Lemma)} \\
    &\le H \cconc \epsstat + \cconc \prn*{ \sum_{h=1}^H \epspc(\estpi_{h+1:H}) }. &\text{ (\pref{lem:transfer})}
\end{align*}
Now we apply \pref{lem:pc-bound} to show that the policy completeness error can be bounded by the downstream policy completeness errors, using the admissibility of $\mu$.
\begin{align*}
    \hspace{2em}&\hspace{-2em} V^\star - V^{\estpi} \\
    &\le H \cconc \epsstat +  \cconc \sum_{h=1}^H \epspc(\estpi_{h+1:H}) \\
    &= H \cconc \epsstat +  \cconc \cdot \prn*{\epspc(\estpi_{2:H}) + \sum_{h=2}^H \epspc(\estpi_{h+1:H}) } \\
    &\le H \cconc \epsstat +  \cconc \cdot \prn*{\prn{H - 1} \cconc \epsstat + (1+ \cconc) \cdot \sum_{h=2}^H \epspc(\estpi_{h+1:H}) } &\text{(\pref{lem:pc-bound})}\\
    &\lesssim H (\cconc)^2 \epsstat + \prn{1+\cconc}^2 \cdot \prn*{ \sum_{h=2}^H \epspc(\estpi_{h+1:H}) } \\
    &= H (\cconc)^2 \epsstat + \prn{1+\cconc}^2 \cdot \prn*{  \epspc(\estpi_{3:H}) + \sum_{h=3}^H \epspc(\estpi_{h+1:H}) } \\
    &\le H (\cconc)^2 \epsstat \\
    &\hspace{3em} + \prn{1+\cconc}^2 \cdot \prn*{  \prn{H - 2} \cconc \epsstat + \prn{1+\cconc} \sum_{h=3}^H \epspc(\estpi_{h+1:H}) } &\text{(\pref{lem:pc-bound})}\\
    &\lesssim H \prn*{1+\cconc}^3 \epsstat + \prn{1+\cconc}^3 \prn*{ \sum_{h=3}^H \epspc(\estpi_{h+1:H}) }.
\end{align*}
Continuing this way (and observing that $\epspc(\estpi_{H+1} = \varnothing) = 0$) we get a final bound of
\begin{align*}
    V^\star - V^{\estpi} \lesssim H \prn{1+\cconc}^H \epsstat.
\end{align*}
Setting $n = \poly((\cconc)^{H}, A, \log \abs{\Pi}, \eps^{-1})$ makes the RHS at most $\eps$, thus proving \pref{thm:psdp-ub-admissible}.
\end{proof}

\subsection{Lower Bounds for \psdp{} and \cpi{}}\label{sec:psdp-lower}
Now we will show that exponential error compounding is unavoidable for \psdp{} in the absence of policy completeness. \psdp{} relies on a reduction to a contextual bandit oracle. For the lower bound statement, we will assume that $\epsstat > 0$ is a fixed constant and \psdp{} is equipped with a \emph{worst case} oracle $\mathsf{CB}_{\epsstat}$ which for every layer $h \in [H]$ always returns an \emph{arbitrary policy} $\estpi_h$ satisfying
\begin{align*}
    \En_{x \sim \mu_h} \brk*{Q^{\estpi}(x, \estpi_h)} \ge \max_{\pi \in \Pi} \En_{x \sim \mu_h}\brk*{Q^{\estpi}(x, \pi)} - \epsstat.
\end{align*}
Thus, the lower bound statement has the flavor of a statistical query lower bound \cite{kearns1998efficient}, which also assumes a worst-case response up to accuracy $\epsstat$.

\begin{theorem}\label{thm:psdp-lower-bound}
Let $H \ge 2$. Fix any $\epsstat > 0$ and parameter $\cpush \ge 5H$. There exists a tabular MDP $M$ with $S = O(H^2)$ states, $A = O(H)$ actions, and horizon $H$, realizable policy class $\Pi$ of size $2^{\wt{O}(H)}$, and exploratory distribution $\mu$ which is admissible and satisfies pushforward concentrability with parameter $\cpush$, so that \psdp{} equipped with oracle $\mathsf{CB}_{\epsstat}$ returns a policy $\estpi$:
\begin{align*}
    V^\star - V^{\estpi} \ge (\cpush)^{\Omega(H)} \epsstat.
\end{align*}
\end{theorem}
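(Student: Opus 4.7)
The plan is to construct an adversarial MDP and reset distribution where \psdp{}'s backward-induction procedure can be forced by the worst-case oracle $\mathsf{CB}_{\epsstat}$ to accumulate error geometrically at a rate of $\cpush$ per layer. Conceptually, this generalizes the two-layer counterexample of \pref{fig:psdp-lower-bound-simple}: at each layer $h$, I will exhibit a ``bad'' candidate policy that beats the right choice by at most $\epsstat$ on the $\mu_h$-averaged $Q$-function of the rolled-out suffix $\estpi_{h+1:H}$, but loses by a factor of $\cpush$ more when the induced policy is rolled out from the initial distribution (equivalently, on the reachable distribution $d^{\optpi}_h$, which concentrates on the ``good'' state).

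The construction will use $H$ layers, with a single good state $g_h$ per layer (reachable only under $\optpi$) and $O(H)$ distractor states $\tau_h^{(1)},\dots,\tau_h^{(H)}$, giving $S = O(H^2)$ states, together with actions $\actionsp = \{a^\star, a^{(1)},\dots,a^{(H)}\}$ so that $A = O(H)$. From $g_h$ the optimal action $a^\star$ transits deterministically to $g_{h+1}$, while any other action leaks into a distractor chain from which every future $g_{h'}$ is unreachable. Rewards live only at layer $H$ and are arranged so that $\optpi$, which always plays $a^\star$ on good states, is the unique optimum with $V^\star = 1$. The policy class $\Pi$ will be a parameter-shared class (in the style of the all-active class of \pref{sec:C-pi}) specifying behavior on $g_h$ and on distractors; by indexing behaviors by $\wt O(H)$ bits we can guarantee $|\Pi| \le 2^{\wt{O}(H)}$ and $\optpi \in \Pi$ (realizability).

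The reset $\mu_h$ will be taken as a mixture of occupancies $d^\pi_h$ for a carefully chosen finite set of base policies in $\Pi$, which makes $\mu$ admissible by definition. The mixture weights will be tuned so that $\mu_h(g_h) = \Theta(1/\cpush)$ and $\mu_h(\tau_h^{(j)}) = \Theta(1/H)$ for each distractor. The only nontrivial transition to check for pushforward concentrability originates at $g_{h-1}$ under $a^\star$ and lands on $g_h$ with probability $1$; this gives ratio $\Theta(\cpush)$. Every other $(x,a,x')$ ends on a distractor state with $\mu$-mass $\Omega(1/H) \gg 1/\cpush$ under the hypothesis $\cpush \ge 5H$, so pushforward concentrability is saturated at exactly $\cpush$. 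The core technical step is a backward induction showing that, at every layer $h$, the oracle $\mathsf{CB}_{\epsstat}$ can return a policy $\estpi_h$ satisfying
\begin{align*}
    V^{\estpi_{h:H}}(g_h) \le V^\star(g_h) - \delta_h, \qquad \delta_H \ge \epsstat, \qquad \delta_{h-1} \ge \cpush \cdot \delta_h,
\end{align*}
so that the final suboptimality $V^\star - V^{\estpi} \ge \delta_1 \ge \cpush^{\Omega(H)}\epsstat$. The amplification is extracted as follows: because $\mu_h(g_h) = \Theta(1/\cpush)$, even a large $\Theta(\cpush\,\delta_h)$ deviation of $\estpi_h$ on $g_h$ contributes only $\Theta(\delta_h)$ to the $\mu_h$-weighted $Q$-objective, which can be cancelled by $O(\epsstat)$ distractor-side differences engineered into $Q^{\estpi_{h+1:H}}$. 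Hence the oracle is free (within its $\epsstat$ slack) to select the misleading policy, and the damage on the reachable distribution, which places all mass on $g_h$, scales up by the density-ratio factor $\cpush$.

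The main obstacle is the joint design: simultaneously enforcing (a) policy realizability $\optpi \in \Pi$, (b) admissibility of $\mu$ as an in-class mixture, (c) pushforward concentrability exactly at rate $\cpush$, and (d) enough structural freedom in the shaped distractor $Q$-values of the \emph{suboptimal} suffix $\estpi_{h+1:H}$ (not of $\optpi$) so that the inductive gap is genuinely multiplied by $\cpush$ at every layer rather than being damped by accidental ties or oracle tiebreaking. I anticipate resolving this by adjoining a small bank of ``diagnostic'' distractor policies to the all-active backbone of \pref{sec:C-pi}, whose occupancies supply the precise mass on $g_h$ needed for (c) while leaving $|\Pi|$ at $2^{\wt O(H)}$; the distractor-reward shaping can then be chosen backward from layer $H$ to control $Q^{\estpi_{h+1:H}}$ at each subsequent step. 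Finally, I will note that the same MDP and argument yield an analogous lower bound against \cpi{} \citep{kakade2002approximately}, since \cpi{}'s greedy update also relies on the $\mu$-weighted advantage of a rolled-out policy and is therefore vulnerable to exactly this kind of off-support reward shaping.
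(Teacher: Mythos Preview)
Your high-level picture—underweighting a good state by $\Theta(1/\cpush)$ so that the oracle's $\epsstat$ slack translates into a $\cpush$-times-larger error on the reachable distribution—is the right intuition, but the concrete plan has a fatal arithmetic inconsistency that the paper's construction is specifically engineered to avoid.

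You declare $V^\star = 1$ and then claim the recursion $\delta_{h-1} \ge \cpush\,\delta_h$ with $\delta_H \ge \epsstat$. But suboptimality is bounded by the value range, so $\delta_1 \le V^\star = 1$; this forces $\epsstat \le \cpush^{-(H-1)}$, contradicting the hypothesis ``Fix any $\epsstat > 0$.'' Relatedly, your step ``cancelled by $O(\epsstat)$ distractor-side differences'' cannot work once $\delta_h \gg \epsstat$: to make a $\Theta(\cpush\,\delta_h)$ gap on $g_h$ invisible to $\mathsf{CB}_{\epsstat}$ after multiplying by $\mu_h(g_h)=\Theta(1/\cpush)$, the distractor contribution in favor of the bad action must be at least $\Theta(\delta_h) - \epsstat$, not $O(\epsstat)$. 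So both the good-state deviation and the compensating distractor differential must themselves grow like $\cpush^{H-h}\epsstat$, which requires rewards of that magnitude somewhere in the MDP.

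The paper handles this by letting the terminal rewards scale exponentially: a bank of ``boring'' absorbing states $x_{H,i}$ with $R(x_{H,i},\cdot) = \tfrac{\cpush^{i} p^{i-1}}{i}\,\epsstat$, together with \emph{two} special states per layer—a high-mass $x_{h,\diamond}$ (where the misleading signal lives) and a low-mass $x_{h,\star}$ (on the optimal trajectory). From $x_{h,\diamond}$, action $0$ routes to the boring state with index $H-h$, while from $x_{h,\star}$ action $1$ routes to index $H-h+1$; the geometric reward ladder then guarantees that at every layer the $\mu_h$-weighted advantage of action $0$ over action $1$ exceeds $\epsstat$, so $\mathsf{CB}_{\epsstat}$ may always return $\estpi_h\equiv 0$. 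The policy class is just the open-loop class (size $(H+2)\cdot 2^H$), and admissibility is obtained not by mixing occupancies but by adding ``highway'' actions at layer $0$ that deterministically reach the later-layer resets. The exponential amplification lives in the reward scale, not in a recursion on bounded value gaps—this is the missing ingredient in your proposal.
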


\pref{thm:psdp-lower-bound} is a converse to the positive results of \pref{thm:psdp-ub-pushforward} and \ref{thm:psdp-ub-admissible}, showing that \psdp{} can have exponential in $H$ sample complexity. The lower bound construction in \pref{thm:psdp-lower-bound} as well as the earlier one from \pref{fig:psdp-lower-bound-simple} are given by tabular MDPs. Thus, even simple tabular RL algorithms which use online access to the MDP can solve these constructions! This indicates an \emph{algorithmic limitation} of using dynamic programming to solve policy learning.

Lastly, we also remark that the constructions in \pref{fig:psdp-lower-bound-simple} and \pref{thm:psdp-lower-bound} also apply to \cpi{} \cite{kakade2002approximately}; we refer the reader to \cite[Section 14 of][]{agarwal2019reinforcement} for an exposition of the \cpi{} algorithm. At a high level, the \cpi{} algorithm generates a sequence of policy iterates $\pi^{(1)}, \pi^{(2)}, \cdots$ such that each policy iterate improves upon the previous one and terminates whenever:
\begin{align*}
    \max_{\wt{\pi} \in \Pi}~\En_{x \sim d^{\pi^{(t)}}_\mu} \brk*{Q^{\pi^{(t)}}(x, \wt{\pi}) - Q^{\pi^{(t)}}(x, \pi^{(t)})} \le \eps.
\end{align*}
where $d^{\pi^{(t)}}_\mu$ is the occupancy measure obtained by running the current iterate $\pi^{(t)}$ starting from the reset $\mu$ and $\eps> 0$ is some predefined threshold which represents the accuracy to which \cpi{} solves the policy improvement problem. Thus, if it is not possible to greatly improve (by at least $\eps$) the average $Q$-function by selecting a different policy $\wt{\pi}$, then \cpi{} will terminate. In our constructions, one can check that if we initialize to the all-zeros policy $\pi^{(1)} \equiv 0$, then \cpi{} will terminate immediately even though $\pi^{(1)}$ has constant suboptimality.

In the rest of this section, we will prove \pref{thm:psdp-lower-bound}.

\subsubsection{Lower Bound Construction}
\begin{figure}[!t]
    \centering
\includegraphics[scale=0.31, trim={0.4cm 0cm 15cm 0cm}, clip]{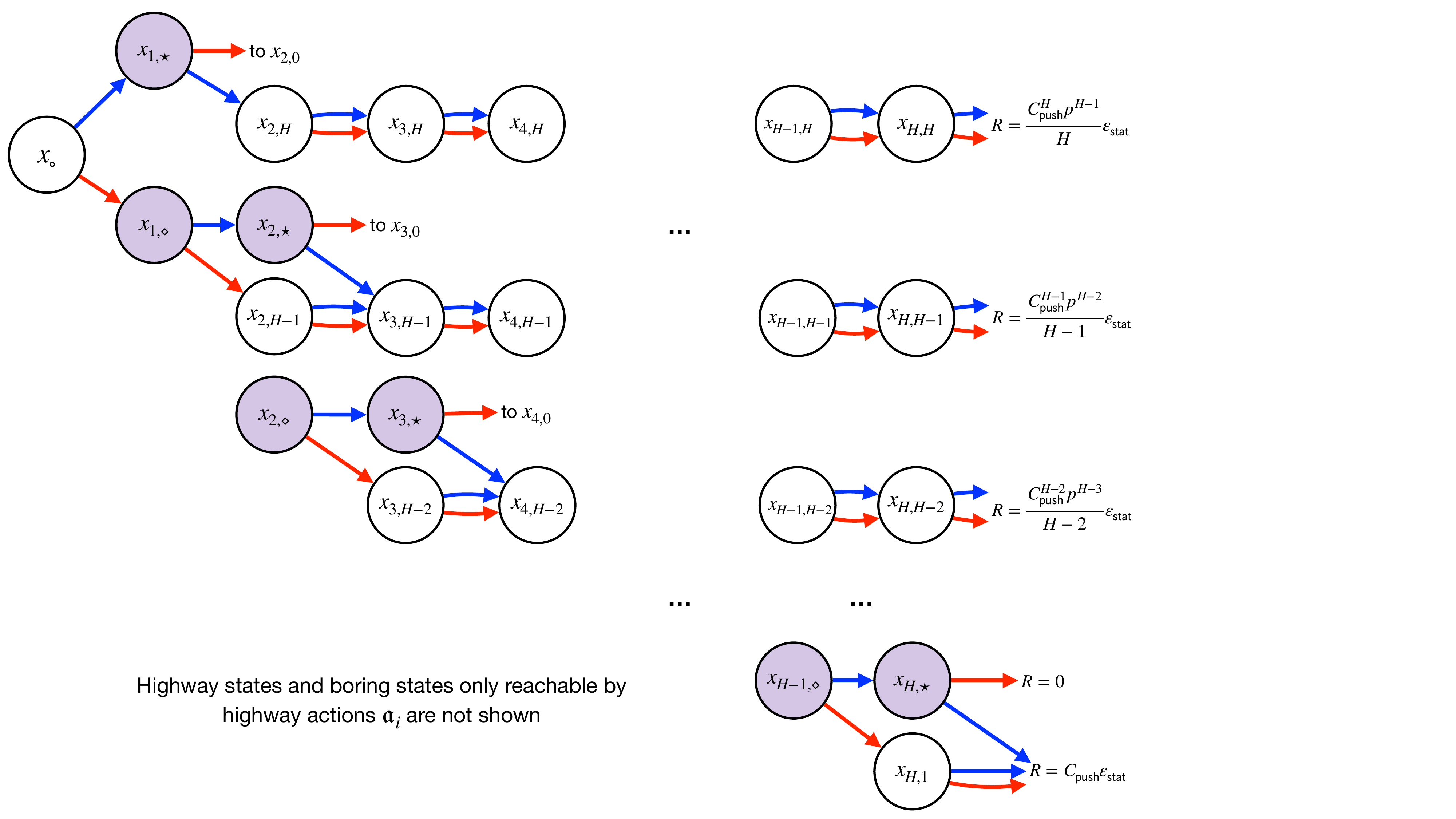}
    \caption{Lower bound construction for \pref{thm:psdp-lower-bound}. To avoid clutter, we do not illustrate the highway states as well as any boring states which are only reachable by taking highway actions $\mathfrak{a}_i$ at layer 0, since their role is only to make sure that the construction satisfies admissibility.} 
    \label{fig:psdp-lb-2}
\end{figure}

Our lower bound construction is illustrated in \pref{fig:psdp-lb-2}.

For notational convenience, we number the layers starting with $h=0$, so that there are $H+1$ layers.

\paragraph{State and Action Spaces.} At $h=0$ there is a single state $x_\circ$ and the action set is $\cA_0 = \crl{0, 1, \mathfrak{a}_1, \cdots, \mathfrak{a}_H}$. For $h \ge 1$, we have 
\begin{align*}
    \statesp_h = \underbrace{ \crl*{x_{h,0}, x_{h,1} \cdots x_{h,H}} }_{\text{$H+1$ boring states}} \cup \underbrace{\crl*{x_{h, \diamond}} \cup  \crl*{x_{h, \star}}  }_{\text{2 special states}} \cup  \underbrace{ \crl*{\bar{x}_{h \to h+1}, \bar{x}_{h\to h+2}, \cdots \bar{x}_{h \to H}} }_{\text{$H-h$ highway states}},
\end{align*}
except for $\cX_H$ which does not have the special state $x_{h, \diamond}$. The action set is $\cA_h = \crl{0,1}$.

\paragraph{Policy Class.} The policy class $\Pi$ is taken to be all open-loop policies over each layer's action space:
\begin{align*}
    \Pi \coloneqq \crl*{\pi: \forall x \in \statesp_h, \pi_h(x) \equiv a_h, (a_0, a_1, \cdots, a_H) \in \prod_{h=0}^H \cA_h}.
\end{align*}

\paragraph{Reset Distribution.} At layer $h=0$ we have $d_0 = \mu_0 = \delta_{x_\circ}$. At layer $h \ge 1$, the distribution $\mu_h$ puts $1/\cpush$ mass on each of the non-diamond states $ \crl{x_{h,0}, x_{h,1} \cdots x_{h,H}} \cup \crl{x_{h, \star}}  \cup \crl{\bar{x}_{h \to h+1}, \bar{x}_{h\to h+2}, \cdots \bar{x}_{h \to H}}$ and the rest on $x_{h, \diamond}$. Therefore $x_{h, \diamond}$ has mass \emph{at least} $p \coloneqq \prn{1 - \frac{2H+1}{\cpush}}$. We have $p > 1/2$ as long as $\cpush \ge 5H$.

\paragraph{Transitions.} At $h=0$, we have
\begin{align*}
    \Pr(\cdot \mid x_\circ, a) = \begin{cases}
        \delta_{x_{1, \diamond}} &\text{if}~a = 0\\
        \delta_{x_{1, \star}} &\text{if}~a = 1\\
        \mu_1 &\text{if}~a = \mathfrak{a}_{1}\\
        \delta_{\bar{x}_{1\to h'}} &\text{if}~a = \mathfrak{a}_{h'}, h' \ge 2.\\
    \end{cases}
\end{align*}
For $h \ge 1$, we have:
\begin{itemize}
    \item \textit{Boring States:} At the boring state $x_{h,i}$, we always transit to the corresponding boring state in the next layer $x_{h+1, i}$ regardless of the action.
    \item \textit{Highway States:} On the highway state $\bar{x}_{h \to h+1}$, we transit to $\mu_{h+1}$ regardless of the action. On highway states $\bar{x}_{h \to h'}$ for $h' > h+1$ we transit to $\bar{x}_{h+1, h'}$ regardless of the action.
    \item \textit{Special States:} We have
    \begin{align*}
        P(\cdot \mid x_{h, \diamond}, a) = \begin{cases}
            \delta_{x_{h+1, H-h}} &\text{if}~a= 0\\
            \delta_{x_{h+1, \star}} &\text{if}~a = 1,
        \end{cases} \quad \text{and} \quad P(\cdot \mid x_{h, \star}, a) = \begin{cases}
            \delta_{x_{h+1, 0}} &\text{if}~a= 0\\
            \delta_{x_{h+1, H-h+1}} &\text{if}~a = 1.
        \end{cases}
    \end{align*}
\end{itemize}

\paragraph{Rewards.} All the rewards are at layer $H$:
\begin{align*}
    R(x_{H, \star}, 0) = 0, \quad R(x_{H, \star}, 1) = \cpush \epsstat, \quad \text{and} \quad \forall~i \in \crl{0, 1, \cdots, H}:~ R(x_{H, i}, \cdot) = 
    \frac{\cpush^{i} p^{i-1}}{i} \epsstat,
\end{align*}

\paragraph{Properties of the Construction.} Now we list several properties of the construction which are more or less immediate to verify.
\begin{enumerate}[(1)]
    \item The state space is of size $O(H^2)$, the action space is of size $O(H)$, and the policy class is of size $(H+2) \cdot 2^{H}$.
    \item Due to the transitions for the highway states, the distribution $\mu$ is admissible at all layers $h \ge 0$.
    \item The minimum probability that $\mu_h$ places on any state $x \in \statesp_h$ is at least $\min\crl{1/\cpush, p} \ge 1/\cpush$, so therefore pushforward concentrability is satisfied with parameter $\cpush$.
    \item The optimal policy is the all-ones policy, $\optpi_h \equiv 1$ for all $h \ge 0$.  Therefore $\Pi$ is realizable.
\end{enumerate}

\subsubsection{Analysis of \psdp{}}
We will show inductively that \psdp{} returns the all-zeros policy $\estpi_h \equiv 0$ for all $h \in [H]$.
\begin{itemize}
    \item At layer $H$, the only state for which the value of taking $a_H=0$ and $a_H=1$ differ is on $x_{H, \star}$, which is sampled under $\mu_H$ with probability $1/\cpush$. The gap between values is $\En_{x \sim \mu_H} \brk*{r(x,1) - r(x,0)} = \epsstat$, so we set $\mathsf{CB}_{\epsstat}$ to return $\estpi_H \equiv 0$.
    \item At layer $H-1$, the two states for which there is a gap in value are the special states $x_{H-1, \diamond}$ and $x_{H-1, \star}$. We can compute that
    \begin{align*}
        \En_{x \sim \mu_{H-1}} \brk*{Q^{\estpi_H}(x, 0) - Q^{\estpi_H}(x, 1)} \ge p \cdot \cpush \epsstat - \frac{1}{\cpush} \cdot \frac{\cpush^2 p\epsstat}{2} = \frac{\cpush p \epsstat}{2} > \epsstat.
    \end{align*}
    Here we use the fact that $\mu_{H-1}(x_{H-1, \diamond}) \ge p > 1/2$, as well as the assumed lower bound on $\cpush$. Therefore, $\mathsf{CB}_{\epsstat}$ must return $\estpi_{H-1} \equiv 0$.
    \item Suppose we are at layer $h$ and for all $h' > h$ \psdp{} selects $\estpi_{h'} \equiv 0$. Then the gap in value between action 0 and action 1 is
    \begin{align*}
        \En_{x \sim \mu_{h}} \brk*{Q^{\estpi_{h+1:H}}(x, 0) - Q^{\estpi_{h+1:H}}(x, 1)} &\ge p \cdot \frac{\cpush^{H-h} p^{H-h-1} \epsstat}{H-h} - \frac{1}{\cpush} \cdot \frac{\cpush^{H-h+1} p^{H-h} \epsstat}{H-h+1} \\
        &= \frac{\cpush^{H-h} p^{H-h} \epsstat}{(H-h)(H-h+1)} > \epsstat.
    \end{align*}
    The last equality uses the fact that $\cpush p \ge 5H/2$.
    \item Continuing this way, we can see that for all $h \ge 1$, \psdp{} equipped with $\mathsf{CB}_{\epsstat}$ selects $\estpi_h \equiv 0$. We can calculate that:
    \begin{align*}
        Q^{\estpi_{1:H}}(x_\circ, a) \begin{cases}
            = 0 &\text{if}~a=1\\[1em]
            = \frac{\cpush^{H-1}p^{H-2}}{H} \epsstat &\text{if}~a=0\\[1em]
            \le \prn*{ \frac{\cpush^{H-1}p^{H-2}}{H} + \frac{\cpush^{H-1}p^{H-1}}{H-1} } \epsstat \le \frac{2\cpush^{H-1}p^{H-1}}{H-1} \epsstat.  &\text{if}~a = \mathfrak{a_i}~ \text{for any}~i \in [H]
        \end{cases}
    \end{align*}
    For the last case, we use the rough estimate that $\mu_h$ places $1/\cpush$ mass on $x_{h,H}$ and the rest elsewhere.
\end{itemize}
Plugging in the optimal value $V^\star$ we have that the suboptimality of \psdp{} is at least
\begin{align*}
    V^\star - V^{\estpi} \ge \prn*{\frac{\cpush^H p^{H-1}}{H} - \frac{2\cpush^{H-1} p^{H-1}}{H-1}} \epsstat = (\cpush)^{\Omega(H)} \epsstat.
\end{align*}
This concludes the proof of \pref{thm:psdp-lower-bound}. \qed

\section{Agnostic Policy Learning is Impossible for $\mu$-Resets}\label{sec:mu-reset-lb}

In this section, we ask if it is possible to remove representational assumptions such as completeness or realizability altogether. Our main result is \pref{thm:lower-bound-policy-completeness}, which shows that one cannot achieve sample-efficient agnostic policy learning under $\mu$-reset access. 

\begin{theorem}\label{thm:lower-bound-policy-completeness}
For any $H \in \bbN$, there exists a policy class $\Pi$ of size $2^H$, a family of MDPs $\cM$ over a state space of size $2^{O(H)}$, binary action space, horizon $H$, and a reset distribution $\mu$  satisfying $\cconc(\mu; \Pi, M) = 6$ for all $M \in \cM$, so that any proper deterministic algorithm that returns a $1/16$-optimal policy must use at least $2^{\Omega(H)}$ samples from $\mu$-reset access for some MDP in $\cM$.
\end{theorem}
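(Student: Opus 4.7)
The proof will follow the same interactive Le Cam convex-hull strategy as \pref{thm:lower-bound-coverability}, applied to a family of rich-observation combination locks. The crucial design question is how to choose the reset distribution $\mu$ so that it both (i) has small concentrability against every policy, and (ii) reveals essentially no information about the planted optimal policy $\optpi$. The plan is to build a Block MDP with two latent states $\sgood,\sbad$ per layer, deterministic combination-lock latent dynamics (at $\sgood$, playing $\optpi_h$ stays in $\sgood$; any deviation or any action from $\sbad$ lands in $\sbad$), and reward $\ind{s=\sgood_H,\, a=\optpi_H}$ at the last layer. Unlike \pref{thm:lower-bound-coverability}, where unbalanced emission supports were needed to neutralize generative access, here I can use \emph{balanced} emissions: for each $h\ge 2$, take $|\statesp_h|=2^{2H+1}$ with $\phi$ chosen uniformly from all balanced decoders (each latent state has a support of size $2^{2H}$), and set $\mu_h \ldef \unif(\statesp_h\times\actionsp)$. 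A direct calculation gives $\cconc(\mu;\Pi,M)\le 6$, since $d^\pi_h(x,a)\le 2^{-2H}$ while $\mu_h(x,a)=2^{-(2H+2)}$. The policy class $\Pi$ is the open-loop class of size $2^H$, and the parameter space is $\Theta = \Pi\times\Phi$.

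Applying the interactive Le Cam bound (as in the proof of \pref{thm:lower-bound-coverability}) after partitioning $\Theta=\Theta_0\sqcup\Theta_1$ according to $\optpi_H$, it suffices to upper bound $\dtv(\Pr^{\unif(\Theta_0),\alg},\Pr^{\unif(\Theta_1),\alg})$ for any $\alg$ collecting $T$ rollouts with $\mu$-reset access. Here a ``sample'' is the choice of a reset layer $h_t$ together with an $(H-h_t+1)$-step rollout starting from $X_{t,h_t}\sim\mu_{h_t}$. By the TV chain rule the calculation decomposes into (a) the TV between the laws of the reset observation $X_{t,h_t}$, which is identically $\mu_{h_t}$ under both measures and therefore contributes zero; (b) the TV between the laws of successor observations $X'_{t,h}$; and (c) the TV between the laws of terminal rewards $R_{t,H}$. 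For (b), I will prove a direct analogue of \pref{lem:tv-distance-from-uniform-generative}: because emissions are uniform over exponentially large balanced supports and $\phi$ is drawn uniformly, conditional on the annotated filtration, the law of any unseen next observation is $O(T/2^H)$-close in $\ell_1$ to $\unif(\statesp_{h+1})$, which already equals the corresponding distribution under the opposing measure.

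For (c), I will carry out an induction analogous to \pref{lem:reward-bound-lb1}, peeling off two low-probability bad events: $\eventnew$, that every observation ever encountered is fresh (probability at most $HT^2/2^{2H}$ by a birthday bound on the exponential observation space), and a ``root-label'' event asserting that no root-to-layer-$H$ path in the DAG of observed transitions is labeled consistently by the true $\optpi$. Under both events, the Bayesian posterior of $\optpi$ conditioned on the filtration is close to uniform, which I will establish via the likelihood-ratio change-of-measure argument of \pref{lem:posterior-of-optimal-generative}, adapted to the $\mu$-reset protocol by treating each rollout's initial state as conditionally independent of $\optpi$ given $\phi$. This argument yields $\En[\Pr[R_{t,H}=1\mid\cF_{t,H-1}]]\lesssim tHT^2/2^H$ under either measure. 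Summing (b) and (c) over $t\in[T]$ and $h\in[H]$, the total TV is $\mathrm{poly}(H)\cdot T^4/2^H$, which remains below $1/2$ as long as $T=2^{\Omega(H)}$, contradicting the PAC guarantee via the Le Cam bound and the $1/16$-suboptimality gap coming from a mismatched last action.

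The main obstacle will be item (c) in the $\mu$-reset setting. In the generative-access proof, trajectories are single edges, so the observed data is easily represented as a DAG with at most $T$ edges; here each rollout contributes a whole path, making the DAG grow faster and risking overlap at intermediate layers that could concentrate the posterior of $\optpi$. I will deal with this by tracking the root-layer operation $\rootlayer(x\mid\cF)$ exactly as in \pref{lem:reward-bound-lb1}: under $\eventnew$ the only way a later reward-producing trajectory can be ``pre-labeled'' by $\optpi$ is if a root-to-leaf path was \emph{already present} in $\cF_{t,H-1}$, and the number of such complete paths is bounded by $T$ (once per rollout), independently of the rollout length. This preserves the same $HT^2/2^H$ per-round reward-distinguishability bound and closes the argument.
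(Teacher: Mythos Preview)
Your two-state construction has a genuine gap: a single $\mu$-reset at layer $H$ leaks $\Omega(1)$ information about $\optpi_H$, so the TV bound in part (c) cannot be established. Concretely, with balanced emissions and $\mu_H=\unif(\statesp_H)$, the reset observation $X_H$ has $\phi(X_H)=\sgood$ with probability $1/2$; if the learner then plays $a_H=0$, the reward is $1$ with probability $1/2$ under $\nu_0$ (since $\optpi_H=0$) but with probability $0$ under $\nu_1$. That is a constant TV gap after one sample, and in fact the learner can fully recover $\optpi$ in $O(H)$ rollouts by back-chaining: once $\optpi_H$ is known, reset at layer $H-1$, try both actions followed by $\optpi_H$, and the action yielding expected reward $1/2$ must equal $\optpi_{H-1}$; repeat. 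Your reward analysis inherited from \pref{lem:reward-bound-lb1} relied on the event ``the learner reaches $s_H^{\sgood}$'' being exponentially unlikely, which is precisely what the $\mu$-reset at layer $H$ destroys.

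The paper's fix is to introduce a third latent state $s_h^{\sdis}$ (``distractor'') per layer with the same latent dynamics as $s_h^{\sgood}$ but \emph{flipped} reward at $H$ (reward $1$ iff $a_H\neq\optpi_H$), together with $\ber(1/2)$ reward at $s_H^{\sbad}$. The emission sizes are $2^H$ for $\sgood$, $2^H$ for $\sdis$, and $2^{H+1}$ for $\sbad$, so $\mu_h=\unif(\statesp_h)$ places mass $1/4,1/4,1/2$ on the three latents. Then for \emph{any} open-loop rollout from $\mu_h$ with $h\ge 2$, the reward is exactly $\ber(1/2)$: the good and distractor contributions cancel, and bad contributes $\ber(1/2)$. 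This is what makes $\mu$-resets uninformative, and the analogue of your item (c) then becomes a computation showing $|\Pr[R_{t,H}=1\mid\cfnor]-1/2|$ is small (the paper's \pref{lem:reward-tv-distance-policy-completeness}), rather than showing $\Pr[R_{t,H}=1]$ is small. The distractor states are unreachable from $d_1$, so $\Pi$ is not realizable---which is unavoidable, matching the paper's point that the construction cannot satisfy policy realizability.
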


The proof is given in \pref{sec:lower-bound-policy-completeness}. 
\begin{figure}[ht]
    \centering
\includegraphics[scale=0.32, trim={0cm 15cm 30cm 0cm}, clip]{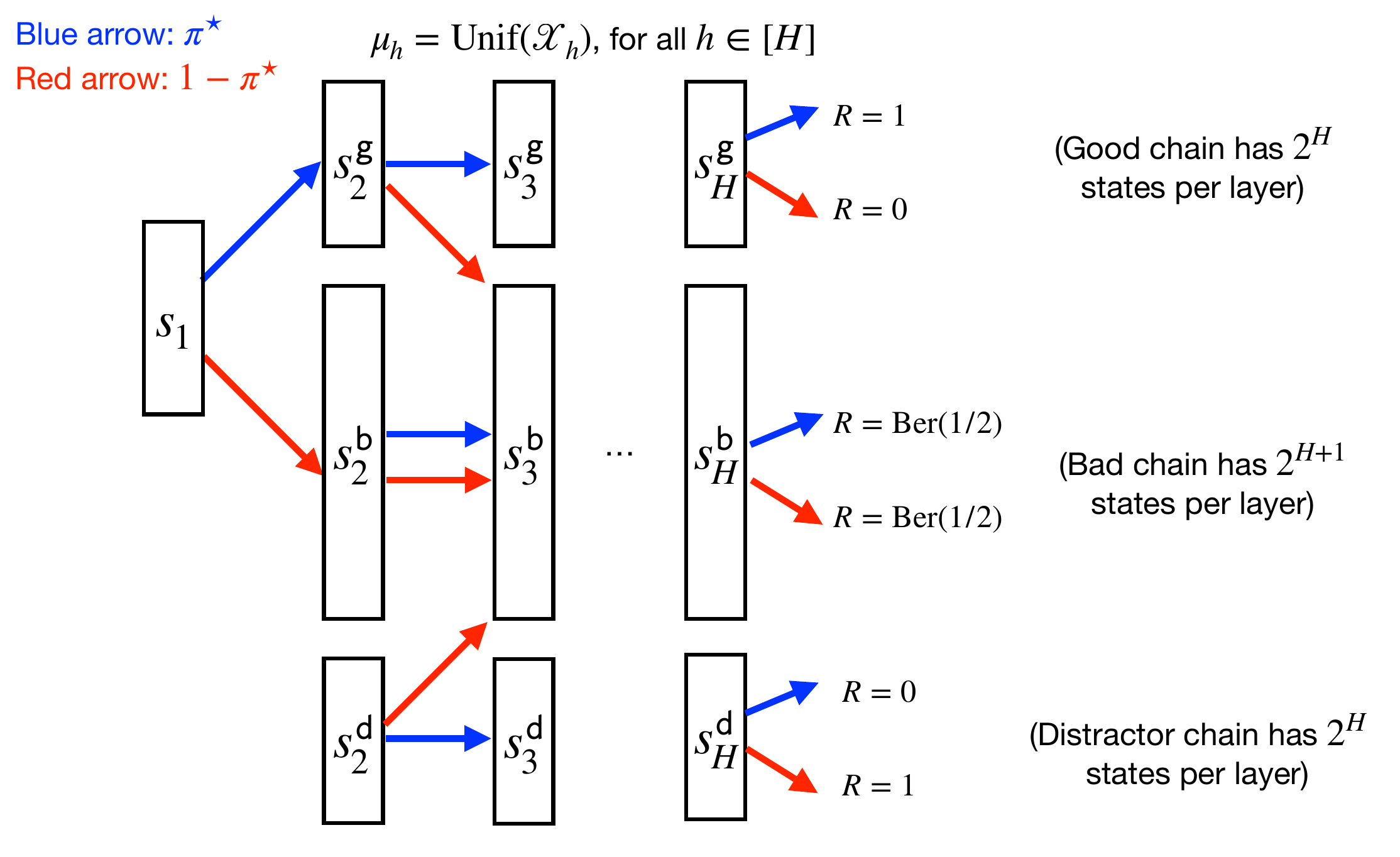}
    \caption{Construction used for \pref{thm:lower-bound-policy-completeness}.} 
    \label{fig:lb2}
\end{figure}

\paragraph{Key Ideas for \pref{thm:lower-bound-policy-completeness}.} An example can be found in \pref{fig:lb2}. Similar to the construction for \pref{thm:lower-bound-coverability}, we use the rich observation combination lock. There is a slight twist, however. We introduce a set of \emph{distractor} latent states $\crl{s_h^\mathsf{d}}_{h \ge 2}$, which are not reachable from the initial distribution $d_1$, and we set $\mu_h$ to be the uniform distribution over all observations in layer $h$. Thus, the exploratory distribution $\mu$ has \emph{overcoverage} over these unreachable states. The distractor states have the same latent transitions as the good states, and the only difference is that the reward at $s_H^\mathsf{d}$ is flipped compared to the reward at $s_H^\mathsf{g}$. This causes rollouts from $\mu_h$ to be noninformative. As for some rough intuition, observe that the distribution of rewards for executing \emph{any} open-loop policy $\pi_{h:H}$ from $\mu_h$ with $h\ge 2$ is $\ber(1/2)$. This is shown by the following casework:
\begin{itemize}
    \item If $\pi_{h:H} = \pi^\star_{h:H}$, then we get a reward of 1 by either sampling $x \sim \emission(s_h^\mathsf{g})$ with probability $1/4$ and getting reward 1 at $s_H^\mathsf{g}$ or sampling $x \sim \emission(s_h^\mathsf{b})$ with probability $1/2$ and getting reward $\ber\prn*{\frac12}$ at $s_h^\mathsf{b}$. Thus the distribution is $\ber\prn*{\frac12}$. 
    \item Similar reasoning holds if $\pi_{h:H} = \pi^\star_{h:H-1} \circ (1-\pi^\star_H)$, but with the reward of 1 coming from sampling the states $x \sim \emission(s_h^\mathsf{d})$.
    \item If $\pi_{h:H}$ is any other policy, then it always reaches $s_H^\mathsf{b}$ and it gets reward $\ber\prn*{\frac12}$.
\end{itemize}
Therefore, observing the reward distribution obtained by executing open-loop policies reveals \emph{no information} about $\optpi$; due to the rich observations, executing non-open loop policies does not really help, and the learner cannot really learn any information about the transition dynamics from the reset $\mu$. Again, the best the learner can do is online RL which requires $2^{\Omega(H)}$ samples. 

\emph{Remark.} If the learner had local simulator access, then it could easily decode states starting from layer $H$, going backwards, since the reward distributions for a particular $(x_H,a_H)$ pair are different depending on the latent state $\optdec(x_H)$. This idea is precisely the intuition that motivates our algorithm for the hybrid resets interaction protocol in \pref{chap:hybrid-resets}.

\subsection{Proof of \pref{thm:lower-bound-policy-completeness}}\label{sec:lower-bound-policy-completeness}

% The lower bound constructions for the proof of \pref{thm:lower-bound-policy-completeness} have a similar flavor to the lower bound construction in \pref{thm:lower-bound-coverability}, but with a twist. In every layer, we include an additional distractor state $s^\sdis$ which is not reachable from $d_1$ but still sampled by $\mu$. The optimal policy at $s^\sdis_H$ is the opposite of the optimal policy from the states on the good chain $s^\sgood_H$, and given only online access, the learner cannot distinguish between the good states and the distractor states.

\paragraph{Lower Bound Construction.} We define a family of Block MDPs $\cM = \crl{M_{\optpi, \optdec}}_{\optpi \in \Pi, \optdec \in \Phi}$ which are parameterized by an optimal policy $\optpi \in \Pi$ and a decoding function $\optdec \in \Phi$ (to be described). 
\begin{itemize}
    \item \textbf{Policy Class:} Again, the policy class $\Pi$ is taken to be open loop policies:
\begin{align*}
    \Pi \coloneqq \crl{\pi: \forall x \in \statesp_h, \pi_h(x) \equiv a_h, (a_1, \cdots, a_H) \in \actionsp^H}.
\end{align*}
    \item \textbf{Latent MDP:} The latent state space $\latentsp$ is layered where each $\latentsp_h \coloneqq \crl{ s_h^\sgood, s_h^\sbad, s_h^\sdis}$ is comprised of a good, a bad, and a distractor state. We abbreviate the state as $\crl{\sgood, \sbad, \sdis}$ if the layer $h$ is clear from context. The starting state is always $\sgood$. The action space $\actionsp = \crl{0,1}$. Let $\optpi \in \Pi$ be any policy, which can be represented by a vector in $(\pi^\star_1, \cdots, \pi^\star_H) \in \crl{0,1}^H$. The latent transitions/rewards of an MDP parameterized by $\optpi \in \Pi$ are as follows for every $h \in [H]$: 
    \begin{align*}
        \optlatp(\cdot  \mid  s, a) = \begin{cases}
            \delta_{s_{h+1}^\sgood} & \text{if}~s = s_h^\sgood, a = \pi^\star_h\\
            \delta_{s_{h+1}^\sdis} &\text{if}~s = s_h^\sdis, a = \pi^\star_h\\
            \delta_{s_{h+1}^\sbad}& \text{otherwise}.
        \end{cases} ~~\text{and}~~
        \optlatr(s,a) = \begin{cases}
            1 &\text{if}~s = s_H^\sgood, a = \pi^\star_H\\
            1 &\text{if}~s = s_H^\sdis, a \ne \pi^\star_H\\
            \ber\prn*{\frac12} &\text{if}~s = s_H^\sbad\\
            0 &\text{otherwise.}
        \end{cases}
    \end{align*} 
    % The key difference is in the reward construction. The rewards on $\cSgood{H}$ are the same as before. The rewards on $\cSaux{H}$ are opposite of those on $\cSgood{H}$: i.e. for any $s \in \cSaux{H}$ we have $r(s,a) = \ind{a \ne \pi^\star_H}$. On $\cSbad{H}$ the reward function $r(s,a) = \ind{a = f^\star(s)}$ where we use a hard labeling function $f^\star: \cSbad{H} \to \cA$ that has the property that for any $a \in \crl{0,1}$, $\En_{s \sim \unif(\cSbad{H})} \brk*{r(s, a)} \approx 1/2$.
    % We can think of randomly choosing $f^\star$ by independently flipping a coin for every state $s \in \cSbad{H}$.
    \item \textbf{Rich Observations:} The observation state space $\statesp$ is layered where each $\cX_h \coloneqq \crl{x_h^{(1)}, \cdots, x_h^{(m)}}$ with $m = 2^{H+2}$. The decoding function class $\Phi$ is the collection of all decoders which for every $h \ge 2$ assigns $s_h^\sgood, s_h^\sdis$ to disjoint subsets of $\statesp_h$ of size $2^H$ and $s_h^\sbad$ to the rest:
    \begin{align*}
        \Phi \coloneqq \Big\{\optdec:~\statesp \mapsto \latentsp :~&\forall~x_1 \in \statesp_1,~\optdec(x_1) = \sgood, \\
                                                                   &\hspace{-5em}\forall~h \ge 2,~ \abs*{ \crl*{x_h \in \statesp_h: \optdec(x_h) = \sgood }} = 2^H \text{ and } \abs*{ \crl*{x_h \in \statesp_h: \optdec(x_h) = \sdis }} = 2^H \Big\} ,\\
        &\hspace{-8em}\text{so that}~\abs{\Phi} = \prn*{\binom{2^{H+2}}{2^H} \cdot \binom{2^{H+2} - 2^H}{ 2^H} }^{H-1} = 2^{2^{\wt{O}(H)}}.
    \end{align*}
    In the MDP parameterized by $\optdec \in \Phi$, the emission for every $s \in \latentsp$ is $\emission(s) = \unif\prn*{\crl*{x \in \statesp_h: \optdec(x) = s}}$.
    % \begin{align*}
    %     \emission(s) = \unif\prn*{\crl*{x \in \statesp_h: \optdec(x) = s}}, \quad \text{and} \quad \emission(s_h^\sbad) = \unif\prn*{\crl*{x \in \statesp_h: \optdec(x) = s_h^\sbad}}.
    % \end{align*}
    \item \textbf{Exploratory Distribution:} The exploratory distribution $\mu = \crl{\mu_h}_{h \in [H]}$ is set to be $\mu_h = \unif\prn{\statesp_h}$.
\end{itemize}
We establish several facts about any $M_{\optpi, \optdec} \in \cM$ defined by the construction.
\begin{itemize}
    \item The distribution $\mu$ has bounded concentrability: $\cconc(\mu; \Pi, M) \le 4$.
    \item The policy class $\Pi$ does not satisfy realizability, since the optimal policy at layer $H$ requires one to take different actions depending on whether the latent state is $\sgood$ or $\sdis$.
    % \item We build some intuition for why the lower bound construction is hard to learn with $\mu$-resets but not Hybrid Resets. We claim that for any $h \in [H]$, the distribution of rewards obtained by executing any open-loop policy $\pi_{h:H} \in \Pi_{h:H}$ from $\mu_h$ is $\ber\prn*{\frac12}$. There are a few cases.
    % \begin{itemize}
    %     \item If $\pi_{h:H} = \pi^\star_{h:H}$, then we get a reward of 1 by either sampling $x \sim \emission(s_h^\sgood)$ with probability $1/4$ and getting reward 1 at $s_H^\sgood$ or sampling $x \sim \emission(s_h^\sbad)$ with probability $1/2$ and getting reward $\ber\prn*{\frac12}$ at $s_h^\sbad$. Thus the distribution is $\ber\prn*{\frac12}$. 
    %     \item Similar reasoning holds if $\pi_{h:H} = \pi^\star_{h:H-1} \circ (1-\pi^\star_H)$, but with the reward of 1 coming from sampling the states $x \sim \emission(s_h^\sdis)$.
    %     \item If $\pi_{h:H}$ is any other policy, then it always reaches $s_H^\sbad$ and it gets reward $\ber\prn*{\frac12}$.
    % \end{itemize}
    % Note that if the learner had local reset access, then it could easily decode states starting from layer $H$, going backwards, since the reward distributions for a given $(x_H,a_H)$ pair are different depending on the value of $\optdec(x_H)$. (This is precisely the intuition that motivates \stochalg{}.)
\end{itemize}

\paragraph{Notation for Algorithm.} Similar to the proof of \pref{thm:lower-bound-coverability}, we use $\alg$ to denote a deterministic algorithm that collects $T$ samples, i.e., full-length episodes under $\mu$-reset access. For any $t \in [T]$ we define $\cF_{t-1}$ to be the sigma-field of everything observed in the first $t-1$ episodes. We further define for any $h \in [H]$ the filtration $\cF_{t,h-1}$ to be the sigma-field of everything observed in the first $t-1$ episodes as well as the first $h-1$ steps of the $t$-th sample. For the $\mu$-reset model, $\cF_{t,h-1} = \sigma( \cF_{t-1}, \crl{\prn{X_{t,i}, A_{t,i}, R_{t,i}}}_{h_\bot \le i \le h-1} )$. Here, $h_\bot$ is the starting layer of episode $t$, which is measurable with respect to $\cF_{t-1}$; furthermore, the action $A_{t,h}$ is measurable with respect to $\cF_{t,h-1} \cup \crl{X_{t,h}}$ (since $\alg$ is deterministic).

\paragraph{Sample Complexity Lower Bound.} We will use \pref{thm:interactive-lecam-cvx} to prove our lower bound. First we need to instantiate the parameter space. We will let $\Theta \coloneqq \crl{(\optpi, \optdec):~\optpi \in \Pi, \optdec \in \Phi}$ so that $\cM = \crl{M_\theta}_{\theta \in \Theta} = \crl{M_{\optpi, \optdec}}_{\optpi \in \Pi, \optdec \in \Phi}$. We further denote the subsets 
\begin{align*}
    \Theta_0 &\coloneqq \crl{(\optpi, \optdec):~\optpi \in \Pi~\text{s.t.}~\optpi_H = 0, \optdec \in \Phi} \\
    \Theta_1 &\coloneqq \crl{(\optpi, \optdec):~\optpi \in \Pi~\text{s.t.}~\optpi_H = 1, \optdec \in \Phi}
\end{align*}
The observation space $\cY$ is defined as the set of observations over $T$ rounds as well as returned policy for an algorithm interacting with the MDP, i.e.,
\begin{align*}
    \cY \coloneqq (\statesp \times \actionsp \times [0,1])^{HT} \times \Pi.
\end{align*}
(As a convention, we can assume that each sample collected by $\alg$ in the MDP is of length $H$; if $\alg$ decides to rollout from $\mu_h$ at an intermediate layer $h \ge 2$ then we can simply append ``dummy states'' to the prefix of the trajectory, which does not change the analysis.)

For an observation $y \in \cY$ we define the final returned policy as $y^\pi$. The loss function is given by
\begin{align*}
    L((\optpi, \phi), y) \coloneqq \ind{\optpi \ne y^\pi}. 
\end{align*}
Then we have for any $y \in \cY$, $(\optpi_0, \phi_0) \in \Theta_0$, and $(\optpi_1, \phi_1) \in \Theta_1$ that
\begin{align*}
    L((\optpi_0, \phi_0), y) +  L((\optpi_1, \phi_1), y) \ge 1 \coloneqq 2\Delta,
\end{align*}
since the last bit of $y^\pi$ can be either 0 or 1, thus only matching exactly one of $\optpi_0$ and $\optpi_1$.

Now we are ready to apply \pref{thm:interactive-lecam-cvx}. We get that for any $\alg$, we must have
\begin{align*}
    \sup_{(\optpi, \optdec) \in \Pi \times \Phi} \En_{Y \sim \Pr^{M_{\optpi, \optdec}, \alg}} \brk*{V^\star - V^{\estpi}} &= \sup_{(\optpi, \optdec) \in \Pi \times \Phi} \En_{Y \sim \Pr^{M_{\optpi, \optdec}, \alg}} \brk*{\frac{1}{2} - \frac{1}{2} \ind{\optpi = Y^\pi} } \\
    &= \frac{1}{2} \cdot \sup_{(\optpi, \optdec) \in \Pi \times \Phi} \En_{Y \sim \Pr^{M_{\optpi, \optdec}, \alg}} \brk*{ L((\optpi, \phi), Y) } \\
    &\ge \frac{1}{8} \cdot \max_{\nu_0 \in \Delta(\Theta_0), \nu_1 \in \Delta(\Theta_1)} \prn*{1 - \dtv\prn*{\Pr^{\nu_0, \alg}, \Pr^{\nu_1, \alg}}} \\
    &\ge \frac{1}{8} \cdot \prn*{1 - \dtv\prn*{\Pr^{\unif(\Theta_0), \alg}, \Pr^{\unif(\Theta_1), \alg}}}. 
\end{align*}
It remains to compute an upper bound $\dtv\prn*{\Pr^{\unif(\Theta_0), \alg}, \Pr^{\unif(\Theta_1), \alg}}$ which holds for any $\alg$. This is accomplished by the following lemma.

\begin{lemma}\label{lem:tv-bound-mu-reset}
For any deterministic $\alg$ that adaptively collects $T = 2^{O(H)}$ samples via $\mu$-reset access, we have
\begin{align*}
    \dtv\prn*{\Pr^{\unif(\Theta_0), \alg}, \Pr^{\unif(\Theta_1), \alg}} \le \frac{T^4H}{2^{H-10}}.
\end{align*}
\end{lemma}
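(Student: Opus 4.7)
The plan is to mimic the proof of \pref{lem:tv-bound-generative}, applying the TV chain rule and then bounding each per-step contribution, but with the ``unbalanced-emission'' argument used in the generative-model lower bound replaced by a new \emph{distractor symmetry} argument tailored to the construction of \pref{thm:lower-bound-policy-completeness}. Concretely, I would start by applying \pref{lem:chain-rule-tv} to write
\begin{align*}
\dtv(\Pr^{\nu_0, \alg}, \Pr^{\nu_1, \alg}) &\le \sum_{t=1}^T \sum_{h} \En^{\nu_0, \alg}\brk*{\dtv\prn*{\Pr^{\nu_0, \alg}[X'_{t,h}, R_{t,h}\mid \cF_{t, h-1}],\ \Pr^{\nu_1, \alg}[X'_{t,h}, R_{t,h}\mid \cF_{t, h-1}]}},
\end{align*}
where the inner sum runs over the layers visited in episode $t$. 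Because every rollout start $X_{t, h_\bot}\sim \mu_{h_\bot}=\unif(\statesp_{h_\bot})$ has the same law under both priors, these terms contribute zero and the sum reduces to a ``transition TV'' at every $h < H$ and a ``reward TV'' at $h = H$.

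For the transition TV, I would adapt \pref{lem:tv-distance-from-uniform-generative} to the present three-label setting. The uniform decoder prior assigns every layer $\statesp_h$ of size $2^{H+2}$ into three blocks of sizes $2^H$, $2^H$, $2^{H+1}$ for $\sgood, \sdis, \sbad$ respectively. After introducing an annotated filtration that exposes the labels $\optdec(X)$ of all seen observations, the conditional law of $X'_{t,h}$ under uniform $\optdec$ is a convex combination of uniform distributions on the remaining good, distractor, and bad observations, and is within $\ell_1$ distance $O(t/2^H)$ of $\unif(\statesp_{h+1})$ under both $\nu_0$ and $\nu_1$. Triangle inequality then gives an $O(t/2^H)$ bound per step, contributing at most $O(T^2 H/2^H)$ in total.

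The main step is the reward TV, which is where the distractor construction does the work. Define the involution $\sigma: \Theta_0\to \Theta_1$ by $(\optpi,\optdec)\mapsto(\optpi',\optdec')$ with $\optpi'_{1:H-1}=\optpi_{1:H-1}$, $\optpi'_H = 1-\optpi_H$, and $\optdec'$ the decoder obtained from $\optdec$ by swapping the $\sgood$ and $\sdis$ labels in every layer $h \ge 2$. A direct check shows $\sigma$ is a bijection that leaves invariant (i) the latent transition kernel at layers $h < H$, since $\sgood$ and $\sdis$ play symmetric roles (both remain on-chain under $\optpi_h$ and exit to $\sbad$ otherwise), (ii) the emission kernel at every layer, because the $\sgood$- and $\sdis$-blocks of $\optdec$ and $\optdec'$ have equal size $2^H$, and (iii) the reward kernel at layer $H$, since flipping $\optpi_H$ interchanges the ``$(\sgood, \optpi_H)$'' and ``$(\sdis, 1-\optpi_H)$'' reward-one branches. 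Consequently, on the high-probability event $\eventnew\cap \eventfresh$ that every transitioned and every rollout-start observation is fresh (i.e.\ not already in $\cF_{t,h-1}$), the posterior $\Pr^{\nu_0,\alg}[(\optpi,\optdec)\mid \cF_{t,H-1}]$ is exactly transported to $\Pr^{\nu_1,\alg}[(\optpi,\optdec)\mid \cF_{t,H-1}]$ by $\sigma$, so the reward distributions on $(X_{t,H},A_{t,H})$ coincide.

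To finish, I would bound $\Pr[\eventnew^c]$ and $\Pr[\eventfresh^c]$ by $\poly(T,H)/2^H$ birthday-style union bounds (over the $HT$ samples in a layer of size $2^{H+2}$), and then execute an inductive ``no informative reward'' argument in the style of \pref{claim:claim-reward} and \pref{lem:posterior-of-optimal-generative}, showing by induction on $t$ that the $\sigma$-symmetry of the posterior is preserved up to error $O(\poly(T,H)/2^H)$ per round. Summing across the $HT$ per-step TV contributions and using $T = 2^{O(H)}$ then yields the stated bound $T^4 H/2^{H-10}$. The main obstacle is the inductive posterior argument in the $\mu$-reset setting: unlike the generative model, the learner can chain adaptive rollouts starting at $\mu_{h_\bot}$ for any $h_\bot$, potentially linking observations across episodes and reusing rewards they have already collected. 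Extending the DAG-of-observations bookkeeping of \pref{lem:posterior-of-optimal-generative} so that any path passing through a $\sdis$-labeled vertex is treated symmetrically to a path through a $\sgood$-labeled vertex will be the technically delicate step; it is precisely this symmetry that guarantees no adaptive strategy can leak information about $\optpi_H$.
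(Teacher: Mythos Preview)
Your overall plan---TV chain rule, transition TV via comparison with $\unif(\statesp_{h+1})$, reward TV via an intermediary $\ber(1/2)$---is exactly the paper's, and your transition-TV argument is correct. Your involution $\sigma$ is also a genuine insight: one checks directly that $P_\theta(x'\mid x,a)=P_{\sigma(\theta)}(x'\mid x,a)$ for every $x\in\statesp_h$ with $h\ge 2$ and that $R_\theta(x_H,a)=R_{\sigma(\theta)}(x_H,a)$ everywhere, so for any episode started from $\mu_{h_\bot}$ with $h_\bot\ge 2$ the trajectory law is \emph{exactly} $\sigma$-invariant---even across adaptively chained episodes. This gives a cleaner replacement for the paper's explicit Case~2 computation via \pref{lem:posterior-of-state-label-v2}, and the ``chaining across episodes'' difficulty you flag is in fact a non-issue for these rollouts.

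The real gap is for online rollouts from layer~$1$, and here the $\sigma$-symmetry does not fail ``up to $\poly(T,H)/2^H$ error'': it fails completely. All $x_1\in\statesp_1$ have label $\sgood$ and are untouched by the swap, so with $A_1=\optpi_1$ the layer-$1$ transition under $M_\theta$ emits $X_2\sim\unif\{x:\optdec(x)=\sgood\}$ while under $M_{\sigma(\theta)}$ it emits $X_2\sim\unif\{x:\optdec(x)=\sdis\}$, which have disjoint support. Hence for any filtration $\cF$ containing such an episode, $\Pr^{M_{\sigma(\theta)}}[\cF]=0$ and the posteriors under $\nu_0,\nu_1$ are \emph{not} transported by $\sigma$, even on the fresh event. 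Your proposed patch (``extend the DAG bookkeeping so $\sdis$-paths are treated symmetrically to $\sgood$-paths'') cannot fix this, because the asymmetry is baked into the layer-$1$ start, not the path accounting. The paper's Case~1 uses a genuinely different argument: it shows (via \pref{lem:posterior-of-optimal}, the analog of the lemma you cite) that conditional on no online rollout having matched $\optpi_{1:H-1}$ so far, the posterior of $\optpi_{1:H-1}$ is within $\poly(T,H)/2^H$ of uniform under both $\nu_0$ and $\nu_1$, whence the learner's action sequence matches $\optpi_{1:H-1}$ with probability $\approx 1/2^{H-1}$ under either prior and the reward is $\approx\ber(1/2)$ regardless of $\optpi_H$. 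Keep your involution for $h_\bot\ge 2$, but replace the $\sigma$-symmetry narrative for layer-$1$ rollouts with this uniform-posterior argument.
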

Plugging in \pref{lem:tv-bound-mu-reset}, we conclude that for any $\alg$ that collects $2^{cH}$ samples for sufficiently small constant $c > 0$ must be $1/16$-suboptimal in expectation. This concludes the proof of \pref{thm:lower-bound-policy-completeness}.\qed

\subsection{Proof of \pref{lem:tv-bound-mu-reset} (TV Distance Calculation for \pref{thm:lower-bound-policy-completeness})}

Since the proof is similar to that of \pref{lem:tv-bound-generative} we omit some intermediate calculations. In the rest of the proof we denote $\nu_0 \coloneqq \unif(\Theta_0)$ and $\nu_1 \coloneqq \unif(\Theta_1)$. We have
\begin{align*}
     \hspace{2em}&\hspace{-2em} \dtv\prn*{\Pr^{\nu_0, \alg}, \Pr^{\nu_1, \alg}} \\
     &= \underbrace{ \sum_{t=1}^T \sum_{h=1}^{H-1} \En^{\nu_0, \alg} \brk*{\dtv \prn*{\Pr^{\nu_0, \alg} \brk*{ X_{t,h+1} \mid X_{t,h}, A_{t,h}, \cF_{t,h-1} }, \Pr^{\nu_1, \alg} \brk*{ X_{t,h+1} \mid X_{t,h}, A_{t,h}, \cF_{t,h-1} }}} }_{\text{transition TV distance}} \\
     &\qquad + 
     \underbrace{ \sum_{t=1}^T \En^{\nu_0, \alg} \brk*{\dtv \prn*{\Pr^{\nu_0, \alg} \brk*{ R_{t,H} \mid X_{t,H}, A_{t,H}, \cF_{t,H-1} }, \Pr^{\nu_1, \alg} \brk*{ R_{t,H} \mid X_{t,H}, A_{t,H}, \cF_{t,H-1} }}} }_{\text{reward TV distance}}.
\end{align*}

We bound each term separately.

\paragraph{Transition TV Distance.} Using triangle inequality and \pref{lem:tv-distance-from-uniform-resets} we get that for any $t \in [T], h \in [H-1]$:
\begin{align*}
 \En^{\nu_0, \alg} \brk*{\dtv \prn*{ \Pr^{\nu_0, \alg} \brk*{ X_{t,h+1} \mid X_{t,h}, A_{t,h}, \cF_{t, h-1} }, \Pr^{\nu_1, \alg} \brk*{X_{t,h+1} \mid X_{t,h}, A_{t,h}, \cF_{t, h-1} } } } \le \frac{t}{2^{H-3}}.  \numberthis \label{eq:ub-transition-calculation-reset}  
\end{align*}

\paragraph{Reward TV Distance.} Using triangle inequality, the fact that rewards are in $\crl{0,1}$, and \pref{lem:reward-tv-distance-policy-completeness} we get
\begin{align*}
    \hspace{2em}&\hspace{-2em} \En^{\nu_0, \alg} \brk*{\dtv \prn*{\Pr^{\nu_0, \alg} \brk*{ R_{t,H} \mid X_{t,H}, A_{t,H}, \cF_{t,H-1} }, \Pr^{\nu_1, \alg} \brk*{ R_{t,H} \mid X_{t,H}, A_{t,H}, \cF_{t,H-1} }}} \\
    &\le \En^{\nu_0, \alg} \brk*{\abs*{\Pr^{\nu_0, \alg} \brk*{ R_{t,H} = 1\mid X_{t,H}, A_{t,H}, \cF_{t, H-1}} - \frac{1}{2} } } \\
    &\qquad + \En^{\nu_0, \alg} \brk*{\abs*{\Pr^{\nu_1, \alg} \brk*{ R_{t,H} = 1 \mid X_{t,H}, A_{t,H}, \cF_{t,H-1}} - \frac{1}{2} } } \\
    &\le  t \cdot \frac{T^2H}{2^{H-9}}. \numberthis\label{eq:reward-triangle-inequality}
\end{align*}

\paragraph{Final Bound.} Thus, combining Eqs.~\eqref{eq:ub-transition-calculation-reset} and \eqref{eq:reward-triangle-inequality} we can conclude that:
\begin{align*}
    \dtv\prn*{\Pr^{\nu_0, \alg}, \Pr^{\nu_1, \alg}} \le \frac{T^2H}{2^{H-3}} + \frac{T^4H}{2^{H-9}} \le \frac{T^4H}{2^{H-10}}.
\end{align*}
This concludes the proof of \pref{lem:tv-bound-mu-reset}.\qed

\begin{lemma}[Transition TV Distance for the Construction in \pref{thm:lower-bound-policy-completeness}]\label{lem:tv-distance-from-uniform-resets}
    For any $t \in [T], h \in [H]$, we have
    \begin{align*}
        \nrm*{\Pr^{\nu_0, \alg}\brk*{X_{t,h}  \mid  \cF_{t, h-1} } - \unif(\statesp_{h}) }_1 &\le \frac{t}{2^{H-2}}, \\
        \nrm*{\Pr^{\nu_1, \alg}\brk*{X_{t,h}  \mid  \cF_{t, h-1}} - \unif(\statesp_{h})}_1 &\le \frac{t}{2^{H-2}}.
    \end{align*}
    \end{lemma}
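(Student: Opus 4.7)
The proof follows the template of \pref{lem:tv-distance-from-uniform-generative}. The plan is to introduce an annotated filtration $\cF_{t,h-1}' \coloneqq \sigma\prn*{\cF_{t,h-1}, \crl*{\optdec(X): X \in \cF_{t,h-1}}}$ that augments $\cF_{t,h-1}$ with the latent labels of every previously observed state. Because $\Pr^{\nu_0, \alg}\brk*{X_{t,h} \in \cdot \mid \cF_{t,h-1}}$ is the conditional expectation of $\Pr^{\nu_0, \alg}\brk*{X_{t,h} \in \cdot \mid \cF_{t,h-1}'}$ given $\cF_{t,h-1}$, Jensen's inequality and convexity of $\nrm{\cdot}_1$ reduce the problem to showing $\nrm*{\Pr^{\nu_0, \alg}\brk*{X_{t,h} \mid \cF_{t,h-1}'} - \unif(\statesp_h)}_1 \le t/2^{H-2}$ almost surely; the bound under $\nu_1$ is symmetric, so I focus on $\nu_0$.

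Next I split by how $X_{t,h}$ is generated in episode $t$. If $X_{t,h}$ comes from a reset---either $h=1$ so $X_{t,1} \sim d_1 = \unif(\statesp_1)$ (since $\optdec(x) = \sgood$ for every $x \in \statesp_1$), or $h = h_\bot$ is the starting layer of episode $t$ so $X_{t,h} \sim \mu_h = \unif(\statesp_h)$---then the conditional distribution equals $\unif(\statesp_h)$ exactly and the $\ell_1$ distance is zero. Otherwise $X_{t,h}$ arises via the Block-MDP transition from $(X_{t,h-1}, A_{t,h-1})$ and is drawn from $\emission(\optlatp(\optdec(X_{t,h-1}), A_{t,h-1}))$; the only randomness beyond the emission is whether $A_{t,h-1}$ matches the (possibly not-fully-determined) bit $\optpi_{h-1}$ when $\optdec(X_{t,h-1}) \in \crl{\sgood, \sdis}$.

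In this transition case, define $p_\ell \coloneqq \Pr^{\nu_0, \alg}\brk*{\optdec(X_{t,h}) = \ell \mid \cF_{t,h-1}'}$ for $\ell \in \crl{\sgood, \sbad, \sdis}$; these are $\cF_{t,h-1}'$-measurable and sum to one. Let $\cX_\mathsf{obs}$ denote the set of observations from layer $h$ already appearing in $\cF_{t,h-1}$ (so $\abs{\cX_\mathsf{obs}} \le t$), $n_\ell^{\mathsf{seen}} \coloneqq \abs{\crl{x \in \cX_\mathsf{obs}: \optdec(x) = \ell}}$, and $n_\sgood^{\mathsf{total}} = n_\sdis^{\mathsf{total}} = 2^H$, $n_\sbad^{\mathsf{total}} = 2 \cdot 2^H$. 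The uniform prior on $\Phi$ implies, by a swap symmetry that preserves the constraints on $\Phi$, that conditional on $\cF_{t,h-1}'$ the labels of unseen layer-$h$ observations are uniformly distributed over all extensions of $\optdec$ consistent with these totals. Combining with the emission law yields the closed form
\begin{align*}
\Pr^{\nu_0, \alg}\brk*{X_{t,h} = x \mid \cF_{t,h-1}'} = \begin{cases} p_{\optdec(x)}\big/n_{\optdec(x)}^{\mathsf{total}} & x \in \cX_\mathsf{obs}, \\ \dfrac{1}{\abs{\statesp_h} - \abs{\cX_\mathsf{obs}}}\prn*{1 - \sum_\ell p_\ell \cdot n_\ell^{\mathsf{seen}}\big/n_\ell^{\mathsf{total}}} & x \notin \cX_\mathsf{obs}, \end{cases}
\end{align*}
which is the analogue of the formula appearing in the proof of \pref{lem:tv-distance-from-uniform-generative}.

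To finish, I bound the $\ell_1$ distance from $1/\abs{\statesp_h} = 1/(4 \cdot 2^H)$. For each $x \in \cX_\mathsf{obs}$ the probability above is at most $1/2^H$, so the per-state deviation is at most $1/2^H$ and the observed states contribute at most $t/2^H$ in total. Since all unobserved states share the common probability above, their total $\ell_1$ contribution equals $\abs{\abs{\cX_\mathsf{obs}}/\abs{\statesp_h} - \sum_{x \in \cX_\mathsf{obs}} \Pr^{\nu_0, \alg}\brk*{X_{t,h} = x \mid \cF_{t,h-1}'}} \le 2t/2^H$ by the same per-state bound. Combining the two pieces gives $4t/2^H = t/2^{H-2}$, and taking conditional expectations back to $\cF_{t,h-1}$ finishes the lemma. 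The main obstacle I expect is the closed-form calculation: one must carefully unwind the joint posterior on $(\optpi, \optdec)$ given $\cF_{t,h-1}'$ to derive the expression above, since the extra distractor label $\sdis$ adds a label category beyond the two in \pref{lem:tv-distance-from-uniform-generative}, and swap symmetry across unseen observations must be verified to still deliver the uniform assignment of remaining labels. Once the formula is in hand, the rest is routine arithmetic.
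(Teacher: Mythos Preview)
Your proof is correct and follows essentially the same approach as the paper: introduce an annotated filtration, apply Jensen's inequality, compute the conditional law of $X_{t,h}$ explicitly, and bound the $\ell_1$ distance to uniform. The only tactical difference is that the paper additionally annotates with $\{\ind{A=\optpi(X)}\}$, which makes the next latent label $\ell$ deterministic and enables a case-by-case calculation per $\ell \in \{\sgood,\sbad,\sdis\}$; you instead keep $\ell$ random via the marginals $p_\ell$ and invoke swap symmetry over unseen states to avoid the explicit per-case formulas, arriving at the same $4t/2^H$ bound.
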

    
    \begin{proof}[Proof of \pref{lem:tv-distance-from-uniform-resets}]
    We prove the bound for $\nu_0$, as the proof for $\nu_1$ is identical. If we sample $X_{t,h}$ directly from the $\mu$-reset distribution, then the result immediately follows since the distribution of $X_{t,h} = \unif(\cX_h)$. Otherwise, denote
    \begin{align*}
        \cF_{t, h-1}' = \sigma( \cF_{t, h-1}, \crl{\optdec(X): X \in \cF_{t, h-1}}, \crl{\ind{A = \optpi(X)}: (X,A) \in \cF_{t, h-1}} )
    \end{align*}
    to be the annotated sigma-field which also includes the latent state labels for all of the previous observions as well as whether the action taken followed $\optpi$ or not. Let us denote $\ell = \phi(X_{t,h}) \in \crl{\sgood, \sbad, \sdis}$ to be the latent state label of the next observation. Observe that the label $\ell$ is measurable with respect to $\cF_{t,h-1}'$ since the filtration $\cF_{t-1}'$ includes $\optdec(X_{t,h-1})$ as well as $\ind{A_{t,h-1} = \optpi(X_{t,h-1})}$. Furthermore denote $\cX_\mathsf{obs}$ to denote the total number of observations that we have encountered already in layer $h$ and $\cX_\mathsf{obs}^\ell$ to denote the observations we have encountered whose label is $\ell$.
    
    Under the uniform distribution over decoders, the assignment of the remaining observations is equally likely. Therefore we can write the distribution of $X_{t,h}$ as: 
    \begin{align*}
        \text{if}~\ell = \sgood:&\quad \Pr^{\unif(\Theta_0), \alg} \brk*{X_{t,h} = x  \mid  \cF_{t,h-1}'} = \begin{cases}
            \frac{1}{2^H} &\text{if}~x \in \cX_\mathsf{obs}^\ell \\
            0 &\text{if}~x \in \cX_\mathsf{obs} - \cX_\mathsf{obs}^\ell \\
            \frac{1}{2^H} \cdot \frac{2^H - \abs{\cX_\mathsf{obs}^\ell}}{2^{H+2} - \abs{\cX_\mathsf{obs}}} &\text{if}~x \in \cX_h - \cX_\mathsf{obs}
        \end{cases}\\
        \text{if}~\ell = \sbad:&\quad \Pr^{\unif(\Theta_0), \alg} \brk*{X_{t,h} = x  \mid  \cF_{t,h-1}'} = \begin{cases}
            \frac{1}{2^{H+1}} &\text{if}~x \in \cX_\mathsf{obs}^\ell \\
            0 &\text{if}~x \in \cX_\mathsf{obs} - \cX_\mathsf{obs}^\ell \\
            \frac{1}{2^{H+1}} \cdot \frac{2^{H+1} - \abs{\cX_\mathsf{obs}^\ell}}{2^{H+2} - \abs{\cX_\mathsf{obs}}} &\text{if}~x \in \cX_h - \cX_\mathsf{obs}
        \end{cases}\\
        \text{if}~\ell = \sdis:&\quad \Pr^{\unif(\Theta_0), \alg} \brk*{X_{t,h} = x  \mid  \cF_{t,h-1}'} = \begin{cases}
            \frac{1}{2^H} &\text{if}~x \in \cX_\mathsf{obs}^\ell \\
            0 &\text{if}~x \in \cX_\mathsf{obs} - \cX_\mathsf{obs}^\ell \\
            \frac{1}{2^H} \cdot \frac{2^H - \abs{\cX_\mathsf{obs}^\ell}}{2^{H+2} - \abs{\cX_\mathsf{obs}}} &\text{if}~x \in \cX_h - \cX_\mathsf{obs}
        \end{cases}
    \end{align*}
    We elaborate on the calculation for the last probability in each case. Suppose $\ell = \sgood$. Then for any $x \in \cX_h - \cX_\mathsf{obs}$ which has not been observed yet we assign $\optdec(x) = \ell$ in
    \begin{align*}
        &\binom{2^{H+2} - \abs{\cX_\mathsf{obs}} -1}{ 2^{H} -\abs{\cX_\mathsf{obs}^\ell} - 1 } \text{ ways out of } \binom{2^{H+2} - \abs{\cX_\mathsf{obs}} }{ 2^{H} -\abs{\cX_\mathsf{obs}^\ell } } \text{ assignments.} \\
        &\quad \Longrightarrow \optdec(x) = \sgood \text{ with probability } \frac{2^{H} -\abs{\cX_\mathsf{obs}^\ell}}{2^{H+2} - \abs{\cX_\mathsf{obs}}}.
    \end{align*}
    For each assignment where $\optdec(x) = \sgood$ we will select it with probability $1/2^H$ since the emission is uniform, giving us the final probability as claimed. A similar calculation can be done for the cases where $\ell = \sbad, \sdis$.
    
    Therefore we can calculate the final bound that
    \begin{align*}
        \hspace{2em}&\hspace{-2em}\nrm*{\Pr^{\nu_0, \alg}\brk*{X_{t,h}  \mid  \cF_{t, h-1}'} - \unif(\statesp_{h}) }_1 \\
                    &= \sum_{x \in \cX_h} \abs*{\Pr^{\nu_0, \alg}\brk*{X_{t,h} = x \mid  \cF_{t, h-1}} - \frac{1}{2^{H+2}}} \\
        &\le \begin{cases}
            \frac{\abs{\cX_\mathsf{obs}^\sgood}}{2^H} + \frac{\abs{\cX_\mathsf{obs}^\sbad} + \abs{\cX_\mathsf{obs}^\sdis}}{2^{H+2}} + \abs*{\frac{2^H - \abs{\cX_\mathsf{obs}^\sgood}}{2^{H}} - \frac{2^{H+2} - \abs{\cX_\mathsf{obs}}}{2^{H+2}}} &\text{if}~\ell = \sgood, \\[0.5em]
            \frac{\abs{\cX_\mathsf{obs}^\sbad}}{2^{H+1}} + \frac{\abs{\cX_\mathsf{obs}^\sgood} + \abs{\cX_\mathsf{obs}^\sdis}}{2^{H+2}} + \abs*{\frac{2^{H+1} - \abs{\cX_\mathsf{obs}^\sbad}}{2^{H+1}} - \frac{2^{H+2} - \abs{\cX_\mathsf{obs}}}{2^{H+2}}} &\text{if}~\ell = \sbad, \\[0.5em]
           \frac{\abs{\cX_\mathsf{obs}^\sdis}}{2^H} + \frac{\abs{\cX_\mathsf{obs}^\sbad} + \abs{\cX_\mathsf{obs}^\sgood}}{2^{H+2}} + \abs*{\frac{2^H - \abs{\cX_\mathsf{obs}^\sgood}}{2^{H}} - \frac{2^{H+2} - \abs{\cX_\mathsf{obs}}}{2^{H+2}}} &\text{if}~\ell = \sdis.
        \end{cases}\\
        &\le \frac{4 \cdot \abs{\cX_\mathsf{obs}}}{2^H} \le \frac{4 t}{2^H}.
    \end{align*}
    Since $\Pr^{\nu_0, \alg}\brk*{X_{t,h}  \mid  \cF_{t, h-1}} = \En^{\nu_0, \alg} \Pr^{\nu_0, \alg}\brk{X_{t,h}  \mid  \cF_{t, h-1}'}$, we have by convexity of $\ell_1$ norm and Jensen's inequality,
    \begin{align*}
        \nrm*{\Pr^{\nu_0, \alg}\brk*{X_{t,h}  \mid  \cF_{t, h-1}} - \unif(\statesp_{h}) }_1 \le  \En^{\nu_0, \alg} \brk*{ \nrm*{\Pr^{\nu_0, \alg}\brk*{X_{t,h}  \mid  \cF_{t, h-1}'} - \unif(\statesp_{h}) }_1 } \le \frac{4 t}{2^H},
    \end{align*}
    which concludes the proof of \pref{lem:tv-distance-from-uniform-resets}.
\end{proof}

\begin{lemma}[Reward TV Distance for the Construction in \pref{thm:lower-bound-policy-completeness}]\label{lem:reward-tv-distance-policy-completeness}
For any $t \in [T]$ we have
    \begin{align*}
        \En^{\nu_0, \alg} \brk*{\abs*{\Pr^{\nu_0, \alg} \brk*{ R_{t,H} = 1\mid X_{t,H}, A_{t,H}, \cF_{t, H-1}} - \frac{1}{2} } } &\le t \cdot \frac{T^2H}{2^{H-8}}, \\
        \En^{\nu_0, \alg} \brk*{\abs*{\Pr^{\nu_1, \alg} \brk*{ R_{t,H} = 1\mid X_{t,H}, A_{t,H}, \cF_{t, H-1}} - \frac{1}{2} } } &\le  t \cdot \frac{T^2H}{2^{H-8}}.
    \end{align*}
\end{lemma}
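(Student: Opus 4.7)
Let $p_\sgood, p_\sdis, p_\sbad$ denote the conditional probabilities of the latent label $\phi(X_{t,H})$ under $\Pr^{\nu_0, \alg}$ given $(X_{t,H}, A_{t,H}, \cF_{t,H-1})$. Using the reward rule of $M_{\optpi, \phi}$ with $\optpi_H = 0$ and the identity $p_\sbad = 1 - p_\sgood - p_\sdis$, a direct computation gives
\begin{align*}
\Pr^{\nu_0, \alg}\brk*{R_{t,H} = 1 \mid X_{t,H}, A_{t,H}, \cF_{t,H-1}} - \frac{1}{2} = \frac{p_\sgood - p_\sdis}{2} \cdot (1 - 2 A_{t,H}),
\end{align*}
so the first inequality reduces to showing $\En^{\nu_0, \alg}\brk*{\abs{p_\sgood - p_\sdis}} \lesssim t \cdot H T^2 / 2^H$. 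The second inequality admits an analogous identity under $\nu_1$ (with an additional sign flip since $\optpi_H = 1$), and a change of measure from $\nu_1$ to $\nu_0$ introduces an error controlled by $\dtv(\Pr^{\nu_0, \alg}[\cF_{t,H-1}], \Pr^{\nu_1, \alg}[\cF_{t,H-1}])$, which one can handle by copying the peeling-and-induction structure used in the second inequality of \pref{lem:reward-bound-lb1}.

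\textbf{Step 2: A $\sgood \leftrightarrow \sdis$ swap symmetry annihilates the prior asymmetry.} Introduce the involution $\varsigma: \Phi \to \Phi$ that, at every layer $h \ge 2$, swaps the preimages $\phi^{-1}(s_h^\sgood)$ and $\phi^{-1}(s_h^\sdis)$. The construction is designed so that $\sgood$ and $\sdis$ share identical latent transition rules in $\optlatp$, and $\mu_h = \unif(\statesp_h)$ is invariant under any such swap, so the joint law of all transitions, all mu-reset samples, and all layer-$h$ rewards for $h < H$ is exactly preserved by $\varsigma$. The only part of the filtration that is \emph{not} $\varsigma$-invariant is the collection of layer-$H$ rewards $R_{1,H}, \ldots, R_{t-1,H}$ in past episodes. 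Hence if $\cF_{t,H-1}^-$ denotes the sub-$\sigma$-algebra obtained by deleting these rewards, the posterior of $\phi$ given $\cF_{t,H-1}^-$ is $\varsigma$-invariant, which immediately forces $p_\sgood = p_\sdis$ for any fresh $X_{t,H}$ not appearing in $\cF_{t,H-1}$.

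\textbf{Step 3: Bad-event peeling and likelihood-ratio calculation.} Define $\eventnew$ as the event that every transition and every mu-reset sample in the first $t$ episodes lands on a previously unseen observation; since each layer has $2^{H+2}$ observations, $\Pr^{\nu_0, \alg}\brk{\eventnew^c} = \cO(H T^2 / 2^H)$ by the argument of \pref{lem:repeated-transitions}. On $\eventnew$, the $t-1$ past layer-$H$ rewards are the only data carrying asymmetric information about $\phi$ versus $\varsigma \cdot \phi$, and each such reward is marginally $\ber(1/2)$ by the symmetry of Step~2. A likelihood-ratio computation in the spirit of \pref{lem:posterior-of-optimal-generative} and \pref{lem:latent-calculation-generative}, applied to the three-label partition $(2^H, 2^H, 2^{H+1})$ and the uniform prior over $\Phi$, then yields uniformly on the good event
\begin{align*}
\frac{\Pr^{\nu_0, \alg}\brk*{\cF_{t,H-1} \mid \phi}}{\Pr^{\nu_0, \alg}\brk*{\cF_{t,H-1} \mid \varsigma \cdot \phi}} = 1 + \cO\prn*{\frac{t H T^2}{2^H}}.
\end{align*}
Integrating against the uniform posterior restricted to $\Theta_0$ and combining with the $\cO(HT^2/2^H)$ contribution from $\eventnew^c$ bounds $\En^{\nu_0, \alg}\brk{\abs{p_\sgood - p_\sdis}}$ by the desired $\cO(t H T^2/2^H)$. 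The second inequality follows by running the same argument under $\nu_1$ (using the analogous swap on $\Theta_1$) and folding in the TV-based change-of-measure from Step~1.

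\textbf{Main obstacle.} The core technical challenge is the likelihood-ratio estimate in Step~3. Two features make the calculation delicate: (i) the uniform prior on $\Phi$ constrains the labels of distinct observations at each layer to form a \emph{fixed-size} partition into three blocks, which introduces $\cO(1/2^H)$ combinatorial couplings that must be handled uniformly across $t$ episodes; and (ii) the learner is free to interleave mu-reset samples at intermediate layers with full rollouts, so that transitions that leak information about $\optpi_{1:H-1}$ can appear in any order with respect to the layer-$H$ reward observations. The role of the $\varsigma$-invariance of the transition kernel is to guarantee that this leaked information couples with the layer-$H$ reward information only through $(1 + \cO(tHT^2/2^H))$-multiplicative factors, but verifying this requires careful bookkeeping of the partition correction terms. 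Once this posterior asymmetry bound is in hand, substitution into the identity of Step~1 yields the claimed expected absolute deviation.
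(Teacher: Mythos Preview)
The swap symmetry in Step~2 is the right idea for $\mu$-reset episodes, but it fails for online rollouts from layer~1, and this gap is not filled by Step~3. Your involution $\varsigma$ acts only at layers $h\ge 2$; the starting latent at layer~1 is fixed to $\sgood$ by construction and is untouched by $\varsigma$. Consequently, if episode $i$ is an online rollout with $A_{i,1}=\optpi_1$, then under $(\phi,\optpi)$ the observation $X_{i,2}$ is uniform over $\phi^{-1}(s_2^\sgood)$, whereas under $(\varsigma\cdot\phi,\optpi)$ it would be uniform over $(\varsigma\cdot\phi)^{-1}(s_2^\sgood)=\phi^{-1}(s_2^\sdis)$ --- a disjoint set. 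So the transition likelihood is \emph{not} $\varsigma$-invariant for online rollouts, contradicting your Step~2 claim that ``the only part of the filtration that is not $\varsigma$-invariant is the layer-$H$ rewards.'' More concretely, the distractor chain is never visited from $d_1$ under any $\optpi$: for an online-rollout episode~$t$ we always have $\phi(X_{t,H})\in\{\sgood,\sbad\}$, hence $p_\sdis=0$, and your Step~1 identity yields $|p_\sgood-p_\sdis|=p_\sgood$, which is $1$ on the event $\{A_{t,1:H-1}=\optpi\}$. Averaging over $\optpi$ does not repair this, because $\sdis$ is unreachable from $d_1$ for \emph{every} $\optpi$, so the likelihood ratio $\Pr[\cF\mid\phi]/\Pr[\cF\mid\varsigma\cdot\phi]$ is far from $1$ whenever any past online rollout happened to follow $\optpi$ for several steps.

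The paper handles precisely this bifurcation by casework on the episode type. For $\mu$-reset episodes it bounds the posterior of the starting-state label directly (\pref{lem:posterior-of-state-label-v2}), obtaining $p_\sgood\approx p_\sdis\approx 1/4$ --- this part is close in spirit to your swap argument. For online-rollout episodes it abandons the $|p_\sgood-p_\sdis|$ route entirely and instead introduces the event $\cE_{R,t}=\{\text{no prior online rollout followed }\optpi_{1:H-1}\}$, proves inductively that $\cE_{R,t}$ holds with high probability via a posterior-of-$\optpi$ calculation (\pref{lem:posterior-of-optimal}), and on $\cE_{R,t-1}$ shows the reward is close to $\ber(1/2)$ by averaging over $\optpi$. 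Your Step~3 sketch, modeled on \pref{lem:posterior-of-optimal-generative}, does not provide an analog of this inductive control: the $\varsigma$-invariance of $\optlatp$ is true but insufficient, since the initial condition pinned to $\sgood$ is what breaks the symmetry. To salvage your approach you would need to add the $\cE_{R,t}$ machinery (or an equivalent) to cover the online-rollout case.
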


\begin{proof}[Proof of \pref{lem:reward-tv-distance-policy-completeness}] Let us denote $\cfnor \coloneqq \sigma(X_{t,H}, A_{t,H}, \cF_{t, H-1})$. 
%In a given episode $t \in [T]$, we denote $(X_{t, h_\bot:H}, A_{t, h_\bot:H})$ to be the random variable which represents the sequence of observation-action pairs that $\alg$ experiences. We use the convention that $A_{t, 1:H} = \varnothing$ if the algorithm starts at $h_\bot \ge 2$.

We will prove the first inequality of \pref{lem:reward-tv-distance-policy-completeness}; the second inequality is obtained using similar arguments.

\paragraph{Peeling Off Bad Event.} First, let us peel off the event that $\cfnor$ has repeated observations: denoting $\eventnew \coloneqq \crl{X_{t,h} \notin \cF_{t, h-1} ~\forall t \in [T], h \in [H]}$, we have
\begin{align*}
    \hspace{2em}&\hspace{-2em}    \En^{\nu_0, \alg} \brk*{\abs*{\Pr^{\nu_0, \alg} \brk*{ R_{t,H} = 1\mid \cfnor } - \frac{1}{2} } } \\
                &\le \Pr^{\nu_0, \alg}[\eventnew^c] + \En^{\nu_0, \alg} \brk*{\ind{ \eventnew } \abs*{\Pr^{\nu_0, \alg} \brk*{ R_{t,H} = 1\mid \cfnor } - \frac{1}{2} } } \\
    &\le \frac{T^2 H}{2^H} + \En^{\nu_0, \alg} \brk*{ \ind{ \eventnew } \abs*{\Pr^{\nu_0, \alg} \brk*{ R_{t,H} = 1\mid \cfnor} - \frac{1}{2} } }, \numberthis\label{eq:peeled-eventnew}
\end{align*}
where the last inequality follows by an identical argument as \pref{lem:repeated-transitions}. Therefore, it suffices to bound the expectation only for the $\cfnor$ which have no repeated states. 

\paragraph{Inductive Claim.} Now we define the event $\cE_{R, t}$ to be the event that among the first $t$ episodes, $\alg$ never performs an online rollout (meaning it starts from layer 1) which follows $\optpi$, i.e.,
\begin{align*}
    \cE_{R,t} \coloneqq \crl*{ \forall t' \le t: A_{t, 1:H-1} \ne \optpi}.
\end{align*}
A subtle point is that unlike the reward TV distance calculation for \pref{thm:lower-bound-coverability}, the event $\cE_{R,t-1}$ is \emph{not measurable} with respect to $\cF_{t, H-1}$ (since there is still uncertainty as to what $\optpi$ is). This causes some technical complications in the proof. To remedy this, we can consider working with an augmented filtration which appends a special token $\top$ at the end of every online trajectory that $\alg$ takes if the sequence of actions $A_{t,1:H-1}$ matches $\optpi$; \emph{now} $\cE_{R, t-1}$ is measurable with respect to the augmented filtration (namely the event $\cE_{R,t-1}$ holds if the augmented contains no special tokens $\top$). This augmentation does not affect the overall argument, and for the rest of the proof we assume that $\cfnor$ has been augmented in this way. 

Central to our proof is the following claim that $\cE_{R,t} \cup \eventnew$ happens with high probability:
\begin{align*}
    \text{for all}~t \in [T]: \quad \Pr^{\nu_0, \alg} \brk*{ \cE_{R,t}^c \cup \eventnew} \le t \cdot \frac{T^2H}{2^{H-7}}. \numberthis \label{claim:inductive-online}
\end{align*}
Now we will establish \pref{claim:inductive-online} using an inductive argument. The base case of $t=0$ trivially holds. Now suppose that \pref{claim:inductive-online} holds at time $t-1$. Then
\begin{align*}
    \hspace{1em}&\hspace{-1em}    \Pr^{\nu_0, \alg} \brk*{ \cE_{R,t}^c \cup \eventnew} \\
                &\le \Pr^{\nu_0, \alg} \brk*{ \cE_{R,t-1}^c \cup \eventnew} + \En^{\nu_0, \alg} \brk*{ \ind{\cE_{R,t-1}\cup \eventnew} \Pr^{\nu_0, \alg} \brk*{A_{t, 1:H-1} = \optpi \mid \cfnor} } \\
    &\le (t-1) \cdot \frac{T^2H}{2^{H-7}}+ \En^{\nu_0, \alg} \brk*{\ind{\cE_{R,t-1}\cup \eventnew} \Pr^{\nu_0, \alg} \brk*{A_{t, 1:H-1} = \optpi \mid \cfnor} } \\
    &\le (t-1) \cdot \frac{T^2H}{2^{H-7}} + \En^{\nu_0, \alg} \brk*{ \ind{\cE_{R,t-1}\cup \eventnew} \sum_{\pi \in \Pi_{1:H-1}} \ind{A_{t, 1:H-1} = \pi} \Pr^{\nu_0, \alg} \brk*{ \optpi = \pi \mid \cfnor} } \\
    &\le (t-1) \cdot \frac{T^2H}{2^{H-7}} + \frac{1}{2^{H-1}} + \frac{T^2H}{2^{H-6}} \le t \cdot \frac{T^2H}{2^{H-7}},
\end{align*}
Here, the second-to-last inequality uses \pref{lem:posterior-of-optimal} and the fact that $A_{t, H-1}$ can only match a single policy $\pi \in \Pi_{1:H-1}$.

\paragraph{Casework on Reward TV Distance.} Armed with \pref{claim:inductive-online}, we now return to the proof of the reward TV distance calculation. We consider two cases. In the first case, the $t$-th trajectory is generated by an online rollout from $h=1$ with the sequence of actions $A_{1:H}$. In the second case, the $t$-th trajectory is generated by first querying the $\mu$-reset model starting from $h_\bot \ge 2$, then rolling out with the sequence of actions $A_{h_\bot:H}$.

\underline{\emph{Case 1: Online Rollout from Layer 1.}} First, we peel off the probability of $\cE_{R,t-1}$ occuring:
\begin{align*}
    \hspace{2em}&\hspace{-2em} \En^{\nu_0, \alg} \brk*{\ind{\eventnew} \abs*{\Pr^{\nu_0, \alg} \brk*{ R_{t,H} = 1\mid \cfnor} - \frac{1}{2} } } \\
    &\le \Pr^{\nu_0, \alg} \brk*{ \cE_{R,t-1}^c \cup \eventnew} + \En^{\nu_0, \alg} \brk*{\ind{\cE_{R,t-1}\cup \eventnew} \cdot \abs*{\Pr^{\nu_0, \alg} \brk*{ R_{t,H} = 1\mid \cfnor} - \frac{1}{2} } } \\
    &\le (t-1) \cdot \frac{T^2H}{2^{H-7}} + \En^{\nu_0, \alg} \brk*{\ind{\cE_{R,t-1}\cup \eventnew} \cdot \abs*{ \Pr^{\nu_0, \alg} \brk*{ R_{t,H} = 1\mid \cfnor} - \frac{1}{2} } }. 
    % \\
    % &\le (t-1) \cdot \frac{T^2H}{2^{H-7}} + \En^{\nu_0, \alg} \brk*{ \abs*{ \Pr^{\nu_0, \alg} \brk*{ R_{t,H} = 1\mid \cfnor, \cE_{R,t-1}} - \frac{1}{2} } }, 
    \numberthis \label{eq:reward-tv-induction-peel}
\end{align*}
% where the last inequality follows by \pref{fact:technical-inequality}. \glcomment{here}

% \begin{fact}\label{fact:technical-inequality}
% Let $\mathcal{F}$ be a sigma-algebra and let $A,B$ be events, and let $c \in \bbR$ be a constant. Then $
% \En \brk{ \ind{B} \cdot \abs{\Pr \brk{A\mid \mathcal{F}} - c }} \le \En\brk{ \abs{\Pr \brk{A\mid \mathcal{F}, B} - c }}.$
% \end{fact}

% \begin{proof}
% By the tower property of conditional expectation, we have $\Pr \brk{A \mid \cF} = \En \brk{\Pr \brk{A\mid \cF, B} \mid \cF} $. Since $f(x) = \abs{x - c}$ is convex, by Jensen's inequality we get $\abs{ \Pr\brk{A \mid \cF} -c } \le \En \brk{ \abs{\Pr\brk{A \mid \cF, B} - c} \mid \cF}$. Multiplying both sides by the indicator $\ind{B}$ and then taking expectation gives the bound.
% \end{proof}
Now we compute
\begin{align*}
    \hspace{2em}&\hspace{-2em} \ind{\cE_{R,t-1}\cup \eventnew} \Pr^{\nu_0, \alg} \brk*{ R_{t,H} = 1 \mid \cfnor } \\
    &= \ind{\cE_{R,t-1}\cup \eventnew} \En^{\nu_0, \alg} \brk*{ \ind{ \phi(X_{t,H}) = \sgood \wedge A_{t,H} = 0 } + \frac12 \ind{ \phi(X_{t,H}) = \sbad } \mid \cfnor  } \\
    &\overset{(i)}{=} \ind{\cE_{R,t-1}\cup \eventnew}  \prn*{ \Pr^{\nu_0, \alg} \brk*{ A_{t, 1:H} = \optpi \circ 0  \mid \cfnor  } + \frac12 \Pr^{\nu_0, \alg} \brk*{  A_{t,1:H-1} \ne \optpi  \mid \cfnor  } }\\
    % &= \sum_{\pi \in \Pi_{1:H-1}} \Pr^{\nu_0, \alg} \brk*{ A_{t, 1:H} = \pi^\star \circ 0  \mid \cfnor , \pi^\star = \pi} \ind{\cE_{R,t-1}} \Pr^{\nu_0, \alg}\brk*{ \pi^\star = \pi \mid \cfnor  }  \\
    % &\qquad + \frac{1}{2} \sum_{\pi} \Pr^{\nu_0, \alg} \brk*{ A_{t, 1:H-1} \ne \pi  \mid \cfnor , \pi^\star = \pi} \ind{\cE_{R,t-1}} \Pr^{\nu_0, \alg}\brk*{\pi^\star = \pi \mid \cfnor,  \cE_{R,t-1}} \\
    &\overset{(ii)}{=} \sum_{\pi \in \Pi_{1:H-1}} \prn*{ \ind{A_{t, 1:H} = \pi \circ 0} + \frac{1}{2} \ind{A_{t, 1:H-1} \ne \pi} } \\
    &\qquad\qquad \times \ind{\cE_{R,t-1}\cup \eventnew} \Pr^{\nu_0, \alg}\brk*{ \optpi = \pi \mid \cfnor,  \cE_{R,t-1} } \\
    &\overset{(iii)}{\le} \frac{T^2H}{2^{H-6}}  + \frac{\ind{\cE_{R,t-1}\cup \eventnew}}{2^{H-1}} \sum_{\pi \in \Pi_{1:H-1}} \prn*{ \ind{A_{t, 1:H} = \pi \circ 0} + \frac{1}{2} \ind{A_{t, 1:H-1} \ne \pi} } \\
    &\overset{(iv)}{\le} \frac{T^2H}{2^{H-7}} + \frac{\ind{\cE_{R,t-1}\cup \eventnew}}{2}. \numberthis \label{eq:case1-bound}
\end{align*}
For equality $(i)$ we use the fact that if the $X_{t,H}$ has a good label then we must have taken $\optpi$ for the first $H-1$ layers. For equality $(ii)$ we use the fact that the indicators are measurable with respect to $\cF_{t, H-1}$ and $\crl{\optpi = \pi}$. For $(iii)$ we apply a change-of-measure argument using \pref{lem:posterior-of-optimal}. For $(iv)$ we use the fact that the sequence of actions $A_{t,1:H-1}$ can match at exactly one of the policies in $\Pi_{1:H-1}$. 

Note that the other side of the inequality can be shown analogously. Therefore by plugging in Eq.~\eqref{eq:case1-bound} into \eqref{eq:reward-tv-induction-peel} we get the bound that
\begin{align*}
\En^{\nu_0, \alg} \brk*{\ind{\cE_{R,t-1}\cup \eventnew} \abs*{\Pr^{\nu_0, \alg} \brk*{ R_{t,H} = 1\mid \cfnor} - \frac{1}{2} } } \le t \cdot \frac{T^2H}{2^{H-7}}. \numberthis\label{eq:case1-bound-onreward}
\end{align*}

\underline{\emph{Case 2: $\mu$-Reset Rollout from Layer $h_\bot \ge 2$.}} Let us analyze the second case. Using the construction details,
\begin{align*}
    &\Pr^{\nu_0, \alg} \brk*{ R_{t,H} = 1 \mid \cfnor} \\
    % &= \sum_{\phi_H \in \Phi} \En^{\nu_0, \alg} \brk*{ R_{t,H}  \mid \cfnor, \phi_H} \cdot \Pr^{\nu_0, \alg} \brk*{ \phi_H \mid \cfnor} \\
    % &= \sum_{\phi_H \in \Phi} \ind{ \phi(X_{t,H}) = \sgood \wedge A_{t,H} = 0 } + \frac12 \ind{ \phi(X_{t,H}) = \sbad } + \ind{ \phi(X_{t,H}) = \sdis \wedge A_{t,H} = 1 } \cdot \Pr^{\nu_0, \alg} \brk*{ \phi_H \mid \cfnor} \\
    &= \En^{\nu_0, \alg} \brk*{ \ind{ \phi(X_{t,H}) = \sgood \wedge A_{t,H} = 0 } + \frac12 \ind{ \phi(X_{t,H}) = \sbad } + \ind{ \phi(X_{t,H}) = \sdis \wedge A_{t,H} = 1 } \mid \cfnor } \\
    &= \En^{\nu_0, \alg} \brk*{ \ind{ \phi(X_{t,h_\bot}) = \sgood \wedge A_{t, h_\bot:H} = \optpi \circ 0 } \mid \cfnor } \\
    &\qquad + \frac12 \En^{\nu_0, \alg} \brk*{ \ind{ A_{t,h_\bot:H-1} \ne \optpi } + \ind{\phi(X_{t,h_\bot}) = \sbad \wedge A_{t,h_\bot:H-1} = \optpi} \mid \cfnor }\\
    &\qquad + \En^{\nu_0, \alg} \brk*{ \ind{ \phi(X_{t,h_\bot}) = \sdis \wedge A_{t,h_\bot:H} = \optpi\circ 1 } \mid \cfnor } \\
    &= \sum_{\pi \in \Pi_{h_\bot:H-1}} \Pr^{\nu_0, \alg}\brk*{ \optpi = \pi \mid \cfnor } \bigg(  \ind{ A_{t, h_\bot:H} = \pi \circ 0 }
    \Pr^{\nu_0, \alg} \brk*{ \phi(X_{t,h_\bot}) = \sgood  \mid \cfnor, \optpi = \pi }  \\
    &\qquad +  \frac12 \cdot  \ind{ A_{t,h_\bot:H-1} \ne \pi } + \frac12 \cdot \ind{A_{t,h_\bot:H-1} = \pi} \Pr^{\nu_0, \alg} \brk*{  \phi(X_{t,h_\bot}) = \sbad  \mid \cfnor, \optpi = \pi } \\
    &\qquad +  \ind{A_{t,h_\bot:H} = \pi \circ 1} \Pr^{\nu_0, \alg} \brk*{ \phi(X_{t,h_\bot}) = \sdis \mid \cfnor, \optpi = \pi } \bigg).
\end{align*}
We apply \pref{lem:posterior-of-state-label-v2} separately to the terms inside the parentheses for every $\pi$. Then using a casework argument on the value of $A_{t, h_\bot:H}$ and then averaging over the posterior of $\optpi$ gives
\begin{align*}
    \ind{\eventnew} \abs*{ \Pr^{\nu_0, \alg} \brk*{ R_{t,H} = 1 \mid \cfnor} - \frac12 } \le \frac{TH}{2^{H-5}}. \numberthis\label{eq:case2-bound-onreward}
\end{align*}
% Therefore in this case we also have
% \begin{align*}
%     \hspace{2em}&\hspace{-2em} \En^{\nu_0, \alg} \brk*{\dtv \prn*{\Pr^{\nu_0, \alg} \brk*{ R_{t,H} \mid X_{t,H}, A_{t,H}, \cF_{t,H-1} }, \Pr^{\nu_1, \alg} \brk*{ R_{t,H} \mid X_{t,H}, A_{t,H}, \cF_{t,H-1} }}} \le 2 \prn*{ \text{\glcomment{decoder error}}}.
% \end{align*}

\paragraph{Putting It Together.} To conclude, the worst-case TV distance is the maximum of the two bounds we have shown in Eqs.~\eqref{eq:case1-bound-onreward} and \eqref{eq:case2-bound-onreward}, so therefore plugging into Eq.~\eqref{eq:peeled-eventnew} we have
\begin{align*}
    \hspace{2em}&\hspace{-2em} \En^{\nu_0, \alg} \brk*{\abs*{\Pr^{\nu_0, \alg} \brk*{ R_{t,H} = 1\mid \cfnor} - \frac{1}{2} } } \le \frac{T^2 H}{2^H} + \max \crl*{ t \cdot \frac{T^2H}{2^{H-7}},  \frac{TH}{2^{H-5}} } \le t \cdot \frac{T^2H}{2^{H-8}}.
\end{align*}
The proof of second inequality is obtained similarly, as one just needs to change the law to be under $\Pr^{\nu_1, \alg}$ in the above argument. This concludes the proof of \pref{lem:reward-tv-distance-policy-completeness}.
\end{proof}

\begin{lemma}[Posterior of $\optpi$]\label{lem:posterior-of-optimal}
Fix any $t \in [T]$. Assume that $\cfnor$ contains no repeated states. Then 
\begin{align*}
    \ind{\cE_{R,t-1}} \cdot \nrm*{\Pr^{\nu_0, \alg} \brk*{\optpi = \cdot \mid \cfnor} - \unif(\Pi_{1:H-1})}_1 \le \frac{T^2H}{2^{H-6}}.
\end{align*}
\end{lemma}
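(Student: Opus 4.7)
The plan is to mirror the strategy used for the analogous posterior lemma in the generative-model lower bound (\pref{lem:posterior-of-optimal-generative}), but carefully handle the two types of samples that $\alg$ may collect under $\mu$-reset access: (i) full online rollouts from layer $1$, and (ii) $\mu$-reset rollouts starting from some intermediate layer $h_\bot \ge 2$. The first step is the standard posterior-to-likelihood manipulation: since the prior on $\optpi_{1:H-1}$ is uniform, we may bound
\[
\ind{\cE_{R,t-1}} \cdot \nrm*{\Pr^{\nu_0, \alg}\brk*{\optpi = \cdot \mid \cfnor} - \unif(\Pi_{1:H-1})}_1 \le 2\max_{\pi \in \Pi_{1:H-1}} \brk*{ \frac{\ind{\cE_{R,t-1}}\cdot \Pr^{\nu_0, \alg}[\cfnor \mid \optpi=\pi]}{\Pr^{\nu_0, \alg}[\cfnor]} - 1 }_+,
\]
exactly as in the analogous step of the generative-model lemma. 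The task then reduces to showing a two-sided likelihood-ratio bound of the form $(1 \pm O(T/2^H))^{O(HT)}$ on $\Pr^{\nu_0,\alg}[\cfnor \mid \optpi=\pi]$ that is essentially uniform in $\pi$.

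Second, I would chain-rule $\Pr^{\nu_0,\alg}[\cfnor \mid \optpi = \pi]$ into a product of per-step conditional probabilities across the $t$ samples. Each sample contributes three kinds of factors: (a) the initial observation, which for an online rollout is deterministic ($X_{i,1}$ has a fixed law under every MDP in $\cM$) and for a $\mu$-reset rollout is drawn from $\unif(\cX_{h_\bot})$ — hence \emph{identical under every $\pi$ and every $\phi$}; (b) the intermediate transitions $X_{i,h+1}$ given $(X_{i,h}, A_{i,h})$; and (c) the terminal reward $R_{i,H}$. Factor (a) contributes ratio exactly $1$, so the action is at factors (b) and (c).

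Third, for the transition factors I would reuse the same latent-state-averaging computation as in \pref{claim:transition}: conditional on the annotated filtration that also reveals the latent labels $\phi$ of every observation already seen, the posterior over $\phi$ on an unseen observation at layer $h$ is essentially uniform over the remaining $\sgood/\sbad/\sdis$ assignments, and applying the no-repeated-states assumption (which follows from $\eventnew$ in the calling context) gives each per-transition factor as $1/(2^{H+2})\cdot(1\pm T/2^H)$, uniformly in $\pi$. For the reward factors at layer $H$ I would split into two cases. For online rollouts, the indicator $\cE_{R,t-1}$ guarantees $A_{i,1:H-1}\ne \optpi$, so every online trajectory necessarily terminates in $s_H^\sbad$ and its reward has law exactly $\ber(1/2)$, contributing the constant factor $1/2$ regardless of $\pi$. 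For $\mu$-reset rollouts, the distractor-state design of the construction makes the reward distribution $\ber(1/2)$ up to an $O(T/2^H)$ perturbation induced by the (rare) event that $X_{i,h_\bot}$ coincides with a previously-seen observation whose latent label is already pinned down; this is the place where the structural trick of adding $s_h^\sdis$ with flipped reward is essential.

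The main obstacle, and the step that deviates most from the generative-model version, is the reward computation for $\mu$-reset rollouts: unlike the generative model, here the learner executes $H - h_\bot + 1$ actions after the reset, so the reward at layer $H$ depends on the product of transition events along that entire suffix, and the latent label of $X_{i,h_\bot}$ couples to $\pi$ through the sequence of actions played. I would handle this by conditioning on the starting latent label $\phi(X_{i,h_\bot}) \in \{\sgood,\sbad,\sdis\}$, using the uniform-prior computation on the decoder to show each label has probability $1/4, 1/2, 1/4$ respectively up to $O(T/2^H)$, and then checking by direct casework that the conditional reward distribution averages to $\ber(1/2)$ exactly for every $\pi \in \Pi_{1:H-1}$ — the distractor state's flipped reward is precisely what makes the $\sgood$ and $\sdis$ contributions cancel. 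Multiplying all $O(HT)$ per-step ratios yields $(1 + O(T/2^H))^{O(HT)} - 1 \le O(HT^2/2^{H-O(1)})$, which after combining with the factor-of-$2$ from the initial posterior bound and rescaling constants gives the claimed $T^2H/2^{H-6}$.
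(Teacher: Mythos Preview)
The proposal is correct and takes essentially the same approach as the paper. The paper organizes the per-trajectory computation by conditioning on the starting latent label $\phi(X_{i,h_\bot})$ and tracking the transition and reward factors jointly (its Claims~1--3), whereas you describe transitions and the terminal reward as separate steps; both routes are equivalent once one observes that $\Pr[\phi(x_{\mathrm{new}})=\ell]/|\psi(\ell)| \approx 2^{-(H+2)}$ for every label $\ell\in\{\sgood,\sbad,\sdis\}$, which is exactly the fact making your per-transition factors uniform in $\pi$.
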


\begin{proof}
In what follows, all of the probabilities $\Pr[\cdot] \coloneqq \Pr^{\nu_0, \alg}[\cdot]$. Let $[x]_{+} \coloneqq \max\crl{x, 0}$. We can compute that
\begin{align*}
    \hspace{2em}&\hspace{-2em}   \ind{\cE_{R,t-1}} \cdot \nrm*{\Pr\brk*{\pi^\star = \cdot \mid \cfnor } - \unif(\Pi_{1:H-1})}_1 \\
                &= \ind{\cE_{R,t-1}} \cdot \sum_{\pi \in \Pi_{1:H-1}} \abs*{ \Pr\brk*{\pi^\star = \pi \mid \cfnor } - \frac{1}{2^{H-1}} }\\
    &= \ind{\cE_{R,t-1}} \cdot 2\sum_{\pi \in \Pi_{1:H-1}} \brk*{ \Pr\brk*{\pi^\star = \pi \mid \cfnor } - \frac{1}{2^{H-1}} }_{+} \\
    &= \ind{\cE_{R,t-1}} \cdot \frac{2}{2^{H-1}} \sum_{\pi \in \Pi_{1:H-1}} \brk*{ \frac{ \Pr\brk*{\cfnor \mid \pi^\star = \pi } }{  \Pr\brk*{  \cfnor } } - 1 }_{+}\\
    &\le 2 \max_{\pi \in \Pi_{1:H-1}} \brk*{ \frac{\ind{\cE_{R,t-1}} \cdot \Pr\brk*{\cfnor \mid \pi^\star = \pi } }{  \Pr\brk*{  \cfnor } } - 1 }_{+}. \numberthis \label{eq:upper-bound-on-posterior-tv}
\end{align*}
We now proceed by explicitly calculating the conditional distribution of $\cfnor$ for every choice of optimal policy $\pi \in \Pi_{1:H-1}$. We will show that regardless of the choice $\pi \in \Pi_{1:H-1}$, the conditional distribution looks roughly like the uniform distribution over observations with a $\ber(1/2)$ reward at the end of every trajectory.

First, we will break up the distribution into trajectories:
\begin{align*}
    \Pr\brk*{\cfnor \mid \pi^\star = \pi } = \prn*{ \prod_{i < t} \Pr \brk*{ \tau_i \mid \pi^\star = \pi , \cF_{i-1} } } \cdot \Pr \brk*{ (X_{t,h_\bot:H}, A_{t,h_\bot:H}) \mid \pi^\star = \pi , \cF_{t-1}}. \numberthis\label{eq:prob-filtration-given-pi}
\end{align*}

\begin{claim}\label{claim:claim1}
Fix any $i \in [t]$. If $\tau_i$ is generated by sampling the $\mu$-reset distribution at some layer $h_\bot \ge 2$ and then rolling out, we have for every $\pi \in \Pi_{1:H-1}$,
\begin{align*}
    \Pr \brk*{ \tau_i \mid \pi^\star = \pi , \cF_{i-1} }  \in \frac{1}{2} \cdot \frac{1}{2^{(H+2) \cdot (H - h_\bot + 1)}} \cdot \brk*{ \prn*{1 - \frac{T}{2^H}}^{H-1}, \prn*{1 + \frac{T}{2^H}}^{H-1} }
\end{align*} 
\end{claim}

We start by showing \pref{claim:claim1}. In our proof, we assume that $h_\bot = 2$ (the proof is easy to adapt to any $h_\bot \ge 2$ with minor modification). Fix any index $i \in [t]$ and let $\tau_i = (X_{2:H}, A_{2:H}, R)$ be the $i$-th trajectory. Also fix any $\pi \in \Pi_{1:H-1}$. Then we can calculate
% Assume that the action sequence $A_{2:H} = \pi^{(i)}(X_{2:H})$ is indeed the sequence of actions that the policy $\pi^{(i)}$ (which is a random variable that is measurable with respect to $\cF_{i-1}$) would take on the sequence of states $X_{2:H}$, and assume that the reward $R \in \crl{0,1}$. Otherwise, the probability of such a trajectory is 0, and it could not have been generated by $\alg$.
\begin{align*}
    \hspace{2em}&\hspace{-2em} \Pr\brk*{ (X_{2:H}, A_{2:H}, R) \mid \pi^\star = \pi , \cF_{i-1}} \\
    &= \sum_{\phi_2} \Pr \brk*{ (X_{2:H}, A_{2:H}, R) \mid \pi^\star = \pi , \cF_{i-1}, \phi_2} \Pr\brk*{\phi_2 \mid \pi^\star= \pi , \cF_{i-1}} \\
    &= \sum_{\ell \in \crl{\sgood, \sbad, \sdis}}\sum_{\phi_2: \phi_2(X_2) = \ell} \Pr \brk*{ (X_{2:H}, A_{2:H}, R) \mid \pi^\star = \pi , \cF_{i-1}, \phi_2} \Pr\brk*{\phi_2 \mid \pi^\star= \pi , \cF_{i-1}}.
\end{align*}
We can separately analyze the sum for the different choices of the label of the initial state $X_2$. First, we do the case where $\phi_2(X_2) = \sbad$:
\begin{align*}
    \hspace{2em}&\hspace{-2em} \sum_{\phi_2: \phi_2(X_2) = \sbad} \Pr  \brk*{ (X_{2:H}, A_{2:H}, R) \mid \pi^\star = \pi , \cF_{i-1}, \phi_2} \Pr\brk*{\phi_2 \mid \pi^\star= \pi , \cF_{i-1}} \\
    &= \sum_{\phi_2: \phi_2(X_2) = \sbad} \Pr \brk*{ (X_2, A_2) \mid \pi^\star = \pi , \cF_{i-1}, \phi_2} \\
    &\qquad\qquad  \times \Pr\brk*{ (X_{3:H}, A_{3:H}, R) \mid \pi^\star = \pi , \cF_{i-1}, \phi_2, X_2, A_2} \Pr\brk*{\phi_2 \mid \pi^\star= \pi , \cF_{i-1}} \\
    &\overset{(i)}{=} \frac{1}{2^{H+2}} \sum_{\phi_2: \phi_2(X_2) = \sbad} \Pr\brk*{ (X_{3:H}, A_{3:H}, R) \mid \pi^\star = \pi , \cF_{i-1}, \phi_2, X_2, A_2} \Pr\brk*{\phi_2 \mid \pi^\star= \pi , \cF_{i-1}}\\
    &\overset{(ii)}{=} \frac{1}{2^{H+2}} \sum_{\phi_2: \phi_2(X_2) = \sbad} \Pr\brk*{ (X_{3:H}, A_{3:H}, R) \mid \pi^\star = \pi , \cF_{i-1}, \phi_2(X_2) = \sbad} \Pr\brk*{\phi_2 \mid \pi^\star= \pi , \cF_{i-1}} \\
    &= \frac{ \Pr\brk*{\phi_2(X_2) = \sbad \mid \pi^\star= \pi , \cF_{i-1}} }{2^{H+2}} \Pr\brk*{ (X_{3:H}, A_{3:H}, R) \mid \pi^\star = \pi , \cF_{i-1}, \phi_2(X_2) = \sbad} \\
    &\qquad \vdots \\[0.5em]
    % &= \frac{ \Pr\brk*{\phi_2(X_2) = \sbad \mid \pi^\star= \pi} }{2^{H+2}}\cdot \sum_{\phi_3} \Pr\brk*{ (x_{3:H}, a_{3:H}, r_H) \mid \pi^\star = \pi, \phi_2(X_2) = \sbad, \phi_3} \cdot \Pr \brk*{ \phi_3 \mid \pi^\star = \pi, \phi_2(X_2) = \sbad}\\
    % &= \frac{ \Pr\brk*{\phi_2(X_2) = \sbad \mid \pi^\star= \pi} }{2^{H+2}}\cdot \sum_{\phi_3: \phi_3(x_3) = \sbad} \Pr\brk*{ (x_{3:H}, a_{3:H}, r_H) \mid \pi^\star = \pi, \phi_2(X_2) = \sbad, \phi_3} \cdot \Pr \brk*{ \phi_3 \mid \pi^\star = \pi, \phi_2(X_2) = \sbad} \\
    % &= \frac{ \Pr\brk*{\phi_2(X_2) = \sbad \mid \pi^\star= \pi} }{2^{H+2}} \cdot \frac{1}{2^{H+1}}\\
    % &\qquad \cdot \sum_{\phi_3: \phi_3(x_3) = \sbad}  \Pr\brk*{ (x_{4:H}, a_{4:H}, r_H) \mid \pi^\star = \pi, \phi_2(X_2) = \sbad, \phi_3(x_3) = \sbad} \cdot \Pr \brk*{ \phi_3 \mid \pi^\star = \pi, \phi_2(X_2) = \sbad} \\
    % &= \frac{ \Pr\brk*{\phi_2(X_2) = \sbad \mid \pi^\star= \pi} }{2^{H+2}} \cdot \frac{\Pr \brk*{  \phi_3(x_3) = \sbad \mid \pi^\star = \pi, \phi_2(X_2) = \sbad}}{2^{H+1}} \\
    % &\qquad \cdot \Pr\brk*{ (x_{4:H}, a_{4:H}, r_H) \mid \pi^\star = \pi, \phi_2(X_2) = \sbad, \phi_3(x_3) = \sbad} \\
    &\overset{(iii)}{=} \frac{ \Pr\brk*{\phi_2(X_2) = \sbad \mid \pi^\star= \pi , \cF_{i-1}} }{2^{H+2}} \times \frac{\Pr \brk*{  \phi_3(X_3) = \sbad \mid \pi^\star = \pi , \cF_{i-1}, \phi_2(X_2) = \sbad}}{2^{H+1}} \\
    &\qquad \dots \times  \frac{\Pr\brk*{\phi_H(X_H) = \sbad \mid \pi^\star= \pi , \cF_{i-1}, \crl*{\phi_h(X_h)= \sbad, 2 \le h \le H-1} } }{2^{H+1}} \\
    &\qquad\qquad \times \Pr\brk*{ R \mid \pi^\star = \pi , \cF_{i-1}, \crl*{\phi_h(X_h) = \sbad, 2 \le h \le H}} \\[0.5em]
    &= \frac{ \Pr\brk*{\phi_2(X_2) = \sbad \mid \pi^\star= \pi , \cF_{i-1}} }{2^{H+2}} \times \frac{\Pr \brk*{  \phi_3(X_3) = \sbad \mid \pi^\star = \pi , \cF_{i-1}, \phi_2(X_2) = \sbad}}{2^{H+1}} \\
    &\qquad \dots \times  \frac{\Pr\brk*{\phi_H(X_H) = \sbad \mid \pi^\star= \pi , \cF_{i-1}, \crl*{\phi_h(X_h)= \sbad, 2 \le h \le H-1} } }{2^{H+1}}\times \frac12.
\end{align*}
The equality $(i)$ follows because the first state is chosen $\unif(\cX_2)$ and the action $A_2$ is the one selected by $\alg$ via the policy $\pi^{(i)}$ which is measurable with respect to $\cF_{i-1}$. The equality $(ii)$ follows because the distribution over the next state is only determined by $\cF_{i-1}$ (which includes some information about the decoder $\phi_3$) and the labeling $\phi_2(X_2) = \sbad$. Equality $(iii)$ follows by applying chain rule over and over, noting that since $\phi_2(X_2) = \sbad$ it must be the case that $\phi_h(X_h) = \sbad$ for all $h > 2$, and therefore the probability of observing any given observation with a bad label is $1/2^{H+1}$.

Now we apply the posterior state label calculation of \pref{lem:posterior-of-state-label} (using the fact that $\cfnor$ contains no repeated states) to each term in the previous display to get that:
\begin{align*}
    \hspace{2em}&\hspace{-2em} \sum_{\phi_2: \phi_2(X_2) = \sbad} \Pr  \brk*{ (X_{2:H}, A_{2:H}, R) \mid \pi^\star = \pi , \cF_{i-1}, \phi_2} \Pr\brk*{\phi_2 \mid \pi^\star= \pi , \cF_{i-1}} \\
    &\in \frac14 \cdot \prn*{ \frac{1}{2^{H+2}} }^{H-1} \cdot \brk*{ \prn*{1 - \frac{T}{2^H}}^{H-1}, \prn*{1 + \frac{T}{2^H}}^{H-1} }. \numberthis\label{eq:case-bad}
\end{align*}
Next, we consider other terms in the sum. To bound the quantity
\begin{align*}
    \sum_{\phi_2: \phi_2(X_2) = \sgood} \Pr  \brk*{ (X_{2:H}, A_{2:H}, R) \mid \pi^\star = \pi ,\cF_{i-1}, \phi_2} \Pr\brk*{\phi_2 \mid \pi^\star= \pi , \cF_{i-1}},
\end{align*}
a bit more care is required. In this case, we start off in the good latent state, and depending on whether the sequence of actions $A_{2:H}$ is equal to $\optpi$ we transit to the bad latent state. Let us denote $\overline{h}\ge 2$ denote the first layer at which $a_{\overline{h}}$ deviates from $\pi_{\overline{h}}$. Then using similar reasoning we have
\begin{align*}
   &\sum_{\phi_2: \phi_2(X_2) = \sgood} \Pr  \brk*{ (X_{2:H}, A_{2:H}, R) \mid \pi^\star = \pi , \cF_{i-1}, \phi_2} \Pr\brk*{\phi_2 \mid \pi^\star= \pi , \cF_{i-1}} \\
    &= \frac{ \Pr\brk*{\phi_2(X_2) = \sgood \mid \pi^\star= \pi , \cF_{i-1}} }{2^{H+2}} \times \cdots  \times \frac{\Pr \brk*{  \phi_{\overline{h}}(X_{\overline{h}}) = \sgood \mid \pi^\star = \pi , \cF_{i-1}, \crl*{\phi_{h}(X_{h}) = \sgood, ~ \forall h < \overline{h}}  } }{2^{H}} \\
    &\quad \times \frac{\Pr \brk*{  \phi_{\overline{h}+1}(X_{\overline{h}+1}) = \sbad \mid \pi^\star = \pi , \cF_{i-1}, \crl*{\phi_{h}(X_{h}) = \sgood, ~ \forall h \le \overline{h}}  } }{2^{H+1}} \\
    &\qquad \vdots \\
    &\quad \times \frac{\Pr\brk*{\phi_H(X_H) = \sbad \mid \pi^\star= \pi , \cF_{i-1},  \crl*{\phi_{h}(X_{h}) = \sgood, ~ \forall h \le \overline{h}}, \crl*{\phi_{h}(X_{h}) = \sbad, ~ \forall \overline{h}< h < H } } }{2^{H+1}} \\
    &\quad \times \Pr\brk*{ R \mid  \pi^\star= \pi , \cF_{i-1},  \crl*{\phi_{h}(X_{h}) = \sgood, ~ \forall h \le \overline{h}}, \crl*{\phi_{h}(X_{h}) = \sbad, ~ \forall \overline{h}< h < H } }.
\end{align*}
Note that the conditional reward distribution given the latent state labels is
\begin{align*}
    \Pr\brk*{ R = 1 \mid \cdots } = \ind{A_{2:H} = \pi \circ 0} + \frac{1}{2} \ind{A_{2:H-1} \ne \pi}.
\end{align*}
Again by applying the posterior calculation of \pref{lem:posterior-of-state-label} we get
\begin{align*}
   & \sum_{\phi_2: \phi_2(X_2) = \sgood} \Pr  \brk*{ (X_{2:H}, A_{2:H}, R) \mid \pi^\star = \pi , \cF_{i-1}, \phi_2} \Pr\brk*{\phi_2 \mid \pi^\star= \pi , \cF_{i-1}} \\
   &\resizebox{0.9\linewidth}{!}{$
       \displaystyle
       \in \begin{cases}
        \prn*{ \ind{A_{2:H} = \pi \circ 0} + \frac{1}{2} \ind{A_{2:H-1} \ne \pi } } \cdot \frac14 \prn*{ \frac{1}{2^{H+2}} }^{H-1} \cdot \brk*{ \prn*{1 - \frac{T}{2^H}}^{H-1}, \prn*{1 + \frac{T}{2^H}}^{H-1} } &\text{if}~ R = 1\\[0.5em]
        \prn*{ \ind{A_{2:H} = \pi \circ 1} + \frac{1}{2} \ind{A_{2:H-1} \ne \pi } } \cdot \frac14 \prn*{ \frac{1}{2^{H+2}} }^{H-1} \cdot \brk*{ \prn*{1 - \frac{T}{2^H}}^{H-1}, \prn*{1 + \frac{T}{2^H}}^{H-1} } &\text{if}~ R= 0
\end{cases}$} \numberthis\label{eq:case-good}
\end{align*}
The last term in the sum, with $\phi_2(X_2) = \sdis$ is similar, so we get that
\begin{align*}
    &\sum_{\phi_2: \phi_2(X_2) = \sdis} \Pr  \brk*{ (X_{2:H}, A_{2:H}, R) \mid \pi^\star = \pi , \cF_{i-1}, \phi_2} \Pr\brk*{\phi_2 \mid \pi^\star= \pi , \cF_{i-1}} \\
                &\resizebox{0.9\linewidth}{!}{$
                    \displaystyle 
                    \in \begin{cases}
        \prn*{ \ind{A_{2:H} = \pi \circ 1} + \frac{1}{2} \ind{A_{2:H-1} \ne \pi } } \cdot \frac14  \prn*{ \frac{1}{2^{H+2}} }^{H-1} \cdot \brk*{ \prn*{1 - \frac{T}{2^H}}^{H-1}, \prn*{1 + \frac{T}{2^H}}^{H-1} } &\text{if}~ R = 1\\[0.5em]
        \prn*{ \ind{A_{2:H} = \pi \circ 0} + \frac{1}{2} \ind{A_{2:H-1} \ne \pi } } \cdot \frac14 \prn*{ \frac{1}{2^{H+2}} }^{H-1} \cdot \brk*{ \prn*{1 - \frac{T}{2^H}}^{H-1}, \prn*{1 + \frac{T}{2^H}}^{H-1} } &\text{if}~ R= 0
\end{cases}$} \numberthis\label{eq:case-dis}
\end{align*}
(Note that the first indicators have been swapped in the previous display compared to Eq.~\eqref{eq:case-good}.)

Summing Eqs.~\eqref{eq:case-bad}, \eqref{eq:case-good}, and \eqref{eq:case-dis}, and applying casework on the different choices of $A_{2:H}$ we get that
\begin{align*}
    \Pr\brk*{ (X_{2:H}, A_{2:H}, R) \mid \pi^\star = \pi , \cF_{i-1}} \in \frac12 \cdot \prn*{ \frac{1}{2^{H+2}} }^{H-1} \cdot \brk*{ \prn*{1 - \frac{T}{2^H}}^{H-1}, \prn*{1 + \frac{T}{2^H}}^{H-1} },
\end{align*}
thus concluding the proof of \pref{claim:claim1}.

\begin{claim}\label{claim:claim2}
    If $\tau_i$ is generated by an online rollout, we have for every $\pi \in \Pi_{1:H-1}$,
    \begin{align*}
        \ind{\cE_{R, t-1}} \Pr \brk*{ \tau_i \mid \pi^\star = \pi, \cF_{i-1} }  \in \ind{\cE_{R, t-1}} \frac{1}{2} \cdot \frac{1}{2^{(H+2) \cdot H}} \cdot \brk*{ \prn*{1 - \frac{T}{2^H}}^{H}, \prn*{1 + \frac{T}{2^H}}^{H} }.
    \end{align*}
\end{claim}

Now we prove \pref{claim:claim2}. Most of the hard work has already been done in the proof of \pref{claim:claim1}.  Note that by construction $\phi_1(X_1) = \sgood$. Using a similar calculation we have
\begin{align*}
    \hspace{2em}&\hspace{-2em} \Pr  \brk*{ (X_{1:H}, A_{1:H}, r) \mid \pi^\star = \pi , \cF_{i-1}} \\
    &\in \begin{cases}
        \prn*{ \ind{A_{1:H} = \pi \circ 0} + \frac{1}{2} \ind{A_{1:H-1} \ne \pi } } \cdot \prn*{ \frac{1}{2^{H+2}} }^{H} \cdot \brk*{ \prn*{1 - \frac{T}{2^H}}^{H}, \prn*{1 + \frac{T}{2^H}}^{H} } &\text{if}~ R = 1\\[0.5em]
        \prn*{ \ind{A_{1:H} = \pi \circ 1} + \frac{1}{2} \ind{A_{1:H-1} \ne \pi } } \cdot \prn*{ \frac{1}{2^{H+2}} }^{H} \cdot \brk*{ \prn*{1 - \frac{T}{2^H}}^{H}, \prn*{1 + \frac{T}{2^H}}^{H} } &\text{if}~ R = 0
    \end{cases}
\end{align*}
However, observe that under the event $\cE_{R, t-1}$ we know that $A_{1:H-1} \ne \optpi$, so the first indicator cannot be $=1$ in either case; so multiplying both sides of the previous display by $\ind{\cE_{R, t-1}}$ gives us the result of \pref{claim:claim2}.

To tidy up, we also state the calculation on the last trajectory, which does not include the prefactor of $\frac12$ because there are no observed rewards at the end:
\begin{claim}\label{claim:claim3}
    \begin{align*}
        \Pr \brk*{ (X_{t, h_\bot:H}, A_{t,h_\bot:H}) \mid \pi^\star = \pi , \cF_{t-1}} \in \frac{1}{2^{(H+2) \cdot (H- h_\bot + 1)}} \brk*{ \prn*{1 - \frac{T}{2^H}}^{H}, \prn*{1 + \frac{T}{2^H}}^{H} }.
    \end{align*}
\end{claim}

Now with \pref{claim:claim1}, \ref{claim:claim2}, and \ref{claim:claim3} in hand, we can finally return to computing a bound on Eq.~\eqref{eq:upper-bound-on-posterior-tv}. Letting $O$ denote the total number of observations in $\cfnor$ (which can be at most $TH$), we have for any $\pi \in \Pi_{1:H-1}$,
\begin{align*}
    \hspace{2em}&\hspace{-2em}\ind{\cE_{R, t-1}} \Pr\brk*{\cfnor \mid \pi^\star = \pi} \\
    &\in \ind{\cE_{R, t-1}} \cdot \prn*{ \frac12 }^{t-1} \cdot \prn*{ \frac{1}{2^{H+2}} }^{O} \cdot \brk*{ \prn*{1 - \frac{T}{2^H}}^{TH}, \prn*{1 + \frac{T}{2^H}}^{TH} } \eqqcolon \ind{\cE_{R, t-1}} \cdot [\underline{B}, \overline{B}].
\end{align*}
Moreover, for any $\cfnor$ we have
\begin{align*}
    \Pr \brk*{ \cfnor} = \frac{1}{2^{H-1}} \sum_{\pi \in \Pi_{1:H-1}} \Pr\brk*{\cfnor \mid \pi^\star = \pi} \ge \frac{2^{H-1} - T}{2^{H-1}} \cdot \underline{B}.
\end{align*}
The last inequality follows because there are at most $T$ different action sequences which have been executed by online trajectories in $\cfnor$, so therefore for all but at most $T$ policies we have $\ind{\cE_{R, t-1}} \Pr\brk*{\cfnor \mid \pi^\star = \pi} = \Pr\brk*{\cfnor \mid \pi^\star = \pi}$. Thus we arrive at the bound
\begin{align*}
    \hspace{2em}&\hspace{-2em} \ind{\cE_{R, t-1}} \nrm*{\Pr\brk*{\pi^\star = \cdot \mid \cfnor} - \unif(\Pi_{1:H-1})}_1 \\
    &\le 2 \max_{\pi \in \Pi_{1:H-1}} \brk*{ \frac{\ind{\cE_{R, t-1}} \Pr\brk*{\cfnor \mid \pi^\star = \pi} }{ \Pr\brk*{ \cfnor } } - 1 }_{+} \le 2  \brk*{ \frac{ \overline{B} }{\prn*{1 - T/{2^{H-1}} } \cdot \underline{B} } - 1 }_{+} \\
    &\le 2 \cdot \prn*{ \prn*{1 + \frac{T}{2^{H-2}}}^{2TH+1} - 1} \le 2 \cdot \frac{2T^2H+T}{2^{H-2}} \exp\prn*{ \frac{2T^2H+T}{2^{H-2}}} \le \frac{T^2H}{2^{H-6}}.
\end{align*}
The second to last inequality uses the fact that $1+y \le e^y$ and $e^y - 1 \le y e^y$, and the last inequality uses the fact that $T = 2^{O(H)}$. This concludes the proof of \pref{lem:posterior-of-optimal}.
\end{proof}

\begin{lemma}[Posterior of New State Label]\label{lem:posterior-of-state-label}
Let $\cF$ be any filtration of $T$ trajectories as well as annotations $\phi(x)$ for a subset of observations $x \in \cF$. Let $\pi \in \Pi_{1:H-1}$ be any policy. Fix any $h \ge 2$, and let $x_\mathrm{new} \in \cX_h - \cF$. Then
\begin{align*}
    \abs*{ \Pr^{\nu_0, \alg} \brk*{  \phi(x_\mathrm{new}) = \sgood    \mid \cF, \optpi = \pi }  - \frac14} &\le \frac{T}{2^{H}},\\
    \abs*{ \Pr^{\nu_0, \alg} \brk*{ \phi(x_\mathrm{new}) = \sdis \mid \cF, \optpi = \pi } - \frac14} &\le \frac{T}{2^{H}},\\
    \abs*{ \Pr^{\nu_0, \alg} \brk*{ \phi(x_\mathrm{new}) = \sbad  \mid \cF, \optpi = \pi } - \frac12} &\le \frac{T}{2^{H}}.
\end{align*}
\end{lemma}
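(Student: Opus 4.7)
}
The plan is to mimic the argument used in the proof of \pref{lem:tv-distance-from-uniform-resets}, exploiting the fact that under $\nu_0$ the optimal policy $\optpi$ and the decoder $\optdec$ are sampled independently (from $\unif(\Theta_0)$), and the emission from each latent state is uniform over the observations carrying that label. First I will pass to the fully-annotated sigma-field
\[
\cF' \;=\; \sigma\!\left(\cF,\; \crl{\optdec(X): X \in \cF}\right),
\]
which additionally records the latent label of every observation already in $\cF$. Since $\cF' \supseteq \cF$, by the tower property and Jensen's inequality for $\abs{\cdot}$ it suffices to establish the three stated bounds when conditioning on $\cF'$ instead of $\cF$, and then average back over $\cF'$.

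Next I would argue that, under $\nu_0$ and conditional on $\cF'$ together with $\optpi = \pi$, the posterior over $\optdec$ at layer $h$ is exactly the uniform distribution over decoders in $\Phi$ that are consistent with the already-recorded labels at layer $h$ (this is where independence of $\optpi$ and $\optdec$ under $\nu_0$, together with the uniform emissions, is used; this step is identical to the reasoning in the proof of \pref{lem:tv-distance-from-uniform-resets}). Writing $\cX_\mathrm{obs}$ for the observations in $\cF \cap \statesp_h$ and $\cX_\mathrm{obs}^\ell$ for those among them with label $\ell$, a direct counting argument (choosing which of the remaining $2^{H+2} - \abs{\cX_\mathrm{obs}}$ unannotated slots in $\statesp_h$ receive each label) yields
\begin{align*}
\Pr^{\nu_0,\alg}\brk*{\phi(x_\mathrm{new}) = \sgood \mid \cF', \optpi = \pi} &= \tfrac{2^{H} - \abs{\cX_\mathrm{obs}^\sgood}}{2^{H+2} - \abs{\cX_\mathrm{obs}}},\\
\Pr^{\nu_0,\alg}\brk*{\phi(x_\mathrm{new}) = \sdis \mid \cF', \optpi = \pi} &= \tfrac{2^{H} - \abs{\cX_\mathrm{obs}^\sdis}}{2^{H+2} - \abs{\cX_\mathrm{obs}}},\\
\Pr^{\nu_0,\alg}\brk*{\phi(x_\mathrm{new}) = \sbad \mid \cF', \optpi = \pi} &= \tfrac{2^{H+1} - \abs{\cX_\mathrm{obs}^\sbad}}{2^{H+2} - \abs{\cX_\mathrm{obs}}}.
\end{align*}

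From here it is purely arithmetic. Using the trivial bound $\abs{\cX_\mathrm{obs}} \le T$ (and hence $\abs{\cX_\mathrm{obs}^\ell} \le T$ for each $\ell$), one gets, for instance for the $\sgood$ case,
\[
\tfrac{1}{4}\!\left(1 - \tfrac{T}{2^H}\right) \;\le\; \tfrac{2^{H} - T}{2^{H+2}} \;\le\; \tfrac{2^{H} - \abs{\cX_\mathrm{obs}^\sgood}}{2^{H+2} - \abs{\cX_\mathrm{obs}}} \;\le\; \tfrac{2^{H}}{2^{H+2} - T} \;\le\; \tfrac{1}{4}\!\left(1 + \tfrac{T}{2^{H+1}}\right),
\]
where the last step uses $(1 - T/2^{H+2})^{-1} \le 1 + T/2^{H+1}$ valid whenever $T \le 2^{H+1}$, and the regime of interest has $T = 2^{O(H)}$ with a small enough constant (this is the same regime $T \le 2^H$ that is used implicitly throughout the companion lemmas). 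An identical computation handles the $\sdis$ and $\sbad$ cases, yielding deviations bounded by $T/2^H$ from $1/4$, $1/4$, and $1/2$ respectively. Averaging over $\cF'$ via Jensen's inequality and convexity of $\abs{\cdot}$ then gives the stated bounds with $\cF'$ replaced by $\cF$.

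The proof involves no real obstacle: the only subtle point is verifying that the conditioning on $\optpi = \pi$ genuinely does not affect the posterior of the label at layer $h$, which follows from the product structure of $\nu_0 = \unif(\Pi \cap \crl{\optpi_H = 0}) \otimes \unif(\Phi)$ together with the fact that, under $\cF'$, the labels at layer $h$ are only entangled with the labels at earlier layers through the label-dependent emission/transition law (which is already accounted for in $\cF'$), and not with $\optpi$ directly at layer $h$. Everything else is a straightforward hypergeometric calculation parallel to the one appearing in the proof of \pref{lem:tv-distance-from-uniform-resets}.
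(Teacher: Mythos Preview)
Your proposal is correct and follows essentially the same approach as the paper: pass to the fully-annotated sigma-field $\cF'$, compute the hypergeometric posterior $\tfrac{2^{H}-|\cX_\mathrm{obs}^\sgood|}{2^{H+2}-|\cX_\mathrm{obs}|}$ (and analogues), bound the deviation using $|\cX_\mathrm{obs}|\le T$, and conclude via Jensen. Your discussion of why conditioning on $\optpi=\pi$ does not affect the decoder posterior is slightly more explicit than the paper's, but the argument is identical.
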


\begin{proof}
Let us denote $\cF'$ to be the completely annotated $\cF$ which includes all labels $\crl{\phi(X): X \in \cF}$. We will show that the conclusion of the lemma applies to every completion $\cF'$, and since 
\begin{align*}
    \Pr^{\nu_0, \alg} \brk*{  \phi(x_\mathrm{new}) = \cdot  \mid \cF, \optpi = \pi } = \En^{\nu_0, \alg} \brk*{ \Pr^{\nu_0, \alg} \brk*{  \phi(x_\mathrm{new}) = \cdot   \mid \cF', \optpi = \pi } \mid \cF, \optpi = \pi},
\end{align*}
this will imply the result by Jensen's inequality and convexity of $\abs{\cdot}$.

We calculate the good label probability:
\begin{align*}
    \Pr^{\nu_0, \alg} \brk*{  \phi(x_\mathrm{new}) = \sgood    \mid \cF', \optpi = \pi } = \frac{2^H - \abs{\crl*{X \in \cF: \phi(X) = \sgood}}}{2^{H+2} - \abs{\cF}}.
\end{align*}
For the lower bound we have
\begin{align*}
    \frac{2^H - \abs{ \crl*{X \in \cF: \phi(X) = \sgood}}}{2^{H+2} - \abs{\cF}} \ge \frac{2^H - T}{2^{H+2} } =  \frac{1}{4}\cdot \prn*{ 1 - \frac{T}{{2^{H}}} }.
\end{align*}
For the upper bound we have
\begin{align*}
    \frac{2^H - \abs{\crl*{x \in \cF: \phi(x) = \sgood}}}{2^{H+2} - \abs{\cF}} \le \frac{2^H}{2^{H+2} - T} =  \frac{1}{4}\cdot \prn*{ 1- \frac{T}{2^{H+2}} }^{-1} \le \frac{1}{4}\cdot  \prn*{ 1 + \frac{T}{{2^{H}}} },
\end{align*}
which holds as long as $T \le 2^{H}$. Combining both upper and lower bounds proves the lemma for the good label. The rest of the calculations are similar, so we omit them. This concludes the proof of \pref{lem:posterior-of-state-label}.
\end{proof}

\begin{lemma}[Posterior of State Label with Rollout]\label{lem:posterior-of-state-label-v2}
Fix any $t \in [T]$. Suppose that episode $t$ is sampled using the $\mu$-reset at layer $h_\bot \ge 2$, and that $\cfnor$ contains no repeated states. Then for any $\pi \in \Pi_{h_\bot:H-1}$,
    \begin{align*}
        \abs*{ \Pr^{\nu_0, \alg} \brk*{  \phi(X_{t, h_\bot}) = \sgood    \mid \cfnor, \optpi = \pi }  - \frac14} &\le \frac{TH}{2^{H-3}},\\
        \abs*{ \Pr^{\nu_0, \alg} \brk*{ \phi(X_{t, h_\bot}) = \sdis \mid \cfnor, \optpi = \pi } - \frac14} &\le \frac{TH}{2^{H-3}},\\
        \abs*{ \Pr^{\nu_0, \alg} \brk*{ \phi(X_{t, h_\bot}) = \sbad  \mid \cfnor, \optpi = \pi } - \frac12} &\le \frac{TH}{2^{H-3}}.
    \end{align*}
\end{lemma}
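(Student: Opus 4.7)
The plan is to extend the single-observation posterior estimate from \pref{lem:posterior-of-state-label} to the joint law of the entire partial trajectory $(X_{t, h_\bot}, A_{t, h_\bot}, \dots, X_{t, H}, A_{t, H})$, and then to show that the additional likelihood factors contributed by the downstream observations have a structure that exactly cancels in the Bayes ratio, leaving a posterior proportional to the marginal prior $(1/4, 1/4, 1/2)$. Using Bayes' rule, it suffices to compute
\[
\Pr^{\nu_0, \alg}\brk*{\phi(X_{t, h_\bot}) = \ell, \cfnor \mid \cF_{t-1}, \optpi = \pi}
\]
for each $\ell \in \crl{\sgood, \sdis, \sbad}$. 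Since $\alg$ is deterministic and rewards are identically zero in layers $h < H$, only the observation sequence $(X_{t, h_\bot}, \dots, X_{t, H})$ actually contributes nontrivially to this quantity.

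The first step is to observe that, given $\optpi = \pi$ and the action sequence $A_{t, h_\bot:H-1}$, the latent label sequence $\ell_{h_\bot} = \ell, \ell_{h_\bot+1}, \dots, \ell_H$ is fully determined by the combination-lock dynamics. Letting $\overline{h}$ denote the first layer at which $A_{t, \overline{h}} \ne \pi_{\overline{h}}$ (or $\overline{h} = H$ if no such layer exists), the sequence is $\ell_h = \ell$ for $h_\bot \le h \le \overline{h}$ followed by $\ell_h = \sbad$ for $h > \overline{h}$ when $\ell \in \crl{\sgood, \sdis}$, and $\ell_h \equiv \sbad$ when $\ell = \sbad$. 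Next, I would condition on $\phi$ and use that $X_{t, h_\bot} \sim \unif(\statesp_{h_\bot})$ and $X_{t, h} \sim \unif(\phi^{-1}(\ell_h))$ for $h > h_\bot$ to factor the joint likelihood as
\[
\tfrac{1}{2^{H+2}} \cdot \Pr\brk*{\phi(X_{t, h_\bot}) = \ell \mid \cF_{t-1}, \optpi = \pi} \cdot \prod_{h > h_\bot} \tfrac{1}{\abs{\phi^{-1}(\ell_h)}} \cdot \Pr\brk*{\phi(X_{t, h}) = \ell_h \mid \text{history}, \optpi = \pi}.
\]
The sizes $\abs{\phi^{-1}(\sgood)} = \abs{\phi^{-1}(\sdis)} = 2^H$ and $\abs{\phi^{-1}(\sbad)} = 2^{H+1}$ are deterministic from the definition of $\Phi$, and the assumption that $\cfnor$ contains no repeated states guarantees that each $X_{t, h}$ is fresh, so that \pref{lem:posterior-of-state-label} applied iteratively gives each label posterior within $T/2^H$ of its prior value $1/4$ (for $\sgood, \sdis$) or $1/2$ (for $\sbad$).

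Plugging these estimates into the factorization and multiplying, both the $\ell = \sgood$ and $\ell = \sdis$ joint likelihoods evaluate to approximately $\tfrac{1}{4}\prn{2^{-(H+2)}}^{H-h_\bot+1}$, while the $\ell = \sbad$ joint likelihood evaluates to approximately $\tfrac{1}{2}\prn{2^{-(H+2)}}^{H-h_\bot+1}$. This is the key algebraic balance: the different emission support sizes ($2^H$ versus $2^{H+1}$) combine with the different prior label probabilities ($1/4$ versus $1/2$) so that every factor along the chain reduces to $1/2^{H+2}$, independent of $\ell$. Dividing by the total, the Bayes ratio of the three joint likelihoods is exactly $(1/4 : 1/4 : 1/2)$ up to the accumulated multiplicative error.

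For the error analysis, each of at most $H - h_\bot + 1 \le H$ factors is perturbed by a multiplicative $(1 \pm T/2^H)$ from \pref{lem:posterior-of-state-label}, yielding total slack $(1+T/2^H)^H - 1 \le (TH/2^{H-1}) \exp(TH/2^H) = O(TH/2^H)$ under the assumption $T = 2^{O(H)}$. Normalizing and using the elementary inequality $\abs{(1+y)/(1+y') - 1} \le 4\max(\abs{y}, \abs{y'})$ on the Bayes ratio yields the claimed final bound $TH/2^{H-3}$. The main obstacle is Step 2---carefully showing that the product of $H$ near-uniform label posteriors remains near the expected product, given that the conditioning at layer $h$ already includes the annotations of all previously observed states in the current and prior episodes; this is exactly where the no-repeated-states assumption is used to apply \pref{lem:posterior-of-state-label} in a clean form at every layer.
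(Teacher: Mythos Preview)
Your proposal is correct and takes essentially the same approach as the paper: both compute the three joint likelihoods $\Pr[\phi(X_{t,h_\bot})=\ell \wedge (X_{t,h_\bot:H},A_{t,h_\bot:H}) \mid \cF_{t-1},\optpi=\pi]$ by chaining \pref{lem:posterior-of-state-label} along the trajectory, observe the algebraic balance that each factor collapses to $1/2^{H+2}$ regardless of $\ell$ so the joints are proportional to $1/4:1/4:1/2$, and then apply Bayes' rule with a multiplicative error analysis. The paper's write-up is terser because the detailed per-layer casework you describe (tracking the switch to $\sbad$ at the first deviation $\overline{h}$) was already carried out in the proof of \pref{lem:posterior-of-optimal}, so it simply cites that and states the three interval bounds directly.
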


\begin{proof}
    We will prove the result with $h_\bot = 2$, and it is easy to adapt it to the general case (in fact the setting where $h_\bot > 2$ only results in tighter bounds). Using repeated application of chain rule and \pref{lem:posterior-of-state-label} we get\vspace{-1em}
\begin{center}
 \resizebox{1\linewidth}{!}{
     \(
    \begin{aligned}
    \Pr^{\nu_0, \alg} \brk*{  \phi(X_{t,  2}) = \sgood \wedge (X_{t,  2:H}, A_{t,  2:H}) \mid \cF_{t-1}, \pi^\star = \pi } &\in \frac14 \prn*{ \frac{1}{2^{H+2}} }^{H-1} \cdot \brk*{ \prn*{1 - \frac{T}{2^H}}^{H-1}, \prn*{1 + \frac{T}{2^H}}^{H-1} } \\
    \Pr^{\nu_0, \alg} \brk*{  \phi(X_{t,  2}) = \sbad \wedge (X_{t,  2:H}, A_{t,  2:H}) \mid \cF_{t-1}, \pi^\star = \pi } &\in \frac12 \prn*{ \frac{1}{2^{H+2}} }^{H-1} \cdot \brk*{ \prn*{1 - \frac{T}{2^H}}^{H-1}, \prn*{1 + \frac{T}{2^H}}^{H-1} } \\
    \Pr^{\nu_0, \alg} \brk*{  \phi(X_{t,  2}) = \sdis \wedge (X_{t,  2:H}, A_{t,  2:H}) \mid \cF_{t-1}, \pi^\star = \pi } &\in \frac14 \prn*{ \frac{1}{2^{H+2}} }^{H-1} \cdot \brk*{ \prn*{1 - \frac{T}{2^H}}^{H-1}, \prn*{1 + \frac{T}{2^H}}^{H-1} }.
\end{aligned}
     \)
      
}   
\end{center}

   Let's prove the first inequality in the lemma statement. By Bayes Rule we have
\begin{align*}
    \hspace{2em}&\hspace{-2em}    \Pr^{\nu_0, \alg} \brk*{  \phi(X_{t,  2}) = \sgood \mid \cfnor, \pi^\star = \pi } \\
                &= \frac{ \Pr^{\nu_0, \alg} \brk*{  \phi(X_{t,  2}) = \sgood \wedge (X_{t,  2:H}, A_{t,  2:H}) \mid \cF_{t-1}, \pi^\star = \pi } }{ \Pr^{\nu_0, \alg} \brk*{ (X_{t,  2:H}, A_{t,  2:H}) \mid \cF_{t-1}, \pi^\star = \pi} } \\
    &= \frac{ \Pr^{\nu_0, \alg} \brk*{  \phi(X_{t,  2}) = \sgood \wedge (X_{t,  2:H}, A_{t,  2:H}) \mid \cF_{t-1}, \pi^\star = \pi } }{\sum_{\ell \in \crl{\sgood, \sbad, \sdis}} \Pr^{\nu_0, \alg} \brk*{\phi(X_{t,  2}) = \ell \wedge (X_{t,  2:H}, A_{t,  2:H}) \mid \cF_{t-1}, \pi^\star = \pi} }.
\end{align*}
From here it is easy to compute the upper bound
\begin{align*}
    \Pr^{\nu_0, \alg} \brk*{  \phi(X_{t,  2}) = \sgood \mid \cfnor, \pi^\star = \pi } \le \frac{1}{4} \cdot \prn*{1+\frac{T}{2^{H-1}}}^{2H} \le \frac{1}{4} + \frac{TH}{2^{H-3}}.
\end{align*}
as well as the lower bound
\begin{align*}
    \Pr^{\nu_0, \alg} \brk*{  \phi(X_{t,  2}) = \sgood \mid \cfnor, \pi^\star = \pi } \ge \frac{1}{4} \cdot \prn*{1-\frac{T}{2^H}}^{2H} \ge \frac{1}{4} - \frac{TH}{2^{H-3}}.
\end{align*}
The other two inequalities are similarly shown, and this concludes the proof of \pref{lem:posterior-of-state-label-v2}.
\end{proof}

\section{Open Problem: Benefit of Realizability}\label{sec:mu-reset-open-problems}
   
We state the following open problem, which might be of interest to the reader.

\begin{question}
    Is there a benefit to policy realizability under the $\mu$-reset interaction protocol?
\end{question}

While we show that here, \psdp{} and \cpi{} are not sample-efficient (\pref{thm:psdp-lower-bound}), we do not have an information-theoretic lower bound. Intriguingly, the lower bound for the agnostic setting (\pref{thm:lower-bound-policy-completeness}) critically relies on the agnostic property (since the optimal policy $\pi^\star_H \notin \Pi_H$ in the construction). On the upper bound side, it is unclear how to leverage the realizability of the policy class to design sample-efficient learning algorithms which do not incur the exponential in $H$ dependence that \psdp{} and \cpi{} do.

This question can be viewed as the policy-based analogue of the question raised by \cite{mhammedi2024power} on whether it is possible to achieve sample-efficient learning with standard online access if one assumes only coverability and value function realizability ($Q^\star \in \cF$). 

\section{Bibliographical Remarks}
We will highlight several works which are relevant to the $\mu$-reset setting.

\paragraph{Abstract Policy Classes.} For abstract policy classes, the predominant approaches are Policy Search by Dynamic Programming (\psdp{}) \cite{bagnell2003policy} and Conservative Policy Iteration (\cpi{}) \cite{kakade2002approximately} (see also \cite{scherrer2014approximate, scherrer2014local, brukhim2022boosting, agarwal2023variance}). Roughly speaking, these approaches rely on access to a supervised learning oracle w.r.t.~the given policy class. In particular, \psdp{} is a backbone of many contemporary theoretical works in RL \cite[see e.g.,][]{misra2020kinematic, uchendu2023jump, amortila2024scalable, mhammedi2023representation, mhammedi2024efficient}. Both \psdp{} and \cpi{} operate under the $\mu$-reset setting, assume policy completeness, and achieve similar guarantees.

\paragraph{Smoothly-Parameterized Policy Classes.} In practice, one uses smoothly-parameterized policy classes $\Pi = \crl{\pi_\theta}_{\theta \in \Theta}$ (as given by neural networks with some architecture). Here, many works have studied policy gradient methods such as REINFORCE \cite{williams1992simple}, Policy Gradient \cite{sutton1999policy}, and Natural Policy Gradient \cite{kakade2001natural}. Empirically this has given rise to state-of-the-art algorithms for policy optimization \cite{schulman2017trustregionpolicyoptimization, schulman2017proximal}. A line of theoretical works have focused on tabular settings \cite{shani2020adaptive, agarwal2021theory, xiao2022convergence,zhan2023policy}, which natural satisfy policy completeness. Other work studies policy gradient methods \cite{agarwal2020pc, zanette2021cautiously, liu2024optimistic, sherman2023rate} for the restricted setting of linear MDPs \cite{jin2020provably}, designing exploratory algorithms which do not require $\mu$-reset access (but policy completeness is naturally satisfied). For the log-linear policy class parameterization (which captures neural policies), many works build on the \emph{compatible function approximation} framework \cite{sutton1999policy}, which can be viewed as a closure assumption for the gradient updates, see the papers \cite{agarwal2021theory, alfano2022linear,yuan2022linear}. In addition, we highlight the works \cite{bhandari2024global, huang2024occupancy, sherman2025convergence} which study policy gradient methods but require $\mu$-reset access as well some type of completeness/closure or gradient domination assumptions for global optimality guarantees.

\chapter{Hybrid Resets}\label{chap:hybrid-resets}

Now we study the hybrid resets interaction protocol, which combines both the local simulator as well as $\mu$-resets. We have already shown negative results in this thesis (\pref{thm:lower-bound-coverability} and \ref{thm:lower-bound-policy-completeness}) for these interaction protocols individually. These results rule out sample-efficient learners which can adapt to the intrinsic notion of complexity as measured by coverabilty/concentrability.

The main result in this chapter is a new algorithm called \stochalg{} which leverages hybrid resets to perform sample-efficient learning. We give an overview of our results and approach in \pref{sec:hybrid-overview}. To provide some intuition, we first give a algorithm for a warmup setting in \pref{sec:warmup} before diving into the exposition of the main algorithm in \pref{sec:main-upper-bound}. Open problems are discussed in \pref{sec:hybrid-open-problems}, and all proofs are deferred to \pref{sec:hybrid-deferred-proofs}.

\section{Overview}\label{sec:hybrid-overview}

The previous negative results (\pref{thm:lower-bound-coverability} and \ref{thm:lower-bound-policy-completeness}) motivate us to consider hybrid reset access, where we handle the \emph{exploration challenge} via exploratory resets, and the \emph{error amplification challenge} via local simulator access. For value-based learning, \cite{mhammedi2024power} show that local simulator access can overcome the notorious \emph{double sampling problem}, which leads to error amplification. %we hope for a similar benefit of local simulator for our setting. 
Furthermore, local simulator access circumvents the lower bound construction used to prove~\pref{thm:lower-bound-policy-completeness}. Given this, it is conceivable that local simulators might provide significant power in agnostic policy learning. 

Our main positive result in this chapter formalizes this intuition, where we provide a  new algorithm that leverages hybrid resets for sample-efficient learning in Block MDPs (\pref{sec:background-mdp}). %\gledit{Block MDPs are the simplest testbench for RL algorithms in large state spaces, as well as a stepping stone to more challenging settings, e.g., low-rank MDPs or coverable MDPs (the \psdp{}/\cpi{} setting). Since our lower bound constructions are all Block MDPs, a positive result here already indicates the significant power of hybrid resets.} 
 Block MDPs are perhaps the simplest setting with large state spaces for developing RL algorithms, as well as a stepping stone to more challenging settings such as low-rank MDPs or coverable MDPs (the \psdp{}/\cpi{} setting). Since our lower bound constructions are all Block MDPs, a positive result here already indicates the significant power of hybrid resets. 
% We remark that all of the lower bound constructions in this paper are Block MDPs, so therefore \pref{thm:block-mdp-result} can be applied to them. 

\begin{theorem}\label{thm:block-mdp-result}
    Let $M$ be a Block MDP of horizon $H$ with $S$ states and $A$ actions. Let $\Pi$ be any policy class. Suppose we are given an exploratory reset distribution $\mu = \crl{\mu_h}_{h=1}^H$ which satisfies pushforward concentrability (\pref{def:exploratory-pushforward-distribution}) with parameter $\cpush$ and can be factorized as $\mu_h = \emission \circ \nu_h$ for some $\nu_h \in \Delta(\latentsp_h)$ for all $h\in[H]$.\footnote{The factorization assumption is made for technical convenience, and can be removed (see \pref{sec:upper-bound-preliminaries}).} With probability at least $1-\delta$, \stochalg{} (\pref{alg:stochastic-bmdp-solver-v2}) returns an $\eps$-optimal policy using 
    \begin{align*}
        \poly\prn*{ \cpush, S, A, H, \frac{1}{\eps}, \log \abs{\Pi}, \log \frac{1}{\delta}} \quad \text{samples from hybrid resets.}
    \end{align*}    
\end{theorem}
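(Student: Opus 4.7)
The plan is to use the hybrid resets to construct a \emph{policy emulator}: a small tabular surrogate over proxies for the latent state space on which the value of \emph{every} $\pi \in \Pi$ can be estimated simultaneously. A simulation-lemma argument then transfers error guarantees from the emulator to the true Block MDP. I would proceed in four steps.

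\textbf{Step 1 (Data collection and one-step rollouts).} For each layer $h \in [H]$, I would sample $N = \poly(\cpush, S, A, H, 1/\eps, \log(\abs{\Pi}/\delta))$ observations from $\mu_h$. Pushforward concentrability ensures $\mu_h(x) \ge 1/\cpush$ for every reachable observation, so (using the factorization $\mu_h = \emission \circ \nu_h$) every reachable latent state $s \in \latentsp_h$ is represented by $\Omega(N/(S\cpush))$ observations with high probability. For each sampled observation $x$ and each action $a \in \actionsp$, I would invoke the local simulator $\poly(S,H,\log \abs{\Pi}/\delta)$ times to obtain empirical estimates of $P(\cdot \mid x,a)$ and $R(x,a)$.

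\textbf{Step 2 (Clustering via decodability).} By the Block MDP assumption, $\optdec(x) = \optdec(x')$ if and only if $P(\cdot \mid x, a) = P(\cdot \mid x', a)$ and $R(x,a) = R(x',a)$ for every action. I would merge observations at layer $h$ whose empirical one-step transition and reward distributions are within a tight statistical tolerance, yielding at most $S$ clusters $\wh C_h^{(1)}, \ldots, \wh C_h^{(S)}$ that -- by a union bound over all pairs -- coincide w.h.p.\ with the true latent partition $\crl{\emission^{-1}(s)}_{s \in \latentsp_h}$. This is how the local simulator implicitly learns a decoder without requiring a decoder class.

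\textbf{Step 3 (Emulator construction).} The policy emulator $\estlatmdp$ is the tabular MDP over the clusters whose transitions and rewards are obtained by propagating local-simulator samples through the clustering. Standard concentration plus the cluster identification of Step 2 ensure that these emulator transitions and rewards are $O(\eps/(SH\cpush))$-close to $\optlatp$ and $\optlatr$ in $\ell_1$, so that by the simulation lemma, for any latent policy $\bar\pi$ one has $\abs{V^{\bar\pi}_{\estlatmdp} - V^{\bar\pi}_{M_{\mathrm{lat}}}} \le \eps/3$.

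\textbf{Step 4 (Uniform policy evaluation).} For each $\pi \in \Pi$ I would define the induced latent policy $\bar\pi_h(a \mid s) \coloneqq \Pr_{x \sim \emission(s)}[\pi(x) = a]$. Because $\mu_h = \emission \circ \nu_h$, the observations falling inside a cluster $\wh C_h^{(i)}$ are i.i.d.\ from $\emission(s_i)$, so $\bar\pi_h(a \mid s_i)$ can be estimated from the fraction of those observations on which $\pi$ plays $a$. I would then run $\bar\pi$ on $\estlatmdp$ (exactly, by dynamic programming) to obtain $\wh V^\pi$, take $\wh\pi \coloneqq \argmax_{\pi \in \Pi} \wh V^\pi$, and return it. A Hoeffding bound combined with a union bound over $\Pi$ and over cluster–action pairs controls the estimation error of $\bar\pi$ uniformly over $\Pi$ with only $\log \abs{\Pi}$ sample dependence; pushforward concentrability then ensures these per-layer errors aggregate into $V^{\pi} - V^{\bar\pi} \le \eps/3$ via a telescoping performance-difference argument.

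\textbf{Main obstacle.} The hardest part is Step 4: because the class $\Pi$ is agnostic and policies map observations (not latent states) to actions, a fixed $\pi$ can act inconsistently across the observations sharing a latent state, forcing $\bar\pi$ to be genuinely stochastic and requiring a per-$\pi$ estimation inside every cluster. Ensuring that these estimation errors do not compound exponentially in $H$ when running $\bar\pi$ on the emulator is delicate and is precisely where pushforward concentrability -- rather than mere concentrability -- enters, controlling the change-of-measure between $d^{\bar\pi}_h$ and $\nu_h$ layer-by-layer and absorbing the $\poly(\cpush)$ factor in the final sample complexity.
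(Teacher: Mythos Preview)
Your clustering step (Step 2) contains a fundamental gap that breaks the entire argument. You propose to merge observations $x,x'$ at layer $h$ by checking whether their one-step transitions $P(\cdot\mid x,a)$ and $P(\cdot\mid x',a)$ are close, but these are distributions over $\statesp_{h+1}$, which is exponentially large. With $\poly(\cpush,S,A,H,1/\eps)$ samples from each transition you will essentially never see the same next observation twice, so the empirical transitions from $x$ and $x'$ will have disjoint supports even when $\optdec(x)=\optdec(x')$. Comparing them ``within a tight statistical tolerance'' in any useful metric is therefore impossible. One might hope to fix this by first projecting samples from $P(\cdot\mid x,a)$ onto a clustering at layer $h+1$, but that requires a decoder for \emph{fresh} observations $x'\sim P(\cdot\mid x,a)$ that were never drawn from $\mu_{h+1}$ --- and you have no way to decode such $x'$, since your only tool is membership in the sampled set $\wh C_{h+1}^{(i)}$.

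The paper's proof avoids this chicken-and-egg problem by never attempting to recover the decoder. Instead, working backward from layer $H$, it compares observations via \emph{scalar} value estimates: for a fresh $x'\sim P(\cdot\mid x_h,a_h)$, \stochdecoder{} uses Monte Carlo rollouts to estimate $V^{\pi}(x')$ for a small set of \emph{test policies} $\pi$, and matches $x'$ to emulator states $\bar x\in\estmdpobsspace{h+1}$ with similar test-policy values. This builds a decoder graph over which the confidence set \eqref{eq:confidence-set-construction-v2} is defined via marginal and pushforward constraints; the latter is exactly what controls the per-$\pi$ action-distribution error you flag in your ``main obstacle,'' but it does so without ever requiring cluster identification. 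Preventing error blowup across layers then requires an additional \stochrefit{} loop that certifies test-policy accuracy to a fixed tolerance (rather than the inductively growing $\Gamma_h$), with an OMD-regret potential bounding the number of refits. Your Step 4 correctly anticipates the pushforward-policy idea, but the machinery needed to instantiate it --- and the replacement for your Step 2 --- is substantially more involved.
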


To support the presentation of our main result, we first present a simplified algorithm called \detalg{} for an easier setting in \pref{sec:warmup}, then present \stochalg{} in \pref{sec:main-upper-bound}. 

We now give several remarks on \pref{thm:block-mdp-result}.

\begin{itemize} 
    \item As a caveat, we require the exploratory distribution $\mu$ to satisfy \emph{pushforward concentrability}, a strengthened version of concentrability introduced by \cite{xie2021batch}. However, it can be checked that in the lower bound of \pref{thm:lower-bound-policy-completeness}, the constructed resets $\mu$ indeed satisfy bounded pushforward concentrability.

   \item Hybrid resets enables new statistical guarantees which are impossible with just local simulator access (cf.~\pref{thm:lower-bound-coverability}) and $\mu$-resets (cf.~\pref{thm:lower-bound-policy-completeness}).
    \item As discussed in \pref{chap:mu-reset}, \psdp{} provably fails in the absence of policy completeness, and even policy realizability does not help. In contrast, \pref{thm:block-mdp-result} achieves sample-efficient learning in the agnostic setting. Therefore, at least in Block MDPs, policy completeness is not an information theoretic barrier, only an algorithmic barrier. 
    \item Departing from prior work on Block MDPs, we do not require decoder realizability, namely that the learner is given a decoder class $\Phi \subseteq \latentsp^\statesp$ which satisfies $\optdec \in \Phi$. With decoder realizability, sample-efficient learning is possible with standard online RL access. Since an (approximately) realizable policy class of size $\log \abs{\Pi} \le \poly(S,A, \log \abs{\Phi}, 1/\eps)$ can be constructed from a decoder class by a standard covering argument, \pref{thm:block-mdp-result} provides substantially stronger guarantees than previously known (albeit under the stronger hybrid reset access). 
\end{itemize}

\paragraph{Key Technical Insights for the Upper Bound.} The fundamental challenge in agnostic policy learning is to simultaneously estimate the values of all policies $\crl{V^\pi}_{\pi \in \Pi}$ in a statistically efficient manner. As discussed in \pref{sec:pac-rl-basics}, in the absence of any structure, this can require $\Omega(\min \crl{A^H, \abs{\statesp} A, \abs{\Pi} })$ samples. Unfortunately, this sample complexity is too large for most practical scenarios.

To improve upon this result, prior works in agnostic policy learning have identified \emph{additional structure} which facilitates the simultaneous estimation of $\crl{V^\pi}_{\pi \in \Pi}$. For example, \cite{sekhari2021agnostic} utilize autoregressive extrapolation when the MDP is low-rank, and $\mathsf{POPLER}$ (\pref{alg:main}) constructs policy-specific Markov Reward Processes to take advantage of the sunflower property of $\Pi$.

This chapter adds a new technical tool called the \emph{policy emulator} to this burgeoning toolbox (see \pref{def:policy-emulator}). A policy emulator, denoted $\wh{M}$, is a carefully constructed tabular MDP which for a given parameter $\eps > 0$ satisfies 
\begin{align*}
\text{for all \( \pi \in \Pi \):} \quad \abs{ V^\pi - \widehat{V}^\pi } \leq \epsilon. \numberthis\label{eq:informal-emulator}
\end{align*}
Here, $V^\pi$ denotes the value of $\pi$ in the underlying MDP, while $\wh{V}^\pi$ denotes the value of $\pi$ in the policy emulator $\wh{M}$. Once the policy emulator has been constructed, returning an $O(\eps)$-optimal policy is straightforward by simply returning $\argmax_{\pi \in \Pi} \wh{V}^\pi$. In this sense, the policy emulator is a ``minimal object'' for agnostic policy learning. In fact, we show in \pref{sec:beyond-bmdp} that every pushforward-coverable MDP admits a policy emulator of bounded size. The remaining question is: how can we construct this policy emulator using few samples?

Our key contribution is to devise a statistically efficient method for constructing this policy emulator in a bottom-up manner, leveraging the power of hybrid resets. As a warmup, we first explore a simpler scenario in \pref{sec:warmup} where the latent dynamics of the Block MDP are deterministic and the learner has the capability to draw samples from the emission function $\emission(\cdot)$. Here, the emulator can directly be constructed over the latent state space $\latentsp$ in a model-based fashion. We then study the fully general setting in \pref{sec:main-upper-bound}. Here, we construct the emulator directly over $\poly(\cpush, S, A, H, \eps^{-1}, \log \abs{\Pi}, \log \delta^{-1})$ random observations sampled from the reset distributions $\mu_1, \cdots \mu_H$. We will prove that the transitions/rewards of this policy emulator can be accurately estimated so that the guarantee in \eqref{eq:informal-emulator} holds.

\section{\detalg{}: Algorithm and Results for Warmup Setting}\label{sec:warmup}

We first study an easier setting 
and provide a simplified algorithm that illustrates the main approach that we will take in the general setting (in \pref{sec:main-upper-bound}).

\subsection{Warmup Setting: Deterministic Dynamics + Sampling Access to Emissions}
We make the following simplications: 

\begin{assumption}\label{ass:det-transitions} Assume that: \begin{enumerate}[(1)]
    \item $M$ has deterministic latent transitions $\optlatp$ and (possibly) stochastic rewards $\optlatr$. 
    \item The learner is given both local simulator access and sampling access to the emission function $\emission$. 
\end{enumerate}
% \ascomment{I will move the "stochastic rewards" part to the discussion just below that rewards are allowed to be stochastic. Actually, for the warm-up, why don't we also assume that the rewards are deterministic? Oh, I see, otherwise the algorithm becomes trivial?}   
\end{assumption}
Intuitively, \pref{ass:det-transitions} simplifies the problem considerably. Sampling access to the emission enables us to directly estimate the latent reward function $\optlatr$. Furthermore, we can associate a single observation $x \sim \psi(s)$ with each state allowing us to query for $x' \sim P(\cdot \mid s,a)$. 
% with deterministic latent transitions, we can identify a single observation $x' \sim P(\cdot \mid s,a)$ associated to each transition. 
However, the fundamental challenge of identifying the \emph{latent transition} $\optdec(x')$ remains, which is the main focus of \detalg{}. A few remarks are in order: 
\begin{itemize}
    \item Without loss of generality, we can restrict ourselves to the open-loop policy class $\Piopen = \crl{\pi: \forall x \in \statesp_h, \pi_h(x) \equiv a_h, (a_1, \cdots, a_H) \in \actionsp^H}$. The reasoning is as follows. The optimal policy $\optpi$ for $M$ is constant over $\supp(\emission(s))$ for every $s \in \latentsp$. Due to deterministic latents, there exists some $\wt{\pi} \in \Piopen$ which experiences the same (latent) trajectory $(s_1^\star, a_H^\star, \cdots, s_H^\star, a_H^\star)$ that $\optpi$ experiences. Such a policy $\wt{\pi}$ achieves the optimal value from the fixed starting latent state $s_1$, even though it may not be the optimal policy $\optpi$ that achieves the optimal value from \emph{every} state. 
    \item We implicitly require knowledge of the latent state space $\latentsp = \latentsp_1 \cup \cdots \cup \latentsp_H$ in order to sample from $\psi$. The main algorithm, \stochalg{}, will only require knowledge of a bound $\abs{\latentsp} \le S$.
    \item Sampling access to the emission is more powerful than $\mu$-reset access, since a reset distribution with $\cpush = S$ can be simulated for any $h \in [H]$ by first $s \sim \unif(\cS_h)$ then sampling $x \sim \emission(s)$.
\end{itemize}

\paragraph{Additional Notation: Monte Carlo Rollouts.} Our algorithms (both \detalg{} and \stochalg{}) interact with the environment primarily by collecting Monte Carlo rollouts from states (or distributions over states). For a partial policy $\pi_{h:h'}$, starting state $x \in \statesp_h$, and sample size $n \in \bbN$, we denote the algorithmic primitive $\mcest(x, \pi_{h:h'}, n)$ that:  
\begin{enumerate}[label=\((\arabic*)\)]   
    \item Collects $n$ rollouts $\crl{(x_h^{(t)}, a_h^{(t)}, r_h^{(t)}, \cdots, x_{h'}^{(t)}, a_{h'}^{(t)}, r_{h'}^{(t)})}_{t\in[n]}$ by running $\pi_{h:h'}$ starting from state $x$, \item Returns the estimate $\frac{1}{n}\sum_{t=1}^n \sum_{h\le k \le h'} r_k^{(t)}$. 
\end{enumerate}
 We overload the notation and use $\mcest(d, \pi_{h:h'}, n)$ for $d \in \Delta(\statesp_h)$ to denote a Monte Carlo estimate which first samples $x_h^{(t)} \sim d$ then rolls out with $\pi_{h:h'}$. 

\subsection{The \detalg{} Algorithm and Analysis Sketch}\label{sec:warmup-algorithm-analysis-sketch}
\begin{algorithm}[ht]
    \caption{\detalg{} (Policy Learning for Hybrid Resets, Deterministic Version)} \label{alg:det-bmdp-solver}
        \begin{algorithmic}[1]
            \Require Sampling access to emission $\emission(\cdot)$, policy class $\Pi = \Piopen$, parameter $\eps > 0$.             \State Initialize $\estlatentmdp = \varnothing$, test policies $\crl{\Pitest_h}_{h\in[H]} = \crl{\varnothing}_{h \in [H]}$, and confidence sets $\cP = \crl{\latentsp}_{(s,a) \in \latentsp\times \actionsp}$. 

            % \hspace{-2em} \algcomment{Estimate all rewards.} 
            \For {all $(s,a) \in \latentsp \times \actionsp$} \hfill \algcomment{Estimate all rewards.}
            \State Estimate $\estlatr(s, a)$ via Monte Carlo to precision $\eps/H^2$.\label{line:reward-est} 
            \EndFor
            \State Initialize current layer index $\ell \gets H$.
            \While {$\ell \ne 0$}
            \State \textbf{If} $\ell = H$ \textbf{then} go to line \ref*{line:refit}. 
            
            % \hspace{-0.5em} \algcomment{Construct transitions at layer $\ell$.}
            \For {all $(s_\ell, a_\ell) \in \latentsp_\ell \times \actionsp$} \hfill \algcomment{Construct transitions at layer $\ell$.}
            
            \State Set $\cP(s_\ell, a_\ell) \gets \detdecoder(s_\ell, a_\ell, \estlatentmdp, \cP, \Pitest_{\ell+1})$.\hfill \algcomment{Algorithm \ref{alg:decoder}}
            \State Set $\estlatp(s_\ell, a_\ell) \in \cP(s_\ell, a_\ell)$ arbitrarily. \label{line:est-transition}
            
            \hspace{-0.5em}\algcomment{Construct test policies and refit transitions.}
            \EndFor
            \State Set $(\ell_\mathsf{next}, \estlatentmdp, \cP, \crl{\Pitest_h}_{h\in[H]}) \gets \detrefit(\ell, \estlatentmdp, \cP, \crl{\Pitest_h}_{h\in[H]}, \eps)$. \label{line:refit}\hfill \algcomment{Algorithm \ref{alg:refit}} 
            \State Update current layer index $\ell \gets \ell_\mathsf{next}$.
            \EndWhile 
            \State \textbf{Return} $\estpi \gets \argmax_{\pi \in \Pi} \wh{V}^\pi(s_1)$.
        \end{algorithmic}
\end{algorithm}

\begin{algorithm}[ht]
    \caption{\detdecoder{} (Decoder, Deterministic Version)}\label{alg:decoder} 
    \begin{algorithmic}[1]
        \Require Tuple $(s_{h}, a_{h})$, estimated MDP $\estlatentmdp$, confidence sets $\cP$, $\tauref$-valid test policies $\Pitest_{h+1}$.
        \State Sample an observation $x_{h+1} \sim P(\cdot  \mid s_{h}, a_{h})$. 
        \For {$(s,s') \in \cS_{h+1} \times \cS_{h+1}$} 
%            \For {\(s' \in \cS_{h}\)}   
        \State Estimate $\vestarg{x_{h+1}}{\pi_{s, s'}} \gets \mcest(x_{h+1}, \pi_{s, s'}, \wt{O}(1/\tauref^2))$ to precision $\tauref/2$.  
        % \State \textbf{if} $\abs{\vestarg{x_{h+1}}{\pi_{s, s'}} - \wh{V}^{\pi_{s, s'}}(s)} \le 2\taudec$ for every \(s' \in \cS_h\): \(\cPnew \gets \cPnew \cup \crl{s}\).  
        \EndFor  
        \State \textbf{Return} $\cP_\mathsf{out} \gets \cP(s_{h}, a_{h}) \cap \crl{s \in \cS_{h+1}: ~\forall s' \ne s,~\abs{\vestarg{x_{h+1}} {\pi_{s, s'}} - \wh{V}^{\pi_{s, s'}}(s)} \le 2\tauref}$.\label{line:decoderd-ret} 
        % \begin{align*}
        %     \cP_\mathsf{out} \coloneqq \cP(s_{h}, a_{h}) \cap \crl*{s \in \cS_h: ~\forall s' \ne s,~\abs*{\vestarg{x_{h+1}} {\pi_{s, s'}} - \wh{V}^{\pi_{s, s'}}(s)} \le 2\taudec}. 
        % \end{align*} 
    \end{algorithmic}
\end{algorithm}

\begin{algorithm}[ht] 
\caption{\detrefit{} (Refit, Deterministic Version)}\label{alg:refit}
        \addtolength{\abovedisplayskip}{-5pt}
        \addtolength{\belowdisplayskip}{-5pt}
    \begin{algorithmic}[1]
        \Require Layer $h$, estimated MDP $\estlatentmdp$, confidence sets $\cP$, test policies $\crl{\Pitest_h}_{h\in[H]}$, parameter $\eps > 0$. 
        \State Set tolerance $\tauref \coloneqq 2^5 \cdot \eps/H$.
        % \ascomment{\State Should we say here that the estimate \(\hat V^\pi(s')\) for all \(\pi\) using \(\hat M_{lat}\)? When I read the algorithm it was a bit confusing where \(\hat V\) are coming from?} \glcomment{i didn't want to do this everywhere.}  
        % \State for all \(s\) and \(\pi\), compute  value function \(\widehat V(s)\) and  ... by planning in \(\hat M_{lat}\). \ascomment{Fix!} 
        \For {\((s , s') \in \cS_h \times \cS_h\) } \hfill \algcomment{Compute candidate test policies at layer $h$}
        \State Let $\pi_{s,s'} \gets \argmax_{\pi \in \Pi} \abs{ \wh{V}^\pi(s) - \wh{V}^\pi(s') }$.\label{line:test-policy} 
        \State Estimate to precision $\eps/H$:\label{line:eval-policy}
        \begin{align*}
            \vestarg{s}{\pi_{s,s'}} &\gets \mcest(\emission(s), \pi_{s,s'}, \wt{O}(H^2/\eps^2)),\\
            \vestarg{s'}{\pi_{s,s'}} &\gets \mcest(\emission(s'), \pi_{s,s'}, \wt{O}(H^2/\eps^2)). 
        \end{align*} 
        % \State \textbf{if} $\abs{\vestarg{s}{\pi_{s,s'}} - \wh{V}^{\pi_{s,s'}}(s)} > \tauref$, then: \(\violations \gets \violations +  \crl{\prn{s, \pi_{s, s'}}}\).\label{line:eval-policy}
        % \State Estimate $\vestarg{s'}{\pi_{s,s'}} \gets \mcest(q(s'), \pi_{s,s'}, \wt{O}(H^2/\eps^2))$ to precision $\eps/H$.\label{line:eval-policy-2}
        % \State \textbf{if} $\abs{\vestarg{s'}{ \pi_{s,s'}} - \wh{V}^{\pi_{s,s'}}(s')} > \tauref$, then: \(\violations \gets \violations +  \crl{\prn{s', \pi_{s, s'}}}\).
        %Rollout $\pi_{s,s'}$ from $s$ and $s'$ for $\wt{\Theta}(H^2/\eps^2)$ episodes to estimate $\vest(\pi_{s,s'};s)$ and  $\vest(\pi_{s,s'};s')$.\label{line:eval-policy} 
        %\State Estimate  $\vest(\pi_{s,s'};s) \leftarrow \mathrm{MCEstimate}(s, \pi_{s, s'}, m_2)$ and  $\vest(\pi_{s,s'};s')$.
        %\State Similarly, if $\abs{\vest(\pi;s) - \wh{V}^\pi(s)} \le \tauref$} update \(\violations \leftarrow \violations + \prn*{s', \pi_{s, s'}}\). 
        \EndFor  
        \State Set \(\violations \gets \crl{(s,\pi) \text{~estimated in \pref{line:eval-policy} s.t.~} \abs{\vestarg{s}{\pi} - \wh{V}^\pi(s)} \ge \tauref -\eps/H}\). 
        \If { \(\violations = \varnothing\) } \hfill \algcomment{No violations found, so return test policies.}  
        \State Set $\Pitest_{h} = \cup_{s,s' \in \latentsp_h} \crl{\pi_{s,s'}}$, and  \textbf{return} $(h-1,\estlatentmdp, \cP, \crl{\Pitest_h}_{h\in[H]} )$. \label{line:great-success} 
        \Else  \hfill \algcomment{Refit transitions to handle violations}  
        %\State \textbf{Else:} 
        % \If {\(\violations = \emptyset\)}\label{line:great-success}
        % \State  Set $\Pitest_{h} = \cup_{s,s' \in \latentsp_h} \crl{\pi_{s,s'}}$ and  \textbf{return} $(h-1,\estlatentmdp, \cP, \crl{\Pitest_h}_{h\in[H]} )$. 
        % \Else  
        \For { $(s, \pi) \in \violations$ } \label{line:violation-statement}
        \State Let $\tau = (\bar{s}_h = s, \cdots \bar{s}_H)$ be the states obtained by executing \(\pi\) from \(s\) in  $\estlatentmdp$. 
        \For {each \(\bar s \in \tau\)}
        \State Estimate $\vestarg{\bar{s}}{\pi} \gets \mcest(\bar{s}, \pi, \wt{O}(H^4/\eps^2))$ to precision $\eps/H^2$.\label{line:mc-additional}
        \EndFor
        % \ascomment{So far, we only defined \(\estlatr(\bar{s}, a)\) and not \(\estlatr(\bar{s}, \pi)\)} 
        \State \textbf{for} each \( \bar s \in \tau \) such that $\abs{ \vestarg{\bar{s}}{\pi} - \estlatr(\bar{s}, \pi) -  \vestarg{\estlatp(\bar{s}, \pi) } {\pi} } \ge 4\eps/H^2$:  
        \State \qquad Update $\cP(\bar{s}, \pi) \gets \cP(\bar{s}, \pi) ~\backslash~\estlatp(\bar{s}, \pi)$.\label{line:bad-state}           
        \EndFor 
        \State Reset $\estlatp(s, a) \in \cP(s, a)$ arbitrarily for all \((s, a)\) updated in \pref{line:bad-state}. \label{line:bad-state-2}
        % \State Set $\ell$ to be the highest layer for which some $(\bar{s}, \pi) \in \latentsp_\ell \times \cA$ was updated in \pref{line:bad-state-2}. \label{line:set-layer}
        \State \textbf{Return} $( \ell,\estlatentmdp, \cP,  \crl{\Pitest_h}_{h\in[H]} )$ where \( \ell \) is the max layer for which transitions were updated in \pref{line:bad-state-2}.\label{line:great-success-2}  
        \EndIf
    \end{algorithmic}
\end{algorithm}

Now, we present an algorithm \detalg{} (\pref{alg:det-bmdp-solver}), which achieves the following guarantee.
% \begin{theorem}\label{thm:det-bmdp-solver-guarantee}
% Let $\eps > 0$ be given and suppose that \pref{ass:det-transitions} holds. Then with high probability, \detalg{} (\pref{alg:det-bmdp-solver}) finds an $\eps$-optimal policy using $\poly(S,A,H, 1/\eps)$ samples.
% \end{theorem}

\begin{theorem}\label{thm:det-bmdp-solver-guarantee}
    Let $\eps, \delta \in (0,1)$ be given and suppose that \pref{ass:det-transitions} holds. Then, with probability at least $1-\delta$, \detalg{} (\pref{alg:det-bmdp-solver}) finds an $\eps$-optimal policy using 
    \begin{align*}
        \wt{O}\prn*{\frac{S^5A^2H^5}{\eps^2} \cdot \log\frac{1}{\delta}} \quad\text{samples.}
    \end{align*}
\end{theorem}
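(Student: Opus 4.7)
The strategy is to show that \detalg{} builds a \emph{policy emulator} $\estlatentmdp$ satisfying $\abs{V^\pi(s_1) - \wh V^\pi(s_1)} \le \eps$ for every $\pi \in \Pi$, so that $\estpi = \argmax_\pi \wh V^\pi(s_1)$ is $O(\eps)$-optimal. The first step is to fix a high-probability event on which (i) every reward estimate in \pref{line:reward-est} satisfies $\abs{\estlatr(s,a) - \optlatr(s,a)} \le \eps/H^2$, (ii) every MC call inside \detdecoder{} is accurate to $\tauref/2$, and (iii) every evaluation inside \detrefit{} is accurate to $\eps/H$.  This is obtained by Hoeffding and a union bound over all $(s,a)$ pairs, the $\widetilde O(S^2A)$ potential invocations of \detrefit{}, and the $\widetilde O(S^3A(H+S^2A))$ queries of \detdecoder{}. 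All subsequent claims are made on this event.

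\textbf{Maintaining $\optlatp \in \cP$ and decoder correctness.} The core invariant is that the true deterministic transition always lies in the confidence set. Under \pref{ass:det-transitions}, the observation $x_{h+1}$ sampled inside \detdecoder{} satisfies $\optdec(x_{h+1}) = s_{h+1}^\star \coloneqq \optlatp(s_h, a_h)$, so the MC rollout from $x_{h+1}$ under $\pi_{s,s'}$ tracks $V^{\pi_{s,s'}}(s_{h+1}^\star)$ within $\tauref/2$. Inductively assuming that \detrefit{} has certified that $\estlatentmdp$ evaluates every $\pi_{s,s'} \in \Pitest_{h+1}$ to within $\tauref$ of its true value on $\latentsp_{h+1}$, I can show that $s_{h+1}^\star$ survives the filter in \pref{line:decoderd-ret}; moreover, the maximally separating choice of $\pi_{s,s'}$ in \pref{line:test-policy} guarantees that any $s \in \latentsp_{h+1}$ which is $\Theta(\tauref)$-distinguishable from $s_{h+1}^\star$ by some $\pi \in \Pi$ gets eliminated. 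Any removal of $\estlatp(\bar s, \pi)$ from a confidence set in \pref{line:bad-state} is triggered by a local Bellman discrepancy of $4\eps/H^2$, which on the good event forces $\estlatp(\bar s, \pi) \ne \optlatp(\bar s, \pi)$, preserving the invariant.

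\textbf{Emulator property after \detrefit{} (the main obstacle).} The technical heart of the argument is: whenever \detrefit{} returns at layer $h$ with $\violations = \varnothing$, then for every $\pi \in \Pi$ and $s \in \latentsp_h$, $\abs{V^\pi(s) - \wh V^\pi(s)} \le (H-h+1) \cdot O(\tauref)$. I plan to prove this by downward induction on $h$, with the base case $h = H$ following directly from reward accuracy. For the inductive step, the telescope
\begin{align*}
V^\pi(s) - \wh V^\pi(s) &= \prn*{\optlatr - \estlatr}(s,\pi) + \brk*{V^\pi(\optlatp(s,\pi)) - \wh V^\pi(\optlatp(s,\pi))} \\
&\qquad + \brk*{\wh V^\pi(\optlatp(s,\pi)) - \wh V^\pi(\estlatp(s,\pi))}
\end{align*}
reduces the bound to (a) reward error, (b) the inductive hypothesis at layer $h+1$ applied to $\optlatp(s,\pi)$, and (c) the \emph{indistinguishability} of $\optlatp(s,\pi)$ and $\estlatp(s,\pi)$ in the emulator under $\pi$.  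The subtle point, and the main obstacle, is that the filter in \pref{line:decoderd-ret} phrases indistinguishability in terms of \emph{emulator}-values of the maximally separating $\pi_{s,s'}$, whereas (c) requires an emulator bound for an \emph{arbitrary} $\pi \in \Pi$. The empty-$\violations$ outcome of refit bridges this gap: it certifies that emulator-value and true-value of every $\pi_{s,s'}$ at layer $h$ agree to $\tauref$, and combined with the downstream inductive hypothesis this upgrades ``emulator-indistinguishability'' into ``true-indistinguishability,'' which is exactly the max-gap property that $\pi_{s,s'}$ witnesses across the class $\Pi$.

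\textbf{Termination and sample complexity.} Each refit that returns with violations strictly shrinks $\sum_{s,a} \abs{\cP(s,a)} \le S^2 A$, so at most $S^2 A$ such returns occur; each no-violation refit advances $\ell$ downward by one, and since each violation return can rewind $\ell$ upward by at most $H-1$, the while loop executes at most $O(H + H S^2 A)$ iterations in total. Per iteration, \detdecoder{} is invoked $SA$ times at cost $S^2 \cdot \widetilde O(H^2/\eps^2)$ each, and \detrefit{} evaluates $S^2$ test policies at $\widetilde O(H^2/\eps^2)$ samples each, plus $\widetilde O(S^2 H \cdot H^4/\eps^2)$ additional MC calls per violation-return. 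Combining these with the $\widetilde O(SAH^4/\eps^2)$ cost of initial reward estimation and absorbing the $\log(1/\delta)$ from the union bound yields the claimed $\widetilde O(S^5 A^2 H^5/\eps^2 \cdot \log(1/\delta))$ bound.
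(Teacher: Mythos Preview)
Your overall approach is essentially identical to the paper's: the same Bellman decomposition, the same three invariants (ground truth in the confidence set, policy-evaluation accuracy $\Gamma_h$ growing linearly in $H-h$, test-policy validity at fixed tolerance $\tauref$), the same potential argument bounding the number of refits by $S^2A$, and the same sample-complexity accounting.

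However, your explanation of how to bound term (c) --- the emulator transition error $\abs{\wh V^\pi(\optlatp(s,\pi)) - \wh V^\pi(\estlatp(s,\pi))}$ --- is garbled, and as written would not close. You say refit certification ``upgrades emulator-indistinguishability into true-indistinguishability, which is exactly the max-gap property that $\pi_{s,s'}$ witnesses across $\Pi$.'' This is backwards: $\pi_{s,s'}$ is defined as the argmax of the \emph{emulator} gap $\abs{\wh V^\pi(s) - \wh V^\pi(s')}$, not the true gap, so it cannot be used to extend \emph{true}-indistinguishability across $\Pi$. The correct chain (this is the paper's \pref{lem:controlling-transition-error}) stays entirely in emulator values: the filter gives $\abs{\vestarg{x_{h+1}}{\pi_{s^\star,\bar s}} - \wh V^{\pi_{s^\star,\bar s}}(\bar s)}$ small for any surviving $\bar s$; Monte Carlo accuracy converts $\vestarg{x_{h+1}}{\cdot}$ to the true value $V^{\pi_{s^\star,\bar s}}(s^\star)$; test-policy accuracy (from refit at layer $h+1$, not $h$) converts this true value to the \emph{emulator} value $\wh V^{\pi_{s^\star,\bar s}}(s^\star)$; and now, since $\pi_{s^\star,\bar s}$ maximizes the emulator gap by definition, you conclude $\max_{\pi \in \Pi} \abs{\wh V^\pi(s^\star) - \wh V^\pi(\bar s)} \le O(\tauref)$ directly. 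No detour through ``true-indistinguishability for all $\pi$'' and no appeal to the downstream inductive hypothesis is needed for this step --- it is simpler than you describe. Once this is straightened out, the rest of your argument goes through.
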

    
The proof of \pref{thm:det-bmdp-solver-guarantee} is found in \pref{sec:proof-warmup}. In the rest of this section, we will explain \detalg{} and illustrate the main ideas.
% finds an $\eps$-optimal policy under \pref{ass:det-transitions} using $\poly(S,A,H, 1/\eps, \log 1/\delta)$ samples. 

\detalg{} is an inductive algorithm that works from layer $H$ down to layer $1$. It maintains an estimated latent MDP $\estlatentmdp$, which approximates the ground truth latent transitions and rewards, as well as two other objects: transition confidence sets $\cP$, which assigns a set of plausible next states to each state-action pair, and a set of $S^2$ many test policies $\Pitest$, which it uses to estimate the latent transitions. 
In the pseudocode and analysis, we use $\wh{V}^\pi(\cdot)$ and $\wh{Q}^\pi(\cdot, \cdot)$ to denote the value function and $Q$-function on the estimated $\estlatentmdp$. Furthermore, we let  $\optlatp(s,a)$ (resp.~$\estlatp$) denote the latent state which $(s,a)$ transitions to in $M$ (resp.~$\estlatentmdp$). 

% \akdelete{\detalg{} uses both the local simulator access as well as sampling access to $\emission$ in order to construct a estimated latent MDP $\estlatentmdp$ which approximates the ground truth latent transitions and rewards. It does so in an inductive fashion, from layer $H$ down to layer 1. In the pseudocode and analysis, we use $\wh{V}^\pi(\cdot)$ and $\wh{Q}^\pi(\cdot, \cdot)$ denote the value function and $Q$-function on the estimated $\estlatentmdp$, and let $\wh{\cT}[\cdot]$ denote the Bellman backup operator. For any tuple $(s,a)$, we will denote $\optlatp(s,a)$ (resp.~$\estlatp$) to be the latent state which $(s,a)$ transitions to in $M$ (resp.~$\estlatentmdp$).}

% \akdelete{In addition to the estimated latent MDP $\estlatentmdp$, \detalg{} maintains two objects: transition confidence sets $\cP$ which are produced by \detdecoder{} (\pref{alg:decoder}) and test policies $\Pitest$ which are produced by \detrefit{} (\pref{alg:refit}). The pseudocode for \detdecoder{} and \detrefit{} is deferred to \pref{sec:pseudocode-deterministic}. }
At every layer $h \in [H]$, \detalg{} tries to enforce three invariant properties: 
\begin{enumerate}[(A)]
    \item \emph{Policy Evaluation Accuracy.} For all pairs $(s,a) \in \latentsp_h \times \actionsp$ and $\pi \in \Piopen$: $\abs{Q^\pi(s,a) - \wh{Q}^\pi(s,a)} \le \Gamma_h$, where the error bound $\Gamma_h$ grows linearly with $H-h$.
    \item \emph{Confidence Set Validity.} For all pairs $(s,a)\in \latentsp_h \times \actionsp$, we have $\optlatp(s,a) \in \cP(s,a)$. %\ascomment{Instead of "Confidence Set", can we call it "Candidate Transitions Set"?} \glcomment{this is a mouthful, decided not to change}
    \item \emph{Test Policy Validity.} The $S^2$ many test policies for layer $h$, i.e. $\Pitest_h := \{\pi_{s,s'}\}_{s,s' \in \cS_h} \subseteq \Piopen$, are defined for pairs of states $s, s' \in \latentsp_h$ and are \emph{valid} (maximally distinguishing and accurate):
    % , and satisfy a notion of validity (\pref{def:valid-test-policy-dd}). Validity means that the test policy $\pi_{s,s'}$ maximally distinguishes $s, s'$ under the empirical dynamics, while also being accurate\gledit{; crucially, we require the level of accuracy for $\pi \in \Pitest_h$ to be $\abs{V^\pi(s) - \wh{V}^\pi(s)} \leq \tauref$ where $\tauref$ \emph{does not grow with} $H-h$.}
    \begin{align}
        \pi_{s,s'} = \argmax_{\pi \in \Pi_{h:H}}~ \abs{\wh{V}^\pi(s) - \wh{V}^\pi(s')}, \quad \text{and} \quad \max_{\bar{s} \in \{s,s'\}}~ |V^{\pi_{s,s'}}(\bar{s}) - \wh{V}^{\pi_{s,s'}}(\bar{s})| \leq \tauref. \label{eq:validity-main}
    \end{align}
    Crucially, the accuracy level $\tauref$ \emph{does not grow} with $H-h$.
\end{enumerate}

\paragraph{Error Decomposition.} To motivate these three properties, we first state a standard error decomposition for $Q$-functions, and then show how \detalg{} controls each of terms separately. In what follows, fix some tuple $(s,a)$. We denote $\optlatr = \optlatr(s,a)$ and $\optlatp = \optlatp(s,a)$, as well as the estimated counterparts $\estlatr, \estlatp$ similarly. The Bellman error for $(s,a)$ can be decomposed as follows:
\begin{align*}
    \abs*{Q^\pi(s,a) - \wh{Q}^\pi(s,a)} \le \underbrace{\abs*{\optlatr - \estlatr}}_{\text{reward error}} + \underbrace{\abs*{\wh{V}^\pi(\optlatp) - \wh{V}^\pi(\estlatp)}}_{\text{transition error}} + \underbrace{\abs*{V^\pi(\optlatp) - \wh{V}^\pi(\optlatp)}}_{\text{policy eval.~error at next layer}}. \numberthis \label{eq:bellman-error-main} 
\end{align*}
Controlling the reward error is easy: we can simply collect i.i.d.~samples using sampling access to $\emission$ to estimate $\estlatr$ up to $\eps$ accuracy (see \pref{line:reward-est}). Furthermore, if (A) holds at layer $h+1$, then we can bound the last term of Eq.~\eqref{eq:bellman-error-main} by $\Gamma_{h+1}$. Controlling the transition error requires more work, since the learner only gets to see observations $x_{\mathsf{new}}\sim P(\cdot\mid{}s,a)$, but not the latent state $\phi(x_{\mathsf{new}})$. 

\paragraph{Decoding via Test Policies.} Our main insight is to estimate the latent state $\optdec(x_\mathsf{new})$ by using rollouts from $x_\mathsf{new}$ to compare value functions with other latent states. Denoting $\vestarg{x_\mathsf{new}}{\pi}$ to be a Monte-Carlo estimate of $V^\pi(x_\mathsf{new})$, if we find some $s' \in \latentsp_{h+1}$ such that 
\begin{align*}
    \vestarg{x_\mathsf{new}}{\pi} \approx \wh{V}^\pi(s'), \quad \text{for all}~\pi \in \Piopen, \numberthis\label{eq:policy-estimation-all}
\end{align*}
then we declare the latent state of $x_\mathsf{new}$ to be $s'$. 
This allows us to bypass the statistical hardness of learning the decoder function $\optdec$ itself,
% , as we only decode observations which are sampled by some transition. 
but, unfortunately, estimating $V^\pi(x_\mathsf{new})$ \emph{for all} $\pi \in \Piopen$ seems to require number of samples proportional to $\spancap(\Piopen) = A^H$ \cite{jia2023agnostic}. In other words, there is nothing better than just executing each policy one-by-one. However, in our algorithm, the test policies $\Pitest$ allow us to circumvent this. In \detdecoder{} (\pref{alg:decoder}), we use $\Pitest$ to run a ``tournament'' with only $S^2$ Monte Carlo rollouts from $x_\mathsf{new}$ to estimate the confidence set $\cP$ of plausible latent states. In \pref{line:decoderd-ret} of \detdecoder{}, the confidence set is updated to be 
\begin{align*}
    \cP(s, a) \gets \cP(s, a) \cap \Big\{s \in \cS_{h+1}: ~\forall s' \ne s,~\abs{ \vestarg{x_\mathsf{new}} {\pi_{s, s'}} - \wh{V}^{\pi_{s, s'}}(s)} \lesssim \tauref \Big\}. \numberthis\label{eq:confidence-set-warmup} 
\end{align*}
We show in \pref{lem:controlling-transition-error} that test policy validity (C) at layer $h+1$ implies that the confidence set \eqref{eq:confidence-set-warmup} is valid (B) for layer $h$ and furthermore, setting the transition to be any $\estlatp \in \cP$ allows us to extrapolate to statement \eqref{eq:policy-estimation-all}, thus giving us a bound on the transition error. As we have shown a bound for all three terms in Eq.~\eqref{eq:bellman-error-main}, we conclude that (A) also holds at layer $h$. %\ascomment{Change them to constraint A, constraint B and constraint C for improved readability.} \glcomment{I think this just adds more words, and the reader should be able to understand what is going on.}

% After we have built $\estlatentmdp$ from layer $h+1$ onwards, we compute a set of $S^2$ partial test policies $\Pitest_{h+1}$ which are used to distinguish pairs of latent states at layer $h+1$; the test policy associated with a given $(s,s')$ pair maximizes the difference $\abs{\wh{V}^\pi(s) - \wh{V}^\pi(s')}$. In \detdecoder{} (\pref{alg:decoder}), we run Monte Carlo rollouts from $x'$ using the test policies to estimate a confidence set of ``plausible'' latent states; the estimated transition is set to be an arbitrary member of this set (see \pref{line:est-transition} of \detalg{}). 
% A priori, this procedure only gives a bound on the transition error over $\pi \in \Pitest_{h+1} \subseteq \Pi_\mathrm{open}$. However, in \pref{lem:refitting}, we show that as long as the $\Pitest_{h+1}$ satisfy a notion of validity (see \pref{def:valid-test-policy-dd}), it is possible to extrapolate this guarantee to a bound on the transition error over all $\pi \in \Pi_\mathrm{open}$, so the inductive argument can proceed.

\paragraph{Refitting Latent Dynamics.} \detrefit{} (\pref{alg:refit}) computes test policies for layer $h$ that satisfy (C) after we have estimated the transitions/rewards. It does so by solving the maximally distinguishing planning problem (\eqref{eq:validity-main}, left) in $\estlatentmdp$ for each $s,s'\in \cS_h$. Since (A) holds at layer $h$, these policies are guaranteed to be accurate; however, test policies are required to satisfy a higher level of accuracy $\tauref \ll \Gamma_{h}$ which \emph{does not increase with the horizon}. To provide intuition on why the higher level of accuracy is required for the test policies, we refer the reader to \pref{fig:error-amplification}.

\begin{figure}[!t]
    \centering
\includegraphics[scale=0.35, trim={0cm 18.5cm 32cm 0.1cm}, clip]{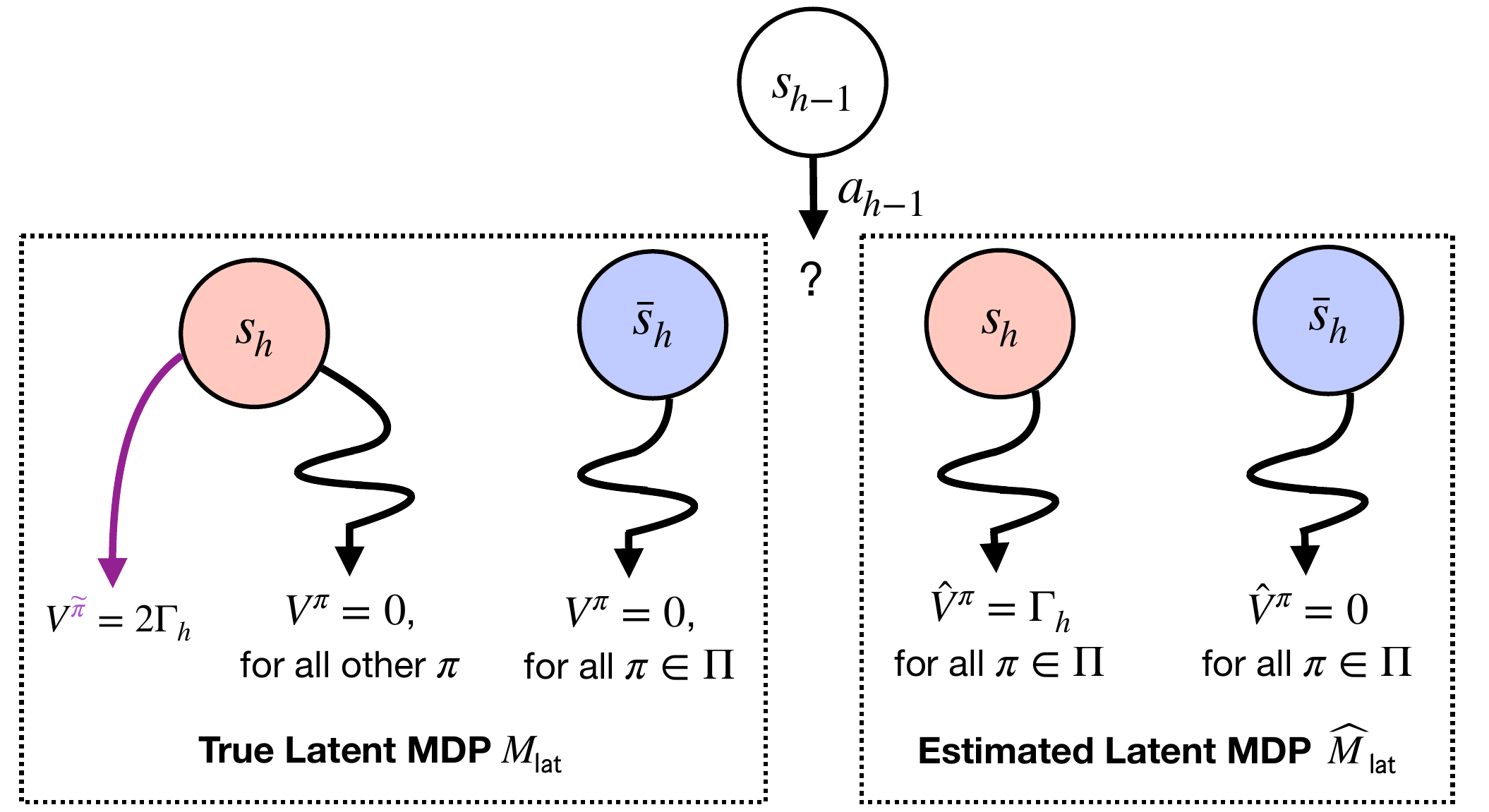}
\caption{Illustration of how certifying accuracy of test policies prevents error amplification. Suppose we want to learn the transition $\optlatp(s_{h-1}, a_{h-1}) = s_h$. In $M_\mathsf{lat}$, all policies get value 0 from both $s_h$ and $\bar{s}_h$, with the exception of a special $\color{Purple}{\wt{\pi}}$ that gets value $2\Gamma_h$ from $s_h$; in $\estlatentmdp$ all policies get value $\Gamma_h$ from $s_h$ and value 0 from $\bar{s}_h$. Thus, $\estlatentmdp$ satisfies $(A)$ but any test policy $\pi_{s_h, \bar{s}_h} \in \Pi$ will not satisfy (C). It is unlikely that $\pi_{s_h, \bar{s}_h} = \color{Purple}{\wt{\pi}}$ is selected, and if we execute any other $\pi$ from the true transition $s_h$, we will observe value $0$, and thus decode the transition to $\estlatp(s_{h-1}, a_{h-1}) = \bar{s}_h$. Therefore, $\abs{Q^{\pi}(s_{h-1}, a_{h-1}) - \wh{Q}^\pi(s_{h-1}, a_{h-1})} = 2\Gamma_{h}$, thus \emph{doubling} the policy evaluation error from layer $h$ to $h-1$. Unchecked, this could cause exponential (in $H$) error amplification. Certifying test policy accuracy prevents this, as \detrefit{} would detect the violation $\abs{V^\pi(s_{h}) -\wh{V}^\pi(s_{h})} = \Gamma_{h} \gg \tauref$ for any $\pi \in \Pi$ and refit $\estlatentmdp$ instead.}
\label{fig:error-amplification}
\end{figure}

Fortunately, since there are only $S^2$ test policies we can use Monte Carlo rollouts to check whether they are $\tauref$-accurate. If they are, we simply decrement to layer $h-1$ and continue (\pref{line:great-success}). If not, the rollouts will find a ``certificate of inaccuracy'': some tuple $(s, \pi)$ for which $\abs{\wh{V}^\pi(s) - V^\pi(s)}$ is large, which we can use to find and delete an erroneous transition in $\estlatp$ from a confidence set. Since this update can occur at some layer $\ell \gg h$, $\estlatentmdp$ may no longer satisfy the inductive hypotheses, so \detrefit{} restarts the outer loop of \detalg{} at the maximum layer $\ell$ for which some transition was updated (\pref{line:great-success-2}). Critically, we show in \pref{lem:refitting} that refitting never deletes the true $P_{\mathsf{lat}}$, so revisiting only happens $SA\cdot(S-1)$ times.
% If \detrefit{} successfully computes them, we simply decrement to layer $h-1$ and continue decoding. However, since the test policies are required to satisfy a higher level of accuracy than guaranteed by (A), \detrefit{} might not be able to compute them due to compounding errors. In this case, \detrefit{} will find a ``certificate of inaccuracy'': some tuple $(s, \pi)$ for which $\abs{\wh{V}^\pi(s) - V^\pi(s)}$ is large. Using the certificate, \detrefit{} searches for a tuple $(\bar{s}, \pi(\bar{s}))$ in subsequent layers for which $\estlatp(\bar{s}, \pi(\bar{s})) \ne \optlatp(\bar{s}, \pi(\bar{s}))$ and update that transition by deleting the old one and resetting it. Then, \detalg{} will revisit the maximum layer at which some transition was refit and restart the loop. In \pref{lem:refitting}, we show that as long as \detalg{} maintains (B), refitting never deletes the true $\optlatp$. Critically, this implies that revisiting only happens $SA\cdot (S-1)$ times.

\paragraph{Performance of Estimated Policy.} Eventually, \detalg{} will terminate at layer $h=1$. Thanks to (A), we can evaluate all $\pi \in \Piopen$ on the fully constructed $\estlatentmdp$ and return the policy $\estpi$ which achieves the highest value. The inductive argument we have outlined shows that $\estpi$ is an $\eps$-optimal policy and that \detalg{} uses $\poly(S,A,H, \eps^{-1})$ samples.

\section{\stochalg{}: Algorithm and Main Results}\label{sec:main-upper-bound}
In this section, we extend our result in \pref{thm:det-bmdp-solver-guarantee} to handle the general setting. We give our main algorithm, \stochalg{}, which takes inspiration from \detalg{}. We show how \stochalg{} leverages hybrid resets to solve agnostic policy learning, with sample complexity that scales with the pushforward concentrability $\cpush$ of the reset distribution $\mu$, a measure of the intrinsic difficulty of exploration.

First, we restate our main result of \pref{thm:block-mdp-result} with the precise dependence on the problem parameters. 

\begin{reptheorem}{thm:block-mdp-result} 
    Let $M$ be a Block MDP of horizon $H$ with $S$ states and $A$ actions, and let $\Pi$ be any policy class. Suppose we are given an exploratory reset distribution $\mu = \crl{\mu_h}_{h=1}^H$ which satisfies pushforward concentrability with parameter $\cpush$ and can be factorized as $\mu_h = \emission \circ \nu_h$ for some $\nu_h \in \Delta(\latentsp_h)$ for all $h\in[H]$. With probability at least $1-\delta$, the {\normalfont \stochalg{}} algorithm (\pref{alg:stochastic-bmdp-solver-v2}) returns an $\eps$-optimal policy using
\begin{align*}
    \frac{\cpush^4 S^{24} A^{30} H^{39} }{\eps^{18}} \cdot \mathrm{polylog} \prn*{\cpush, S, A, H, \abs{\Pi}, \eps^{-1}, \delta^{-1}} \quad \text{samples from hybrid resets.}
\end{align*}
\end{reptheorem}

The proof is deferred to \pref{sec:main-upper-bound-proofs}. In the rest of this section, we discuss the main aspects of \stochalg{} and provide intuition for how it addresses new technical challenges once we relax \pref{ass:det-transitions}. 

% Extending the result to (1) handle reset distributions with small concentrability $\cconc$, or (2) remove the Block MDP assumption seem to require new algorithmic ideas, which we leave to future work. 
% However, \pref{thm:block-mdp-result} is already quite powerful, since all of the lower bound constructions in this paper, both information theoretic and algorithm specific, are Block MDPs; furthermore all of the constructions where an explicit reset distribution is given satisfy pushforward concentrability.

\subsection{Algorithm Overview}
We now present an overview of \stochalg{}, whose pseudocode can be found in \pref{alg:stochastic-bmdp-solver-v2}. Similar to \detalg{}, it uses two subroutines: \stochdecoder, found in \pref{alg:stochastic-decoder-v2}, and \stochrefit, found in \pref{alg:stochastic-refit-v2}. Overall, \stochalg{} has a similar structure to \detalg{}, but it requires several new ideas to address several  challenges to circumvent needing \pref{ass:det-transitions}:  

%We will now give a overview of the components of \stochalg{}, and we explain how \stochalg{} addresses the challenges of moving to the more general setting, where the learner can only sample from the reset distribution, and the MDP may have stochastic latent dynamics.
% To design an algorithm that relaxes \pref{ass:det-transitions}, we must address the following new challenges:  
\begin{itemize}
    \item Under \pref{ass:det-transitions}, the learner had sampling access to the emission function $\emission$; as a consequence, we could construct an estimate of the latent MDP $\estlatentmdp$ which was defined over the latent state space $\latentsp$. Sampling access to $\emission$ was crucial since it allowed us to disambiguate observations. If the learner only has access to the reset distribution $\mu$, it is nontrivial even to estimate the latent reward function $\optlatr$, since we cannot access the decoder for observations $x \sim \mu$.
    \item In \detalg{}, even though we were supplied a policy class $\Pi$, we could instead use the open-loop policy class $\Pi_\mathsf{open}$ as a proxy, since we were guaranteed that $\max_{\pi \in \Pi_\mathsf{open}} V^\pi \ge \max_{\pi \in \Pi} V^\pi$. If the MDP has stochastic latent transitions, $\Pi_\mathsf{open}$ might not contain any good policy. Thus, we need to directly evaluate the given policies $\pi \in \Pi$ in order to solve the agnostic policy learning problem. 
\end{itemize}

\paragraph{Policy Emulators.} To address these challenges, we take the more straightforward approach: instead of trying to construct latent transitions/rewards, \emph{we directly construct an MDP $\estmdp$ over observations}. The MDP $\estmdp$ has a restricted state space $\estmdpobsspace{} \subseteq \cX$ but inherits the same action space $\cA$ and horizon $H$. Unlike the standard approach taken in tabular RL, we cannot hope to approximate the dynamics of the true MDP $M$ in an information theoretic sense, as the transition $P(\cdot \mid x,a)$ is an $\abs{\statesp}$-dimensional object (requiring $\Omega(\abs{\cX})$ samples to estimate). Taking a step back, all we need is that $\estmdp$ enables accurate policy evaluation, i.e., denoting $\wh{V}^\pi$ to be the value function of $\pi$ on $\estmdp$, we have $\max_{\pi \in \Pi}~\abs{V^\pi - \wh{V}^\pi} \le \eps$. In this sense $\estmdp$ is a ``minimal object'' which allows us to emulate the values of all policies $\pi \in \Pi$. This is formalized in the following definition.\footnote{Similar terminology of an \emph{emulator} is defined in \cite{golowich2024exploring}. Their definition formalizes what it means for estimated transitions to approximate certain Bellman backup operations, and is tailored to linear MDPs.} In the sequel, we denote $\estmdpobsspace{h}$ and $\estmdpobsspace{h:H}$ to be the restriction of the state space of $\estmdp$ to the given layer(s).  

\begin{definition}[Policy Emulator]\label{def:policy-emulator} Let $\Pi$ be a policy class and $M$ be an MDP. Fix any $\nu \in \Delta(\cX)$. We say $\estmdp$ is an $\eps$-accurate \emph{policy emulator} for $\nu$ if there exists $\wh{\nu} \in \Delta(\estmdpobsspace{})$ such that:
\begin{align*}
    \max_{\pi \in \Pi}~\abs*{\En_{x \sim \nu} \brk{V^\pi(x)} - \En_{x \sim \wh{\nu}} \brk{\wh{V}^\pi(x)} } \le \eps.  
\end{align*}
\end{definition}

\pref{def:policy-emulator} naturally extends the concept of \emph{uniform convergence} \cite{shalev2014understanding} to the interactive setting of policy learning. Clearly, if $\estmdp$ is an $\eps$-accurate policy emulator for the starting distribution $d_1$, we can find an $O(\eps)$-optimal policy. One inspiration for \pref{def:policy-emulator} is the Trajectory Tree algorithm (\pref{alg:trajectory-tree}), which can be viewed as a way to use local simulator access to build a policy emulator with $\abs{\estmdpobsspace{}} = \wt{O}(H \spancap(\Pi)/\eps^2)$ states, requiring sample complexity scaling with the worst-case notion of complexity $\spancap(\Pi)$ (see \pref{thm:generative_upper_bound}).

In contrast, \stochalg{} utilizes the reset distribution $\mu$ to construct a policy emulator with state space and sample complexity scaling with the instance-dependent notion of complexity $\cpush$. We do this in an inductive fashion, working back from layer $H$ to layer 1.
\begin{itemize}
    \item At every layer $h$, we sample $\poly(\cpush, S, A, H, \eps^{-1}, \log \abs{\Pi})$ states from $\mu_h$ to form the policy emulator's state space $\estmdpobsspace{h}$. The rewards of every tuple $(x_h,a_h) \in \estmdpobsspace{h} \times \actionsp$ are estimated via the local simulator.
    \item Once the transitions of $\estmdp$ has been constructed from layer $h+1$ onward, we call \stochdecoder{} on every $(x_h, a_h) \in \estmdpobsspace{h} \times \actionsp$. \stochdecoder{} first samples a dataset $\cD$ of transitions from $P(\cdot \mid x_h, a_h)$ (in \pref{line:sample-decode-dataset}) and then performs Monte Carlo rollouts over observations in $\cD$ using test policies $\Pitest_{h+1}$ (in \pref{line:monte-carlo}). In contrast with \detalg{}, since \stochalg{} directly works in observation space, the test policies are defined for pair of observations $x, x' \in \estmdpobsspace{h+1}$, not pairs of latent states. \stochdecoder{} estimates a transition function $\estp(\cdot \mid x_h, a_h) \in \Delta(\estmdpobsspace{h+1})$ as well as a confidence set $\cP(x_h, a_h) \subseteq \Delta(\estmdpobsspace{h+1})$.
    \item After transitions at layer $h$ are constructed, we call \stochrefit{} which tries to compute accurate test policies $\Pitest_h$ for layer $h$. If \stochrefit{} succeeds, then \stochalg{} continues the decoding/refitting loop at layer $h-1$. Otherwise, \stochrefit{} searches in the policy emulator $\estmdp$ for an inaccurate transition $\estp(\cdot| \bar{x}, \bar{a})$ and updates it. The layer index $\ell$ is set to the maximum layer for which an $(\bar{x}, \bar{a})$ is updated, and \stochalg{} restarts at that layer $\ell$. 
\end{itemize}
Eventually, \stochalg{} will reach layer 1, giving a fully-constructed policy emulator $\estmdp$. Returning the best policy in $\estmdp$ is guaranteed to be a near-optimal policy for the true MDP $M$.

\begin{algorithm}[t]
    \caption{\stochalg{} (Policy Learning for Hybrid Resets)}\label{alg:stochastic-bmdp-solver-v2}
        \begin{algorithmic}[1]
            \Require Reset distributions $\mu = \crl{\mu_h}_{h \in [H]}$, policy class $\Pi$, parameters $\eps >0$ and $\delta \in (0,1)$.
            \State Initalize policy emulator $\estmdp = \varnothing$, test policies $\crl{\Pitest_h}_{h\in[H]} = \crl{\varnothing}_{h\in[H]}$, transition confidence sets $\cP = \varnothing$.
            \State Set $\nreset \asymp \tfrac{\cpush SA^2}{\eps^3} \cdot \log \tfrac{SA \abs{\Pi}}{\delta} $.
            
            % \hspace{-2em}\algcomment{Initialize policy emulator}
            \For {$h = 1, \cdots, H$} \hfill \algcomment{Initialize policy emulator} \label{line:init-start} 
            \State Sample $\nreset$ observations from $\mu_h$ and add to $\estmdpobsspace{h}$. \label{line:sampling-from-reset}
            \For {every $(\empobs_h,a_h) \in \estmdpobsspace{h} \times \actionsp$}
            \State Estimate $\wh{R}(\empobs_h, a_h) \gets \mcest(x_h, a_h, \wt{O}(H^2/\eps^2))$.\label{line:reward-estimation}
            \State Initialize $\cP(\empobs_h, a_h) = \Delta(\estmdpobsspace{h+1})$.\label{line:init-end}
            \EndFor
            \EndFor
            \State Set current layer index $\ell \gets H$.
            \While {$\ell \ne 0$}
            
            % \hspace{-0.5em}\algcomment{Construct transitions at layer $\ell$}
            \State \textbf{If} $\ell=H$: \textbf{go to line \ref*{line:refit-v2}.}

            \hspace{-0.5em}\algcomment{Construct transitions at layer $\ell$}

            \For {each $(\empobs_\ell, a_\ell) \in \estmdpobsspace{\ell} \times \cA$} 
            \State Set $\cP(\empobs_\ell, a_\ell) \gets \stochdecoder((\empobs_\ell, a_\ell), \estmdp, \cP, \Pitest_{\ell+1}, \eps, \delta)$ \hfill \algcomment{See \pref{alg:stochastic-decoder-v2}}
            \State Set $\estp(\cdot \mid \empobs_\ell, a_\ell) \in \cP(\empobs_\ell, a_\ell)$ arbitrarily.\label{line:est-transition-v2}
            \EndFor

            \hspace{-0.5em}\algcomment{Construct test policies and refit transitions.}

            \State Set $(\ell_\mathsf{next}, \estmdp, \crl{\Pitest_h}_{h\in[H]}, \cP) \gets \stochrefit(\ell, \estmdp, \cP, \crl{\Pitest_h}_{h\in[H]}, \eps, \delta)$. \label{line:refit-v2} \hfill \algcomment{See \pref{alg:stochastic-refit-v2}}
            \State Update current layer index $\ell \gets \ell_\mathsf{next}$.
            \EndWhile
            \State \textbf{Return} $\estpi \gets  \argmax_{\pi \in \Pi} \En_{x_1 \sim \unif(\estmdpobsspace{1})} \brk{\wh{V}^\pi(x_1)}$.
        \end{algorithmic}
\end{algorithm}

\subsection{\stochdecoder{} Subroutine}\label{sec:stochdecoder}

In this section, we explain \stochdecoder{}, which for a given $(x_h,a_h)$ pair computes a confidence set of transitions $\cP(x_h,a_h)$ over the policy emulator states in the next layer $\estmdpobsspace{h+1}$. The main salient difference with \detdecoder{} is that  we now adopt a more sophisticated confidence set construction to ensure that arbitrary policies $\pi \in \Pi$ can be emulated by $\estmdp$. 

\begin{algorithm}[t]
    \caption{\stochdecoder}\label{alg:stochastic-decoder-v2}
        \begin{algorithmic}[1]
            \addtolength{\abovedisplayskip}{-5pt}
            \addtolength{\belowdisplayskip}{-5pt}
            \Require Tuple~$(x_h, a_h)$, policy emulator $\estmdp$, confidence sets $\cP$, $\taudec$-valid test policies $\Pitest_{h+1}$, parameters $\eps >0$, $\delta \in (0,1)$. 
            \State Set $\ndec \asymp \tfrac{S^2 A^2}{\eps^2} \cdot \log \tfrac{\cpush SAH \abs{\Pi}}{\eps \delta}$, $\nmc \asymp \tfrac{1}{\eps^2} \cdot \log \tfrac{\cpush SAH \abs{\Pi}}{\eps \delta}$.
% \ascomment{\State  Again, should we add a line that estimate \(\widehat V(\pi)\) using the emulator?} 
            \State Sample dataset of $\ndec$ observations $\cD \sim P(\cdot \mid x_h, a_h)$. \label{line:sample-decode-dataset}
            \For {every $x^{(i)} \in \cD$}
            \hfill \algcomment{Individually decode every observation}
            \For {every $(\empobs, \empobs') \in \estmdpobsspace{h+1} \times \estmdpobsspace{h+1}$}:
            \State Estimate $\vestarg{x^{(i)}}{\pitest{\empobs}{\empobs'}} \gets \mcest(x^{(i)}, \pitest{\empobs}{\empobs'}, \nmc)$. \label{line:monte-carlo} 
            \EndFor
            \State Define:
            \begin{align*}
                \cT[x^{(i)}] \gets \crl*{\empobs \in \estmdpobsspace{h+1}: ~\forall \empobs' \ne \empobs,~\abs*{\vestarg{x^{(i)}}{\pitest{\empobs}{\empobs'}} - \wh{V}^{\pitest{\empobs}{\empobs'}}(\empobs) } \le \taudec + 2\eps}. 
            \end{align*}
            \EndFor 
            \State Define $\gobs$ as the decoder graph with \hfill \algcomment{See \pref{def:decoder-graph-obs}}  
            \begin{align*} 
                \cXL \coloneqq \cD, \quad \cXR \coloneqq \estmdpobsspace{h+1}, \quad \text{and decoder function}~\cT.
            \end{align*}
            \State \textbf{Return}: $\cP$ defined using Eq.~\pref{eq:confidence-set-construction-v2}.
        \end{algorithmic}
\end{algorithm}

We first introduce an intermediate object, called the decoder graph.

\begin{definition}[Decoder Graph]\label{def:decoder-graph-obs} Let $\cXL,\cXR \subseteq \cX$, and let $\cT: \cXL \mapsto 2^{\cXR}$ be a decoder function. The \emph{decoder graph}, denoted $\gobs$, is defined as the bipartite graph with vertices $V = \cXL \cup \cXR$ and edges $E = \crl{(x_l, \empobs_r): x_l \in \cXL, \empobs_r \in \cT[x_l]}$. 
\end{definition}
In words, the decoder graph $\gobs$ draws an edge from every observation $x_l$ sampled from the transition to observations $x_r$ sampled from the reset if the value functions for all test policies are similar. Thus, the decoder graph $\gobs$ summarizes the similarity information encoded by individually decoding each observation.

The other ingredient is a notion of \emph{pushforward distribution}, which, when supplied a distribution over observations, collapses a policy $\pi$ to a distribution over actions.

\begin{definition}[Pushforward Distribution/Policy]\label{def:pushforward-policy} Let $\nu \in \Delta(\statesp)$ be a distribution over observations. For any policy $\pi: \statesp \to \Delta(\actionsp)$, define the \emph{pushforward distribution}, denoted $\pi \push \nu \in \Delta(\cA)$, as 
\begin{align*}
    \brk*{\pi \push \nu}(a) \coloneqq \En_{x \sim \nu} \brk*{\ind{\pi(x) = a} } \quad \text{for all}~a \in \cA.
\end{align*}
For any $\pi \in \Pi$, the emission $\emission: \cS \to \Delta(\cX)$ induces a pushforward distribution; we slightly abuse notation and call the function $\pi \push \emission: \cS \to \Delta(\cA)$ the \emph{pushforward policy}. 
\end{definition}

\paragraph{Confidence Set Construction.}
Now we are ready to specify the confidence set construction of \stochdecoder{}. Denote $\crl{\cc_j}_{j \ge 1}$ to be the connected components of $\gobs$. For any $\cc \in \crl{\cc_j}_{j \ge 1}$, denote $\ccL \subseteq \cXL$ and $\ccR \subseteq \cXR$ to be the left/right observation sets respectively. In what follows, we use $p(\cdot \mid \ccR)$ to denote the conditional distribution over $\ccR$, i.e., $p(\empobs \mid \ccR) = p(\empobs)/p(\ccR) \cdot \ind{\empobs \in \ccR}$. Given a decoder graph $\gobs$ and input confidence set $ \cP(x_h, a_h)$, the updated confidence set is defined for $\beta \coloneqq \wt{O}\prn{\prn{\sqrt{SA^2} + S}\eps}$ as 
\begin{align*}
    \cP \coloneqq \Big \{p \in \cP(x_h, a_h): ~ &\sum_{\cc \in \crl{\cc_j}} \abs*{p(\ccR) - \frac{\abs{\ccL}}{\abs{\cXL}} } \le 3\eps, \\
                                                &\quad \max_{\pi \in \Pi}~\sum_{\cc \in \crl{\cc_j}} \frac{\abs{\ccL}}{\abs{\cXL}} \cdot \nrm*{\pi \push \unif(\ccL) - \pi \push p(\cdot \mid \ccR)}_1 \le \beta \Big\}. \numberthis \label{eq:confidence-set-construction-v2} 
\end{align*}
\paragraph{Intuition for \eqref{eq:confidence-set-construction-v2}.} We give some intuition for the construction in \eqref{eq:confidence-set-construction-v2}, and refer the reader to the example in \pref{fig:confidence-set}. The high level goal is to find a set of distributions $\cP(x_h, a_h)$ supported on $\estmdpobsspace{h+1}$ such that if we plug any $\estp \in \cP(x_h, a_h)$ into our policy emulator, the policy evaluation error is bounded, i.e., 
\begin{align*}
    Q^\pi(x_h, a_h) \approx \wh{R}(x_h, a_h) + \En_{x' \sim \estp} \brk{ \wh{V}^\pi(x')}, \quad \text{for all} \quad \pi \in \Pi.
\end{align*}
In particular, we need every $\estp \in \cP$ to witness accurate policy emulation for the distribution $P = P(\cdot\mid x_h, a_h)$, so we require $\estp$ to satisfy a bound on:
\begin{align*}
    \max_{\pi \in \Pi}~ \abs*{ \En_{x' \sim P} \brk{V^\pi(x)} - \En_{x' \sim \estp}\brk{\wh{V}^\pi(x')} }.  \numberthis \label{eq:policy-emulation-transition} 
\end{align*}
Now we discuss how the constraints for $\cP$ control this policy emulation error for every $\wh{P} \in \cP$. Intuitively, the connected components $\crl{\cc_j}_{j \ge 1}$ of $\gobs$ represent a ``soft'' clustering of observations, since all observations in a given connected component $\cc \in \crl{\cc_j}_{j \ge 1}$ have similar $Q$-functions for every test policy. We further prove that this implies that the $Q$-functions are similar within $\cc$ for every $\pi \in \Pi$. Now we discuss the constraints.

\begin{figure}[!t]
    \centering
\includegraphics[scale=0.245, trim={0cm 18cm 1cm 0cm}, clip]{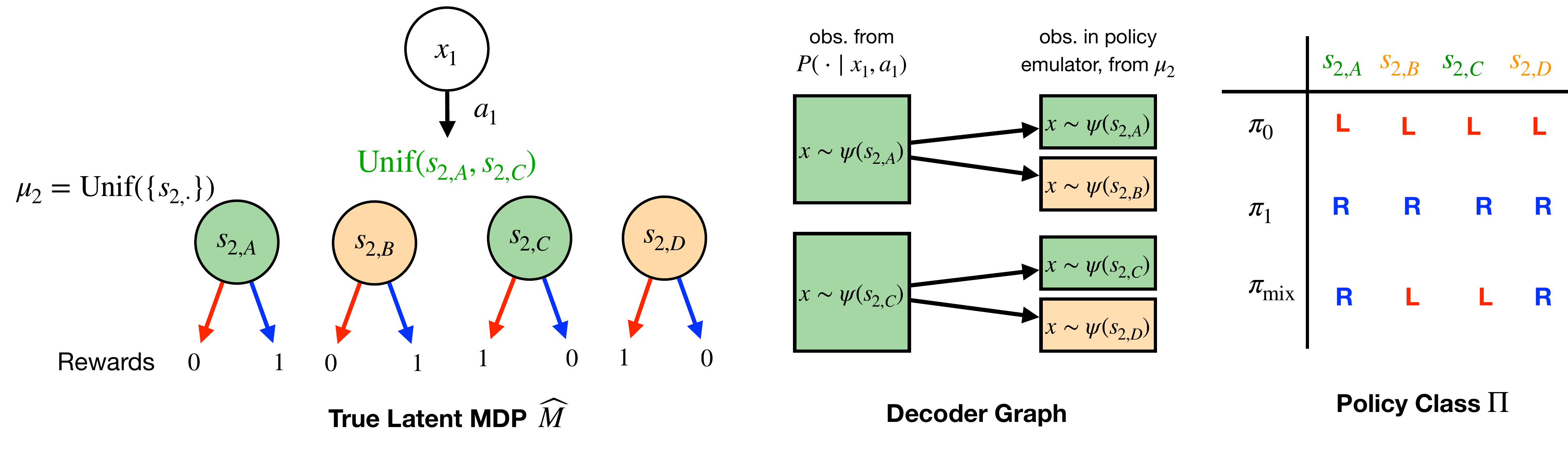} 
\caption{Confidence set construction example with $H=2$. At layer $2$, the MDP has 4 latent states, $s_{2,A}$, $s_{2,B}$, $s_{2,C}$, and $s_{2,D}$. Since $\mu_2$ has uniform mass, we sample representative observations from each latent state in our policy emulator $\estmdp$. Now consider using \stochdecoder{} to learn the transition $P(\cdot \mid x_1, a_1)$. We cannot disambiguate between observations from $s_{2,A}$ and $s_{2,B}$ via test policies (similarly for $s_{2,C}$ and $s_{2,D}$). Thus, the learned decoder graph $\gobs$ has the two connected components as shown. The \textbf{marginal constraint} enforces that every $\estp \in \cP$ must place half the mass on observations from $s_{2,A}$ and $s_{2,B}$ and the other half on observations from $s_{2,C}$ and $s_{2,D}$. This is enough to ensure that the policy evaluation error for $\pi_0$ and $\pi_1$ are controlled (cf.~Eq.~\eqref{eq:policy-emulation-transition}). However, it is not enough to ensure that policy evaluation error for $\pi_\mathrm{mix}$ is controlled, since $\pi_\mathrm{mix}$ is not constant over each connected component. As an example, consider the $\estp$ which puts uniform mass on the observations from $s_{2,B}$ and $s_{2,D}$ (the orange blocks). We have $Q^{\pi_\mathrm{mix}}(x_1, a_1) = 1$ while $\wh{Q}^{\pi_\mathrm{mix}}(x_1, a_1) = 0$. This explains why we need the \textbf{pushforward constraint}, which requires that the pushforward distribution of $\pi_\mathrm{mix}$ is matched on every connected component.} 
\label{fig:confidence-set}
\end{figure}

% To do so, $\cP$ needs to be just large enough to contain the ground truth $P(\cdot\mid x_h, a_h)$ (in a certain projected sense to be described), while small enough
% which is large enough to contain the ground truth transition (in a certain projected sense), yet is small enough to enable error amplification to be controlled (cf.~\pref{lem:controlling-transition-error} for the deterministic case). 

% One can show that the $Q$-functions must be similar for every policy in $\Pi_{h+1:H}$. 
\begin{itemize}
    \item \textbf{Marginal Constraint}: The first condition expresses a TV distance constraint on the marginals over connected components: that is, the estimated distribution $\estp$ must place a similar amount of mass on each connected component as we observe in the samples from $P$. This ensures that for all $a \in \cA, \pi \in \Pi$:
    \begin{align*}
        \En_{x' \sim P} \brk{Q^\pi(x', a)} \approx \En_{x' \sim \estp}\brk{\wh{Q}^\pi(x', a)}.
    \end{align*}
    \item \textbf{Pushforward Constraint}: However, the marginal constraint is insufficient for accurate policy emulation because in general, policies in the given class $\Pi$ are \emph{not constant} over a given $\cc$. We give an example of this in \pref{fig:confidence-set}. To address this, we need to ensure that over each $\cc$, the pushforward distributions also match. This is precisely captured in an averaged sense by the second condition.
\end{itemize}
We show that the set of $\cP$ which satisfies both constraints yields a bound on the policy emulation error \eqref{eq:policy-emulation-transition}.

\paragraph{Technical Tool: Projected Measures.}
The technical challenge in establishing Eq.~\eqref{eq:policy-emulation-transition} is that the high-dimensional $P$ is supported on $\cX$, while we want to approximate it with $\estp$ supported on the states $\estmdpobsspace{h+1} \subseteq \cX$ of the policy emulator. To address this, we introduce a notion of \emph{projected measures} onto the state space $\estmdpobsspace{h+1}$, denoted $\projm{}: \Delta(\latentsp) \to \Delta(\estmdpobsspace{h+1})$ (see \pref{def:projected-measure} for a formal definition), which approximates $\emission \circ d$ for any distribution over latent states $d$. Using the triangle inequality on Eq.~\eqref{eq:policy-emulation-transition}, we can decompose the policy emulation error using the projected measure as an intermediary quantity: 
\begin{align*}
    \text{\eqref{eq:policy-emulation-transition}} &\le \underbrace{ \abs*{ \En_{x' \sim P} \brk{V^\pi(x')} - \En_{x' \sim \projm{}(\optlatp)} \brk{V^\pi(x')} } }_{\text{projection error}} ~+~ \underbrace{ \abs*{ \En_{x' \sim \projm{}(\optlatp)} \brk{V^\pi(x')}- \En_{x' \sim \projm{}(\optlatp) }\brk{\wh{V}^\pi(x')} } }_{\text{policy eval.~error at next layer}} \\
    &\qquad\qquad + ~ \underbrace{ \abs*{ \En_{x' \sim \projm{}(\optlatp)} \brk{\wh{V}^\pi(x')}- \En_{x' \sim \estp}\brk{\wh{V}^\pi(x')} } }_{\text{transition error}}
\end{align*}
% \begin{align*}
%     \text{\eqref{eq:policy-emulation-transition}} &\le \underbrace{ \abs*{ \En_{x' \sim P} \brk{V^\pi(x')} - \En_{x' \sim \projm{}(\optlatp)} \brk{V^\pi(x')} } }_{\text{projection error}} ~+~ \underbrace{ \abs*{ \En_{x' \sim \projm{}(\optlatp)} \brk{V^\pi(x')}- - \En_{x' \sim \estp}\brk{\wh{V}^\pi(x')} } }_{\text{policy eval.~error at next layer}} \numberthis \label{eq:triangle-inequality}
% \end{align*}
This decomposition generalizes Eq.~\eqref{eq:confidence-set-warmup} to the stochastic BMDP setting. To obtain a bound on the projection error, we observe that pushforward concentrability implies that the observations sampled from $\mu$ are sufficiently representative of observations from the transition $P$, and therefore $\projm{}(\optlatp)$ approximates $P$ well. Similar to the analysis of \detdecoder{}, a bound on the policy evaluation error at the next layer can be shown via induction. Lastly, our analysis shows that the construction \eqref{eq:confidence-set-construction-v2} admits a bound on the transition error.

% The main difficulty is computing the transitions $\wh{P}(\cdot \mid x,a)$ in $\estmdp$. We cannot directly approximate the high-dimensional $P(\cdot \mid x,a)$ with a distribution supported on a much smaller set $\estmdpobsspace{}$. To address this, we introduce a notion of ``projection'' onto the state space $\estmdpobsspace{}$, denoted $\projm{}: \Delta(\latentsp) \to \Delta(\estmdpobsspace{})$ (see \pref{def:projected-measure}), which approximates the $\emission \circ d$ for a latent distribution $d \in \Delta(\latentsp)$. Due to pushforward concentrability, the samples from $\mu$ are sufficiently representative of samples from any transition $P(\cdot \mid x,a)$, and therefore $\projm{}(P(\cdot \mid x,a))$ approximates $P(\cdot \mid x,a)$ well. 

% Thus, our analysis instead tracks how well we estimate $\wh{P}(\cdot \mid x,a) \approx \projm{}(P(\cdot \mid x,a))$. Our main induction lemma (\pref{lem:main-induction}) shows that if we estimate $\wh{P}(\cdot \mid x,a)$ well for all the $(x,a)$ pairs in layer $h+1$, \stochdecoder{} will compute confidence sets $\cP$ at the next layer $h$ such that we can approximate the Bellman backup operator with only additive error if we plug in any transition $\wh{P} \in \cP$ to our estimated MDP $\estmdp$ (Part (2) of \pref{lem:main-induction}). This is accomplished by applying a similar error decomposition as \eqref{eq:bellman-error}, and reasoning about the decoder graph $\gobs$ (\pref{sec:decoder-graph}) as well as the confidence set design in \eqref{eq:confidence-set-construction-v2} (\pref{sec:projected-measures}).

\begin{algorithm}[t]
    \caption{\stochrefit}\label{alg:stochastic-refit-v2}
        \begin{algorithmic}[1]
            \addtolength{\abovedisplayskip}{-5pt}
            \addtolength{\belowdisplayskip}{-5pt}
            \Require Layer $h$, policy emulator $\estmdp$, confidence sets $\cP$, test policies $\crl{\Pitest_h}_{h\in[H]}$, parameters $\eps > 0$ and $\delta \in (0,1)$.
            \State Set $\tauref \coloneqq 80 \cdot H \eps$, $\nmc \asymp \tfrac{1}{\eps^2} \cdot \log \tfrac{\cpush SAH \abs{\Pi}}{\eps \delta}$
            % \ascomment{\State Similarly, mention that we compute \(\hat V\), etc using the emulators!} 
            \For {every $(\empobs, \empobs') \in \estmdpobsspace{h} \times \estmdpobsspace{h}$}: \hfill \algcomment{Construct candidate test policies at layer $h$}
            \State Define $\pitest{\empobs}{\empobs'} \gets \argmax_{\pi \in \cA \circ \Pi_{h+1:H}} \abs{ \wh{V}^\pi(\empobs) - \wh{V}^\pi(\empobs') }$.
            
            % \hspace{-0.5em}\algcomment{Verify accuracy of $\estmdp$ for test policies}
            \State Estimate:\hfill \algcomment{Verify accuracy of test policies}\label{line:mc1}
            \begin{align*}
                \vestarg{\empobs}{\pitest{\empobs}{\empobs'}} \gets \mcest(\empobs, \pitest{\empobs}{\empobs'}, \nmc), \quad \vestarg{\empobs'}{\pitest{\empobs}{\empobs'}} \gets \mcest(\empobs', \pitest{\empobs}{\empobs'}, \nmc)
            \end{align*}
            \EndFor
            \State Set \(\violations \gets \crl{(x,\pi) \text{~estimated in \pref{line:mc1} such that~} \abs{\vestarg{x}{\pi} - \wh{V}^\pi(x)} \ge \tauref }\). 
            \If { \(\violations = \varnothing\) } \hfill \algcomment{No violations found, so return test policies.}   
            \State Set $\Pitest_h = \cup_{x, x' \in \estmdpobsspace{h}} \crl{\pitest{\empobs}{\empobs'}}$ and \textbf{Return} $(h-1, \estmdp,\cP, \crl{\Pitest_h}_{h\in[H]})$.\label{line:great-success-stochastic}
            \Else \label{line:else-triggered} \hfill \algcomment{Refit transitions to handle violations}
            \For {every $(\empobs, \pi) \in \violations$}\label{line:for-every}
            \State \textbf{for} {each $(\bar{\empobs}, \bar{a}) \in \estmdpobsspace{h:H} \times \actionsp$}: Estimate $\qestarg{\bar{\empobs}, \bar{a}}{\pi} \gets \mcest(\bar{\empobs}, \bar{a} \circ \pi, \nmc)$. \label{line:mc2}
            \State Define for every $(\bar{\empobs}, \bar{a}) \in \estmdpobsspace{h:H} \times \actionsp$:
            \begin{align*}
                \Delta(\bar{\empobs}, \bar{a}) \coloneqq \wh{R}(\bar{\empobs},\bar{a}) + \En_{\empobs' \sim \estp(\cdot  \mid  \bar{\empobs},\bar{a})} \brk*{\qestarg{\empobs', \pi(\empobs')} {\pi} } - \qestarg{\bar{\empobs}, \bar{a}}{\pi}.
            \end{align*}
            \For {every $(\bar{\empobs}, \bar{a})$ such that $\abs{ \Delta(\bar{\empobs}, \bar{a}) } \ge \tauref/(8H)$}:\label{line:bad-obs-stochastic}

            \hspace{2.5em}\algcomment{Define loss vectors, overwriting if already defined.}

            \State Set $\ell_\mathrm{loss}(\bar{\empobs}, \bar{a}) \coloneqq \sign(\Delta(\bar{\empobs}, \bar{a})) \cdot \qestarg{\cdot , \pi(\cdot) }{\pi}\in [0,1]^{\cX_{h(\bar{\empobs})+1}[\estmdp]}$\label{line:loss-vector}
            \EndFor      
            \EndFor

            \hspace{-0.5em}\algcomment{OMD update with negative entropy Bregman Divergence on violations.}

            \State \textbf{for} every $(\bar{\empobs}, \bar{a})$ from  \pref{line:loss-vector}: Update 
            \begin{align*}
                \estp(\cdot  \mid  \bar{\empobs}, \bar{a}) \gets \argmin_{p \in \cP(\bar{\empobs}, \bar{a})}~ \tri*{p, \ell_\mathrm{loss}\prn*{\bar{\empobs}, \bar{a} } } + \frac{1}{\eps} \cdot D_\mathsf{ne}\prn*{p~\Vert~  \estp(\cdot  \mid  \bar{\empobs}, \bar{a})}
            \end{align*}\label{line:loss-vector-obs}
            \State \textbf{Return} $(\ell, \estmdp,\cP, \crl{\Pitest_h}_{h\in[H]})$ where $\ell$ is the maximum layer s.t.~$(\bar{\empobs}, \bar{a}) \in \statesp_\ell \times \cA$ was updated in \pref{line:loss-vector-obs}. \label{line:great-success-stochastic-else}
            \EndIf
        \end{algorithmic}
\end{algorithm}

\subsection{\stochrefit{} Subroutine}

Now we discuss \stochrefit{}. The skeleton is the same as in \detrefit{}: once the transition functions for $\estmdp$ have been estimated for a given layer $h$, \stochrefit{} attempts to compute a set of valid test policies $\Pitest_h$ for pairs of observations (see \pref{def:valid-test-policy}). If it cannot, this implies that at least one transition that we previously estimated in layer $h$ onward must have been incorrectly estimated, and we search for it starting in \pref{line:mc2}. In this case, we revisit the maximum layer where some transition was updated and restart the decoding procedure. 

\paragraph{OMD Regret as a Potential Function.} Our main innovation to control the number of refitting iterations is to design the right potential function. In \detalg{}, since we were working with deterministic transitions, we used the size of $\cP(s,a)$ as the potential function.  Since we are now estimating $\wh{P}(\cdot \mid x,a)$ in a continuous space, this idea does not extend. 

Instead, we use the regret of online mirror descent (OMD) against the competitor vector $\projm{}(\optlatp(\cdot \mid x,a))$ as the potential function. We show that every transition in \pref{line:loss-vector-obs} witnesses constant regret with respect to $\projm{}(P(\cdot \mid x,a))$. In our analysis, we maintain the invariant property that $\cP$ is just big enough so that $\projm{}(P(\cdot \mid x,a)) \in \cP$ throughout the execution of \stochalg{}. Therefore, the standard analysis of OMD (e.g., \pref{thm:omd}) gives us an upper bound on the cumulative regret. Letting $T_\mathrm{refit}$ denote the number of updates on a given $(x,a)$ pair, we have
\begin{align*} 
    \eps \cdot T_\mathrm{refit} \lesssim \text{Regret of OMD} \lesssim \sqrt{\log \abs{\estmdpobsspace{}} \cdot T_\mathrm{refit}}.
\end{align*}
Rearranging, we get a bound on the number of updates $T_\mathrm{refit}$ for any $(x,a)$, and since the total number of states in the policy emulator $\estmdp$ is bounded, we get a bound on the total number of updates made by \stochrefit{}.

% Thus, \stochalg{} will eventually reach layer 1, and we will have the guarantee that $\estmdp$ is a policy emulator for $d_1 = \psi(s_1)$, so we can return the best policy on $\estmdp$.

\section{Open Problem: Generalizing \pref{thm:block-mdp-result}}\label{sec:hybrid-open-problems}

We state the following open problem, which might be of interest to the reader.

\begin{question}
    Can we extend \pref{thm:block-mdp-result} to more general settings?
\end{question}

The ultimate result here would be to give a sample-efficient learning algorithm for hybrid resets which works when supplied any exploratory reset satisfying bounded concentrability.

Concretely, we believe that the technique of constructing policy emulators can be greatly generalized and simplified. As a starting point, we can show that any (pushforward) coverable MDP admits a policy emulator of bounded size (see \pref{sec:beyond-bmdp} for more details). However, we do not know how to efficiently construct such an emulator in the general setting. One natural class of problems to study is the low-rank MDP, which generalizes the Block MDP and also satisfies low (pushforward) coverability. An algorithm achieving $\poly(d)$ sample complexity would showcase the power of hybrid resets, as prior work \cite{sekhari2021agnostic} shows that $\exp(d)$ sample complexity is necessary and sufficient for agnostic RL in low-rank MDPs with just online access. 

Another direction for improving \pref{thm:block-mdp-result} is replacing the dependence on pushforward concentrability with the (smaller) concentrability. Unfortunately, our guarantee for \stochalg{} breaks down because it uses pushforward concentrability to enable accurate policy emulation of the transitions from every state in the emulator.

\subsection{Existence of Emulators Under Pushforward Coverability}\label{sec:beyond-bmdp}
A natural question to ask is how to generalize \stochalg{} beyond the Block MDP setting. As a starting point for this future research direction,  we can show that every pushforward coverable MDP admits a policy emulator with a bounded state space size. We first define pushforward coverability which posits the existence of a good distribution satisfying pushforward concentrability (c.f.~\pref{def:exploratory-pushforward-distribution}).

\begin{definition}[Pushforward Coverability \cite{mhammedi2024power, amortila2024reinforcement}]\label{def:pushforward-coverability}
The pushforward coverabilty coefficient for an MDP $M$ is
\begin{align*}
    \cpushcov(M) \coloneqq \max_{h \in [H]} \inf_{\mu_h \in \Delta(\statesp_h)} \sup_{(x,a,x') \in \statesp_{h-1} \times \actionsp \times \statesp_{h}} \frac{P(x' \mid x,a)}{\mu_h(x')}.
\end{align*}
When clear from the context we denote the pushforward concentrability coefficient as $\cpushcov$.
\end{definition}

\begin{proposition}[Pushforward Coverable MDPs $\Rightarrow$ Small Policy Emulators]\label{prop:pushforward-coverable}
Let $M$ be an MDP with pushforward coverability coefficient $\cpushcov$ and $\Pi$ be any policy class. Then there exists a policy emulator $\estmdp$ with state space size
\begin{align*}
    \poly \prn*{ \cpushcov, A, H, \eps^{-1}, \log \abs{\Pi}, \log \delta^{-1} }.
\end{align*}
\end{proposition}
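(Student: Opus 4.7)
The plan is to show existence by a direct randomized construction. Fix any distribution $\mu = \{\mu_h\}_{h \in [H]}$ witnessing pushforward coverability with parameter $\cpushcov$, i.e.~$P(x'\mid x,a)/\mu_{h+1}(x') \le \cpushcov$ for all valid $(x,a,x')$. For each layer $h \in [H]$, I would independently sample $n = \poly(\cpushcov, A, H, \eps^{-1}, \log \abs{\Pi}, \log \delta^{-1})$ i.i.d.~observations $X_1^{(h)}, \ldots, X_n^{(h)} \sim \mu_h$ to form the state space $\estmdpobsspace{h}$ of the emulator. I then define the transitions of $\estmdp$ via self-normalized importance weighting: for each $(x,a) \in \estmdpobsspace{h} \times \actionsp$, set
\begin{align*}
    \wh{P}(X_i^{(h+1)}\mid x,a) = \frac{w_i(x,a)}{\sum_{j=1}^n w_j(x,a)}, \qquad w_i(x,a) \coloneqq \frac{P(X_i^{(h+1)}\mid x,a)}{\mu_{h+1}(X_i^{(h+1)})} \le \cpushcov,
\end{align*}
with rewards inherited from $M$, i.e.~$\wh{R}(x,a) = R(x,a)$; since the proposition concerns existence rather than efficient construction, direct access to $P$ and $R$ is permitted.

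The technical workhorse would be a uniform concentration argument: I would show that, with probability at least $1-\delta$, for every $\pi \in \Pi$, every $h \in [H]$, and every $(x,a) \in \estmdpobsspace{h} \times \actionsp$,
\begin{align*}
    \abs*{\En_{x' \sim P(\cdot \mid x,a)}\brk*{V^\pi(x')} - \En_{x' \sim \wh{P}(\cdot \mid x,a)}\brk*{V^\pi(x')} } \le \eps/H.
\end{align*}
Since the importance weights are bounded by $\cpushcov$ and the unnormalized estimator $\frac{1}{n}\sum_i w_i(x,a) V^\pi(X_i^{(h+1)})$ is unbiased for $\En_{x' \sim P(\cdot \mid x,a)}[V^\pi(x')]$, Hoeffding's inequality together with a standard ratio-of-averages argument (to handle the normalization $\frac{1}{n}\sum_j w_j \approx 1$) gives concentration at rate $\cpushcov / \sqrt{n}$. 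Union bounding over $\pi \in \Pi$, $h \in [H]$, and the $nHA$ many $(x,a)$ pairs gives the required sample complexity. Given this concentration, a backward induction via the simulation lemma (\pref{lem:sim}) shows $\abs{V^\pi(x) - \wh{V}^\pi(x)} \lesssim (H-h+1)\eps/H$ for all $\pi \in \Pi$ and $x \in \estmdpobsspace{h}$; the initial distribution $d_1$ can be handled by augmenting the emulator with additional support drawn i.i.d.~from $d_1$ (or specializing to the deterministic start state case without loss of generality).

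The main obstacle I anticipate is the self-referential nature of the bound: the value functions $\wh{V}^\pi$ in the emulator depend on the same samples $\{X_i^{(h)}\}$ used to construct $\wh{P}$, creating measurability concerns when invoking concentration inequalities. The key trick that resolves this is to note that the true value functions $\crl{V^\pi}_{\pi \in \Pi}$ are determined entirely by $M$ and $\pi$, with no dependence on the samples, so Hoeffding's inequality applied to $\En_{x' \sim \wh{P}}[V^\pi(x')]$ (a data-dependent estimator of a data-independent quantity) combined with a union bound over $\pi \in \Pi$ is valid. The simulation lemma then cleanly decomposes $\abs{\En_{x' \sim P}[V^\pi] - \En_{x' \sim \wh{P}}[\wh{V}^\pi]}$ into a concentration term (controlled as above) plus a downstream recursion term $\En_{x' \sim \wh{P}} \abs{V^\pi - \wh{V}^\pi}$, which closes the induction and yields the desired polynomial-size emulator.
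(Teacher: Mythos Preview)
Your proposal is correct and essentially follows the paper's approach: sample states from the $\mu_h$ witnessing pushforward coverability, define transitions via importance weights $w_i = P(X_i\mid x,a)/\mu(X_i) \le \cpushcov$, apply Hoeffding with a union bound over $\Pi$ (using that the true $V^\pi$ is data-independent), and close with a backward induction. The one substantive difference is that the paper uses the \emph{unnormalized} weights directly, so its ``transitions'' are unnormalized measures that can sum up to $\cpushcov$ (the paper remarks that the emulator is therefore not a true MDP); your self-normalized version requires the extra ratio-of-averages step but yields a genuine probability transition, which is arguably cleaner.
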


A few remarks:
\begin{itemize}
    \item Strictly speaking, the policy emulator we construct in \pref{prop:pushforward-coverable} is not a true MDP, since our construction requires the ``transition'' $\estp(\cdot\mid x_{h-1}, a_{h-1})$ to be an unnormalized measure over the the states in the next layer $\estmdpobsspace{h}$, which may sum to $\cpushcov \ge 1$. Thus, we slightly abuse the notation for expectation:
    \begin{align*}
        \En_{x \sim \estp(\cdot\mid x_{h-1}, a_{h-1})}\brk*{V^\pi(x)} \coloneqq \sum_{ x \in \estmdpobsspace{h}} \estp( \cdot\mid x_{h-1}, a_{h-1}) V^\pi(x).
    \end{align*}
    As discussed, the policy emulator anyways is not guaranteed to be a reasonable approximation of the underlying MDP $M$, just an object which enables uniform policy evaluation, so this issue is minor.
    \item Lemma 3.1 of \cite{amortila2024reinforcement} give a result of similar flavor, which shows that pushforward coverable MDPs are approximately \emph{low-rank}. Their proof, however, seems to be quite different. It relies on the Johnson-Lindenstrauss lemma to construct random embeddings which enable approximation of the Bellman backup operator for any arbitrary value function class $\cF$. 
    \item Unfortunately, we do not know how to leverage hybrid resets to construct such a policy emulator in a statistically efficient manner---the naive way to do so requires sample complexity scaling with $\spancap(\Pi)$ (which could be much larger than $\cpushcov$). We believe this is an interesting direction for future work.
\end{itemize}

\begin{proof}[Proof of \pref{prop:pushforward-coverable}]
We will prove this by explicitly constructing the policy emulator using the same algorithmic template as in \stochalg{}. To construct the state space of the policy emulator, we sample
$\wt{O}(\cpush/\eps^2)$ observations per layer from the distributions $\mu_1, \cdots, \mu_H$, respectively, that witnesses pushforward coverability at every $h \in [H]$. As shown before, the instantaneous rewards $\wh{R}(x,a)$ for every $x \in \estmdpobsspace{} \times \actionsp$ of the emulator can be learned via the local simulator up to $\eps$ accuracy. Now we show that it is possible to define the transition functions $\estp(\cdot \mid x,a)$ for every $x \in \estmdpobsspace{} \times \actionsp$ so that the resulting $\estmdp$ is an $O(\eps)$-accurate policy emulator for $d_1$. We do this inductively:

\begin{claim}
    Let $\Gamma_h > 0$. Suppose that at layer $h \in [H]$:
    \begin{align*}
        \forall~x \in \estmdpobsspace{h},~\forall~\pi \in \Pi:\quad \abs*{V^\pi(x) - \wh{V}^\pi(x)} \le \Gamma_h.
    \end{align*}
    Then for every $(x_{h-1},a_{h-1}) \in \estmdpobsspace{h-1} \times \actionsp$, there exists some $\estp \in \Delta(\estmdpobsspace{h})$ such that
    \begin{align*}
        \forall~\pi \in \Pi:\quad \abs*{Q^\pi(x_{h-1},a_{h-1}) - \wh{R}(x_{h-1}, a_{h-1}) - \En_{x \sim \estp} \brk{\wh{V}^\pi(x)} } \le \Gamma_h + 2 \eps.
    \end{align*}
\end{claim}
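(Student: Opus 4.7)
The plan is to construct $\estp$ by importance sampling against the pushforward coverability witness distribution $\mu_h$, and then combine a Bernstein-type concentration inequality with the inductive hypothesis via a three-term triangle inequality. Concretely, let $\mu_h \in \Delta(\statesp_h)$ be the distribution witnessing \pref{def:pushforward-coverability}, so that $P(x \mid x_{h-1}, a_{h-1})/\mu_h(x) \le \cpushcov$ for every $(x_{h-1}, a_{h-1}, x)$. The plan is to form the layer-$h$ state space $\estmdpobsspace{h}$ by drawing $n = \wt{O}(\cpushcov/\eps^2)$ i.i.d.~samples $x^{(1)}, \dots, x^{(n)} \sim \mu_h$ (absorbing log factors into $\wt{O}$), and then for every $(x_{h-1}, a_{h-1}) \in \estmdpobsspace{h-1} \times \actionsp$ to set
\begin{align*}
    \estp(x^{(i)} \mid x_{h-1}, a_{h-1}) \coloneqq \frac{1}{n} \cdot \frac{P(x^{(i)} \mid x_{h-1}, a_{h-1})}{\mu_h(x^{(i)})}.
\end{align*}
As noted in the remarks following the proposition, this is a nonnegative measure rather than a true probability distribution (its $\ell_1$ norm equals $1$ only in expectation and is bounded a.s.~by $\cpushcov$), so $\En_{x \sim \estp}[\,\cdot\,]$ should be interpreted as the sum $\sum_x \estp(x) \cdot (\cdot)$.

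Next, I would apply Bernstein's inequality to this IS estimator for a fixed $\pi$: since $V^\pi(x) \in [0,1]$ and the weights are uniformly bounded by $\cpushcov$, the per-sample variance is at most $\cpushcov$, so with probability at least $1 - \delta'$,
\begin{align*}
    \abs*{\En_{x \sim \estp}[V^\pi(x)] - \En_{x \sim P(\cdot \mid x_{h-1},a_{h-1})}[V^\pi(x)]} \le \eps/2,
\end{align*}
provided $n \gtrsim (\cpushcov/\eps^2)\log(1/\delta')$. Choosing $\delta'$ small enough to absorb a union bound over $\pi \in \Pi$, over $(x_{h-1}, a_{h-1}) \in \estmdpobsspace{h-1} \times \actionsp$, and over layers $h \in [H]$ only introduces logarithmic factors, since $\abs{\estmdpobsspace{h-1}} = n$ grows only polylogarithmically in its own size (a standard layer-by-layer induction resolves the implicit self-reference). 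The same concentration inequality also yields $\abs{\|\estp\|_1 - 1} \le \eps/2$.

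To finish, I would apply the triangle inequality
\begin{align*}
    \hspace{2em}&\hspace{-2em}\abs*{Q^\pi(x_{h-1}, a_{h-1}) - \wh{R}(x_{h-1}, a_{h-1}) - \En_{x \sim \estp}[\wh{V}^\pi(x)]} \\
    &\le \abs*{R(x_{h-1}, a_{h-1}) - \wh{R}(x_{h-1}, a_{h-1})} + \abs*{\En_{x \sim P}[V^\pi(x)] - \En_{x \sim \estp}[V^\pi(x)]} \\
    &\qquad + \abs*{\En_{x \sim \estp}[V^\pi(x) - \wh{V}^\pi(x)]}.
\end{align*}
The three terms are bounded respectively by $\eps/2$ (reward estimation via the local simulator, as in \pref{line:reward-estimation} of \stochalg{}), $\eps/2$ (the IS concentration bound above), and $\|\estp\|_1 \cdot \Gamma_h \le (1 + \eps/2)\Gamma_h$ (by the inductive hypothesis, which applies because each $x^{(i)} \in \estmdpobsspace{h}$). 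Summing and absorbing the $\eps\Gamma_h/2 \le \eps$ cross-term into the slack yields the claimed bound $\Gamma_h + 2\eps$.

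The main obstacle is really two-fold. The conceptual one is that $\estp$ is genuinely an unnormalized measure rather than a distribution, with total mass up to $\cpushcov$ in the worst case: this is precisely why the proposition explicitly allows emulator ``transitions'' that are unnormalized, and why $\estmdp$ cannot literally be interpreted as an MDP. The technical one is that a single random pool $\estmdpobsspace{h}$ must simultaneously witness the IS concentration for \emph{every} parent $(x_{h-1}, a_{h-1}) \in \estmdpobsspace{h-1} \times \actionsp$; this is handled by the union bound above together with the layer-by-layer construction that fixes $\estmdpobsspace{h-1}$ before $\estmdpobsspace{h}$ is drawn.
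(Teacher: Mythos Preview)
Your proposal is correct and follows essentially the same approach as the paper: define $\estp$ via importance weights $\frac{P(x^{(i)} \mid x_{h-1}, a_{h-1})}{n\,\mu_h(x^{(i)})}$, apply the same three-term triangle inequality (reward error, IS concentration error, inductive error), and conclude via uniform concentration over $\Pi$. The paper's write-up is actually somewhat less careful than yours---it bounds the third term by $\Gamma_h$ directly without tracking $\|\estp\|_1$, and it does not spell out the union bound over $(x_{h-1},a_{h-1}) \in \estmdpobsspace{h-1}\times\actionsp$---so your additional remarks about $\|\estp\|_1 \le 1 + \eps/2$ and the layer-by-layer union bound are welcome refinements rather than departures.
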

Applying this claim backwards from $h=H, \cdots, 1$ and using the fact that $\Gamma_H = \eps$ proves \pref{prop:pushforward-coverable}.

It remains to prove the claim. Let $\estp$ be an unnormalized measure over $\estmdpobsspace{}$ (to be defined later). First, we apply the decomposition
\begin{align*}
    \hspace{2em}&\hspace{-2em} \abs*{Q^\pi(x_{h-1},a_{h-1}) - \wh{R}(x_{h-1}, a_{h-1}) - \En_{x \sim \estp} \brk{\wh{V}^\pi(x)} }\\
    &=~\abs*{R(x_{h-1}, a_{h-1}) - \wh{R}(x_{h-1}, a_{h-1})} + \abs*{ \En_{x \sim P} \brk{V^\pi(x)} - \En_{x \sim \estp} \brk*{V^\pi(x)} } \\
    &\qquad + \abs*{ \En_{x \sim \estp} \brk{V^\pi(x)} - \En_{x \sim \estp} \brk{\wh{V}^\pi(x)} } \\
    &\le~ \Gamma_h + \eps + \abs*{ \En_{x \sim P} \brk{V^\pi(x)} - \En_{x \sim \estp} \brk{V^\pi(x)} }, \numberthis\label{eq:upperbound-pushforward-cov}
\end{align*}
where the last inequality uses the reward estimation accuracy and the assumption in the claim. To control the last term, we apply a change of measure:
\begin{align*}
    \En_{x \sim P} \brk*{V^\pi(x)} = \En_{x \sim \mu_h} \brk*{\frac{P(x \mid x_{h-1}, a_{h-1})}{\mu_h(x)} \cdot V^\pi(x)}.
\end{align*}
Observe that $\estmdpobsspace{h} = \crl{x_h^{(1)}, \cdots, x_h^{(n)} }$ are drawn i.i.d.~from $\mu_h$, and by pushforward coverability, the importance ratio $P(x \mid x_{h-1}, a_{h-1})/\mu_h(x) \le \cpushcov$. Via a standard uniform convergence bound, with probability at least $1-\delta$, for every $\pi \in \Pi$
\begin{align*}
    \bigg| \En_{x \sim \mu_h} \brk*{\frac{P(x \mid x_{h-1}, a_{h-1})}{\mu_h(x)} \cdot V^\pi(x)} - \sum_{i=1}^n \underbrace{ \frac{P(x_h^{(i)} \mid x_{h-1}, a_{h-1})}{n \cdot \mu_h(x_h^{(i)})} }_{\eqqcolon \estp(x_h^{(i)})} \cdot V^\pi(x_h^{(i)}) \bigg| \le \eps. 
\end{align*}
Plugging back our choice of $\estp$ into Eq.~\eqref{eq:upperbound-pushforward-cov} proves the claim.
\end{proof}

\section{Deferred Proofs}\label{sec:hybrid-deferred-proofs}
\subsection{Proof for the Warmup Algorithm \detalg{}}\label{sec:proof-warmup}

In this section, we prove the following sample complexity guarantee for \detalg{}:
\begin{reptheorem}{thm:det-bmdp-solver-guarantee}
    Let $\eps, \delta \in (0,1)$ be given and suppose that \pref{ass:det-transitions} holds. Then with probability at least $1-\delta$, \detalg{} (\pref{alg:det-bmdp-solver}) finds an $\eps$-optimal policy using 
    \begin{align*}
        \wt{O}\prn*{\frac{S^5A^2H^5}{\eps^2} \cdot \log\frac{1}{\delta}} \quad\text{samples.}
    \end{align*}
\end{reptheorem}

\subsubsection{Proof of \pref{thm:det-bmdp-solver-guarantee}}
Our high-level strategy is to apply the inductive argument outlined in \pref{sec:warmup-algorithm-analysis-sketch} to control the growth of the Bellman error for all $(s,a) \in \latentsp_h \times \actionsp$ as we construct $\estlatentmdp$ from layer $H$ backwards. Recall our Bellman error decomposition:
% \begin{align*}
%     \max_{\pi \in \Pi}~\abs{Q^\pi(s,a) - \wh{Q}^\pi(s,a)}, \quad \text{for all}~(s,a) \in \latentsp_h \times \actionsp,
% \end{align*}
\begin{align*}
    \abs*{Q^\pi(s,a) - \wh{Q}^\pi(s,a)} \le \underbrace{\abs*{\optlatr - \estlatr}}_{\text{reward error}} + \underbrace{\abs*{\wh{V}^\pi(\optlatp) - \wh{V}^\pi(\estlatp)}}_{\text{transition error}} + \underbrace{\abs*{V^\pi(\optlatp) - \wh{V}^\pi(\optlatp)}}_{\text{error at next layer}}. \tag{\ref{eq:bellman-error-main}}
\end{align*}

To control the transition error of Eq.~\eqref{eq:bellman-error-main}, we introduce a notion of test policy validity and give a lemma which shows that if \detdecoder{} is equipped with valid test policies, the transition estimation error can be bounded.
% \paragraph{Error Decomposition.} We state a standard error decomposition for $Q$-functions, and then show how to control each of terms separately. In what follows, fix some tuple $(s,a)$. We denote $\optlatp = \optlatp(s,a)$, as well as the estimated counterpart $\estlatp = \estlatp(s,a)$. The Bellman error for $(s,a)$ can be decomposed as follows:
% \begin{align*}
%     \abs*{Q^\pi(s,a) - \wh{\cT}\brk{\wh{Q}}(s,a)} \le \underbrace{\abs*{\optlatr(s,a) - \estlatr(s,a)}}_{\text{error of rewards}} + \underbrace{\abs*{\wh{V}^\pi(\optlatp) - \wh{V}^\pi(\estlatp)}}_{\text{error of transition}} + \underbrace{\abs*{V^\pi(\optlatp) - \wh{V}^\pi(\optlatp)}}_{\text{error at next layer}}. \numberthis \label{eq:bellman-error} 
% \end{align*}
% In light of Eq.~\eqref{eq:bellman-error}, if we have bounded Bellman error at some layer $h$, controlling the estimation error of rewards and transitions at layer $h-1$ gives us a bound on the Bellman error at layer $h-1$. The estimation error of rewards is controlled via standard concentration arguments. To control the transition estimation error, we introduce a notion of test policy validity and give a lemma which shows that if \detdecoder{} is equipped with valid test policies, the estimation error can be bounded.

\begin{definition}[Test Policy Validity, Deterministic Version]\label{def:valid-test-policy-dd} Let $\eta > 0$ be a parameter. At layer $h \in [H]$, we say a collection of partial policies $\Pitest_{h} = \crl{\pi_{s,s'} \in \Pi_{h:H}: s, s' \in \latentsp_h}$ is an $\eta$-\emph{valid test policy set} for the estimated latent MDP $\estlatentmdp$ if for every $s, s' \in \latentsp_h$:
\begin{itemize}
    \item (Maximally distinguishing): $\pi_{s,s'} = \argmax_{\pi \in \Pi_{h:H}} \abs{\wh{V}^\pi(s) - \wh{V}^\pi(s')}$.
    \item (Accurate): $\abs{V^{\pi_{s,s'}}(s) - \wh{V}^{\pi_{s,s'}}(s)} \le \eta$ and $\abs{V^{\pi_{s,s'}}(s') - \wh{V}^{\pi_{s,s'}}(s')} \le \eta$.
\end{itemize}
\end{definition}

\begin{lemma}[Decoding]\label{lem:controlling-transition-error}
Fix any layer $h \in [H-1]$. Suppose that \detdecoder{} (\pref{alg:decoder}) is equipped with a $\tauref$-valid test policy $\Pitest_{h+1}$. Fix any tuple $(s_{h}, a_{h})$ and assume that $\optlatp(s_{h}, a_{h}) \in \cP(s_{h}, a_{h})$. With high probability, \detdecoder{} returns an updated $\cP$ such that: 
\begin{enumerate}[(1)]
        \item $\optlatp(s_{h}, a_{h}) \in \cP$;
        \item For every $\bar{s} \in \cP$ we have $\max_{\pi \in \Pi}~\abs{\wh{V}^\pi(\optlatp(s_{h}, a_{h})) - \wh{V}^\pi(\bar{s})} \le 7\tauref/2$. 
\end{enumerate}
\end{lemma}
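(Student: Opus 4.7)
The plan is to combine Monte Carlo concentration with the two defining properties of a $\tauref$-valid test policy set to obtain both conclusions. First, I set $s^\star \coloneqq \optlatp(s_h, a_h)$ for the unique next latent state (which is deterministic by \pref{ass:det-transitions}), so that the sample $x_{h+1}$ drawn in \detdecoder{} satisfies $\optdec(x_{h+1}) = s^\star$ and, by Block MDP decodability, $V^{\pi}(x_{h+1}) = V^{\pi}(s^\star)$ for every policy $\pi$. A Hoeffding bound with $\wt{O}(1/\tauref^2)$ rollouts combined with a union bound over the $O(S^2)$ test policies delivers the concentration event
\[
    \forall\, (s,s') \in \latentsp_{h+1}^2:\quad \abs*{\vestarg{x_{h+1}}{\pi_{s,s'}} - V^{\pi_{s,s'}}(s^\star)} \le \tauref/2,
\]
which I will condition on for the remainder of the argument. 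This is the only sampling step invoked in \detdecoder{}, so it drives the sample complexity contribution at this level.

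For conclusion (1), fix any $s' \neq s^\star$. A triangle inequality through $V^{\pi_{s^\star, s'}}(s^\star)$ yields
\[
    \abs*{\vestarg{x_{h+1}}{\pi_{s^\star, s'}} - \wh{V}^{\pi_{s^\star, s'}}(s^\star)} \le \tauref/2 + \tauref = 3\tauref/2,
\]
where the first summand is the Monte Carlo bound above and the second is the accuracy clause of \pref{def:valid-test-policy-dd} applied to the pair $(s^\star, s')$. Since $3\tauref/2 \le 2\tauref$, the latent state $s^\star$ survives the filter in \pref{line:decoderd-ret}; combined with the input hypothesis $s^\star \in \cP(s_h, a_h)$, this gives $s^\star \in \cP$.

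For conclusion (2), fix any $\bar s \in \cP$ and any $\pi \in \Pi_{h+1:H}$. The maximally-distinguishing clause of \pref{def:valid-test-policy-dd} applied to the pair $(s^\star, \bar s)$ upper bounds $\abs{\wh V^\pi(s^\star) - \wh V^\pi(\bar s)}$ by $\abs{\wh V^{\pi_{s^\star, \bar s}}(s^\star) - \wh V^{\pi_{s^\star, \bar s}}(\bar s)}$, so it suffices to control the latter. A triangle inequality through $\vestarg{x_{h+1}}{\pi_{s^\star, \bar s}}$ splits this quantity into a $3\tauref/2$ term handled exactly as in part (1), and a $2\tauref$ term coming from the membership $\bar s \in \cP$ instantiated at $s' = s^\star$ (using the symmetry $\pi_{\bar s, s^\star} = \pi_{s^\star, \bar s}$ inherited from the argmax definition). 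Adding gives $7\tauref/2$, as desired. The main work in this lemma is just the bookkeeping of constants: the $2\tauref$ threshold in the decoder's filter is calibrated precisely so that $s^\star$ is retained while the resulting confidence set is tight enough for the downstream induction. Beyond this, the only point requiring care is the alignment of policy classes, ensuring the sup over $\pi$ in conclusion (2) is taken over the same restricted class $\Pi_{h+1:H} = \Piopen|_{h+1:H}$ against which the test policies are chosen to be maximally distinguishing.
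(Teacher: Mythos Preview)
Your proof is correct and follows essentially the same approach as the paper's. The paper phrases Part (1) as a proof by contradiction (assuming $s^\star \notin \cP$ and deriving a violation of the $\tauref/2$ Monte Carlo bound) while you argue directly that the filter's $2\tauref$ threshold is met, but the underlying triangle inequality is identical; for Part (2) you route the triangle inequality through $\vestarg{x_{h+1}}{\pi_{s^\star,\bar s}}$ in the order $3\tauref/2 + 2\tauref$ whereas the paper goes $2\tauref + \tauref/2 + \tauref$, arriving at the same $7\tauref/2$. Your explicit remark about the symmetry $\pi_{\bar s, s^\star} = \pi_{s^\star, \bar s}$ is a point the paper silently uses.
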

The proof of \pref{lem:controlling-transition-error} is deferred to \pref{sec:induction-lemmas}. 

In light of \pref{lem:controlling-transition-error}, as long as we have valid test policy sets $\crl{\Pitest_h}_{h\in[H]}$, \pref{lem:controlling-transition-error} provides control on the transition estimation error, and we can iteratively apply Eq.~\eqref{eq:bellman-error-main} to get the final bound on estimation error at layer 1.

\paragraph{Computing Test Policies via \detrefit{}.} Now we will analyze \detrefit{}. By standard concentration arguments, if \pref{line:great-success} is triggered, the test policies must be $\tauref$-accurate; furthermore, they are maximally distinguishing by construction. Unfortunately, since we require the test policies to satisfy a higher level of accuracy $\tauref$, due to estimation errors in $\estlatentmdp$, it may not be possible to find any valid test policies. To address this, we observe that inaccurate test policies act as a ``certificate'' and allow us to search for some transition $\estlatp \ne \optlatp$.

\begin{lemma}[Refitting]\label{lem:refitting}
Let $\eps > 0$ be given. Suppose that at layer $h \in [H]$, \detrefit{} (\pref{alg:refit}) is supplied confidence sets $\cP$ such that for all $(s,a) \in \latentsp_{h:H} \times \actionsp$ we have $\optlatp(s,a) \in \cP(s,a)$. If \detrefit{} terminates at \pref{line:great-success-2}, then with high probability:
    \begin{enumerate}[(1)]
        \item At least one $\estlatp$ was removed from its confidence set $\cP$.
        \item No ground truth transitions $\optlatp$ are removed from their confidence set $\cP$.
    \end{enumerate}
\end{lemma}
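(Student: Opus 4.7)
The plan is to analyze each claim separately, both relying on a telescoping of the one-step Bellman residual along the deterministic trajectory generated by the test policy $\pi$ when executed on $\estlatentmdp$. A key structural feature of the warmup setting is that $\Pi = \Piopen$, so for any $(s, \pi) \in \violations$ with $\pi \in \Pi_{h:H}$, the sequence $\tau = (\bar{s}_h = s, \bar{s}_{h+1}, \ldots, \bar{s}_H)$ produced inside the refit loop of \pref{alg:refit} satisfies $\bar{s}_{i+1} = \estlatp(\bar{s}_i, \pi)$ deterministically, and therefore I will use the identity $\wh{V}^\pi(s) = \sum_{i=h}^{H} \estlatr(\bar{s}_i, \pi)$. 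This identity is what makes the bookkeeping go through. I will condition throughout on the high-probability event that all Monte Carlo estimates used in \detrefit{} fall within their stated precision: the coarse estimates in \pref{line:eval-policy} satisfy $\abs{\vestarg{s}{\pi} - V^\pi(s)} \le \eps/H$, while the refined estimates in \pref{line:mc-additional} satisfy $\abs{\vestarg{\bar{s}}{\pi} - V^\pi(\bar{s})} \le \eps/H^2$. Both follow from Hoeffding and a union bound over at most $S^2$ candidate test policies per layer and at most $H$ path states per violation; the sample sizes $\wt{O}(H^2/\eps^2)$ and $\wt{O}(H^4/\eps^2)$ in \pref{alg:refit} are chosen precisely to deliver these precisions.

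For part (2), I show that whenever $\estlatp(\bar{s}, \pi) = \optlatp(\bar{s}, \pi)$, the check at \pref{line:bad-state} cannot trigger. The Bellman equation for the true MDP reads $V^\pi(\bar{s}) = \optlatr(\bar{s}, \pi) + V^\pi(\optlatp(\bar{s}, \pi))$. Using the refined Monte Carlo bounds at $\bar{s}$ and at $\estlatp(\bar{s}, \pi) = \optlatp(\bar{s}, \pi)$, together with the reward estimation accuracy $\abs{\estlatr(\bar{s}, \pi) - \optlatr(\bar{s}, \pi)} \le \eps/H^2$ guaranteed by \pref{line:reward-est}, triangle inequality gives $\abs{\vestarg{\bar{s}}{\pi} - \estlatr(\bar{s}, \pi) - \vestarg{\estlatp(\bar{s}, \pi)}{\pi}} \le 3\eps/H^2 < 4\eps/H^2$. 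This is strictly below the threshold, so $\optlatp(\bar{s}, \pi)$ is never deleted from $\cP(\bar{s}, \pi)$.

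For part (1), I argue by contradiction. Since \detrefit{} returns at \pref{line:great-success-2}, the else branch was executed, which forces $\violations \ne \varnothing$; pick any $(s, \pi) \in \violations$. By the membership condition for $\violations$ together with the coarse estimate bound, the triangle inequality yields $\abs{V^\pi(s) - \wh{V}^\pi(s)} \ge \tauref - 2\eps/H$, and combining with the refined estimate at $s$ gives $\abs{\vestarg{s}{\pi} - \wh{V}^\pi(s)} \ge \tauref - 2\eps/H - \eps/H^2$. Now suppose for contradiction that the threshold at \pref{line:bad-state} fails at every $\bar{s} \in \tau$, so the one-step residual obeys $\abs{\vestarg{\bar{s}_i}{\pi} - \estlatr(\bar{s}_i, \pi) - \vestarg{\bar{s}_{i+1}}{\pi}} < 4\eps/H^2$ for all $h \le i \le H$ (with the terminal convention $\vestarg{\bar{s}_{H+1}}{\pi} = 0$). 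Summing these at most $H$ residuals telescopes into $\abs{\vestarg{s}{\pi} - \sum_{i=h}^H \estlatr(\bar{s}_i, \pi)} \le 4\eps/H$, and the identity from the first paragraph then gives $\abs{\vestarg{s}{\pi} - \wh{V}^\pi(s)} \le 4\eps/H$. With $\tauref = 32\eps/H$, this contradicts the lower bound above, so the check must trigger for at least one $\bar{s} \in \tau$, hence at least one $\estlatp$ is removed in \pref{line:bad-state-2}.

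The main conceptual step is the exact telescoping identity for $\wh{V}^\pi(s)$ along the deterministic path in $\estlatentmdp$. The main quantitative check to carry out is that the tolerance $4\eps/H^2$ is simultaneously large enough for Part (2) to follow from pure Monte Carlo and reward error (totaling $3\eps/H^2$), and small enough that $H \cdot 4\eps/H^2 = 4\eps/H$ still leaves a constant-factor slack below $\tauref = 32\eps/H$. I do not expect any real obstacle here; the only subtlety is keeping the precisions of the coarse and refined Monte Carlo estimates straight, since the event triggering a violation is defined using the coarse precision $\eps/H$ while the deletion check uses the refined precision $\eps/H^2$.
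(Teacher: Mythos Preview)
Your proposal is correct and follows essentially the same approach as the paper: a contradiction argument for Part (1) via telescoping Bellman residuals along the deterministic $\estlatentmdp$-trajectory, and for Part (2) the observation that the true one-step Bellman identity forces the residual below the $4\eps/H^2$ threshold. The only cosmetic difference is that you telescope the Monte Carlo estimates $\vestarg{\bar{s}_i}{\pi}$ directly against the identity $\wh{V}^\pi(s)=\sum_i \estlatr(\bar{s}_i,\pi)$ (accumulating $4\eps/H^2$ per step), whereas the paper first passes from $\vestarg{\cdot}{\pi}$ to $V^\pi(\cdot)$ (costing an extra $2\eps/H^2$ per step for a $6\eps/H^2$ bound) and then recurses on $\abs{V^\pi(\bar{s}_i)-\wh{V}^\pi(\bar{s}_i)}$; both routes leave ample slack against $\tauref=32\eps/H$.
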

The proof of \pref{lem:refitting} is deferred to \pref{sec:induction-lemmas}. 

Our analysis will track the invariant that the confidence sets $\cP$ always contain the ground truth transition $\optlatp$. Therefore, \pref{lem:refitting} allows us to use the size of the confidence sets as a potential function: if \detrefit{} fails to compute valid test policies at some layer $h$, we must delete some incorrect transition $\estlatp$ from its set $\cP$; this process cannot continue indefinitely, since we can delete at most $S(S-1)A$ states.

\paragraph{Proof by Induction.}
With \pref{lem:controlling-transition-error} and \pref{lem:refitting} in hand, we can show the final bound in \pref{thm:det-bmdp-solver-guarantee}. For technical convenience, we will show that the policy returned by \detalg{} is $O(\eps)$ suboptimal; rescaling the parameter $\eps$ does not change the final sample complexity apart from constant factors. Also, we omit the standard arguments (via concentration and union bound) which show that the conclusions of \pref{lem:controlling-transition-error} and \pref{lem:refitting} hold with probability at least $1-\delta$ over the randomness of sampling episodes from the MDP.

Take $\Gamma_h \coloneqq C(H-h+1)/H \cdot \eps$ for some suitably large constant $C > 0$. We will inductively show that these properties hold for all layers $h\in [H]$:
\begin{enumerate}[(A)]
    \item \emph{Policy Evaluation Accuracy.} For all pairs $(s,a) \in \latentsp_h \times \actionsp$ and $\pi \in \Pi_\mathsf{open}$: $\abs{Q^\pi(s,a) - \wh{Q}^\pi(s,a)} \le \Gamma_h$.
    \item \emph{Confidence Set Validity.} For all pairs $(s,a)\in \latentsp_h \times \actionsp$, we have $\optlatp(s,a) \in \cP(s,a)$.
    \item \emph{Test Policy Validity.} $\Pitest_{h}$ are $\tauref$-valid for $\estlatentmdp$ at layer $h$.
\end{enumerate}

% \begin{enumerate}[(1)]
%     \item \textit{Transition set includes ground truth:} for all $(s,a) \in \latentsp_{h:H} \times \actionsp$, we have $\optlatp(s,a) \in \cP(s,a)$.
%     \item \textit{Policy Evaluation Accuracy:} for every $s \in \latentsp_{h}$, $\max_{\pi \in \Pi}~\abs{V^\pi(s) - \wh{V}^\pi(s)} \le \Gamma_{h}$.
%     \item \textit{Valid test policies:} $\Pitest_{h}$ are $\taudec$-valid for $\estlatentmdp$ at layer $h$.
% \end{enumerate}
To analyze \detalg{}, we will show that these properties always hold throughout at the end of every while loop for all layers $h > \ell_\mathsf{next}$.

\underline{Base Case.} We analyze the first loop with $\ell = H$. Note that (A) holds by concentration of the reward estimates, and (B) trivially holds because there are no transitions to be constructed at layer $H$. Now we investigate what happens when \detrefit{} is called. The computed test policies take the form $\pi_{s,s'} \equiv a$ for some $a \in \cA$; again by concentration of the reward estimates, \pref{line:great-success} of \detrefit{} is triggered. Therefore (A)--(C) hold after refitting, and we jump to $\ell_\mathsf{next} = H-1$.

% Now we analyze \detrefit{} at layer $H$. Since for every $\pi \in \Pitest^{(H)} \subseteq \cA$ and $s \in \latentsp_H$, $\vest(\pi;\cdot)$ and $\wh{V}^\pi(\cdot)$ are both independent, unbiased estimates of $R(\cdot, \pi)$, so therefore for all $(s,\pi)$ under consideration, $\abs{ \vest(\pi;s) - \wh{V}^\pi(s) } \le 2\eps/H$, implying that \pref{line:great-success} of \detrefit{} is executed the first time around and returns valid test policies with accuracy $\taudec = 2\eps/H$. Therefore, applying \pref{lem:induction} at layer $H-1$ we conclude that for every $(s_{H-1}, a_{H-1})$ we have
% \begin{align*}
%     &\max_{\pi \in \Pi}~\abs{Q^\pi(s_{H-1}, a_{H-1}) - \wh{Q}^\pi(s_{H-1}, a_{H-1})} \le \frac{10\eps}{H}.
% \end{align*}
% Furthermore, after \detdecoder{} is run for all $(s,a) \in \latentsp_{H-1} \times \cA$ we have $\optlatp(s,a) \in P(s,a)$ for all $(s,a)$.

\underline{Inductive Step.} Suppose the current layer index is $\ell$, and that properties (A)--(C) hold for all $h > \ell$. By \pref{lem:controlling-transition-error}, the updated transition confidence sets returned by \detdecoder{} at layer $\ell$ will satisfy (B). Furthermore, at the end of \pref{line:est-transition},  the error decomposition \eqref{eq:bellman-error-main} implies that for every $(s,a) \in \cS_\ell \times \actionsp$:
    \begin{align*}
        &\max_{\pi \in \Pi}~\abs*{Q^\pi(s, a) - \wh{Q}^\pi(s, a)} \le \Gamma_{\ell+1} + \frac{\eps}{H^2} + \frac{7\tauref}{2} \le \Gamma_{\ell}, \quad \Longrightarrow \quad  \text{Property (A) holds at layer $\ell$.}
    \end{align*}
Now we do casework on the outcome of \detrefit{}.
\begin{itemize}
    \item \textbf{Case 1: Return in \pref{line:great-success}.}  By construction, property (C) is satisfied for layer $\ell$. In this case, since \pref{alg:refit} made no updates to $\estlatentmdp$ or $\cP$, properties (A) and (B) continue to hold at layer $\ell$ onwards.
    \item \textbf{Case 2: Return in \pref{line:great-success-2}.}  By \pref{lem:refitting}, any updates to $\estlatentmdp$ maintain property (B). Let $\ell_\mathsf{next}$ denote the layer at which we jump to. By definition of $\ell_\mathsf{next}$, we made no updates to $\estlatentmdp$ at layers $\ell_\mathsf{next} + 1$ onwards, and therefore the previously computed test policies $\Pitest_{\ell_\mathsf{next}+1:H}$ must still be valid, so therefore properties (A) and (C) continue to hold at layer $\ell_\mathsf{next}$ onwards.
\end{itemize}

Continuing the induction, once $\ell \gets 0$ is reached in \detalg{} (which we know will eventually happen because Case 2 can only occur for $S^2A$ times), the estimated latent MDP $\estlatentmdp$ must satisfy the bound
\begin{align*}
    \max_{\pi \in \Pi}~\abs*{V^\pi(s_1) - \wh{V}^\pi(s_1)} \le \Gamma_1 = O(\eps).
\end{align*}

% Starting at $t=H-1$ we argue that property (1) continues to hold after apply the updates to the transitions in $\cZ_t$. At $t=H-1$, since we update $\wt{P}(s,a) \in \cP$ by the second implication of \pref{lem:controlling-transition-error} that the recursive argument still applies to the updated $\estlatentmdp$ from layer $H$ onwards. Call this sequence $\estlatentmdp^{(i)}$ with value functions $\wh{V}_i$ for $i=0$. Applying update sequentially we see that the result still holds. at layer $h$. \pref{lem:controlling-transition-error} 

% Now we apply \pref{lem:controlling-transition-error} at layer $h-1$:
% \begin{itemize}
%     \item Property (1) holds because for every $s \in \cS_{h-1}$, the error decomposition Eq.~\eqref{eq:bellman-error} implies that
%         \begin{align*}
%             \max_{\pi \in \Pi}~\abs*{V^\pi(s) - \wh{V}^\pi(s)} = \max_{\pi \in \Pi}~\abs*{Q^\pi(s,\pi) - \wh{T}\brk{\wh{Q}}(s, \pi)} \le \Gamma_{h} + \frac{\eps}{H^2} + 4\taudec \le \Gamma_{h-1}.
%         \end{align*}
%     \item Property (2) follows by the first implication of \pref{lem:controlling-transition-error}.
% \end{itemize}
% Continuing this way, we see that at state $s_1$
% \begin{align*}
%     \max_{\pi \in \Pi}~\abs*{V^\pi(s_1) - \wh{V}^\pi(s_1)} \le \Gamma_1 = O(\eps).
% \end{align*}

\paragraph{Sample Complexity Bound.} We now compute the final sample complexity required by \detalg{}:
\begin{itemize}
    \item Estimating rewards in the main algorithm uses $\wt{O}(H^4SA/\eps^2)$ samples.
    \item \detdecoder{} is called at most $SA \times S^2 A$ times, since we (re-)decode every transition $(s,a)$ at most $S^2A$ times. Every call to \detdecoder{} uses $\wt{O}(S^2/\tauref^2) = \wt{O}(S^2H^2/\eps^2)$ samples since we take $\tauref = 2^5 \cdot \eps/H$ in \pref{lem:refitting}. Therefore the total number of samples used by \detdecoder{} is at most $\wt{O}(S^5A^2H^2/\eps^2)$.
    \item \detrefit{} is called at most $S^2 AH$ times, since associated to every layer revisiting is an additional $H$ calls in the main while loop. In every call to \detrefit{}, we  use $\wt{O}(S^2H^2/\eps^2)$ calls to compute and verify the test policy set in \pref{line:eval-policy}. In addition, every time \pref{line:violation-statement} is triggered corresponds to at least one deletion in \pref{line:bad-state-2}, so the number of additional samples used by \pref{line:mc-additional} (across all calls to \detrefit{}) can be bounded by $\wt{O}(S^2 A H^3/\tauref^2) = \wt{O}(S^2 A H^5/\eps^2)$.
\end{itemize}  
Thus the final sample complexity is at most $\wt{O}\prn{S^5A^2H^5/\eps^2}$ samples. \qed

\subsubsection{Proof of Induction Lemmas }\label{sec:induction-lemmas}

\begin{proof}[Proof of \pref{lem:controlling-transition-error}]
First we prove implication (1). Let us denote $s^\star = \optlatp(s_h, a_h)$. If $s^\star \notin \cP$ (the returned set), then there exists some $s'$ for which
\begin{align*}
    \abs*{\vestarg{x_{h+1}}{\pi_{s^\star, s'}} - \wh{V}^{\pi_{s^\star, s'}}(s^\star)} \ge 2\tauref.
\end{align*}
However, by assumption of test policy accuracy we know that
\begin{align*}
    \abs*{V^{\pi_{s^\star, s'}}(s^\star) - \wh{V}^{\pi_{s^\star, s'}}(s^\star)} \le \tauref.
\end{align*}
Since the quantity $\vestarg{x_{h+1}}{\pi_{s^\star, s'}}$ is an unbiased estimate of $V^{\pi_{s^\star, s'}}(s^\star)$ which is estimated to accuracy $\tauref/2$ we have a contradiction, so $s^\star \in \cP$.

Now we prove implication (2). If $\bar{s} \in \cP$, then we must have
\begin{align*}
   \abs*{\vestarg{x_{h+1}}{\pi_{s^\star, \bar{s}}} - \wh{V}^{\pi_{s^\star, \bar{s}}}(\bar{s})} = \abs*{\vestarg{s^\star}{\pi_{s^\star, \bar{s}}} - \wh{V}^{\pi_{s^\star, \bar{s}}}(\bar{s})} \le 2\tauref. 
\end{align*}
 Since we estimated $\vestarg{s^\star}{ \pi_{s^\star, \bar{s}}}$ up to $\tauref/2$ accuracy we know that
 \begin{align*}
     \abs*{V^{\pi_{s^\star, \bar{s}}}(s^\star) - \wh{V}^{\pi_{s^\star, \bar{s}}}(\bar{s})} \le 5\tauref/2, \quad \Longrightarrow \quad  \abs*{\wh{V}^{\pi_{s^\star, \bar{s}}}(s^\star) - \wh{V}^{\pi_{s^\star, \bar{s}}}(\bar{s})} \le 7\tauref/2,
 \end{align*}
 where the implication follows by the accuracy of $\Pitest_{h+1}$.
By the maximal distinguishing property of $\Pitest_{h+1}$, observe that the LHS of the above implication is equal to $\max_{\pi \in \Pi}~\abs{\wh{V}^{\pi}(s^\star) - \wh{V}^{\pi}(\bar{s})}$. This proves the second implication, and concludes the proof of \pref{lem:controlling-transition-error}.
% So therefore:
%  \begin{align*}
%      \hspace{2em}&\hspace{-2em}\max_{\pi \in \Pi}~\abs*{Q^\pi(s_{h-1},a_{h-1}) - \wh{R}(s_{h-1},a_{h-1}) - \wh{V}^\pi(\bar{s})} \\
%      &\le \abs*{R(s_{h-1},a_{h-1}) - \wh{R}(s_{h-1},a_{h-1})} + \max_{\pi \in \Pi}~\abs*{V^\pi(s^\star) - \wh{V}^\pi(\bar{s})} \\
%      &\le \frac{\eps}{H^2} + \max_{\pi \in \Pi}~\abs*{V^\pi(s^\star) - \wh{V}^\pi(s^\star) } + \max_{\pi \in \Pi}~\abs*{\wh{V}^\pi(s^\star) - \wh{V}^\pi(\bar{s})} \\
%      &\le \Gamma_{h+1} + \frac{7\taudec}{2} + \frac{\eps}{H^2}.
%  \end{align*}
% This proves the second implication, and concludes the proof of \pref{lem:induction}.
\end{proof}

\begin{proof}[Proof of \pref{lem:refitting}]
We show the first implication. Let $(s_h,\pi)$ be any policy which satisfies $\abs{\vestarg{s_h}{\pi} - \wh{V}^\pi(s_h)} \ge \tauref - \eps/H$. Since we estimated $V^\pi(s_h)$ up to $\eps/H$ error, we have $\abs{V^\pi(s_h) - \wh{V}^\pi(s_h)} \ge \tauref - 2\eps/H$. Let $\bar{s}_h = s_h, \bar{s}_{h+1}, \cdots, \bar{s}_H$ be the sequence of states which are obtained by running $\pi$ on $\estlatentmdp$ starting at $s_h$. 

For sake of contradiction suppose that
\begin{align*}
    \abs*{\vestarg{\bar{s}}{\pi} - \wh{R}(\bar{s}, \pi) - \vestarg{\estlatp(\bar{s}, \pi)}{\pi} } \le \frac{4\eps}{H^2}, \quad \text{for all}~\bar{s} \in \crl{\bar{s}_h,\cdots, \bar{s}_H}.
\end{align*}
Since we estimated every $\vestarg{\cdot}{\pi}$ up to accuracy $\eps/H^2$ we see that
\begin{align*}
    \abs*{V^\pi(\bar{s}) - \wh{R}(\bar{s}, \pi) - V^\pi(\estlatp(\bar{s}, \pi)) } \le \frac{6\eps}{H^2}, \quad \text{for all}~\bar{s} \in \crl{\bar{s}_h,\cdots, \bar{s}_H}.
\end{align*}
By Performance Difference Lemma and applying the previous display recursively,
\begin{align*}
     \abs*{V^\pi(s_h) - \wh{V}^\pi(s_h)} &\le \abs*{V^\pi(\bar{s}_h) - \wh{R}(\bar{s}_h, \pi) - V^\pi(\bar{s}_{h+1})} + \abs*{ V^\pi(\bar{s}_{h+1}) - \wh{V}(\bar{s}_{h+1}) } \\
     &\le \frac{6\eps}{H^2} + \abs*{ V^\pi(\bar{s}_{h+1}) - \wh{V}(\bar{s}_{h+1}) } \le \cdots \le \frac{6\eps}{H}.
\end{align*}
This contradicts the statement that $\abs*{V^\pi(s_h) - \wh{V}^\pi(s_h)} \ge \tauref - 2\eps/H$ by the choice of $\tauref$. So we can conclude that there exists a state $\bar{s} \in \crl{\bar{s}_h,\cdots, \bar{s}_H}$ such that
\begin{align*}
    \abs*{\vestarg{\bar{s}}{\pi} - \wh{R}(\bar{s}, \pi) - \vestarg{\estlatp(\bar{s}, \pi)}{\pi} } \ge \frac{4\eps}{H^2},
\end{align*}
so therefore \pref{line:bad-state-2} is executed at least once, proving the first implication.

To prove the second implication, consider any $(\bar{s}, \pi)$ for which \pref{line:bad-state-2} is executed. We know that
\begin{align*}
    \abs*{ V^\pi(\bar{s}) - \wh{R}(\bar{s}, \pi) - V^\pi(\estlatp(\bar{s}, \pi)) } \ge \frac{2\eps}{H^2}.
\end{align*}
Recall that for all $(s,a)$, the estimation error on the rewards was $\abs{R(s,a) - \wh{R}(s,a)} \le \eps/H^2$. Therefore
\begin{align*}
    \abs*{ V^\pi(\bar{s}) - R(\bar{s}, \pi) - V^\pi(\estlatp(\bar{s}, \pi)) } \ge \frac{\eps}{H^2}, \quad \Longrightarrow \quad  \estlatp(\bar{s},\pi) \ne \optlatp(\bar{s}, \pi).
\end{align*}
Therefore as claimed we always delete $\estlatp(\bar{s},\pi) \ne \optlatp(\bar{s}, \pi)$ in \pref{line:bad-state-2} of \detrefit{}.
\end{proof}

\subsection{Proof of Main Upper Bound}\label{sec:main-upper-bound-proofs}
In this section, we prove \pref{thm:block-mdp-result}.

% \begin{assumption}[Factorized Concentrability]\label{ass:exploratory-conc-distribution}
%     The exploratory distributions $\mu = \crl{\mu_h}_{h\in[H]}$ can be written as $\mu_h = \emission \circ \nu_h$ where $\nu_h \in \Delta(\cS_h)$ satisfies concentrability of the latent transitions.
% \end{assumption}

% \begin{theorem}\label{thm:pushforward-bmdp-result}
% Under \pref{ass:exploratory-pushforward-distribution}, with probability at least $1-\delta$,
%     \stochalg{} (\pref{alg:stochastic-bmdp-solver-v2})  returns an $\eps$-optimal policy using
%     \begin{align*}
%         \poly\prn*{ \cpush, S, A, H, 1/\eps, \log \abs{\Pi}, \log 1/\delta} \quad \text{samples.}
%     \end{align*}
% \end{theorem}
    
\subsubsection{Preliminaries}\label{sec:upper-bound-preliminaries}
We will define some additional concepts and notation which will be used in the analysis.

\begin{itemize}
    \item For any set $\cX' \subseteq \cX$ we denote the represented states as $\cS[\cX'] \coloneqq \crl{\optdec(x): x \in \cX'}$. For any latent state $s \in \cS$ and subset $\cX' \subseteq \cX$ we let $n_s[\cX'] \coloneqq \abs{\crl{x \in \cX': \optdec(x) = s}}$ count the total number of observations there are emitted from $s$. 
    \item We define the set of $\eps$-pushforward-reachable latent states
    \begin{align*}
        \epsPRS{h} \coloneqq \crl*{s_h: \max_{s_{h-1}, a_{h-1}} \optlatp(s_h \mid  s_{h-1}, a_{h-1}) \ge \frac{\eps}{S} },
    \end{align*}
    and furthermore let $\epsPRS{} \coloneqq \cup_{h=1}^H \epsPRS{h}$.
    \item For any $\cX' \subseteq \cX$, we let $\nrch[\cX'] \coloneqq \abs{\crl{x \in \cX': \optdec(x) = \epsPRS{}}}$ and $\nurch[\cX'] = \abs{\cX'} - \nrch[\cX']$.
\end{itemize}

\paragraph{Estimated Transitions and Projected Measures.} Recall that the ground truth latent transition is denoted $\optlatp: \latentsp \times \actionsp \to \Delta(\latentsp)$. We will use $\samplelatp$ to denote the empirical version of the latent transition which is sampled in \pref{line:sample-decode-dataset} of \pref{alg:stochastic-decoder-v2}:
\begin{align*}
    \samplelatp(\cdot  \mid  \optdec(x_{h}), a_{h}) = \frac{1}{\ndec} \sum_{x \in \cD} \delta_{\optdec(x)}.
\end{align*}
In addition, we introduce a notion of projected measures which will be used to relate the ground truth transition $P = \emission \circ \optlatp$ with the estimated transition $\estp$ of the policy emulator. While our algorithm never directly uses the projected measure, we track it in the analysis.

\begin{definition}[Projected Measure]\label{def:projected-measure}
For a distribution $p \in \Delta(\cS)$, define the \emph{projected measure} onto the observation set $\bar{\cX} \subseteq \statesp$ as
\begin{align*}
    \projm{\bar{\cX}}(p) \coloneqq \sum_{s \in \epsPRS{}} p (s) \cdot \unif (\crl{\empobs \in \bar{\cX}: \optdec(\empobs) = s}).
\end{align*}
Specifically, for any $\empobs \in \bar{\cX}$ we have:
\begin{align*}
    \projm{\bar{\cX}}(p)(\empobs) = p (\optdec(\empobs)) \cdot \frac{\ind{\optdec(\empobs) \in \epsPRS{}}}{n_{\optdec(\empobs)}[\bar{\cX}]}.
\end{align*}
Furthermore, for any subset $\bar{\cX}' \subseteq \bar{\cX}$ we denote $\projm{\bar{\cX}}(p)(\bar{\cX}') = \sum_{\empobs \in \bar{\cX}'} \projm{\bar{\cX}}(p)(\empobs)$.
\end{definition}
Formally, $\projm{\bar{\cX}}$ is not a true probability distribution, as the total measure might not sum up to 1. This would happen if $p(s) > 0$ for $s \in (\epsPRS{})^c$.

\textit{Remark.} In \pref{thm:block-mdp-result}, we assume that the distribution $\mu$ is factorizable. This can be removed with some extra work. One can modify the definition of the projected measure to replace the uniform distribution over observations with some other suitable importance-reweighted distribution; the existence of such distribution with desirable properties that allow concentration of the pushforward policies can be shown using pushforward concentrability (i.e., in \pref{lem:pushforward-concentration}).

\paragraph{Test Policy Validity.}
In our analysis, we will modify \pref{def:valid-test-policy-dd} as below.
\begin{definition}[Valid Test Policy]\label{def:valid-test-policy}
For a layer $h \in [H]$, we say a collection of policies $\Pitest_{h} = \crl*{\pi_{\empobs, \empobs'}}_{\empobs, \empobs' \in \estmdpobsspace{h}}$
    is a $\eta$-\emph{valid test policy set} for policy emulator $\wh{M}$ if the following hold.
    \begin{itemize}
        \item (Maximally distinguishing): $\pi_{\empobs, \empobs'} = \argmax_{\pi \in \cA \circ \Pi_{h+1:H}} \abs*{\wh{V}^\pi(\empobs) - \wh{V}^\pi(\empobs')}$.
        \item (Accurate): For all $\empobs, \empobs' \in \estmdpobsspace{h}$:
        \begin{align*}
            \abs*{V^{\pi_{\empobs, \empobs'}}(\empobs) - \wh{V}^{\pi_{\empobs, \empobs'}}(\empobs)} &\le \eta \quad \text{and} \quad    \abs*{V^{\pi_{\empobs, \empobs'}}(\empobs') - \wh{V}^{\pi_{\empobs, \empobs'}}(\empobs')} \le \eta.
        \end{align*}
    \end{itemize}
\end{definition}

\subsubsection{Supporting Technical Lemmas for Sampling}

In this section, we establish several technical lemmas which show that various conditions that we need in the analysis hold with high probability under samples from $M$.

\paragraph{Properties of Policy Emulator Initialization.} We prove several properties that hold with high probability when the policy emulator is initialized in \pref{line:init-start}-\ref{line:init-end} of \pref{alg:stochastic-bmdp-solver-v2}.

\begin{lemma}[Sampling of Pushforward-Reachable States]\label{lem:sampling-states}
With probability at least $1-\delta$:
\begin{align*}
    \forall~ h \in [H], \forall~ s \in 
\epsPRS{h}: \quad n_{s}[\estmdpobsspace{h}] \ge \frac{\eps}{2 \cpush S}\cdot \nreset.
\end{align*}
\end{lemma}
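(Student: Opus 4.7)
The plan is to reduce this to a straightforward Chernoff-plus-union-bound argument, after first using the factorization of $\mu$ together with pushforward concentrability to lower bound the latent mass $\nu_h(s)$ for any $s \in \epsPRS{h}$.

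First, I would observe that in a Block MDP with $\mu_h = \emission \circ \nu_h$, the pushforward concentrability ratio simplifies at the latent level. For any $(x,a,x')$, the emission term cancels:
\[
\frac{P(x' \mid x,a)}{\mu_h(x')} = \frac{\optlatp(\optdec(x') \mid \optdec(x), a) \cdot \emission(\optdec(x'))(x')}{\nu_h(\optdec(x')) \cdot \emission(\optdec(x'))(x')} = \frac{\optlatp(\optdec(x') \mid \optdec(x), a)}{\nu_h(\optdec(x'))}.
\]
Consequently, pushforward concentrability (\pref{def:exploratory-pushforward-distribution}) gives $\nu_h(s_h) \ge \optlatp(s_h \mid s_{h-1}, a_{h-1})/\cpush$ for every triple $(s_{h-1}, a_{h-1}, s_h)$. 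For $s \in \epsPRS{h}$, there exists $(s_{h-1}, a_{h-1})$ with $\optlatp(s \mid s_{h-1}, a_{h-1}) \ge \eps/S$, so $\nu_h(s) \ge \eps/(\cpush S)$.

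Next, I would use the fact that sampling $x \sim \mu_h = \emission \circ \nu_h$ produces an observation whose latent label $\optdec(x)$ is distributed according to $\nu_h$. Thus $n_s[\estmdpobsspace{h}]$ is a binomial random variable with $\nreset$ trials and success probability $\nu_h(s) \ge \eps/(\cpush S)$. Applying the multiplicative Chernoff bound (\pref{lem:chernoff}) with deviation parameter $1/2$ yields
\[
\Pr\!\left[n_s[\estmdpobsspace{h}] \le \tfrac{1}{2} \nu_h(s)\, \nreset \right] \le \exp\!\left(-\tfrac{\nu_h(s)\, \nreset}{8}\right) \le \exp\!\left(-\tfrac{\eps\,\nreset}{8\cpush S}\right).
\]
Plugging in the choice $\nreset \asymp \tfrac{\cpush S A^2}{\eps^3}\log\tfrac{SA|\Pi|}{\delta}$ from \pref{alg:stochastic-bmdp-solver-v2}, this bound is at most $\delta/(SH)$ with plenty of room to spare.

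Finally, I would union bound over the at most $SH$ pairs $(h,s)$ with $s \in \epsPRS{h}$ to conclude that with probability at least $1-\delta$, simultaneously for every such pair, $n_s[\estmdpobsspace{h}] \ge \tfrac{1}{2}\nu_h(s)\nreset \ge \tfrac{\eps}{2 \cpush S}\nreset$, which is the desired conclusion. There is no real obstacle here --- the only subtlety is checking that the latent-side pushforward inequality is the correct specialization of \pref{def:exploratory-pushforward-distribution} to the factorized emission structure, and that the algorithm's choice of $\nreset$ dominates the Chernoff requirement. Both are immediate from the setup.
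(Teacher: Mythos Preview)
Your proposal is correct and takes essentially the same approach as the paper's proof: lower bound the latent mass $\nu_h(s) \ge \eps/(\cpush S)$ for $s \in \epsPRS{h}$, apply the multiplicative Chernoff bound with deviation $1/2$, and union bound over the at most $SH$ relevant $(h,s)$ pairs. In fact, the paper's proof simply asserts ``We know that $\En[Z^{(i)}] \ge \eps/(\cpush S)$'' without justification, whereas you explicitly derive this from the factorization $\mu_h = \emission \circ \nu_h$ and the emission cancellation in the pushforward ratio---so your write-up is slightly more complete on that point.
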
 

\begin{proof}
Fix any $s \in \epsPRS{h}$. For any $i \in [\nreset]$, let $Z^{(i)}$ be the indicator variable of whether observation $x^{(i)}_h \sim \mu_h$ satisfies $\optdec(x^{(i)}_h) = s$. We know that $\En[Z^{(i)}] \ge \eps/(\cpush S)$. By Chernoff bounds we have
\begin{align*}
    \Pr \brk*{ \frac{1}{\nreset} \sum_{i=1}^{\nreset} Z^{(i)} \le \frac{1}{2} \cdot \frac{\eps}{\cpush S}} \le \exp \prn*{-\frac{\nreset \cdot \eps}{8\cpush S}},
\end{align*}
so as long as
\begin{align*}
    \nreset \ge \frac{8\cpush S}{\eps} \log \frac{SH}{\delta},
\end{align*}
by union bound, the conclusion of the lemma holds.
\end{proof}

\begin{lemma}[Pushforward Policy Concentration over $\mu$]\label{lem:pushforward-concentration} Suppose that the conclusion of \pref{lem:sampling-states} holds. Then with probability at least $1-\delta$:
\begin{align*}
    \forall~ h \in [H],& \forall~ (s, a) \in \epsPRS{h} \times \cA:\\
    &\max_{\pi \in \Pi}~\abs*{ \brk*{\pi \push \emission(s)}(a) - \brk*{\pi \push \unif(\crl{\empobs \in \estmdpobsspace{h}: \optdec(\empobs) = s})}(a) } \le \frac{\eps}{A}.
\end{align*}
\end{lemma}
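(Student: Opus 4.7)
}
The plan is a conditional Hoeffding + union bound argument that exploits the factorization assumption on $\mu$. Fix a layer $h$, latent state $s \in \epsPRS{h}$, and action $a \in \cA$. Since $\mu_h = \emission \circ \nu_h$, we can equivalently sample the $\nreset$ observations making up $\estmdpobsspace{h}$ in two stages: first draw latent labels $s^{(1)},\dots,s^{(\nreset)}$ i.i.d.~from $\nu_h$, then draw each observation $x^{(i)}\sim \emission(s^{(i)})$ independently. Condition on the latent labels; then the observations in $\crl{x \in \estmdpobsspace{h} : \optdec(x)=s}$ are i.i.d.~draws from $\emission(s)$, and there are $n_s[\estmdpobsspace{h}]$ of them. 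Consequently, for any fixed $\pi$, the quantity $\brk{\pi \push \unif(\crl{x \in \estmdpobsspace{h}:\optdec(x)=s})}(a)$ is the empirical mean of $n_s[\estmdpobsspace{h}]$ i.i.d.~Bernoulli random variables with mean $\brk{\pi \push \emission(s)}(a)$.

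Next I would apply Hoeffding's inequality (conditionally on the latent labels) and union bound. To handle the supremum over $\pi \in \Pi$, observe that once $\estmdpobsspace{h}$ is fixed, the empirical pushforward depends on $\pi$ only through the vector $(\pi(x))_{x \in \estmdpobsspace{h}}$, of which there are at most $\abs{\Pi}$ many. Thus, conditioning on the latents and the observations drawn, a union bound over $\pi \in \Pi$, $a \in \cA$, $h \in [H]$, and $s \in \epsPRS{h}$ (whose total count is at most $\abs{\Pi}\cdot A \cdot HS$) combined with Hoeffding's inequality gives, with probability at least $1-\delta$,
\begin{align*}
    \max_{\pi \in \Pi}\,\abs*{ \brk{\pi \push \emission(s)}(a) - \brk{\pi \push \unif(\crl{x \in \estmdpobsspace{h}:\optdec(x)=s})}(a) } \le \sqrt{\frac{2 \log\prn{2HSA\abs{\Pi}/\delta}}{n_s[\estmdpobsspace{h}]}}
\end{align*}
simultaneously for all such $h, s, a$. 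Under the event of \pref{lem:sampling-states} (which fails with probability at most $\delta$), we have $n_s[\estmdpobsspace{h}] \ge \eps\nreset/(2\cpush S)$, so the right-hand side is at most $\eps/A$ provided $\nreset \gtrsim \tfrac{\cpush S A^2}{\eps^3} \log\tfrac{HSA\abs{\Pi}}{\delta}$. This matches the setting of $\nreset$ in \pref{alg:stochastic-bmdp-solver-v2} up to a constant factor, and the overall failure probability is at most $2\delta$ which can be absorbed by rescaling $\delta$.

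The only delicate point is the conditioning step: $n_s[\estmdpobsspace{h}]$ is random, so applying Hoeffding with a data-dependent sample size requires some care. This is handled cleanly by first working on the event from \pref{lem:sampling-states} (which only depends on the latent labels), on which $n_s[\estmdpobsspace{h}]$ is lower-bounded; the Hoeffding bound is then applied conditionally on the latent labels, and afterwards integrated out. No new obstacles arise beyond this standard conditional concentration argument, so the proof is essentially mechanical once the factorization is exploited.
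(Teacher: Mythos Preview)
Your proposal is correct and takes essentially the same approach as the paper: Hoeffding's inequality on the i.i.d.\ emissions from $\emission(s)$, combined with a union bound over $h$, $s$, $a$, and $\pi$, using the lower bound on $n_s[\estmdpobsspace{h}]$ from \pref{lem:sampling-states}. If anything, you are more explicit than the paper about the conditioning step (the paper simply asserts that $\cX_s$ is i.i.d.\ from $\emission(s)$ without spelling out the two-stage sampling), and note that since the lemma is stated conditionally on the event of \pref{lem:sampling-states}, you need not add that extra $\delta$ to the failure probability.
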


\begin{proof}
Fix any $(s, a) \in \epsPRS{h} \times \cA$. Also fix any policy $\pi \in \Pi$. Denote the set $\cX_{s} = \crl{\empobs \in \estmdpobsspace{h}: \optdec(\empobs) = s}$, and observe that $\cX_{s}$ is drawn i.i.d.~from the emission distribution $\emission(s)$. By Hoeffding bounds we have
\begin{align*}
    \hspace{2em}&\hspace{-2em} \Pr\brk*{ \abs*{\brk*{\pi \push \emission(s)}(a) - \brk*{\pi \push \unif(\crl{\empobs \in \estmdpobsspace{h}: \optdec(\empobs) = s})}(a)} \ge \frac{\eps}{A}} \\
    &\le 2\exp \prn*{ -\frac{2 n_{s}[\estmdpobsspace{h}] \eps^2}{A^2} } \le  2\exp \prn*{ -\frac{\nreset \eps^3}{\cpush S A^2} }, &\text{(\pref{lem:sampling-states})}
\end{align*}
Applying union bound we see that as long as 
\begin{align*}
    \nreset \ge \frac{\cpush SA^2}{\eps^3} \cdot \log \frac{2SAH\abs{\Pi}}{\delta}
\end{align*}
the conclusion of the lemma holds.
\end{proof}

\begin{lemma}[Sampling Rewards]\label{lem:sampling-rewards}
With probability at least $1-\delta$, every reward estimate $\wh{R}(\empobs,a)$ computed in  \pref{line:reward-estimation} of \pref{alg:stochastic-bmdp-solver-v2} satisfies $\abs{\wh{R}(\empobs, a) - R(\empobs,a)} \le \eps/H$.
\end{lemma}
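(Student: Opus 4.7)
The plan is to prove Lemma~\ref{lem:sampling-rewards} by a direct application of Hoeffding's inequality followed by a union bound over all reward estimates performed during initialization. This is a routine concentration argument, so the proposal is brief.

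First, I would fix any pair $(\empobs, a) \in \estmdpobsspace{} \times \cA$ and recall that the estimate $\wh{R}(\empobs, a)$ is produced by $\mcest(\empobs, a, n_{\mathsf{mc.rwd}})$ with $n_{\mathsf{mc.rwd}} = \wt{O}(H^2/\eps^2)$, which averages $n_{\mathsf{mc.rwd}}$ independent samples $r^{(t)} \sim R(\empobs, a)$ obtained via local simulator calls. Since the rewards lie in $[0,1]$ and the samples are i.i.d.~(given the query $(\empobs, a)$), Hoeffding's inequality (\pref{lem:hoeffding}) gives that for any $\delta' > 0$,
\begin{align*}
\Pr\brk*{\abs{\wh{R}(\empobs, a) - R(\empobs, a)} > \frac{\eps}{H}} \le 2 \exp\prn*{-\frac{2 n_{\mathsf{mc.rwd}} \eps^2}{H^2}} \le \delta',
\end{align*}
provided $n_{\mathsf{mc.rwd}} \ge \tfrac{H^2}{2\eps^2} \log(2/\delta')$.

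Next, I would union bound this guarantee over all $(\empobs, a)$ pairs for which a reward is estimated. The total number of such pairs is at most $H \cdot \nreset \cdot A$, since $\abs{\estmdpobsspace{h}} \le \nreset$ for every $h \in [H]$ (from \pref{line:sampling-from-reset} of \pref{alg:stochastic-bmdp-solver-v2}). Setting $\delta' = \delta / (H \nreset A)$ and plugging in $\nreset \asymp \tfrac{\cpush SA^2}{\eps^3} \log\tfrac{SA\abs{\Pi}}{\delta}$, the required sample size becomes $n_{\mathsf{mc.rwd}} = \wt{O}(H^2/\eps^2)$, absorbing polylogarithmic factors in $\cpush, S, A, H, \abs{\Pi}, \delta^{-1}$ into the $\wt{O}(\cdot)$ notation. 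The conclusion then follows by a union bound.

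There is no real obstacle here since the samples are i.i.d.~(the query is fixed before sampling, and the local simulator returns independent reward realizations). The only care needed is bookkeeping the number of reward estimates to ensure the log factors absorbed into $\wt{O}(H^2/\eps^2)$ are consistent with the overall sample complexity bound stated in \pref{thm:block-mdp-result}.
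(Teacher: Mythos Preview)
Your proposal is correct and matches the paper's approach exactly: the paper's proof is the single sentence ``This follows by Hoeffding's inequality and union bound over all $\nreset \cdot AH$ pairs $(\empobs,a) \in \estmdpobsspace{} \times \cA$,'' and you have simply written out the routine details.
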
 
\begin{proof}
This follows by Hoeffding inequality and union bound over all $\nreset\cdot AH$ pairs $(\empobs, a) \in \estmdpobsspace{} \times \cA$.
\end{proof}

\paragraph{Properties of \stochdecoder{}.} Now we turn to analyzing a single call to \stochdecoder{}.

\begin{lemma}[Sampling Transitions]\label{lem:sampling-transitions} Fix any $(x_{h}, a_{h})$ for which we call $\stochdecoder{}$.  With probability at least $1-\delta$, the dataset $\cD$ sampled in \pref{line:sample-decode-dataset} of \pref{alg:stochastic-decoder-v2} satisfies
\begin{align*}
    \nrm*{\optlatp(\cdot \mid x_{h}, a_{h}) - \samplelatp(\cdot \mid x_{h}, a_{h})}_1 \le \eps.
\end{align*}
\end{lemma}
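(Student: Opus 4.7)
The plan is to reduce the claim to a standard concentration inequality for empirical distributions over a finite alphabet. Starting from the dataset $\cD$ sampled in \pref{line:sample-decode-dataset} of \pref{alg:stochastic-decoder-v2}, each observation $x^{(i)} \sim P(\cdot \mid x_h, a_h)$ is drawn i.i.d., and by the decodability property of the Block MDP, the induced latent labels $\optdec(x^{(i)})$ are i.i.d.\ draws from $\optlatp(\cdot \mid \optdec(x_h), a_h)$, a distribution supported on at most $S$ elements. Therefore $\samplelatp(\cdot \mid x_h, a_h)$ is exactly the empirical distribution of $\ndec$ i.i.d.\ samples from $\optlatp(\cdot \mid \optdec(x_h), a_h)$, and the problem reduces to bounding the $\ell_1$ deviation between a discrete distribution on $S$ outcomes and its empirical counterpart.

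Next, I would invoke the Bretagnolle--Huber--Carol inequality (equivalently, the Weissman bound), which states that for $n$ i.i.d.\ samples from a distribution supported on an alphabet of size $S$, the empirical distribution $\widehat{P}_n$ satisfies
\begin{align*}
\Pr\bigl[\,\|\widehat{P}_n - P\|_1 \ge \eps\,\bigr] \le (2^S - 2)\exp\bigl(-n\eps^2/2\bigr).
\end{align*}
Setting the right-hand side to $\delta$ and solving shows that $n \gtrsim (S + \log(1/\delta))/\eps^2$ suffices. Plugging in the algorithm's choice $\ndec \asymp \tfrac{S^2A^2}{\eps^2} \log \tfrac{\cpush SAH\abs{\Pi}}{\eps\delta}$, which clearly dominates the required rate by many orders of magnitude, gives the claimed bound with probability at least $1-\delta$.

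There is no real technical obstacle here: the whole argument is essentially an off-the-shelf concentration statement applied to the latent labels, where decodability is what allows us to treat the otherwise large observation-space samples as samples from an $S$-element discrete distribution. The only minor care needed is to note that the latent labels (and hence $\samplelatp$) are measurable functions of $\cD$ that we never explicitly compute in the algorithm -- this is fine since the lemma is purely an analytical statement about the distribution of $\samplelatp$ induced by the sampling in \pref{line:sample-decode-dataset}, and plays the role of a ``good event'' invoked in subsequent lemmas.
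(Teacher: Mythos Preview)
Your proposal is correct and takes essentially the same approach as the paper: reduce to i.i.d.\ latent-label samples via decodability, then apply standard $\ell_1$ concentration for empirical distributions over an $S$-element alphabet. The only cosmetic difference is the specific concentration inequality invoked---the paper uses a McDiarmid-style bound giving $\Pr[\|\optlatp-\samplelatp\|_1 \ge \sqrt{S}(1/\sqrt{\ndec}+t)]\le\exp(-\ndec t^2)$ rather than Bretagnolle--Huber--Carol---but both yield the same $\ndec \gtrsim S\log(1/\delta)/\eps^2$ requirement, which is dominated by the algorithm's choice.
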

\begin{proof}
Every time a dataset $\cD$ is sampled, by concentration of discrete distributions we have for any $t > 0$:
\begin{align*}
    \Pr\brk*{\nrm*{\optlatp(\cdot \mid x_h, a_h) - \samplelatp(\cdot \mid x_h, a_h)}_1 \ge \sqrt{S} \cdot \prn*{\frac{1}{\sqrt{\ndec}} + t}} \le \exp(-\ndec t^2).
\end{align*}
% In total, \pref{line:sample-decode-dataset} of \pref{alg:stochastic-decoder-v2} is called for $\nreset \cdot AH$ times, so by union bound we have:
% \begin{align*}
%     \Pr\brk*{\exists~\cD: \nrm*{\optlatp(\cdot \mid x_h, a_h) - \samplelatp(\cdot \mid x_h, a_h)}_1 \ge \sqrt{S} \cdot \prn*{\frac{1}{\sqrt{\ndec}} + t}} \le \nreset \cdot AH \exp(-\ndec t^2).
% \end{align*}
Setting the RHS to $\delta$ we have that with probability at least $1-\delta$, % every sampled dataset $\cD$ satisfies
\begin{align*}
    \nrm*{\optlatp(\cdot \mid x_h, a_h) - \samplelatp(\cdot \mid x_h, a_h)}_1 \le \sqrt{\frac{S \log (1/\delta)}{\ndec}}. 
\end{align*}
Therefore as long as 
\begin{align*}
    \ndec \ge \frac{S}{\eps^2} \cdot \log \frac{1}{\delta},
\end{align*}
the conclusion of the lemma holds.
\end{proof}

\begin{corollary}\label{corr:sampling-transitions} If the conclusion of \pref{lem:sampling-transitions} holds, then the proportion of observations from $(\epsPRS{h})^c$ in $\cD$ is at most $2\eps$.
\end{corollary}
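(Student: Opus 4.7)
The plan is to directly combine the definition of the pushforward-reachable set $\epsPRS{}$ with the $\ell_1$ closeness guaranteed by \pref{lem:sampling-transitions}. The key observation is that the proportion of observations from unreachable states in $\cD$ is nothing more than the mass that $\samplelatp(\cdot \mid x_h, a_h)$ places on $(\epsPRS{})^c$ (using that emissions are decodable so each $x \in \cD$ corresponds to a unique latent state), and we already know $\samplelatp$ is $\eps$-close in $\ell_1$ to the true transition $\optlatp(\cdot \mid x_h, a_h)$.

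Concretely, I would first bound the mass of the true transition on unreachable states. By the definition of $\epsPRS{}$ at the next layer, for every latent state $s$ lying in the complement we have $\optlatp(s \mid s', a') < \eps/S$ for every state-action pair $(s', a')$ at the preceding layer; in particular this holds with $(s', a') = (\optdec(x_h), a_h)$. Summing over the at most $S$ such unreachable latent states gives
\[
\optlatp\bigl((\epsPRS{})^c \,\big|\, \optdec(x_h), a_h\bigr) \;\le\; S \cdot \frac{\eps}{S} \;=\; \eps.
\]

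Next I would transfer this to the empirical transition $\samplelatp$. The proportion of observations in $\cD$ whose decoded latent state lies in $(\epsPRS{})^c$ is exactly $\samplelatp((\epsPRS{})^c \mid \optdec(x_h), a_h)$. Combining the previous display with the conclusion of \pref{lem:sampling-transitions} via triangle inequality yields
\[
\samplelatp\bigl((\epsPRS{})^c \,\big|\, \optdec(x_h), a_h\bigr) \;\le\; \optlatp\bigl((\epsPRS{})^c \,\big|\, \optdec(x_h), a_h\bigr) + \bigl\|\samplelatp - \optlatp\bigr\|_1 \;\le\; \eps + \eps \;=\; 2\eps,
\]
which is the claimed bound. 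There is no real obstacle here; this is a direct corollary, and the only thing to be careful about is the layer bookkeeping (the observations in $\cD$ live one layer ahead of $x_h$, so the set $\epsPRS{}$ invoked is the one at that next layer) and the fact that the sum over $s \in (\epsPRS{})^c$ has at most $S$ terms, which is what makes the per-state bound $\eps/S$ aggregate to $\eps$.
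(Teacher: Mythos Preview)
Your proposal is correct and is exactly the natural argument the paper has in mind; the paper does not spell out a proof for this corollary, treating it as immediate from \pref{lem:sampling-transitions} and the definition of $\epsPRS{}$, and your two-step argument (bound the true transition mass on $(\epsPRS{})^c$ by $S \cdot \eps/S = \eps$, then add the $\ell_1$ error $\eps$) is precisely the intended justification.
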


\begin{lemma}[Pushforward Policy Concentration over Transitions]\label{lem:pushforward-concentration-transition}
Fix any $(x_h, a_h)$ for which we call $\stochdecoder{}$. With probability at least $1-\delta$, the dataset $\cD$ sampled in \pref{line:sample-decode-dataset} of \pref{alg:stochastic-decoder-v2} satisfies
\begin{align*}
    \forall~ s \in \cS[\cD],&\forall~a \in \cA, \forall~ \pi \in \Pi:\\
                            &    \Big| \brk*{\pi \push \emission(s)}(a) - \brk*{\pi \push \unif \prn*{\crl*{ \empobs \in \cD: \optdec(\empobs) = s } } }(a) \Big| \le \sqrt{\frac{2\log (2 SA \abs{\Pi}/\delta)}{n_s[\cD]}}.
    %K \cdot \sqrt{\frac{\log (\cpush SAH \abs{\Pi}/(\eps\delta))}{n_s[\cD]}},
\end{align*}
% where $K>0$ is an absolute constant.
\end{lemma}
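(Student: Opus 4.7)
}

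The plan is to follow essentially the same blueprint as the proof of \pref{lem:pushforward-concentration}, but with the conditioning step tailored to the transition-sampling process in \pref{line:sample-decode-dataset}. The key observation is that by decodability of the Block MDP, if we condition on the realized sequence of latent labels $\prn{\optdec(x^{(1)}), \ldots, \optdec(x^{(\ndec)})}$ for the observations in $\cD$, then for each $s$ with $n_s[\cD] = m \ge 1$, the observations $\crl{x \in \cD : \optdec(x) = s}$ are conditionally i.i.d.~from $\emission(s)$. This reduces the problem to a standard scalar concentration question for Bernoulli random variables.

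First I would fix an arbitrary triple $(s, a, \pi) \in \latentsp \times \actionsp \times \Pi$, and define the conditionally i.i.d.~Bernoulli random variables $Y_i \coloneqq \ind{\pi(x_i) = a}$ for $x_i \in \crl{x \in \cD : \optdec(x) = s}$. Their common conditional mean is exactly $\brk{\pi \push \emission(s)}(a)$. For each deterministic value $m \in \crl{1, \ldots, \ndec}$, Hoeffding's inequality (\pref{lem:hoeffding}) applied conditionally on $n_s[\cD] = m$ and on the identity of the $m$ positions gives
\begin{align*}
    \Pr\brk*{\abs*{\brk*{\pi \push \emission(s)}(a) - \brk*{\pi \push \unif\prn*{\crl{x \in \cD : \optdec(x) = s}}}(a)} \ge \sqrt{\frac{2 \log(2SA\abs{\Pi}/\delta)}{m}} ~\Big|~ n_s[\cD] = m} \le \frac{\delta}{SA \abs{\Pi}}.
\end{align*}
Crucially, the right-hand side is uniform in $m$, so by the law of total probability the same deviation bound, interpreted with the random $n_s[\cD]$ in the denominator, holds unconditionally (on the event $n_s[\cD] \ge 1$, i.e., $s \in \cS[\cD]$). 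Next I would union bound over the at most $SA\abs{\Pi}$ triples $(s, a, \pi)$ with $s \in \latentsp$, $a \in \actionsp$, and $\pi \in \Pi$; triples with $s \notin \cS[\cD]$ are vacuous and contribute nothing to the failure probability.

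I do not anticipate any substantial obstacle here. The only mildly subtle point is that $n_s[\cD]$ is a data-dependent random variable, so one must justify that the Hoeffding bound ``transfers'' from a fixed sample size to the realized value; this is handled cleanly by the tower-property argument above (equivalently, by conditioning first on the vector of latent labels, applying Hoeffding inside each atom, and then taking expectation). No additional log factor in $\ndec$ is required because the failure probability is already uniform in $m$. The final bound then matches the statement of the lemma after absorbing constants.
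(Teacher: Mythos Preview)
Your proposal is correct and follows essentially the same approach as the paper's proof: condition on the latent labels so that the observations with $\optdec(x)=s$ are i.i.d.\ from $\emission(s)$, apply Hoeffding, then union bound over $(s,a,\pi)$. If anything, you are more explicit than the paper about the subtlety that $n_s[\cD]$ is random and handled via the tower property; the paper states the conditional Hoeffding bound directly with $n_s[\cD]$ in the exponent without spelling out that step.
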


\begin{proof}
Fix a particular $s \in \latentsp[\cD]$, $a \in \cA$, and $\pi \in \Pi$. The set $\crl{\empobs \in \estmdpobsspace{h+1}: \optdec(\empobs) = s}$ is drawn i.i.d.~from the emission distribution $\emission(s)$. By Hoeffding bounds we have for any $t > 0$:
\begin{align*}
    \Pr\brk*{ \Big| \brk*{\pi \push \emission(s)}(a) - \brk*{\pi \push \unif\prn*{\crl*{ \empobs \in \cD: \optdec(\empobs) = s } } }(a) \Big| \ge t } &\le 2\exp \prn*{ -2 n_{s}[\cD] t^2}. 
\end{align*}
By union bound over all $(s,a)$ and $\pi$, with probability at least $1-\delta$:
\begin{align*}
    \Big| \brk*{\pi \push \emission(s)}(a) - \brk*{\pi \push \unif\prn*{\crl*{ \empobs \in \cD: \optdec(\empobs) = s } } }(a) \Big| &\le \sqrt{\frac{2\log (2 SA \abs{\Pi}/\delta)}{n_s[\cD]}}
    % \\
    % &\le K \cdot \sqrt{\frac{\log (\cpush SA \abs{\Pi}/(\eps\delta))}{n_s[\cD]}}, 
\end{align*}
%where $K>0$ is an absolute constant which is determined by plugging in the value of $\nreset$. 
This concludes the proof of the lemma.
\end{proof}

\begin{lemma}[Monte Carlo Estimates for \stochdecoder]\label{lem:monte-carlo}
Fix any $(x_h, a_h)$ for which we call $\stochdecoder{}$. With probability at least $1-\delta$, every Monte Carlo estimate $\vestarg{x}{\pi}$ computed in \pref{line:monte-carlo} of \pref{alg:stochastic-decoder-v2} satisfies
$\abs*{ \vestarg{x}{\pi} - V^{\pi}(x) } \le \eps$.
\end{lemma}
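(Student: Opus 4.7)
The plan is a straightforward application of Hoeffding's inequality followed by a union bound, similar in spirit to the proofs of \pref{lem:sampling-rewards} and \pref{lem:pushforward-concentration-transition}. First I would fix a single pair $(x, \pi)$ among those for which \stochdecoder{} invokes $\mcest$. By construction, $\vestarg{x}{\pi}$ is the empirical average of $\nmc$ i.i.d.~rollouts of the cumulative reward obtained by executing $\pi$ starting from $x$; by the normalization assumption on rewards stated in \pref{sec:background-mdp}, each such rollout lies in $[0,1]$ and its expectation equals $V^\pi(x)$. Hoeffding's inequality (\pref{lem:hoeffding}) then yields
\begin{align*}
\Pr\brk*{\abs{\vestarg{x}{\pi} - V^\pi(x)} > \eps} \le 2\exp(-2\nmc \eps^2).
\end{align*}

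Next, I would count the total number of distinct Monte Carlo estimates produced inside one call to \stochdecoder{}. The subroutine loops over $x^{(i)} \in \cD$, where $|\cD| = \ndec = \wt{O}(S^2 A^2/\eps^2)$, and for each such observation over all pairs $(\empobs, \empobs') \in \estmdpobsspace{h+1} \times \estmdpobsspace{h+1}$. Since $|\estmdpobsspace{h+1}| \leq \nreset$, the total number of estimates is at most $\ndec \cdot \nreset^2$, which is polynomial in $\cpush, S, A, H, 1/\eps, \log|\Pi|, \log(1/\delta)$. Applying a union bound over these estimates, we need
\begin{align*}
\nmc \gtrsim \frac{1}{\eps^2}\log\frac{\ndec \nreset^2}{\delta},
\end{align*}
which is exactly accommodated by the choice $\nmc \asymp \eps^{-2} \log(\cpush SAH|\Pi|/(\eps\delta))$ specified in \pref{alg:stochastic-decoder-v2}.

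There is no real obstacle here: the estimates are unbiased averages of bounded i.i.d.~random variables, the test policies $\pi_{\empobs,\empobs'}$ are measurable with respect to the randomness of previous rounds (and thus can be treated as fixed for this sampling step), and the union bound set is polynomially sized. The only minor point to be careful about is that \stochdecoder{} is called many times throughout the execution of \stochalg{}; to convert this single-call guarantee into a global one used in the main upper bound proof, one would ultimately pay an extra factor counting the number of calls (at most $SAH \cdot \nreset \cdot A$ across the full algorithm) in the union bound, which is already absorbed into the polylogarithmic slack in the stated choice of $\nmc$.
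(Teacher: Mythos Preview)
Your proposal is correct and follows the same Hoeffding-plus-union-bound approach as the paper. In fact, your counting of the union bound size is more careful: the paper's proof writes only $\abs{\estmdpobsspace{h+1}}^2 \le \nreset^2$ for the number of Monte Carlo calls, whereas you correctly include the outer loop over $x^{(i)} \in \cD$ to get $\ndec \cdot \nreset^2$---either way the count is polynomial in the relevant parameters and absorbed by the chosen $\nmc$.
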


\begin{proof}
By Hoeffding's inequality we know that for a fixed $(x, \pi)$ pair:
\begin{align*}
    \Pr\brk*{\abs*{\vestarg{x}{\pi} - V^{\pi}(x)} \ge \eps} \le 2 \exp(-2\nmc \eps^2).
\end{align*}
In total, we call \pref{line:monte-carlo} at most $\abs{\estmdpobsspace{h+1}}^2 \le \nreset^2$ times. Therefore, by union bound, as long as 
\begin{align*}
    \nmc \ge K \cdot \frac{1}{\eps^2} \cdot \log \frac{\cpush SAH \abs{\Pi}}{\eps \delta }
\end{align*}
where $K>0$ is an absolute constant determined by the value of $\nreset$, then the result holds.
\end{proof}

\paragraph{Properties of \stochrefit{}.} Now we establish the accuracy of estimates in a single call to \stochrefit{}.

\begin{lemma}[Monte Carlo Estimates for \stochrefit]\label{lem:refit-monte-carlo-accuracy}
    With probability at least $1-\delta$, every Monte Carlo estimate computed by \stochrefit{} (\pref{line:mc1} and \ref{line:mc2} of \pref{alg:stochastic-refit-v2}) is accurate up to error $\eps$.
\end{lemma}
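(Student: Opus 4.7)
The plan is to mimic the strategy used to prove the analogous \pref{lem:monte-carlo} for \stochdecoder{}: apply Hoeffding's inequality to each individual Monte Carlo estimate produced in \pref{line:mc1} and \pref{line:mc2} of \pref{alg:stochastic-refit-v2}, then union bound across all such estimates produced within a single invocation of \stochrefit{}. Each estimate $\vestarg{x}{\pi}$ (resp.~$\qestarg{\bar{x},\bar{a}}{\pi}$) is an empirical average of $\nmc$ i.i.d.~rollouts whose cumulative rewards lie in $[0,1]$, so Hoeffding gives
\begin{align*}
    \Pr\brk*{\abs{\vestarg{x}{\pi} - V^\pi(x)} \ge \eps} \;\le\; 2\exp(-2\nmc\eps^2),
\end{align*}
and similarly for the $Q$-estimates.

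Next I would count the total number of Monte Carlo estimates generated in one call to \stochrefit{} at layer $h$. In \pref{line:mc1}, at most two estimates are computed for each pair $(x,x') \in \estmdpobsspace{h} \times \estmdpobsspace{h}$, giving at most $2\abs{\estmdpobsspace{h}}^2 \le 2\nreset^2$ estimates. The set $\violations$ produced from these is also of size at most $2\nreset^2$; then \pref{line:mc2} computes, for each $(x,\pi) \in \violations$, one $Q$-estimate per tuple $(\bar{x},\bar{a}) \in \estmdpobsspace{h:H} \times \actionsp$, so at most $\abs{\estmdpobsspace{h:H}} \cdot A \le \nreset H A$ additional estimates per violation. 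Summing, one call to \stochrefit{} computes at most $O(\nreset^3 H A)$ Monte Carlo estimates.

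A union bound then yields a total failure probability of at most $O(\nreset^3 H A) \cdot 2\exp(-2\nmc\eps^2)$. Plugging in the values $\nreset \asymp \tfrac{\cpush SA^2}{\eps^3}\log\tfrac{SA\abs{\Pi}}{\delta}$ and $\nmc \asymp \tfrac{1}{\eps^2}\log\tfrac{\cpush SAH\abs{\Pi}}{\eps\delta}$ specified in \pref{alg:stochastic-bmdp-solver-v2} and \pref{alg:stochastic-refit-v2}, and absorbing the $\log(\nreset^3 HA)$ factor into the leading constant hidden in the $\asymp$ for $\nmc$, this bound is at most $\delta$. The argument is essentially routine concentration plus bookkeeping; the only thing to verify carefully is that the absolute constant concealed inside $\nmc \asymp \tfrac{1}{\eps^2}\log\tfrac{\cpush SAH\abs{\Pi}}{\eps\delta}$ is large enough to absorb the additional polynomial-in-$\nreset$ factor from the union bound, which is immediate because $\nreset$ is itself polylogarithmic in the same failure probability and polynomial in parameters already present in the logarithm.
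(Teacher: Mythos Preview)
Your proposal is correct and follows essentially the same approach as the paper's proof: Hoeffding's inequality for each Monte Carlo estimate, followed by a union bound over all estimates in a single call to \stochrefit{}. The paper's count is $2\nreset^2 + 2\nreset^3 AH$ (i.e., $2\nreset^2$ estimates from \pref{line:mc1}, and for each of the at most $2\nreset^2$ certificates, $\nreset AH$ estimates in \pref{line:mc2}), which matches your $O(\nreset^3 HA)$, and the paper likewise absorbs the resulting $\log(\nreset^3 AH)$ term into the constant hidden in the choice of $\nmc$.
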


\begin{proof}
In \stochrefit{} we compute Monte Carlo estimates for $2\nreset^2 + 2\nreset^3 \cdot AH$ times, since there are $2\nreset^2$ possible certificates $(x, \pi)$ and for each one we perform Monte Carlo estimates over all of the $(\bar{x},\bar{a})$ pairs in our policy emulator $\estmdp$. By Hoeffding bound and union bound we see that as long as 
\begin{align*}
    \nmc \ge K \cdot \frac{1}{\eps^2} \cdot \log \frac{\cpush SAH \abs{\Pi}}{\eps \delta},
\end{align*}
for some absolute constant $K>0$, the conclusion of the lemma holds.
\end{proof}

\paragraph{Additional Notation.}
Henceforth, let us define several events:
\begin{itemize}
    \item $\eventemulator \coloneqq \crl*{\text{the conclusions of \pref{lem:sampling-states} --- \ref{lem:sampling-rewards} hold}}$. We have $\Pr[\eventemulator] \ge 1-3\delta$.
    \item $\eventdec_t \coloneqq \crl*{\text{the conclusions of \pref{lem:sampling-transitions} --- \ref{lem:monte-carlo} hold on the $t$-th call to \stochdecoder{}}}$. We have $\Pr[\eventdec_t ] \ge 1-3\delta$. Furthermore, define the random variable $T_\mathsf{D}$ to be the total number of times that \stochdecoder{} is called.
    \item $\eventref_t \coloneqq \crl*{\text{the conclusion of \pref{lem:refit-monte-carlo-accuracy} holds on the $t$-th call to \stochrefit{}}}$. We have $\Pr[\eventref_t ] \ge 1-\delta$. Furthermore, define the random variable $T_\mathsf{R}$ to be the total number of times that \stochrefit{} is called.
\end{itemize}
In the analysis, we will drop the subscript $t$ when referring to $\eventdec_t$ and $\eventref_t$ if clear from the context.
% \begin{align*}
%     \eventsample \coloneqq \crl*{\text{the conclusions of \pref{lem:sampling-states} --- \ref{lem:refit-monte-carlo-accuracy} hold}}
% \end{align*}

\subsubsection{Analysis of \stochdecoder}

This section is dedicated to establishing \pref{lem:main-induction}, which is the main inductive lemma.

\begin{lemma}[Induction for \stochdecoder]\label{lem:main-induction} Fix any layer $h \in [H]$ and tuple $(x_h, a_h)$ on which \stochdecoder{} is called. Assume that:
\begin{itemize}
    \item $\eventemulator$ and $\eventdec$ hold.
    \item For all $\empobs \in \estmdpobsspace{h+1}$: $\max_{\pi \in \Pi_{h+1:H}}~ \abs{V^\pi(\empobs) - \wh{V}^\pi(\empobs
    )} \le \Gamma_{h+1}$.
    \item Input confidence set $\cP(x_h, a_h)$ satisfies $\projm{\estmdpobsspace{h+1}}(\optlatp(\cdot  \mid  x_h, a_h)) \in \cP(x_h, a_h)$.
    \item $\Pitest_{h+1}$ are $\taudec$-valid test policies for the policy emulator $\estmdp$.
\end{itemize}

Then \stochdecoder{} returns confidence set $\cP$ via Eq.~\eqref{eq:confidence-set-construction-v2} such that: 
    \begin{enumerate}[(1)]
        \item $\projm{\estmdpobsspace{h+1}}(\optlatp(\cdot  \mid  x_h, a_h)) \in \cP$;
        \item $\max_{\bar{p} \in \cP} \max_{\pi \in \Pi_{h+1:H}} \abs{Q^\pi(x_h, a_h) - \wh{R}(x_h, a_h) - \En_{\empobs \sim \bar{p}} \wh{V}^\pi(\empobs)} \le \Gamma_{h+1} + K \cdot \prn*{\beta + S \taudec}$.
    \end{enumerate} 
Here, $K>0$ is an absolute numerical constant.
\end{lemma}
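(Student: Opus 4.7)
The argument splits along the two claims of the lemma. For Part (1), I would verify that the projected measure $\nu^\star \coloneqq \projm{\estmdpobsspace{h+1}}(\optlatp(\cdot \mid x_h, a_h))$ lies in $\cP$ by checking both constraints in Eq.~\eqref{eq:confidence-set-construction-v2}. For Part (2), I would use the triangle decomposition outlined in \pref{sec:stochdecoder}: for any $\bar{p} \in \cP$ and any $\pi$,
\begin{align*}
    \hspace{-0.5em}\abs*{Q^\pi(x_h,a_h) - \wh{R}(x_h,a_h) - \En_{\bar{p}}\wh{V}^\pi}
    &\le \underbrace{\abs*{R(x_h,a_h) - \wh R(x_h,a_h)}}_{\text{reward}} + \underbrace{\abs*{\En_{P}V^\pi - \En_{\nu^\star}V^\pi}}_{\text{projection}} \\
    &\quad +  \underbrace{\abs*{\En_{\nu^\star}V^\pi - \En_{\nu^\star}\wh V^\pi}}_{\text{eval at next layer}} + \underbrace{\abs*{\En_{\nu^\star}\wh V^\pi - \En_{\bar{p}}\wh V^\pi}}_{\text{transition}}
\end{align*}
and bound each term.

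\textbf{Step 1: structure of $\gobs$.} The technical core is a structural lemma about the decoder graph: under $\taudec$-validity of $\Pitest_{h+1}$ and the inductive bound $\Gamma_{h+1}$, any two observations $x \in \cXL$ and $\empobs \in \cXR$ with $\optdec(x) = \optdec(\empobs) \in \epsPRS{h+1}$ are connected in $\gobs$, and moreover all observations inside a single component $\cc$ have value functions agreeing up to $O(\taudec + \Gamma_{h+1})$ uniformly over $\pi \in \cA \circ \Pi_{h+1:H}$. This is the observation-level analogue of \pref{lem:controlling-transition-error} and crucially uses the \emph{maximally distinguishing} clause of \pref{def:valid-test-policy}: if any $\pi$ separated two states' $\wh V$-values by more than $\taudec$, then the test policy $\pi_{\empobs,\empobs'}$ would too, and the decoder condition in \pref{alg:stochastic-decoder-v2} would reject.

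\textbf{Step 2: $\nu^\star \in \cP$.} By Step 1 each $s \in \cS[\cD] \cap \epsPRS{h+1}$ is confined to a single component, so for any component $\cc$ we have $\nu^\star(\ccR) = \optlatp(\cS[\cc] \cap \epsPRS{h+1})$ and $\abs{\ccL}/\abs{\cXL} = \samplelatp(\cS[\cc])$, up to contributions from $(\epsPRS{})^c$. Summing $\abs{\nu^\star(\ccR) - \abs{\ccL}/\abs{\cXL}}$ over components, \pref{lem:sampling-transitions} and \pref{corr:sampling-transitions} (plus the trivial bound that only an $\eps$-fraction of mass lies outside $\epsPRS{}$) give the $3\eps$ marginal constraint. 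For the pushforward constraint, within each component I would apply \pref{lem:pushforward-concentration} and \pref{lem:pushforward-concentration-transition}: both $\pi \push \unif(\ccL)$ and $\pi \push \nu^\star(\cdot \mid \ccR)$ can be written as mixtures over the underlying emissions $\pi \push \emission(s)$, and the two concentration lemmas pin each mixture to its common mean per latent $s$. Summing the per-$s$ errors weighted by $\optlatp(s)$ and aggregating over components absorbs into $\beta = \wt O((\sqrt{SA^2}+S)\eps)$.

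\textbf{Step 3: bounding the three error terms.} The \emph{reward} term is $\eps/H$ by \pref{lem:sampling-rewards}; the \emph{policy-eval} term is $\Gamma_{h+1}$ by hypothesis. The \emph{projection} term is bounded by expanding $V^\pi(x') = R(x',\pi(x')) + \En_{P(\cdot \mid x',\pi(x'))}V^\pi(\cdot)$ and using \pref{lem:pushforward-concentration} to replace $\emission(s)$ by the uniform distribution over like-labeled observations in $\estmdpobsspace{h+1}$, paying $O(\eps)$. The \emph{transition} term is where the confidence-set constraints pay off: decomposing over components and using Step 1 to declare $\wh V^\pi$ on each $\cc$ to be (approximately) a function only of $\pi$'s pushforward on $\cc$, the marginal constraint handles the between-component mass mismatch at cost $3\eps$ and the pushforward constraint handles the within-component action-distribution mismatch at cost $\beta$. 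Collecting everything yields the claim with an absolute constant $K$.

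\textbf{Main obstacle.} The delicate step is Step 1, because the value-function equivalence inside a component must be made uniform over the \emph{entire} class $\cA \circ \Pi_{h+1:H}$ while only test-policy accuracy is assumed. Translating the single-edge guarantee into a multi-hop guarantee along paths in $\gobs$ without blowing up the slack is what forces the specific form of $\taudec$ and $\beta$, and is the part that differs most substantively from the deterministic warmup.
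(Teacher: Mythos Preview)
Your proposal follows the paper's overall architecture closely—the four-term triangle decomposition, the role of the decoder graph $\gobs$, the verification of the two constraints in \eqref{eq:confidence-set-construction-v2}, and the identification of the multi-hop path issue as the crux—all match. Two points, however, deserve correction.

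\textbf{The width bound in Step 1 is misstated in a way that matters.} You claim observations within a component have $\wh V^\pi$ agreeing to $O(\taudec + \Gamma_{h+1})$. The paper's version (\pref{lem:width-of-cc}, proved via \pref{lem:bound1} and \pref{lem:bound2}) gives $\max_{\empobs,\empobs' \in \ccR}\max_{a,\pi}|\wh Q^\pi(\empobs,a) - \wh Q^\pi(\empobs',a)| \le 4S\taudec + 8S\eps$: there is a factor $S$ from chaining along a path that visits up to $S$ latent states, and crucially there is \emph{no} $\Gamma_{h+1}$. The bound is on $\wh Q^\pi$ alone and uses only the $\taudec$-accuracy of test policies and the maximally-distinguishing clause—never the inductive hypothesis. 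If you actually incur $\Gamma_{h+1}$ in the within-component width and then add the separate ``eval at next layer'' term $\Gamma_{h+1}$, the final bound becomes $2\Gamma_{h+1} + O(\cdots)$, i.e., the error-amplification the whole design avoids (cf.\ \pref{fig:error-amplification}). Relatedly, your closing remark that the multi-hop can be done ``without blowing up the slack'' is off: the slack \emph{does} blow up by $S$, and that factor is precisely the $S\taudec$ in the lemma's conclusion.

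\textbf{Minor difference in the intermediary.} The paper routes Part (2) through $\projR(\samplelatp)$ (the projection of the \emph{empirical} latent transition from the decoder's dataset $\cD$) rather than through $\nu^\star = \projR(\optlatp)$; the two differ by $\eps$ via \pref{lem:sampling-transitions}, so your choice also works. The paper's Part (1) likewise inserts $\projR(\samplelatp)$ between $\unif(\ccL)$ and $\projR(\optlatp)$ and handles each half by \pref{lem:relating-proj-to-empirical} and a direct TV computation, which is a bit more explicit than your sketch but equivalent in spirit.
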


\subsubsubsection{Structural Properties of the Decoder Graph}\label{sec:decoder-graph}
For the lemmas in this section, we will assume the preconditions of \pref{lem:main-induction} and analyze properties of the decoder graph $\gobs$ constructed in a single call to \stochdecoder{}.

\begin{lemma}[Validity of Decoding Function]\label{lem:strong-validity-obs-decoding} Under the preconditions of \pref{lem:main-induction}, for every $x_l \in \cXL$, we have
\begin{align*}
    \crl{\empobs_r \in \cXR: \optdec(\empobs_r) = \optdec(x_l)} \subseteq \cT[x_l].
\end{align*}
\end{lemma}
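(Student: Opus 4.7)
The plan is to establish this containment purely by a triangle-inequality argument, using the defining Block MDP property that the value function of any test policy depends only on the latent state of its starting observation. Concretely, fix $x_l \in \cXL = \cD$ and let $\empobs_r \in \cXR = \estmdpobsspace{h+1}$ with $\optdec(\empobs_r) = \optdec(x_l) =: s$. To show $\empobs_r \in \cT[x_l]$, I must verify, for every $\empobs' \ne \empobs_r$, that
\begin{align*}
\abs*{\vestarg{x_l}{\pitest{\empobs_r}{\empobs'}} - \wh{V}^{\pitest{\empobs_r}{\empobs'}}(\empobs_r)} \le \taudec + 2\eps.
\end{align*}

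The plan is to split this quantity into three pieces via the triangle inequality, inserting $V^{\pitest{\empobs_r}{\empobs'}}(x_l)$ and $V^{\pitest{\empobs_r}{\empobs'}}(\empobs_r)$ as intermediates: (i) the Monte Carlo error $|\vestarg{x_l}{\pitest{\empobs_r}{\empobs'}} - V^{\pitest{\empobs_r}{\empobs'}}(x_l)|$, which is bounded by $\eps$ under $\eventdec$ via \pref{lem:monte-carlo}; (ii) the cross-observation value gap $|V^{\pitest{\empobs_r}{\empobs'}}(x_l) - V^{\pitest{\empobs_r}{\empobs'}}(\empobs_r)|$, which I claim is exactly zero; and (iii) the test-policy accuracy error $|V^{\pitest{\empobs_r}{\empobs'}}(\empobs_r) - \wh{V}^{\pitest{\empobs_r}{\empobs'}}(\empobs_r)|$, which is bounded by $\taudec$ by the assumed $\taudec$-validity of $\Pitest_{h+1}$ (\pref{def:valid-test-policy}). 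Summing gives $\taudec + \eps \le \taudec + 2\eps$, as required.

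The only nontrivial step is (ii), which is where the Block MDP structure enters. I will justify it by recalling that $\pitest{\empobs_r}{\empobs'} \in \cA \circ \Pi_{h+2:H}$ takes a \emph{fixed} first action $a \in \cA$ at layer $h+1$ (independent of the observation), then follows some $\pi' \in \Pi_{h+2:H}$. Because $\optdec(x_l) = \optdec(\empobs_r) = s$, by decodability we have $R(x_l, a) = R(\empobs_r, a) = \optlatr(s,a)$ and $P(\cdot \mid x_l, a) = P(\cdot \mid \empobs_r, a) = \emission \circ \optlatp(\cdot \mid s, a)$, so the Bellman expansion $V^{\pitest{\empobs_r}{\empobs'}}(\cdot) = R(\cdot,a) + \En_{x' \sim P(\cdot \mid \cdot, a)} V^{\pi'}(x')$ yields identical values at $x_l$ and $\empobs_r$. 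I expect this observation-level decoupling (enabled by the open first action in the test policy) to be the main conceptual point of the proof; the rest is bookkeeping on the high-probability events $\eventemulator$ and $\eventdec$ assumed in the preconditions of \pref{lem:main-induction}.
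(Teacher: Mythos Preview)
Your proposal is correct and follows essentially the same argument as the paper. The paper frames the identical triangle-inequality decomposition as a proof by contradiction (assuming $\empobs_r \notin \cT[x_l]$ and deriving the bound $\taudec + \eps < \taudec + 2\eps$), but the three ingredients---Monte Carlo accuracy from \pref{lem:monte-carlo}, the equality $V^{\pitest{\empobs_r}{\empobs'}}(x_l) = V^{\pitest{\empobs_r}{\empobs'}}(\empobs_r)$ because the test policy is open-loop at layer $h+1$ and $\optdec(x_l) = \optdec(\empobs_r)$, and the $\taudec$-accuracy of $\Pitest_{h+1}$---are exactly the same.
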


\begin{proof}
The proof is a reprise of the argument used in Part (1) of \pref{lem:controlling-transition-error}. We prove this by contradiction. Suppose that there existed some $x_l \in \cXL$ and $\empobs_r \in \cXR$ such that $\optdec(x_l) = \optdec(\empobs_r)$ but $\empobs_r \notin \cT[x_l]$. Then $\empobs_r$ must have lost a test to some other $\empobs_r'$, i.e. there exists some $\empobs_r' \in \estmdpobsspace{h+1}$ such that
\begin{align*}
    \abs*{\vestarg{x_l}{\pi_{\empobs_r, \empobs_r'}} - \wh{V}^{\pi_{\empobs_r, \empobs_r'}}(\empobs_r) } \ge \taudec + 2\eps. \numberthis\label{eq:invalid}
\end{align*}
By accuracy of $\Pitest_{h+1}$ and the fact that $\pi_{\empobs_r, \empobs_r'}$ is open-loop at layer $h+1$, we have
\begin{align*}
    \abs*{ V^{\pi_{\empobs_r, \empobs_r'}}(x_l) - \wh{V}^{\pi_{\empobs_r, \empobs_r'}}(\empobs_r)} = \abs*{ V^{\pi_{\empobs_r, \empobs_r'}}(\empobs_r) - \wh{V}^{\pi_{\empobs_r, \empobs_r'}}(\empobs_r)} \le \taudec. \numberthis \label{eq:valid1}
\end{align*}
Furthermore, by \pref{lem:monte-carlo} we have 
\begin{align*}
    \abs*{ \vestarg{x_l}{\pi_{\empobs_r, \empobs_r'}} - V^{\pi_{\empobs_r, \empobs_r'}}(x_r) } = \abs*{ \vestarg{x_l}{\pi_{\empobs_r, \empobs_r'}} - V^{\pi_{\empobs_r, \empobs_r'}}(x_l) } \le \eps. \numberthis \label{eq:valid2}
\end{align*}
Combining \eqref{eq:valid1} and \eqref{eq:valid2} we get that
\begin{align*}
    \abs*{ \vestarg{x_l}{\pi_{\empobs_r, \empobs_r'}} - \wh{V}^{\pi_{\empobs_r, \empobs_r'}}(\empobs_r) } \le \taudec + \eps, 
\end{align*}
which contradicts \eqref{eq:invalid}. This proves the lemma.
\end{proof}

\begin{lemma}[Biclique Property]\label{lem:biclique}
Under the preconditions of \pref{lem:main-induction}, for any $s \in \cS[\cXL] \cap \cS[\cXR]$ the subgraph of $\gobs$ over vertices $\crl{x \in \cX_\mathsf{L} \cup \cX_\mathsf{R}: \optdec(x) = s}$ is a biclique.
\end{lemma}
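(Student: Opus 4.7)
My plan is to derive the biclique property as an essentially immediate consequence of the validity of the decoding function established in \pref{lem:strong-validity-obs-decoding}. Recall that $\gobs$ is, by construction, a bipartite graph between the left part $\cXL$ (the transition sample $\cD$) and the right part $\cXR = \estmdpobsspace{h+1}$ (the emulator states at the next layer), with an edge $(x_l, x_r)$ drawn precisely when $x_r \in \cT[x_l]$. Consequently, the subgraph induced on the vertex set $\crl{x \in \cXL \cup \cXR : \optdec(x) = s}$ is bipartite between
\[
L_s \coloneqq \crl{x_l \in \cXL : \optdec(x_l) = s} \quad \text{and} \quad R_s \coloneqq \crl{x_r \in \cXR : \optdec(x_r) = s},
\]
so establishing the biclique claim reduces to showing every pair $(x_l, x_r) \in L_s \times R_s$ is an edge of $\gobs$.

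The key step is a direct application of \pref{lem:strong-validity-obs-decoding} at each $x_l \in L_s$. That lemma tells us $\crl{x_r \in \cXR : \optdec(x_r) = \optdec(x_l)} \subseteq \cT[x_l]$. Specializing to $\optdec(x_l) = s$ gives $R_s \subseteq \cT[x_l]$, which, by the edge rule of \pref{def:decoder-graph-obs}, means that every $x_r \in R_s$ is a neighbor of $x_l$ in $\gobs$. Since this holds uniformly over $x_l \in L_s$, the induced bipartite subgraph contains every edge of $L_s \times R_s$, which is precisely the biclique condition.

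I do not expect any serious obstacle here: the preconditions assumed in \pref{lem:main-induction} (namely $\eventemulator \cap \eventdec$ together with $\taudec$-validity of the test policies $\Pitest_{h+1}$) are exactly what \pref{lem:strong-validity-obs-decoding} needs, so the earlier lemma transfers over without additional verification. Conceptually, the tournament construction inside \stochdecoder{} is designed so that observations emitted from a common latent state cannot be separated by any pairwise test, and \pref{lem:strong-validity-obs-decoding} expressed this as a one-sided containment from each $x_l$; the biclique lemma is simply the symmetric reformulation obtained by ranging $x_l$ over $L_s$ and reading the containment $R_s \subseteq \cT[x_l]$ as ``every admissible pair is an edge.''
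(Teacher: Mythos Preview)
Your proposal is correct and follows essentially the same approach as the paper: both argue directly from \pref{lem:strong-validity-obs-decoding} that every $x_l \in L_s$ is adjacent to every $x_r \in R_s$, which is precisely the biclique condition. Your write-up is slightly more explicit in unpacking the bipartite structure and the edge rule, but the argument is identical in substance.
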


\begin{proof}
Fix any $s \in \cS[\cXL] \cap \cS[\cXR]$. By \pref{lem:strong-validity-obs-decoding}, any $x_l \in \cXL$ such that $\optdec(x_l) = s$ has an edge to every observation $\crl{\empobs \in \cXR: \optdec(\empobs) = s}$ in $\gobs$. Therefore, the subgraph over $\crl{x \in \cX_\mathsf{L} \cup \cX_\mathsf{R}: \optdec(x) = s}$ forms a biclique in $\gobs$.
\end{proof}

\begin{lemma}\label{lem:eps-reachable-set-inclusion}
Under the preconditions of \pref{lem:main-induction}, for any connected component $\cc$, $\epsPRS{} \cap \cS[\ccL] \subseteq \epsPRS{} \cap \cS[\ccR]$.
\end{lemma}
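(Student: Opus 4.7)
The plan is to show the inclusion by picking an arbitrary latent state $s \in \epsPRS{} \cap \cS[\ccL]$ and exhibiting some observation $\empobs_r \in \ccR$ with $\optdec(\empobs_r) = s$. The two ingredients in play are \pref{lem:sampling-states}, which guarantees that pushforward-reachable states have representative observations in the emulator state set $\estmdpobsspace{h+1} = \cXR$, and \pref{lem:strong-validity-obs-decoding}, which guarantees that the decoder function $\cT$ correctly includes right-vertices with the same latent label.

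First, I would fix $s \in \epsPRS{} \cap \cS[\ccL]$ and pick some witness $x_l \in \ccL$ with $\optdec(x_l) = s$ (which exists by definition of $\cS[\ccL]$). Since we are in layer $h+1$ and $s$ is pushforward-reachable, the preconditions of \pref{lem:main-induction} give us event $\eventemulator$, so by \pref{lem:sampling-states} we have $n_s[\estmdpobsspace{h+1}] \ge \tfrac{\eps}{2\cpush S} \nreset > 0$. Hence there exists at least one $\empobs_r \in \cXR$ with $\optdec(\empobs_r) = s = \optdec(x_l)$.

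Next, I would invoke \pref{lem:strong-validity-obs-decoding} on the pair $(x_l, \empobs_r)$: since $\optdec(\empobs_r) = \optdec(x_l)$, the lemma gives $\empobs_r \in \cT[x_l]$, so by \pref{def:decoder-graph-obs} there is an edge $(x_l, \empobs_r)$ in $\gobs$. Because $x_l \in \ccL$ lies in the connected component $\cc$ and $\empobs_r$ is adjacent to $x_l$, it follows that $\empobs_r \in \ccR$, i.e.\ $s \in \cS[\ccR]$, which (together with $s \in \epsPRS{}$) proves the inclusion. There is essentially no obstacle here: the lemma is really just packaging \pref{lem:sampling-states} and \pref{lem:strong-validity-obs-decoding} into a convenient form for later use in analyzing the marginal and pushforward constraints of the confidence set \eqref{eq:confidence-set-construction-v2}.
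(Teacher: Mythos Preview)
Your proposal is correct and follows essentially the same argument as the paper: fix $s \in \epsPRS{} \cap \cS[\ccL]$, use \pref{lem:sampling-states} to produce a right-vertex $\empobs_r \in \cXR$ with $\optdec(\empobs_r)=s$, and then apply \pref{lem:strong-validity-obs-decoding} to obtain an edge from the witness $x_l \in \ccL$ to $\empobs_r$, forcing $\empobs_r \in \ccR$. The paper's proof is slightly terser but uses exactly these two ingredients in the same order.
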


\begin{proof}
Fix any $s \in \epsPRS{} \cap \cS[\ccL]$, and let $x_l \in \ccL$ be any arbitrary observation such that $\optdec(x_l) = s$. By \pref{lem:sampling-states}, since $s \in \epsPRS{}$, there exist some $\empobs_r \in \cXR$ such that $\optdec(\empobs_r) = s$; in other words, $s \in \cS[\cXR]$. Moreover by \pref{lem:strong-validity-obs-decoding}, there must be an edge from $x_l$ to $x_r$ in $\gobs$. Therefore $x_r \in \ccR$, so $s \in \cS[\ccR]$. % Furthermore by the biclique property (\pref{lem:biclique}) we know that all such observations $\empobs$ lie in $\ccR$. In particular this implies that $s \in \cS[\ccR]$, completing the proof.
\end{proof}

\begin{lemma}\label{lem:bound1}
Let $\empobs, \empobs' \in \cXR$ such that $\optdec(\empobs) = \optdec(\empobs')$. We have $\max_{a \in \cA, \pi \in \Pi}~\abs{ \wh{Q}^\pi(\empobs, a) - \wh{Q}^\pi(\empobs', a) } \le 2\taudec$.
\end{lemma}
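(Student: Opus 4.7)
The plan is to exploit three ingredients in combination: the decodability property of the Block MDP (which forces $V^\pi(\empobs) = V^\pi(\empobs')$ whenever $\optdec(\empobs) = \optdec(\empobs')$ and $\pi$ is Markovian), the $\taudec$-accuracy guarantee of the test policies in $\Pitest_{h+1}$ from \pref{def:valid-test-policy}, and the maximal-distinguishing property of the specific test policy $\pi_{\empobs,\empobs'}$. Since $\empobs, \empobs' \in \cXR = \estmdpobsspace{h+1}$, the test policies $\Pitest_{h+1}$ are indexed by exactly such pairs in the next layer, so the particular test policy $\pi_{\empobs,\empobs'}$ is well-defined and the preconditions of \pref{lem:main-induction} apply to it.

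First I would argue that for this particular test policy $\pi_{\empobs,\empobs'} \in \Pitest_{h+1}$, decodability of the Block MDP implies $V^{\pi_{\empobs,\empobs'}}(\empobs) = V^{\pi_{\empobs,\empobs'}}(\empobs')$, since the latent transitions, rewards, and emissions starting from any observation depend only on its latent label and $\pi_{\empobs,\empobs'}$ is Markovian. Combining the two $\taudec$-accuracy inequalities $\abs{V^{\pi_{\empobs,\empobs'}}(\empobs) - \wh V^{\pi_{\empobs,\empobs'}}(\empobs)} \le \taudec$ and $\abs{V^{\pi_{\empobs,\empobs'}}(\empobs') - \wh V^{\pi_{\empobs,\empobs'}}(\empobs')} \le \taudec$ via the triangle inequality then yields $\abs{\wh V^{\pi_{\empobs,\empobs'}}(\empobs) - \wh V^{\pi_{\empobs,\empobs'}}(\empobs')} \le 2\taudec$.

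Second I would rewrite the target quantity in terms of an emulator value: for any $a \in \cA$ and $\pi \in \Pi$, we have $\wh Q^\pi(\empobs, a) = \wh V^{a \circ \pi_{h+2:H}}(\empobs)$, and as $(a, \pi)$ ranges over $\cA \times \Pi$, the composite policy $a \circ \pi_{h+2:H}$ ranges inside $\cA \circ \Pi_{h+2:H}$. Thus the maximum in the lemma statement is upper bounded by $\max_{\tilde\pi \in \cA \circ \Pi_{h+2:H}} \abs{\wh V^{\tilde\pi}(\empobs) - \wh V^{\tilde\pi}(\empobs')}$, which by the maximal-distinguishing definition of $\pi_{\empobs,\empobs'}$ equals $\abs{\wh V^{\pi_{\empobs,\empobs'}}(\empobs) - \wh V^{\pi_{\empobs,\empobs'}}(\empobs')}$. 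Chaining this identity with the first-step bound gives the claimed $2\taudec$ inequality. I do not anticipate any serious obstacle; the only care required is bookkeeping of the suffix-class semantics, namely aligning the definition of $\wh Q^\pi(\empobs, a)$ at layer $h+1$ with the argmax space $\cA \circ \Pi_{h+2:H}$ over which $\pi_{\empobs,\empobs'}$ is defined. The conceptual insight is that test policies serve as certificates realizing the worst-case emulator discrepancy, and accuracy of these certificates combined with decodability collapses the true-value side of the discrepancy to zero.
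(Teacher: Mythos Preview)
Your overall approach matches the paper's proof, but there is one incorrect justification that needs fixing. You claim decodability ``forces $V^\pi(\empobs) = V^\pi(\empobs')$ whenever $\optdec(\empobs) = \optdec(\empobs')$ and $\pi$ is Markovian.'' This is false: a Markovian policy may take different actions on $\empobs$ and $\empobs'$, in which case $V^\pi(\empobs) = Q^\pi(\optdec(\empobs), \pi(\empobs))$ need not equal $V^\pi(\empobs') = Q^\pi(\optdec(\empobs'), \pi(\empobs'))$ even though $\optdec(\empobs) = \optdec(\empobs')$. The correct property, which the paper invokes explicitly, is that $\pi_{\empobs,\empobs'} \in \cA \circ \Pi_{h+2:H}$ is \emph{open-loop at layer $h+1$}: it plays a fixed action regardless of whether it sees $\empobs$ or $\empobs'$, so both observations lead to the same latent $Q$-value.

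Your second paragraph actually reflects the right understanding --- you correctly identify the argmax class as $\cA \circ \Pi_{h+2:H}$ and write $\wh Q^\pi(\empobs,a) = \wh V^{a \circ \pi_{h+2:H}}(\empobs)$ --- so this looks like a slip in wording rather than a conceptual gap. Once you replace ``Markovian'' with ``open-loop at layer $h+1$'' in the first paragraph, the argument is correct and coincides with the paper's.
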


\begin{proof}
Denote $\pi_{\empobs, \empobs'} = \argmax_{\pi \in \cA \circ \Pi}~\abs{\wh{V}^\pi(\empobs) - \wh{V}^\pi(\empobs')}$ to be the test policy for the pair $\empobs, \empobs' \in \cXR$. By accuracy of the test policy we know that 
\begin{align*}
    \abs*{V^{\pi_{\empobs, \empobs'}}(\empobs) - \wh{V}^{\pi_{\empobs, \empobs'}}(\empobs)} &\le \taudec \quad \text{and} \quad    \abs*{V^{\pi_{\empobs, \empobs'}}(\empobs') - \wh{V}^{\pi_{\empobs, \empobs'}}(\empobs')} \le \taudec.
\end{align*}
Furthermore since $\empobs, \empobs'$ are observations emitted from the same latent state and $\pi_{\empobs, \empobs'}$ is open loop at layer $h+1$, we have $V^{\pi_{\empobs, \empobs'}}(\empobs) = V^{\pi_{\empobs, \empobs'}}(\empobs')$. Therefore
\begin{align*}
    \max_{a \in \cA, \pi \in \Pi}~\abs*{\wh{Q}^\pi(\empobs,a) - \wh{Q}^\pi(\empobs',a)} &= \abs*{\wh{Q}^{\pi_{\empobs, \empobs'}}(\empobs, \pi_{\empobs, \empobs'}) - \wh{Q}^{\pi_{\empobs, \empobs'}}(\empobs', \pi_{\empobs, \empobs'})} \le 2\taudec.
\end{align*}
This concludes the proof of the lemma.
\end{proof}

\begin{lemma}\label{lem:bound2}
Fix any $x_l \in \cXL$. If $\empobs_r, \empobs_r' \in \cT[x_l]$, then $\max_{a \in \cA, \pi \in \Pi}~\abs{\wh{Q}^{\pi}(\empobs_r, a) - \wh{Q}^{\pi}(\empobs_r', a)} \le 2\taudec + 4 \eps$.
\end{lemma}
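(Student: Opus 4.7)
The plan is to reduce the bound on the $\wh Q$-function gap to a bound on the $\wh V$-function gap under the specific test policy $\pitest{\empobs_r}{\empobs_r'}$, and then invoke the maximally distinguishing property of that test policy. The case $\empobs_r = \empobs_r'$ is trivial, so assume $\empobs_r \neq \empobs_r'$ and abbreviate $\pi^\star \coloneqq \pitest{\empobs_r}{\empobs_r'}$. Since the defining $\argmax$ is symmetric in its two arguments, we have $\pitest{\empobs_r}{\empobs_r'} = \pitest{\empobs_r'}{\empobs_r} = \pi^\star$, so both membership conditions $\empobs_r, \empobs_r' \in \cT[x_l]$ (tested against each other) supply
\begin{align*}
\abs*{\vestarg{x_l}{\pi^\star} - \wh V^{\pi^\star}(\empobs_r)} \le \taudec + 2\eps \quad \text{and} \quad \abs*{\vestarg{x_l}{\pi^\star} - \wh V^{\pi^\star}(\empobs_r')} \le \taudec + 2\eps.
\end{align*}
A triangle inequality through the common anchor $\vestarg{x_l}{\pi^\star}$ then immediately yields $\abs{\wh V^{\pi^\star}(\empobs_r) - \wh V^{\pi^\star}(\empobs_r')} \le 2\taudec + 4\eps$.

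Next, I would lift this $\wh V$-gap bound to a $\wh Q$-gap bound. For any $\pi \in \Pi$ and $a \in \cA$, the rollout ``take $a$ at layer $h+1$, then follow $\pi$'' is an element of $\cA \circ \Pi_{h+2:H}$, which is precisely the search space over which $\pi^\star$ was chosen to maximize $\abs{\wh V^{\cdot}(\empobs_r) - \wh V^{\cdot}(\empobs_r')}$ in \stochrefit{} at layer $h+1$. Writing $\wh Q^\pi(\empobs_r, a) = \wh V^{a \circ \pi}(\empobs_r)$ and using the maximally distinguishing property,
\begin{align*}
\abs{\wh Q^\pi(\empobs_r, a) - \wh Q^\pi(\empobs_r', a)} = \abs{\wh V^{a \circ \pi}(\empobs_r) - \wh V^{a \circ \pi}(\empobs_r')} \le \abs{\wh V^{\pi^\star}(\empobs_r) - \wh V^{\pi^\star}(\empobs_r')} \le 2\taudec + 4\eps.
\end{align*}
Taking the maximum over $(a, \pi) \in \cA \times \Pi$ gives the claimed bound.

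The only thing that requires care is verifying that the search space over which $\pi^\star$ is defined actually contains every $a \circ \pi$ realizing $\wh Q^\pi(\cdot, a)$; this follows directly from the definition of $\Pitest_{h+1}$ in \stochrefit{}. There is no real obstacle beyond this bookkeeping: the proof is a small extension of \pref{lem:bound1}, where the identical-latent-state assumption forced $V^{\pi^\star}(\empobs) = V^{\pi^\star}(\empobs')$ so the triangle argument used the true value at a common latent as the anchor. Here, $\empobs_r$ and $\empobs_r'$ need not share a latent, so we instead anchor through the Monte Carlo estimate $\vestarg{x_l}{\pi^\star}$ supplied by the decoder graph, which costs the additional factor of two in $\taudec$ and the extra $4\eps$ slack built into the definition of $\cT[x_l]$.
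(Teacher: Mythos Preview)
Your proof is correct and follows essentially the same approach as the paper: extract the two test-policy inequalities from membership in $\cT[x_l]$, apply triangle inequality through the anchor $\vestarg{x_l}{\pi^\star}$, and invoke the maximally distinguishing property to lift from $\wh V$ to $\wh Q$. You are in fact more explicit than the paper in two places it glosses over---the symmetry $\pitest{\empobs_r}{\empobs_r'} = \pitest{\empobs_r'}{\empobs_r}$ needed to get both inequalities with the same policy, and the verification that $a \circ \pi_{h+2:H} \in \cA \circ \Pi_{h+2:H}$ so that the maximally distinguishing property applies.
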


\begin{proof}
By definition of $\cT[x_l]$ we have
\begin{align*}
    \abs*{\vestarg{x_l}{\pi_{\empobs_r, \empobs_r'}} - \wh{V}^{\pi_{\empobs_r, \empobs_r'}}(\empobs_r) } \le \taudec+2\eps \quad\text{and}\quad \abs*{\vestarg{x_l}{\pi_{\empobs_r, \empobs_r'}} - \wh{V}^{\pi_{\empobs_r, \empobs_r'}}(\empobs_r') } \le \taudec+2\eps.
\end{align*}
Using the fact that test policies are maximally distinguishing we have
    \begin{align*}
        \max_{a \in \cA, \pi \in \Pi}~\abs*{\wh{Q}^{\pi}(\empobs_r, a) - \wh{Q}^{\pi}(\empobs_r', a)} = \abs*{\wh{V}^{\pi_{\empobs_r, \empobs_r'}}(\empobs_r) - \wh{V}^{\pi_{\empobs_r, \empobs_r'}}(\empobs_r')} \le 2\taudec + 4 \eps.
    \end{align*}
This proves the lemma.
\end{proof}

\begin{lemma}[Bounded Width of $\cc$]\label{lem:width-of-cc}
For any connected component $\cc \in \crl{\cc_j}_{j \ge 1}$ in $\gobs$ we have 
\begin{align*}
    \max_{\empobs, \empobs' \in \ccR} \max_{a \in \cA, \pi \in \Pi}~\abs{ \wh{Q}^\pi(\empobs, a) - \wh{Q}^\pi(\empobs', a) } \le 4S\taudec + 8S\eps.
\end{align*}
\end{lemma}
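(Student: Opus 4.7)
The plan is to prove the width bound by reducing the bipartite connected component to a graph on latent states and then telescoping along a spanning tree.

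First, I would observe that $\abs{\cS[\ccR]} \le S$ since there are only $S$ distinct latent states total. The key step is to build a reduced graph $G'$ whose vertices are the latent states appearing in $\ccR$, placing an edge between $s, s' \in \cS[\ccR]$ whenever there exist $\empobs, \empobs' \in \ccR$ with $\optdec(\empobs) = s$, $\optdec(\empobs') = s'$, and a common neighbor $x_l \in \ccL$ with $\empobs, \empobs' \in \cT[x_l]$. Because $\cc$ is connected in $\gobs$, and graph contraction preserves connectivity, $G'$ is connected. Take a spanning tree $T$ of $G'$, which has at most $S-1$ edges.

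Next, given any $\empobs, \empobs' \in \ccR$, let $s_0 = \optdec(\empobs)$ and $s_L = \optdec(\empobs')$, and let $s_0, s_1, \ldots, s_L$ with $L \le S-1$ be the path in $T$ between them. For each consecutive pair $(s_i, s_{i+1})$, fix witnesses $u_i, v_i \in \ccR$ with $\optdec(u_i) = s_i$, $\optdec(v_i) = s_{i+1}$, and $u_i, v_i \in \cT[x_l^{(i)}]$ for some $x_l^{(i)} \in \ccL$. This yields a chain
\[
\empobs \;\longleftrightarrow\; u_0 \;\longleftrightarrow\; v_0 \;\longleftrightarrow\; u_1 \;\longleftrightarrow\; v_1 \;\longleftrightarrow\; \cdots \;\longleftrightarrow\; v_{L-1} \;\longleftrightarrow\; \empobs',
\]
where each $u_i \leftrightarrow v_i$ step is controlled by \pref{lem:bound2} (cost $2\taudec + 4\eps$) and each other step joins two observations in $\ccR$ with the same latent state and is controlled by \pref{lem:bound1} (cost $2\taudec$).

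I would then apply the triangle inequality across the chain. There are $L$ steps of the first type and $L+1$ steps of the second type, giving
\[
\max_{a,\pi}\abs{\wh{Q}^\pi(\empobs,a) - \wh{Q}^\pi(\empobs',a)} \le L(2\taudec + 4\eps) + (L+1)(2\taudec) \le 4S\taudec + 8S\eps,
\]
which is the stated bound (with slack). The proof is essentially combinatorial once the reduction to the spanning tree on latent states is in place. The only nontrivial step is verifying that $G'$ is connected, which follows from the fact that any path in $\cc$ alternating between $\ccL$ and $\ccR$ projects to a walk in $G'$ on $\cS[\ccR]$; this is a routine check that I would dispatch briefly before invoking the spanning tree argument.
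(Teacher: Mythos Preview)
Your proposal is correct. Both your argument and the paper's rely on the same core observation: since there are only $S$ latent states, any path in $\ccR$ can be shortened to visit each latent state a bounded number of times, after which the telescoping via \pref{lem:bound1} and \pref{lem:bound2} gives the $4S\taudec + 8S\eps$ bound.

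The execution differs. The paper takes the raw path $(\empobs_1,\dots,\empobs_n)$ in $\ccR$ guaranteed by connectivity of $\cc$, and then constructs a shortcut subsequence by jumping ahead whenever a latent state recurs later in the path; this yields an auxiliary sequence in which each latent state appears at most twice and consecutive entries either share a latent state or share a common neighbor in $\ccL$. You instead pass to the contracted graph $G'$ on $\cS[\ccR]$, argue it is connected, and use a spanning tree to produce a canonical short path between any two latent states; you then lift this back to observations by inserting the witness pairs $u_i,v_i$ for each tree edge. Your approach is arguably more structured (the spanning tree is reusable for all pairs $\empobs,\empobs'$ at once) and yields a slightly tighter step count of $L(2\taudec+4\eps)+(L+1)(2\taudec)$ with $L\le S-1$, whereas the paper's shortcutting gives up to $2S-1$ transitions each bounded by $2\taudec+4\eps$. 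Both fall within the stated bound, and the conceptual content is the same.
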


\begin{proof}
Let us take any $\empobs, \empobs' \in \ccR$. Since $\empobs, \empobs'$ belong to the same connected component, there exists a sequence of observations $\mathsf{seq} = (\empobs_1 = \empobs, \dots, \empobs_n = \empobs') \in (\ccR)^n$ such that for every consecutive pair $\empobs_i, \empobs_{i+1}$ there exists some $x_l \in \ccL$ such that $\empobs_i, \empobs_{i+1} \in \cT[x_l]$. 

Fix any $a \in \cA, \pi \in \Pi$. Now we will bound $\abs{ \wh{Q}^\pi(\empobs, a) - \wh{Q}^\pi(\empobs', a) }$. We construct an auxiliary sequence  $\wt{\mathsf{seq}} = (\wt{\empobs}_1, \dots, \wt{\empobs}_k)$ for some $k \le n$ as follows:
\begin{itemize}
    \item Initialize $\wt{\mathsf{seq}} = \emptyset$.
    \item For $i = 1, \cdots, n$:
    \begin{itemize}
        \item Add $\empobs_i$ to the end of $\wt{\mathsf{seq}}$.
        \item If there exists $\empobs_j$ with $j > i$ such that $\optdec(\empobs_i) = \optdec(\empobs_j)$ then set $i \gets j$.
    \end{itemize}
\end{itemize}
Observe that $\wt{\mathsf{seq}}$ satisfies the following conditions:
\begin{itemize}
    \item $\wt{\empobs}_1 = \empobs$ and $\wt{\empobs}_k = \empobs'$.
    \item  For every $s \in \cS$, at most two observations $\wt{\empobs}, \wt{\empobs}' \in \mathrm{supp}(\emission(s)) \cap \ccR$ are found in $\wt{\mathsf{seq}}$, and these observations must appear sequentially.
    \item For any $i \in [k-1]$, if $\optdec(\wt{\empobs}_i) \ne \optdec(\wt{\empobs}_{i+1})$ then there exists some $x_l \in \ccL$ such that $\wt{\empobs}_i, \wt{\empobs}_{i+1} \in \cT[x_l]$.
\end{itemize}
Now we can apply triangle inequality to $\wt{\mathsf{seq}}$:
\begin{align*}
    \abs*{ \wh{Q}^\pi(\empobs, a) - \wh{Q}^\pi(\empobs', a) } \le \sum_{i=1}^k \abs*{ \wh{Q}^\pi(\wt{\empobs}_i, a) - \wh{Q}^\pi(\wt{\empobs}_{i+1}, a) } \le 4S\taudec + 8S\eps.
\end{align*}
The final bound uses the aforementioned properties of $\wt{\mathsf{seq}}$, as well as  \pref{lem:bound1} and \pref{lem:bound2} to handle the individual terms in the summation. This completes the proof of \pref{lem:width-of-cc}.
\end{proof}

\subsubsubsection{Structural Properties of Projected Measures}\label{sec:projected-measures}
Now we will prove several lemmas regarding the projected measure of the empirical latent distribution
\begin{align*}
    \samplelatp = \frac{1}{\abs{\cXL}} \sum_{x \in \cXL} \delta_{\optdec(x)}.
\end{align*}
which is sampled in \pref{line:sample-decode-dataset} of a single call to \stochdecoder{}.

% \begin{lemma}\label{lem:proj-ptilde}
%     $\projR(\samplelatp)(\empobs) = \ind{\optdec(\empobs) \in \epsPRS{}} \cdot \frac{n_{\optdec(\empobs)}[\cXL]}{\abs{\cXL}} \cdot \frac{1}{n_{\optdec(\empobs)}[\cXR]}$
% \end{lemma}

% \begin{proof}
% We calculate that for any $\empobs \in \cXR$:
% \begin{align*}
%     \projR(\samplelatp)(\empobs) = \samplelatp(\optdec(\empobs)) \cdot \frac{\ind{\optdec(\empobs) \in \epsPRS{}}}{n_{\optdec(\empobs)}[\cXR]} =  \frac{n_{\optdec(\empobs)}[\cXL]}{\abs{\cXL}} \cdot \frac{\ind{\optdec(\empobs) \in \epsPRS{}}}{n_{\optdec(\empobs)}[\cXR]}.
% \end{align*}
% \end{proof}

\begin{lemma}\label{lem:proj-cc-expression}
    Under the preconditions of \pref{lem:main-induction}, for any connected component $\cc$ of $\gobs$:
    \begin{align*}
        \projR(\samplelatp)(\ccR) =  \sum_{s \in \epsPRS{} \cap \cS[\ccL] \cap \cS[\ccR]} \frac{n_{s}[\cXL]}{\abs{\cXL}}  = \sum_{s \in \epsPRS{} \cap \cS[\ccL] \cap \cS[\ccR]} \samplelatp(s).
    \end{align*}
\end{lemma}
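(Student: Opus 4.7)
The plan is to directly unfold the definition of $\projR(\samplelatp)$ from \pref{def:projected-measure} and then use the biclique structure of $\gobs$ established in \pref{lem:biclique} to simplify. First I would write
\begin{align*}
    \projR(\samplelatp)(\ccR) = \sum_{\empobs \in \ccR} \samplelatp(\optdec(\empobs)) \cdot \frac{\ind{\optdec(\empobs) \in \epsPRS{}}}{n_{\optdec(\empobs)}[\cXR]},
\end{align*}
and then regroup by latent state $s$ to get
\begin{align*}
    \projR(\samplelatp)(\ccR) = \sum_{s \in \cS} \samplelatp(s) \cdot \frac{\ind{s \in \epsPRS{}}}{n_s[\cXR]} \cdot \abs*{\crl{\empobs \in \ccR : \optdec(\empobs) = s}}.
\end{align*}

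Next, I would identify which latent states contribute a nonzero term: we need $s \in \epsPRS{}$, $\samplelatp(s) > 0$ (equivalently $s \in \cS[\cXL]$), and $s \in \cS[\ccR]$. Since $\cS[\ccR] \subseteq \cS[\cXR]$, the biclique property of \pref{lem:biclique} applies: for every $s \in \cS[\cXL] \cap \cS[\cXR]$, all vertices in $\cXL \cup \cXR$ whose decoder label equals $s$ lie in a single connected component of $\gobs$. Combined with $s \in \cS[\ccR]$, this forces this component to be exactly $\cc$, which simultaneously yields two consequences: (i) $\{x \in \cXL : \optdec(x) = s\} \subseteq \ccL$, so $s \in \cS[\ccL]$, allowing us to re-index the sum over $s \in \epsPRS{} \cap \cS[\ccL] \cap \cS[\ccR]$; and (ii) $\{\empobs \in \cXR : \optdec(\empobs) = s\} \subseteq \ccR$, so the count $\abs{\crl{\empobs \in \ccR : \optdec(\empobs) = s}}$ equals $n_s[\cXR]$, which cancels the denominator $n_s[\cXR]$ exactly.

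Putting the two consequences together turns the summand into $\samplelatp(s)$, giving the second equality in the claim. The first equality then follows immediately from the definition $\samplelatp(s) = n_s[\cXL]/\abs{\cXL}$.

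I don't anticipate any serious obstacles here: this is a pure bookkeeping step that leverages the biclique property in the previous lemma. The only care needed is in the re-indexing direction, namely checking that a nonzero contribution from $s \in \cS[\cXL]$ automatically upgrades to $s \in \cS[\ccL]$ once $s \in \cS[\ccR]$ is known, which is exactly where \pref{lem:biclique} is invoked.
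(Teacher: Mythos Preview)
Your proposal is correct and takes essentially the same approach as the paper: unfold the definition of the projected measure, regroup by latent state, restrict the sum to states that actually contribute, and invoke the biclique lemma to simultaneously re-index from $\cS[\cXL]$ to $\cS[\ccL]$ and to cancel the $n_s[\cXR]$ factor. The paper's proof just writes out the same steps with a couple of intermediate equalities made explicit.
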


\begin{proof}
We compute that
\begin{align*}
    \projR(\samplelatp)(\ccR) &= \sum_{\empobs \in \ccR} \projR(\samplelatp)(\empobs) \\
    &= \sum_{\empobs \in \ccR} \frac{n_{\optdec(\empobs)}[\cXL]}{\abs{\cXL}} \cdot \frac{\ind{\optdec(\empobs) \in \epsPRS{}}}{n_{\optdec(\empobs)}[\cXR]}\\
    &= \sum_{s \in \epsPRS{}} \frac{n_{s}[\cXL]}{\abs{\cXL}} \sum_{\empobs \in \ccR}  \frac{\ind{\optdec(\empobs) = s}}{n_{s}[\cXR]} \\
    &\overset{(i)}{=} \sum_{s \in \epsPRS{} \cap \cS[\ccR]} \frac{n_{s}[\cXL]}{\abs{\cXL}} \sum_{\empobs \in \ccR}  \frac{\ind{\optdec(\empobs) = s}}{n_{s}[\cXR]} \\
    &\overset{(ii)}{=}  \sum_{s \in \epsPRS{} \cap \cS[\cXL] \cap \cS[\ccR]} \frac{n_{s}[\cXL]}{\abs{\cXL}} \sum_{\empobs \in \ccR}  \frac{\ind{\optdec(\empobs) = s}}{n_{s}[\cXR]}.
\end{align*}
For $(i)$, observe that if $s \notin \cS[\ccR]$, then the sum $\sum_{\empobs \in \ccR}  \frac{\ind{\optdec(\empobs) = s}}{n_{s}[\cXR]} = 0$. For $(ii)$, we use the fact that $n_s[\cXL] = 0$ if $s \notin \cS[\cXL]$. From here, we apply the biclique lemma (\pref{lem:biclique}). The biclique lemma implies that if $s \in \cS[\ccR]$, then $\crl{x \in \cXL: \optdec(x) = s} \subseteq \ccL$, and therefore $\cS[\cXL] \cap \cS[\ccR] = \cS[\ccL] \cap \cS[\ccR]$. Furthermore for any $s \in \cS[\ccL] \cap \cS[\ccR]$, all of the observations $\crl{\empobs \in \cXR: \optdec(\empobs) = s} \subseteq \ccR$, so $n_s[\cXR] = n_s[\ccR]$. Thus we can continue the calculation as
\begin{align*}
    \projR(\samplelatp)(\ccR) &=  \sum_{s \in \epsPRS{} \cap \cS[\ccL] \cap \cS[\ccR]} \frac{n_{s}[\cXL]}{\abs{\cXL}} \sum_{\empobs \in \ccR}  \frac{\ind{\optdec(\empobs) = s}}{n_{s}[\cXR]}\\
    &= \sum_{s \in \epsPRS{} \cap \cS[\ccL] \cap \cS[\ccR]} \frac{n_{s}[\cXL]}{\abs{\cXL}} \sum_{\empobs \in \ccR}  \frac{\ind{\optdec(\empobs) = s}}{n_{s}[\ccR]} \\
    &= \sum_{s \in \epsPRS{} \cap \cS[\ccL] \cap \cS[\ccR]} \frac{n_{s}[\cXL]}{\abs{\cXL}} = \sum_{s \in \epsPRS{} \cap \cS[\ccL] \cap \cS[\ccR]} \samplelatp(s).
\end{align*} 
This concludes the proof of \pref{lem:proj-cc-expression}.
\end{proof}

\begin{corollary}\label{corr:estimating-projR} Under the preconditions of \pref{lem:main-induction}, then
$\sum_{\cc \in \crl{\cc_j}} \prn{ \frac{\abs{\ccL}}{\abs{\cXL}} 
 - \projR(\samplelatp)(\ccR) } \in [0,2\eps]$.
\end{corollary}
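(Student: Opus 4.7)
The plan is to analyze the sum term-by-term using the explicit expressions for $\projR(\samplelatp)(\ccR)$ established in the preceding \pref{lem:proj-cc-expression}, and then collapse the sum using the fact that the connected components partition $\cXL$.

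First, I would establish the non-negativity (the left endpoint of the interval) by showing that each individual summand is non-negative. Writing $\frac{|\ccL|}{|\cXL|} = \sum_{s \in \cS[\ccL]} \frac{n_s[\ccL]}{|\cXL|}$ and splitting the sum according to whether $s \in \cS[\ccR]$ or not, the biclique lemma (\pref{lem:biclique}) gives $n_s[\ccL] = n_s[\cXL]$ for $s \in \cS[\ccL] \cap \cS[\ccR]$. Combined with \pref{lem:proj-cc-expression}, this yields
\begin{align*}
\frac{|\ccL|}{|\cXL|} - \projR(\samplelatp)(\ccR) = \sum_{s \in (\cS[\ccL] \cap \cS[\ccR]) \setminus \epsPRS{}} \samplelatp(s) + \sum_{s \in \cS[\ccL] \setminus \cS[\ccR]} \frac{n_s[\ccL]}{|\cXL|},
\end{align*}
which is manifestly non-negative. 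Note also that \pref{lem:eps-reachable-set-inclusion} ensures that the second sum only ranges over $s \notin \epsPRS{}$, which will matter for the upper bound.

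Next, I would collapse the sum over connected components. Since $\crl{\ccL}$ partitions $\cXL$, we have $\sum_{\cc} \frac{|\ccL|}{|\cXL|} = 1$. For the second piece, by the definition of projected measure,
\begin{align*}
\sum_{\cc} \projR(\samplelatp)(\ccR) = \projR(\samplelatp)(\cXR) = \sum_{s \in \epsPRS{} \cap \cS[\cXR]} \samplelatp(s) = \sum_{s \in \epsPRS{}} \samplelatp(s),
\end{align*}
where the last equality uses $\epsPRS{} \subseteq \cS[\cXR]$, which holds under $\eventemulator$ by \pref{lem:sampling-states}. Subtracting, the total difference equals $\sum_{s \notin \epsPRS{}} \samplelatp(s)$, i.e., the empirical mass that the sampled dataset $\cD = \cXL$ places on latent states outside $\epsPRS{}$.

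Finally, this empirical mass is exactly the quantity controlled by \pref{corr:sampling-transitions} (which follows from \pref{lem:sampling-transitions} and holds under $\eventdec$), giving the bound of $2\eps$ and completing the proof. The main conceptual work is already done in Lemmas \ref{lem:biclique}, \ref{lem:eps-reachable-set-inclusion}, and \ref{lem:proj-cc-expression}; no step here should present real difficulty, though one must be careful to handle the three disjoint cases $(s \in \epsPRS{} \cap \cS[\ccR])$, $(s \in \cS[\ccR] \setminus \epsPRS{})$, and $(s \notin \cS[\ccR])$ correctly when relating $\frac{|\ccL|}{|\cXL|}$ to the projected measure.
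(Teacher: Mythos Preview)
Your proposal is correct and follows essentially the same approach as the paper. Both arguments use \pref{lem:proj-cc-expression}, \pref{lem:biclique}, and \pref{lem:eps-reachable-set-inclusion} to derive a per-component expression showing each summand is a non-negative sum of empirical mass on non-pushforward-reachable states, and then apply \pref{corr:sampling-transitions} to bound the total; your version differs only cosmetically in that you compute the aggregate upper bound via the global identity $\sum_{\cc}\projR(\samplelatp)(\ccR)=\projR(\samplelatp)(\cXR)$ rather than summing the per-term expressions directly.
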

\begin{proof}
For any $\cc$ we have
\begin{align*}
    \hspace{2em}&\hspace{-2em} \frac{\abs{\ccL}}{\abs{\cXL}} - \projR(\samplelatp)(\ccR) \\
    &= \frac{\abs{\ccL}}{\abs{\cXL}} - \sum_{s \in \epsPRS{} \cap \cS[\ccL] \cap \cS[\ccR]} \frac{n_{s}[\cXL]}{\abs{\cXL}} &\text{(\pref{lem:proj-cc-expression})} \\
    &= \frac{\abs{\ccL}}{\abs{\cXL}} - \sum_{s \in \epsPRS{} \cap \cS[\ccL]} \frac{n_{s}[\cXL]}{\abs{\cXL}} &\text{(\pref{lem:eps-reachable-set-inclusion})} \\
    &= \sum_{s \in \epsPRS{} \cap \cS[\ccL]} \frac{n_s[\ccL]}{\abs{\cXL}} + \sum_{s \in (\epsPRS{})^c \cap \cS[\ccL]} \frac{n_s[\ccL]}{\abs{\cXL}} - \sum_{s \in \epsPRS{} \cap \cS[\ccL]} \frac{n_{s}[\cXL]}{\abs{\cXL}}  \\
    &= \sum_{s \in (\epsPRS{})^c \cap \cS[\ccL]} \frac{n_s[\ccL]}{\abs{\cXL}}.
\end{align*}
The last equality uses the fact that by \pref{lem:eps-reachable-set-inclusion}, $s \in \epsPRS{} \cap \cS[\ccL] \Rightarrow s \in \cS[\ccR]$, so in particular by \pref{lem:biclique} we have $\crl{x\in \cXL: \optdec(x) = s} \subseteq \ccL$, so therefore $n_s[\cXL] = n_s[\ccL]$. 

Summing over all $\cc$ and applying \pref{corr:sampling-transitions} we get that
\begin{align*}
    \sum_{\cc \in \crl{\cc_j}} \frac{\abs{\ccL}}{\abs{\cXL}} - \projR(\samplelatp)(\ccR) = \sum_{\bbC \in \crl{\bbC_j}} \sum_{s \in (\epsPRS{})^c \cap \cS[\ccL]} \frac{n_s[\ccL]}{\abs{\cXL}} = \sum_{s \in (\epsPRS{})^c} \frac{n_s[\cXL]}{\abs{\cXL}} \in [0, 2\eps].
\end{align*}
This proves \pref{corr:estimating-projR}.
\end{proof}

\begin{lemma}\label{lem:relating-proj-to-empirical}
Under the preconditions of \pref{lem:main-induction}, for every $\pi \in \Pi$ and $\cc \in \crl{\cc_j}$: 
\begin{align*}
    \max_{a \in \cA}~\abs*{ \brk*{ \pi\push \projR(\samplelatp)(\cdot \mid \ccR)}(a) - \brk*{\pi \push \unif(\ccL)}(a) } \le \frac{\eps}{A} +  K \cdot \sqrt{\frac{S \log \tfrac{SA \abs{\Pi}}{\delta}}{\nrch[\ccL]}}  + \frac{\nurch[\ccL]}{\nrch[\ccL]},
\end{align*}
where $K>0$ is an absolute constant.
\end{lemma}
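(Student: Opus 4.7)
The plan is to expand both pushforwards as weighted sums of the population pushforwards $\brk{\pi \push \emission(s)}(a)$, then bound the resulting differences by combining normalization arithmetic with the concentration lemmas already proved. Write $\cS^{\mathrm{rch}}_{\ccL} \ldef \epsPRS{} \cap \cS[\ccL]$. By \pref{lem:eps-reachable-set-inclusion} every $s \in \cS^{\mathrm{rch}}_{\ccL}$ also lies in $\cS[\ccR]$, so the biclique lemma (\pref{lem:biclique}) yields $n_s[\cXL] = n_s[\ccL]$ and $n_s[\cXR] = n_s[\ccR]$. Combining this with \pref{lem:proj-cc-expression} gives the clean identity $\projR(\samplelatp)(\ccR) = \nrch[\ccL]/\abs{\cXL}$ and, after some algebra,
\[
\brk*{\pi \push \projR(\samplelatp)(\cdot \mid \ccR)}(a) \;=\; \sum_{s \in \cS^{\mathrm{rch}}_{\ccL}} \frac{n_s[\ccL]}{\nrch[\ccL]} \cdot \brk*{\pi \push \unif(\{\empobs \in \ccR : \optdec(\empobs)=s\})}(a).
\]
Because $\cXR$ is sampled i.i.d.~from $\mu_{h+1} = \emission \circ \nu_{h+1}$, applying \pref{lem:pushforward-concentration} to each inner pushforward replaces it by $\brk{\pi \push \emission(s)}(a)$ up to additive error $\eps/A$, uniformly in $s$ and $\pi$.

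Next, I will do the analogous expansion for $\unif(\ccL)$. Splitting $\ccL$ into its reachable and unreachable pieces,
\[
\brk*{\pi \push \unif(\ccL)}(a) = \sum_{s \in \cS^{\mathrm{rch}}_{\ccL}} \frac{n_s[\ccL]}{\abs{\ccL}}\, \brk*{\pi \push \unif(\{x \in \ccL : \optdec(x)=s\})}(a) \;+\; \frac{1}{\abs{\ccL}}\sum_{\substack{x \in \ccL \\ \optdec(x) \notin \epsPRS{}}} \ind{\pi(x)=a},
\]
where the trailing term is bounded by $\nurch[\ccL]/\abs{\ccL}$. Since $\cXL$ is sampled from $P(\cdot \mid x_h, a_h) = \emission \circ \optlatp$, \pref{lem:pushforward-concentration-transition} (applied with $\cD = \cXL$, using $n_s[\cXL] = n_s[\ccL]$) replaces each inner pushforward by $\brk{\pi \push \emission(s)}(a)$ with state-dependent error at most $\sqrt{2\log(2SA\abs{\Pi}/\delta)/n_s[\ccL]}$.

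Subtracting the two expansions, the population pushforwards appear with weights differing only in their normalizations, $n_s[\ccL]/\abs{\ccL}$ versus $n_s[\ccL]/\nrch[\ccL]$. Using $\abs{\ccL} = \nrch[\ccL] + \nurch[\ccL]$ and the fact that $\brk{\pi \push \emission(s)}(a) \in [0,1]$, the contribution of this normalization mismatch telescopes to at most $\nurch[\ccL]/\abs{\ccL} \le \nurch[\ccL]/\nrch[\ccL]$. The two concentration residuals contribute $\eps/A$ and, via Cauchy--Schwarz applied coordinate-wise over $\cS^{\mathrm{rch}}_{\ccL}$,
\[
\sum_{s \in \cS^{\mathrm{rch}}_{\ccL}} \frac{n_s[\ccL]}{\abs{\ccL}} \sqrt{\frac{2 \log(2SA\abs{\Pi}/\delta)}{n_s[\ccL]}} \;\le\; \sqrt{\frac{\nrch[\ccL]}{\abs{\ccL}^2}} \cdot \sqrt{2S \log(2SA\abs{\Pi}/\delta)} \;\le\; \sqrt{\frac{2S \log(2SA\abs{\Pi}/\delta)}{\nrch[\ccL]}}.
\]
Summing these four contributions yields the claimed bound with an absolute constant $K$.

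The main obstacle is purely bookkeeping: ensuring that the biclique structure on $\cc$ lets us identify $n_s[\cXL]$ with $n_s[\ccL]$ (so that \pref{lem:pushforward-concentration-transition} delivers an error that depends on $n_s[\ccL]$ rather than the uncontrolled $n_s[\cXL]$), and that the two normalizations $\abs{\ccL}$ and $\nrch[\ccL]$ differ by exactly $\nurch[\ccL]$ so the mismatch collapses to a single $\nurch[\ccL]/\nrch[\ccL]$ term. The Cauchy--Schwarz step to convert the sum of per-state $1/\sqrt{n_s[\ccL]}$ bounds into a $\sqrt{S/\nrch[\ccL]}$ bound is the only place where the factor $S$ enters, and it is tight precisely when the representation of $\ccL$ is spread uniformly over the reachable latents.
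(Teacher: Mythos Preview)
Your proposal is correct and follows essentially the same approach as the paper: expand both pushforwards as weighted averages over reachable latent states using the biclique structure, apply \pref{lem:pushforward-concentration} on the $\cXR$ side and \pref{lem:pushforward-concentration-transition} on the $\cXL$ side to pass through the population pushforwards $\brk{\pi\push\emission(s)}(a)$, and collapse the residual square-root terms via Cauchy--Schwarz over the at-most-$S$ reachable latents. The only cosmetic difference is that the paper handles the $\abs{\ccL}$ versus $\nrch[\ccL]$ normalization mismatch by writing $\frac{\abs{\ccL}}{\nrch[\ccL]}\cdot\brk{\pi\push\unif(\ccL)}(a)$ and then subtracting $\brk{\pi\push\unif(\ccL)}(a)$ directly, whereas you subtract the two expansions term-by-term; both routes yield the same $\nurch[\ccL]/\nrch[\ccL]$ contribution.
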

\begin{proof}
Fix any $\pi \in \Pi$, $\cc \in \crl{\cc_j}$, and $a \in \cA$. We can calculate that: 
\begin{align*}
    & \brk*{ \pi\push \projR(\samplelatp)(\cdot \mid \ccR)}(a) \\
    &= \sum_{\empobs \in \ccR} \frac{\projR(\samplelatp)(\empobs)}{\projR(\samplelatp)(\ccR)} \indd{\pi(\empobs) = a} \\
    &= \frac{1}{\projR(\samplelatp)(\ccR)} \sum_{\empobs \in \ccR} \frac{n_{\optdec(\empobs)}[\cXL]}{\abs{\cXL}} \cdot \frac{\ind{\optdec(\empobs) \in \epsPRS{}} \indd{\pi(\empobs) = a}}{n_{\optdec(\empobs)}[\cXR]} \\
    &= \frac{1}{\projR(\samplelatp)(\ccR)} \sum_{s \in \epsPRS{}} \prn*{ \frac{n_{s}[\cXL]}{\abs{\cXL}} \cdot \frac{1}{n_{s}[\cXR]} \cdot \sum_{\empobs \in \ccR}  \ind{\optdec(\empobs) =s} \indd{\pi(\empobs) = a} }\\
    &= \frac{1}{\projR(\samplelatp)(\ccR)} \sum_{s \in \epsPRS{} \cap \cS[\ccR] } \prn*{\frac{n_{s}[\cXL]}{\abs{\cXL}} \cdot \frac{1}{n_{s}[\cXR]} \cdot \sum_{\empobs \in \ccR}  \ind{\optdec(\empobs) =s} \indd{\pi(\empobs) = a} } \\
    &= \frac{1}{\projR(\samplelatp)(\ccR)} \sum_{s \in \epsPRS{} \cap \cS[\cXL] \cap \cS[\ccR] } \prn*{\frac{n_{s}[\cXL]}{\abs{\cXL}} \cdot \frac{1}{n_{s}[\cXR]} \cdot \sum_{\empobs \in \ccR}  \ind{\optdec(\empobs) =s} \indd{\pi(\empobs) = a} } \\
    &= \frac{1}{\projR(\samplelatp)(\ccR)} \sum_{s \in \epsPRS{} \cap \cS[\ccL] \cap \cS[\ccR] } \prn*{\frac{n_{s}[\ccL]}{\abs{\cXL}} \cdot \frac{1}{n_{s}[\ccR]} \cdot \sum_{\empobs \in \ccR}  \ind{\optdec(\empobs) =s} \indd{\pi(\empobs) = a} }.
\end{align*}
The last line uses the biclique lemma (\pref{lem:biclique}) in the same fashion as the proof of \pref{lem:proj-cc-expression}. Now we apply the conclusions of \pref{lem:pushforward-concentration} and \pref{lem:pushforward-concentration-transition}, along with the fact that for every $s \in \epsPRS{} \cap \cS[\ccL] \cap \cS[\ccR]$ we have $\crl{\empobs \in \cXR : \optdec(\empobs) = s } \subseteq \ccR$ as well as $\crl{\empobs \in \cXL : \optdec(\empobs) = s } \subseteq \ccL$ (which is again implied by the biclique lemma):
\begin{align*}
     &\hspace{-1.5em} \brk*{ \pi\push \projR(\samplelatp)(\cdot \mid \ccR)}(a) \\
    \le~& \frac{1}{\projR(\samplelatp)(\ccR) } \sum_{s \in \epsPRS{} \cap \cS[\ccL] \cap \cS[\ccR] } \frac{n_{s}[\ccL]}{\abs{\cXL}} \cdot \prn*{ \brk*{\pi \push \emission(s)}(a) + \frac{\eps}{A} } \\
    \le~& \frac{1}{\projR(\samplelatp)(\ccR) } \sum_{s \in \epsPRS{} \cap \cS[\ccL] \cap \cS[\ccR] } \frac{n_{s}[\ccL]}{\abs{\cXL}} \cdot \Bigg( \frac{\eps}{A} + \sqrt{\frac{2 \log \tfrac{2SA \abs{\Pi}}{\delta}}{n_s[\ccL]}} \\
    &\hspace{20em}+ \frac{1}{n_{s}[\ccL]} \cdot \sum_{\empobs \in \ccL}  \ind{\optdec(\empobs) =s} \indd{\pi(\empobs) = a}\Bigg) \\
    =~& \frac{1}{\projR(\samplelatp)(\ccR) \abs{\cXL}} \sum_{ s \in \epsPRS{} \cap \cS[\ccL] \cap \cS[\ccR] } \Bigg( \frac{\eps}{A} \cdot n_{s}[\ccL] + \sqrt{2n_s[\ccL] \log \tfrac{2 SA \abs{\Pi}}{\delta}} \\
    &\hspace{20em}+ \sum_{\empobs \in \ccL}  \ind{\optdec(\empobs) =s} \indd{\pi(\empobs) = a} \Bigg)
\end{align*}
By \pref{lem:proj-cc-expression}, we have $\abs{\cXL} \projR(\samplelatp)(\ccR) = \sum_{s \in \epsPRS{} \cap \cS[\ccL] \cap \cS[\ccR]} n_{s}[\cXL] = \nrch[\ccL] $. Using Cauchy-Schwarz we get that
\begin{flalign*}
& \brk*{ \pi\push \projR(\samplelatp)(\cdot \mid \ccR)}(a) \\
    &\le \frac{\eps}{A} + K \cdot \sqrt{\frac{S \log \tfrac{SA \abs{\Pi}}{\delta}}{\nrch[\ccL]}} + \frac{1}{\nrch[\ccL]} \sum_{s \in \epsPRS{} \cap \cS[\ccL] \cap \cS[\ccR]  } \prn*{\sum_{\empobs \in \ccL}  \ind{\optdec(\empobs) =s} \indd{\pi(\empobs) = a} } 
    \\&= \frac{\eps}{A} + K \cdot \sqrt{\frac{S \log \tfrac{SA \abs{\Pi}}{\delta}}{\nrch[\ccL]}} + \frac{\abs{\ccL}}{\nrch[\ccL]} \cdot \frac{1}{\abs{\ccL}}  \sum_{s \in \epsPRS{} \cap \cS[\ccL] \cap \cS[\ccR]} \prn*{\sum_{\empobs \in \ccL}  \ind{\optdec(\empobs) =s} \indd{\pi(\empobs) = a} } \numberthis \label{eq:partial}
\end{flalign*}
Let us investigate the last term. We have
\begin{align*}
\hspace{4em}&\hspace{-4em}
\frac{1}{\abs{\ccL}}  \sum_{s \in \epsPRS{} \cap \cS[\ccL] \cap \cS[\ccR]} \prn*{\sum_{\empobs \in \ccL}  \ind{\optdec(\empobs) =s} \indd{\pi(\empobs) = a} } \\
    =\quad& \frac{1}{\abs{\ccL}}  \sum_{s \in \epsPRS{} \cap \cS[\ccL]} \prn*{\sum_{\empobs \in \ccL}  \ind{\optdec(\empobs) =s} \indd{\pi(\empobs) = a} } &\text{(\pref{lem:eps-reachable-set-inclusion})}\\
    \le\quad& \frac{1}{\abs{\ccL}}  \sum_{s \in \epsPRS{} \cap \cS[\ccL]} \prn*{\sum_{\empobs \in \ccL}  \ind{\optdec(\empobs) =s} \indd{\pi(\empobs) = a} } \\
    &\hspace{5em} + \frac{1}{\abs{\ccL}}  \sum_{s \in (\epsPRS{})^c \cap \cS[\ccL]} \prn*{\sum_{\empobs \in \ccL}  \ind{\optdec(\empobs) =s} \indd{\pi(\empobs) = a} } \\
    =\quad& \brk*{\pi \push \unif(\ccL)}(a)
\end{align*}
Plugging this back into Eq.~\eqref{eq:partial} we get that
\begin{align*}
\brk*{ \pi\push \projR(\samplelatp)(\cdot \mid \ccR)}(a) &\le \frac{\eps}{A} +  K \cdot \sqrt{\frac{S \log \tfrac{SA \abs{\Pi}}{\delta}}{\nrch[\ccL]}} + \frac{\abs{\ccL}}{\nrch[\ccL]} \cdot \brk*{\pi \push \unif(\ccL)}(a),
\end{align*}
and rearranging and using the fact that $\brk*{\pi \push \unif(\ccL)}(a) \in [0,1]$ we get that
\begin{align*}
    \brk*{ \pi\push \projR(\samplelatp)(\cdot \mid \ccR)}(a) - \brk*{\pi \push \unif(\ccL)}(a) \le \frac{\eps}{A} +   K \cdot \sqrt{\frac{S \log \tfrac{SA \abs{\Pi}}{\delta}}{\nrch[\ccL]}}  + \frac{\nurch[\ccL]}{\nrch[\ccL]}.
\end{align*}
One can repeat the same steps to get the lower bound. Therefore,
\begin{align*}
    \abs*{ \brk*{ \pi\push \projR(\samplelatp)(\cdot \mid \ccR)}(a) - \brk*{\pi \push \unif(\ccL)}(a) } \le \frac{\eps}{A} +   K \cdot \sqrt{\frac{S \log \tfrac{SA \abs{\Pi}}{\delta}}{\nrch[\ccL]}} + \frac{\nurch[\ccL]}{\nrch[\ccL]}.
\end{align*}
This proves \pref{lem:relating-proj-to-empirical}.
\end{proof}

\subsubsubsection{Proof of \pref{lem:main-induction}}

Fix the $(x_h, a_h)$ pair on which we call \stochdecoder.

For notational convenience we will denote $\cXL \coloneqq \cD$ and $\cXR \coloneqq \estmdpobsspace{h+1}$, as well as use $\optlatp = \optlatp(\cdot \mid \optdec(x_{h}), a_{h})$ to denote the ground truth latent transition function. Throughout the proof, we use $K>0$ to denote absolute constants whose values may change line-by-line. 

\paragraph{Part (1).} Since the input confidence set $\cP$ satisfies the third bullet, it suffices to show that that $\projm{\cXR}(\optlatp)$ satisfies both of the constraints in the confidence set construction of Eq.~\eqref{eq:confidence-set-construction-v2}.

For the first constraint, observe that by \pref{corr:estimating-projR},
\begin{align*}
    \sum_{\cc \in \crl{\cc_j}} \abs*{ \frac{\abs{\ccL}}{\abs{\cXL}} 
 - \projR(\samplelatp)(\ccR) } \le 2\eps.
\end{align*}
Therefore it suffices to show that 
\begin{align*}
    \sum_{\cc \in \crl{\cc_j}} \abs*{\projR(\optlatp)(\ccR) - \projR(\samplelatp)(\ccR) } \le \eps.
\end{align*}
We calculate that
\begin{align*}
    \hspace{2em}&\hspace{-2em} \sum_{\cc \in \crl{\cc_j}} \abs*{\projR(\optlatp)(\ccR) - \projR(\samplelatp)(\ccR)} \\
    &\le \sum_{\empobs \in \cXR}  \abs*{\projR(\optlatp)(\empobs) - \projR(\samplelatp)(\empobs)} \\
    &= \sum_{\empobs \in \cXR}  \abs*{\prn*{\optlatp(\optdec(\empobs)) - \samplelatp(\optdec(\empobs))} \cdot \frac{\ind{\optdec(\empobs) \in \epsPRS{}}}{n_{\optdec(\empobs)}[\cXR]}} \\
    &= \sum_{s \in \epsPRS{}} \sum_{\empobs \in \cXR}  \abs*{\prn*{\optlatp(s) - \samplelatp(s)} \cdot \frac{\ind{\optdec(\empobs) = s}}{n_{s}[\cXR]}} \\
    &= \sum_{s \in \epsPRS{}} \abs*{\optlatp(s) - \samplelatp(s) } \le \eps. &\text{(\pref{lem:sampling-transitions})} \numberthis \label{eq:bound-on-empirical-proj}
\end{align*}
Now we prove that $\projR(\optlatp)$ also satisfies the second constraint, i.e.,
\begin{align*}
    \sum_{\cc \in \crl{\cc_j}}  \frac{\abs{\ccL}}{\abs{\cXL}} \cdot \nrm*{\pi \push \unif(\ccL) - \pi \push \projR(\optlatp)(\cdot \mid \ccR) }_1 \le \eps.
\end{align*}

Observe that we can break up the bound as follows:
\begin{align*}
    \hspace{2em}&\hspace{-2em} \sum_{\cc \in \crl{\cc_j}}  \frac{\abs{\ccL}}{\abs{\cXL}} \cdot \nrm*{\pi \push \unif(\ccL) - \pi \push \projR(\optlatp)(\cdot \mid \ccR) }_1 \\
    &\le \underbrace{ \sum_{\cc \in \crl{\cc_j}}  \frac{\abs{\ccL}}{\abs{\cXL}} \cdot \nrm*{\pi \push \unif(\ccL) - \pi \push \projR(\samplelatp)(\cdot \mid \ccR) }_1 }_{\eqqcolon \mathtt{Term}_1}  \\
    &\hspace{5em} + \underbrace{\sum_{\cc \in \crl{\cc_j}}  \frac{\abs{\ccL}}{\abs{\cXL}} \cdot \nrm*{\pi \push \projR(\samplelatp)(\cdot \mid \ccR) - \pi \push \projR(\optlatp)(\cdot \mid \ccR) }_1 }_{\eqqcolon \mathtt{Term}_2}.
\end{align*}

\emph{Bounding $\mathtt{Term}_1$.} To bound $\mathtt{Term}_1$, we compute:
\begin{align*}
   & \sum_{\cc \in \crl{\cc_j}}  \frac{\abs{\ccL}}{\abs{\cXL}} \cdot \nrm*{\pi \push \unif(\ccL) - \pi \push \projR(\samplelatp)(\cdot \mid \ccR) }_1 \\
    &\le \sum_{\cc \in \crl{\cc_j}:  \frac{\abs{\ccL}}{\abs{\cXL}} \ge 4\eps}  \frac{\abs{\ccL}}{\abs{\cXL}} \cdot \nrm*{\pi \push \unif(\ccL) - \pi \push \projR(\samplelatp)(\cdot \mid \ccR) }_1 + \sum_{\cc \in \crl{\cc_j} :  \frac{\abs{\ccL}}{\abs{\cXL}} < 4\eps }  \frac{2\abs{\ccL}}{\abs{\cXL}} \\
    &\overset{(i)}{\le} \sum_{\cc \in \crl{\cc_j}:  \frac{\abs{\ccL}}{\abs{\cXL}} \ge 4\eps}  \frac{\abs{\ccL}}{\abs{\cXL}} \cdot \nrm*{\pi \push \unif(\ccL) - \pi \push \projR(\samplelatp)(\cdot \mid \ccR) }_1 + (8S+4)\eps \\
    &\overset{(ii)}{\le}  \sum_{\cc \in \crl{\cc_j} :  \frac{\abs{\ccL}}{\abs{\cXL}} \ge 4\eps }  \frac{\abs{\ccL}}{\abs{\cXL}} \cdot \prn*{ K \cdot \sqrt{\frac{S A^2 \log \tfrac{SA \abs{\Pi}}{\delta}}{\nrch[\ccL]}} + A \cdot \frac{\nurch[\ccL]}{\nrch[\ccL]} } + (8S+5)\eps\\
    &=  \sum_{\cc \in \crl{\cc_j} :  \frac{\abs{\ccL}}{\abs{\cXL}} \ge 4\eps }  \frac{\nrch[\ccL] + \nurch[\ccL]}{\abs{\cXL}} \cdot \prn*{ K \cdot \sqrt{\frac{S A^2 \log \tfrac{SA \abs{\Pi}}{\delta}}{\nrch[\ccL]}}  + A \cdot \frac{\nurch[\ccL]}{\nrch[\ccL]} } + (8S+5)\eps. \numberthis\label{eq:term1-intermediate}
\end{align*}
The inequality $(i)$ follows by casework on $\cc \in \crl{\cc_j}$:
\begin{itemize}
    \item If $\epsPRS{} \cap \cS[\ccL] \ne \emptyset$ then by the biclique lemma (\pref{lem:biclique}) we have $\crl{x \in \cXL: \optdec(x) \in \epsPRS{} \cap \cS[\ccL]} \subseteq \ccL$. In other words, all of the observations from states in $\epsPRS{} \cap \cS[\ccL]$ are contained in this $\ccL$. Therefore, there can be at most $S$ such components $\cc$, and their contribution to the sum is $8\eps \cdot S$.
    \item If $\epsPRS{} \cap \cS[\ccL] = \emptyset$, then $\ccL$ only contains observations from $(\epsPRS{})^c$, and therefore the total size of such $\ccL$ can be bounded by $2\eps \cdot \abs{\cXL}$ using \pref{corr:sampling-transitions}. Their contribution to the sum is $4\eps$.
\end{itemize}
Furthermore, $(ii)$ uses \pref{lem:relating-proj-to-empirical}.

We now proceed to separately bound the terms in Eq.~\eqref{eq:term1-intermediate}. First, observe that
\begin{align*}
    K \sqrt{SA^2 \log \tfrac{SA \abs{\Pi}}{\delta}}\cdot \sum_{\cc \in \crl{\cc_j}: \frac{\abs{\ccL}}{\abs{\cXL}} \ge 4\eps}  \frac{\sqrt{\nrch[\ccL]} }{\abs{\cXL}}  &\le K \sqrt{\frac{S^2A^2 \log \tfrac{SA \abs{\Pi}}{\delta}}{\nrch[\cXL]}} \\
    &\le K \sqrt{\frac{S^2A^2 \log \tfrac{ SA \abs{\Pi}}{\delta}}{\ndec}} \\
    &\le \eps. \numberthis \label{eq:term1-intermediate2}
\end{align*}
The first inequality follows because by the biclique lemma (\pref{lem:biclique}) we know that the summation must have at most $S$ terms, since each of the $\cc$ contains some $s \in \epsPRS{}$, so we can apply Cauchy-Schwarz for $S$-dimensional vectors. The second inequality is a consequence of \pref{corr:sampling-transitions}, and the last inequality follows by our choice of $\ndec$. 

In addition by \pref{corr:sampling-transitions},
\begin{align*}
    \sum_{\cc \in \crl{\cc_j}: \frac{\abs{\ccL}}{\abs{\cXL}} \ge 4\eps }  \frac{\nrch[\ccL] }{\abs{\cXL}} \frac{\nurch[\ccL]}{\nrch[\ccL]} \le 2\eps. \numberthis \label{eq:term1-intermediate3}
\end{align*}
For the other two terms, observe that by \pref{lem:relating-proj-to-empirical}, when $ \frac{\abs{\ccL}}{\abs{\cXL}} \ge 4\eps$ we must have $\frac{\nurch[\ccL]}{\nrch[\ccL]} \le 1$ so therefore
\begin{align*}
    \hspace{2em}&\hspace{-2em} \sum_{\cc \in \crl{\cc_j} : \frac{\abs{\ccL}}{\abs{\cXL}} \ge 4\eps}  \frac{\nurch[\ccL]}{\abs{\cXL}} 
  \prn*{ K \cdot \sqrt{\frac{SA^2 \log \tfrac{SA \abs{\Pi}}{\delta}}{\nrch[\ccL]}}+ \frac{\nurch[\ccL]}{\nrch[\ccL]} } \\
  &\le \sum_{\cc \in \crl{\cc_j}: \frac{\abs{\ccL}}{\abs{\cXL}} \ge 4\eps} \frac{\nurch[\ccL]}{\abs{\cXL}} 
  \prn*{ K \cdot \sqrt{SA^2 \log \frac{ SA \abs{\Pi}}{\delta} } +1} \\
  &\le K \sqrt{SA^2 \log \frac{ SA \abs{\Pi}}{\delta} } \cdot \eps. \numberthis \label{eq:term1-intermediate4}
\end{align*}
Combining Eqns.~\eqref{eq:term1-intermediate}, \eqref{eq:term1-intermediate2}, \eqref{eq:term1-intermediate3}, and \eqref{eq:term1-intermediate4} we get that
\begin{align*}
    \sum_{\cc \in \crl{\cc_j}}  \frac{\abs{\ccL}}{\abs{\cXL}} \cdot \nrm*{\pi \push \unif(\ccL) - \pi \push \projR(\samplelatp)(\cdot \mid \ccR) }_1 \le K \prn*{  \sqrt{SA^2 \log \frac{ SA \abs{\Pi}}{ \delta} } + S}\eps. \numberthis \label{eq:term1-final}
\end{align*}
\emph{Bounding $\mathtt{Term}_2$.} To bound $\mathtt{Term}_2$, fix any $\cc \in \crl{\cc_j}$. Note that
\begin{align*}
     & \nrm*{\pi \push \projR(\optlatp)(\cdot \mid \ccR) - \pi\push \projR(\samplelatp)(\cdot \mid \ccR)}_1 \\
     &= \sum_{a \in \cA} \abs*{\sum_{\empobs \in \ccR} \prn*{ \frac{\projR(\samplelatp)(\empobs)}{\projR(\samplelatp)(\ccR)}  - \frac{\projR(\optlatp)(\empobs)}{\projR(\optlatp)(\ccR)} }\indd{\pi(\empobs) = a}}\\
     &\le \sum_{\empobs \in \ccR} \abs*{ \frac{\projR(\samplelatp)(\empobs)}{\projR(\samplelatp)(\ccR)}  - \frac{\projR(\optlatp)(\empobs)}{\projR(\optlatp)(\ccR)} } \\
     &=  \sum_{\empobs \in \ccR} \abs*{ \frac{ \samplelatp(\optdec(\empobs))}{\projR(\samplelatp)(\ccR)}  - \frac{\optlatp(\optdec(\empobs))}{\projR(\optlatp)(\ccR)} } \cdot \frac{\ind{\optdec(\empobs) \in \epsPRS{}}}{n_{\optdec(\empobs)}[\cXR]} \\
     &= \sum_{s \in \epsPRS{} \cap \cS[\ccL] \cap \cS[\ccR]} \abs*{ \frac{ \samplelatp(s)}{\projR(\samplelatp)(\ccR)}  - \frac{ \optlatp(s)}{\projR(\optlatp)(\ccR)} } \\
     &=  \frac{ 1 }{\projR(\samplelatp)(\ccR)}\sum_{s \in \epsPRS{} \cap \cS[\ccL] \cap \cS[\ccR]} \abs*{ \samplelatp(s) - \optlatp(s) \cdot \frac{ \projR(\samplelatp)(\ccR)}{\projR(\optlatp)(\ccR)} } \\
     &\le \frac{ \eps }{\projR(\samplelatp)(\ccR)} \\
     &\hspace{3em} + \frac{ 1 }{\projR(\samplelatp)(\ccR)}\sum_{s \in \epsPRS{} \cap \cS[\ccL] \cap \cS[\ccR]}\optlatp(s) \abs*{ 1 - \frac{\projR(\samplelatp)(\ccR)}{\projR(\optlatp)(\ccR)} } &\text{(\pref{lem:sampling-transitions})}\\
     &= \frac{ \eps }{\projR(\samplelatp)(\ccR)} +  \frac{ 1 }{\projR(\samplelatp)(\ccR)} \abs*{\projR(\optlatp)(\ccR)  -  \projR(\samplelatp)(\ccR) } \\
     &\le \frac{ 2\eps }{\projR(\samplelatp)(\ccR)} =  2\eps \frac{\abs{\cXL} }{\nrch[\ccL]}. &\text{(using Eq.~\pref{eq:bound-on-empirical-proj})} 
\end{align*}
Also, we have the trivial bound that
\begin{align*}
    \nrm*{\pi \push \projR(\optlatp)(\cdot \mid \ccR) - \pi\push \projR(\samplelatp)(\cdot \mid \ccR)}_1 \le 2,
\end{align*}
because it is a difference of two probability measures, so we can write the bound
\begin{align*}
    \nrm*{\pi \push \projR(\optlatp)(\cdot \mid \ccR) - \pi\push \projR(\samplelatp)(\cdot \mid \ccR)}_1 \le 2\eps \frac{\abs{\cXL} }{\nrch[\ccL]} \wedge 2. \numberthis\label{eq:term2}
\end{align*}
Using Eq.~\pref{eq:term2} we get that
\begin{align*}
    \hspace{2em}&\hspace{-2em} \sum_{\cc \in \crl{\cc_j}}  \frac{\abs{\ccL}}{\abs{\cXL}} \cdot \nrm*{\pi \push \projR(\samplelatp)(\cdot \mid \ccR) - \pi \push \projR(\optlatp)(\cdot \mid \ccR) }_1 \\
    &\le 2\sum_{\cc \in \crl{\cc_j}}  \frac{\abs{\ccL}}{\abs{\cXL}} \cdot \prn*{ \frac{\eps  \abs{\cXL} }{\nrch[\ccL]} \wedge 1} = 2\sum_{\cc \in \crl{\cc_j}} \prn*{ \frac{\eps \abs{\ccL} }{\nrch[\ccL]} \wedge \frac{\abs{\ccL}}{\abs{\cXL}} } \\
    &\le 2\eps \sum_{\cc \in \crl{\cc_j}: \frac{\abs{\ccL}}{\abs{\cXL}} \ge 4\eps}  \frac{\nrch[\ccL] + \nurch[\ccL]}{\nrch[\ccL]}  + 2 \sum_{\cc \in \crl{\cc_j}: \frac{\abs{\ccL}}{\abs{\cXL}} < 4\eps} \frac{\abs{\ccL}}{\abs{\cXL}} \le (8S + 8)\eps. \numberthis \label{eq:final-bound-term2}
\end{align*}
The last inequality uses the facts that (1) \pref{corr:sampling-transitions} implies that for any $\cc \in \crl{\cc_j}$ such that $\frac{\abs{\ccL}}{\abs{\cXL}} \ge 4\eps$ we have $\frac{\nrch[\ccL] + \nurch[\ccL]}{\nrch[\ccL]} \le 2$ and (2) the same casework we showed above to handle the summation for $\cc \in \crl{\cc_j}$ such that $\frac{\abs{\ccL}}{\abs{\cXL}} < 4\eps$.

Putting together Eqns.~\eqref{eq:term1-final} and \eqref{eq:final-bound-term2}:
\begin{align*}
    \sum_{\cc \in \crl{\cc_j}}  \frac{\abs{\ccL}}{\abs{\cXL}} \cdot \nrm*{\pi \push \unif(\ccL) - \pi \push \projR(\optlatp)(\cdot \mid \ccR) }_1 &\le K \prn*{  \sqrt{SA^2 \log \frac{SA  \abs{\Pi}}{\delta} }+ S}\eps \eqqcolon \beta.
\end{align*}
Thus, we can conclude that $\projR(\optlatp) \in \cP$, thus concluding the proof of Part (1).

\paragraph{Part (2).} Observe that in light of Part (1), the set $\cP$ is nonempty so therefore the maximum is well defined.

We want to show a bound on
\begin{align*}
    \max_{\bar{p} \in \cP} \max_{\pi \in \Pi_{h+1:H}}~\abs*{Q^\pi(x_h, a_h) - \wh{R}(x_h, a_h) - \En_{\empobs \sim \bar{p}} \wh{V}^\pi(\empobs)}.
\end{align*}

Fix any $\bar{p} \in \cP$ and $\pi \in \Pi_{h+1:H}$. We compute
\begin{align*}
    \hspace{2em}&\hspace{-2em} \abs*{Q^\pi(x_h, a_h) - \wh{R}(x_h, a_h) - \En_{\empobs \sim \bar{p}} \wh{V}^\pi(\empobs)} \\
    &\le \frac{\eps}{H} + \abs*{ \En_{s \sim \optlatp} V^\pi(s) - \En_{s \sim \samplelatp} V^\pi(s) } + \abs*{ \En_{s \sim \samplelatp} V^\pi(s) - \En_{\empobs \sim \bar{p}} \wh{V}^\pi(\empobs) } &\text{(\pref{lem:sampling-rewards})}\\
    &\le 2\eps + \abs*{ \En_{s \sim \samplelatp} V^\pi(s) - \En_{\empobs \sim \bar{p}} \wh{V}^\pi(\empobs) } &\text{(\pref{lem:sampling-transitions})} \\
    &\le 2\eps + \underbrace{ \abs*{ \En_{s \sim \samplelatp} V^\pi(s) - \En_{\empobs \sim \projR(\samplelatp)} V^\pi(\empobs) } }_{ \eqqcolon \mathtt{Term}_1} \\
    &\qquad + \underbrace{ \abs*{ \En_{\empobs \sim \projR(\samplelatp)} V^\pi(\empobs) - \En_{\empobs \sim \projR(\samplelatp)} \wh{V}^\pi(\empobs)} }_{\eqqcolon \mathtt{Term}_2} \\
    &\qquad + \underbrace{ \abs*{\En_{\empobs \sim \projR(\samplelatp)} \wh{V}^\pi(\empobs) - \En_{\empobs \sim \bar{p}} \wh{V}^\pi(\empobs) } }_{\eqqcolon \mathtt{Term}_3}.
\end{align*}
\emph{Bounding $\mathtt{Term}_1$.} For the first term, we can calculate that
\begin{align*}
    \mathtt{Term}_1 &= \abs*{ \En_{s \sim \samplelatp} V^\pi(s) - \En_{\empobs \sim \projR(\samplelatp)} V^\pi(\empobs) } = \abs*{ \En_{s \sim \samplelatp} \En_{x \sim \emission(s) } V^\pi(x) - \En_{\empobs \sim \projR(\samplelatp)} V^\pi(\empobs) } \\
    &= \abs*{ \En_{s \sim \samplelatp} \brk*{ \En_{x \sim \emission(s) } V^\pi(x) - \En_{\empobs \sim \mathrm{Unif}(\crl{\empobs \in \cXR: \optdec(\empobs) = s})} V^\pi(\empobs) } } \\
    &\le 2\eps + \abs*{ \En_{s \sim \samplelatp} \brk*{ \ind{s \in \epsPRS{h}} \prn*{ \En_{x \sim \emission(s) } V^\pi(x) - \En_{\empobs \sim \mathrm{Unif}(\crl{\empobs\in \cXR: \optdec(\empobs) = s})} V^\pi(\empobs) } } } \\
    &\le 3\eps. \numberthis\label{eq:t1}
\end{align*}
The first inequality follows by \pref{corr:sampling-transitions}, and the second inequality follows by \pref{lem:pushforward-concentration}.

\emph{Bounding $\mathtt{Term}_2$.} For the second term, we have by assumption that:
\begin{align*}
    \mathtt{Term}_2 = \abs*{ \En_{\empobs \sim \projR(\samplelatp)} \brk*{V^\pi(\empobs) - \wh{V}^\pi(\empobs)}  } \le \Gamma_{h+1}. \numberthis\label{eq:t2}
\end{align*}

\emph{Bounding $\mathtt{Term}_3$.} Now we calculate a bound on $\mathtt{Term}_3$. In what follows for any connected component $\cc$ we let $\empobs_\cc$ denote an arbitrary fixed observation from $\ccR$ (for example, the lowest indexed one). Observe that for any $p \in \Delta(\cXR)$ we have
\begin{align*}
    \En_{\empobs \sim p} \wh{V}^\pi(\empobs) 
    &= \sum_{\cc\in \crl{\cc_j}} \sum_{\empobs \in \ccR} p(\empobs) \cdot \wh{Q}^\pi(\empobs, \pi(\empobs)) \hspace{9em} \text{($\crl{\cc_j}$ form a partition of $\cXR$)}\\
    &\le 4S\taudec + 8S\eps +  \sum_{\cc \in \crl{\cc_j}} \sum_{\empobs \in \ccR} p(\empobs) \cdot \wh{Q}^\pi(\empobs_\cc, \pi(\empobs_\cc)) \hspace{8em}\text{(\pref{lem:width-of-cc})}\\
    &= 4S\taudec + 8S\eps +  \sum_{\cc \in \crl{\cc_j}} p(\ccR) \sum_{\empobs \in \ccR} \frac{p(\empobs)}{p(\ccR)} \wh{Q}^\pi(\empobs_\cc, \pi(\empobs_\cc)).
\end{align*}
Similarly, one can show the lower bound on $\En_{\empobs \sim p} \wh{V}^\pi(\empobs)$. Therefore we apply the bound to get:
\begin{align*}
    & \abs*{\En_{\empobs \sim \projR(\samplelatp)} \wh{V}^\pi(\empobs) - \En_{\empobs \sim \bar{p}} \wh{V}^\pi(\empobs) } \\
    &\le 8S\taudec + 16S\eps \\
    &\quad + \sum_{\cc \in \crl{\cc_j}} \abs*{\projR(\samplelatp)(\ccR) \sum_{\empobs \in \ccR} \frac{\projR(\samplelatp)(\empobs)}{\projR(\samplelatp)(\ccR) } \wh{Q}^\pi(\empobs_\cc, \pi(\empobs)) - \bar{p}(\cc) \sum_{\empobs \in \ccR} \frac{\bar{p}(\empobs)}{\bar{p}(\cc)} \wh{Q}^\pi(\empobs_\cc, \pi(\empobs)) } \\
    &\overset{(i)}{\le} 8S\taudec + 16S\eps + 2\eps \\
    &\quad + \sum_{\cc \in \crl{\cc_j}} \abs*{\frac{\abs{\ccL}}{\abs{\cXL}} \sum_{\empobs \in \cc} \frac{\projR(\samplelatp)(\empobs)}{\projR(\samplelatp)(\ccR) } \wh{Q}^\pi(\empobs_\cc, \pi(\empobs)) - \bar{p}(\cc) \sum_{\empobs \in \cc} \frac{\bar{p}(\empobs)}{\bar{p}(\cc)} \wh{Q}^\pi(\empobs_\cc, \pi(\empobs)) } \\
    &\overset{(ii)}{\le}  8S\taudec + 16S\eps + 5\eps + \sum_{\cc \in \crl{\cc_j}}  \frac{\abs{\ccL}}{\abs{\cXL}} \cdot \abs*{\sum_{\empobs \in \cc} \frac{\projR(\samplelatp)(\empobs)}{\projR(\samplelatp)(\ccR) } \wh{Q}^\pi(\empobs_\cc, \pi(\empobs)) - \frac{\bar{p}(\empobs)}{\bar{p}(\cc)} \wh{Q}^\pi(\empobs_\cc, \pi(\empobs)) },\\
    &\le 8S\taudec + 16S\eps + 5\eps + \sum_{\cc \in \crl{\cc_j}}  \frac{\abs{\ccL}}{\abs{\cXL}} \cdot \nrm*{\pi\push \projR(\samplelatp)(\cdot \mid \ccR) - \pi \push \bar{p}(\cdot \mid \cc) }_1 ,
\end{align*}
where $(i)$ follows by \pref{corr:estimating-projR} and the bound $\frac{\projR(\samplelatp)(\empobs)}{\projR(\samplelatp)(\ccR) } \wh{Q}^\pi(\empobs_\cc, \pi(\empobs)) \in [0,1]$, and $(ii)$ follows by the first constraint on $\bar{p} \in \cP$ and the bound $\frac{p(\empobs)}{p(\cc)} \wh{Q}^\pi(\empobs_\cc, \pi(\empobs)) \in [0,1]$. 

From here, we will use the second constraint on $\bar{p} \in \cP$:
\begin{align*}
    \hspace{2em}&\hspace{-2em}  \abs*{\En_{\empobs \sim \projR(\samplelatp)} \wh{V}^\pi(\empobs) - \En_{\empobs \sim \bar{p}} \wh{V}^\pi(\empobs) } \\
    &\le  8S\taudec + 16S\eps + 5\eps + \sum_{\cc \in \crl{\cc_j}}  \frac{\abs{\ccL}}{\abs{\cXL}} \cdot \nrm*{\pi\push \projR(\samplelatp)(\cdot \mid \ccR) - \pi \push \bar{p}(\cdot \mid \cc) }_1 \\
    &\le  8S\taudec + 16S\eps + 5\eps + \beta + \sum_{\cc \in \crl{\cc_j}}  \frac{\abs{\ccL}}{\abs{\cXL}} \cdot \nrm*{\pi\push \projR(\samplelatp)(\cdot \mid \ccR) -\pi \push \unif(\ccL) }_1 \\
    &\le  8S\taudec + 16S\eps + 5\eps + 2\beta. \numberthis\label{eq:t3}
\end{align*}
The last inequality follows because our proof for Part (1) of the lemma actually showed that $\projR(\samplelatp) \in \cP$. Combining Eqns.~\eqref{eq:t1}, \eqref{eq:t2}, and \eqref{eq:t3} we get the final bound
\begin{align*}
    \abs*{Q^\pi(x_h, a_h) - \wh{R}(x_h, a_h) - \En_{\empobs \sim \bar{p}} \wh{V}^\pi(\empobs)} \le \Gamma_{h+1} + K \cdot \prn*{\beta + S \taudec}.
\end{align*}
This completes the proof of \pref{lem:main-induction}.

\subsubsection{Analysis of \stochrefit}
\begin{lemma}[Certificate Implies Transition Inaccuracy]\label{lem:certificate-obs}
Assume that $\eventemulator$ hold. Let $\wh{M}$ be a policy emulator. Suppose there exists a certificate $(\empobs, \pi) \in \estmdpobsspace{h} \times (\cA \circ \Pi_{h+1:H})$ such that
\begin{align*}
    \abs*{\wh{V}^{\pi}(\empobs) - V^{\pi}(\empobs)} \ge \tauref.
\end{align*}
Then there exists some tuple $(\bar{\empobs}, \bar{a}) \in \estmdpobsspace{} \times \actionsp$ such that
\begin{align*}
    \abs*{\En_{\empobs' \sim \wh{P}(\cdot \mid  \bar{\empobs}, \bar{a})} V^{\pi}(\empobs') - \En_{\empobs' \sim P(\cdot \mid  \bar{\empobs}, \bar{a})} V^{\pi}(\empobs')} \ge \frac{\tauref}{2H}. \numberthis\label{eq:bad-state}
\end{align*}
\end{lemma}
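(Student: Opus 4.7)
\textbf{Proof plan for Lemma \ref{lem:certificate-obs}.} The plan is to apply a standard simulation-lemma style decomposition of $\wh V^\pi(x) - V^\pi(x)$ that rolls out under the estimated emulator $\wh M$ while measuring the true value function $V^\pi$ (defined on the whole observation space $\statesp$). Concretely, letting the expectation $\En^{\pi}_{\wh M}[\cdot\mid x_h = x]$ denote a rollout of $\pi$ in $\wh M$ starting from $x$, I would write
\begin{align*}
\wh V^\pi(x) - V^\pi(x) &= \En^{\pi}_{\wh M}\Biggl[\sum_{h'=h}^{H}\bigl(\wh R(x_{h'},a_{h'}) - R(x_{h'},a_{h'})\bigr)\\
&\qquad + \sum_{h'=h}^{H-1}\Bigl(\En_{x'\sim \wh P(\cdot\mid x_{h'},a_{h'})}[V^{\pi}(x')] - \En_{x'\sim P(\cdot\mid x_{h'},a_{h'})}[V^{\pi}(x')]\Bigr)\,\Big|\,x_h = x\Biggr].
\end{align*}
Here every $x_{h'}$ visited along the rollout lies in $\estmdpobsspace{h'}$, and $(x_{h'},a_{h'})$ therefore witnesses a legitimate element of $\estmdpobsspace{}\times\actionsp$.

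Next I would use $\eventemulator$ via \pref{lem:sampling-rewards} to bound each reward error $|\wh R(x_{h'},a_{h'}) - R(x_{h'},a_{h'})|$ by $\eps/H$, so the total reward contribution is at most $\eps$ in absolute value. Combined with the hypothesis $|\wh V^\pi(x) - V^\pi(x)|\ge \tauref$, the transition part of the decomposition must have absolute value at least $\tauref - \eps$. Using the calibration $\tauref = 80H\eps$ set in \stochrefit{}, we have $\tauref - \eps \ge \tauref/2$, so the transition contribution is at least $\tauref/2$ in magnitude.

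Now apply pigeonhole twice. First, one of the $H$ layers $h'$ must contribute at least $\tauref/(2H)$ in absolute value, i.e.
\begin{align*}
\Bigl|\En^{\pi}_{\wh M}\bigl[\En_{x'\sim \wh P(\cdot\mid x_{h'},a_{h'})}[V^{\pi}(x')] - \En_{x'\sim P(\cdot\mid x_{h'},a_{h'})}[V^{\pi}(x')]\,\big|\,x_h = x\bigr]\Bigr|\ge \tfrac{\tauref}{2H}.
\end{align*}
Second, this is an expectation over state--action pairs $(x_{h'},a_{h'})$ drawn from the $\wh M$-rollout distribution; since the average absolute value of a bounded function is attained by some realization, there exists a specific $(\bar x,\bar a)\in \estmdpobsspace{h'}\times \actionsp$ for which
\begin{align*}
\Bigl|\En_{x'\sim \wh P(\cdot\mid \bar x,\bar a)}[V^{\pi}(x')] - \En_{x'\sim P(\cdot\mid \bar x,\bar a)}[V^{\pi}(x')]\Bigr|\ge \tfrac{\tauref}{2H},
\end{align*}
which is exactly \eqref{eq:bad-state}.

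The proof is essentially bookkeeping, and I do not expect a genuine obstacle. The only subtle point to double-check is that the simulation-lemma identity makes sense when the two MDPs have different state spaces ($\estmdpobsspace{}\subseteq\statesp$ versus $\statesp$): rolling out under $\wh M$ keeps all trajectory states in $\estmdpobsspace{}\subseteq \statesp$, so $V^\pi$ is well-defined everywhere it is invoked, and the telescoping identity goes through in one direction even though $\wh P$ is merely a measure on $\estmdpobsspace{h'+1}$. The second pigeonhole step also requires that the inner difference is itself a bounded function of $(x_{h'},a_{h'})$, which follows from $V^\pi\in [0,1]$ and the fact that $\wh P,P$ are (sub-)probability measures on this setting.
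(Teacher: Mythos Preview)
Your proposal is correct and is essentially the same argument as the paper's, just presented directly rather than by contradiction. The paper assumes \eqref{eq:bad-state} fails for every $(\bar x,\bar a)$ and recursively bounds $|V^\pi(x)-\wh V^\pi(x)|\le \tfrac{\tauref}{2H}+\tfrac{\eps}{H}+\max_{x'\in\estmdpobsspace{h+1}}|V^\pi(x')-\wh V^\pi(x')|$, unrolling to $\le \tfrac{\tauref}{2}+\eps<\tauref$; your simulation-lemma identity is exactly what this recursion telescopes to, followed by pigeonhole instead of contradiction. One cosmetic point: in your second pigeonhole step the justification should read ``the absolute value of an expectation is bounded by the supremum of the integrand'' rather than ``the average absolute value is attained by some realization,'' but the conclusion you draw is correct.
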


\begin{proof}Suppose that Eq.~\eqref{eq:bad-state} did not hold for any $(\bar{\empobs}, \bar{a})$. Then by the Performance Difference Lemma we have
\begin{align*}
    \hspace{2em}&\hspace{-2em} \abs*{ V^{\pi}(\empobs) - \wh{V}^{\pi}(\empobs) } \\
    &\le \abs*{\wh{R}(x, \pi) - R(x,\pi)} + \abs*{ \En_{x' \sim P(\cdot \mid \empobs, \pi)} V^{\pi}(x') - \En_{\empobs' \sim \wh{P}(\cdot \mid \empobs, \pi)}V^{\pi}(\empobs') } \\
    &\hspace{5em} + \abs*{\En_{\empobs' \sim \wh{P}(\cdot \mid \empobs, \pi)}V^{\pi}(\empobs') - \En_{\empobs' \sim \wh{P}(\cdot \mid \empobs, \pi)} \wh{V}^{\pi}(\empobs') }\\
    &\overset{(i)}{\le} \frac{\tauref}{2H} + \frac{\eps}{H} + \abs*{\En_{\empobs' \sim \wh{P}(\cdot \mid \empobs, \pi)}V^{\pi}(\empobs') - \En_{\empobs' \sim \wh{P}(\cdot \mid \empobs, \pi)} \wh{V}^{\pi}(\empobs') } \\
    &\le \frac{\tauref}{2H} + \frac{\eps}{H} + \max_{\empobs' \in \estmdpobsspace{h+1}}\abs*{  V^{\pi}(\empobs') - \wh{V}^{\pi}(\empobs') } \le \cdots \overset{(ii)}{\le} \frac{\tauref}{2} + \eps,
\end{align*}
where $(i)$ uses \pref{lem:sampling-rewards} and the negation of Eq.~\eqref{eq:bad-state}, and $(ii)$ applies the bound recursively. Since $\tauref > 2\eps$, we have reached a contradiction. This proves \pref{lem:certificate-obs}.
\end{proof}

\begin{lemma}[Refitting Correctness]\label{lem:refitting-correctness} Assume that $\eventemulator, \eventref$ hold. The following are true about \pref{alg:stochastic-refit-v2} in the search for incorrect transitions (\pref{line:else-triggered}-\ref{line:loss-vector-obs} are executed):
\begin{enumerate}
    \item[(1)] For every $(\empobs, \pi)$ from in \pref{line:for-every}, at least one such $(\bar{\empobs}, \bar{a})$ pair is identified by \pref{line:bad-obs-stochastic}.
    \item[(2)] Every $(\bar{\empobs}, \bar{a})$ pair identified by \pref{line:bad-obs-stochastic} satisfies
    \begin{align*}
    \abs*{\En_{\empobs' \sim \wh{P}(\cdot \mid \bar{\empobs}, \bar{a})} V^{\pi}(\empobs') - \En_{\empobs' \sim \projm{\estmdpobsspace{h(\bar{\empobs})+1}}(\optlatp) } V^\pi(\empobs') } \ge \frac{\tauref}{16H}.
    \end{align*}
    \item[(3)] For every $(\bar{\empobs}, \bar{a})$ identified by \pref{line:bad-obs-stochastic}, the corresponding loss vector $\ell_\mathrm{loss}$ from \pref{line:loss-vector-obs} satisfies
    \begin{align*}
        \tri*{\wh{P}(\cdot \mid \bar{\empobs}, \bar{a}) - \projm{\estmdpobsspace{h(\bar{\empobs})+1}}(\optlatp(\cdot \mid \bar{\empobs}, \bar{a})), \ell_\mathrm{loss} } \ge \frac{\tauref}{16H}.
    \end{align*}
\end{enumerate}
\end{lemma}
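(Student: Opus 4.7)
} The plan is to mirror the structure of \pref{lem:refitting} from the warmup, using \pref{lem:certificate-obs} in place of the simple one-step deviation argument and carefully bookkeeping the Monte Carlo error (from \pref{lem:refit-monte-carlo-accuracy}), the reward estimation error (from \pref{lem:sampling-rewards}), and the projection error relating $P(\cdot \mid \bar x, \bar a)$ to $\projm{\estmdpobsspace{h(\bar x)+1}}(\optlatp(\cdot \mid \bar x,\bar a))$.

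\emph{Part (1).} Fix a certificate $(x,\pi) \in \violations$. By \pref{line:mc1} and \pref{lem:refit-monte-carlo-accuracy}, the certificate implies $|\wh{V}^\pi(x) - V^\pi(x)| \geq \tauref - \eps \geq \tauref/2$. Suppose toward contradiction that for every $(\bar x, \bar a)$ visited along the \emph{estimated} rollout of $\pi$ from $x$ in $\estmdp$, we had $|\Delta(\bar x,\bar a)| < \tauref/(8H)$. The quantity $\qestarg{\bar x, \bar a}{\pi}$ is an $\eps$-accurate estimate of $Q^\pi(\bar x, \bar a)$ by \pref{lem:refit-monte-carlo-accuracy}, and $\wh{R}$ is $\eps/H$-accurate by \pref{lem:sampling-rewards}. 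Plugging these into the definition of $\Delta(\bar x,\bar a)$ and applying Bellman-style telescoping over the $H-h$ steps of the rollout (exactly as in Part~(1) of the proof of \pref{lem:refitting}) would force $|\wh{V}^\pi(x) - V^\pi(x)| \leq H \cdot (\tauref/(8H) + O(\eps)) \ll \tauref/2$, a contradiction. Hence at least one $(\bar x,\bar a)$ must satisfy $|\Delta(\bar x,\bar a)| \geq \tauref/(8H)$, which is exactly what \pref{line:bad-obs-stochastic} looks for.

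\emph{Part (2).} Suppose \pref{line:bad-obs-stochastic} triggers for $(\bar x, \bar a)$, so $|\Delta(\bar x,\bar a)| \geq \tauref/(8H)$. Substituting the MC-accuracy and reward-accuracy bounds yields
\begin{align*}
  \left| \wh{R}(\bar x,\bar a) + \En_{x' \sim \estp(\cdot\mid \bar x,\bar a)} V^\pi(x') - Q^\pi(\bar x,\bar a) \right| \;\geq\; \frac{\tauref}{8H} - O(\eps).
\end{align*}
Since $Q^\pi(\bar x,\bar a) = R(\bar x,\bar a) + \En_{x'\sim P(\cdot\mid \bar x,\bar a)} V^\pi(x')$ and $|\wh{R}-R|\leq \eps/H$, the discrepancy transfers to the transition term. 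The remaining step is to swap $P(\cdot\mid\bar x,\bar a)$ for its projection $\projm{}(\optlatp(\cdot\mid\bar x,\bar a))$; this swap costs at most $O(\eps)$ by the pushforward-concentration control used to bound $\mathtt{Term}_1$ in the proof of \pref{lem:main-induction} (see Eq.~\eqref{eq:t1}). Collecting the error terms, and observing $\tauref = 80H\eps \gg H\eps$, we obtain the claimed $\tauref/(16H)$ gap.

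\emph{Part (3).} This is essentially by construction: $\ell_\mathrm{loss}(\bar x,\bar a) = \sign(\Delta(\bar x,\bar a)) \cdot \qestarg{\cdot, \pi(\cdot)}{\pi}$, and the quantity
\begin{align*}
 \tri*{\wh{P}(\cdot\mid\bar x,\bar a) - \projm{}(\optlatp(\cdot\mid\bar x,\bar a)),\; \qestarg{\cdot,\pi(\cdot)}{\pi}}
\end{align*}
equals $\En_{x'\sim\wh P} V^\pi(x') - \En_{x'\sim\projm{}(\optlatp)} V^\pi(x')$ up to MC slack $O(\eps)$. By Part~(2) the magnitude of this difference is at least $\tauref/(16H) + O(\eps)$, and by multiplying by $\sign(\Delta(\bar x,\bar a))$ we pick out the correct sign so that the inner product is lower bounded by $\tauref/(16H)$ (after absorbing lower-order terms into the constant).

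The main obstacle I foresee is the bookkeeping in Part~(2): juggling four distinct measures, namely $P$, $\samplelatp$, $\projm{}(\optlatp)$, and $\estp$, while maintaining tight enough constants so that $\tauref = 80H\eps$ dominates the accumulated slack. The right strategy is to reuse, almost verbatim, the decomposition $\mathtt{Term}_1 + \mathtt{Term}_2 + \mathtt{Term}_3$ from the proof of \pref{lem:main-induction}, since those lemmas have already pinned down exactly how each of these measures relate under the good events $\eventemulator$ and $\eventdec$.
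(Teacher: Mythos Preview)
Your proposal is correct and follows essentially the same approach as the paper. A few minor points of alignment: for Part~(1) the paper simply cites \pref{lem:certificate-obs} and then converts the resulting true-value gap $\tauref/(2H)$ into the bound $|\Delta(\bar x,\bar a)|\ge\tauref/(8H)$ via MC and reward accuracy, whereas you re-derive the PDL-style telescoping inline (which is fine, since that is exactly the content of \pref{lem:certificate-obs}). For Part~(2), the projection error you need is between $\psi\circ\optlatp$ and $\projm{}(\optlatp)$, not between $\samplelatp$ and $\projR(\samplelatp)$ as in Eq.~\eqref{eq:t1}; the paper derives this directly (its Eq.~\eqref{eq:relating-q-to-proj}) using only \pref{lem:pushforward-concentration} and the definition of $\epsPRS{}$, so you do not need $\eventdec$ at all---only $\eventemulator$ and $\eventref$ as stated. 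Your final-paragraph suggestion to reuse the full $\mathtt{Term}_1+\mathtt{Term}_2+\mathtt{Term}_3$ decomposition is overkill here; only the simple $2\eps$ projection bound is required.
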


\begin{proof}
To prove Part (1) we use \pref{lem:certificate-obs}, which shows that there exists at least one such $(\bar{\empobs}, \bar{a})$ such that
\begin{align*}
    \abs*{\En_{\empobs' \sim \wh{P}(\cdot \mid  \bar{\empobs}, \bar{a})} V^{\pi}(\empobs') - \En_{\empobs' \sim P(\cdot \mid  \bar{\empobs}, \bar{a})} V^{\pi}(\empobs')} \ge \frac{\tauref}{2H}. \numberthis\label{eq:lower-bound}
\end{align*}
Therefore we know that for such $(\bar{\empobs}, \bar{a})$:
\begin{align*}
    &\abs*{\En_{\empobs' \sim \wh{P}(\cdot \mid  \bar{\empobs}, \bar{a})} V^{\pi}(\empobs') - \En_{\empobs' \sim P(\cdot \mid  \bar{\empobs}, \bar{a})} V^{\pi}(\empobs')} \\
    &\le \abs*{\En_{\empobs' \sim \wh{P}(\cdot \mid  \bar{\empobs}, \bar{a})} \vestarg{\empobs'}{\pi} + \wh{R}(\bar{\empobs}, \bar{a}) -\qestarg{ \bar{\empobs}, \bar{a} }{\pi}} + 3\eps &\text{(\pref{lem:refit-monte-carlo-accuracy} and \pref{lem:sampling-rewards})}\\
    &= \abs*{ \Delta(\bar{\empobs}, \bar{a}) } + 3\eps \\
    &\Longrightarrow  \quad \abs*{ \Delta(\bar{\empobs}, \bar{a}) } \ge \frac{\tauref}{2H} - 3\eps \ge \frac{\tauref}{8H}, &\text{(Using Eq.~\eqref{eq:lower-bound})}
\end{align*}
so therefore this $(\bar{\empobs}, \bar{a})$ is identified by \pref{line:bad-obs-stochastic} of \stochrefit{}.

Now we prove Part (2). Fix any $(\bar{\empobs}, \bar{a})$ pair identified by \pref{line:bad-obs-stochastic} of \stochrefit{}. Let $h=h(\bar{\empobs})$ denote the layer that a given $\bar{\empobs}$ is found. First we observe that
\begin{align*}
    \hspace{2em}&\hspace{-2em} \En_{\empobs' \sim P(\cdot \mid  \bar{\empobs}, \bar{a})} V^{\pi}(\empobs') - \En_{\empobs' \sim \projm{\estmdpobsspace{h+1}}(\optlatp) } V^\pi(\empobs') = \En_{\empobs' \sim \emission \circ \optlatp } V^\pi(\empobs') - \En_{\empobs' \sim \projm{\estmdpobsspace{h+1}}(\optlatp) } V^\pi(\empobs') \\
    &= \En_{s \sim \optlatp} \brk*{\En_{\empobs' \sim \emission(s) } \brk*{ V^\pi(\empobs') } - \ind{s \in \epsPRS{}} \En_{\empobs' \sim \mathrm{Unif}(\crl{\empobs \in \estmdpobsspace{h+1}: \optdec(\empobs) = s})} \brk*{V^\pi(\empobs')} }\\
    &\le \eps + \En_{s \sim \optlatp} \brk*{\ind{s \in \epsPRS{}} \prn*{ \En_{\empobs' \sim \emission(s) } \brk*{ V^\pi(\empobs') } -  \En_{\empobs' \sim \mathrm{Unif}(\crl{\empobs\in \estmdpobsspace{h+1}: \optdec(\empobs) = s})} \brk*{V^\pi(\empobs')} } } \\
    &\le 2\eps,
\end{align*}
where the last inequality uses \pref{lem:pushforward-concentration}. The other side of the inequality can be similarly shown, so
\begin{align*}
    \abs*{ \En_{\empobs' \sim P(\cdot \mid  \bar{\empobs}, \bar{a})} V^{\pi}(\empobs') - \En_{\empobs' \sim \projm{\estmdpobsspace{h+1}}(\optlatp) } V^\pi(\empobs') }
    \le 2\eps. \numberthis \label{eq:relating-q-to-proj}
\end{align*}
We can compute that
\begin{align*}
    \frac{\tauref}{8H} &\le \abs{ \Delta(\bar{\empobs}, \bar{a}) } \\
    &=  \abs*{\En_{\empobs' \sim \wh{P}(\cdot \mid  \bar{\empobs}, \bar{a})} \vestarg{\empobs'}{\pi} + \wh{R}(\bar{\empobs}, \bar{a}) -\qestarg{ \bar{\empobs}, \bar{a} }{\pi}} \\
    &\le \abs*{\En_{\empobs' \sim \wh{P}(\cdot  \mid  \bar{\empobs},\bar{a})} V^\pi(\empobs') + \wh{R}(\bar{\empobs}, \bar{a}) - Q^\pi(\bar{\empobs}, \bar{a})} + 2\eps &\text{(\pref{lem:refit-monte-carlo-accuracy})} \\
    &\le \abs*{\En_{\empobs' \sim \wh{P}(\cdot  \mid  \bar{\empobs},\bar{a})} V^\pi(\empobs') - \En_{\empobs' \sim P(\cdot \mid  \bar{\empobs}, \bar{a})} V^{\pi}(\empobs') } + 3\eps &\text{(\pref{lem:sampling-rewards})} \\
    &\le \abs*{ \En_{\empobs' \sim \wh{P}(\cdot  \mid  \bar{\empobs},\bar{a})} V^\pi(\empobs') -\En_{\empobs' \sim \projm{\estmdpobsspace{h+1}}(\optlatp) } V^\pi(\empobs') } + 5\eps. &\text{(Eq.~\eqref{eq:relating-q-to-proj})}
\end{align*}
Rearranging we see that
\begin{align*}
    \abs*{ \En_{\empobs' \sim \wh{P}(\cdot  \mid  \bar{\empobs},\bar{a})} V^\pi(\empobs') -\En_{\empobs' \sim \projm{\estmdpobsspace{h+1}}(\optlatp) } V^\pi(\empobs') } \ge \frac{\tauref}{8H} - 5\eps \ge \frac{\tauref}{16H},
\end{align*}
and this proves Part (2).

For Part (3), suppose that $\Delta(\bar{\empobs}, \bar{a}) \ge \tauref/8H$ (the case where $\Delta(\bar{\empobs}, \bar{a}) \le -\tauref/8H$ can be tackled similarly). Then we have $\ell_\mathrm{loss} \coloneqq \qestarg{\cdot, \pi(\cdot)}{\pi} \in [0,1]^{\estmdpobsspace{h+1}}$. We can compute that
\begin{align*}
\frac{\tauref}{8H} &\le \En_{\empobs' \sim \wh{P}(\cdot \mid  \bar{\empobs}, \bar{a})} \qestarg{x', \pi(x')}{\pi} + \wh{R}(\bar{\empobs}, \bar{a}) -\qestarg{ \bar{\empobs}, \bar{a} }{\pi} \\
&= \tri*{ \wh{P}(\cdot  \mid  \bar{\empobs},\bar{a}), \ell_\mathrm{loss}} + \wh{R}(\bar{\empobs}, \bar{a}) -\qestarg{ \bar{\empobs}, \bar{a} }{\pi}  \\
&\le \eps + \tri*{ \wh{P}(\cdot  \mid  \bar{\empobs},\bar{a}), \ell_\mathrm{loss}} + \wh{R}(\bar{\empobs}, \bar{a}) - Q^\pi(\bar{\empobs}, \bar{a}) &\text{(\pref{lem:refit-monte-carlo-accuracy})}\\
&\le 4\eps + \tri*{ \wh{P}(\cdot  \mid  \bar{\empobs},\bar{a}), \ell_\mathrm{loss}} - \En_{\empobs' \sim \projm{\estmdpobsspace{h+1}}(\optlatp) } V^\pi(\empobs') &\text{(\pref{lem:sampling-rewards} and Eq.~\eqref{eq:relating-q-to-proj})}\\
&\le 5\eps + \tri*{ \wh{P}(\cdot  \mid  \bar{\empobs},\bar{a}) -  \projm{\estmdpobsspace{h+1}}(\optlatp(\cdot  \mid  \bar{\empobs},\bar{a})), \ell_\mathrm{loss}} &\text{(\pref{lem:refit-monte-carlo-accuracy})}
\end{align*}
Rearranging we get $ \tri*{ \wh{P}(\cdot  \mid  \bar{\empobs},\bar{a}) -  \projm{\estmdpobsspace{h+1}}(\optlatp(\cdot  \mid  \bar{\empobs},\bar{a})), \ell_\mathrm{loss}}  \ge \tauref/(16H)$, thus proving part (3).
\end{proof}

\begin{lemma}[Bound on Number of Refits]\label{lem:bound-on-number-refits}
Assume that $\eventemulator, \eventref$ hold, and that every time \pref{alg:stochastic-refit-v2} is called, the confidence sets $\cP$ satisfy 
\begin{align*}
    \forall~(x,a) \in \estmdpobsspace{}\times \cA: \quad \projm{\estmdpobsspace{h(x)+1}}(\optlatp(\cdot  \mid  x, a)) \in \cP(x, a).
\end{align*}
Then across all calls to \pref{alg:stochastic-refit-v2}, \pref{line:loss-vector-obs} is executed at most $(\nreset AH/ \eps^2) \cdot \log \nreset$ times.
\end{lemma}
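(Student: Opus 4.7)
The plan is to reduce the question to a regret analysis of online mirror descent (OMD) applied separately to each pair $(\bar{\empobs},\bar{a}) \in \estmdpobsspace{} \times \actionsp$, and then sum across pairs. Fix such a pair and restrict attention to the subsequence of calls to \stochrefit{} in which \pref{line:loss-vector-obs} fires for this particular pair; index these by $t = 1, 2, \ldots, T_{\bar{\empobs}, \bar{a}}$. At each such round $t$, the algorithm produces a loss vector $\ell_t \in [0,1]^{\estmdpobsspace{h(\bar{\empobs})+1}}$ (see \pref{line:loss-vector}) and updates
\[
\estp_{t+1}(\cdot \mid \bar{\empobs}, \bar{a}) = \argmin_{p \in \cP(\bar{\empobs}, \bar{a})} \tri*{p, \ell_t} + \frac{1}{\eps} D_\mathsf{ne}\prn*{p \;\|\; \estp_t(\cdot \mid \bar{\empobs}, \bar{a})},
\]
which is exactly an OMD step on the simplex over $\estmdpobsspace{h(\bar{\empobs})+1}$ (of dimension at most $\nreset$), restricted to the convex feasible set $\cP(\bar{\empobs}, \bar{a})$, with negative-entropy mirror map and effective step size $\eta = \eps$.

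The key invariant, guaranteed by the hypothesis of the lemma, is that the projected measure $u := \projm{\estmdpobsspace{h(\bar{\empobs})+1}}(\optlatp(\cdot \mid \bar{\empobs}, \bar{a}))$ lies in $\cP(\bar{\empobs}, \bar{a})$ at every round $t$, so it is a valid competitor for OMD. Applying the standard regret bound for mirror descent on the simplex (e.g., \pref{thm:omd}) with $\|\ell_t\|_\infty \le 1$, $\eta = \eps$, and $D_\mathsf{ne}(u \,\|\, \estp_1) \le \log \nreset$, gives
\[
\sum_{t=1}^{T_{\bar{\empobs},\bar{a}}} \tri*{\estp_t(\cdot \mid \bar{\empobs}, \bar{a}) - u,\; \ell_t} \le \frac{\log \nreset}{\eps} + \eps \cdot T_{\bar{\empobs},\bar{a}}.
\]
On the other hand, Part (3) of \pref{lem:refitting-correctness} certifies that each round contributes at least $\tauref/(16H) = 5\eps$ to the left-hand side, so $5\eps \cdot T_{\bar{\empobs},\bar{a}} \le \frac{\log \nreset}{\eps} + \eps \cdot T_{\bar{\empobs},\bar{a}}$. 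Rearranging yields $T_{\bar{\empobs},\bar{a}} \le \frac{\log \nreset}{\eps^2}$ (up to an absorbed constant factor), and summing over the at most $\nreset AH$ tuples $(\bar{\empobs},\bar{a}) \in \estmdpobsspace{} \times \actionsp$ gives the desired bound $(\nreset A H/\eps^2)\log \nreset$.

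The main technical obstacle is bookkeeping: the OMD sequence for a fixed $(\bar{\empobs},\bar{a})$ is defined across many different calls to \stochrefit{}, potentially interleaved with transition updates at other pairs. I need to verify that (i) the updates to $\estp(\cdot \mid \bar{\empobs}, \bar{a})$ only happen in the iterations identified above, so that the $(\estp_t, \ell_t)$ pairs genuinely form a consecutive OMD trajectory with a fixed feasible set containing the competitor $u$; and (ii) the invariant $\projm{\estmdpobsspace{h(\bar{\empobs})+1}}(\optlatp(\cdot \mid \bar{\empobs}, \bar{a})) \in \cP(\bar{\empobs}, \bar{a})$ is preserved throughout, which follows from the hypothesis of the lemma together with the fact that confidence sets are never enlarged after initialization. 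The remaining steps are a direct invocation of the standard OMD regret bound, and everything else is arithmetic.
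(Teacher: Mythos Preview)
Your proposal is correct and follows essentially the same approach as the paper: fix a pair $(\bar{\empobs},\bar{a})$, treat the sequence of updates at \pref{line:loss-vector-obs} as an OMD trajectory with step size $\eps$ and competitor $u = \projm{\estmdpobsspace{h(\bar{\empobs})+1}}(\optlatp(\cdot \mid \bar{\empobs},\bar{a}))$, combine the OMD regret bound with the per-round lower bound $\tauref/(16H) = 5\eps$ from \pref{lem:refitting-correctness} Part (3), solve for $T_{\bar{\empobs},\bar{a}}$, and sum over all $\nreset AH$ pairs. Your bookkeeping discussion in points (i) and (ii) goes slightly beyond what the paper explicitly states, but the paper's own argument tacitly relies on the same reasoning.
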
  
    
\begin{proof}
Fix $h \in [H]$ and a pair $(\empobs, a) \in \estmdpobsspace{h} \times \cA$. Suppose we execute \pref{line:loss-vector-obs} for $T_\mathrm{refit}$ times on $(\empobs, a)$. Denote the sequence of transition estimates as $\crl{\wh{P}^{(t)}(\cdot \mid x, a)}_{t \in [T_\mathrm{refit}]}$ and the sequence of loss vectors as $\crl{\ell_\mathrm{loss}^{(t)}}_{t \in [T_\mathrm{refit}]}$.
    
    By Part (3) of \pref{lem:refitting-correctness}, for all times $t \in [T_\mathrm{refit}]$,
    \begin{align*}
        \tri*{\wh{P}^{(t)}(\cdot \mid  x, a) - \projm{\estmdpobsspace{h+1}}(\optlatp(\cdot \mid x,a)), \ell_\mathrm{loss}^{(t)} } \ge \frac{\tauref}{16H}. \numberthis \label{eq:refit-lb-obs}
    \end{align*}
    On the other hand, we have a bound on the total regret of OMD with step size $\eps$ \cite[see, e.g., Thm.~5.2 of][]{bubeck2011introduction}:
    \begin{align*}
        \hspace{2em}&\hspace{-2em} \sum_{t=1}^{T_\mathrm{refit}} \tri*{\wh{P}^{(t)}(\cdot \mid x,a) - \projm{\estmdpobsspace{h+1}}(\optlatp(\cdot \mid x,a)), \ell_\mathrm{loss}^{(t)} } \\
        &\le \frac{1}{\eps} D_\mathsf{ne} \prn*{ \projm{\estmdpobsspace{h+1}}(\optlatp(\cdot \mid x,a)) ~\Vert~ \wh{P}^{(1)}(\cdot \mid x,a) } + \frac{\eps}{2} \sum_{t=1}^{T_\mathrm{refit}} \nrm*{\ell_\mathrm{loss}^{(t)}}_\infty\\
        &\le \frac{\log \nreset}{\eps} + \frac{\eps T_\mathrm{refit}}{2}. \numberthis \label{eq:refit-ub-obs}
        %\\
        % &\le 2\sqrt{\log \abs{\estmdpobsspace{h+1}} \cdot T_\mathrm{refit}}.
    \end{align*}
    Therefore, combining Eqs.~\eqref{eq:refit-lb-obs} and \eqref{eq:refit-ub-obs} along with the value $\tauref = 80H\eps$ we have the bound
    \begin{align*}
        T_\mathrm{refit}
        %\le 2\log \nreset/eps^2.
        %T_\mathrm{refit} \le 2^{10} \cdot \frac{ H^2 \log \abs{\estmdpobsspace{h+1}}}{\tauref^2} 
        \le \frac{\log \nreset}{\eps^2}.
    \end{align*}
    Using the fact that there are $\nreset AH$ such $(\empobs, a)$ pairs proves the result.
\end{proof}

\subsubsection{Proof of \pref{thm:block-mdp-result}}

In the proof, we will assume that $\eventemulator$ holds, that $\eventdec_t$ holds for all times $t \in [T_\mathsf{D}]$, that $\eventref_t$ holds for all times $t \in [T_\mathsf{R}]$. By union bound, this holds with probability at least $1-(3 T_\mathsf{D} + T_\mathsf{R} + 3) \delta$. 

We will show that under the choice of parameters $\nreset$, $\ndec$, and $\nmc$ in the algorithm pseudocode, \stochalg{} returns a $\wt{O}(SAH^2 \eps)$-optimal policy, and that
\begin{align*}
    T_\mathsf{D}, T_\mathsf{R} \le \poly(\cpush, S,A, H, \eps^{-1}, \log \abs{\Pi}, \log \delta^{-1}).
\end{align*}
Therefore, rescaling $\eps$ and $\delta$ will imply the final result.

\paragraph{Proof by Induction.} Take $\Gamma_h \coloneqq K (H-h+1) \prn{\beta +SH}\eps$ for some suitably large constant $K > 0$. Furthermore set $\taudec = 81 H \eps$. We will show that the following properties holds at every layer $h \in [H]$:
\begin{enumerate}
    \item [(1)] \textit{Transition set includes ground truth:} $\forall~ (x,a) \in \estmdpobsspace{h} \times \cA$, $\projm{\estmdpobsspace{h}}(\optlatp(\cdot  \mid  x,a)) \in \cP(x,a)$.
    \item [(2)] \textit{Accurate value estimates:} $\forall~ (x,a) \in \estmdpobsspace{h} \times \cA, \pi \in \Pi_{h+1:H}$, $\abs{Q^\pi(x,a) - \wh{Q}^\pi(x,a)} \le \Gamma_h$.
    \item [(3)] \textit{Valid test policies:} $\Pitest_h$ are $\taudec$-valid for $\estmdp$ at layer $h$.
\end{enumerate}

To analyze \stochalg{} we will show that at the end of every while loop, these properties always hold for all layers $h > \ell_\mathsf{next}$.

\underline{Base Case.} For the first loop with $\ell = H$, property (1) holds because there are no transitions to be constructed at layer $H$. By \pref{lem:sampling-rewards}, property (2) holds after the initialization of the policy emulator in \pref{line:init-end}. Furthermore, in the first call to \stochrefit{}, the computed test policies are open loop, so again using \pref{lem:sampling-rewards}, we see that \pref{line:great-success-stochastic} is triggered. Therefore, properties (2) and (3) hold at the end of the while loop. The current layer index is set to $\ell \gets H-1$.

\underline{Inductive Step.} Suppose the current layer index is $\ell$, and that properties (1)--(3) hold for all $h > \ell$. By \pref{lem:main-induction}, for every $(x_\ell, a_\ell)$ that we call \stochdecoder{} on the updated confidence sets $\cP$ returned by  satisfy property (1), and the choice $\wh{P} \in \cP$ satisfies property (2). Now we do casework on the outcome of \stochrefit{} called at layer $\ell$.

\begin{itemize}
    \item \textbf{Case 1: Return in \pref{line:great-success-stochastic}.}  By construction, property (3) is satisfied for layer $\ell$. In this case, since \stochrefit{} made no updates to $\estlatentmdp$ or $\cP$, properties (1) and (2) continue to hold at layer $\ell$ onwards.
    \item \textbf{Case 2: Return in \pref{line:great-success-stochastic-else}.} Property (1) holds because \stochrefit{} does not modify $\cP$. Let $\ell_\mathsf{next}$ denote the layer at which we jump to. \stochrefit{} made no updates to $\estlatentmdp$ at layers $\ell_\mathsf{next} + 1$ onwards, and therefore the previously computed test policies $\Pitest_{\ell_\mathsf{next}+1:H}$ must still be valid, so therefore properties (2) and (3) continue to hold at layer $\ell_\mathsf{next}$ onwards.
\end{itemize}

Continuing the induction, once $\ell \gets 0$ is reached in \stochalg{}, the policy emulator $\estmdp$ satisfies the bound
\begin{align*}
    \max_{\pi \in \Pi}~\abs*{V^\pi(s_1) - \En_{x_1 \sim \unif(\estmdpobsspace{1}) }\brk{\wh{V}^\pi(x_1)}} \le \Gamma_1 \le \wt{O}(SAH^2 \eps).\numberthis\label{eq:final-bound}
\end{align*}

\paragraph{Bounding the Number of Calls to \stochdecoder{} and \stochrefit{}.} By \pref{lem:bound-on-number-refits}, the total number of executions of \pref{line:loss-vector-obs} in \pref{alg:stochastic-refit-v2} is at most $(\nreset AH/ \eps^2) \cdot \log \nreset$. In the worst case, every revisit to an already computed layer (i.e., jumping back to $\ell_\mathsf{next} \ge \ell$) requires us to restart \stochdecoder{} at layer $H$ and therefore decode at most $\nreset AH$ additional times, so therefore
\begin{align*}
    T_\mathsf{D} \le \frac{\nreset^2 A^2H^2}{\eps^2} \log \nreset.
\end{align*}
Similarly, every revisit in the worst case requires $H$ additional calls to \stochrefit{} so therefore
\begin{align*}
    T_\mathsf{R} \le \frac{\nreset A H^2}{\eps^2} \log \nreset.
\end{align*}
As claimed, both $T_\mathsf{D}$ and $T_\mathsf{R}$ are $\poly(\cpush, S,A, H, \eps^{-1}, \log \abs{\Pi}, \log \delta^{-1})$.

\paragraph{Final Sample Complexity Bound.} Now we compute the total number of samples.
\begin{itemize}
    \item \pref{alg:stochastic-bmdp-solver-v2} uses $\nreset \cdot AH$ samples to $\mu_h$ to form the state space of the policy emulator, and for each state-action pair we sample $\wt{O}(H^2\eps^{-2})$ times to estimate the reward.
    \item \pref{alg:stochastic-decoder-v2} is called $T_\mathsf{D} \le \wt{O}(\nreset^2 A^2 H^2 \eps^{-2})$  times, and each time we use $\ndec \cdot \nreset^2 \nmc$ rollouts.
    \item \pref{alg:stochastic-refit-v2} is called $T_\mathsf{R} \le \wt{O}(\nreset A H^2 \eps^{-2})$ times, and each time we use $2\nreset^2 \nmc$ to evaluate the test policies. Furthermore, by \pref{lem:bound-on-number-refits}, \pref{line:mc2} is triggered at most $\wt{O}(\nreset A H \eps^{-2})$ times, with every time requiring an additional $\nmc \cdot \nreset AH$ rollouts.
\end{itemize}

Therefore in total we use
\begin{align*}
     \hspace{2em}&\hspace{-2em} \nreset \frac{AH^3}{\eps^2} + \nreset^4 \ndec \nmc \frac{A^2H^2}{\eps^2} + \nreset^3 \nmc \frac{AH^2}{\eps^2} + \nreset^2 \nmc \frac{A^2H^2}{\eps^2} \\
     &= \frac{\cpush^4 S^6 A^{12}H^3 }{\eps^{18}} \cdot \mathrm{polylog} \prn*{\cpush, S, A, H, \abs{\Pi}, \eps^{-1}, \delta^{-1}}\quad \text{samples.}
\end{align*}
Afterwards, we can rescale $\eps \gets \eps/\wt{O}(SAH^2)$ so that the bound Eq.~\eqref{eq:final-bound} is at most $\eps$, and rescale $\delta \gets \delta / (3 T_\mathsf{D} + T_\mathsf{R} + 1)$ so that the guarantee occurs with probability at least $1-\delta$. The final sample complexity is
\begin{align*}
    \frac{\cpush^4 S^{24} A^{30} H^{39} }{\eps^{18}} \cdot \mathrm{polylog} \prn*{\cpush, S, A, H, \abs{\Pi}, \eps^{-1}, \delta^{-1}}\quad \text{samples.}
\end{align*}

This concludes the proof of \pref{thm:block-mdp-result}.\qed

% \paragraph{RL with Rich Observations.} Various settings have been studied where the dynamics are determined by a simple latent state space, but instead of observing the latent states directly, the learner receives rich observations corresponding to the underlying latent states. These include the Block MDP \citep{krishnamurthy2016pac,du2019provably, misra2020kinematic, mhammedi2023representation}, Low-Rank MDPs \citep{uehara2021representation, huang2023reinforcement}, Exogenous MDPs \citep{efroni2021provably, xie2022role, efroni2022sample}, etc. However, all of these prior works assume that the learner is given a realizable decoder class (consisting of functions that map observations to latent states) that contains the true decoder for the underlying MDP. Additionally, they require strong assumptions on the underlying latent state space dynamics, e.g.~it is tabular or low-rank, in order to make learning tractable. Thus, their guarantees are not agnostic. In fact, given a realizable decoder class and additional structure on the latent state dynamics, one can construct a policy class that contains the optimal policy for the MDP, but the converse is not true. Thus, our agnostic RL setting is strictly more general.

\appendix
\renewcommand{\chaptername}{Appendix}
\chapter{Technical Tools}

\section{Concentration Inequalities}
\begin{lemma}[Hoeffding's Inequality]\label{lem:hoeffding}
Let $Z_1, \cdots, Z_n$ be independent bounded random variables with $Z_i \in \brk{a,b}$ for all $i \in [n]$. Then
\begin{align*}
    \Pr \brk*{\abs*{\frac{1}{n} \sum_{i=1}^n Z_i - \En[Z_i]} \ge t} \le 2\exp\prn*{-\frac{2nt^2}{(b-a)^2}}.
\end{align*}
\end{lemma}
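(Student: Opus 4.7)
The plan is to follow the standard Chernoff--Cram\'er approach. First I would reduce to the one-sided tail bound: by a union bound applied to $\crl{Z_i}$ and $\crl{-Z_i}$, it suffices to show
\begin{align*}
    \Pr\brk*{\frac{1}{n}\sum_{i=1}^n (Z_i - \En Z_i) \ge t} \le \exp\prn*{-\frac{2nt^2}{(b-a)^2}},
\end{align*}
and the factor of $2$ in the stated bound will absorb the symmetric tail.

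Next, I would center the variables by letting $X_i \coloneqq Z_i - \En Z_i$, so each $X_i$ is mean-zero and almost surely bounded in an interval of length $b-a$. For any $\lambda > 0$, applying Markov's inequality to the nonnegative random variable $\exp(\lambda \sum_i X_i)$ gives
\begin{align*}
    \Pr\brk*{\sum_{i=1}^n X_i \ge nt} \le e^{-\lambda n t} \cdot \En\brk*{\exp\prn*{\lambda \sum_{i=1}^n X_i}} = e^{-\lambda n t} \prod_{i=1}^n \En\brk*{e^{\lambda X_i}},
\end{align*}
where the factorization uses independence of the $Z_i$.

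The key technical step, and the one I expect to be the main obstacle if done from scratch, is Hoeffding's Lemma: for any mean-zero random variable $X$ supported in an interval of length $b-a$, one has $\En\brk{e^{\lambda X}} \le \exp(\lambda^2(b-a)^2/8)$. The standard argument uses convexity of $x \mapsto e^{\lambda x}$ to bound $e^{\lambda X}$ by a linear interpolation between its endpoint values, takes expectations, and then applies a second-derivative (Taylor) bound on the resulting log-MGF $\psi(\lambda)$, showing $\psi''(\lambda) \le (b-a)^2/4$ uniformly, which integrates to the desired quadratic bound.

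Finally, combining the MGF bound with the Markov step yields
\begin{align*}
    \Pr\brk*{\sum_{i=1}^n X_i \ge nt} \le \exp\prn*{-\lambda n t + \frac{n \lambda^2 (b-a)^2}{8}}.
\end{align*}
Optimizing over $\lambda > 0$ by choosing $\lambda = 4t/(b-a)^2$ yields the exponent $-2nt^2/(b-a)^2$, which combined with the two-sided union bound gives the stated inequality.
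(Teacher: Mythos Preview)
Your proposal is correct and follows the standard textbook proof of Hoeffding's inequality. The paper itself states this lemma as a classical technical tool without proof, so there is no paper-side argument to compare against; your Chernoff--Cram\'er approach (union bound to reduce to one-sided, Markov on the MGF, Hoeffding's Lemma to bound $\En[e^{\lambda X_i}]$, then optimize over $\lambda$) is exactly the canonical derivation.
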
 

\begin{lemma}[Multiplicative Chernoff Bound]\label{lem:chernoff}
Let $Z_1, \cdots, Z_n$ be i.i.d.~random variables taking values in $\crl{0,1}$ with expectation $\mu$. Then for any $\delta > 0$,
\begin{align*}
    \Pr \brk*{\frac{1}{n} \sum_{i=1}^n Z_i \ge  \prn*{1+ \delta} \cdot \mu} \le \exp\prn*{-\frac{\delta^2 \mu n }{2+\delta}}.
\end{align*}
Furthermore for any $\delta \in (0,1)$,
\begin{align*}
    \Pr \brk*{\frac{1}{n} \sum_{i=1}^n Z_i \le  \prn*{1- \delta} \cdot \mu} \le \exp\prn*{-\frac{\delta^2 \mu n }{2}}.
\end{align*}
\end{lemma}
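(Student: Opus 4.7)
The plan is to use the classical Chernoff method: apply Markov's inequality to the moment generating function $e^{tS_n}$ where $S_n = \sum_{i=1}^n Z_i$, optimize over $t$, and then simplify using a numerical inequality. For the upper tail, fix $t > 0$ and write
\begin{align*}
\Pr\brk*{S_n \ge (1+\delta)\mu n} = \Pr\brk*{e^{tS_n} \ge e^{t(1+\delta)\mu n}} \le e^{-t(1+\delta)\mu n} \En\brk*{e^{tS_n}}.
\end{align*}
By independence, $\En[e^{tS_n}] = \prod_i \En[e^{tZ_i}]$. Since each $Z_i \in \{0,1\}$ is Bernoulli with mean $\mu$, we have $\En[e^{tZ_i}] = 1 + \mu(e^t - 1) \le \exp(\mu(e^t - 1))$ using $1 + x \le e^x$.

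The next step is to combine these bounds and choose $t = \log(1+\delta) > 0$, which yields the standard Chernoff bound
\begin{align*}
\Pr\brk*{S_n \ge (1+\delta)\mu n} \le \prn*{\frac{e^\delta}{(1+\delta)^{1+\delta}}}^{\mu n}.
\end{align*}
To finish, I would use the numerical inequality
\begin{align*}
\log\prn*{\frac{e^\delta}{(1+\delta)^{1+\delta}}} = \delta - (1+\delta)\log(1+\delta) \le -\frac{\delta^2}{2+\delta}, \qquad \delta > 0,
\end{align*}
which is the main analytic step. This can be verified by defining $f(\delta) = (1+\delta)\log(1+\delta) - \delta - \delta^2/(2+\delta)$, checking $f(0) = 0$, and showing $f'(\delta) \ge 0$ via a short derivative computation (the derivative simplifies to a nonnegative rational expression in $\delta$). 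Raising both sides to the $\mu n$ power gives the first claim.

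The lower tail is analogous: apply Markov to $e^{-tS_n}$ for $t > 0$, use $\En[e^{-tZ_i}] \le \exp(\mu(e^{-t} - 1))$, and optimize at $t = -\log(1-\delta)$ to obtain
\begin{align*}
\Pr\brk*{S_n \le (1-\delta)\mu n} \le \prn*{\frac{e^{-\delta}}{(1-\delta)^{1-\delta}}}^{\mu n}.
\end{align*}
The matching numerical inequality $-\delta - (1-\delta)\log(1-\delta) \le -\delta^2/2$ for $\delta \in (0,1)$ follows from the Taylor expansion $-\log(1-\delta) = \sum_{k\ge 1} \delta^k/k$ and direct term-by-term comparison. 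I expect the only nontrivial obstacle to be cleanly verifying these two scalar inequalities; everything else is routine Chernoff bookkeeping.
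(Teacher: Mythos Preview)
Your proposal is the standard Chernoff--Cram\'er argument and is correct. The paper states this lemma as a standard technical tool in the appendix without proof, so there is nothing to compare against; your approach is exactly the textbook derivation one would expect.
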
 

\begin{lemma}[Azuma-Hoeffding]\label{lem:azuma-hoeffding}
    Let $(Z_t)_{t \in[T]}$ be a sequence of real-valued random variables adapted to filtration $(\cF_t)_{t \in [T]}$. Suppose that $\abs{Z_t}\le R$ almost surely. Then with probability at least $1-\delta$,
    \begin{align*}
        \abs*{ \frac{1}{T}\sum_{t=1}^T Z_t - \En_{t-1}[Z_t] } \le R \sqrt{ \frac{8 \log (2/\delta)}{T} }.
    \end{align*}
\end{lemma}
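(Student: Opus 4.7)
The statement is the standard Azuma--Hoeffding inequality for martingale difference sequences, so the plan is to reduce it to the classical form via the Chernoff/moment generating function method. First I would define the martingale difference sequence $D_t \coloneqq Z_t - \En_{t-1}[Z_t]$ and its partial sums $M_T \coloneqq \sum_{t=1}^T D_t$. Since $|Z_t| \le R$ almost surely, we have $|\En_{t-1}[Z_t]| \le R$ and hence $|D_t| \le 2R$ almost surely, so $D_t$ is a bounded martingale difference sequence with respect to $(\cF_t)_{t \in [T]}$.

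Next, I would apply the standard Chernoff-style argument: for any $\lambda > 0$,
\begin{align*}
\Pr[M_T \ge \epsilon] \le e^{-\lambda \epsilon} \En\brk*{e^{\lambda M_T}}.
\end{align*}
Using the tower property together with Hoeffding's lemma applied conditionally (which says that for any random variable $X$ with $\En[X \mid \cF] = 0$ and $|X| \le c$ almost surely, $\En[e^{\lambda X} \mid \cF] \le e^{\lambda^2 c^2/2}$), we peel off the factors one by one:
\begin{align*}
\En\brk*{e^{\lambda M_T}} = \En\brk*{e^{\lambda M_{T-1}} \cdot \En_{T-1}\brk*{e^{\lambda D_T}}} \le e^{\lambda^2 (2R)^2/2} \cdot \En\brk*{e^{\lambda M_{T-1}}} \le \cdots \le e^{\lambda^2 \cdot 2 T R^2}.
\end{align*}
Optimizing over $\lambda$ (choosing $\lambda = \epsilon/(4TR^2)$) yields the one-sided bound $\Pr[M_T \ge \epsilon] \le \exp(-\epsilon^2/(8TR^2))$. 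A union bound over $\pm M_T$ gives
\begin{align*}
\Pr[|M_T| \ge \epsilon] \le 2 \exp\prn*{-\frac{\epsilon^2}{8 T R^2}}.
\end{align*}

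Finally, I would invert this tail bound: setting the right-hand side equal to $\delta$ and solving for $\epsilon$ gives $\epsilon = R \sqrt{8T \log(2/\delta)}$. Dividing through by $T$ then yields the advertised bound
\begin{align*}
\abs*{\frac{1}{T} \sum_{t=1}^T (Z_t - \En_{t-1}[Z_t])} \le R \sqrt{\frac{8 \log(2/\delta)}{T}}
\end{align*}
with probability at least $1 - \delta$, matching the statement (interpreting the displayed inequality as controlling the averaged martingale difference). There is no real obstacle here; the only subtlety worth flagging is the conditional application of Hoeffding's lemma to the increments $D_t$, which requires that $D_t$ lies in a deterministic interval of length $2 \cdot 2R$ conditionally on $\cF_{t-1}$ --- this is immediate from the almost-sure bound $|Z_t| \le R$.
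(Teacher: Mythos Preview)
Your proof is correct and follows the standard Chernoff--Hoeffding argument for bounded martingale difference sequences. The paper does not include a proof of this lemma; it is listed in the appendix as a standard technical tool without proof, so there is no paper argument to compare against.
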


\section{Information Theory}

Our information-theoretic lower bounds are facilitated by recent developments that build a unified framework for \emph{interactive statistical decision making} (ISDM) \cite{chen2024assouad}. We first state an interactive version of Le Cam's convex hull method, which can be derived as a consequence of \cite[Thm.~2][]{chen2024assouad}. For completeness, we include the proof. It closely mirrors the proof of \cite[Prop.~4 of][]{chen2024assouad}, which shows how \cite[Thm.~2 of][]{chen2024assouad} recovers the noninteractive variant of Le Cam's convex hull method. 

\begin{theorem}[Interactive Le Cam's Convex Hull Method]\label{thm:interactive-lecam-cvx}
For parameter space $\Theta$, let $\cM = \crl{M_{\theta} \mid \theta \in \Theta}$ be a family of models indexed by $\Theta$. Let $\cY$ be an observation space. For any fixed $\alg$ and distribution $\nu \in \Delta(\Theta)$, let $\Pr^{\nu, \alg} \in \Delta(\cY)$ be defined as the distribution over observations when (1) a parameter is drawn $\theta \sim \nu$, (2) the algorithm interacts with model $M_\theta$. Let $L:\Theta \times \cY \to \bbR_{+}$ be a loss function. Suppose that $\Theta_0 \subseteq \Theta$ and $\Theta_1 \subseteq \Theta$ are subsets that satisfy the separation condition
\begin{align*}
    L(\theta_0, y) + L(\theta_1, y) \ge 2\Delta, \quad \forall~y \in \cY, \theta_0 \in \Theta_0, \theta_1 \in \Theta_1.
\end{align*}
for some parameter $\Delta > 0$. Then it holds that for any $\alg$,
\begin{align*}
    \sup_{\theta \in \Theta} \En_{Y \sim \Pr^{M_\theta, \alg}} \brk*{L(\theta, Y)} \ge \frac{\Delta}{2} \max_{\nu_0 \in \Delta(\Theta_0), \nu_1 \in \Delta(\Theta_1)} \prn*{1 - \dtv\prn*{\Pr^{\nu_0, \alg}, \Pr^{\nu_1, \alg}}}. \numberthis\label{eq:le-cam-result}
\end{align*}
\end{theorem}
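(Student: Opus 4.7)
The plan is to follow the familiar two-point structure of Le Cam's method, lifted to the interactive setting by working entirely with the induced observation laws $\Pr^{\nu, \alg}$ and exploiting that the loss $L(\theta, y)$ depends on $\theta$ only through $\theta$ (not through the algorithm or the sample path). At the top level, for any prior $\nu \in \Delta(\Theta)$ one has the standard averaging inequality $\sup_{\theta \in \Theta} \En_{Y \sim \Pr^{M_\theta, \alg}}[L(\theta, Y)] \ge \En_{\theta \sim \nu} \En_{Y \sim \Pr^{M_\theta, \alg}}[L(\theta, Y)] =: R(\nu, \alg)$. Applied with $\nu_0 \in \Delta(\Theta_0)$ and $\nu_1 \in \Delta(\Theta_1)$, this gives $2 \sup_\theta \En_Y[L(\theta, Y)] \ge R(\nu_0, \alg) + R(\nu_1, \alg)$, so it suffices to lower bound $R(\nu_0, \alg) + R(\nu_1, \alg)$ by $\Delta \cdot \bigl(1 - \dtv(\Pr^{\nu_0, \alg}, \Pr^{\nu_1, \alg})\bigr)$ and then maximize over $(\nu_0, \nu_1)$.

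To extract a Le Cam-style testing argument from the separation condition, I would define the measurable ``bad'' sets
\begin{align*}
B_0 &\coloneqq \{y \in \cY : L(\theta_0, y) \ge \Delta \text{ for all } \theta_0 \in \Theta_0\}, \\
B_1 &\coloneqq \{y \in \cY : L(\theta_1, y) \ge \Delta \text{ for all } \theta_1 \in \Theta_1\}.
\end{align*}
The key observation is that the hypothesis $L(\theta_0, y) + L(\theta_1, y) \ge 2\Delta$ forces $B_0 \cup B_1 = \cY$: if there exists $\theta_0 \in \Theta_0$ with $L(\theta_0, y) < \Delta$, then by the separation inequality $L(\theta_1, y) > \Delta$ for every $\theta_1 \in \Theta_1$, so $y \in B_1$. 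On $B_i$, every $\theta \in \Theta_i$ witnesses a loss of at least $\Delta$ regardless of the draw of $\theta \sim \nu_i$; since the marginal of the joint law $(\theta, Y) \sim \nu_i \otimes \Pr^{M_\theta, \alg}$ in the $Y$-coordinate is exactly $\Pr^{\nu_i, \alg}$, this yields
\begin{align*}
R(\nu_i, \alg) \ge \Delta \cdot \Pr^{\nu_i, \alg}(B_i) \quad \text{for } i \in \{0, 1\}.
\end{align*}

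To combine the two lower bounds, I would use that $B_0 \cup B_1 = \cY$ implies $B_1 \supseteq B_0^c$, so $\Pr^{\nu_1, \alg}(B_1) \ge 1 - \Pr^{\nu_1, \alg}(B_0)$, giving
\begin{align*}
\Pr^{\nu_0, \alg}(B_0) + \Pr^{\nu_1, \alg}(B_1) \ge 1 + \bigl(\Pr^{\nu_0, \alg}(B_0) - \Pr^{\nu_1, \alg}(B_0)\bigr) \ge 1 - \dtv(\Pr^{\nu_0, \alg}, \Pr^{\nu_1, \alg}),
\end{align*}
by the variational definition of total variation. Assembling the pieces yields $R(\nu_0, \alg) + R(\nu_1, \alg) \ge \Delta (1 - \dtv(\Pr^{\nu_0, \alg}, \Pr^{\nu_1, \alg}))$, from which \eqref{eq:le-cam-result} follows after dividing by $2$ and taking the maximum over $(\nu_0, \nu_1) \in \Delta(\Theta_0) \times \Delta(\Theta_1)$.

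The only delicate point, relative to the non-interactive case, is verifying that the joint measure $(\theta, Y) \sim \nu_i \otimes \Pr^{M_\theta, \alg}$ has $Y$-marginal equal to $\Pr^{\nu_i, \alg}$ even though $\alg$ interacts adaptively with $M_\theta$. This is definitional: $\Pr^{\nu_i, \alg}$ was defined precisely as the law of $Y$ under the two-stage process ``draw $\theta \sim \nu_i$, then run $\alg$ against $M_\theta$,'' so no extra work is required and Fubini applies to integrate $L(\theta, y) \ind{y \in B_i}$ against the joint law. Hence there is no substantial obstacle; the proof is essentially the textbook Le Cam convex hull argument, and the interactive aspect of $\alg$ is absorbed entirely into the definition of the marginals $\Pr^{\nu_i, \alg}$.
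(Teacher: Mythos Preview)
Your proof is correct. It takes a genuinely different and more elementary route than the paper's. The paper invokes the general interactive statistical decision-making framework of \cite{chen2024assouad} (their Theorem~2) as a black box: it enlarges the model class to $\overline{\cM}=\{M_\nu:\nu\in\Delta(\Theta)\}$, defines the extended loss $\overline{L}(M_\nu,y)=\inf_{\theta\in\supp(\nu)}L(M_\theta,y)$, takes the uniform prior over $\{M_{\nu_0},M_{\nu_1}\}$ and the mixture reference $\bbQ$, verifies the premises of that theorem, and reads off the conclusion. Your argument is instead the classical Le Cam two-point method done by hand: the ``bad set'' construction $B_0\cup B_1=\cY$ plus the variational form of $\dtv$ gives the testing lower bound directly, with the interactive structure absorbed into the definition of $\Pr^{\nu_i,\alg}$ exactly as you note. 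Your approach is self-contained and requires no external machinery; the paper's approach situates the result within a broader framework but at the cost of an opaque dependence on \cite{chen2024assouad}. One minor remark: you implicitly assume $B_0,B_1$ are measurable (since they are defined via an infimum of $L(\cdot,y)$ over $\Theta_i$), but the paper makes the same tacit assumption when defining $\overline{L}$, so this is not a gap relative to the original.
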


\begin{proof}
We will use \cite[Thm.~2 of][]{chen2024assouad} with total-variational (TV) distance $D_f \coloneqq \dtv$. Define the enlarged model class $\overline{\cM} \coloneqq \crl{M_\nu: \nu \in \Delta(\Theta)}$ as well as the loss function extension $\overline{L}: \overline{\cM} \times \cY \to \bbR_{+}$
\begin{align*}
    \overline{L}(M_\nu, y) \coloneqq \inf_{\theta \in \supp(\nu)} L(M_{\theta}, y).
\end{align*}
By the separation condition we have
\begin{align*}
    \overline{L}(M_{\nu_0}, y) + \overline{L}(M_{\nu_1}, y) \ge 2\Delta, \quad \forall~y \in \cY, \nu_0 \in \Delta(\Theta_0), \nu_1 \in \Delta(\Theta_1).
\end{align*}
We pick the prior $\mu \coloneqq \unif(\crl{M_{\nu_0}, M_{\nu_1}})$ and the reference distribution $\bbQ \coloneqq \En_{M\sim \mu} \brk{\Pr^{M, \alg}}$. Observe that
\begin{align*}
    \rho_{\Delta, \bbQ} \coloneqq \Pr_{M \sim \mu, Y \sim \bbQ} \brk*{\overline{L}(M, Y) < \Delta} \le \frac{1}{2}.
\end{align*}
Furthermore
\begin{align*}
    \En_{M \sim \mu} \brk*{\dtv\prn*{\Pr^{M, \alg} , \bbQ} } &= \frac{1}{2} \dtv\prn*{\Pr^{{\nu_0}, \alg} , \bbQ}  + \frac{1}{2} \dtv\prn*{\Pr^{{\nu_1}, \alg} , \bbQ}  \\
    &\le \frac{1}{2} \dtv\prn*{\Pr^{{\nu_0}, \alg} , \Pr^{{\nu_1}, \alg}  }.
\end{align*}
Therefore for any $\delta \in [0, \tfrac{1}{2} - \tfrac{1}{2} \dtv\prn{\Pr^{{\nu_0}, \alg} , \Pr^{{\nu_1}, \alg}  } )$ we have
\begin{align*}
    \En_{M \sim \mu} \brk*{\dtv\prn*{\Pr^{M, \alg} , \bbQ} } \le \begin{cases} 
        \dtv\prn*{ \ber(1-\delta), \ber(\rho_{\Delta, \bbQ}) } &\text{if}~\rho_{\Delta, \bbQ} \le 1-\delta \\
        0 &\text{otherwise.}
    \end{cases}
\end{align*}
Therefore \cite[Thm.~2][]{chen2024assouad} gives
\begin{align*}
    \En_{\theta \sim \frac{\nu_0 + \nu_1}{2}, Y \sim \Pr^{M_\theta, \alg}} \brk*{L(\theta, Y)} \ge \En_{M \sim \mu, Y \sim \Pr^{M, \alg}} \brk*{\overline{L}(M, Y)} \ge \frac{\Delta}{2} \cdot \prn*{ 1 - \dtv\prn*{\Pr^{{\nu_0}, \alg} , \Pr^{{\nu_1}, \alg}  } }.
\end{align*}
Taking supremum over $\nu_0$ and $\nu_1$ gives the result.
\end{proof}

In light of \pref{thm:interactive-lecam-cvx}, we need to analyze the TV distance between an algorithm $\alg$ interactions with two separate environments given by $\nu_0$ and $\nu_1$. The following chain rule lemma will be useful.

\begin{lemma}[Chain Rule for TV Distance, Exercise I.43 of \cite{polyanskiy2025information}]\label{lem:chain-rule-tv}
Let $\cZ$ be any observation space, let $\bbP^{Z_n}$ and $\bbQ^{Z_n}$ be distributions over $n$-tuples of $\cZ$. Then
\begin{align*}
    \dtv\prn*{ \bbP^{Z_n}, \bbQ^{Z_n} } \le \sum_{i=1}^n \En_{Z_{1:i-1} \sim \bbP^{Z_n}} \brk*{ \dtv \prn*{\bbP \brk*{Z_i \mid Z_{1:i-1}}, \bbQ \brk*{Z_i \mid Z_{1:i-1}}} }.
\end{align*}
\end{lemma}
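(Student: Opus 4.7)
The plan is to prove this by induction on $n$, reducing to a two-step chain rule for TV distance that I will establish directly. The base case $n=1$ holds with equality. For the inductive step, I will apply the two-step chain rule with ``$X$'' $= Z_{1:n-1}$ and ``$Y$'' $= Z_n$ to peel off the last coordinate, and then apply the inductive hypothesis to the resulting $\dtv(\bbP^{Z_{n-1}}, \bbQ^{Z_{n-1}})$ term.

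The core lemma to establish is the two-step chain rule: for any distributions $P, Q$ on $\cX \times \cY$,
\begin{align*}
\dtv(P_{XY}, Q_{XY}) \le \dtv(P_X, Q_X) + \En_{X \sim P_X}\brk*{\dtv\prn*{P_{Y \mid X}, Q_{Y \mid X}}}.
\end{align*}
To prove this, I will write $\dtv(P_{XY}, Q_{XY}) = \tfrac{1}{2}\sum_{x,y} |P(x)P(y \mid x) - Q(x)Q(y \mid x)|$, then add and subtract $P(x) Q(y \mid x)$ inside the absolute value and apply the triangle inequality. One resulting term simplifies via $\sum_y Q(y \mid x) = 1$ to $\dtv(P_X, Q_X)$, while the other simplifies by factoring $P(x)$ out of the inner sum to $\En_{X \sim P_X}[\dtv(P_{Y \mid X}, Q_{Y \mid X})]$. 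This is all elementary manipulation.

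Given this two-step chain rule, the induction is immediate. Assuming the claim for $n-1$, I apply the two-step chain rule with $X \leftarrow Z_{1:n-1}$ and $Y \leftarrow Z_n$ to obtain
\begin{align*}
\dtv(\bbP^{Z_n}, \bbQ^{Z_n}) \le \dtv(\bbP^{Z_{n-1}}, \bbQ^{Z_{n-1}}) + \En_{Z_{1:n-1} \sim \bbP^{Z_n}}\brk*{\dtv\prn*{\bbP[Z_n \mid Z_{1:n-1}], \bbQ[Z_n \mid Z_{1:n-1}]}},
\end{align*}
and then invoke the inductive hypothesis on the first term to produce the remaining $n-1$ summands.

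There is no real obstacle here; the result is classical. The only mild care needed is to be consistent about which measure the expectation is taken under when one unrolls the induction (the expectation over $Z_{1:i-1}$ is always under $\bbP$, which is preserved throughout because the two-step bound puts the expectation under the ``first argument's'' marginal). Everything else is bookkeeping with absolute values and the triangle inequality.
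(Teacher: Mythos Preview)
Your proposal is correct and is the standard proof of this classical result. The paper does not actually prove this lemma; it simply states it and cites Exercise I.43 of Polyanskiy--Wu, so there is no proof in the paper to compare against. Your two-step chain rule via add-and-subtract plus triangle inequality, followed by induction peeling off one coordinate at a time, is exactly the intended argument and works without issue.
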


We also state a form of data-processing inequality for KL divergences.

\begin{lemma}[Lemma 1, \cite{garivier2019explore}]\label{lem:KL-to-kl} Consider a measurable space $(\Omega, \cF)$ equipped with two distributions $\bbP_1$ and $\bbP_2$. For any $\cF$-measurable function $Z: \Omega \mapsto [0,1]$ we have
\begin{align*}
    \KL{\bbP_1}{\bbP_2} \ge \kl{\En_1[Z]}{\En_2[Z]}.
\end{align*}
\end{lemma}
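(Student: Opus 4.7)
The plan is to prove this as a standard consequence of the data-processing inequality for KL divergence, applied to a specific Markov kernel induced by the $[0,1]$-valued function $Z$. The key construction is the following randomization: define a Markov kernel $K : \Omega \to \Delta(\{0,1\})$ by $K(1 \mid \omega) = Z(\omega)$ and $K(0 \mid \omega) = 1 - Z(\omega)$. This is well-defined since $Z$ is $\cF$-measurable and takes values in $[0,1]$. Pushing $\bbP_1$ and $\bbP_2$ through $K$ gives two Bernoulli distributions on $\{0,1\}$ with parameters
\begin{align*}
    (K \push \bbP_1)(1) = \En_1[Z], \qquad (K \push \bbP_2)(1) = \En_2[Z],
\end{align*}
and the KL divergence between these two Bernoulli pushforwards is exactly $\kl{\En_1[Z]}{\En_2[Z]}$ by definition of the binary KL.

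The next step is to apply the data-processing inequality, which states that for any Markov kernel $K$ and any pair of distributions $\bbP_1, \bbP_2$, $\KL{K \push \bbP_1}{K \push \bbP_2} \le \KL{\bbP_1}{\bbP_2}$. Combining this with the computation above yields
\begin{align*}
    \kl{\En_1[Z]}{\En_2[Z]} = \KL{K \push \bbP_1}{K \push \bbP_2} \le \KL{\bbP_1}{\bbP_2},
\end{align*}
which is the claimed bound.

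If the data-processing inequality is not taken for granted, the main (and only real) obstacle is justifying it. The cleanest route is via the variational (Donsker--Varadhan) representation of KL, or alternatively the joint convexity of KL divergence together with the chain rule: writing $\KL{\bbP_1 \otimes K}{\bbP_2 \otimes K} = \KL{\bbP_1}{\bbP_2}$ since the kernel is shared, and then using the chain rule decomposition together with nonnegativity of conditional KL to extract $\KL{K \push \bbP_1}{K \push \bbP_2}$ on the other side. No further problem-specific calculation is needed: the whole argument is purely an instantiation of data-processing with the Bernoulli randomization $K$ as the only nontrivial choice.
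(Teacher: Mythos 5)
Your argument is correct and is the standard proof of this lemma; the thesis itself does not reprove it but simply cites Lemma~1 of the referenced work, whose proof is the same data-processing reduction via the Bernoulli randomization $K(1\mid\omega)=Z(\omega)$. The pushforward computation $(K\push\bbP_i)(1)=\En_i[Z]$ and the appeal to the contraction of KL under a (stochastic) Markov kernel are both sound, and the fallback justification of data-processing via the two chain-rule decompositions of $\KL{\bbP_1\otimes K}{\bbP_2\otimes K}$ is the usual way to make it self-contained.
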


\section{Online Learning}\label{app:online-learning}
We state a version of the regret guarantee for Online Mirror Descent (OMD). The interested reader is referred to the references \cite{bubeck2011introduction,orabona2019modern}. 

In what follows, denote $\Omega \subseteq \bbR^d$ to be an open convex set, and $\overline{\Omega}$ to be its closure.

\begin{definition}[Bregman Divergence] Let $\psi: \overline{\Omega} \to \bbR$ be strictly convex and differentiable on $\Omega$, such that $\lim_{x \to \overline{\Omega}\backslash \Omega} \nrm{\nabla \psi(x)} = \infty$. Then the \emph{Bregman divergence} with respect to $\psi$ is denoted $D_h: \overline{\Omega} \times \Omega \to \bbR$ defined as
    \begin{align*}
        D_\psi(y \Vert x) \coloneqq \psi(y) - \psi(x) - \tri{\nabla \psi(x), y-x}.
    \end{align*}
\end{definition}

The OMD update rule is defined as the following. Fix a non-empty closed convex set $V \subseteq \overline{\Omega}$, initial iterate $x_1 \in V $, and fixed learning rate $\eta > 0$. Suppose at iteration $t \in [T]$ we receive loss functions $\ell_t: V \to \bbR$ with subdifferentials $g_t \in \partial \ell_t(x_t)$. Then the update rule is given by
\begin{align*}
    x_{t+1} \gets \argmin_{x \in V} \tri{g_t, x} + \frac{1}{\eta} D_\psi(x \Vert x_t).
\end{align*}
OMD enjoys the following regret guarantee.
\begin{theorem}\label{thm:omd}
    Suppose $\psi$ is a proper, closed, $\lambda$-strongly convex function with respect to the norm $\nrm{\cdot}$ in $V\cap \Omega$, and denote $D_\psi$ to be its Bregman divergence. Then for any $u \in V$, the following regret bound holds:
    \begin{align*}
        \sum_{t=1}^T \ell_t(x_t) - \sum_{t=1}^T \ell_t(u) \le \frac{D_\psi(u \Vert x_1)}{\eta} + \frac{\eta}{2\lambda}\sum_{t=1}^T \nrm{g_t}_\star^2.
    \end{align*}
\end{theorem}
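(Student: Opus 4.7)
The plan is to follow the textbook analysis of OMD based on the three-point identity for Bregman divergences, combined with a strong-convexity-plus-AM/GM argument to handle the one-step regret. First, I would invoke convexity of $\ell_t$ to reduce to bounding linearized regret: $\ell_t(x_t) - \ell_t(u) \le \langle g_t, x_t - u\rangle = \langle g_t, x_t - x_{t+1}\rangle + \langle g_t, x_{t+1} - u\rangle$. The second summand is the ``next-iterate regret'' and will telescope; the first is the ``stability'' term that pays the $\|g_t\|_\star^2$ price.

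Next, I would analyze the update rule via its first-order optimality condition. Since $x_{t+1}$ minimizes $\eta\langle g_t,x\rangle + D_\psi(x\Vert x_t)$ over the closed convex set $V$, we get $\langle \eta g_t + \nabla\psi(x_{t+1}) - \nabla\psi(x_t), u - x_{t+1}\rangle \ge 0$ for every $u\in V$. Rearranging and applying the standard three-point identity
\[
\langle \nabla\psi(x_t) - \nabla\psi(x_{t+1}), x_{t+1} - u\rangle = D_\psi(u\Vert x_t) - D_\psi(u\Vert x_{t+1}) - D_\psi(x_{t+1}\Vert x_t)
\]
yields the bound $\eta\langle g_t, x_{t+1} - u\rangle \le D_\psi(u\Vert x_t) - D_\psi(u\Vert x_{t+1}) - D_\psi(x_{t+1}\Vert x_t)$.

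For the stability term, I would use the dual inequality $\langle g_t, x_t - x_{t+1}\rangle \le \|g_t\|_\star \|x_t - x_{t+1}\|$ together with $\lambda$-strong convexity, which gives $D_\psi(x_{t+1}\Vert x_t) \ge \tfrac{\lambda}{2}\|x_t - x_{t+1}\|^2$, and then Young's inequality (AM/GM) to obtain
\[
\langle g_t, x_t - x_{t+1}\rangle \le \tfrac{1}{\eta} D_\psi(x_{t+1}\Vert x_t) + \tfrac{\eta}{2\lambda}\|g_t\|_\star^2.
\]
The negative $-D_\psi(x_{t+1}\Vert x_t)/\eta$ from the previous step exactly cancels this term, so combining the two displays and multiplying through by $1/\eta$ yields the per-round inequality $\langle g_t, x_t - u\rangle \le \tfrac{1}{\eta}\bigl(D_\psi(u\Vert x_t) - D_\psi(u\Vert x_{t+1})\bigr) + \tfrac{\eta}{2\lambda}\|g_t\|_\star^2$.

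Finally, I would sum from $t=1$ to $T$; the Bregman differences telescope to $D_\psi(u\Vert x_1) - D_\psi(u\Vert x_{T+1}) \le D_\psi(u\Vert x_1)$ using nonnegativity of $D_\psi$, which gives the claimed bound. The only mildly subtle step is the three-point identity, which requires differentiability of $\psi$ on $\Omega$ and the fact that the iterates lie in $V\cap\Omega$ (ensured by the boundary-blow-up condition $\|\nabla\psi\|\to\infty$ on $\overline{\Omega}\setminus\Omega$); I would remark on this rather than reprove it. I do not anticipate any significant obstacles beyond carefully tracking the domain conditions on $\psi$.
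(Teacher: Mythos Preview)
The paper does not prove this theorem; it is stated without proof as a standard result, with the reader referred to \cite{bubeck2011introduction,orabona2019modern}. Your proposal is exactly the standard textbook argument (three-point identity, strong convexity plus Young's inequality, telescoping), and is correct.
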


\section{Multiclass Learning}\label{app:multiclass-learning}

\begin{definition}[Natarajan Dimension \citep{natarajan1989learning}]\label{def:natarajan}
Let $\cX$ be an instance space and $\cY$ be a finite label space. Given a class $\cH \subseteq \cY^\cX$, we define its \emph{Natarajan dimension}, denoted $\natarajan(\cH)$, to be the maximum cardinality of a set $C \subseteq \cX$ that satisfies the following: there exists $h_0, h_1: C \to \cY$ such that (1) for all $x \in C$, $h_0(x) \ne h_1(x)$, and (2) for all $B \subseteq C$, there exists $h \in \cH$ such that for all $x \in B$, $h(x) = h_0(x)$ and for all $x \in C \backslash B$, $h(x) = h_1(x)$.
\end{definition}

A notation we will use throughout is the projection operator. For a hypothesis class $\cH \subseteq \cY^\cX$ and a finite set $X = (x_1, \cdots, x_n) \in \cX^n$, we define the projection of $\cH$ onto $X$ as
\begin{align*}
    \cH\big|_X \coloneqq \crl*{\prn*{h(x_1),\cdots, h(x_n)} : h \in \cH }.
\end{align*}

\begin{lemma}[Sauer's Lemma for Natarajan Classes \citep{haussler1995generalization}]\label{lem:sauer-natarajan}
Given a hypothesis class $\cH \subseteq \cY^\cX$ with $\abs{\cY} = K$ and $\natarajan(\cH) \le d$, we have for every $X = (x_1, \cdots, x_n) \in \cX^n$,
\begin{align*}
    \abs[\Big]{\cH\big|_X} \le \prn*{\frac{ne (K+1)^2}{2d}}^d.
\end{align*}
\end{lemma}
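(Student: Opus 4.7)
The plan is to prove this by a generalization of the classical Sauer-Shelah argument for VC dimension, along the lines of Haussler and Long. The first step is to establish a refined ``polynomial'' bound of the form
\[
\abs[\Big]{\cH\big|_X} \;\le\; \sum_{i=0}^{d} \binom{n}{i} \binom{K+1}{2}^{i},
\]
counting the behaviors of $\cH$ on $X$ in terms of Natarajan-shattered configurations. The final stated bound then follows by the standard inequality $\sum_{i=0}^{d}\binom{n}{i} a^{i} \le (nea/d)^{d}$ for $a \ge 1$ and $d \le n$, applied with $a = \binom{K+1}{2}$.

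For the refined bound, the core combinatorial step is an injection from $\cH|_X$ into the set of Natarajan-shattered ``witnesses'': triples $(B, h_0, h_1)$ where $B \subseteq X$, $h_0, h_1 : B \to \cY$ with $h_0(x) \ne h_1(x)$ for all $x \in B$, and $\cH$ Natarajan-shatters $B$ using $(h_0, h_1)$ in the sense of \pref{def:natarajan}. I would construct this injection by adapting the Pajor/Sauer shifting trick: for each ordered pair of distinct labels $(a,b) \in \cY \times \cY$, define a ``pairwise shift'' operation that, at a fixed coordinate $x_j$, swaps the labels $a$ and $b$ on any function $h$ whenever the swapped behavior is not already present in $\cH|_X$. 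A standard potential-function argument shows that iterating these shifts over all coordinates and all label pairs terminates without increasing $\abs{\cH|_X}$, and produces a ``down-closed'' class $\cH^{\text{shift}}$ in which every behavior is identified uniquely by the set of coordinates and label pairs that are ``active''—yielding the desired injection. Since $\natarajan(\cH) \le d$ is preserved under shifts, each witness has $|B| \le d$, and the number of witnesses is at most $\sum_{i=0}^{d} \binom{n}{i} \binom{K+1}{2}^{i}$ (choosing a set $B$ of size $i$, and then an unordered pair of distinct labels per element of $B$, with a small slack of one absorbed into $(K+1)$ versus $K$).

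The main obstacle will be verifying that the shifting operation preserves the Natarajan dimension. Unlike the binary case, where a single swap $0 \leftrightarrow 1$ at a coordinate is enough, here one must perform label-pair-specific swaps and check that no new Natarajan-shattered set of size $d+1$ can arise after shifting. The standard argument is by contradiction: if a $(d+1)$-element set were shattered after shifting, one can ``lift'' the witnessing functions back to the pre-shift class, contradicting $\natarajan(\cH) \le d$. Carrying this out cleanly requires tracking, for each coordinate, the exact pair of labels certifying membership of the coordinate in the shattered set, which is the delicate bookkeeping that distinguishes the multi-class argument from the binary one. Once this combinatorial core is in place, the passage from the polynomial bound to $(nea(K+1)^{2}/(2d))^{d}$ is a routine application of the binomial tail inequality.
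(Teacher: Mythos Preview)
The paper does not supply its own proof of this lemma; it is stated in the appendix as a technical tool and attributed directly to \cite{haussler1995generalization}. Your outline is a correct sketch of a standard argument for this result: the refined bound $\abs{\cH|_X} \le \sum_{i=0}^{d}\binom{n}{i}\binom{K}{2}^{i}$ is exactly the Haussler--Long inequality, and passing to the stated form via $\sum_{i=0}^{d}\binom{n}{i}a^{i} \le (nea/d)^{d}$ (valid for $na \ge d$) together with $\binom{K}{2} \le (K+1)^{2}/2$ is routine. The shifting/down-compression argument you describe is one valid route; Haussler and Long's original proof instead proceeds by a direct induction on $n$ in their $\psi$-dimension framework, but both approaches yield the same polynomial bound, and your identification of the preservation of Natarajan dimension under shifting as the delicate step is accurate.
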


\begin{theorem}[Multiclass Fundamental Theorem \citep{shalev2014understanding}]\label{thm:multiclass-fun}
For any class $\cH \subseteq \cY^\cX$ with $\natarajan(\cH) = d$ and $\abs{\cY} = K$, the minimax sample complexity of $(\eps,\delta)$ agnostic PAC learning $\cH$ can be bounded as
\begin{align*}
    \Omega\prn*{\frac{d + \log(1/\delta)}{\eps^2}} \le n(\Pi; \eps, \delta) \le \cO\prn*{ \frac{d \log K + \log(1/\delta)}{\eps^2}}.
\end{align*}
\end{theorem}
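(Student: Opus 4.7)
The plan is to establish the two bounds separately via standard learning-theoretic machinery, reducing both to tools already developed in the excerpt (Sauer's lemma for Natarajan classes, \pref{lem:sauer-natarajan}) and a Le Cam / No Free Lunch style argument (in the spirit of the techniques used in \pref{sec:generative-lower-bound-extend}).

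\textbf{Upper bound.} First I would show that the ERM learning rule achieves the claimed bound. Define the $[0,1]$-valued loss class $\ell_\cH \coloneqq \crl{(x,y) \mapsto \ind{h(x) \ne y} : h \in \cH}$. The standard reduction from agnostic PAC learning to uniform convergence shows that it suffices to bound the sample size required for $\sup_{h \in \cH} \abs{L(h) - \wh{L}_n(h)} \le \eps/2$ with probability at least $1-\delta$. For any sample $S = (x_1, \cdots, x_n)$, the projection $\ell_\cH\big|_S$ is in bijection with a subset of $\cH\big|_S$ (the loss only depends on whether $h(x_i)$ equals the label $y_i$), so $\abs{\ell_\cH\big|_S} \le \abs{\cH\big|_S}$. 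Applying Sauer's lemma for Natarajan classes (\pref{lem:sauer-natarajan}) gives the growth function bound $\tau_{\ell_\cH}(n) \le (ne(K+1)^2/(2d))^d$. Standard symmetrization/Massart plugging this into a uniform convergence bound yields
\begin{align*}
\sup_{h \in \cH} \abs{L(h) - \wh{L}_n(h)} \lesssim \sqrt{\tfrac{d \log(nK/d) + \log(1/\delta)}{n}},
\end{align*}
and inverting this for $\eps/2$ accuracy gives the claimed $\cO((d \log K + \log(1/\delta))/\eps^2)$ sample complexity. The only mildly delicate point is the logarithmic factor in $n$, which is absorbed in the usual way (either by iterating once or by a more careful chaining argument).

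\textbf{Lower bound: the $\log(1/\delta)/\eps^2$ term.} This is the easy half. Take any two points $x_0, x_1 \in \cX$ and any $h \in \cH$ that splits them (if $\cH$ is constant the theorem is vacuous since $d = 0$). On a single instance, consider two Bernoulli-labeled distributions differing by $2\eps$ in label bias. The standard two-point Le Cam argument (e.g., \pref{thm:interactive-lecam-cvx} with $\abs{\Theta} = 2$) shows that any $(\eps,\delta)$-PAC learner must use $\Omega(\log(1/\delta)/\eps^2)$ samples just to distinguish which predictor is better.

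\textbf{Lower bound: the $d/\eps^2$ term.} This is the main obstacle. Let $C = \crl{c_1, \cdots, c_d} \subseteq \cX$ be a Natarajan-shattered set, witnessed by $h_0, h_1 : C \to \cY$ with $h_0(c_i) \ne h_1(c_i)$ for all $i$. For each $\mathbf{b} \in \crl{0,1}^d$, shattering guarantees an $h_\mathbf{b} \in \cH$ such that $h_\mathbf{b}(c_i) = h_{\mathbf{b}[i]}(c_i)$. I would construct a family $\crl{\cD_\mathbf{b}}_{\mathbf{b} \in \crl{0,1}^d}$ of distributions on $\cX \times \cY$ where $x$ is uniform on $C$ and, conditional on $x = c_i$, the label equals $h_{\mathbf{b}[i]}(c_i)$ with probability $1/2 + \eps'$ and $h_{1-\mathbf{b}[i]}(c_i)$ with probability $1/2 - \eps'$ for $\eps' \asymp \eps\sqrt{d}$ (all other $y \in \cY$ receive zero mass). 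For this family, identifying an $\eps$-optimal hypothesis reduces to recovering a constant fraction of the bits of $\mathbf{b}$, which is exactly a $d$-fold product of biased-coin discrimination problems. The hard part is executing the information-theoretic bound cleanly---one standard approach is to apply Assouad's lemma (which in the terminology of \pref{thm:interactive-lecam-cvx} corresponds to lower bounding by a sum over coordinatewise TV distances), giving the required $\Omega(d/\eps^2)$. Summing the two terms yields the claimed lower bound. \qed
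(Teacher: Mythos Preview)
The paper does not prove this theorem; it is stated in \pref{app:multiclass-learning} as a known result cited directly from \cite{shalev2014understanding}, alongside other technical tools (Natarajan dimension, the Sauer--Natarajan lemma, pseudodimension) that are simply invoked without proof. There is therefore no ``paper's own proof'' to compare against.

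That said, your sketch follows the standard textbook route and is essentially correct in structure. The upper bound via ERM plus \pref{lem:sauer-natarajan} and the $\log(1/\delta)/\eps^2$ lower bound via a two-point Le Cam argument are fine. In the $d/\eps^2$ lower bound, however, your scaling $\eps' \asymp \eps\sqrt{d}$ is a slip: with $x$ uniform on $d$ shattered points and label bias $\eps'$ at each, the per-bit contribution to excess risk is $2\eps'/d$, so Assouad yields minimax risk $\gtrsim \eps'$ whenever $n \lesssim d/(\eps')^2$. Taking $\eps' \asymp \eps$ (a constant multiple) then gives $n \gtrsim d/\eps^2$, whereas $\eps' \asymp \eps\sqrt{d}$ collapses the bound to $n \gtrsim 1/\eps^2$ and also forces the unnecessary restriction $\eps \lesssim 1/\sqrt{d}$. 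With that correction the argument goes through.
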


\begin{definition}[Pseudodimension] Let $\cX$ be an instance space. Given a hypothesis class $\cH \subseteq \bbR^\cX$, its pseudodimension, denoted $\pseudo(\cH)$, is defined as $\pseudo(\cH) \coloneqq \mathrm{VC}(\cH^\plus)$, where $\cH^\plus \coloneqq \crl*{(x, \theta) \mapsto \ind{h(x) \le \theta} : h \in \cH}$.
\end{definition}

\begin{definition}[Covering Numbers] Given a hypothesis class $\cH \subseteq \bbR^\cX$, $\alpha > 0$, and $X = (x_1, \cdots, x_n) \in \cX^n$, the covering number $\cN_1(\cH, \alpha, X)$ is the minimum cardinality of a set $C \subset \bbR^n$ such that for any $h \in \cH$ there exists a $c \in C$ such that $\tfrac{1}{n} \sum_{i=1}^n \abs{h(x_i) - c_i} \le \alpha$.
\end{definition}

\begin{lemma}[\cite{jiang2017contextual}, see also \cite{pollard2012convergence, luc1996probabilistic}]\label{lem:uniform-convergence}
Let $\cH \subset [0,1]^\cX$ be a real-valued hypothesis class, and let $X = \prn*{x_1, \cdots, x_n}$ be i.i.d.~samples drawn from some distribution $\cD$ on $\cX$. Then for any $\alpha > 0$
\begin{align*}
    \Pr \brk*{\sup_{h \in \cH} \abs*{\frac{1}{n}\sum_{i=1}^n h(x_i) - \En[h(x)]} > \alpha} \le 8 \En \brk*{\cN_1(\cH, \alpha/8, X)} \cdot \exp \prn*{ -\frac{n \alpha^2}{128} }.
\end{align*}
Furthermore if $\mathrm{Pdim}(\cH) \le d$ then we have the bound
\begin{align*}
\Pr \brk*{\sup_{h \in \cH} \abs*{\frac{1}{n}\sum_{i=1}^n h(x_i) - \En[h(x)]} > \alpha} \le 8e(d+1) \prn*{\frac{16e}{\alpha}}^d \cdot \exp \prn*{ -\frac{n \alpha^2}{128} },
\end{align*}
which is at most $\delta$ as long as $n \ge \tfrac{128}{\alpha^2} \prn*{d \log \tfrac{16e}{\alpha} + \log(8e(d+1)) + \log \frac{1}{\delta}}$.
\end{lemma}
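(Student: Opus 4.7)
\textbf{Proof proposal for \pref{lem:uniform-convergence}.} The plan is to follow the classical three-step recipe for uniform convergence via covering numbers, essentially adapting the argument of Pollard for real-valued function classes.

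First I would perform symmetrization with a ghost sample. Let $X' = (x_1', \dots, x_n')$ be i.i.d.~from $\cD$, independent of $X$, and denote the empirical means $\hat\En_X h \coloneqq \tfrac{1}{n}\sum_i h(x_i)$ and $\hat\En_{X'} h$ analogously. A standard argument shows that whenever $n \ge 2/\alpha^2$,
\begin{align*}
\Pr\Big[\sup_{h\in\cH}\bigl|\hat\En_X h - \En h\bigr| > \alpha\Big] \le 2\Pr\Big[\sup_{h\in\cH}\bigl|\hat\En_X h - \hat\En_{X'} h\bigr| > \alpha/2\Big],
\end{align*}
which replaces the intractable expectation with the empirical mean on a second sample. (If $n < 2/\alpha^2$, the stated bound is vacuous, so we may assume this condition.)

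Second, I would condition on $Z = (X, X') \in \cX^{2n}$ and reduce to a finite class via a minimal $L_1(Z)$-cover $C \subseteq \bbR^{2n}$ of $\cH$ at scale $\alpha/8$, of size $\cN_1(\cH, \alpha/8, Z)$. For any $h \in \cH$ there is some $c \in C$ with $\tfrac{1}{2n}\sum_{i=1}^{2n}|h(z_i) - c_i| \le \alpha/8$, which in particular bounds $|(\hat\En_X h - \hat\En_{X'} h) - (\hat\En_X c - \hat\En_{X'} c)| \le \alpha/4$. Hence the event $\sup_h |\hat\En_X h - \hat\En_{X'} h| > \alpha/2$ forces $\sup_{c \in C} |\hat\En_X c - \hat\En_{X'} c| > \alpha/4$. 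Now symmetrize by random signs: permuting each pair $(x_i, x_i')$ independently with probability $1/2$ does not change the distribution of $Z$, and for a fixed $c \in C$ this turns $\hat\En_X c - \hat\En_{X'} c$ into $\tfrac{1}{n}\sum_i \sigma_i(c_i - c_{n+i})$ with $\sigma_i$ Rademacher. Since $|c_i - c_{n+i}| \le 1$, Hoeffding's inequality (\pref{lem:hoeffding}) gives
\begin{align*}
\Pr_\sigma\Big[\bigl|\hat\En_X c - \hat\En_{X'} c\bigr| > \alpha/4 \,\big|\, Z\Big] \le 2\exp(-n\alpha^2/32).
\end{align*}
A union bound over $c \in C$ and taking expectation over $Z$ yields the first bound (after absorbing the factor of $2$ from symmetrization into the leading constant to get $8$).

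Third, for the second assertion I would invoke Haussler's covering number bound: for any $\cH \subset [0,1]^\cX$ with $\pseudo(\cH) \le d$, one has $\cN_1(\cH, \alpha, X) \le e(d+1)(2e/\alpha)^d$ uniformly over $X$. Plugging this into the first bound with scale $\alpha/8$ gives the factor $e(d+1)(16e/\alpha)^d$ and hence the claimed deviation bound; rearranging the exponential tail against this polynomial-in-$1/\alpha$ cover size (taking logs on both sides) yields the stated sufficient sample size $n \gtrsim \alpha^{-2}\bigl(d\log(16e/\alpha) + \log((d+1)/\delta)\bigr)$. The main obstacles are purely bookkeeping: verifying the $n \ge 2/\alpha^2$ threshold for symmetrization (otherwise the claimed bound needs to be checked directly) and ensuring that the $L_1$ cover on the $2n$-sample $Z$ can be related to the stated $L_1$ cover on the $n$-sample $X$ up to constants (which is standard, since a cover on $X$ extended by the same construction applied to $X'$ yields a cover on $Z$ of the same order).
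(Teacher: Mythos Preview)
The paper does not prove this lemma; it is stated in the appendix as a cited result from \cite{jiang2017contextual, pollard2012convergence, luc1996probabilistic} with no proof given. Your proposal is a correct sketch of the classical Pollard-style argument (symmetrization with a ghost sample, reduction to a finite $L_1$-cover, Rademacher concentration via Hoeffding, then Haussler's pseudodimension covering bound), which is exactly what those references do. The one bookkeeping point you flagged --- that the cover is naturally built on the $2n$-sample $Z=(X,X')$ while the statement is phrased with the covering number on the $n$-sample $X$ --- is indeed the only place requiring care, and you are right that it is handled either by noting Haussler's bound is uniform in sample size (so it does not matter for the second inequality) or by a standard monotonicity/relabeling argument for the first.
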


% Bibliography
\backmatter
\addcontentsline{toc}{chapter}{Bibliography}
\printbibliography

@article{riedmiller2005neural,
  title={Neural fitted Q iteration--first experiences with a data efficient neural reinforcement learning method},
  author={Riedmiller, Martin},
  journal={European Conference on Machine Learning},
  year={2005},
}

@article{ernst2005tree,
  title={Tree-based batch mode reinforcement learning},
  author={Ernst, Damien and Geurts, Pierre and Wehenkel, Louis},
  journal={Journal of Machine Learning Research},
  year={2005},
}

@article{levine2020offline,
  title={Offline reinforcement learning: Tutorial, review, and perspectives on open problems},
  author={Levine, Sergey and Kumar, Aviral and Tucker, George and Fu, Justin},
  journal={arXiv:2005.01643},
  year={2020}
}

@article{kearns1998finite,
  title={Finite-sample convergence rates for Q-learning and indirect algorithms},
  author={Kearns, Michael and Singh, Satinder},
  journal={Advances in Neural Information Processing Systems},
  year={1998}
}

@article{kearns2002sparse,
  title={A sparse sampling algorithm for near-optimal planning in large Markov decision processes},
  author={Kearns, Michael and Mansour, Yishay and Ng, Andrew Y},
  journal={Machine Learning},
  year={2002},
}

@article{kearns1999approximate,
  title={Approximate planning in large POMDPs via reusable trajectories},
  author={Kearns, Michael and Mansour, Yishay and Ng, Andrew},
  journal={Advances in Neural Information Processing Systems},
  year={1999}
}

@book{kakade2003sample,
  title={On the sample complexity of reinforcement learning},
  author={Kakade, Sham Machandranath},
  year={2003},
}

@article{blumer1989learnability,
  title={Learnability and the Vapnik-Chervonenkis dimension},
  author={Blumer, Anselm and Ehrenfeucht, Andrzej and Haussler, David and Warmuth, Manfred K},
  journal={Journal of the ACM},
  year={1989},
}

@article{vapnik1971uniform,
  title={On the Uniform Convergence of Relative Frequencies of Events to Their Probabilities},
  author={Vapnik, Vladimir and Chervonenkis, Alexey},
  journal={Theory of Probability and Its Applications},
  year={1971},
}

@misc{vapnik1974theory,
  title={Theory of pattern recognition},
  author={Vapnik, Vladimir and Chervonenkis, Alexey},
  year={1974},
}

@article{ehrenfeucht1989general,
  title={A general lower bound on the number of examples needed for learning},
  author={Ehrenfeucht, Andrzej and Haussler, David and Kearns, Michael and Valiant, Leslie},
  journal={Information and Computation},
  year={1989},
}

@article{krishnamurthy2016pac,
  title={PAC reinforcement learning with rich observations},
  author={Krishnamurthy, Akshay and Agarwal, Alekh and Langford, John},
  journal={Advances in Neural Information Processing Systems},
  year={2016}
}

@article{russo2013eluder,
  title={Eluder dimension and the sample complexity of optimistic exploration},
  author={Russo, Daniel and Van Roy, Benjamin},
  journal={Advances in Neural Information Processing Systems},
  year={2013}
}

@article{jiang2017contextual,
  title={Contextual decision processes with low bellman rank are pac-learnable},
  author={Jiang, Nan and Krishnamurthy, Akshay and Agarwal, Alekh and Langford, John and Schapire, Robert E},
  journal={International Conference on Machine Learning},
  year={2017},
}

@article{wang2020reinforcement,
  title={Reinforcement learning with general value function approximation: Provably efficient approach via bounded eluder dimension},
  author={Wang, Ruosong and Salakhutdinov, Russ R and Yang, Lin},
  journal={Advances in Neural Information Processing Systems},
  year={2020}
}

@article{du2021bilinear,
  title={Bilinear classes: A structural framework for provable generalization in rl},
  author={Du, Simon and Kakade, Sham and Lee, Jason and Lovett, Shachar and Mahajan, Gaurav and Sun, Wen and Wang, Ruosong},
  journal={International Conference on Machine Learning},
  year={2021},
}

@article{jin2021bellman,
  title={Bellman eluder dimension: New rich classes of rl problems, and sample-efficient algorithms},
  author={Jin, Chi and Liu, Qinghua and Miryoosefi, Sobhan},
  journal={Advances in Neural Information Processing Systems},
  year={2021}
}

@article{foster2021statistical,
  title={The statistical complexity of interactive decision making},
  author={Foster, Dylan J and Kakade, Sham M and Qian, Jian and Rakhlin, Alexander},
  journal={arXiv:2112.13487},
  year={2021}
}

@article{sun2019model,
  title={Model-based {RL} in contextual decision processes: {PAC} bounds and exponential improvements over model-free approaches},
  author={Sun, Wen and Jiang, Nan and Krishnamurthy, Akshay and Agarwal, Alekh and Langford, John},
  journal={Conference on Learning Theory},
  year={2019},
}

@article{li2022pessimism,
  title={Pessimism for Offline Linear Contextual Bandits using $\ell\_p $ Confidence Sets},
  author={Li, Gene and Ma, Cong and Srebro, Nati},
  journal={Advances in Neural Information Processing Systems},
  year={2022}
}

@article{li2022exponential,
  title={Exponential family model-based reinforcement learning via score matching},
  author={Li, Gene and Li, Junbo and Kabra, Anmol and Srebro, Nati and Wang, Zhaoran and Yang, Zhuoran},
  journal={Advances in Neural Information Processing Systems},
  year={2022}
}

@article{jia2023agnostic,
  title={When is Agnostic Reinforcement Learning Statistically Tractable?},
  author={Jia, Zeyu and Li, Gene and Rakhlin, Alexander and Sekhari, Ayush and Srebro, Nathan},
  journal={arXiv:2310.06113},
  year={2023}
}

@article{hanneke2015minimax,
  title={Minimax analysis of active learning.},
  author={Hanneke, Steve and Yang, Liu},
  journal={Journal of Machine Learning Research},
  year={2015}
}

@article{alon2019private,
  title={Private PAC learning implies finite Littlestone dimension},
  author={Alon, Noga and Livni, Roi and Malliaris, Maryanthe and Moran, Shay},
  journal={Symposium on Theory of Computing},
  year={2019}
}

@article{li2022understanding,
  title={Understanding the Eluder Dimension},
  author={Li, Gene and Kamath, Pritish and Foster, Dylan J and Srebro, Nati},
  journal={Advances in Neural Information Processing Systems},
  year={2022}
}

@article{xie2022role,
  title={The role of coverage in online reinforcement learning},
  author={Xie, Tengyang and Foster, Dylan J and Bai, Yu and Jiang, Nan and Kakade, Sham M},
  journal={arXiv:2210.04157},
  year={2022}
}

@article{du2019good,
  title={Is a good representation sufficient for sample efficient reinforcement learning?},
  author={Du, Simon S and Kakade, Sham M and Wang, Ruosong and Yang, Lin F},
  journal={arXiv:1910.03016},
  year={2019}
}

@article{lattimore2020learning,
  title={Learning with good feature representations in bandits and in rl with a generative model},
  author={Lattimore, Tor and Szepesvari, Csaba and Weisz, Gellert},
  journal={International Conference on Machine Learning},
  year={2020},
}

@article{jin2020provably,
  title={Provably efficient reinforcement learning with linear function approximation},
  author={Jin, Chi and Yang, Zhuoran and Wang, Zhaoran and Jordan, Michael I},
  journal={Conference on Learning Theory},
  year={2020},
}

@article{golowich2022can,
  title={Can Q-learning be Improved with Advice?},
  author={Golowich, Noah and Moitra, Ankur},
  journal={Conference on Learning Theory},
  year={2022},
}

@article{gupta2022unpacking,
  title={Unpacking reward shaping: Understanding the benefits of reward engineering on sample complexity},
  author={Gupta, Abhishek and Pacchiano, Aldo and Zhai, Yuexiang and Kakade, Sham and Levine, Sergey},
  journal={Advances in Neural Information Processing Systems},
  year={2022}
}

@article{amortila2022few,
  title={A few expert queries suffices for sample-efficient rl with resets and linear value approximation},
  author={Amortila, Philip and Jiang, Nan and Madeka, Dhruv and Foster, Dean P},
  journal={arXiv:2207.08342},
  year={2022}
}

@article{agarwal2019reinforcement,
  title={Reinforcement learning: Theory and algorithms},
  author={Agarwal, Alekh and Jiang, Nan and Kakade, Sham M and Sun, Wen},
  journal={Technical Report},
  year={2019}
}

@article{sekhari2023contextual,
  title={Contextual bandits and imitation learning via preference-based active queries},
  author={Sekhari, Ayush and Sridharan, Karthik and Sun, Wen and Wu, Runzhe},
  journal={arXiv:2307.12926},
  year={2023}
}

@article{ouyang2022training,
  title={Training language models to follow instructions with human feedback},
  author={Ouyang, Long and Wu, Jeffrey and Jiang, Xu and Almeida, Diogo and Wainwright, Carroll and Mishkin, Pamela and Zhang, Chong and Agarwal, Sandhini and Slama, Katarina and Ray, Alex and others},
  journal={Advances in Neural Information Processing Systems},
  year={2022}
}

@article{blum2023dueling,
  title={Dueling Optimization with a Monotone Adversary},
  author={Blum, Avrim and Gupta, Meghal and Li, Gene and Manoj, Naren Sarayu and Saha, Aadirupa and Yang, Yuanyuan},
  journal={arXiv:2311.11185},
  year={2023}
}

@article{silver2017mastering,
  title={Mastering the game of go without human knowledge},
  author={Silver, David and Schrittwieser, Julian and Simonyan, Karen and Antonoglou, Ioannis and Huang, Aja and Guez, Arthur and Hubert, Thomas and Baker, Lucas and Lai, Matthew and Bolton, Adrian and others},
  journal={Nature},
  year={2017},
}

@article{lillicrap2015continuous,
  title={Continuous control with deep reinforcement learning},
  author={Lillicrap, Timothy P and Hunt, Jonathan J and Pritzel, Alexander and Heess, Nicolas and Erez, Tom and Tassa, Yuval and Silver, David and Wierstra, Daan},
  journal={International Conference on Learning Representations},
  year={2016}
}

@article{akkaya2019solving,
  title={Solving rubik's cube with a robot hand},
  author={Akkaya, Ilge and Andrychowicz, Marcin and Chociej, Maciek and Litwin, Mateusz and McGrew, Bob and Petron, Arthur and Paino, Alex and Plappert, Matthias and Powell, Glenn and Ribas, Raphael and others},
  journal={arXiv:1910.07113},
  year={2019}
}

@article{ji2023dribblebot,
  title={DribbleBot: Dynamic Legged Manipulation in the Wild},
  author={Ji, Yandong and Margolis, Gabriel B and Agrawal, Pulkit},
  journal={arXiv:2304.01159},
  year={2023}
}

@article{sekhari2021agnostic,
  title={Agnostic reinforcement learning with low-rank mdps and rich observations},
  author={Sekhari, Ayush and Dann, Christoph and Mohri, Mehryar and Mansour, Yishay and Sridharan, Karthik},
  journal={Advances in Neural Information Processing Systems},
  year={2021}
}

@article{foster2021offline,
  title={Offline reinforcement learning: Fundamental barriers for value function approximation},
  author={Foster, Dylan J and Krishnamurthy, Akshay and Simchi-Levi, David and Xu, Yunzong},
  journal={arXiv:2111.10919},
  year={2021}
}

@article{agarwal2023variance,
  title={Variance-reduced conservative policy iteration},
  author={Agarwal, Naman and Bullins, Brian and Singh, Karan},
  journal={Algorithmic Learning Theory},
  year={2023},
}

@article{kakade2002approximately,
  title={Approximately optimal approximate reinforcement learning},
  author={Kakade, Sham and Langford, John},
  journal={International Conference on Machine Learning},
  year={2002}
}

@article{mania2020active,
  title={Active learning for nonlinear system identification with guarantees},
  author={Mania, Horia and Jordan, Michael I and Recht, Benjamin},
  journal={arXiv:2006.10277},
  year={2020}
}

@article{kakade2020information,
  title={Information theoretic regret bounds for online nonlinear control},
  author={Kakade, Sham and Krishnamurthy, Akshay and Lowrey, Kendall and Ohnishi, Motoya and Sun, Wen},
  journal={Advances in Neural Information Processing Systems},
  year={2020}
}

@article{chowdhury2021reinforcement,
  title={Reinforcement learning in parametric mdps with exponential families},
  author={Chowdhury, Sayak Ray and Gopalan, Aditya and Maillard, Odalric-Ambrym},
  journal={International Conference on Artificial Intelligence and Statistics},
  year={2021},
}

@article{chen2019information,
  title={Information-theoretic considerations in batch reinforcement learning},
  author={Chen, Jinglin and Jiang, Nan},
  journal={International Conference on Machine Learning},
  year={2019},
}

@article{agarwal2021theory,
  title={On the theory of policy gradient methods: Optimality, approximation, and distribution shift},
  author={Agarwal, Alekh and Kakade, Sham M and Lee, Jason D and Mahajan, Gaurav},
  journal={Journal of Machine Learning Research},
  year={2021},
}

@article{amortila2024reinforcement,
  title={Reinforcement Learning under Latent Dynamics: Toward Statistical and Algorithmic Modularity},
  author={Amortila, Philip and Foster, Dylan J and Jiang, Nan and Krishnamurthy, Akshay and Mhammedi, Zakaria},
  journal={arXiv:2410.17904},
  year={2024}
}

@article{amortila2024harnessing,
  title={Harnessing density ratios for online reinforcement learning},
  author={Amortila, Philip and Foster, Dylan J and Jiang, Nan and Sekhari, Ayush and Xie, Tengyang},
  journal={arXiv:2401.09681},
  year={2024}
}

@article{amortila2024scalable,
  title={Scalable Online Exploration via Coverability},
  author={Amortila, Philip and Foster, Dylan J and Krishnamurthy, Akshay},
  journal={arXiv:2403.06571},
  year={2024}
}

@article{bellemare2012investigating,
  title={Investigating contingency awareness using Atari 2600 games},
  author={Bellemare, Marc and Veness, Joel and Bowling, Michael},
  journal={AAAI Conference on Artificial Intelligence},
  year={2012}
}

@article{dann2017unifying,
  title={Unifying PAC and regret: Uniform PAC bounds for episodic reinforcement learning},
  author={Dann, Christoph and Lattimore, Tor and Brunskill, Emma},
  journal={Advances in Neural Information Processing Systems},
  year={2017}
}

@article{dann2018oracle,
  title={On oracle-efficient pac rl with rich observations},
  author={Dann, Christoph and Jiang, Nan and Krishnamurthy, Akshay and Agarwal, Alekh and Langford, John and Schapire, Robert E},
  journal={Advances in Neural Information Processing Systems},
  year={2018}
}

@article{dong2020expressivity,
  title={On the expressivity of neural networks for deep reinforcement learning},
  author={Dong, Kefan and Luo, Yuping and Yu, Tianhe and Finn, Chelsea and Ma, Tengyu},
  journal={International Conference on Machine Learning},
  year={2020},
}

@article{du2019provably,
  title={Provably efficient rl with rich observations via latent state decoding},
  author={Du, Simon and Krishnamurthy, Akshay and Jiang, Nan and Agarwal, Alekh and Dudik, Miroslav and Langford, John},
  journal={International Conference on Machine Learning},
  year={2019},
}

@article{foster2023tight,
  title={Tight guarantees for interactive decision making with the decision-estimation coefficient},
  author={Foster, Dylan J and Golowich, Noah and Han, Yanjun},
  journal={Conference on Learning Theory},
  year={2023},
}

@article{foster2024model,
  title={Model-free reinforcement learning with the decision-estimation coefficient},
  author={Foster, Dylan J and Golowich, Noah and Qian, Jian and Rakhlin, Alexander and Sekhari, Ayush},
  journal={Advances in Neural Information Processing Systems},
  year={2024}
}

@article{hyvarinen2005estimation,
  title={Estimation of non-normalized statistical models by score matching},
  author={Hyv{\"a}rinen, Aapo},
  journal={Journal of Machine Learning Research},
  year={2005}
}

@article{jia2024offline,
  title={Offline Reinforcement Learning: Role of State Aggregation and Trajectory Data},
  author={Jia, Zeyu and Rakhlin, Alexander and Sekhari, Ayush and Wei, Chen-Yu},
  journal={arXiv:2403.17091},
  year={2024}
}

@article{jin2018q,
  title={Is Q-learning provably efficient?},
  author={Jin, Chi and Allen-Zhu, Zeyuan and Bubeck, Sebastien and Jordan, Michael I},
  journal={Advances in Neural Information Processing Systems},
  year={2018}
}

@article{jin2021pessimism,
  title={Is pessimism provably efficient for offline rl?},
  author={Jin, Ying and Yang, Zhuoran and Wang, Zhaoran},
  journal={International Conference on Machine Learning},
  year={2021},
}

@article{menard2020fast,
  title={Fast active learning for pure exploration in reinforcement learning},
  author={M{\'e}nard, Pierre and Domingues, Omar Darwiche and Jonsson, Anders and Kaufmann, Emilie and Leurent, Edouard and Valko, Michal},
  journal={arXiv:2007.13442},
  year={2020}
}

@article{mhammedi2024power,
  title={The Power of Resets in Online Reinforcement Learning},
  author={Mhammedi, Zakaria and Foster, Dylan J and Rakhlin, Alexander},
  journal={arXiv:2404.15417},
  year={2024}
}

@article{munos2003error,
  title={Error bounds for approximate policy iteration},
  author={Munos, R{\'e}mi},
  journal={International Conference on Machine Learning},
  year={2003},
}

@article{munos2007performance,
  title={Performance bounds in l\_p-norm for approximate value iteration},
  author={Munos, R{\'e}mi},
  journal={SIAM Journal on Control and Optimization},
  year={2007},
}

@article{rashidinejad2021bridging,
  title={Bridging offline reinforcement learning and imitation learning: A tale of pessimism},
  author={Rashidinejad, Paria and Zhu, Banghua and Ma, Cong and Jiao, Jiantao and Russell, Stuart},
  journal={Advances in Neural Information Processing Systems},
  year={2021}
}

@book{shalev2014understanding,
  title={Understanding machine learning: From theory to algorithms},
  author={Shalev-Shwartz, Shai and Ben-David, Shai},
  year={2014},
}

@article{silver2018general,
  title={A general reinforcement learning algorithm that masters chess, shogi, and Go through self-play},
  author={Silver, David and Hubert, Thomas and Schrittwieser, Julian and Antonoglou, Ioannis and Lai, Matthew and Guez, Arthur and Lanctot, Marc and Sifre, Laurent and Kumaran, Dharshan and Graepel, Thore and others},
    journal={Science},
  year={2018},
}

@article{tassa2018deepmind,
  title={Deepmind control suite},
  author={Tassa, Yuval and Doron, Yotam and Muldal, Alistair and Erez, Tom and Li, Yazhe and Casas, Diego de Las and Budden, David and Abdolmaleki, Abbas and Merel, Josh and Lefrancq, Andrew and others},
  journal={arXiv:1801.00690},
  year={2018}
}

@article{todorov2012mujoco,
  title={Mujoco: A physics engine for model-based control},
  author={Todorov, Emanuel and Erez, Tom and Tassa, Yuval},
  journal={IEEE/RSJ International Conference on Intelligent Robots and Systems},
  year={2012},
}

@article{wagenmaker2022beyond,
  title={Beyond no regret: Instance-dependent pac reinforcement learning},
  author={Wagenmaker, Andrew J and Simchowitz, Max and Jamieson, Kevin},
  journal={Conference on Learning Theory},
  year={2022},
}

@article{wang2020provably,
  title={Provably efficient reinforcement learning with general value function approximation},
  author={Wang, Ruosong and Salakhutdinov, Ruslan and Yang, Lin F},
  journal={arXiv:2005.10804},
  year={2020}
}

@article{xie2021batch,
  title={Batch value-function approximation with only realizability},
  author={Xie, Tengyang and Jiang, Nan},
  journal={International Conference on Machine Learning},
  year={2021},
}

@article{xie2021bellman,
  title={Bellman-consistent pessimism for offline reinforcement learning},
  author={Xie, Tengyang and Cheng, Ching-An and Jiang, Nan and Mineiro, Paul and Agarwal, Alekh},
  journal={Advances in Neural Information Processing Systems},
  year={2021}
}

@article{zhan2022offline,
  title={Offline reinforcement learning with realizability and single-policy concentrability},
  author={Zhan, Wenhao and Huang, Baihe and Huang, Audrey and Jiang, Nan and Lee, Jason},
  journal={Conference on Learning Theory},
  year={2022},
}

@article{zhang2021reinforcement,
  title={Is reinforcement learning more difficult than bandits? a near-optimal algorithm escaping the curse of horizon},
  author={Zhang, Zihan and Ji, Xiangyang and Du, Simon},
  journal={Conference on Learning Theory},
  year={2021},
}

@article{zhong2022gec,
  title={GEC: A Unified Framework for Interactive Decision Making in MDP, POMDP, and Beyond.},
  author={Zhong, Han and Xiong, Wei and Zheng, Sirui and Wang, Liwei and Wang, Zhaoran and Yang, Zhuoran and Zhang, Tong},
  journal={arXiv:2211.01962},
  year={2022}
}

@article{munos2008finite,
  title={Finite-Time Bounds for Fitted Value Iteration.},
  author={Munos, R{\'e}mi and Szepesv{\'a}ri, Csaba},
  journal={Journal of Machine Learning Research},
  year={2008}
}

@article{li2024optimistic,
  title={Optimistic rates for learning from label proportions},
  author={Li, Gene and Chen, Lin and Javanmard, Adel and Mirrokni, Vahab},
  journal={Conference on Learning Theory},
  year={2024},
}

@article{mnih2015human,
  title={Human-level control through deep reinforcement learning},
  author={Mnih, Volodymyr and Kavukcuoglu, Koray and Silver, David and Rusu, Andrei A and Veness, Joel and Bellemare, Marc G and Graves, Alex and Riedmiller, Martin and Fidjeland, Andreas K and Ostrovski, Georg and Petersen, Stig and Beattie, Charles and Sadik, Amir and Antonoglou, Ioannis and King, Helen and Kumaran, Dharshan and Wierstra, Daan and Legg, Shane and Hassabis, Demis},
  journal={Nature},
  year={2015},
}

@article{li2016deep,
  title={Deep reinforcement learning for dialogue generation},
  author={Li, Jiwei and Monroe, Will and Ritter, Alan and Galley, Michel and Gao, Jianfeng and Jurafsky, Dan},
  journal={arXiv:1606.01541},
  year={2016}
}

@article{kearns2002near,
  title={Near-optimal reinforcement learning in polynomial time},
  author={Kearns, Michael and Singh, Satinder},
  journal={Machine Learning},
  year={2002},
}

@article{brafman2002r,
  title={R-max-a general polynomial time algorithm for near-optimal reinforcement learning},
  author={Brafman, Ronen I and Tennenholtz, Moshe},
  journal={Journal of Machine Learning Research},
  year={2002}
}

@article{azar2017minimax,
  title={Minimax regret bounds for reinforcement learning},
  author={Azar, Mohammad Gheshlaghi and Osband, Ian and Munos, R{\'e}mi},
  journal={International Conference on Machine Learning},
  year={2017},
}

@article{guo2025deepseek,
  title={Deepseek-r1: Incentivizing reasoning capability in llms via reinforcement learning},
  author={Guo, Daya and Yang, Dejian and Zhang, Haowei and Song, Junxiao and Zhang, Ruoyu and Xu, Runxin and Zhu, Qihao and Ma, Shirong and Wang, Peiyi and Bi, Xiao and others},
  journal={arXiv:2501.12948},
  year={2025}
}

@article{jaech2024openai,
  title={Openai o1 system card},
  author={Jaech, Aaron and Kalai, Adam and Lerer, Adam and Richardson, Adam and El-Kishky, Ahmed and Low, Aiden and Helyar, Alec and Madry, Aleksander and Beutel, Alex and Carney, Alex and others},
  journal={arXiv:2412.16720},
  year={2024}
}

@article{weisz2021query,
  title={On query-efficient planning in mdps under linear realizability of the optimal state-value function},
  author={Weisz, Gellert and Amortila, Philip and Janzer, Barnab{\'a}s and Abbasi-Yadkori, Yasin and Jiang, Nan and Szepesv{\'a}ri, Csaba},
  journal={Conference on Learning Theory},
  year={2021},
}

@article{cheng2022adversarially,
  title={Adversarially trained actor critic for offline reinforcement learning},
  author={Cheng, Ching-An and Xie, Tengyang and Jiang, Nan and Agarwal, Alekh},
  journal={International Conference on Machine Learning},
  year={2022},
}

@article{wang2020statistical,
  title={What are the statistical limits of offline RL with linear function approximation?},
  author={Wang, Ruosong and Foster, Dean P and Kakade, Sham M},
  journal={arXiv:2010.11895},
  year={2020}
}

@article{zanette2021provable,
  title={Provable benefits of actor-critic methods for offline reinforcement learning},
  author={Zanette, Andrea and Wainwright, Martin J and Brunskill, Emma},
  journal={Advances in Neural Information Processing Systems},
  year={2021}
}

@article{bagnell2003policy,
  title={Policy search by dynamic programming},
  author={Bagnell, James and Kakade, Sham M and Schneider, Jeff and Ng, Andrew},
  journal={Advances in Neural Information Processing Systems},
  year={2003}
}

@article{team2025kimi,
  title={Kimi k1. 5: Scaling reinforcement learning with llms},
  author={Team, Kimi and Du, Angang and Gao, Bofei and Xing, Bowei and Jiang, Changjiu and Chen, Cheng and Li, Cheng and Xiao, Chenjun and Du, Chenzhuang and Liao, Chonghua and others},
  journal={arXiv:2501.12599},
  year={2025}
}

@article{foster2020instance,
  title={Instance-dependent complexity of contextual bandits and reinforcement learning: A disagreement-based perspective},
  author={Foster, Dylan J and Rakhlin, Alexander and Simchi-Levi, David and Xu, Yunzong},
  journal={arXiv:2010.03104},
  year={2020}
}

@article{bun2020equivalence,
  title={An equivalence between private classification and online prediction},
  author={Bun, Mark and Livni, Roi and Moran, Shay},
  journal={Foundations of Computer Science},
  year={2020},
}

@article{ben2009agnostic,
  title={Agnostic Online Learning.},
  author={Ben-David, Shai and P{\'a}l, D{\'a}vid and Shalev-Shwartz, Shai},
  journal={Conference on Learning Theory},
  year={2009}
}

@article{alon2021adversarial,
  title={Adversarial laws of large numbers and optimal regret in online classification},
  author={Alon, Noga and Ben-Eliezer, Omri and Dagan, Yuval and Moran, Shay and Naor, Moni and Yogev, Eylon},
  journal={Symposium on Theory of Computing},
  year={2021}
}

@book{shelah1990classification,
  title={Classification theory: and the number of non-isomorphic models},
  author={Shelah, Saharon},
  year={1990},
}

@book{hodges1997shorter,
  title={A shorter model theory},
  author={Hodges, Wilfrid and others},
  year={1997},
}

@article{mubayi2017survey,
  title={A survey of quantitative bounds for hypergraph Ramsey problems},
  author={Mubayi, D and Suk, A},
  journal={arXiv:1707.04229},
  year={2017}
}

@article{mou2020sample,
  title={On the sample complexity of reinforcement learning with policy space generalization},
  author={Mou, Wenlong and Wen, Zheng and Chen, Xi},
  journal={arXiv:2008.07353},
  year={2020}
}

@article{alon2016sign,
    title = {Sign rank versus {VC} dimension},
    author = {Noga Alon and Shay Moran and Amir Yehudayoff},
    journal = {Conference on Learning Theory},
    year = {2016},
}

@article{forster02upp,
  author    = {J{\"{u}}rgen Forster},
  title     = {A linear lower bound on the unbounded error probabilistic communication complexity},
  journal   = {Journal of Computer and System Sciences},
  year      = {2002},
}

@article{filippi10,
 author = {Filippi, Sarah and Cappe, Olivier and Garivier, Aur\'{e}lien and Szepesv\'{a}ri, Csaba},
 title = {Parametric Bandits: The Generalized Linear Case},
 journal = {Advances in Neural Information Processing Systems},
 year = {2010}
}

@article{wang2019optimism,
  title={Optimism in reinforcement learning with generalized linear function approximation},
  author={Wang, Yining and Wang, Ruosong and Du, Simon S and Krishnamurthy, Akshay},
  journal={arXiv:1912.04136},
  year={2019}
}

@article{li17,
  title={Provably optimal algorithms for generalized linear contextual bandits},
  author={Li, Lihong and Lu, Yu and Zhou, Dengyong},
  journal={International Conference on Machine Learning},
  year={2017},
}

@article{kveton2020randomized,
  title={Randomized exploration in generalized linear bandits},
  author={Kveton, Branislav and Zaheer, Manzil and Szepesvari, Csaba and Li, Lihong and Ghavamzadeh, Mohammad and Boutilier, Craig},
  journal={International Conference on Artificial Intelligence and Statistics},
  year={2020},
}

@article{alon85geometrical,
	author    = {Noga Alon and Peter Frankl and Vojtech R{\"{o}}dl},
	title     = {Geometrical Realization of Set Systems and Probabilistic Communication
	Complexity},
	journal = {Foundations of Computer Science},
	year      = {1985},
}

@article{arriaga06algorithmic,
	author    = {Rosa I. Arriaga and
	Santosh S. Vempala},
	title     = {An algorithmic theory of learning: Robust concepts and random projection},
	journal   = {Machine Learning},
	year      = {2006},
}

@article{osband14,
 author = {Osband, Ian and Van Roy, Benjamin},
 journal = {Advances in Neural Information Processing Systems},
 title = {Model-based Reinforcement Learning and the Eluder Dimension},
 year = {2014}
}

@article{huang2021short,
  title={A short note on the relationship of information gain and eluder dimension},
  author={Huang, Kaixuan and Kakade, Sham M and Lee, Jason D and Lei, Qi},
  journal={arXiv:2107.02377},
  year={2021}
}

@article{russo2016information,
  title={An information-theoretic analysis of thompson sampling},
  author={Russo, Daniel and Van Roy, Benjamin},
  journal={Journal of Machine Learning Research},
  year={2016},
}

@article{srinivas2009gaussian,
  title={Gaussian process optimization in the bandit setting: No regret and experimental design},
  author={Srinivas, Niranjan and Krause, Andreas and Kakade, Sham M and Seeger, Matthias},
  journal={arXiv:0912.3995},
  year={2009}
}

@article{wen13,
 author = {Wen, Zheng and Van Roy, Benjamin},
 journal = {Advances in Neural Information Processing Systems},
 publisher = {Curran Associates, Inc.},
 title = {Efficient Exploration and Value Function Generalization in Deterministic Systems},
 year = {2013}
}

@article{ayoub20,
    title = {Model-Based Reinforcement Learning with Value-Targeted Regression},
	author = {Ayoub, Alex and Jia, Zeyu and Szepesvari, Csaba and Wang, Mengdi and Yang, Lin},
	journal = {International Conference on Machine Learning},
	year = {2020},
}

@article{du20,
 author = {Du, Simon S and Lee, Jason D and Mahajan, Gaurav and Wang, Ruosong},
 journal = {Advances in Neural Information Processing Systems},
 title = {Agnostic {Q}-learning with Function Approximation in Deterministic Systems: Near-Optimal Bounds on Approximation Error and Sample Complexity},
 year = {2020}
}

@article{feng2021provably,
  title={Provably correct optimization and exploration with non-linear policies},
  author={Feng, Fei and Yin, Wotao and Agarwal, Alekh and Yang, Lin},
  journal={International Conference on Machine Learning},
  year={2021},
}

@article{ishfaq2021randomized,
  title={Randomized Exploration in Reinforcement Learning with General Value Function Approximation},
  author={Ishfaq, Haque and Cui, Qiwen and Nguyen, Viet and Ayoub, Alex and Yang, Zhuoran and Wang, Zhaoran and Precup, Doina and Yang, Lin},
  journal={International Conference on Machine Learning},
  year={2021},
}

@article{huang2021towards,
  title={Towards general function approximation in zero-sum markov games},
  author={Huang, Baihe and Lee, Jason D and Wang, Zhaoran and Yang, Zhuoran},
  journal={arXiv:2107.14702},
  year={2021}
}

@article{fei2021risk,
  title={Risk-sensitive reinforcement learning with function approximation: A debiasing approach},
  author={Fei, Yingjie and Yang, Zhuoran and Wang, Zhaoran},
  journal={International Conference on Machine Learning},
  year={2021},
}

@article{jin2021power,
  title={The Power of Exploiter: Provable Multi-Agent RL in Large State Spaces},
  author={Jin, Chi and Liu, Qinghua and Yu, Tiancheng},
  journal={arXiv:2106.03352},
  year={2021}
}

@article{xu2021representation,
  title={Representation Learning Beyond Linear Prediction Functions},
  author={Xu, Ziping and Tewari, Ambuj},
  journal={Advances in Neural Information Processing Systems},
  year={2021}
}

@article{chen2021active,
  title={Active online learning with hidden shifting domains},
  author={Chen, Yining and Luo, Haipeng and Ma, Tengyu and Zhang, Chicheng},
  journal={International Conference on Artificial Intelligence and Statistics},
  year={2021},
}

@article{jia2024does,
  title={How Does Variance Shape the Regret in Contextual Bandits?},
  author={Jia, Zeyu and Qian, Jian and Rakhlin, Alexander and Wei, Chen-Yu},
  journal={Advances in Neural Information Processing Systems},
  year={2024}
}

@article{pacchiano2024second,
  title={Second order bounds for contextual bandits with function approximation},
  author={Pacchiano, Aldo},
  journal={arXiv:2409.16197},
  year={2024}
}

@article{sridharan2024online,
  title={Online Learning with Unknown Constraints},
  author={Sridharan, Karthik and Yoo, Seung Won Wilson},
  journal={arXiv:2403.04033},
  year={2024}
}

@article{sekhari2023selective,
  title={Selective sampling and imitation learning via online regression},
  author={Sekhari, Ayush and Sridharan, Karthik and Sun, Wen and Wu, Runzhe},
  journal={Advances in Neural Information Processing Systems},
  year={2023}
}

@article{hanneke2025reliable,
  title={Reliable Active Apprenticeship Learning},
  author={Hanneke, Steve and Yang, Liu and Wang, Gongju and Song, Yulun},
  journal={Algorithmic Learning Theory},
  year={2025}
}

@article{hanneke2024star,
  title={The Star Number and Eluder Dimension: Elementary Observations About the Dimensions of Disagreement},
  author={Hanneke, Steve},
  journal={Conference on Learning Theory},
  year={2024},
}

@book{cormen2022introduction,
  title={Introduction to Algorithms},
  author={Cormen, Thomas H and Leiserson, Charles E and Rivest, Ronald L and Stein, Clifford},
  year={2022},
}

@article{mannor2004sample,
  title={The sample complexity of exploration in the multi-armed bandit problem},
  author={Mannor, Shie and Tsitsiklis, John N},
  journal={Journal of Machine Learning Research},
  year={2004}
}

@article{brukhim2022characterization,
  title={A characterization of multiclass learnability},
  author={Brukhim, Nataly and Carmon, Daniel and Dinur, Irit and Moran, Shay and Yehudayoff, Amir},
  journal={Foundations of Computer Science},
  year={2022},
}

@article{daniely2014optimal,
  title={Optimal learners for multiclass problems},
  author={Daniely, Amit and Shalev-Shwartz, Shai},
  journal={Conference on Learning Theory},
  year={2014},
}

@article{haussler1995generalization,
  title={A generalization of Sauer's lemma},
  author={Haussler, David and Long, Philip M},
  journal={Journal of Combinatorial Theory, Series A},
  year={1995},
}

@misc{luc1996probabilistic,
  title={A probabilistic theory of pattern recognition},
  author={Luc, Devroye and L{\'a}szl{\'o}, Gy{\"o}rfi and G{\'a}bor, Lugosi},
    journal={Applications of Mathematics},
  year={1996},
}

@article{natarajan1989learning,
  title={On learning sets and functions},
  author={Natarajan, Balas K},
  journal={Machine Learning},
  year={1989},
}

@book{pollard2012convergence,
  title={Convergence of stochastic processes},
  author={Pollard, David},
  year={2012},
}

@article{domingues2021episodic,
  title={Episodic reinforcement learning in finite mdps: Minimax lower bounds revisited},
  author={Domingues, Omar Darwiche and M{\'e}nard, Pierre and Kaufmann, Emilie and Valko, Michal},
  journal={Algorithmic Learning Theory},
  year={2021},
}

@article{garivier2019explore,
  title={Explore first, exploit next: The true shape of regret in bandit problems},
  author={Garivier, Aur{\'e}lien and M{\'e}nard, Pierre and Stoltz, Gilles},
  journal={Mathematics of Operations Research},
  year={2019},
}

@article{jin2020reward,
  title={Reward-free exploration for reinforcement learning},
  author={Jin, Chi and Krishnamurthy, Akshay and Simchowitz, Max and Yu, Tiancheng},
  journal={International Conference on Machine Learning},
  year={2020},
}

@book{polyanskiy2025information,
  title={Information theory: From coding to learning},
  author={Polyanskiy, Yury and Wu, Yihong},
  year={2025},
}

@article{menard2021fast,
  title={Fast active learning for pure exploration in reinforcement learning},
  author={M{\'e}nard, Pierre and Domingues, Omar Darwiche and Jonsson, Anders and Kaufmann, Emilie and Leurent, Edouard and Valko, Michal},
  journal={International Conference on Machine Learning},
  year={2021},
}

@article{li2023minimax,
  title={Minimax-Optimal Reward-Agnostic Exploration in Reinforcement Learning},
  author={Li, Gen and Yan, Yuling and Chen, Yuxin and Fan, Jianqing},
  journal={arXiv:2304.07278},
  year={2023}
}

@article{wang2020reward,
  title={On reward-free reinforcement learning with linear function approximation},
  author={Wang, Ruosong and Du, Simon S and Yang, Lin and Salakhutdinov, Russ R},
  journal={Advances in Neural Information Processing Systems},
  year={2020}
}

@article{zanette2020provably,
  title={Provably efficient reward-agnostic navigation with linear value iteration},
  author={Zanette, Andrea and Lazaric, Alessandro and Kochenderfer, Mykel J and Brunskill, Emma},
  journal={Advances in Neural Information Processing Systems},
  year={2020}
}

@article{zhang2021reward,
  title={Reward-free model-based reinforcement learning with linear function approximation},
  author={Zhang, Weitong and Zhou, Dongruo and Gu, Quanquan},
  journal={Advances in Neural Information Processing Systems},
  year={2021}
}

@article{wagenmaker2022reward,
  title={Reward-free rl is no harder than reward-aware rl in linear markov decision processes},
  author={Wagenmaker, Andrew J and Chen, Yifang and Simchowitz, Max and Du, Simon and Jamieson, Kevin},
  journal={International Conference on Machine Learning},
  year={2022},
}

@article{ross2014reinforcement,
  title={Reinforcement and imitation learning via interactive no-regret learning},
  author={Ross, Stephane and Bagnell, J Andrew},
  journal={arXiv:1406.5979},
  year={2014}
}

@article{ross2010efficient,
  title={Efficient reductions for imitation learning},
  author={Ross, St{\'e}phane and Bagnell, Drew},
  journal={International Conference on Artificial Intelligence and Statistics},
  year={2010},
}

@article{foster2024behavior,
  title={Is behavior cloning all you need? understanding horizon in imitation learning},
  author={Foster, Dylan J and Block, Adam and Misra, Dipendra},
  journal={arXiv:2407.15007},
  year={2024}
}

@article{rohatgi2025computational,
  title={Computational-Statistical Tradeoffs at the Next-Token Prediction Barrier: Autoregressive and Imitation Learning under Misspecification},
  author={Rohatgi, Dhruv and Block, Adam and Huang, Audrey and Krishnamurthy, Akshay and Foster, Dylan J},
  journal={arXiv:2502.12465},
  year={2025}
}

@article{ross2011reduction,
  title={A reduction of imitation learning and structured prediction to no-regret online learning},
  author={Ross, St{\'e}phane and Gordon, Geoffrey and Bagnell, Drew},
  journal={International Conference on Artificial Intelligence and Statistics},
  year={2011},
}

@article{sun2017deeply,
  title={Deeply aggrevated: Differentiable imitation learning for sequential prediction},
  author={Sun, Wen and Venkatraman, Arun and Gordon, Geoffrey J and Boots, Byron and Bagnell, J Andrew},
  journal={International Conference on Machine Learning},
  year={2017},
}

@article{cheng2018convergence,
  title={Convergence of value aggregation for imitation learning},
  author={Cheng, Ching-An and Boots, Byron},
  journal={International Conference on Artificial Intelligence and Statistics},
  year={2018},
}

@article{cheng2020policy,
  title={Policy improvement via imitation of multiple oracles},
  author={Cheng, Ching-An and Kolobov, Andrey and Agarwal, Alekh},
  journal={Advances in Neural Information Processing Systems},
  year={2020}
}

@article{yan2021explaining,
  title={Explaining fast improvement in online imitation learning},
  author={Yan, Xinyan and Boots, Byron and Cheng, Ching-An},
  journal={Uncertainty in Artificial Intelligence},
  year={2021},
}

@article{li2022efficient,
  title={On efficient online imitation learning via classification},
  author={Li, Yichen and Zhang, Chicheng},
  journal={Advances in Neural Information Processing Systems},
  year={2022}
}

@article{pomerleau1988alvinn,
  title={Alvinn: An autonomous land vehicle in a neural network},
  author={Pomerleau, Dean A},
  journal={Advances in Neural Information Processing Systems},
  year={1988}
}

@article{abbeel2004apprenticeship,
  title={Apprenticeship learning via inverse reinforcement learning},
  author={Abbeel, Pieter and Ng, Andrew Y},
  journal={International Conference on Machine learning},
  year={2004}
}

@article{bojarski2016end,
  title={End to end learning for self-driving cars},
  author={Bojarski, Mariusz and Del Testa, Davide and Dworakowski, Daniel and Firner, Bernhard and Flepp, Beat and Goyal, Prasoon and Jackel, Lawrence D and Monfort, Mathew and Muller, Urs and Zhang, Jiakai and others},
  journal={arXiv:1604.07316},
  year={2016}
}

@article{bansal2018chauffeurnet,
  title={Chauffeurnet: Learning to drive by imitating the best and synthesizing the worst},
  author={Bansal, Mayank and Krizhevsky, Alex and Ogale, Abhijit},
  journal={arXiv:1812.03079},
  year={2018}
}

@article{finn2017one,
  title={One-shot visual imitation learning via meta-learning},
  author={Finn, Chelsea and Yu, Tianhe and Zhang, Tianhao and Abbeel, Pieter and Levine, Sergey},
  journal={Conference on Robot Learning},
  year={2017},
}

@article{stepputtis2020language,
  title={Language-conditioned imitation learning for robot manipulation tasks},
  author={Stepputtis, Simon and Campbell, Joseph and Phielipp, Mariano and Lee, Stefan and Baral, Chitta and Ben Amor, Heni},
  journal={Advances in Neural Information Processing Systems},
  year={2020}
}

@article{vinyals2019grandmaster,
  title={Grandmaster level in StarCraft II using multi-agent reinforcement learning},
  author={Vinyals, Oriol and Babuschkin, Igor and Czarnecki, Wojciech M and Mathieu, Micha{\"e}l and Dudzik, Andrew and Chung, Junyoung and Choi, David H and Powell, Richard and Ewalds, Timo and Georgiev, Petko and others},
  journal={Nature},
  year={2019},
}

@article{ibarz2018reward,
  title={Reward learning from human preferences and demonstrations in atari},
  author={Ibarz, Borja and Leike, Jan and Pohlen, Tobias and Irving, Geoffrey and Legg, Shane and Amodei, Dario},
  journal={Advances in Neural Information Processing Systems},
  year={2018}
}

@book{cesa2006prediction,
  title={Prediction, learning, and games},
  author={Cesa-Bianchi, Nicolo and Lugosi, G{\'a}bor},
  year={2006},
}

@article{chen2024assouad,
  title={Assouad, Fano, and Le Cam with Interaction: A Unifying Lower Bound Framework and Characterization for Bandit Learnability},
  author={Chen, Fan and Foster, Dylan J and Han, Yanjun and Qian, Jian and Rakhlin, Alexander and Xu, Yunbei},
  journal={Advances in Neural Information Processing Systems},
  year={2024}
}

@article{chen2025decision,
  title={Decision Making in Changing Environments: Robustness, Query-Based Learning, and Differential Privacy},
  author={Chen, Fan and Rakhlin, Alexander},
  journal={arXiv:2501.14928},
  year={2025}
}

@article{kearns1998efficient,
  title={Efficient noise-tolerant learning from statistical queries},
  author={Kearns, Michael},
  journal={Journal of the ACM},
  year={1998},
}

@book{sutton2018reinforcement,
  title={Reinforcement learning: An introduction},
  author={Sutton, Richard S and Barto, Andrew G},
  year={2018}
}

@article{sutton1999policy,
  title={Policy gradient methods for reinforcement learning with function approximation},
  author={Sutton, Richard S and McAllester, David and Singh, Satinder and Mansour, Yishay},
  journal={Advances in Neural Information Processing Systems},
  year={1999}
}

@article{kakade2001natural,
  title={A natural policy gradient},
  author={Kakade, Sham M},
  journal={Advances in Neural Information Processing Systems},
  year={2001}
}

@article{williams1992simple,
  title={Simple statistical gradient-following algorithms for connectionist reinforcement learning},
  author={Williams, Ronald J},
  journal={Machine Learning},
  year={1992},
}

@article{sherman2023rate,
  title={Rate-optimal policy optimization for linear markov decision processes},
  author={Sherman, Uri and Cohen, Alon and Koren, Tomer and Mansour, Yishay},
  journal={arXiv:2308.14642},
  year={2023}
}

@article{scherrer2014approximate,
  title={Approximate policy iteration schemes: A comparison},
  author={Scherrer, Bruno},
  journal={International Conference on Machine Learning},
  year={2014},
}

@article{bubeck2011introduction,
  title={Introduction to online optimization},
  author={Bubeck, S{\'e}bastien},
  journal={Lecture Notes},
  year={2011}
}

@article{liu2024optimistic,
  title={Optimistic natural policy gradient: a simple efficient policy optimization framework for online {RL}},
  author={Liu, Qinghua and Weisz, Gell{\'e}rt and Gy{\"o}rgy, Andr{\'a}s and Jin, Chi and Szepesv{\'a}ri, Csaba},
  journal={Advances in Neural Information Processing Systems},
  year={2024}
}

@article{zanette2021cautiously,
  title={Cautiously optimistic policy optimization and exploration with linear function approximation},
  author={Zanette, Andrea and Cheng, Ching-An and Agarwal, Alekh},
  journal={Conference on Learning Theory},
  year={2021},
}

@article{agarwal2020pc,
  title={Pc-pg: Policy cover directed exploration for provable policy gradient learning},
  author={Agarwal, Alekh and Henaff, Mikael and Kakade, Sham and Sun, Wen},
  journal={Advances in Neural Information Processing Systems},
  year={2020}
}

@article{golowich2024exploring,
  title={Exploring and learning in sparse linear {MDP}s without computationally intractable oracles},
  author={Golowich, Noah and Moitra, Ankur and Rohatgi, Dhruv},
  journal={Symposium on Theory of Computing},
  year={2024}
}

@article{bhandari2024global,
  title={Global optimality guarantees for policy gradient methods},
  author={Bhandari, Jalaj and Russo, Daniel},
  journal={Operations Research},
  year={2024},
}

@article{huang2024occupancy,
  title={Occupancy-based Policy Gradient: Estimation, Convergence, and Optimality},
  author={Huang, Audrey and Jiang, Nan},
  journal={Advances in Neural Information Processing Systems},
    year={2024},
}

@article{brukhim2022boosting,
  title={A boosting approach to reinforcement learning},
  author={Brukhim, Nataly and Hazan, Elad and Singh, Karan},
  journal={Advances in Neural Information Processing Systems},
  year={2022}
}

@article{uehara2021representation,
  title={Representation learning for online and offline {RL} in low-rank {MDP}s},
  author={Uehara, Masatoshi and Zhang, Xuezhou and Sun, Wen},
  journal={arXiv:2110.04652},
  year={2021}
}

@article{zhang2022efficient,
  title={Efficient reinforcement learning in block {MDP}s: A model-free representation learning approach},
  author={Zhang, Xuezhou and Song, Yuda and Uehara, Masatoshi and Wang, Mengdi and Agarwal, Alekh and Sun, Wen},
  journal={International Conference on Machine Learning},
  year={2022},
}

@article{mhammedi2024efficient,
  title={Efficient model-free exploration in low-rank {MDP}s},
  author={Mhammedi, Zakaria and Block, Adam and Foster, Dylan J and Rakhlin, Alexander},
  journal={Advances in Neural Information Processing Systems},
  year={2024}
}

@article{uchendu2023jump,
  title={Jump-start reinforcement learning},
  author={Uchendu, Ikechukwu and Xiao, Ted and Lu, Yao and Zhu, Banghua and Yan, Mengyuan and Simon, Jos{\'e}phine and Bennice, Matthew and Fu, Chuyuan and Ma, Cong and Jiao, Jiantao and others},
  journal={International Conference on Machine Learning},
  year={2023},
}

@article{schulman2017trustregionpolicyoptimization,
      title={Trust Region Policy Optimization}, 
      author={John Schulman and Sergey Levine and Philipp Moritz and Michael I. Jordan and Pieter Abbeel},
      year={2017},
      journal={arXiv:1502.05477}
}

@article{schulman2017proximal,
  title={Proximal policy optimization algorithms},
  author={Schulman, John and Wolski, Filip and Dhariwal, Prafulla and Radford, Alec and Klimov, Oleg},
  journal={arXiv:1707.06347},
  year={2017}
}

@article{mhammedi2023representation,
  title={Representation learning with multi-step inverse kinematics: An efficient and optimal approach to rich-observation {RL}},
  author={Mhammedi, Zakaria and Foster, Dylan J and Rakhlin, Alexander},
  journal={International Conference on Machine Learning},
  year={2023},
}

@article{misra2020kinematic,
  title={Kinematic state abstraction and provably efficient rich-observation reinforcement learning},
  author={Misra, Dipendra and Henaff, Mikael and Krishnamurthy, Akshay and Langford, John},
  journal={International Conference on Machine Learning},
  year={2020},
}

@article{scherrer2014local,
  title={Local policy search in a convex space and conservative policy iteration as boosted policy search},
  author={Scherrer, Bruno and Geist, Matthieu},
  journal={European Conference on Machine Learning},
  year={2014},
}

@article{sherman2025convergence,
  title={Convergence of Policy Mirror Descent Beyond Compatible Function Approximation},
  author={Sherman, Uri and Koren, Tomer and Mansour, Yishay},
  journal={arXiv:2502.11033},
  year={2025}
}

@article{shani2020adaptive,
  title={Adaptive trust region policy optimization: Global convergence and faster rates for regularized mdps},
  author={Shani, Lior and Efroni, Yonathan and Mannor, Shie},
  journal={AAAI Conference on Artificial Intelligence},
  year={2020}
}

@article{zhan2023policy,
  title={Policy mirror descent for regularized reinforcement learning: A generalized framework with linear convergence},
  author={Zhan, Wenhao and Cen, Shicong and Huang, Baihe and Chen, Yuxin and Lee, Jason D and Chi, Yuejie},
  journal={SIAM Journal on Optimization},
  year={2023},
}

@article{alfano2022linear,
  title={Linear convergence for natural policy gradient with log-linear policy parametrization},
  author={Alfano, Carlo and Rebeschini, Patrick},
  journal={arXiv:2209.15382},
  year={2022}
}

@article{yuan2022linear,
  title={Linear convergence of natural policy gradient methods with log-linear policies},
  author={Yuan, Rui and Du, Simon S and Gower, Robert M and Lazaric, Alessandro and Xiao, Lin},
  journal={arXiv:2210.01400},
  year={2022}
}

@article{xiao2022convergence,
  title={On the convergence rates of policy gradient methods},
  author={Xiao, Lin},
  journal={Journal of Machine Learning Research},
  year={2022}
}

@article{orabona2019modern,
  title={A modern introduction to online learning},
  author={Orabona, Francesco},
  journal={arXiv:1912.13213},
  year={2019}
}

@article{krishnamurthy2025role,
  title={The Role of Environment Access in Agnostic Reinforcement Learning},
  author={Krishnamurthy, Akshay and Li, Gene and Sekhari, Ayush},
  journal={arXiv:2504.05405},
  year={2025}
}

@article{zarankiewicz1951problem,
  title={Problem p 101},
  author={Zarankiewicz, Kazimierz},
  journal={Colloq. Math},
  year={1951}
}

@article{anderson1980bela,
  title={Extremal Graph Theory},
  author={Bollob{\'a}s, B{\'e}la},
  journal={Dover Publications},
  year={1978},
}

@article{ben2002limitations,
  title={Limitations of learning via embeddings in Euclidean half spaces},
  author={Ben-David, Shai and Eiron, Nadav and Simon, Hans Ulrich},
  journal={Journal of Machine Learning Research},
  year={2002}
}
\end{document}